\documentclass[11pt]{article}
\usepackage[margin=1in]{geometry}
\usepackage[T1]{fontenc}
\usepackage{calligra} 
\usepackage{lmodern}
\usepackage{microtype}
\usepackage{amsmath,amssymb,amsthm,mathtools}
\usepackage{mathrsfs}
\usepackage{graphicx}
\usepackage{float}
\usepackage{booktabs,tabularx,array}
\usepackage{tikz}
\usepackage{tikz-cd}
\usetikzlibrary{arrows.meta,positioning,decorations.pathmorphing,calc,shapes.geometric,fit,patterns}
\usepackage[normalem]{ulem}
\usepackage{enumitem}
\usepackage{authblk}
\usepackage[authoryear,sort&compress]{natbib}
\usepackage[colorlinks=true,linkcolor=blue,citecolor=blue,urlcolor=blue,hypertexnames=false]{hyperref}
\usepackage[most]{tcolorbox}
\usepackage{etoc}
\allowdisplaybreaks

\graphicspath{{./}{/mnt/data/merge_work/}{/mnt/data/neuroai_src/}}

\newtcolorbox{definitionguide}[1]{
  enhanced, breakable, colback=gray!20, colframe=black!55, boxrule=0.5pt, arc=2pt,
  left=7pt, right=7pt, top=6pt, bottom=6pt,
  title=\textbf{#1}, coltitle=black, fonttitle=\normalsize,
  attach boxed title to top left={xshift=7pt,yshift=-2pt},
  boxed title style={colback=yellow, colframe=black!55, boxrule=0.5pt, arc=2pt,
  left=4pt,right=4pt,top=2pt,bottom=2pt}
}

\theoremstyle{plain}
\newtheorem{theorem}{Theorem}
\newtheorem{lemma}{Lemma}
\newtheorem{proposition}{Proposition}
\newtheorem{corollary}{Corollary}
\theoremstyle{definition}
\newtheorem{definition}{Definition}

\theoremstyle{remark}
\newtheorem{remark}{Remark}

\newcommand{\R}{\mathbb{R}}
\newcommand{\C}{\mathbb{C}}
\newcommand{\Z}{\mathbb{Z}}
\newcommand{\E}{\mathbb{E}}
\newcommand{\eps}{\varepsilon}
\newcommand{\sig}{\sigma}
\newcommand{\rhoR}{\rho}

\newcommand{\Var}{\operatorname{Var}}
\newcommand{\rank}{\operatorname{rank}}

\newcommand{\relint}{\operatorname{relint}}
\newcommand{\AxisAlign}{\operatorname{AxisAlign}}
\newcommand{\HeadAlign}{\operatorname{HeadAlign}}
\newcommand{\Sing}{\operatorname{Sing}}

\newcommand{\id}{\operatorname{id}}

\newcommand{\op}{\mathrm{op}}

\newcommand{\gbudget}{m}

\newcommand{\calK}{\mathcal{K}}
\newcommand{\calC}{\mathcal{C}}
\newcommand{\calE}{\mathcal{E}}
\newcommand{\calP}{\mathscr{P}}
\newcommand{\calB}{\mathscr{B}}
\newcommand{\calL}{\mathscr{L}}
\newcommand{\calM}{\mathcal{M}}
\newcommand{\calS}{\mathcal{S}}
\newcommand{\calU}{\mathcal{U}}

\newcolumntype{L}[1]{>{\raggedright\arraybackslash}p{#1}}
\tikzset{flow/.style={-Latex,thick},block/.style={draw,rounded corners,minimum height=7mm,minimum width=16mm,align=center,inner sep=3pt},note/.style={draw,dashed,rounded corners,align=center,inner sep=4pt,font=\small},layer/.style={draw,rounded corners,minimum height=8mm,minimum width=20mm,align=center,inner sep=3pt},bad/.style={draw,double,rounded corners,align=center,inner sep=4pt,font=\small}}

\begin{document}

\title{Contravariance Theory:\\
Strong Alignment for Minimal Solutions to Hard Tasks}
\author[*,1]{Dan Yamins}
\author[*,2]{Aran Nayebi}
\affil[1]{Departments of Computer Science and Psychology, and Wu Tsai Neurosciences Institute, Stanford University, Stanford, CA 94305 USA}
\affil[2]{Machine Learning Department, Neuroscience Institute, and Robotics Institute,\newline Carnegie Mellon University, Pittsburgh, PA 15213 USA}
\affil[*]{Equal contribution}
\date{\today}
\maketitle

\begin{abstract}
A series of results from NeuroAI over the past fifteen years has raised core questions both about how to compare Deep Neural Network (DNN) models to the brain, and about how much convergent evolution to expect between artificial networks and real brain networks. Here, we show that for any two minimal DNN solutions to a sufficiently hard task: (i) ``weak'' alignment of network representations based on affine mappings guarantees ``strong'' alignment of privileged axes, and (ii) alignment ``zippers'' up the network hierarchy, causing the emergence of privileged axes from end-to-end task optimization. These results formalize the notion of contravariance from \citet{cao2024contravariance}, and illustrate important consequences for the theory of NeuroAI: with sufficiently strong tasks, choice of metric for inter-network comparison is not all that sensitive, and that convergent evolution is probably inevitable. 
\end{abstract}
\section{Introduction}
The field of NeuroAI is concerned with the ability of deep neural networks (DNNs) to model the brain~\citep{zador2023catalyzing}. A clear statement of this goal comes in the NeuroAI Turing Test, which asks for models whose behavior and internal representations are indistinguishable from biological systems up to the variability seen across real individuals \citep{feather2025neuroai, thobani2025iatc}.  

In one main style of NeuroAI research toward this end, so-called goal-driven modeling~\citep{yamins2016goal}, these DNN networks are optimized in an end-to-end fashion for some cognitive goal(s) -- or a self-supervised proxy for such goals. Then, fixed by the constraints of the optimization process, the networks' internal activations are compared to brain data to ask to what extent a match has organically emerged from the optimization constraints themselves. This approach has had notable success building principled, quantitatively accurate models of cortical brain areas responsible for vision~\citep{yamins2014performance, khaligh2014deep}, audition~\citep{kell2018task}, somatosensation~\citep{chung2026task}, motor behavior~\citep{sussillo2015neural, michaels2020goal}, memory and navigation~\citep{nayebi2021explaining}, human language~\citep{schrimpf2021neural}, and agentic decision making~\citep{keller2026intrinsic}.  

Many of these results have been obtained by the use of linear mapping techniques~\citep{yamins2014performance}, in which real units from brain data (electrodes or voxels, as the case may be) are fit, using a small amount of brain data, to a linear combination of neural network model units.  In this context, the key results are that the representations of highly-performant deep neural networks -- which are very non-linear functions of their input stimuli -- match those of brain responses up to linear similarity.  In fact, the better the models are at solving the AI task they have been optimized for, the better their internal representations match real neural data (Fig. \ref{fig:motivation}A).  This is nontrivial because the functions of the stimulus that real neurons compute are themselves also highly nonlinear, and finding a match between two sets of non-linear functions up to linear span is statistically extremely unlikely relative to a random functional baseline.\footnote{Note that the untrained network state with randomly-initialized filters in highly-performant architectures results in networks that are very much \emph{not} random functionals, and these ``at-birth'' networks often have substantially above-chance neural predictivity, though typically less than filter-optimized versions.}  

\begin{figure}[t]
    \centering
    \includegraphics[width=\linewidth]{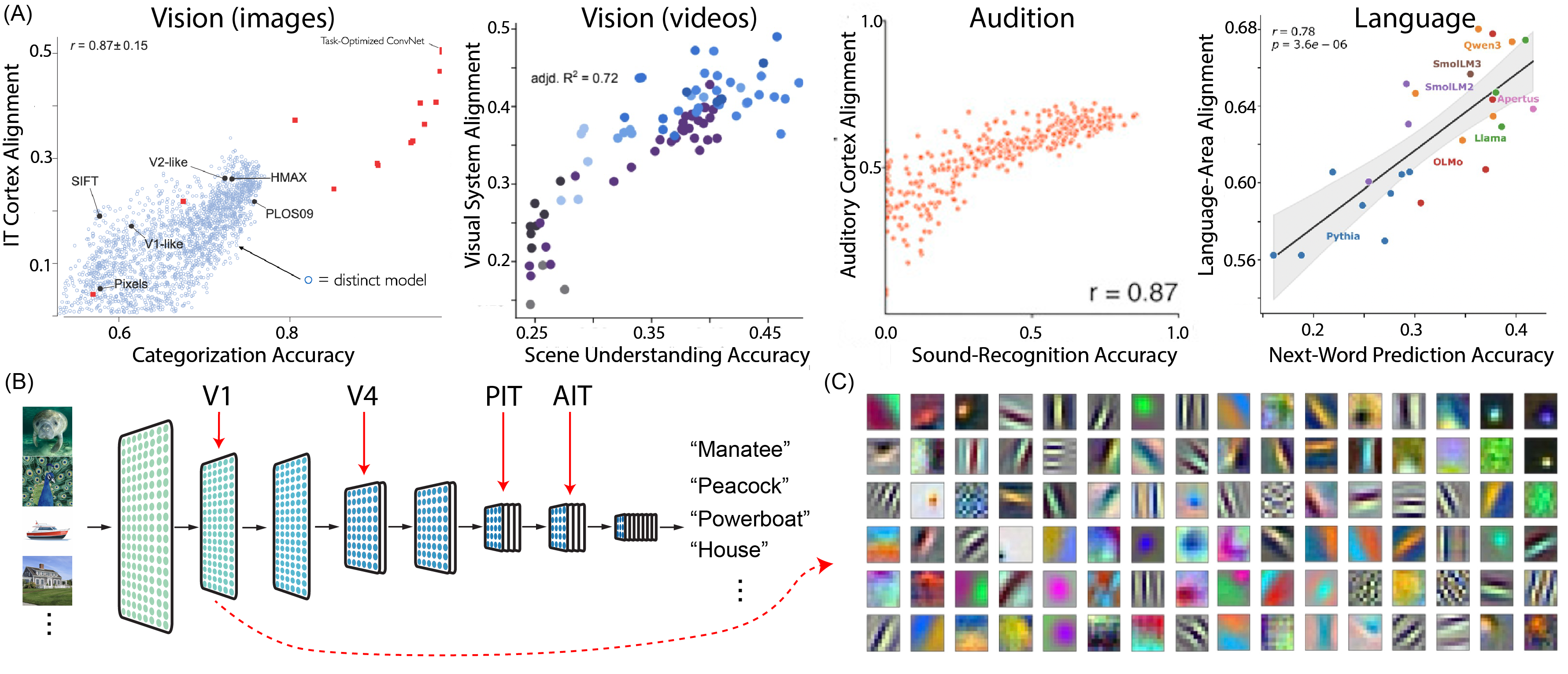}
    \caption{\textbf{Why do these core NeuroAI results arise?  What can we infer from them?} \textbf{(A)} In a variety of domains, from visual cortex responses to static images and movies, auditory cortex responses to sounds and spoken words, to language-area responses to text, there turns out to be a correlation across a wide range of DNN models, between a DNN model's performance on AI tasks and fit of that model to neural data, using linear mapping transforms.  What causes these correlations? (Images reproduced, from left to right, from: \citep{yamins2014performance}, \citep{tang2025diverse}, \citep{kell2018task}, and (right panel) by permission from M. Schrimpf.) \textbf{(B)} Within a model, there is also an emergence of neuroanatomical consistency, with model layers -- such as, in the visual system, visual areas V1, V4, posterior inferior temporal cortex (PIT) and anterior interior temporal cortex (AIT) -- being matched most effectively by different, and anatomically consistently ordered, DNN model layers. Why do the brain system hierarchies emerge in DNN model hierarchies? \textbf{(C)} And, we see that, at a putatively finer-grain of matching than linear similarity, preferred axes (such as the Gabor-like structures observed in the filters shown here, obtained from an ImageNet-optimized AlexNet) reliably emerge deep inside neural networks optimized end-to-end for real-world tasks. Why? In fact, why do preferred axes exist at all?}
    \label{fig:motivation}
\end{figure}

In fact, there appears to be a fairly general (if not totally universal) observation of \emph{neuroanatomical consistency}, with a surprisingly tight match between intra-model and intra-brain hierarchies observed in a variety of domains~\citep{yamins2016goal, kell2018task, chung2026task, michaels2020goal} (but cf. \citep{schrimpf2021neural}). Each different model component (i.e. DNN layer) corresponds to a well-defined brain area and vice versa (Fig. \ref{fig:motivation}B): lower layers of deep networks match early cortical areas, mid-model layers best match intermediate cortical areas, and higher layers of deep networks match high-level cortical areas \citep{yamins2016goal}.  In other words, the neurons in each real cortical area have a linear span covering a distinct characteristic subspace of area-specific functions; and these align in an area-specific and order-preserving fashion with the progression of function spaces created by the layers of the task-optimized neural network model.  This observation serves to underline the non-triviality of the brain-model match, since it rules out deflationary explanations (such as, e.g., the possibility that the brain-model alignment arises merely due to the high dimensionality of the model feature space~\citep{elmoznino2024high}).  

Intriguingly, these types of model-brain similarities also arise at a finer grain of detail than linear similarity. Detailed structures buried deep in the midst of neural processing chains -- even in brain areas far from the sensory or motor periphery -- can emerge merely due to end-to-end constraints. An especially striking result of this kind was the observation that the most characteristic of visual response patterns -- the famous Gabor-like tuning of many V1 neurons -- appears in early layers of deep neural networks optimized for downstream tasks (see Fig. \ref{fig:motivation}C and \citep{krizhevsky2012imagenet}). This observation has been generalized to the notion of \emph{privileged axes}, where it has been shown that brains and neural networks can agree not only in affine-invariant content, but also in native axes: single units, filters, or voxels can have systematically aligned tuning functions~\citep{khosla2024privileged}.

Aside from providing quantitatively accurate models (and thereby potentially enabling  practically useful neural control possibilities~\citep{bashivan2019neural,honarmand2025inducing}), the NeuroAI results of the past decade suggest a deeper conceptual conclusion of \emph{convergent evolution}: that the underlying constraints on task-trained neural networks are reasonably similar to those which must have been acting on real neural systems, posing a causal explanation for why nontrivially-similar representations are observed.  Despite being an attractive hypothesis, it is not obvious how strongly the inference of convergent evolution is licensed from correlational observations of model-brain alignment.

These considerations have occasioned two (or perhaps, more accurately, two-and-a-half) fundamental lines of questions. First, what is the correct method for comparing DNN models to the brain? With what granularity should comparisons be made between DNN structures and brain structures?  Should we seek a strict match at the single unit level, in which specific privileged representational axes are asked to align between model and brain~\citep{khosla2024privileged}?  Or should we be entirely comfortable with looser modes of comparison, e.g., linear transforms between model layers and brain areas that might in principle destroy unit-level privileged basis elements~\citep{yamins2016goal}?  Or something in between~\citep{thobani2025iatc}? 

A second body of questions asks, why do we reliably observe certain privileged axes at certain locations -- such as Gabor-like units in V1 -- in the first place?  And relatedly (our half-question more), why do these structures arise, at least in DNNs, from end-to-end task optimization (see Fig. \ref{fig:motivation}C)?  To what extent do end-to-end constraints force deep internal structures to be a particular way? How many different structurally distinct solutions to an end-to-end task constraint can there be? Are there any underlying theoretical reasons for convergent evolution between natural and artificial brains? 
 
Here, we present two sets of mathematical results that go some distance in resolving the answers to both of these types of questions, and exposing how they are related:
\begin{enumerate}
	\item \textbf{The Weak-Strong Equivalence Theorems:} For any two sufficiently minimal DNNs solving a sufficiently hard task (``minimal'' and ``hard'' in senses to be formalized below), we show that equivalence up to linear similarity (``weak alignment'') in adjacent layers forces equivalence of the privileged axes observed at the single-unit level (``strong alignment'').  
	\item \textbf{The Zippering Theorems:}  We show that two such solutions that are weakly (that is, linearly) equivalent at \emph{only} the terminal layer ``zipper up'', becoming weakly equivalent at \emph{all} upstream layers -- and thus (due to the first theorem) are also axis-aligned upstream as well.  
\end{enumerate}
\noindent 
Each result is formulated in exact and soft forms. 
The exact versions, which express a clean mathematical ideal, assume perfect weak equivalence and recover exact axis alignment. The soft versions, which apply better to real-world situations, measure deviations from exact equivalence and relate those errors to quantitative lower bounds on approximate axis alignment.

The first of these two results says, essentially, that {\bfseries\itshape\uline{privileged axes emerge because of the constraints of the task}}. The second says, essentially, that there are not that many minimal solutions to hard tasks, implying that {\bfseries\itshape\uline{the empirically-observed convergent evolution results of goal-driven NeuroAI are probably mathematically inevitable}}. Taken together, they also suggest that in the domain of ``hard tasks'', {\bfseries\itshape\uline{it does not really matter what metric you pick}} -- because the apparently loosest metric under normal consideration (mere performance on tasks) ends up forcing the accession of the apparently stricter one (privileged axes at the unit level).

\begin{center}
***
\end{center}
\begin{figure}[t]
    \centering
    \includegraphics[width=.85\linewidth]{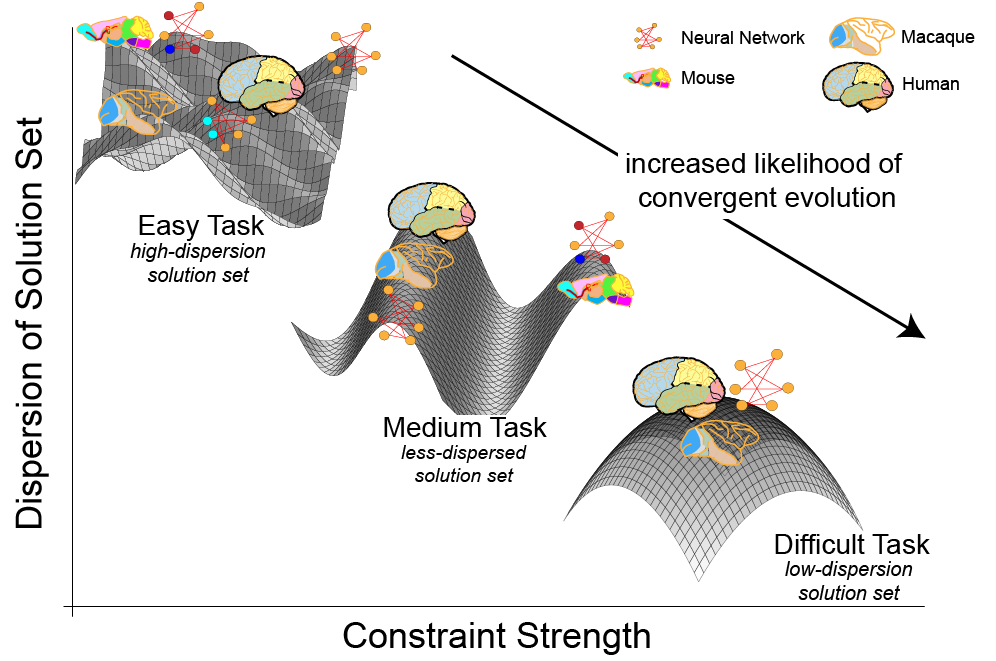}
    \caption{\textbf{The contravariance principle.} The work of \cite{cao2024contravariance} notes that, given a fixed architecture size, the dispersion of a set of solutions to an easy task (upper left of figure) is larger than the dispersion of a set of solutions to an hard task (lower right).  In other words, the solution set dispersion is \emph{contravariant} in (inversely related to) the constraint strength.  This idea provides an informal understanding of why convergent evolution emerges between artificial neural networks and real brains.  This paper provides one version of how the contravariance program can be formalized, yielding nontrivial results for NeuroAI research going forward. (Figure modified from \cite{cao2024contravariance}.)}
    \label{fig:contravariance}
\end{figure}

These results are a mathematical formalization of the \emph{contravariance principle}, in the sense originally described by \citet{cao2024contravariance}. 
The contravariance concept described in that work notes that the harder the task that a network is constrained to do, the fewer the solutions there will be to it, and therefore the more likely it will be that any two solutions to it will be similar to each other in important respects (Fig. \ref{fig:contravariance}).   Contravariance thus seeks to explain the convergent evolution observed in NeuroAI by invoking the constraining power of hard tasks.  In \cite{cao2024contravariance}, contravariance is developed as an informal philosophical idea.  While the intuitions there seem strong and self-consistent, the notions of task difficulty and solution set size are not mathematically specific, so conclusions remain at a conceptual level. The present work formulates a version of contravariance precisely, and shows that it has the type of nontrivial consequences for our understanding of DNN models of the brain that the original philosophical version intended.  

While \citet[Corollary 5]{nayebi2026capable} recently showed that two minimal systems that achieve vanishing regret on the same tasks necessarily must have an invertible mapping (isomorphism) between them, the theorems here go much further, specifying the \emph{complexity} of this mapping as being linear in the cases of DNNs with ReLUs and softplus nonlinearities, thereby explaining the alignment we see in practice under such linear mappings.

\section{Weak--Strong Equivalence}
\label{sec:main-weak-strong}
\begin{definitionguide}{Intuition}
This section introduces two notions of alignment. \emph{Weak alignment} means that two representations agree up to a linear/affine map. \emph{Strong alignment} means that individual coordinate axes match up, possibly after permutation and rescaling. The main result of the section shows that, under a usedness assumption on the nonlinearities and a hardness assumption on the task, weak alignment across adjacent layers forces strong axis alignment.
\end{definitionguide}

We start by formalizing the setting of two neural networks with similar hierarchical architectures.  Let $N\in\{A,B\}$ denote one of two networks.  At layer $\ell$, write
\[
 z^N_\ell(x)=W^N_\ell h^N_{\ell-1}(x)+b^N_\ell\in\R^{d^N_\ell},
 \qquad
 h^N_\ell(x)=\varphi(z^N_\ell(x)),
\]
where the nonlinearity $\varphi$ is applied coordinatewise.  For the results of this work, we will mainly consider ReLu ($\rho$) and softplus ($\sigma$) nonlinearities:
\[
 \rho(t)=\max\{t,0\}\qquad\text{and}\qquad \sigma(t)=\log(1+e^t).
\]
The functions $\rho$ and $\sigma$ represent two of the main classes of nonlinearity used in real-world neural networks, and require somewhat different treatment technically (the details of which are in the Appendix, along with a discussion of how the common argument structure applies to other nonlinearity types).
We will denote the full-layer map of affine-followed-by-nonlinear operations, from layer-$\ell$ preactivations to layer-$(\ell+1)$ preactivations, by $\Phi_\ell$:
\[
 \Phi^N_{\ell,\varphi}(z):=W^N_{\ell+1}\varphi(z)+b^N_{\ell+1}.
\]
Let $\Omega$ be the set of inputs to the network (the ``data''), and let $M^N_\ell=z^N_\ell(\Omega)\subseteq \R^{d^N_\ell}$ be the activations (sometimes called the ``representation'') at layer $\ell$.

The notion of weak alignment formalizes the idea of using affine maps to compare neural networks (see Fig. \ref{fig:alignment_defs}A).
\begin{definition}[Weak alignment]
A layer-$\ell$ comparison map from $A$ to $B$ is an injective affine map
\[
E_\ell(z)=T_\ell z+a_\ell,
\qquad
T_\ell:\R^{d^A_\ell}\to\R^{d^B_\ell}.
\]
The networks are weakly equivalent at layer $\ell$ on the task set if
\[
 z^B_\ell(x)=E_\ell z^A_\ell(x)
 \qquad \forall x\in\Omega.
\]
\end{definition}

In contrast to weak alignment, strong alignment asks whether a $B$-coordinate is just the corresponding $A$-coordinate, possibly after a rescaling (see Fig. \ref{fig:alignment_defs}B).
\begin{definition}[Strong alignment]
Let $J^N_\ell$ denote the set (counted \emph{without} multiplicity) of coordinate axes in network $N$. Networks $A$ and $B$ are \emph{exactly strongly equivalent} at layer $\ell$ if there is a comparison map $E_{\ell}$ and a bijective map $\pi:J_\ell^A\to J_\ell^B(E_\ell)$ and for which, for all $j \in J^A_\ell$, 
\[
 (E_\ell z)_{i = \pi(j)}=\alpha z_j \qquad \forall z\in \Omega,
\]
where $\alpha\neq0$ for the case of ReLU and $\alpha=1$ for Softplus.  Here $z$ denotes a layer-$\ell$ preactivation vector of network $A$, e.g. the activity of A-unit $j$, while $(E_\ell z)_i$ is the corresponding mapped activity in B-coordinate $i$. 
For non-strongly aligned network pairs, it is useful to quantify the amount of strong alignment present via the metric
\[
\AxisAlign_{\ell}(A,B)
=
\frac{1}{d_\ell^A}
\max_{\pi}
\#\Bigl\{
j\in J_\ell^A:
(E_\ell z)_{\pi(j)}=\alpha_j z_j \text{ for } \alpha_j \neq 0
\Bigr\},
\]
where \(\pi\) ranges over injective maps
$\pi:J_\ell^A\to J_\ell^B(E_\ell)$ (and $\alpha = 1$ for softplus).
In words, \(\AxisAlign_{\ell}\) is simply the fraction of \(A\)-side layer
\(\ell\) axes that are visibly and exactly matched under \(E_\ell\), normalized
by the full \(A\)-side layer width \(d_\ell^A\). By definition, \(\AxisAlign_{\ell} = 1\) is equivalent to strong alignment.
\end{definition}

\begin{figure}[t]
\centering
\begin{tikzpicture}[>=Latex, font=\small, scale=0.92]
\begin{scope}
  \node[align=center] at (3.3,5.15) {\textbf{A. Weak alignment at adjacent layers}};
  \draw[gray!60] (-0.55,0.05) rectangle (7.15,4.75);
  \node[draw, rounded corners, fill=blue!8, minimum width=2.0cm, minimum height=0.75cm] (Aell) at (1.3,3.75) {$z_\ell^A\in C$};
  \node[draw, rounded corners, fill=blue!8, minimum width=2.4cm, minimum height=0.75cm] (Aellp) at (5.55,3.75) {$z_{\ell+1}^A=\Phi_\ell^A(z_\ell^A)$};
  \node[draw, rounded corners, fill=green!8, minimum width=2.2cm, minimum height=0.75cm] (Bell) at (1.3,1.05) {$E_\ell z_\ell^A\approx z_\ell^B$};
  \node[draw, rounded corners, fill=green!8, minimum width=2.35cm, minimum height=0.75cm] (Bellp) at (5.55,1.05) {$\Phi_\ell^B(E_\ell z_\ell^A)$};
  \draw[->, thick] (Aell) -- node[above] {$\Phi_\ell^A$} (Aellp);
  \draw[->, thick] (Aell) -- node[left] {$E_\ell$} (Bell);
  \draw[->, thick] (Bell) -- node[below] {$\Phi_\ell^B$} (Bellp);
  \draw[->, thick] (Aellp) -- node[right] {$E_{\ell+1}$} (Bellp);
  \node[align=center, text width=7.0cm] at (3.3,-0.65) {\footnotesize Weak alignment asks whether two networks are similar up to affine transform.};
\end{scope}
\begin{scope}[xshift=8.5cm]
  \node[align=center] at (3.8,5.15) {\textbf{B. Strong axis alignment}};
  \draw[gray!60] (-0.55,0.05) rectangle (8.05,4.75);
  \draw[->] (0.70,0.75) -- (0.70,4.15);
  \draw[->] (0.70,0.75) -- (3.10,0.75);
  \node[rotate=90] at (0.26,2.45) {\footnotesize samples $z\in C$};
  \node[below] at (2.20,0.75) {$z_j$};
  \draw[blue!70!black, very thick] (1.10,1.05) -- (2.55,3.65);
  \node[blue!70!black] at (1.75,4.00) {\footnotesize $A$-axis $j$};
  \draw[->] (5.35,0.75) -- (5.35,4.15);
  \draw[->] (5.35,0.75) -- (7.60,0.75);
  \node[rotate=90] at (4.90,2.45) {\footnotesize samples $z\in C$};
  \node[below] at (7.00,0.75) {$(E_\ell z)_i$};
  \draw[green!60!black, very thick] (5.75,1.35) -- (7.35,2.65);
  \node[green!50!black, align=center] at (6.55,3.95) {\footnotesize $B$-axis $i$\\[-1mm]\footnotesize after $E_\ell$};
  \draw[->, thick] (3.15,2.30) -- (4.70,2.30);
  \node[align=center] at (3.65,2.88) {\footnotesize match up\\[-1mm]\footnotesize to scale $\alpha$};
  \node[align=center, text width=7.8cm] at (3.75,-0.78) {\footnotesize Strong axis alignment asks whether coordinates are matched up to rescaling.};
\end{scope}
\end{tikzpicture}
\caption{Two notions of similarity. \textbf{A:} Weak alignment. \textbf{B:} Strong axis alignment.}
\label{fig:alignment_defs}
\end{figure}

\noindent It is natural to suppose that weak alignment is indeed generally weaker than strong alignment: e.g. that two networks might often be weakly equivalent without also being strongly equivalent.  However, our first main result is to show that in an important case -- namely, where two adjacent layers are weakly equivalent and where the task the network solves is suitably hard -- the distinction between weak and strong alignment collapses. This result is based on the intuition that if two networks are weakly aligned at layer $\ell$, but have misaligned privileged axes, then after the nonlinearity at layer $\ell+1$, they will be different enough that they will not even be weakly aligned there. In other words, if the two task representations are weakly related before and after a nonlinearity, the weak map must preserve the nonlinear signatures of individual coordinates: ReLU kink traces in the nonsmooth case, and Softplus affine-ridge curvature in the smooth case. This is the essential mathematical reason that native axes can become privileged rather than arbitrary.

This intuition relies on the nonlinearity having a nontrivial effect -- that is, it must be ``used'':
\begin{definition}[Used axes, informal]
A layer-$\ell$ axis is \emph{used} if the task and the next layer expose a genuine nonlinear effect of that axis. For ReLU, this means that the task ``crosses'' the zero trace set $\{z_j=0\}$ and the outgoing column $W_{\ell+1}[:,j]$ is nonzero. For Softplus, this means that the outgoing column is nonzero, the coordinate genuinely varies on the task set, and the counted coordinates are not redundant through exact duplicates, opposites, or affine sign-cancellations.  (See Appendix Definitions~\ref{def:relu-used} and~\ref{def:sp-used} for the precise version of these definitions.)
\end{definition}

\begin{figure}[t]
\centering
\begin{tikzpicture}[>=Latex, scale=1.0]
\begin{scope}
  \node[align=center,text width=5.8cm] at (2.8,4.78) {\small\textbf{A. Used vs. degenerate gates}};
  \draw[gray!60] (-0.6,-0.25) rectangle (6.2,4.35);
  \draw[->] (0,0) -- (5.6,0) node[right] {$z_1$};
  \draw[->] (0,0) -- (0,3.9) node[above] {$z_2$};
  \filldraw[fill=blue!15, draw=blue!70!black, thick] plot[smooth cycle,tension=0.9] coordinates {(1.0,0.8) (1.6,2.8) (3.3,3.2) (4.7,2.2) (4.2,0.9) (2.5,0.5)};
  \node[blue!70!black] at (3.2,2.0) {$C$};
  \draw[red!80!black, thick, dashed] (2.8,0.2) -- (2.8,3.55) node[above] {\footnotesize used trace};
  \draw[gray!80!black, thick, dashed] (5.05,0.2) -- (5.05,3.55) node[above] {\footnotesize not seen};
  \fill (2.8,1.35) circle (1.4pt) node[below right] {$p$};
  \fill[black] (2.35,1.05) circle (1.3pt) node[below left] {$z^-$};
  \fill[black] (3.25,1.65) circle (1.3pt) node[above right] {$z^+$};
\end{scope}
\begin{scope}[xshift=7.55cm]
  \node[align=center,text width=5.8cm] at (2.8,4.78) {\small\textbf{B. Kink preservation}};
  \draw[gray!60] (-0.6,-0.25) rectangle (6.2,4.35);
  \draw[->] (0,0) -- (2.35,0) node[right] {$z_1$};
  \draw[->] (0,0) -- (0,3.5) node[above] {$z_2$};
  \filldraw[fill=blue!12, draw=blue!70!black, thick] plot[smooth cycle,tension=0.9] coordinates {(0.45,0.65) (0.8,2.45) (1.65,2.65) (2.15,1.75) (1.9,0.75) (1.1,0.45)};
  \node[blue!70!black] at (1.55,2.0) {$C$};
  \draw[red!80!black, thick, dashed] (1.25,0.15) -- (1.25,3.15) node[above] {$\{z_j=0\}$};
  \fill (1.25,1.16) circle (1.3pt);
  \draw[->, thick] (2.55,1.8) -- (3.35,1.8) node[midway,above] {\footnotesize $\Phi_\ell^A$, $E_{\ell+1}$};
  \draw[green!55!black, very thick] (3.65,0.95) -- (4.55,1.75) -- (5.70,3.15);
  \fill (4.55,1.75) circle (1.3pt) node[below right] {\footnotesize kink};
  \draw[->, green!55!black, thick] (4.18,1.42) -- (3.88,1.15);
  \draw[->, green!55!black, thick] (4.92,2.18) -- (5.28,2.68);
  \node[green!45!black] at (3.95,0.78) {\footnotesize left slope};
  \node[green!45!black] at (5.18,3.38) {\footnotesize right slope};
\end{scope}
\end{tikzpicture}
\caption{Used gates and kink preservation. \textbf{A:} Only a crossed, downstream-used zero set contributes a task-visible nonlinear signature. \textbf{B:} A used ReLU boundary creates a slope change, and an injective affine map cannot erase that change.}
\label{fig:relu-intuition}
\end{figure}

Non-used ReLU axes can be removed from the nonlinear gate requirements of the network: an unused gate contributes nothing downstream, while a gate whose trace is not crossed is affine or zero on the task patch and can be absorbed into the adjacent affine part.  This ``shrinkability'' principle is formalized in Appendix Lemma~\ref{lem:relu-shrinkability}, and motivates the following definition of used-axis budget:

\begin{definition}[Required used-axis budget]
Let $L$ be the loss function for the task.  Let $\mathcal F_{\ell,\le k}$ be the networks of the fixed macroarchitecture that solve the task using at most $k$ used layer-$\ell$ axes. Define
\begin{equation}
 m_\ell(\varepsilon)
 =
 \min\{k:\exists f\in\mathcal F_{\ell,\le k}\text{ with }L(f)\le\varepsilon\},
 \label{eq:main-m-budget}
\end{equation}
with $m_\ell(\varepsilon)=+\infty$ if no such $k$ exists. The quantity $m_\ell(\varepsilon)$ is a \textbf{measure of task difficulty}: it counts how many nonlinear axes the task forces this layer to use.
\end{definition}
With these ideas in mind, we can state the first main theorem:
\begin{theorem}[Exact weak--strong equivalence]
\label{thm:main-exact-weak-strong}
Fix adjacent layers $\ell$ and $\ell+1$. Suppose $A$ and $B$ are exactly weakly equivalent at both layers, and suppose the layer-$\ell$ axes used by the task satisfy the appropriate adjacent usedness assumptions. Then every used $A$-side axis is exactly matched to a distinct pulled-back $B$-side axis. Consequently, if $A$ solves the task to loss at most $\varepsilon$, then
\[
 \AxisAlign_\ell(A,B)
 \ge
 \frac{m_\ell(\varepsilon)}{d_\ell^A}.
\]
In particular, if all layer-$\ell$ axes are task-used, weak alignment across the adjacent pair of layers implies strong alignment. 
\end{theorem}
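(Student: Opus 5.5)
The plan is to use the commuting square of Fig.~\ref{fig:alignment_defs}A. On the task set, exact weak equivalence at layers $\ell$ and $\ell+1$ gives
\[
 E_{\ell+1}\bigl(W^A_{\ell+1}\varphi(z)+b^A_{\ell+1}\bigr)=W^B_{\ell+1}\varphi(E_\ell z)+b^B_{\ell+1}\qquad \forall z\in M^A_\ell .
\]
The left side is an affine image of the coordinatewise nonlinearity on $A$-coordinates. The right side is the same nonlinearity applied to the pulled-back coordinates $(E_\ell z)_i = \langle t_i, z\rangle + a_i$, where $t_i$ is the $i$-th row of $T_\ell$. The goal is to show that every nonlinear singularity (ReLU) or curvature signature (Softplus) of a used $A$-axis must appear on the right side. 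This forces some pulled-back $B$-axis to be a rescaled copy of that $A$-axis. Composing with a left inverse of the injective $E_{\ell+1}$ is not needed; it suffices to compare nonsmooth loci.

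\textbf{ReLU case.} Take a used $A$-axis $j$. Usedness gives a point $p\in M^A_\ell$ on $\{z_j=0\}$, generic with respect to the other traces, with points $z^\pm$ on either side in the task set. Across the hyperplane, the left side's derivative jumps by $E_{\ell+1}$ applied to the outer product of $W^A_{\ell+1}[:,j]$ with $e_j^\top$. This jump is nonzero because $W^A_{\ell+1}[:,j]\neq0$ and $T_{\ell+1}$ is injective. The right side is piecewise affine, and its kinks lie only on hyperplanes $\{\langle t_i,z\rangle+a_i=0\}$. So some $B$-coordinate $i$ must have its zero set contain $\{z_j=0\}$ near $p$, which gives $(E_\ell z)_i=\alpha_j z_j$ with $\alpha_j\neq0$ and $a_i=0$.

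To make this choice distinct across $j$, match each $j$ to the set of $i$ whose hyperplane equals $\{z_j=0\}$; this set is nonempty. Different $j$ give different hyperplanes, hence disjoint sets, so any selection is injective.

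\textbf{Softplus case.} Here I would replace kinks by the analytic ridge structure. Restrict to a line in direction $v$ through the task set. The left side is a sum of $\sigma(z_j)$ terms times fixed vectors, and the right side is a sum of $\sigma(\langle t_i,z\rangle+a_i)$ terms. Both sides are real-analytic, so equality extends off the task set to the affine hull. Next, use the linear independence of the functions $\sigma(\langle u,z\rangle+c)$ for pairwise non-parallel or non-opposite/non-shifted $(u,c)$, together with affine functions. This is the nonredundancy clause in the Softplus usedness definition, whose relation $\sigma(t)-\sigma(-t)=t$ accounts for opposites. That independence forces each used $z_j$ to equal some $\langle t_i,z\rangle+a_i$ exactly, so $\alpha=1$. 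The proof of this independence is the main obstacle. I would try asymptotics: along a direction where $u\cdot v$ differs, $\sigma$ contributes distinct exponential rates $e^{(u\cdot v)s}$ as $s\to-\infty$. A Vandermonde/exponential-sum argument then separates the terms, and the opposite-sign cases are handled by the cancellation relation.

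Finally, the counting bound. Let $f=A$ have loss at most $\varepsilon$. By shrinkability (Lemma~\ref{lem:relu-shrinkability} and its Softplus analogue), $A$ has at least $m_\ell(\varepsilon)$ used layer-$\ell$ axes. Otherwise, removing the unused ones would give a network in $\mathcal F_{\ell,\le k}$ with $k<m_\ell(\varepsilon)$ and the same loss, contradicting the definition of $m_\ell(\varepsilon)$ in \eqref{eq:main-m-budget}. The injective matching then gives $\AxisAlign_\ell(A,B)\ge m_\ell(\varepsilon)/d^A_\ell$. If every axis is used, the count is $d^A_\ell$, which is strong alignment.
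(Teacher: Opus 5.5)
Your ReLU argument is essentially the paper's: the nonzero jump $T_{\ell+1}W^A_{\ell+1}[:,j]e_j^\top$ across a private crossed piece of $\{z_j=0\}$, the fact that $\Phi^B_\ell\circ E_\ell$ can only kink on pulled-back hyperplanes, and proportionality of two affine functionals with the same zero set on $V=\operatorname{aff}(C)$. Your disjoint-sets argument for injectivity is also fine. Note only that you get proportionality on $V$, so $a_i=0$ and distinctness of the traces are automatic only when $C$ is full-dimensional; otherwise distinctness is the trace-separation clause.

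The genuine gap is in the Softplus case, at ``That independence forces each used $z_j$ to equal some $\langle t_i,z\rangle+a_i$ exactly, so $\alpha=1$.'' No independence statement for softplus ridges together with affine functions can separate $g$ from $-g$, because $\sigma(g)-\sigma(-g)-g\equiv0$. The most it yields is $(E_\ell z)_{\pi(j)}=\varepsilon_j z_j$ with $\varepsilon_j\in\{+1,-1\}$. Fixing the orientation needs the affine sign-noncancellation clause (iii) of Definition~\ref{def:sp-used}, which you never invoke, and without it the statement is false. For example, take $C\subset\{z\in\R^2:z_1=2z_2\}$ with $z_2$ nonconstant, and let $W^A_{\ell+1}$ have columns $v$ and $-2v$ with $v\neq0$. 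Let $B$ have $z^B_\ell=-z^A_\ell$, $W^B_{\ell+1}=W^A_{\ell+1}$ and $b^B_{\ell+1}=b^A_{\ell+1}$. Since $v\sigma(-z_1)-2v\sigma(-z_2)=v\sigma(z_1)-2v\sigma(z_2)-v(z_1-2z_2)$ and $z_1-2z_2=0$ on $C$, the maps $E_\ell=-I$ and $E_{\ell+1}=\mathrm{id}$ give exact weak equivalence at both layers. Clauses (i)--(ii) hold on both sides, yet the pulled-back axes are $-z_1,-z_2$, so no $A$-axis has an $\alpha=1$ match.

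The missing step is the one the paper performs after matching. The ridge comparison, with (i)--(ii) on both sides so that sign classes pair off bijectively, also gives $W^B_{\ell+1}[:,\pi(j)]=T_{\ell+1}W^A_{\ell+1}[:,j]$. Writing $E_{\ell+1}(y)=T_{\ell+1}y+a_{\ell+1}$ and substituting back with $\sigma(t)-\sigma(-t)=t$ leaves
\[
 T_{\ell+1}b^A_{\ell+1}+a_{\ell+1}-b^B_{\ell+1}+\sum_{j\in S_-}T_{\ell+1}W^A_{\ell+1}[:,j]\,z_j=0\quad\text{on }C,\qquad S_-:=\{j:\varepsilon_j=-1\}.
\]
Injectivity of $T_{\ell+1}$ then makes $\sum_{j\in S_-}W^A_{\ell+1}[:,j]z_j$ constant on $C$, which contradicts (iii) unless $S_-=\varnothing$. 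When $C$ is full-dimensional this follows already from $W^A_{\ell+1}[:,j]\neq0$, but the step must still be made.

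Separately, your sketch of the independence lemma needs more than ``distinct exponential rates''. Terms with the same $|u\cdot v|$ but different offsets share every rate. In addition, the harmonics of $\sigma(x)=\sum_{n\ge1}(-1)^{n+1}e^{nx}/n$ ($x<0$) collide across commensurate slopes. It can be repaired in four steps:
\begin{itemize}
\item flip all slopes to be positive, absorbing the affine residuals;
\item match generalized Dirichlet-series coefficients as $s\to-\infty$;
\item at the smallest slope, use only harmonics that avoid the finitely many contaminating ratios, which gives a Vandermonde system in the $e^{r_k}$;
\item choose the line generically, so that distinct sign classes on $V$ stay distinct after restriction.
\end{itemize}
The paper instead differentiates twice and Fourier transforms. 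There the decay $e^{-\pi\omega/|s|}$ separates slopes with no harmonic interference, and the phases $e^{i\omega r/s}$ separate offsets by averaging.
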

\noindent The formal statement of this theorem, as well as the proofs, are given in appendix \S\ref{sec:exact-ws}. The basic proof method is a little bit different for the ReLU and softplus cases, but essentially shows that if a gate is used, it causes an effect on outputs at the next layer, such that when the axes are unaligned before the nonlinearity, the outcomes after the nonlinearity are not affinely related.  A network optimized to perform a task that strongly constrains it will have many or all of its gates used. Thus, the theorem says that two layers of successive weak alignment, for a sufficiently hard task, force strong privileged axis-alignment at the earlier layer.  

\textbf{Key implication for NeuroAI:} We can interpret Theorem~\ref{thm:main-exact-weak-strong} as saying, essentially: {\bfseries\itshape\uline{hard tasks cause privileged axes, and under the condition of a hard task, weak and strong alignment are equivalent.}}

\section{Asymptotic Weak--Strong Equivalence}
\label{sec:main-asymptotic}
\begin{definitionguide}{Intuition}
Section~\ref{sec:main-weak-strong} showed that exact weak alignment across adjacent layers forces exact axis alignment. This section makes that statement robust: small weak-alignment errors force many axes to be approximately aligned, and for sequences of pairs of networks that are converging in the weak-alignment sense, strong alignment eventually appears at the task-used axes---that is, units whose activity actually matters for solving the task, rather than being redundant or ignored downstream.
\end{definitionguide}

The formulation in Theorem~\ref{thm:main-exact-weak-strong} has intentionally strong assumptions. Empirically, two representations are almost never exactly weakly equivalent. It is therefore natural to seek a version of the theorem that describes how many gates must align given a certain defect in weak alignment. What we really want to know, in fact, is that if a sequence of networks $A_n,B_n$, such as might arise during optimization, converge toward each other in terms of weak alignment, the same will occur with strong alignment as well (see Fig. \ref{fig:asymptotic-weak-strong-main}). Formalizing this concept requires making soft versions of the two alignment concepts. 

\begin{definition}[Soft weak alignment]
To measure failures of weak alignment, let
\[
 \omega_\ell(E_\ell;A,B)
 =
 \left(\E_\Omega\|z^B_\ell(x)-E_\ell z^A_\ell(x)\|_2^2\right)^{1/2}.
\]
When $\omega_\ell=0$, exact weak alignment holds at layer $\ell$.
\end{definition}

\begin{definition}[Soft strong alignment, informal]
For $0<\theta\le1$, define $\AxisAlign_{\ell}^\theta(A,B)$ as the fraction of used $A$-side axes that can be injectively matched to $B$-side axes with normalized accuracy at least $\theta$, normalized to the variability of the axes themselves. By definition, $\AxisAlign^1_\ell=1$ is exact strong alignment in layer $\ell$. (See \S\ref{sec:asymptotic-ws} for a completely formal definition.)
\end{definition}

\begin{figure}[t]
\centering
\resizebox{\textwidth}{!}{%
\begin{tikzpicture}[>=Latex,font=\small,x=0.88cm,y=1.05cm]

\def\TopY{4.25}
\def\AxisBottom{0.70}
\def\AxisTop{3.88}
\def\BottomTextY{-0.20}
\def\XAxisLabelY{0.36}

\begin{scope}
\node[font=\bfseries] at (3.55,\TopY) {Converging in Representation};

\draw[->,thick] (0,\AxisBottom) -- (0,\AxisTop);
\draw[->,thick] (0,\AxisBottom) -- (7.10,\AxisBottom);
\node[font=\small] at (3.55,\XAxisLabelY) {optimization time};
\node[rotate=90,font=\small] at (-0.60,2.29) {linear misalignment};

\draw[blue!70!black,very thick,smooth]
  plot coordinates {(0.35,3.58) (1.20,3.27) (2.55,2.62) (4.35,1.70) (6.45,0.98)};
\draw[green!55!black,very thick,smooth]
  plot coordinates {(0.35,2.42) (1.20,2.48) (2.55,2.10) (4.35,1.48) (6.45,0.95)};

\foreach \x/\ya/\yb/\labA/\labB/\dxA/\dyA/\dxB/\dyB in {
  1.05/3.33/2.46/{A_{n-2}}/{B_{n-2}}/-0.34/0.34/0.08/-0.34,
  2.30/2.74/2.18/{A_{n-1}}/{B_{n-1}}/-0.42/0.32/0.08/-0.34,
  4.05/1.86/1.58/{A_n}/{B_n}/-0.28/0.34/0.12/-0.34,
  5.95/1.11/0.98/{A_{n+1}}/{B_{n+1}}/-0.54/0.31/0.14/-0.30
}{
  \fill[blue!70!black] (\x,\ya) circle (2.1pt);
  \fill[green!55!black] (\x,\yb) circle (2.1pt);
  \draw[densely dashed,gray] (\x,\ya) -- (\x,\yb);
  \node[blue!70!black,font=\scriptsize,fill=white,inner sep=1pt]
    at ($(\x,\ya)+(\dxA,\dyA)$) {$\labA$};
  \node[green!55!black,font=\scriptsize,fill=white,inner sep=1pt]
    at ($(\x,\yb)+(\dxB,\dyB)$) {$\labB$};
}

\node[align=center,font=\scriptsize] at (3.55,\BottomTextY)
  {Weak errors $\omega_\ell,\omega_{\ell+1}$ shrink along the sequence};
\end{scope}

\draw[->,very thick] (7.72,2.32) -- (9.02,2.32);
\node[font=\scriptsize,fill=white,inner sep=1pt] at (8.37,2.62)
  {asymptotic};
\node[font=\scriptsize,fill=white,inner sep=1pt] at (8.37,2.02)
  {equivalence};

\begin{scope}[xshift=8.95cm]
\node[font=\bfseries] at (3.55,\TopY) {Convergence in Axes};

\draw[->,thick] (0,\AxisBottom) -- (0,\AxisTop);
\draw[->,thick] (0,\AxisBottom) -- (7.15,\AxisBottom);

\node[font=\scriptsize] at (0.95,\XAxisLabelY) {$n-2$};
\node[font=\scriptsize] at (2.70,\XAxisLabelY) {$n-1$};
\node[font=\scriptsize] at (4.45,\XAxisLabelY) {$n$};
\node[font=\scriptsize] at (6.20,\XAxisLabelY) {$n+1$};

\draw[blue!70!black,very thick]  (0.62,0.88)--(1.30,3.55);
\draw[green!55!black,very thick] (0.92,0.88)--(1.62,1.92);
\draw[<->,gray,densely dashed] (1.76,1.92)--(1.44,3.55);

\draw[blue!70!black,very thick]  (2.37,0.88)--(3.05,3.43);
\draw[green!55!black,very thick] (2.60,0.88)--(3.24,2.36);
\draw[<->,gray,densely dashed] (3.40,2.36)--(3.18,3.43);

\draw[blue!70!black,very thick]  (4.12,0.88)--(4.80,3.30);
\draw[green!55!black,very thick] (4.27,0.88)--(4.91,2.73);
\draw[<->,gray,densely dashed] (5.06,2.73)--(4.95,3.30);

\draw[blue!70!black,very thick]  (5.87,0.88)--(6.55,3.22);
\draw[green!55!black,very thick] (5.92,0.88)--(6.57,3.13);
\draw[<->,gray,densely dashed] (6.72,3.13)--(6.70,3.22);

\draw[->,thin,gray] (1.82,3.65) -- (2.20,3.65);
\draw[->,thin,gray] (3.57,3.65) -- (3.95,3.65);
\draw[->,thin,gray] (5.32,3.65) -- (5.70,3.65);

\draw[blue!70!black,very thick] (0.62,3.95)--(1.08,3.95);
\node[font=\scriptsize,anchor=west] at (1.18,3.95) {$A$ axis};
\draw[green!55!black,very thick] (4.42,3.95)--(4.88,3.95);
\node[font=\scriptsize,anchor=west] at (4.98,3.95) {$B$ axis};

\node[align=center,font=\scriptsize] at (3.55,\BottomTextY)
  {All used axes eventually align};
\end{scope}

\end{tikzpicture}%
}
\caption{Asymptotic weak--strong alignment along optimization. In a common use case, two independently initialized networks are trained on the same task. As their adjacent weak-alignment errors shrink, the asymptotic weak--strong theorem converts this into strong-axis convergence: every task-required used axis is eventually $\theta$-aligned.}
\label{fig:asymptotic-weak-strong-main}
\end{figure}

\noindent We are now in a position to state an asymptotic version of weak--strong equivalence. Here, our goal is not to compare a single pair of networks with a single pair of comparison maps, but rather, to understand the behavior of convergent sequences within a  \emph{set} of networks and such comparison maps.
\begin{theorem}[Asymptotic weak--strong equivalence, informal]
\label{thm:main-asymp-weak-strong}
Let $\mathcal K$ be a compact set of tuples $p=(A,B,E_\ell,E_{\ell+1})$. Then for each $\theta\in(0,1)$ there is a constant $\kappa_{\mathcal K}(\theta)>0$ such that, whenever $A$ and $B$ solve the task to loss at most $\varepsilon$,
\begin{equation}
\AxisAlign_{\ell}^{\theta}(A,B)
\ge
\frac{m_\ell(\varepsilon)}{d_\ell^A}
-
\frac{\bigl(\|W_{\ell+1}^B\|\omega_\ell(p)+\omega_{\ell+1}(p)\bigr)^2}
{\kappa_{\mathcal K}(\theta)^2 d_\ell^A}.
\label{eq:main-asymptotic-bound}
\end{equation}
Moreover, if $p_n\in\mathcal K$ is a sequence of network pairs which are converging to weak alignment in two adjacent layers $\ell$ and $\ell + 1$ -- i.e. that $\omega_\ell(p_n) \to 0$ and $\omega_{\ell+1}(p_n) \to 0$) -- then there is an $N$ such that for $n \geq N$, all task-required used axes at layer $\ell$ are eventually $\theta$-aligned:
\[
\AxisAlign_{\ell}^{\theta}(A_n,B_n)
\ge
\frac{m_\ell(\varepsilon)}{d_\ell^A}.
\]
\end{theorem}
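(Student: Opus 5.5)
The plan is a compactness-plus-quantitative-rigidity argument built on Theorem~\ref{thm:main-exact-weak-strong}. I would first define a per-axis \emph{obstruction functional}. For a tuple $p=(A,B,E_\ell,E_{\ell+1})$ and a used $A$-axis $j$, let $\delta_j(p)$ measure how far the composite $E_{\ell+1}\circ\Phi^A_\ell$ is from $\Phi^B_\ell\circ E_\ell$ near the trace of axis $j$. For ReLU this is the mismatch of the kink across $\{z_j=0\}$; for Softplus it is the affine-ridge curvature. The key pointwise inequality is obtained by adding and subtracting $z^B_\ell$ and $z^B_{\ell+1}$:
\[
\bigl\|E_{\ell+1}\Phi^A_\ell(z^A_\ell)-\Phi^B_\ell(E_\ell z^A_\ell)\bigr\|
\le \|E_{\ell+1}z^A_{\ell+1}-z^B_{\ell+1}\| + \|\Phi^B_\ell(z^B_\ell)-\Phi^B_\ell(E_\ell z^A_\ell)\|.
\]
The last term is at most $\|W^B_{\ell+1}\|\,\|z^B_\ell-E_\ell z^A_\ell\|$, because $\rho$ and $\sigma$ are $1$-Lipschitz. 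Taking $L^2(\Omega)$ norms gives a total commutator defect of at most $D(p):=\|W^B_{\ell+1}\|\omega_\ell(p)+\omega_{\ell+1}(p)$. This is exactly the numerator appearing in \eqref{eq:main-asymptotic-bound}.

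Second, I would prove a \emph{quantitative rigidity lemma}. Call a used axis $j$ \emph{$\theta$-bad} if no $B$-axis (not already assigned) matches it with normalized accuracy $\ge\theta$. The lemma asserts that each $\theta$-bad axis contributes at least $\kappa^2$ to the squared defect, and that these contributions come from essentially disjoint pieces of the commutator. The contributions should localize because different used axes carry their nonlinear signatures on different traces, or in different outgoing columns of $W_{\ell+1}$. I would establish this by contradiction and compactness. Define $\kappa_{\mathcal K}(\theta)$ as the infimum, over $p\in\mathcal K$ and $\theta$-bad used axes $j$, of the localized defect attributable to $j$. If this infimum were $0$, a minimizing sequence in the compact set $\mathcal K$ would converge to some $p_*$ in which axis $j$ has zero localized defect yet is still not $\theta$-matched, since $\theta$-badness is a closed condition for $\theta<1$. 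The exact argument behind Theorem~\ref{thm:main-exact-weak-strong}, applied locally to $j$, then forces $j$ to be exactly matched in $p_*$, which is a contradiction. This step requires the localized defect to be lower semicontinuous in $p$, and usedness to persist in the limit; I would build a quantitative usedness margin into the definition of $\mathcal K$ to guarantee both.

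Third, I would count. Summing the localized contributions gives $\#\{\theta\text{-bad used axes}\}\,\kappa^2\le D(p)^2$. The argument of Theorem~\ref{thm:main-exact-weak-strong} shows that a network with loss $\le\varepsilon$ has at least $m_\ell(\varepsilon)$ used layer-$\ell$ axes; this follows from the definition \eqref{eq:main-m-budget} together with the shrinkability principle of Lemma~\ref{lem:relu-shrinkability}. Hence the number of $\theta$-aligned axes is at least $m_\ell(\varepsilon)-D^2/\kappa^2$, and dividing by $d^A_\ell$ yields \eqref{eq:main-asymptotic-bound}. The asymptotic statement then follows directly. On $\mathcal K$, $\|W^B_{\ell+1}\|$ is uniformly bounded, so $D(p_n)\to0$. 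Once $D(p_n)^2<\kappa_{\mathcal K}(\theta)^2$, the number of bad axes is less than $1$, hence equals zero.

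The main obstacle is the rigidity lemma, and specifically the \emph{additivity} of the localized defects across axes. The global commutator norm must dominate the sum of the per-axis signatures, and a cancellation between several misaligned axes, for example through coinciding traces or parallel columns of $W_{\ell+1}$, would break this. My first attempt would be to show that the nondegeneracy clauses of the usedness definitions (no duplicates, opposites, or affine sign-cancellations) give, uniformly on compact $\mathcal K$, disjoint neighbourhoods of the traces for ReLU. For Softplus they should give linearly independent curvature directions with a uniform Gram lower bound. From either, an orthogonal-type decomposition yields the required sum of squares, at the cost of shrinking $\kappa$.
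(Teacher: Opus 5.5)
\paragraph{Where the proposal breaks.} Your first and third steps match the paper's proof: the Lipschitz bound $\Delta(p)\le\|W^B_{\ell+1}\|\,\omega_\ell(p)+\omega_{\ell+1}(p)$ on the global commutation defect, the count $\AxisAlign^\theta_\ell\ge(m_\ell(\varepsilon)-B^\theta(p))/d^A_\ell$, and the integer argument for the asymptotic clause. The gap is your second step. The inequality $\#\{\theta\text{-bad axes}\}\,\kappa^2\le D(p)^2$ rests on an additivity claim for per-axis localized defects, and you have not proved it. The route you sketch does not obviously deliver it either. $D(p)$ bounds the $L^2(\Omega)$ norm of the single function $\Phi^B_\ell\circ E_\ell-E_{\ell+1}\circ\Phi^A_\ell$, and that function's mass need not sit near the trace (or ridge) of the axis responsible for it. A kink or curvature mismatch is a derivative-level quantity, which an $L^2$ norm does not control locally without a further inverse estimate. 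So disjoint trace neighbourhoods or a Gram lower bound do not by themselves turn one global norm into a sum of per-axis squares.

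\paragraph{The fix.} Additivity is unnecessary, because $\kappa_{\mathcal K}(\theta)$ is allowed to depend on $\mathcal K$ and the number of bad axes is at most $d^A_\ell$. The paper defines
\[
 \kappa_{\mathcal K}(\theta):=\inf_{p\in\mathcal K,\;B^\theta(p)>0}\frac{\Delta(p)}{\sqrt{B^\theta(p)}},
\]
where $B^\theta(p)$ is the number of task-required $A$-axes failing threshold $\theta$ under the best injective matching. With this definition, $B^\theta(p)\le\Delta(p)^2/\kappa_{\mathcal K}(\theta)^2$ holds automatically. Positivity of $\kappa_{\mathcal K}(\theta)$ is your compactness argument run globally:
\begin{enumerate}
\item Suppose the ratio tends to $0$ along $p_n$. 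Since $1\le B^\theta(p_n)\le d^A_\ell$, this forces $\Delta(p_n)\to0$.
\item A limit $p_*\in\mathcal K$ then has $\Delta(p_*)=0$.
\item The exact theorem supplies a matching in which every counted axis has score $1$.
\item The finitely many pairwise scores are continuous, so that same matching is $\theta$-good for large $n$. This contradicts $B^\theta(p_n)>0$.
\end{enumerate}
Equivalently, ``one bad axis forces $\Delta\ge\kappa_1>0$'' combined with $B^\theta\le d^A_\ell$ gives your counting bound with $\kappa=\kappa_1/\sqrt{d^A_\ell}$.

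This also removes the need for a localized exact theorem and for lower semicontinuity of localized defects. What you still need, as you correctly noted, is a quantitative usedness margin built into $\mathcal K$. That margin ensures limits satisfy the exact hypotheses and keeps the score denominators bounded away from zero.
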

\noindent The formal theorem and proof are Appendix Theorem~\ref{thm:common-asymptotic}. The intuition behind the proof is that, if misaligned privileged axes persisted while the networks got closer in the weak equivalence sense, a convergent subsequence would limit to an exactly weakly aligned pair with misaligned axes -- contradicting Theorem \ref{thm:main-exact-weak-strong}.  

Theorem \ref{thm:main-asymp-weak-strong} shows that if the error is already small at both sides of a ReLU step, then many task-required gates cannot be badly misaligned, and the defect from axis alignment is an error term controlled by the two adjacent weak equivalence deficits. In fact, the result is stronger: full axis alignment must manifest not just in an asymptotic limit, but in an actual specific network.  The result assumes that the set of networks is compact: it is very natural to arrange compactness of a set of comparison problems by requiring that all network and comparison map parameters have norm bounded by some constant $M$ (see Appendix for further details). 

\textbf{Key implication for NeuroAI:}  Asymptotic weak-strong equivalence emerges: {\bfseries\itshape\uline{if two sequences of networks tend toward having similar representations during optimization toward a common goal, their axes eventually become aligned}}.

\section{Zippering Up to Full Contravariance}
\label{sec:main-zippering}

\begin{definitionguide}{Intuition}
The previous section showed that weak alignment on both sides of a layer can force strong axis alignment there. Here we show that, for \emph{minimal networks}, weak alignment propagates upstream: terminal weak alignment forces strong axis alignment one layer earlier, and iterating this step ``zippers'' strong alignment upstream through the compared layers.
\end{definitionguide}

\begin{figure}[t]
\centering

\begin{minipage}[t]{0.49\textwidth}
\vspace{0pt}
\centering
\textbf{(A) Empirical evidence of zippering.}\\[0.3ex]

\includegraphics[width=\linewidth]{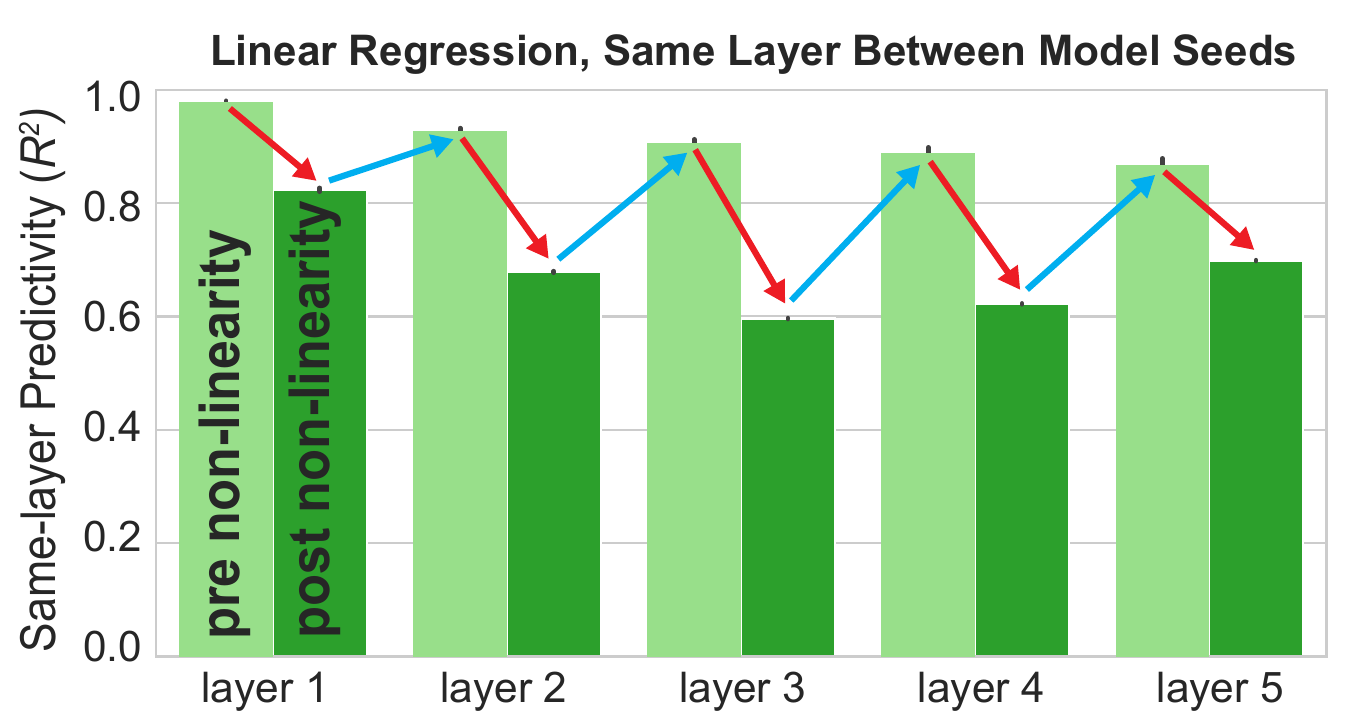}

\end{minipage}
\begin{minipage}[t]{0.50\textwidth}
\vspace{0pt}
\centering

\textbf{(B) The zippering mechanism.}\\[0.3ex]

\resizebox{\linewidth}{!}{%
\begin{tikzpicture}[>=Latex,x=1.55cm,y=1cm,font=\scriptsize]

  \tikzset{
    box/.style={
      draw,
      rounded corners,
      minimum width=.95cm,
      minimum height=.50cm,
      inner sep=1pt,
      align=center,
      fill=blue!6
    },
    dotbox/.style={
      box,
      minimum width=.52cm
    },
    map/.style={->,thick},
    eqmap/.style={->,very thick},
    zip/.style={->,very thick,dashed,blue!70!black},
    back/.style={->,ultra thick,orange!85!black}
  }

  \node[box]    (Aq)    at (0,2.0) {$z^A_q$};
  \node[box]    (Aq1)   at (1,2.0) {$z^A_{q+1}$};
  \node[dotbox] (Adots) at (2,2.0) {$\cdots$};
  \node[box]    (As1)   at (3,2.0) {$z^A_{s-1}$};
  \node[box]    (As)    at (4,2.0) {$z^A_s$};

  \node[box]    (Bq)    at (0,0) {$z^B_q$};
  \node[box]    (Bq1)   at (1,0) {$z^B_{q+1}$};
  \node[dotbox] (Bdots) at (2,0) {$\cdots$};
  \node[box]    (Bs1)   at (3,0) {$z^B_{s-1}$};
  \node[box]    (Bs)    at (4,0) {$z^B_s$};

  \draw[map] (Aq)    -- node[midway,above=3pt,font=\tiny,inner sep=1pt] {$W\sigma$} (Aq1);
  \draw[map] (Aq1)   -- node[midway,above=3pt,font=\tiny,inner sep=1pt] {$W\sigma$} (Adots);
  \draw[map] (Adots) -- node[midway,above=3pt,font=\tiny,inner sep=1pt] {$W\sigma$} (As1);
  \draw[map] (As1)   -- node[midway,above=3pt,font=\tiny,inner sep=1pt] {$W\sigma$} (As);

  \draw[map] (Bq)    -- node[midway,below=3pt,font=\tiny,inner sep=1pt] {$W\sigma$} (Bq1);
  \draw[map] (Bq1)   -- node[midway,below=3pt,font=\tiny,inner sep=1pt] {$W\sigma$} (Bdots);
  \draw[map] (Bdots) -- node[midway,below=3pt,font=\tiny,inner sep=1pt] {$W\sigma$} (Bs1);
  \draw[map] (Bs1)   -- node[midway,below=3pt,font=\tiny,inner sep=1pt] {$W\sigma$} (Bs);

  \draw[eqmap] (As)  -- node[right=-14pt,font=\tiny,inner sep=1pt] {$E_s$ $\ $ \textbf{given}} (Bs);
  \draw[zip]   (As1) -- node[left=4pt,font=\tiny,inner sep=1pt] {$E_{s-1}$} (Bs1);
  \draw[zip]   (Aq1) -- node[right=2pt,font=\tiny,inner sep=1pt] {$E_{q+1}$} (Bq1);
  \draw[zip]   (Aq)  -- node[right=2pt,font=\tiny,inner sep=1pt] {$E_q$} (Bq);

  \draw[back] (3.85,2.72) -- (0.15,2.72);

  \node[
    orange!85!black,
    fill=white,
    inner sep=1.5pt,
    align=center,
    font=\scriptsize
  ] at (2.0,3.05)
    {upstream zipper:
     $E_s \Rightarrow E_{s-1}\Rightarrow\cdots\Rightarrow E_q$};

\end{tikzpicture}%
}

\end{minipage}
\hfill

\caption{
\textbf{(A)} Same-layer predictivity between different initialization seeds of an AlexNet-like model seeds. Post the nonlinearity at each layer (dark green bars), the gap to weak equivalence is quite large; but at each pre-nonlinearity layer (light green bars), the gap to weak equivalence is much less. In other words, two networks' representations are forced apart by the nonlinearity at each layer, but end up re-converging by action of linear network stage at the next layer, creating the saw-tooth-like ``zipper'' pattern seen in the figure.  Reproduced from \cite{thobani2025iatc}.  \textbf{(B)} Zippering across layers. Starting from terminal equivalence at layer $s$, generic identifiability of the softplus expansion through layer $s-1$ produces $E_{s-1}$, and the same argument travels upstream through $s-2,\ldots,q$.  At each step, the aligned target at layer $r+1$ has two generic softplus expansions through layer $r$; identifiability matches the arguments and produces $E_r$.
}
\label{fig:zippering}

\end{figure}

The weak-strong equivalence results assume adjacent layers of weak alignment as a starting point. Weak-strong equivalence thus explains why if you have two networks that are aligned up to linear mapping at all layers (or nearly so), you end up with axis alignment as well. The results so far, however, do not explain why two networks optimized for the same downstream task might be weakly aligned across the hierarchy in the first place. The empirical observations described in the introduction and manifest in Fig.~\ref{fig:motivation} have not been fully explained yet.
A coarse overlapping-task account can explain why some linearly decodable
shared directions might exist: a flexible linear map can recover whatever
task-relevant variables two systems happen to share. But this does not explain
why the shared component should be large, hierarchically ordered,
performance-dependent, or accompanied by privileged-axis structure. The
contravariance claim is stronger: sufficiently hard tasks restrict the space of
acceptable solutions, so that independently learned systems are forced toward similar representations; under the minimality conditions below, this weak similarity can become strong and layerwise-organized similarity.

To understand the situation in greater detail, it is useful to consider a recently-observed phenomenon of DNNs called ``zippering'' (Fig.~\ref{fig:zippering}A). As first described in \citet{thobani2025iatc}, the core zippering finding is that for multiple AlexNet-like networks optimized from different random seeds for ImageNet categorization, the resulting networks are quite close to being linearly aligned at each pre-nonlinearity layer (light green bars in Fig.~\ref{fig:zippering}A). But then at each post-nonlinearity layer, the networks come apart (dark green bars in Fig.~\ref{fig:zippering}A), exhibiting much lower linear similarity. This leads to a characteristic ``zippering'' pattern where the networks repeatedly come back together and diverge in representation.  

Using the weak-strong equivalence theorems, we can now interpret the post-nonlinearity divergence phenomenon as the nonlinear gates being used and minimal for the task (for if the post-nonlinear representations were perfectly linearly aligned, the gates would have had to be irrelevant). Then, the next layer of affine operations in each network seed suppress the seed-specific variability and cause re-convergence.  Of course, since full-rank affine operations cannot change the linear span of a representation, the fact of re-convergence means that the affine operations are not fully well-conditioned, meaning that task-irrelevant dimensions have been pushed into seed-specific dimensions crossed by the used gates at the previous layer, and then are ``squeezed out'' by the re-convergent affine operations at the next layer. Zippering is the evidence of networks putting seed-independent common task-useful information into shared privileged axes at each layer.  But what leads to the repeated re-convergence in the first place?  The weak-strong equivalence mechanism does not answer this.

It turns out that we can prove a much stronger result extending Weak-Strong equivalence upstream. Essentially, we can show that, under the proper conditions, strong equivalence at the terminal end of a hierarchical network causes weak equivalence at the upstream layer, which in turn triggers strong equivalence, and this process repeats iteratively upstream. The intuition behind this zippering mechanism is a generalization of the intuition behind the weak-strong equivalence mechanism: if two networks fail to be linearly equivalent at an upstream stage, and their nonlinearities are necessary for the task, the disagreement will be amplified by the next nonlinear stage, and then by the next, eventually preventing the two networks from solving the same terminal task.  

Proving this stronger result requires much more powerful technical tools (making up the bulk of the appendix section), as well as a few additional important conditions beyond those needed for weak-strong equivalence. First, the usedness definition from above is not quite enough for zippering. At a backward zippering step, the hidden scalar arguments $z^N_{r,j}(x)$ are functions of the original input, not merely affine coordinates in the layer-$r$ representation.  To state the zippering results, we upgrade from individual gate-wise usedness condition to a within-layer \emph{minimality} condition:  

\begin{definition}[Layer-wise minimality, informal.]
A network layer is \emph{minimal} if the functions in its distinct units are identifiably separatable from each other. (This informal definition is made rigorous in Appendix Definitions~\ref{def:sp-minimal} and~\ref{def:relu-eff}.)
\end{definition}

This notion of minimality and the used-axis notion of task hardness are chosen, rather than global measures such as Kolmogorov complexity (which is uncomputable) or representation entropy, because they directly constrain the layerwise solution space relevant to contravariance: minimality removes redundant gates, while hardness forces many of the remaining gates to be used, leaving little room for distinct solutions, consistent with the original intuition of \citet{cao2024contravariance}.

Another important issue for zippering is the concept of \emph{null networks} (see \cite{vlavcic2021affine} and \citet{fefferman1993recovering}) -- that is, networks that have multiple nontrivial layers, each of which is not itself the identity function, but whose total end-to-end effect is to compute the identity.  The existence of null networks would seem to impede the zippering theorem.  For one might imagine inserting nontrivial identity blocks at different depths within a network, such that the end layer was the same for both but which as a pair clearly are not axis-aligned at each layer.  The technical obstruction is a \emph{one-step null pair}: two one-step expansions of the form
\[
 b+\sum_jv_j\varphi(g_j)
 =
 \widetilde b+\sum_i\widetilde v_i\varphi(\widetilde g_i)
\]
that represent the same next-layer function but use different hidden arguments (where $g_i, g_j$ are the concomitant functions of the previous layers).  If such null pairs were common, terminal equivalence would not determine the previous layer. Results in Appendix \S\ref{sec:genericity} show that in minimal networks null pairs of most types in are ruled out entirely, and in the remaining types, are extremely rare: specifically, they lie on a low-dimensional ``exceptional set'' of parameters that is measure zero in the space of all network parameters.\footnote{Appendix \S\ref{sec:null_atlas} gives a catalog of possible counterexample concepts for zippering -- essentially, a catalog of null network situations -- and shows how they are either ruled out by minimality or lie on a low-dimensional space.}
That is, we are able to prove that minimal networks are ``generically layer-wise null-trivial'' (the set of minimal nontrivial null networks are measure-zero in the space of possible parameters at each layer). By Appendix Lemma~\ref{lem:collision-dimension}, Theorem~\ref{thm:generic-null-common}, and Corollaries~\ref{cor:generic-null-specializations}--\ref{cor:generic-exact-zip-null}, any nontrivial failure mode that remains is confined to a lower-dimensional exceptional set rather than an open family of solutions; Appendix Theorems~\ref{thm:common-asymp-zip}--\ref{thm:common-soft-zip} give the corresponding quantitative/soft version.

With these ideas in mind, we can state the zippering theorem:

\begin{theorem}[Exact zippering, informal]
\label{thm:main-exact-zip}
Let $A$ and $B$ be same-depth networks using either Softplus or ReLU. Suppose they are weakly equivalent at a terminal layer $s$. Suppose further that at each backward step from $s-1$ down to $q$, the relevant one-step expansions are minimal. Then, generically (that is, except on an exceptional set of network parameters of zero measure), alignment zippers upstream: for every $r=q,\ldots,s-1$, the layer-$r$ representations are strongly aligned.
\end{theorem}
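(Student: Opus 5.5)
The plan is a backward induction on $r$, from $s-1$ down to $q$. Each step turns weak equivalence at layer $r+1$ into weak equivalence at layer $r$. Theorem~\ref{thm:main-exact-weak-strong} then converts each adjacent pair of weakly equivalent layers into strong alignment.

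The inductive hypothesis at step $r$ is that $z^B_{r+1}(x)=E_{r+1}z^A_{r+1}(x)$ on $\Omega$ for an injective affine $E_{r+1}$. The base case $r+1=s$ is given. Expanding both sides one layer back gives an identity of functions of the input,
\[
 E_{r+1}\Bigl(b^A_{r+1}+\sum_j W^A_{r+1}[:,j]\,\varphi\bigl(z^A_{r,j}(x)\bigr)\Bigr)
 = b^B_{r+1}+\sum_i W^B_{r+1}[:,i]\,\varphi\bigl(z^B_{r,i}(x)\bigr).
\]
Reading this coordinatewise, and using injectivity of $E_{r+1}$, gives a pair of one-step expansions $b+\sum_j v_j\varphi(g_j)=\widetilde b+\sum_i\widetilde v_i\varphi(\widetilde g_i)$ with $g_j=z^A_{r,j}$ and $\widetilde g_i=z^B_{r,i}$.

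The core step is an identifiability lemma for this identity. Under layer-wise minimality (Definitions~\ref{def:sp-minimal} and~\ref{def:relu-eff}), and off the exceptional set, the two expansions must agree term by term. Concretely, there is a bijection $\pi$ with $\widetilde g_{\pi(j)}=\alpha_j g_j+c_j$. For Softplus, $\alpha_j=1$ and $c_j=0$, possibly after absorbing a sign flip through the identity $\sigma(t)-\sigma(-t)=t$ into the affine part. For ReLU, $\alpha_j>0$ and $c_j=0$, since the kink locus fixes the zero set and the one-sided slopes fix the positive scale.

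To prove this lemma, I would first group terms whose arguments are affinely related, which is exactly the situation minimality controls. I would then show that any leftover nontrivial relation among finitely many distinct $\varphi(g)$'s and affine terms is a one-step null pair. I would invoke Lemma~\ref{lem:collision-dimension} and Theorem~\ref{thm:generic-null-common}, with Corollaries~\ref{cor:generic-null-specializations}--\ref{cor:generic-exact-zip-null}, to place such pairs in a measure-zero parameter set. For ReLU, the argument is the kink-trace argument of Theorem~\ref{thm:main-exact-weak-strong}, lifted to input space: distinct hyperplane-like nonsmooth loci cannot cancel. For Softplus, I would use analyticity. Distinct non-affinely-related arguments give $\sigma(g_j)$ that are linearly independent modulo affine functions, which can be seen from the singularities at $g=i\pi(2k+1)$ after complexification.

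The term-matching then defines $E_r(z)_{\pi(j)}=\alpha_j z_j+c_j$, a permutation-scaling map. It is automatically injective, and it gives $z^B_r=E_r z^A_r$ on $\Omega$, which is weak equivalence at layer $r$. It is also already strong alignment at layer $r$. As a consistency check, Theorem~\ref{thm:main-exact-weak-strong} applied to the pair $(r,r+1)$ gives the same conclusion. Iterating down to $q$ completes the proof. The exceptional set is the finite union of the per-step sets, so it is still measure zero.

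I expect the identifiability lemma to be the main obstacle. Two things make it hard. First, the arguments $g_j$ are nonlinear functions of the input rather than free coordinates, so hidden coincidences between upstream functions could produce null pairs. This is why genericity, and not minimality alone, is needed. Second, the exceptional set must be controlled uniformly through the induction, because $E_{r+1}$ depends on the parameters. I would try first to show that the collision condition is a nontrivial analytic (or, for ReLU, semialgebraic) equation in the layer-$r$ parameters for fixed upstream parameters. Fubini would then give measure zero overall.
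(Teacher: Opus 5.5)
Your skeleton matches the paper's (Theorem~\ref{thm:abstract-exact-zip} and Corollary~\ref{cor:generic-exact-zip-null}): backward induction, a one-step null pair at each step, a term matching that defines a permutation (Softplus) or permutation-plus-positive-scaling (ReLU) map $E_r$, and a finite union of exceptional sets. The gap is the identifiability lemma, which carries all the weight; the mechanisms you propose for it do not prove it.

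Complexified poles for Softplus and lifted kinks for ReLU give only cross-matching (Lemmas~\ref{lem:sp-crossmatch} and~\ref{lem:relu-crossmatch}): a private pole patch or fresh facet of $g_j$ is carried by some $\widetilde g_i$. A deep argument is not determined by that locus. Your claim that distinct, non-affinely-related arguments make the $\sigma(g_j)$ independent modulo affine functions is false:
\begin{itemize}
\item For $\widehat t=a+\sigma(t-a+\sigma(a))$ one has $1+e^{\widehat t}=e^{\sigma(a)}(1+e^{t})$. The pole divisors coincide and $\sigma(\widehat t)-\sigma(t)=\sigma(a)$ is constant, although $\widehat t\neq\pm t$.
\item The diamond identity $\sigma(\eta-\sigma(t))+\sigma(\eta-\sigma(\kappa-t))=\kappa$, with $\kappa=\sigma(\eta)$, yields same-depth networks with equal outputs whose layer arguments are not matched.
\item For ReLU, $h_\lambda(x)=x-\lambda\rho(-x-1)$ has $\rho(h_\lambda)=\rho(x)$ and the same zero facet. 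Different facets of one $g_j$ can also be explained by different competitors (split matching).
\item The one-sided slopes fix only the jump $v_ja_j^\top$. That jump is invariant under $(v,g)\mapsto(\alpha^{-1}v,\alpha g)$ and identical for $g$ and $-g$, so it pins down neither scale nor sign.
\end{itemize}
The missing idea is the regularity layer that upgrades a shared signature to an identified argument. For Softplus this is MLR: torsion-free characters; primitive exponents, so $1+cT^m$ is prime in the Laurent ring; and divisor-faithfulness, so a shared divisor forces associates and hence $\widetilde g=\pm g$; readouts are then matched by monodromy. For ReLU it is coherent MFR: every marked facet of $g_j$ goes to one competitor, whose multi-cell signature fixes it up to sign and positive scale. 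Relatedly, a sign flip cannot be ``absorbed into the affine part''. Here $g_j=z^A_{r,j}$ is nonlinear in the input and the expansion has only a constant bias. Flips must instead be excluded by sign-noncancellation, since the flipped terms leave $b-\widetilde b+\sum_{j\in S_-}v_jg_j=0$.

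Citing Theorem~\ref{thm:generic-null-common} does not close the gap. That theorem assumes Jacobian regularity on the chart and general position (Definition~\ref{def:csnsgp}). General position requires every full-dimensional off-diagonal collision stratum to contain a point where the MLR/MFR identifiability theorem applies. So the generic statement rests on exactly the pointwise step you have not supplied, and citing Corollary~\ref{cor:generic-exact-zip-null} is citing the result itself.

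Your Fubini fallback is also mis-posed. Exact terminal weak equivalence already confines $(A,B)$ to a non-generic set of pairs, so a null set of pairs says nothing. The meaningful exceptional set lives in one network's chart: it is the projection $\pi_1(\mathcal C_r^\varphi\setminus\Delta_r^\varphi)$ of parameters admitting some nontrivial same-realization partner. That existential set is not the zero set of a single analytic equation. The paper controls it by bounding the dimension of collision strata ($\dim S\le d_r^\Theta$ from the immersion property, strict inequality from general position), then projecting, since Lipschitz maps do not raise Hausdorff dimension. Defining the exceptional set this way also dissolves your worry that $E_{r+1}$ depends on the parameters: outside it, every partner with the same realization is trivially matched, whatever $E_{r+1}$ is.

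With minimality plus MLR or coherent MFR at each backward step, or with the two null-net hypotheses, your induction goes through as in the paper.
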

\noindent The formal statement and proof of the zippering theorem is given in Appendix Theorem~\ref{thm:abstract-exact-zip}.  In essence, its proof is a backward induction: expand the common target at layer $r+1$ in two ways, use layer-wise null-triviality to match the arguments at layer $r$, and continue.

We note that the same proof also applies when the two networks have \emph{different depths or
widths}, as long as one compares corresponding one-layer steps and the counted
axes at each step can be matched; see Remark~\ref{rem:unequal-depths-widths} following Appendix Theorem~\ref{thm:abstract-exact-zip}.
Appendix~\S\ref{sec:depth-warped-block-weak-strong} also proves a stronger depth-warped version:
a layer's preferred axes need not match one fixed layer in the other network, but may be distributed
across a corresponding window of nearby layers.

As with Theorem \ref{thm:main-exact-weak-strong}, the conditions of Theorem \ref{thm:main-exact-zip} are idealized.  In real-world networks, terminal equivalence is only approximate. We thus also prove a soft version of the zippering theorem, which allows there to be some error in the terminal-layer equivalence, and then computes a bound on the upstream propagation of this error.  Establishing this bound requires making one more assumption on the network: that is, the derivative matrices (the Jacobians) of the network operations are \emph{regular}, e.g. that the singular values of its transition matrices are nonzero. This is a well-trodden condition in the literature~\citep{pennington2017resurrecting}, and we prove that it is generic (that is, it only fails on a measure-zero set of exceptional network parameters).  With Jacobian regularity, we are able to establish upper-bound constants $L_\ell$ on how badly error at a downstream layer $\ell+1$ can propagate to error at layer $\ell$, e.g. 
$$\|z_\ell^B - E_\ell z_\ell^A\| \leq L_\ell \|z_{\ell+1}^B - E_{\ell+1} z_{\ell+1}^A\|. $$
\noindent We can then prove:
\begin{theorem}[Soft zippering, informal]
\label{thm:main-soft-zip}
For minimal networks, under the Jacobian regularity assumption (which is generically true), if the terminal layer-$s$ mismatch error $e_s$ is small enough, then
\begin{equation}
\AxisAlign_{r}^{\theta}(A,B)
\ge
1-
\frac{e_s^2\prod_{\ell=r}^{s-1}L_\ell^2}
{(1-\theta)^2 d_r}.
\label{eq:main-soft-zip}
\end{equation}
Appendix Theorems~\ref{thm:common-asymp-zip} and~\ref{thm:common-soft-zip} give the exact statements of this result.  
\end{theorem}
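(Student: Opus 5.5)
The plan is to combine three ingredients already set up in the paper: backward propagation of the weak-alignment error from layer $s$ to layer $r$ via the Jacobian-regularity constants $L_\ell$, the asymptotic weak--strong bound of Theorem~\ref{thm:main-asymp-weak-strong} (in the form of Appendix Theorem~\ref{thm:common-asymptotic}) applied at layer $r$, and the exact zippering Theorem~\ref{thm:main-exact-zip}, which supplies the comparison maps $E_r$ in the zero-error limit. First, I fix the terminal comparison map $E_s$ with mismatch $e_s=\omega_s(E_s;A,B)$. I then construct upstream maps $E_{s-1},\dots,E_r$ one step at a time. At step $\ell$, the target $z^B_{\ell+1}\approx E_{\ell+1}z^A_{\ell+1}$ has two one-step expansions through layer $\ell$. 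By minimality and generic null-triviality (Lemma~\ref{lem:collision-dimension}, Theorem~\ref{thm:generic-null-common}), the exact expansions match argument-by-argument. By Jacobian regularity, the map from hidden arguments to the next-layer function is locally a submersion with a lower-bounded smallest singular value on the compact parameter set. An implicit-function or quantitative-inverse argument then gives
\[
\omega_\ell(E_\ell;A,B)\le L_\ell\,\omega_{\ell+1}(E_{\ell+1};A,B).
\]
The hypothesis that $e_s$ is small enough keeps each step inside the neighbourhood where this local inverse is valid. Iterating gives $\omega_r\le e_s\prod_{\ell=r}^{s-1}L_\ell$.

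Second, I convert weak error at layer $r$ into axis alignment. Because the exact step matches arguments coordinatewise (up to permutation and, for ReLU, scale), the perturbed $E_r$ is a perturbation of a signed permutation. For each $A$-axis $j$ whose normalized match accuracy falls below $\theta$, the residual coordinate $(z^B_r-E_rz^A_r)_{\pi(j)}$ must carry normalized energy at least $(1-\theta)$ times that axis's variability. I normalize so that each axis has unit variability. Summing over the bad axes then gives
\[
\#\{\text{bad }j\}\,(1-\theta)^2\le\omega_r^2,
\]
and dividing by $d_r$ yields
\[
\AxisAlign^\theta_r\ge 1-\frac{e_s^2\prod_{\ell=r}^{s-1} L_\ell^2}{(1-\theta)^2 d_r}.
\]
The leading term is $1$ rather than $m_r/d_r$ because minimality makes all counted axes used. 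This is a Markov/Chebyshev counting step, mirroring the argument for Theorem~\ref{thm:main-asymp-weak-strong}, with the constant $\kappa$ replaced by $1-\theta$ through the normalization.

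The main obstacle is the first step: getting a uniform Lipschitz constant $L_\ell$ for the inverse of the one-step expansion map, modulo permutations. Two things are needed. The inverse must be well defined, which requires that no nearby distinct expansion exists; that is, null pairs must be isolated, not merely measure zero. The inverse must also have a bounded norm. I would first establish isolation by a compactness argument on a parameter set bounded by $M$ that avoids an open neighbourhood of the exceptional set, reducing to contradiction with the exact theorem as in the proof sketch of Theorem~\ref{thm:main-asymp-weak-strong}. I would then read off $L_\ell$ as the reciprocal of the minimal singular value of the Jacobian of $(b,v,g)\mapsto b+\sum_j v_j\varphi(g_j)$ in $L^2(\Omega)$, restricted to a slice transverse to the permutation/scaling orbit.
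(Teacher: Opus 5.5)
Your overall route is the paper's. It is a backward induction in which each step uses a quantitative one-step inverse: compactness plus exact identifiability to localize, then the lower singular value of the realization derivative for the linear estimate. The smallness hypothesis keeps each step inside the local radius, and a Markov-type count at layer $r$ finishes. You never actually need Theorem~\ref{thm:main-asymp-weak-strong}; applied at layer $r$ it would give the $\kappa$-form bound with leading term $m_r/d_r$, not the stated one.

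The step that fails as written is the final conversion. You call $E_r$ ``a perturbation of a signed permutation'' and then charge each bad axis to the residual coordinate $(z^B_r-E_rz^A_r)_{\pi(j)}$. The zippering score, however, compares $z^B_{r,\pi(j)}$ with $\alpha z^A_{r,j}$ alone. If $E_r(z)=T_rz+a_r$ has off-diagonal entries or an offset, the residual also contains the cross terms $\sum_{k\neq j}(T_r)_{\pi(j)k}z^A_{r,k}$ and the constant $a_{r,\pi(j)}$, and these can cancel a large pairwise discrepancy. So a small $\omega_r(E_r)$ does not bound the number of bad pairs unless you separately show those terms are $O(e_s)$; this is exactly the weak/strong gap the theorem is meant to close.

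The repair is what the paper does: measure error by the matched-argument distance $d_{\arg,\varphi}(G,\widetilde G;E)=\|\widetilde g-Eg\|_{L^2}$, with $E$ restricted to the allowed matchings. These are $E=P$ for Softplus and $E=DP$ with $D$ positive diagonal for ReLU. They contain no signs, which sign-noncancellation excludes and which the score (with $\alpha=1$ or $\alpha>0$) would count as failures anyway. Your own chart argument already gives this. With the hidden-unit order fixed on a slice, the inverse estimate bounds the coordinate difference $h$, and hence $\|\widetilde g-Pg\|_{L^2}$, directly. So $E_r$ is exactly $P$ (or $DP$), each residual coordinate is literally $z^B_{r,\pi(j)}-\alpha_jz^A_{r,j}$, and your count goes through.

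Three smaller corrections.

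\emph{Immersion, not submersion.} Jacobian regularity, $\lambda_{\min}(J_r^\varphi)>0$, makes the realization map an immersion; a finite-dimensional chart cannot submerse onto $L^2(\Omega)$. Injectivity of the derivative is what the estimate $\|h\|\le(2/\sigma_r)\|F_r^\varphi(\vartheta_0+h)-F_r^\varphi(\vartheta_0)\|_{L^2}$ uses. The resulting constant is $L_r=2C_r^{\arg}/\sigma_r$ rather than $1/\sigma_r$: the factor $2$ comes from the Taylor remainder and $C_r^{\arg}$ from the chart-to-argument map.

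\emph{No normalization for Softplus.} You cannot normalize axes to unit size for Softplus, which has no scale gauge. The floor on each bad axis comes instead from the quantitative nonconstancy margin $\|z^A_{r,j}\|_{L^2}\ge c$. This is why the formal bound in Theorem~\ref{thm:common-soft-zip} has $(1-\theta)^2c^2d_r$ in the denominator, with $c=1$ only for normalized ReLU.

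\emph{Localization hypotheses.} A parameter bound $M$ is not enough. You need the $c$-margins of quantitative minimality on both sides (Lemma~\ref{lem:quant-min-compact}), so that the limiting exact null pair is still minimal and Corollary~\ref{thm:pointwise-null-common} applies. You should also require only that the compact family contain no nontrivial exact null pair, not that it avoid an open neighbourhood of the exceptional set. That set is only shown to have Hausdorff dimension at most $d_r^\Theta-1$. The integer-resonance sets behind MLR failure are countable unions of proper affine subsets and can be dense, so such a neighbourhood may exhaust the chart.
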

\noindent In words, small terminal error forces a large fraction of upstream axes to align. As terminal weak error goes to zero, the fraction of badly aligned upstream axes goes to zero. Of course, the failure of axis alignment is worse when the original terminal-layer error $e_s$ is larger, or when the demanded amount of qualitative alignment $\theta$ is stricter (close to 1).

While Theorems \ref{thm:main-exact-zip} and \ref{thm:main-soft-zip} are powerful results, to apply to real-world cases, they rely on some important assumptions: first, that the networks (artificial and biological) involved be minimal, and secondly that the dynamics of learning do not disrupt genericity (e.g. that learning does not cause the parameters to concentrate on the ``exceptional set'' where zippering fails). Evaluating the correctness of these assumptions is a key point for future work.

\textbf{Key implications for NeuroAI:} Modulo the biological correctness of minimality and learning dynamics assumptions, the major NeuroAI results of the past decade and a half are not accidental.  Rather, {\bfseries\itshape\uline{convergent evolution between artificial networks and real brains is likely to be, for minimal solutions to sufficiently hard tasks, mathematically inevitable.}}

\section{Weak-Strong Equivalence for RSA and CKA}

Representational Similarity Analysis (RSA) is one of the most common metrics used to compare neural networks to brain data~\citep{kriegeskorte2008}. RSA computes stimulus-by-stimulus correlation matrices (known as Representational Similarity Matrices, or RSMs) for each feature representation, and then compares these matrices, typically using Spearman or Pearson correlation.  RSA is in some sense a stricter metric than linear similarity (see \cite{thobani2025iatc} for a thorough discussion). Given, however, that we have shown that apparently weaker linear similarity implies apparently much stronger privileged axis alignment under suitable task difficulty conditions, what can we say about RSA under the same circumstances?  

The key obstruction to weak-strong equivalence for RSA is the existence of \emph{task-irrelevant feature symmetries}: operations on feature sets that leave the privileged axes fixed but change RSMs.  An example of such a symmetry is \emph{duplication}. Our results on privileged axes imply that the set of privileged axes can be mapped to each other, but do not guarantee in the most general case that the number of copies of these axes present in the representation are the same. This presents no problem for linear equivalence, which can discount multiplicity by reweighting. But RSA is sensitive to multiplicity, so that two representations with the same set of privileged axes, but with different multiplicities of these axes, can have arbitrarily different RDMs.  Thus, high axis alignment (strong alignment), or high linear similarity (weak alignment) do not guarantee of RSA similarity, even under the hard-task high-$\gbudget$ condition.  

What is needed to bring RSA ``into the fold,'' so to speak, is some control over the task-irrelevant symmetries that affect RSA -- for example, a non-redundancy criterion of some kind.  A version of this idea has recently been explored in the recent work of \cite{theiss2026parameter}.  In Appendix \S\ref{sec:rsa-weak-strong}, we show how to synthesize the Theiss results with the weak-strong equivalence framework.  We find that: 
\begin{itemize}
      \item RSMs can be decomposed into a sum of two terms: a unique \emph{core} RSM, associated with a core task-relevant geometry, and a residual ``symmetry-generated'' term that is task-irrelevant.
      \item The core RSM geometry satisfies Weak-Strong Equivalence, while the task-irrelevant portion can completely dominate the raw RSA comparison, leading to the appearance of arbitrarily different RSMs between functionally equivalent networks. 
      \item It is possible remove the task-irrelevant component via several different reasonable ``canonicalization'' procedures, and there is a simple positive relationship between canonicalized RSA to Ridge regression results.
     \item The kind of biased spatial biased sampling often seen in neuroscience experiments mimics the effect of adding task-irrelevant symmetries, and can make the Ridge-RSA relationship appear noisier than it really is.
  \end{itemize}

Another popular metric for model-brain comparison is the linear version of Centered Kernel Analysis (CKA)~\citep{Kornblith2019CKA}. CKA is similar to RSA, except that instead of raw feature comparison, it compares centered Gram geometries. Like raw RSA, raw CKA is sensitive to unequal axis scaling, duplication, redundant features, and biased measurement. Appendix~\S\ref{sec:cka-weak-strong} proves analogous results for linear CKA as obtained for RSA. Appendix~\S\ref{sec:rsa-cka-relationship} further shows that near-perfect linear CKA guarantees a correspondingly good scaled Procrustes alignment, thereby converting high CKA into the weak affine alignment used by our main theorems---consistent with the similar empirical behavior of CKA and Procrustes observed by \citet{bo2024functional}. It also explicates the RSA--CKA relationship in a general context, illustrating divergences that are small on the unique core geometry but exacerbated by task-irrelevant symmetries.

These results contextualize some recent empirical results. \citet{soni2024metricchoice} show that while Ridge, raw RSA, and raw CKA are reasonably correlated across a model zoo ($r$-values of 0.73 for Ridge-RSA, 0.67 for Ridge-CKA, and 0.64 for RSA-CKA), these relationships are not perfect.  Our results show these divergences could be caused either by actual task-irrelevant symmetry-driven feature differences, or by the confounding effects of non-uniform neural sampling in the target brain data. These are, of course, very conceptually different possibilities, and resolving this issue is an interesting question for future work.

Our results also have an implication for which metrics should be correlated with behavior under different model-constraint regimes.  Ridge is weak-strong equivalent, while raw RSA/CKA are not invariant to task-irrelevant symmetries. This suggests that, in the task-constrained regime, Ridge should be \emph{more} correlated with downstream tasks and behavioral patterns than RSA/CKA. 
Verifying this empirically is an interesting question for future work. 
It is instructive to contrast this prediction with the findings of \citet{bo2024functional}, where it is shown that RSA/CKA correlate better with behavioral metrics than ridge. 
In that work, the model zoo contains many networks with random filters and low performance levels, which lie outside the high-task-performance regime needed to trigger contravariance. 
RSA/CKA is thus more sensitive than ridge to feature differences created within a single architecture by filter randomization in the low-performance regime, a sensitivity that results in variability under task-orthogonal feature changes in the performance-constrained regime.

\textbf{Key implications for NeuroAI:} RSA and CKA are sensitive to task-irrelevant factors, but can be corrected to reveal that {\bfseries\itshape\uline{there is a core task-relevant representational geometry, which obeys the same kind of hard-task weak-strong equivalence as privileged axes}}.

\section{Contravariance for Transformers}
Transformers are the dominant microcircuit architecture of AI, due to their robust learning and generalization capacities.  How they might relate to neuroscience is an open question — but regardless of the resolution of that thorny topic, any theory of the Contravariance Principle in DNNs would be incomplete without a treatment of the transformer circuit.  But are there even ``strong objects'' in transformers in the first place? That is to say, are there components of the computation for which weak alignment of the surrounding computations forces that component itself to match? Some recent work \cite{Kapoor2025Bridging} has suggested perhaps not. 
  
In Appendix \S\ref{sec:transformer-weak-strong}, we do a deep dive into contravariance theorems for transformer circuits. 
To understand our findings, it is useful to distinguish three “streams” (sometimes called “branches”) within the transformer circuit (see Fig. \ref{fig:transformer-architecture-ws}):
\begin{enumerate}
\item the \textcolor{red}{attention stream} (\textcolor{red}{red} in Fig. \ref{fig:transformer-architecture-ws}) is the portion of the transformer circuit in which inputs are mixed using a learnable pairwise multiplicative operation,
\item the \textcolor{green}{MLP stream} (\textcolor{green}{green}) is the portion in which inputs are mixed using a short linear-nonlinear cascade,
\item the \textcolor{blue}{residual stream} (\textcolor{blue}{blue}) of the transformer combines the MLP and attention stream additively and passes forward to the next layer.
\end{enumerate}

\begin{figure}[t]
\centering
\resizebox{\textwidth}{!}{%
\begin{tikzpicture}[
  x=1cm,y=1cm,
  >=Latex,
  font=\small,
  residual/.style={draw=blue!55!black, fill=blue!10, text=blue!55!black},
  attncol/.style={draw=brown!65!black, fill=brown!10, text=brown!65!black},
  mlpcol/.style={draw=green!45!black, fill=green!10, text=green!45!black},
  neutral/.style={draw=black!55, fill=gray!8, text=black},
  res/.style={
    residual, rounded corners, thick,
    minimum width=1.15cm, minimum height=.72cm,
    align=center
  },
  op/.style={
    neutral, rounded corners, thick,
    minimum width=.90cm, minimum height=.62cm,
    align=center
  },
  plus/.style={
    residual, circle, thick,
    minimum size=.58cm,
    inner sep=0pt,
    align=center
  },
  attnbranch/.style={
    attncol, rounded corners, thick,
    minimum width=2.00cm, minimum height=.86cm,
    align=center
  },
  mlpbranch/.style={
    mlpcol, rounded corners, thick,
    minimum width=2.00cm, minimum height=.86cm,
    align=center
  },
  attnsig/.style={
    attncol, rounded corners, thick,
    minimum width=2.85cm, minimum height=.90cm,
    align=center, font=\scriptsize,
    inner sep=3pt
  },
  mlpsig/.style={
    mlpcol, rounded corners, thick,
    minimum width=2.85cm, minimum height=.90cm,
    align=center, font=\scriptsize,
    inner sep=3pt
  },
  resnote/.style={
    residual, rounded corners,
    text width=3.70cm,
    align=center, font=\scriptsize,
    inner sep=4pt
  },
  mlpnote/.style={
    mlpcol, rounded corners,
    text width=3.70cm,
    align=center, font=\scriptsize,
    inner sep=4pt
  },
  attnnote/.style={
    attncol, rounded corners,
    text width=3.70cm,
    align=center, font=\scriptsize,
    inner sep=4pt
  },
  flow/.style={->, thick, draw=black!55},
  resflow/.style={->, thick, draw=blue!55!black},
  attnflow/.style={->, thick, draw=brown!65!black},
  mlpflow/.style={->, thick, draw=green!45!black},
  weak/.style={->, thick, densely dashed, draw=blue!55!black},
  attnstrong/.style={<->, very thick, draw=brown!65!black},
  mlpstrong/.style={<->, very thick, draw=green!45!black}
]

\def\yA{3.55}
\def\yB{0.35}

\node[font=\bfseries] at (-0.75,\yA) {A};
\node[res]        (Ax0)   at (0.00,\yA) {$x^A_\ell$};
\node[op]         (Aln1)  at (1.45,\yA) {LN};
\node[attnbranch] (Aattn) at (3.25,\yA) {attention\\heads};
\node[plus]       (Aadd1) at (5.10,\yA) {$+$};
\node[op]         (Aln2)  at (6.40,\yA) {LN};
\node[mlpbranch]  (Amlp)  at (8.25,\yA) {MLP\\branch};
\node[plus]       (Aadd2) at (10.10,\yA) {$+$};
\node[res]        (Ax1)   at (11.65,\yA) {$x^A_{\ell+1}$};

\draw[flow] (Ax0) -- (Aln1);
\draw[flow] (Aln1) -- (Aattn);
\draw[flow] (Aattn) -- (Aadd1);
\draw[flow] (Aadd1) -- (Aln2);
\draw[flow] (Aln2) -- (Amlp);
\draw[flow] (Amlp) -- (Aadd2);
\draw[flow] (Aadd2) -- (Ax1);
\draw[resflow] (Ax0.north) to[out=35,in=145] (Aadd1.north);
\draw[resflow] (Aadd1.north) to[out=35,in=145] (Aadd2.north);

\node[font=\bfseries] at (-0.75,\yB) {B};
\node[res]        (Bx0)   at (0.00,\yB) {$x^B_\ell$};
\node[op]         (Bln1)  at (1.45,\yB) {LN};
\node[attnbranch] (Battn) at (3.25,\yB) {attention\\heads};
\node[plus]       (Badd1) at (5.10,\yB) {$+$};
\node[op]         (Bln2)  at (6.40,\yB) {LN};
\node[mlpbranch]  (Bmlp)  at (8.25,\yB) {MLP\\branch};
\node[plus]       (Badd2) at (10.10,\yB) {$+$};
\node[res]        (Bx1)   at (11.65,\yB) {$x^B_{\ell+1}$};

\draw[flow] (Bx0) -- (Bln1);
\draw[flow] (Bln1) -- (Battn);
\draw[flow] (Battn) -- (Badd1);
\draw[flow] (Badd1) -- (Bln2);
\draw[flow] (Bln2) -- (Bmlp);
\draw[flow] (Bmlp) -- (Badd2);
\draw[flow] (Badd2) -- (Bx1);
\draw[resflow] (Bx0.south) to[out=-35,in=-145] (Badd1.south);
\draw[resflow] (Badd1.south) to[out=-35,in=-145] (Badd2.south);

\draw[weak] (Ax0.south) -- node[left,font=\scriptsize,align=center,text=blue!55!black] {$E_\ell$\\weak} (Bx0.north);
\draw[weak] (Aadd1.south) -- node[right,font=\scriptsize,align=center,xshift=2pt,text=blue!55!black] {$E_{\ell+\frac12}$\\weak} (Badd1.north);
\draw[weak] (Ax1.south) -- node[right,font=\scriptsize,align=center,text=blue!55!black] {$E_{\ell+1}$\\weak} (Bx1.north);

\node[attnsig] (attnsig) at (3.25,1.95)
  {attention objects\\[-1pt]
   $QK^\top$, attn. pattern\\[-1pt]
   $VW_O$, heads};

\node[mlpsig] (mlpsig) at (8.25,1.95)
  {MLP hidden axes\\[-1pt]
   $g^A_j \leftrightarrow g^B_i$\\[-1pt]
   weak--strong};

\draw[attnflow] (Aattn.south) -- (attnsig.north);
\draw[attnflow] (Battn.north) -- (attnsig.south);

\draw[mlpflow] (Amlp.south) -- (mlpsig.north);
\draw[mlpflow] (Bmlp.north) -- (mlpsig.south);

\draw[attnstrong]
  ([xshift=1.20cm]attnsig.south)
  --
  node[right,font=\scriptsize,align=left,text=brown!65!black] {match\\mod gauge}
  ([xshift=1.20cm]Battn.north);

\draw[mlpstrong]
  ([xshift=1.20cm]mlpsig.south)
  --
  node[right,font=\scriptsize,align=left,text=green!45!black] {axis\\match}
  ([xshift=1.20cm]Bmlp.north);

\node[resnote] at (1.85,-1.35)
  {\textbf{Residual stream}\\
   weak zippering via affine maps\\
   $E_\ell$, $E_{\ell+\frac12}$, $E_{\ell+1}$; no native-axis claim};

\node[mlpnote] at (5.85,-1.35)
  {\textbf{MLP stream}\\
   coordinatewise nonlinear hidden axes carry\\
   the usual signatures};

\node[attnnote] at (9.85,-1.35)
  {\textbf{Attention stream}\\
   identifiable structure is modulo\\
   $Q/K$, $V/O$, and head gauges};

\end{tikzpicture}
}
\caption{Transformer weak--strong structure in the updated residual-peeling formulation.  Residual states zipper weakly across depth via affine maps, but this does not privilege the native residual coordinates.  Inside each block, the MLP branch has ordinary coordinatewise-nonlinear hidden axes, while attention has task-visible heads and head signatures modulo its exact internal gauges.  Peeling the MLP and attention sublayers recovers the intermediate weak maps.}
\label{fig:transformer-architecture-ws}
\end{figure}

We find that weak-strong equivalence obtains for the MLP and attention branches within a transformer, but \emph{not} for the residual stream.  Weak alignment zippering does, however, obtain for the residual stream, and thus cascades through the whole network in a hierarchical transformer. Hence, there are privileged representational axes throughout transformer hierarchies, but they must be identified outside the residual stream. 
These results reframe the recent negative finding for privileged-axis results in ViTs described by \citet{Kapoor2025Bridging}: permutation or soft-matching tests on residual or CLS-token coordinates will miss the privileged structure when they look in the residual stream, but will find them in other circuit components.  

\textbf{Key implication for NeuroAI:} The different components of the transformer architecture behave somewhat differently to each other, but {\bfseries\itshape\uline{privileged axes and the core contravariance mechanisms apply to transformers in nearly the same way they do in simpler affine nonlinear networks}}. 

\section{Empirical Results from the ML Literature}

The theoretical results of this paper have many relationships for the extensive empirical work done on neural network feature representations in the machine learning community -- without reference, in specific, to neural data. Here we review these connections briefly, in vision and language models.  

\textbf{Vision.} Empirical work in vision broadly supports rough weak and strong alignment across networks trained on similar hard visual tasks, with an ordered hierarchy in which early layers tend to match early layers, middle layers match middle layers, and later layers match later layers\footnote{We note that apparent exceptions among top Brain-Score models~\citep{schrimpf2020integrative}---such as later layers being assigned to V2 than V4---are weak evidence against hierarchical correspondence, because the V1/V2 layer choices are selected on the small, specialized Freeman-Ziemba texture benchmark~\citep{freeman2013functional} and are highly sensitive to metric directionality, unit sampling, and averaging across heterogeneous model sets. This is an empirical problem of reliably identifying corresponding stages, orthogonal to our conditional theory once those stages are fixed, and motivates the emphasis in the NeuroAI Turing Test \citep{feather2025neuroai}, as well as our results herein, on target brain datasets with enough \emph{diverse} stimuli to support reliable noise ceilings.}.  Early work on convergent learning found that independently trained networks can learn individually similar features as well as shared low-dimensional subspaces with non-identical bases \citep{Li2015ConvergentLearning}. SVCCA and CKA made this more systematic: SVCCA showed bottom-up convergence during training and provided an affine-invariant way to compare layers \citep{Raghu2017SVCCA}, while CKA reliably identifies layer correspondences across initializations and different architectures \citep{Kornblith2019CKA}. Comparisons between CNNs and vision transformers complicate the picture but preserve the broad hierarchical story: ViTs and CNNs differ in how they propagate spatial information, yet still show structured layerwise relationships \citep{Raghu2021DoVisionTransformers}. Recent large-scale investigations using linear regression, Procrustes, permutation, and soft-matching metrics find that hierarchical correspondence persists even under stricter alignment notions, and that permutation/soft-matching scores are meaningfully above chance, consistent with both our asymptotic weak-strong equivalence and soft zippering results~\citep{KhoslaWilliams2023SoftMatching, Kapoor2025Bridging}.
In a complementary RNN setting, \citet{huang2025measuring} find that harder RNN tasks make the learned dynamics more similar across seeds even as the weights become more different. This is what our theory predicts: task difficulty constrains the task-relevant computation, but many different weight configurations can still implement that computation through permutations, rescalings, and redundant or unused units.

\textbf{LLMs.} The language-model literature gives a similar but somewhat messier picture: representations across different models often align by relative depth, but architecture family, pretraining objective, and fine-tuning task matter substantially. For contextual word representation models, Wu et al. found that different architectures can have similar full representations even when individual neurons differ, that same-family models are more similar than cross-family models, and that lower layers are generally more similar across architectures than higher layers \citep{Wu2020SimilarityContextual}. BERTology surveys likewise emphasize that transformer layers encode different kinds of information across depth rather than being interchangeable \citep{Rogers2020BERTology}. Fine-tuning studies show that adaptation is often conservative and primarily affects top layers, with variation across tasks \citep{Merchant2020BERTFineTuning}, while CKA analyses of fine-tuned RoBERTa and ALBERT find block-diagonal structure, with strong similarity within early-layer and late-layer clusters \citep{Phang2021FineTunedClusters}. More recent cross-architecture LLM work, with better architectures solving harder versions of the prediction tasks, is especially supportive of the hierarchical weak-alignment picture: across several dozen open-weight LLMs, layers at corresponding relative depths tend to induce similar activation geometries, with the whole progression ``stretched and squeezed'' across architectures with different numbers of layers \citep{WolframSchein2025SimilarDepths}. \textbf{Thus, for LLMs, weak alignment zippering appears common in the empirical literature.}

\section{Discussion}

The results of this paper suggest several interesting additional considerations. 
 
\textbf{Extension to other architectures.}  The main weak--strong and zippering theorems are stated for transparent affine-ReLU or affine-softplus blocks, because that is where the mechanism is easiest to illustrate. We also address (\S \ref{sec:transformer-weak-strong}) residual networks and transformers with softplus and ReLU MLP layers. However, the core ideas already transfer, or are probably likely to do so, to a wide variety of architectural motifs, including e.g. CNNs, RNNs, and other nonlinearity types, including eLU and GeLU nonlinearities.  See Appendix \ref{sec:extensions} for a detailed discussion of this issue. 

\textbf{Extension to other metrics.}
Appendices~\S\ref{sec:rsa-weak-strong}--\S\ref{sec:cka-weak-strong} extend the weak--strong framework to RSA and linear CKA, and relate high linear CKA to scaled Procrustes and ridge alignment. Future work should develop corresponding results for soft- and semi-matching, nonlinear-kernel comparison measures, and other metrics used in model--brain comparisons. An equally important empirical direction is to test whether canonicalization and controlled neural sampling reduce the metric disagreements observed in current benchmarks, as our theory indicates.

\textbf{Relationship to the Platonic Representation Hypothesis.} The Platonic Representation Hypothesis (PRH) of~\cite{huh2024platonic} is an intriguing recent conceptual proposal in the NeuroAI-adjacent literature.  The PRH is the idea that networks learned from quite different tasks in the same underlying ``world'' have surprisingly convergent internal representations.  For instance, it is shown empirically that networks trained on visual tasks and networks trained on text next-token prediction have nontrivial representational alignment that is not obviously explained by data, architecture, or loss objective similarity.  It is believed that the reason for this convergence is that both vision and text are just two different modes referring to the same underlying ``platonic'' reality; and that the structure of that reality is reflected in common network representations. The PRH is related to, but different from contravariance.  Contravariance says (essentially) that two different solutions to the same task have very highly aligned internal representations, whereas the PRH illustrates that there is a surprisingly high lower bound on the alignment between solutions to two \emph{different} task distributions generated on top of a common underlying generative model. Unsurprisingly, the PRH lower bound is much lower than the contravariance lower bound.  The PRH remains mostly empirically justified for now; but the success of formal methods for contravariance illustrated in this work suggest it may soon be possible to attack the PRH with a similar level of rigor. 

\textbf{What is the likely status of minimality in modern neural networks?} 
The used-axis budget \(m_\ell(\epsilon)\) counts layer-\(\ell\) nonlinear axes forced by the task, so it should be compared to the layer width \(d_\ell\), not to the total number of trainable parameters. Ordinary overparameterization therefore by no means implies that \(m_\ell\ll d_\ell\): a model can have many redundant parameterizations while still using most axes in a given representation layer. A useful empirical prior is that \(m_\ell/d_\ell\) is often not tiny, and may be moderate or high, in real task-trained networks. For example, ResNet’s channel and fully connected widths are modest relative to the breadth of ImageNet categorization, ViT-B-scale models use widths such as \(768\)-dimensional embeddings and \(3072\)-dimensional MLP layers for dense reusable visual features, and LLMs face a broad next-token prediction task that may require many residual-stream or MLP directions. Continued scaling gains in large language models point in the same direction. Scaling-law and Chinchilla-style analyses show that, over broad regimes, increasing model capacity together with data and compute continues to reduce held-out loss.  This suggests that additional capacity is still being converted into task-useful computation, rather than merely adding irrelevant axes everywhere.  Thus, the $m_\ell/d_\ell$ ratio is likely better understood as a proxy for test performance rather than model parameter count.  Of course, truly estimating these quantities is an endeavor that will require empirical probes such as layerwise bottlenecking, pruning or ablation curves, or width-restricted retraining.

\textbf{Future Work.} Internal to the logic of this work, a number of future directions beckon. Two of the more interesting are:

\begin{enumerate}
\item \textit{Extensive empirical comparisons.} Several constants appear in our results that are a function of architecture class only. We do not explicitly calculate them here, but it should be possible to do so, at least approximately, for standard architecture types. The results of these calculations could then be plugged into the soft alignment and zippering bounds (eqs. \ref{eq:main-asymptotic-bound} and \ref{eq:main-soft-zip}). Once such calculations have been done, it will then be natural to compare these theoretical estimates to empirical results from mapping DNNs to each other (as in \cite{thobani2025iatc}) or to brain data (as in \cite{khosla2024privileged}).

\item \textit{Understanding how learning dynamics affect genericity.}
The genericity results of this work say that the ``exceptional set'' of conditions where zippering fails occurs with probability 0, assuming that the distribution of optimized network parameters is uniform in the space of all parameters. But the distribution of parameters actually created by network optimization is far from uniform.
Do learning dynamics happen for some reason to be biased toward the exceptional set -- places where null-networks live or where the Jacobian is ill-conditioned? We see no immediate  reason to think this happens, but understanding the interaction between contravariance and learning dynamics more generally and deeply is an important domain of future work.
\end{enumerate}

\vspace{.05in}
\begin{definitionguide}{Overall Implications for NeuroAI Research.}
Our results tentatively suggest that, more broadly, the subfield of ``NeuroAI core methods'' could productively focus some effort on the following two broad issues:
\begin{itemize}
\item Measuring the {\bfseries\itshape\uline{hardness}} of real-world tasks. It would be of great interest to have clear, standardized, scales of task constraint difficulty that could be validated and estimated across multiple domains and tasks -- perhaps the gate budget concept we introduce is a place to start. 
\item Evaluating the empirical status of {\bfseries\itshape\uline{minimality}}, or departures therefrom, both in ``real-world'' artificial neural networks, and \emph{also in actual brains}. Estimating the extent to which such systems satisfy the kind of nondegeneracy conditions identified in this work is a crucial ingredient for creating a solid theoretical foundation for NeuroAI.  We speculate that minimality in real brains is an energetic efficiency that is evolutionarily advantageous. 
\end{itemize}

And, perhaps somewhat more controversially: 
\begin{itemize}
    \item We suggest that the field move away from the question of finding ``the right metric of brain-model comparison''; our results suggest this issue is a bit moot.
    \item \textbf{Take the experimental-design consequences of contravariance seriously!} By giving the ideas of \cite{cao2024contravariance} clear mathematical foundations, the present results justify the contravariance principle explanation for apparently convergent evolution between DNN models and real brains.  But the contravariance principle had a larger remit than just explaining existing results in NeuroAI: it gave a normative prescription for designing effective neuroscience experiments.  Specifically, contravariance suggests that instead of designing highly reduced experiments with simple versions of tasks that are easy to solve with hand-designed network architectures, instead {\bfseries\itshape\uline{neuroscientists should work with fairly complex, computationally hard, versions of tasks, since those are the ones that are likely to avoid spurious too-simple solutions and force convergent evolution between models and the real brain}}. Picking tasks involving natural stimuli that are within the ethological range of the organism, thereby requiring little training for the organism, is one heuristic for picking computationally difficult tasks.   The present results strongly reinforce that prescription.
\end{itemize}
\end{definitionguide}

\section*{Acknowledgements}
The authors have for some years known the statement and proof outline for the exact weak--strong theorem. They have also suspected that a zippering theorem had to be correct, both from intuition and empirical results.  However, it was only when recently exploring these ideas with ChatGPT-5.5-Pro that the key steps for the  zippering theorems became clear. It helped us quickly explore the situation for softplus networks, which we knew had similar empirical zippering behavior to ReLU, allowing us to build intuition to get to the full ReLU soft zippering theorem. It was here that ChatGPT truly shined: it identified the equivalent concept of minimality for softplus networks, as embodied in the Softplus minimality definition, an idea that might have eluded the (human) authors of this paper for some time. It also suggested the proof of the Softplus one-step rigidity lemma.  ChatGPT was again similarly helpful in formulating the idea that Jacobian regularity would be the right condition to support quantitative identifiability (the local quantitative one-step identifiability lemma). These ideas seem (to us humans) to be remarkably creative and incisive.  All final proofs were verified by the authors, who take full responsibility for the results herein. We thank Reece Keller, Ishan Kalburge, Marvin Theiss, Rosa Cao, Judy Fan, Meenakshi Khosla, Aude Maier, Leo Kozachkov, Dan Kunin, and Tony Zador for helpful feedback on an initial draft of the manuscript. A.N. thanks: the Burroughs Wellcome Fund (CASI award), Foresight Institute, and Protocol Labs for funding. D.L.K.Y. thanks: Simons Foundation grant 543061, National Science Foundation CAREER grant 1844724, Office of Naval Research grant S5122, ONR MURI00010802, ONR MURI S5847, and ONR MURI 1141386- 493027. 

\begin{center}
***

{\fontencoding{T1}\fontfamily{calligra}\selectfont 
S.D.G.}

***
\end{center}

\bibliographystyle{unsrtnat}
\bibliography{refs_integrated}

\appendix
\clearpage

\appendix
\section{Appendix}
\label{sec:appendix}

\noindent This appendix contains the formal definitions, statements, and complete proofs of the exact and asymptotic weak--strong equivalences (\S\S\ref{sec:common-setup}-\ref{sec:asymptotic-ws}), exact zippering theorem (\S\ref{sec:exact-zippering}), the asymptotic and soft zippering theorems (\S\ref{sec:soft-zippering}) and the genericity analysis (\S\ref{sec:genericity}).  We then show how the weak-strong equivalence ideas extend to networks of differnt depth through depth warping (\S\ref{sec:depth-warped-block-weak-strong}), and how weak-strong equivalence and zippering apply in transformer networks (\S\ref{sec:transformer-weak-strong}). Though not directly required for the main proofs, we also give conceptually useful analyses of how regularity assumptions lead network depth-complexity to be well-defined (\S\ref{sec:depth_faith}), an atlas of null-pair solutions and how the assumptions avoid them (\S\ref{sec:null_atlas}), and how our results may generalize to other architectures and activations (\S\ref{sec:extensions}).  We extend our results on weak-strong equivalence to the RSA metric in \S\ref{sec:rsa-weak-strong}.

\etocsettocstyle{\subsection*{Appendix table of contents}}{}
\localtableofcontents

\setcounter{theorem}{0}
\setcounter{lemma}{0}
\setcounter{proposition}{0}
\setcounter{corollary}{0}
\setcounter{definition}{0}
\setcounter{assumption}{0}
\setcounter{remark}{0}

\subsection{Common setup and alignment notions}
\label{sec:common-setup}

In this work primarily consider ReLU and Softplus activation functions
\[
 \varphi\in\{\sig,\rhoR\},
 \qquad
 \sig(t)=\log(1+e^t),
 \qquad
 \rhoR(t)=\max\{t,0\}.
\]
We define affine-nonlinear networks $N\in\{A,B\}$ via the usual formulas alternating and affine step (the hidden layer) and a pointwise nonlinear output step:
\[
 z_\ell^N(x)=W_\ell^Nh_{\ell-1}^N(x)+b_\ell^N,
 \qquad
 h_\ell^N(x)=\varphi(z_\ell^N(x)),
\]
with $\varphi$ applied coordinatewise.  The full-layer operation is denoted by
\[
 \Phi_{\ell,\varphi}^N(z)=W_{\ell+1}^N\varphi(z)+b_{\ell+1}^N.
\]
The input set (the ``data'') is denoted by $M_\ell^N=z_\ell^N(\Omega)$.  It will often by convenient to work on a \emph{task patch} $C$, that is,  =is any subset of $C \subseteqq M_\ell^A$ with nonempty relative interior in its affine hull.

Capturing the intuitive definitions of weak alignment as affine mappability, we define:
\begin{definition}[Weak alignment]
A layer-$\ell$ comparison map is an injective affine map
\[
 E_\ell(z)=T_\ell z+a_\ell,
 \qquad
 T_\ell:\R^{d_\ell^A}\to\R^{d_\ell^B}.
\]
The networks are \emph{weakly equivalent} at layer $\ell$ on task patch $C$ if
\[
 z_\ell^B(x)=E_\ell z_\ell^A(x) \quad \text{for all } x \ in C.
\]
\end{definition}

The follow basic commutation lemma on weak alignment is useful below. 
\begin{lemma}[Adjacent commutation]
\label{lem:adjacent-commutation-common}
If exact weak alignment holds at layers $\ell$ and $\ell+1$, then Fig. \ref{fig:alignment_defs}A commutes, e.g.
\[
 \Phi_{\ell,\varphi}^B(E_\ell z)=E_{\ell+1}\Phi_{\ell,\varphi}^A(z)
 \qquad\forall z\in C.
\]
\end{lemma}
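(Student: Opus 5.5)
The plan is a direct unwinding of the definitions. For $z\in C$ there is some input $x$ with $z=z_\ell^A(x)$ (this is what $C\subseteq M_\ell^A=z_\ell^A(\Omega)$ means). We then compute $\Phi_{\ell,\varphi}^B(E_\ell z)$ in three substitutions, each using one hypothesis or definition.

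\emph{Step 1 (weak alignment at layer $\ell$).} Weak alignment at layer $\ell$ gives $E_\ell z=E_\ell z_\ell^A(x)=z_\ell^B(x)$.

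\emph{Step 2 (definition of the forward pass in $B$).} By the network equations,
\[
\Phi_{\ell,\varphi}^B(z_\ell^B(x))=W_{\ell+1}^B\varphi(z_\ell^B(x))+b_{\ell+1}^B=W_{\ell+1}^Bh_\ell^B(x)+b_{\ell+1}^B=z_{\ell+1}^B(x).
\]

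\emph{Step 3 (weak alignment at layer $\ell+1$, then the forward pass in $A$).} Weak alignment at layer $\ell+1$ gives $z_{\ell+1}^B(x)=E_{\ell+1}z_{\ell+1}^A(x)$. The same forward-pass identity for network $A$ gives $z_{\ell+1}^A(x)=\Phi_{\ell,\varphi}^A(z_\ell^A(x))=\Phi_{\ell,\varphi}^A(z)$.

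Chaining the three steps yields
\[
\Phi_{\ell,\varphi}^B(E_\ell z)=E_{\ell+1}\Phi_{\ell,\varphi}^A(z).
\]

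The only point needing care, and the one I expect to be the main (mild) obstacle, is bookkeeping about which index set the layer-$(\ell+1)$ alignment is assumed on. The task patch $C$ lives at layer $\ell$, so layer-$(\ell+1)$ alignment must be read as holding for the same inputs $x$ that produce $C$. I would state this explicitly: both alignments hold for all $x\in\Omega$ (as in the main-text definition), or at least on the preimage $X_C=\{x:z_\ell^A(x)\in C\}$.

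A second subtlety is that $z_\ell^A$ need not be injective. If several inputs $x$ map to the same $z$, the left-hand side depends only on $z$, since $\Phi^B_{\ell,\varphi}\circ E_\ell$ is a function of $z$. So the identity is well-defined regardless of which preimage $x$ is chosen, and no further argument is needed.
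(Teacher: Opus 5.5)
Your proof is correct and is exactly the paper's argument: pick $x$ with $z=z_\ell^A(x)$, then chain weak alignment at layer $\ell$, the $B$ forward pass, weak alignment at layer $\ell+1$, and the $A$ forward pass. Your remark that the layer-$(\ell+1)$ alignment must hold on the preimage $\{x: z_\ell^A(x)\in C\}$ usefully clarifies bookkeeping the paper leaves implicit; it later makes this explicit as $\Omega_C$ in the exact ReLU theorem.
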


\begin{proof}
For any $z\in C\subseteq M^A_\ell$, choose $x\in\Omega$ such that $z=z^A_\ell(x)$. Then
$$\begin{aligned}
 \Phi^B_\ell(E_\ell z)
 &=\Phi^B_\ell(E_\ell z^A_\ell(x)) \\
 &=\Phi^B_\ell(z^B_\ell(x)) \\
 &=z^B_{\ell+1}(x) \\
 &=E_{\ell+1}z^A_{\ell+1}(x) \\
 &=E_{\ell+1}\Phi^A_\ell(z^A_\ell(x)) \\
 &=E_{\ell+1}\Phi^A_\ell(z).
\end{aligned}$$
\end{proof}

It will sometimes be necessary to account for scalar symmetries when comparing axes.  The allowed scalar symmetry depends on the activation: Softplus has no nontrivial real scaling symmetry, while ReLU is homogeneous and therefore permits nonzero real rescalings in adjacent weak--strong comparisons.  This motivates the following scalar symmetry sets:
\[ \mathcal H_{\sig}=\{1\},
 \qquad
 \mathcal H_{\rhoR}=\R\setminus\{0\}.
\]

We can now define strong alignment precisely: 

\begin{definition}[Strong alignment]
Given $\phi$, $E_\ell$, an $A$-axis $j$ and pulled-back $B$-axis $i$ are \emph{strongly aligned} on $C$ if
\[
 (E_\ell z)_i=\alpha z_j
 \qquad\forall z\in C
\]
for some $\alpha\in\mathcal H_\varphi$.  Let $J_\ell^{N,\varphi}(C)$denote the set of axes of network $N$ on task patch $C$, and define
\[
 \AxisAlign_{\ell,\varphi}(A,B;E_\ell,C)
 =\frac1{d_\ell^A}\max_\pi
 \#\left\{j\in J_\ell^{A,\varphi}(C):
 (E_\ell z)_{\pi(j)}=\alpha_jz_j,
 \ \alpha_j\in\mathcal H_\varphi\right\},
\]
where $\pi$ ranges over injective matchings into the counted $B$-axes.
\end{definition}

It is useful to introduce the concept of a one-layer softplus expansion: the input-output formula for any given softplus layer $r$, viewed as a function of a set of scalar-valued functions $g_j$ realizing the network operations in layers below $r$. This notation
separates the lower part of the network, which produces scalar hidden axes, from the next layer, which applies softplus to those axes and linearly recombines them:

\begin{definition}[One-step expansion]
\label{def:one-step-expansion}
A one-step expansion is
\[
 G=\bigl(b,\{(v_j,g_j)\}_{j=1}^k\bigr),
 \qquad
 Y_G^\varphi(x)=b+\sum_{j=1}^kv_j\varphi(g_j(x)).
\]
In an actual network layer, this notation is simply the normal layer update rewritten with the ''dictionary''
\[
 g_j=z_{r,j}^N,
 \qquad
 v_j=W_{r+1}^N[:,j],
 \qquad
 b=b_{r+1}^N.
\]
If $E(y)=Ty+a$ is affine, define
\[
 EG=\bigl(Tb+a,\{(Tv_j,g_j)\}_{j=1}^k\bigr),
 \qquad
 Y_{EG}^\varphi=E(Y_G^\varphi).
\]
When $T$ is injective, many properties introduced below (such as usedness and affine sign-noncancellation) are preserved.
\end{definition}

\subsection{Exact Weak--strong equivalence}
\label{sec:exact-ws}

The common strategy of the exact weak-strong equivalence is to define ``usedness'' for a nonlinear element of the network, and then prove that an affine transformation cannot undo a used nonlinearity.
But, the concrete definition of usedness is a little bit different depending on the activation function.  For ReLU, we observe ``kinks'' where the nonlinearity acts; for softplus, we use conditions on the non-redundancy of the expansion coefficients. 

\subsubsection{ReLU: usedness as visible kink traces}

The key object for ReLU usedness is the trace associated with an axis (sometimes called a ``gate''):
\begin{definition}[Used ReLU gates.]
\label{def:relu-used}
For a ReLU axis $j$, define its trace on a task patch $C$ by
\[
 \Gamma_j^N(C)=\{z\in C:z_j=0\}.
\]
Given a comparison map $E$, the pulled-back $B$-trace is
\[
 \Gamma_i^B(E;C)=\{z\in C:(E z)_i=0\}.
\]
Let $V$ be the affine hull of $C$.  A trace $\Gamma(C)=C\cap\{a(z)=0\}$ is \emph{regular at point $p\in\Gamma(C)$} if $a|_V$ is not the zero affine functional, so near $p$ inside $V$ the trace is a codimension-one affine slice.  The patch $C$ \emph{crosses} this trace at $p$ if, for every $\eta>0$, there are $z^+,z^-\in C\cap B_\eta(p)$ with $a(z^+)>0$ and $a(z^-)<0$.  A set of traces is \emph{regular and separated on $C$} if every trace in the set contains a relatively open patch of regular crossed points $p$, and no two distinct counted traces agree on a relatively open subset of such a patch (in other terms, the trace has a ``private'' regular crossed piece).  A ReLU gate $j$ is \emph{used on $C$} if (i) its trace $\Gamma_j^N(C)$ contains a regular crossed patch, (ii) and the outgoing column is nonzero: $W_{\ell+1}^N[:,j]\neq0$.
\end{definition}

The word ``used'' is deliberately broader than the technical downstream condition $W_{\ell+1}[:,j]\neq0$.  It says both that the task crosses the nonlinear boundary and that the next affine map reads out the resulting kink.  This is the notion that belongs in the nonlinear gate budget.

The next lemma explains why used gates are the right budgeted object for measuring task difficulty: a gate that is not used may still be physically present in the network, but it is not needed as a nonlinear boundary on the task patch.

\begin{lemma}[Shrinkability of non-used ReLU gates]
\label{lem:relu-shrinkability}
For the adjacent block
\[
 \Phi_{\ell,\rhoR}(z)=W_{\ell+1}\rhoR(z)+b_{\ell+1}
\]
on a task patch $C$, a gate that is not used contributes no indispensable nonlinear boundary on $C$.  More precisely: if $W_{\ell+1}[:,j]=0$, the gate can be deleted; if $C$ does not cross $\{z_j=0\}$, then $\rhoR(z_j)$ is either $0$ or affine on each component of $C$ and its contribution can be absorbed into the affine part of the block; if two counted affine traces coincide on a regular patch, their ReLU contributions can be merged, up to an affine residual coming from $\rhoR(t)-\rhoR(-t)=t$.
\end{lemma}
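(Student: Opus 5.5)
The lemma has three parts, and I will treat each failure mode of usedness separately. In every case I work directly with the block formula
\[
\Phi_{\ell,\rhoR}(z)=\sum_k W_{\ell+1}[:,k]\,\rhoR(z_k)+b_{\ell+1}.
\]
In each case I exhibit a modified block with one fewer counted gate, plus possibly an adjusted affine part, that agrees with $\Phi_{\ell,\rhoR}$ on $C$. I read ``contributes no indispensable nonlinear boundary'' as exactly this statement: the gate can be removed, and the block changes only by an affine correction on $C$.

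\emph{Zero outgoing column.} If $W_{\ell+1}[:,j]=0$, then the $j$-th summand is identically zero. Deleting row $j$ of $W_\ell$, entry $j$ of $b_\ell$, and column $j$ of $W_{\ell+1}$ leaves $\Phi$ unchanged on all of $\R^{d_\ell}$, not just on $C$.

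\emph{Uncrossed trace.} Suppose $C$ does not cross $\{z_j=0\}$ at any point. Take a connected component $C'$ of $C$ (or of its relative interior, if the patch definition requires it). I claim $z_j$ has constant weak sign on $C'$. If $z_j$ took both signs there, connectedness would produce a point $p\in C'$ with $z_j(p)=0$ at which both signs occur in every neighborhood, and that is a crossing. So on $C'$ we have either $z_j\ge0$, giving $\rhoR(z_j)=z_j$ (affine), or $z_j\le0$, giving $\rhoR(z_j)=0$. In the first case I move $W_{\ell+1}[:,j]\,z_j$ into a direct affine term $W_{\ell+1}[:,j]e_j^\top z$. In the second case I drop the term. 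When $C$ has several components, the absorbed affine part may differ from component to component; the lemma allows this (``on each component''). I expect this step to be the main obstacle. The difficulty is formulating the crossing condition so that the connectedness argument is clean. It also has to cover the case where crossing happens only at irregular points, i.e.\ where $a|_V\equiv0$. There $z_j$ is constant on the affine hull, so $\rhoR(z_j)$ is a constant, and the term is absorbed into the bias.

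\emph{Coinciding traces.} Suppose two counted traces $\{z_j=0\}$ and $\{z_k=0\}$ coincide on a regular patch. Then the affine functionals $z_j|_V$ and $z_k|_V$ have the same nonzero zero set in $V$. Hence $z_k=c\,z_j$ on $V$ for some $c\neq0$. If $c>0$, positive homogeneity gives $\rhoR(z_k)=c\,\rhoR(z_j)$, and I merge the two terms into the single column $W_{\ell+1}[:,j]+c\,W_{\ell+1}[:,k]$. If $c<0$, I use $\rhoR(-t)=\rhoR(t)-t$ to write
\[
\rhoR(z_k)=|c|\,\rhoR(z_j)-|c|\,z_j .
\]
The merged column is then $W_{\ell+1}[:,j]+|c|\,W_{\ell+1}[:,k]$, and the residual $-|c|\,W_{\ell+1}[:,k]\,z_j$ is affine and absorbed into the affine part. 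Because both functionals are affine on $V$, agreement on a relatively open patch propagates to all of $C\subseteq V$. So the merge is valid on the whole patch, not only locally, and the lemma follows.
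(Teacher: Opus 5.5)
Your three cases and the identities you use match the paper's proof: deletion for a zero outgoing column, sign-definiteness of $z_j$ so that $\rhoR(z_j)$ is $0$ or $z_j$, and, for coinciding traces, proportionality $z_k=c\,z_j$ on $V$ followed by positive homogeneity ($c>0$) or $\rhoR(t)-\rhoR(-t)=t$ ($c<0$). The one real difference is that you try to justify the sign-definiteness step, which the paper simply asserts (``if $z_j<0$ on the patch \dots\ if $z_j>0$ \dots'').

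\textbf{Your justification fails as written.} Connectedness of a component $C'$ of $C$ does not produce a crossing point. Take $V=\R^2$ and $j=1$. Let $C'$ be the closed half-discs $\{z_1\ge0,\ z_1^2+z_2^2\le1\}$ and $\{z_1\le0,\ z_1^2+(z_2-3)^2\le1\}$, joined by the segment $\{0\}\times[1,2]$. This set is connected, has nonempty interior, and $z_1$ takes both signs on it. Yet near every zero point at most one strict sign occurs, so it crosses nowhere. Moreover, $\rhoR(z_1)$ equals $z_1$ on one half-disc and $0$ on the other, so it is not affine on $C'$. Thus the clause ``affine on each component of $C$'' is not true for arbitrary task patches. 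The lemma must be read with some regularity of the patch, which the paper's one-line argument tacitly assumes.

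\textbf{The repair is the alternative you mention in parentheses.} Work on connected components $U$ of $\relint(C)$, which are open in $V$.
\begin{itemize}
\item If $z_j$ took both signs on $U$, connectedness gives $p\in U$ with $z_j(p)=0$.
\item Then $z_j|_V$ is nonconstant, and $U$ contains a $V$-ball about $p$. So every ball about $p$ contains points of $U\subseteq C$ of both signs, and $C$ crosses at $p$.
\item Hence non-crossing gives $z_j\ge0$ or $z_j\le0$ on each such $U$.
\item The absorbed affine formula then extends by continuity to $\overline U$.
\end{itemize}
A convex patch works the same way, using the segment between a positive and a negative point.

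Your concern about crossings occurring only at irregular points is vacuous. If $p\in\Gamma_j(C)$ and $z_j|_V$ is constant, then $z_j|_V\equiv0$, so no point of either strict sign exists and there is no crossing at all.
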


\begin{proof}
The first case is immediate.  In the second case, if $z_j<0$ on the patch then $\rhoR(z_j)=0$, while if $z_j>0$ then $\rhoR(z_j)=z_j$ and the contribution $W_{\ell+1}[:,j]z_j$ is affine in the layer coordinate.  If two affine traces coincide on $V=\operatorname{aff}(C)$, the defining affine functions are proportional: $h=\alpha g$ or $h=-\alpha g$ with $\alpha>0$.  In the positive case, $v\rhoR(g)+\widetilde v\rhoR(\alpha g)=(v+\alpha\widetilde v)\rhoR(g)$.  In the negative case, use $\rhoR(g)-\rhoR(-g)=g$ to rewrite the pair as one ReLU term plus an affine function.  Hence at most one nonlinear boundary remains.
\end{proof}
\noindent Accordingly, the gate budget below counts task-visible nonlinear slope changes, not physically present but redundant coordinates.

\begin{definition}[Required used ReLU-gate budget]
\label{def:relu-used-budget}
Let $\mathcal F_{\ell,\le k}^{\rhoR}$ be the fixed-macroarchitecture networks solving the task with at most $k$ used layer-$\ell$ ReLU gates.  Define
\[
 m_\ell^{\rhoR}(\eps)=
 \min\{k:\exists f\in\mathcal F_{\ell,\le k}^{\rhoR},\ L(f)\le\eps\}.
\]
\end{definition}
\noindent We are now in a position to state and prove the exact ReLU version of weak-strong alignment equivalence. 
\begin{theorem}[Exact ReLU weak--strong equivalence]
\label{thm:relu-adjacent}\label{thm:relu-exact-adjacent}
Fix a task patch $C\subseteq z_\ell^A(\Omega)$ and let $\Omega_C=\{x:z_\ell^A(x)\in C\}$.  Suppose exact weak equivalence holds at layers $\ell$ and $\ell+1$ on $\Omega_C$.  Assume the counted $A$-side and pulled-back $B$-side ReLU trace families are used on $C$ in the sense of Definition~\ref{def:relu-used}.  Then there is an injective map
\[
 \pi:J_\ell^{A,\rhoR}(C)\to J_\ell^{B,\rhoR}(E_\ell;C)
\]
with
\[
 (E_\ell z)_{\pi(j)}=\alpha_jz_j,
 \qquad \alpha_j\neq0,
 \qquad z\in C.
\]
If $A$ reaches loss at most $\eps$, then
\[
 \AxisAlign_{\ell,\rhoR}(A,B;E_\ell,C)
 \ge\frac{m_\ell^{\rhoR}(\eps)}{d_\ell^A}.
\]
\end{theorem}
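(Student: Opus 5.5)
The plan is to compare nondifferentiability loci on the two sides of the commuting square of Lemma~\ref{lem:adjacent-commutation-common}. On $C$ we have
\[
F(z):=\Phi^B_{\ell,\rhoR}(E_\ell z)=W^B_{\ell+1}\rhoR(E_\ell z)+b^B_{\ell+1}=E_{\ell+1}\bigl(W^A_{\ell+1}\rhoR(z)+b^A_{\ell+1}\bigr),
\]
and both expressions are continuous piecewise-affine functions on $V=\operatorname{aff}(C)$. Read from the $A$ side, the kinks of $F$ can only lie on the traces $\{z_j=0\}$. Read from the $B$ side, they can only lie on the pulled-back traces $\{(E_\ell z)_i=0\}$. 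The strategy is to show that each used $A$-trace is a genuine kink locus of $F$, and therefore must coincide with some used $B$-trace.

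\emph{Step 1 (a used $A$-gate produces a visible kink).} Fix a used $A$-gate $j$ and a point $p$ in its private regular crossed patch. Because the counted traces are separated, we may shrink to a small ball $B_\eta(p)\cap V$ in which no other counted $A$-trace passes through $p$. The non-counted gates are not used, so by Lemma~\ref{lem:relu-shrinkability} they contribute only affine terms on $C$ near $p$. It follows that the jump in the gradient of $F$ across $\{z_j=0\}$ at $p$ equals $E_{\ell+1}$'s linear part applied to the outer product $W^A_{\ell+1}[:,j]\,\nabla(z_j)|_V$. The column is nonzero, $z_j|_V$ is not constant because the trace is regular, and the linear part of $E_{\ell+1}$ is injective. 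So the jump is nonzero, and $F$ is not affine on any neighborhood of $p$; both the $z^+$ and $z^-$ sides are present because the trace is crossed.

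\emph{Step 2 (the kink forces a matching $B$-trace).} From the $B$ side, $F$ is affine near $p$ unless $p$ lies on some counted pulled-back trace $\{(E_\ell z)_i=0\}$. Non-counted $B$-gates again contribute only affine terms by Lemma~\ref{lem:relu-shrinkability}. This gives $p\in\bigcup_i \Gamma^B_i(E_\ell;C)$ for every $p$ in the patch. A relatively open piece of a hyperplane cannot be covered by finitely many other hyperplanes unless it is contained in one of them (Baire, or just finiteness). Hence some single $i$ satisfies $\{z_j=0\}\cap V=\{(E_\ell z)_i=0\}\cap V$ on an open patch. Two affine functionals on $V$ with the same zero hyperplane are proportional, so $(E_\ell z)_i=\alpha_j z_j$ on $V\supseteq C$ with $\alpha_j\neq 0$. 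We then set $\pi(j)=i$. For this step to go through, the affine constants must agree; that holds because both sides vanish on a common nonempty patch.

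\emph{Step 3 (injectivity and the bound).} Suppose $\pi(j)=\pi(j')$ with $j\neq j'$. Then $z_j$ and $z_{j'}$ would be proportional on $V$, and their traces would agree on an open regular patch, which contradicts separatedness. So $\pi$ is injective, and the number of strongly aligned axes is at least the number of used $A$-axes. Since $A$ attains loss at most $\eps$, it lies in $\mathcal F^{\rhoR}_{\ell,\le k}$ where $k$ is the number of its used layer-$\ell$ gates. Definition~\ref{def:relu-used-budget} then gives $k\ge m^{\rhoR}_\ell(\eps)$, and dividing by $d^A_\ell$ yields the bound.

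The main obstacle is Step~1. We must verify that the local gradient jump of $F$ across the $A$-trace cannot be cancelled. The risks are other gates whose traces pass through $p$, and gates that are non-counted but crossed. The approach is to choose $p$ generically inside the private patch, away from the finitely many other traces (regularity and separatedness give an open set of such $p$). Then only gate $j$ switches across the hyperplane near $p$, and the jump formula is exactly the rank-one term above.
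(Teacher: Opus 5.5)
Your proposal is correct and follows the paper's proof essentially step for step. The nonzero rank-one jump $W^A_{\ell+1}[:,j]\,\nabla z_j|_V$ across a private used $A$-trace survives the injective $T_{\ell+1}$. This forces the private patch to be covered by finitely many pulled-back $B$-traces, one of which must contain it, so the two affine functionals are proportional; injectivity then comes from trace separation and the bound from the used-gate budget. The only cosmetic differences are that you use just the inclusion of $A$-kinks in $B$-traces, where the paper proves the full equality of singular sets, and that you obtain injectivity from $A$-side rather than $B$-side separation.
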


\begin{proof}
Write $V=\operatorname{aff}(C)$.  On each relatively open region of $C$ on which the signs of all coordinates are fixed, $\rhoR$ is affine and therefore both $\Phi_{\ell,\rhoR}^A$ and $\Phi_{\ell,\rhoR}^B\circ E_\ell$ are affine.  Fix $j\in J_\ell^{A,\rhoR}(C)$ and choose a private regular point
\[
 p\in\Gamma_j^A(C)
\]
which lies on no other counted trace.  In a neighborhood of $p$, every term except the $j$th is affine through the trace.  If $u\in V-V$ is a crossing direction with $u_j\neq0$, then the one-sided directional derivatives of the $j$th contribution differ by
\[
 W_{\ell+1}^A[:,j]u_j.
\]
Equivalently, the full derivative matrices on the two sides differ by
\[
 W_{\ell+1}^A[:,j]e_j^\top\big|_{V-V}.
\]
This jump is nonzero because the task patch crosses the trace and $W_{\ell+1}^A[:,j]\neq0$.  Hence a relatively open part of every used trace belongs to the nondifferentiability set of the restricted block.  Conversely, away from the union of the coordinate traces the sign pattern is locally constant, so the block is affine.  Thus, up to lower-dimensional intersections of traces,
\[
 \Sing(\Phi_{\ell,\rhoR}^A|_C)
 =\bigcup_{j\in J_\ell^{A,\rhoR}(C)}\Gamma_j^A(C).
\]
The analogous statement holds for the pulled-back $B$-block.  Its used trace family is
\[
 \Gamma_i^B(E_\ell;C)=\{z\in C:(E_\ell z)_i=0\},
\]
and, again up to lower-dimensional trace intersections,
\[
 \Sing((\Phi_{\ell,\rhoR}^B\circ E_\ell)|_C)
 =\bigcup_{i\in J_\ell^{B,\rhoR}(E_\ell;C)}\Gamma_i^B(E_\ell;C).
\]
\noindent By Lemma~\ref{lem:adjacent-commutation-common},
\[
 \Phi_{\ell,\rhoR}^B\circ E_\ell
 =E_{\ell+1}\circ\Phi_{\ell,\rhoR}^A
 \qquad\text{on }C.
\]
Writing $E_{\ell+1}(y)=T_{\ell+1}y+a_{\ell+1}$, a derivative jump $J$ of the $A$-block is sent to $T_{\ell+1}J$.  Since $T_{\ell+1}$ is injective, $J\neq0$ implies $T_{\ell+1}J\neq0$.  Thus postcomposition cannot erase an $A$-side kink.  The equality above also prevents the left side from having a kink where the right side is differentiable.  Therefore the two displayed singular sets coincide on the task patch.

Now fix a connected private open patch $U\subset\Gamma_j^A(C)$.  It is contained in the finite union of the pulled-back $B$-traces.  If no one of those traces contained a relatively open subset of $U$, then each intersection with $U$ would have codimension at least one inside $U$ and finitely many such intersections could not cover $U$.  Hence some $B$-trace agrees with $\Gamma_j^A(C)$ on a relatively open patch.  The trace-separation part of usedness makes the match unique, and two distinct $A$-traces cannot be assigned to the same separated $B$-trace.  We obtain an injective matching $i=\pi(j)$.

For such a matched pair, the nonzero affine functions
\[
 z\longmapsto z_j,
 \qquad
 z\longmapsto(E_\ell z)_i
\]
have the same regular codimension-one zero set on an open subset of $V$.  Two nonzero affine functionals on $V$ with the same zero hyperplane are proportional, so there is $\alpha_j\neq0$ with
\[
 (E_\ell z)_{\pi(j)}=\alpha_jz_j
 \qquad\forall z\in C.
\]
This proves the axis-matching statement.

Finally, a task solution of loss at most $\eps$ must contain at least $m_\ell^{\rhoR}(\eps)$ used layer-$\ell$ gates.  Every such gate is matched by the construction above, so at least that many $A$-axes contribute to the numerator of $\AxisAlign_{\ell,\rhoR}$.  Dividing by $d_\ell^A$ yields the stated lower bound.
\end{proof}

\subsubsection{Softplus: usedness via affine ridge curvature}

\begin{figure}[H]
\centering
\begin{tikzpicture}[x=1cm,y=1cm]
\tikzset{
  panelbox/.style={rounded corners,draw=black!55,fill=gray!6,line width=.45pt},
  panelnote/.style={font=\scriptsize,align=center,text width=4.35cm},
  smalllabel/.style={font=\scriptsize,fill=white,inner sep=1.4pt,align=center},
  paneltitle/.style={font=\bfseries\scriptsize,align=center,text width=4.55cm}
}

\begin{scope}
\node[paneltitle] at (0,1.82) {A. Used curvature ridge};
\draw[panelbox] (-2.30,-1.35) rectangle (2.30,1.35);
\node[font=\scriptsize,anchor=north west] at (-2.10,1.18) {$C$};

\draw[blue!35,very thick]
  plot[smooth] coordinates {(-2.00,-0.82) (-1.35,-0.28) (-0.63,0.27) (0.22,0.82) (1.30,1.18)};
\draw[blue!70!black,very thick]
  plot[smooth] coordinates {(-2.08,-1.00) (-1.40,-0.46) (-0.68,0.07) (0.18,0.61) (1.22,1.02)};
\draw[blue!35,very thick]
  plot[smooth] coordinates {(-1.76,-1.17) (-1.08,-0.64) (-0.36,-0.10) (0.50,0.43) (1.56,0.81)};

\node[smalllabel,anchor=west] at (-1.88,0.48) {$g_j$ varies on $C$};
\node[smalllabel,anchor=west] at (0.36,-0.12) {$\sigma''(g_j)>0$};
\draw[->,thick] (0.62,-0.88) -- (1.14,-0.88);
\node[smalllabel,anchor=west] at (1.20,-0.88) {$v_j\neq 0$};

\node[panelnote] at (0,-1.95)
{nonconstant argument\\and nonzero outgoing coefficient};
\end{scope}

\begin{scope}[xshift=5.05cm]
\node[paneltitle] at (0,1.82) {B. Reducedness};
\draw[panelbox] (-2.30,-1.35) rectangle (2.30,1.35);
\node[font=\scriptsize,anchor=north west] at (-2.10,1.18) {$C$};

\draw[blue!70!black,very thick]
  plot[smooth] coordinates {(-1.98,-0.95) (-1.24,-0.38) (-0.50,0.20) (0.36,0.78) (1.40,1.14)};
\draw[orange!85!black,dashed,very thick]
  plot[smooth] coordinates {(-1.98,-0.95) (-1.24,-0.38) (-0.50,0.20) (0.36,0.78) (1.40,1.14)};

\draw[purple!75!black,very thick]
  plot[smooth] coordinates {(0.92,-1.08) (1.18,-0.51) (1.45,0.10) (1.78,0.98)};

\node[smalllabel,anchor=west] at (-1.96,0.58) {$g_j$};
\node[smalllabel] at (-0.38,-0.26) {$g_i=g_j$ or $g_i=-g_j$};
\node[smalllabel,anchor=west] at (0.72,-0.95) {$h$ distinct};

\node[panelnote] at (0,-1.95)
{same ridge signature\\is counted only once};
\end{scope}

\begin{scope}[xshift=10.10cm]
\node[paneltitle] at (0,1.82) {C. Sign-noncancellation};
\draw[panelbox] (-2.30,-1.35) rectangle (2.30,1.35);
\node[font=\scriptsize,anchor=north west] at (-2.10,1.18) {$C$};

\draw[blue!70!black,very thick]
  plot[smooth] coordinates {(-1.95,-0.95) (-1.22,-0.36) (-0.50,0.23) (0.34,0.80) (1.38,1.12)};
\draw[orange!85!black,dashed,very thick]
  plot[smooth] coordinates {(-1.95,-0.95) (-1.22,-0.36) (-0.50,0.23) (0.34,0.80) (1.38,1.12)};

\draw[green!50!black,very thick] (-1.68,-1.02) -- (1.52,-0.18);

\node[smalllabel] at (0.02,0.78) {$\sigma(g)-\sigma(-g)=g$};
\node[smalllabel,anchor=west] at (-1.98,0.16) {opposite pair};
\node[smalllabel,anchor=west] at (0.22,-0.46) {affine residual};

\node[panelnote] at (0,-1.95)
{opposite terms may collapse\\to something affine};
\end{scope}

\end{tikzpicture}
\caption{Softplus usedness as a curvature-based analogue of ReLU usedness.  Panel A shows the basic requirement: a Softplus term must have a nonconstant argument and a nonzero outgoing coefficient, so that its smooth curvature is visible on the task patch.  Panel B illustrates reducedness: duplicate or opposite arguments generate the same curvature signature and should not be counted as distinct nonlinear axes.  Panel C illustrates sign-noncancellation: because $\sigma(g)-\sigma(-g)=g$, a collection of opposite-sign terms can leave only an affine residual unless this possibility is explicitly ruled out.}
\label{fig:sp-used-curvature}
\end{figure}

In the Softplus case, there is no exact kink.  Instead, every nonconstant Softplus argument contributes smooth curvature through $\sigma''(g)$.  Figure~\ref{fig:sp-used-curvature} illustrates why this is the analogue of ReLU usedness.  A ReLU gate is used when the task crosses a kink trace and the next affine map reads out the resulting slope jump.  A Softplus term is used when the argument genuinely varies, its outgoing coefficient is nonzero, and the resulting curvature ridge is not being counted redundantly through duplicates, opposites, or affine sign-cancellations.  Intuitively, in proving the weak--strong equivalence in this setting, our job will be to show that adjacent affine mappability forces all relevant slopes within an expansion to be matched.

We posit that the conditions analogous to ReLU usedness are:
\begin{definition}[Used Softplus expansion]
\label{def:sp-used}\label{def:sp-used-adjacent}
After deleting unused terms and absorbing constant arguments into the bias, a Softplus expansion is \emph{used} if:
\begin{enumerate}[label=(\roman*)]
\item every $v_j\neq0$ and every $g_j$ is nonconstant;
\item it is reduced: $g_i=\pm g_j$ implies $i=j$;
\item it is sign-noncancelling: for every nonempty $S$,
\[
 \inf_{a\in\R^m}\left\|\sum_{j\in S}v_jg_j-a\right\|_{L^2}>0.
\]
\end{enumerate}
\end{definition}
\noindent To understand what motivated these conditions as replacement for the ReLU usedness condition, we show that these conditions enable a softplus version of shrinkability. For ReLU, a gate can be shrinkable because the patch may not cross its zero trace; then $\rho(z_j)$ is either zero or affine on that patch. For Softplus, there is no ``inactive side'': if $g$ is nonconstant and the outgoing coefficient is nonzero, then $\sigma(g)$ contributes genuine smooth curvature on any open patch, because $\sigma''(t)>0$ everywhere. The Softplus shrinkability result therefore needs to be about algebraic redundancies: zero coefficients, constant arguments, duplicate arguments, opposite arguments, and the sign ambiguity coming from the softplus identity $\sigma(t)-\sigma(-t) = t$.  Specifically:

\begin{lemma}[Shrinkability of Softplus degeneracies]
Let \(C\) be a task patch with nonempty relative interior in \(V=\operatorname{aff}(C)\).  Consider a vector-valued Softplus expansion on \(C\)
\[
F(z)=a(z)+\sum_{j=1}^k v_j\sigma(g_j(z)),
\qquad z\in C,
\]
where \(a:V\to\mathbb R^m\) is affine, each \(v_j\in\mathbb R^m\), and
each \(g_j:V\to\mathbb R\) is affine.  Then the following operations preserve $F$ and do not increase the number of nonlinear Softplus
terms:
\begin{enumerate}
\item If \(v_j=0\), the \(j\)-th term may be deleted.

\item If \(g_j\equiv \gamma\) on \(V\), the \(j\)-th term may be deleted
and the affine part replaced by
\[
a(z)\longmapsto a(z)+v_j\sigma(\gamma).
\]

\item If \(g_i=g_j\) on \(V\), then the two terms may be replaced by one
term:
\[
v_i\sigma(g_i)+v_j\sigma(g_j)
=
(v_i+v_j)\sigma(g_j).
\]
If \(v_i+v_j=0\), both nonlinear terms disappear.

\item If \(g_i=-g_j\) on \(V\), then the two terms may be replaced by one
Softplus term plus an affine residual:
\[
v_i\sigma(g_i)+v_j\sigma(g_j)
=
(v_i+v_j)\sigma(g_j)-v_i g_j.
\]
Since \(g_j\) is affine, the residual \(-v_i g_j\) can be absorbed into
the affine part \(a\).  If \(v_i+v_j=0\), no nonlinear Softplus term
remains from this pair.

\item More generally, if for a nonempty set \(S\)
\[
\sum_{j\in S} v_j g_j
\]
is constant on \(C\), then simultaneously replacing \(g_j\) by \(-g_j\)
for all \(j\in S\) changes the expansion only by a bias term.
\end{enumerate}
\noindent Consequently, modulo affine residuals, every finite Softplus expansion can be reduced to one in which all remaining nonlinear terms have nonzero coefficient, nonconstant argument, and no two remaining arguments are equal up to sign.  The additional affine sign-noncancellation
condition rules out the remaining orientation ambiguity caused by \(\sigma(t)-\sigma(-t)=t\).
\end{lemma}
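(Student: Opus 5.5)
The plan is to verify each of the five operations by a direct pointwise identity on $V$, and then get the reduction statement by induction on the number of nonlinear terms $k$. For items 1--3 the identities are immediate. If $v_j=0$ the term contributes nothing. If $g_j\equiv\gamma$ on $V$, then $v_j\sigma(g_j(z))=v_j\sigma(\gamma)$ is a constant vector, which is in particular affine and can be folded into $a$. If $g_i=g_j$ on $V$, the two terms combine by linearity of the outgoing coefficients. In each case the number of Softplus terms strictly decreases, or stays equal after the merge in item 3, and drops further when $v_i+v_j=0$. Since $F$ is only evaluated on $C\subseteq V$, it suffices that the identities hold on $V$.

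For item 4 the key tool is the identity $\sigma(t)-\sigma(-t)=t$, which follows from $\log(1+e^t)-\log(1+e^{-t})=\log e^t$. With $g_i=-g_j$ this gives $\sigma(g_i)=\sigma(g_j)-g_j$, so
\[
v_i\sigma(g_i)+v_j\sigma(g_j)=(v_i+v_j)\sigma(g_j)-v_ig_j .
\]
The residual $-v_ig_j$ is affine because $g_j$ is, so it can be absorbed into $a$.

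For item 5, apply the same identity termwise: $\sigma(-g_j)=\sigma(g_j)-g_j$. Replacing $v_j\sigma(g_j)$ by $v_j\sigma(-g_j)$ for all $j\in S$ therefore changes $F$ by exactly $-\sum_{j\in S}v_jg_j$. By hypothesis this quantity is constant on $C$, so it is a bias change. One point needs care: the hypothesis holds on $C$, not on $V$. Because the $g_j$ are affine and $C$ has nonempty relative interior in $V$, a vector-valued affine map that is constant on $C$ is constant on all of $V$, so nothing is lost. (The lemma writes $\sum_{j\in S}v_jg_j$ as a scalar-looking sum, but it is vector valued; I read ``constant'' componentwise.)

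For the ``consequently'' clause, I would run a terminating reduction. Repeatedly apply items 1--4 while any term has a zero coefficient, a constant argument, or an argument equal to another up to sign. Each application of items 1, 2, or 4 with cancellation, or any merge, strictly reduces $k$, so the process terminates. At termination the expansion satisfies conditions (i)--(ii) of Definition~\ref{def:sp-used}, and $F$ is unchanged modulo affine terms. The final remark about sign-noncancellation is then just the observation from item 5. Condition (iii) of Definition~\ref{def:sp-used}, a positive $L^2$ distance from the constants, is exactly what excludes the residual ambiguity. If it failed for some $S$, the vector $\sum_{j\in S}v_jg_j$ would be a.e. constant, hence constant on $C$ by affineness, and the flip of item 5 would yield a second representation differing from the first only by a bias.

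The main obstacle is mostly bookkeeping rather than depth. The step needing the most care is the passage from ``constant/equal on $C$'' to ``on $V$'', together with the $L^2$-to-pointwise step in the remark. Both rely on affineness of the $g_j$ and on $C$ having nonempty relative interior, which ensures that affine functions agreeing on $C$, or a.e. on $C$ under a measure with full support there, agree on $V$. I would state this small fact explicitly once and cite it in items 2--5.
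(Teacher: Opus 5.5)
Your proposal is correct and follows the paper's proof essentially step for step. You check each operation by a direct pointwise identity, with $\sigma(t)-\sigma(-t)=t$ handling items 4 and 5, and your extra care (constancy on $C$ versus $V$, and an explicit terminating reduction for the ``consequently'' clause) only makes explicit what the paper leaves implicit. One harmless slip: the merge in item 3 also strictly lowers the term count (two terms become one), which is what your termination argument later uses.
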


\begin{proof}
The zero-coefficient and constant-argument cases are immediate.  If
\(v_j=0\), the term contributes nothing.  If \(g_j\equiv\gamma\), then
\(v_j\sigma(g_j)=v_j\sigma(\gamma)\), which is constant and can be folded
into the affine part.

If \(g_i=g_j\), then
\[
v_i\sigma(g_i)+v_j\sigma(g_j)
=
(v_i+v_j)\sigma(g_j),
\]
so duplicate arguments merge.

If \(g_i=-g_j\), use the Softplus identity
\[
\sigma(t)-\sigma(-t)=t,
\qquad\text{equivalently}\qquad
\sigma(-t)=\sigma(t)-t.
\]
Writing \(g_i=-g_j\), we get
\[
v_i\sigma(g_i)+v_j\sigma(g_j)
=
v_i\sigma(-g_j)+v_j\sigma(g_j)
=
v_i(\sigma(g_j)-g_j)+v_j\sigma(g_j),
\]
hence
\[
v_i\sigma(g_i)+v_j\sigma(g_j)
=
(v_i+v_j)\sigma(g_j)-v_i g_j.
\]
The last term is affine in \(z\), because \(g_j\) is affine on \(V\), so
it can be absorbed into \(a\).

For the final claim, suppose
\[
\sum_{j\in S}v_jg_j\equiv c
\]
is constant on \(C\).  Then
\[
\sum_{j\in S}v_j\sigma(-g_j)
=
\sum_{j\in S}v_j(\sigma(g_j)-g_j)
=
\sum_{j\in S}v_j\sigma(g_j)-c.
\]
Thus flipping all signs in \(S\) changes the represented function only
by the constant \(-c\), which can be absorbed into the bias.  This is
exactly the orientation ambiguity excluded by affine
sign-noncancellation.
\end{proof}
\noindent Thus, we can let $m_\ell^{\sig}(\eps)$ denote the budget count for such softplus used axes, exactly in analogy the ReLU case.\footnote{The intricacy of the above argument is why, for the weak-strong equivalence-type of result, the ReLU case is easier to think about that the softplus case. Interestingly, the opposite will be true for the zippering-type results, where monodromy makes our lives easier. Ideally we could find a unification of all the arguments that reduce all this complexity.}  We can now also formulate and prove the softplus version of weak-strong equivalence:

\begin{theorem}[Exact Softplus weak--strong equivalence]
\label{thm:sp-adjacent}
Fix $C\subseteq z_\ell^A(\Omega)$ and suppose exact weak equivalence holds at layers $\ell$ and $\ell+1$.  Assume the transformed $A$-side and pulled-back $B$-side adjacent expansions are used in the sense of Definition~\ref{def:sp-used}.  Then there is an injective matching
\[
 \pi:J_\ell^{A,\sig}(C)\to J_\ell^{B,\sig}(E_\ell;C)
\]
with
\[
 (E_\ell z)_{\pi(j)}=z_j
 \qquad\forall z\in C.
\]
If $A$ reaches loss at most $\eps$, then
\[
 \AxisAlign_{\ell,\sig}(A,B;E_\ell,C)
 \ge\frac{m_\ell^{\sig}(\eps)}{d_\ell^A}.
\]
\end{theorem}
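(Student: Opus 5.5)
We will follow the structure of the ReLU proof, replacing kink traces by an algebraic identifiability statement for Softplus expansions. By Lemma~\ref{lem:adjacent-commutation-common}, on $C$ we have $\Phi^B_{\ell,\sig}(E_\ell z)=E_{\ell+1}\Phi^A_{\ell,\sig}(z)$. Writing $E_{\ell+1}(y)=T_{\ell+1}y+a_{\ell+1}$, this gives an identity between two one-step expansions (Definition~\ref{def:one-step-expansion}) on $V=\operatorname{aff}(C)$:
\[
 T_{\ell+1}b^A_{\ell+1}+a_{\ell+1}+\sum_{j}T_{\ell+1}W^A_{\ell+1}[:,j]\,\sig(z_j)
 =b^B_{\ell+1}+\sum_i W^B_{\ell+1}[:,i]\,\sig((E_\ell z)_i).
\]
All arguments $g_j(z)=z_j$ and $\widetilde g_i(z)=(E_\ell z)_i$ are affine on $V$. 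Injectivity of $T_{\ell+1}$ keeps the transformed $A$-side coefficients nonzero. By hypothesis both sides are used in the sense of Definition~\ref{def:sp-used}. Moving everything to one side, we obtain a vanishing Softplus expansion $a(z)+\sum_k c_k\sig(h_k(z))=0$ on a set with nonempty relative interior. By real-analyticity, this identity extends to all of $V$.

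The core step is a rigidity lemma. If a finite combination $a(z)+\sum_k c_k\sig(h_k(z))$ of affine $a$ and nonconstant affine $h_k$ vanishes on $V$, then the terms cancel in groups whose arguments agree up to sign. To prove it, we first merge terms whose linear parts $\nabla h_k|_{V-V}$ are parallel. Restricting to a generic line $z_0+tu$ makes distinct non-parallel directions have distinct slopes. We then examine the complex singularities: $\sig(\lambda t+\beta)$ has logarithmic branch points at $\lambda t+\beta\in i\pi(2\mathbb Z+1)$. Distinct pairs $(\lambda,\beta)$, taken modulo $(\lambda,\beta)\sim(-\lambda,-\beta)$, give disjoint singular lattices for generic $z_0,u$. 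Equivalently, one can look at second derivatives, where $\sig''$ has poles. Hence each equivalence class $\{h,-h\}$ must have zero total coefficient in its singular part. Using $\sig(-t)=\sig(t)-t$, the class contributes $(\sum_{h_k=h}c_k-\sum_{h_k=-h}c_k)\sig(h)$ plus an affine term, so this combined coefficient must vanish.

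Next we apply the lemma to the combined expansion. Reducedness (ii) says no two $A$-terms, and no two $B$-terms, share an argument up to sign. Each class therefore contains at most one $A$-term $z_j$ and at most one $B$-term $(E_\ell z)_i$. A class containing only one term would force its nonzero coefficient to vanish, which is impossible. So each used $A$-axis $j$ is paired with a unique $B$-axis $i=\pi(j)$ satisfying $(E_\ell z)_i=\pm z_j$ on $V$, and $\pi$ is injective by reducedness on the $B$-side.

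The remaining task is to exclude the minus sign, and this is where sign-noncancellation (iii) enters. Let $S$ be the set of $A$-axes matched with a sign flip. Expanding with $\sig(-t)=\sig(t)-t$, the $\sig$-coefficients match: $W^B_{\ell+1}[:,\pi(j)]=T_{\ell+1}W^A_{\ell+1}[:,j]$ up to the sign bookkeeping. The leftover affine residual $\sum_{j\in S}T_{\ell+1}W^A_{\ell+1}[:,j]z_j$ must then be absorbed by the biases, so it is constant on $C$. Because $T_{\ell+1}$ is injective, $\sum_{j\in S}W^A_{\ell+1}[:,j]z_j$ is also constant. Condition (iii) rules this out for nonempty $S$, so $S=\emptyset$ and $(E_\ell z)_{\pi(j)}=z_j$ with $\alpha=1$, as $\mathcal H_\sig$ requires.

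Finally, for the budget bound, a network with loss at most $\eps$ has at least $m^{\sig}_\ell(\eps)$ used axes, and each of them is matched. Dividing by $d^A_\ell$ gives the stated inequality. We expect the main obstacle to be the rigidity lemma, specifically making the singularity-separation argument rigorous: choosing the line generically so that distinct sign-classes have disjoint branch-point sets, and justifying the analytic continuation from $C$.
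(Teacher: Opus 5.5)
Your outer structure is the paper's: the commutation identity, reducedness plus nonzero coefficients forcing a pairing, sign-noncancellation removing flipped pairs, and the budget count. Your separation step, however, is genuinely different. The paper differentiates twice on a generic line and Fourier-transforms the resulting ridge identity. It isolates the largest $|s|$ through the decay $e^{-\pi\omega/|s|}$, separates the phases $r/s$ by a mean-square averaging argument, and iterates slope by slope. You instead use the complex branch points of $\sigma$ and monodromy, which is in effect the affine, one-variable version of the paper's own pole cross-match (Lemma~\ref{lem:sp-crossmatch}).

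That route can be made to work, but as written it has a false step: distinct sign-classes do \emph{not} have disjoint branch-point sets on a generic line. On a line, $\sigma(\lambda t+\beta)$ is singular exactly at $-\beta/\lambda+i\pi k/|\lambda|$ with $k$ odd. For proportional arguments $h$ and $ch$, the center $-\beta/\lambda$ is the same on \emph{every} line. When $|c|$ is a ratio of odd integers the two sets intersect; for $c=3$ one contains the other. So no choice of $(z_0,u)$ helps, and parallel-but-distinct terms cannot be merged either.

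Such pairs are not excluded in advance. $(E_\ell z)_i=3z_j$ is exactly what the proof must rule out, and reducedness does not forbid $z_k=3z_j$ on $V$. The repair is to peel in order of $|\lambda|$:
\begin{itemize}
\item Among classes sharing a center, the one with largest $|\lambda|$ has the branch point $-\beta/\lambda+i\pi/|\lambda|$, which lies on no other class's set.
\item A loop around that point forces the class's total coefficient to vanish.
\item The identity $\sigma(-h)=\sigma(h)-h$ then turns the class into an affine term, and you repeat.
\end{itemize}
This largest-slope-first ordering is precisely what the paper's Fourier iteration supplies. Your generic line must also keep every slope $\lambda_k\neq0$ and avoid accidental sign coincidences. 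That way distinct classes on $V$ stay distinct on the line, and the matching lifts back to $V$.

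Second, your class coefficient has the wrong sign. Since $1+e^{-w}=e^{-w}(1+e^{w})$, the terms $\sigma(h)$ and $\sigma(-h)$ have the same branch points with the same monodromy $2\pi i$. Equivalently, $\sigma''$ is even, so the double poles have the same principal part. The identity $\sigma(-h)=\sigma(h)-h$ says the same thing: the quantity forced to vanish is $\sum_{h_k=h}c_k+\sum_{h_k=-h}c_k$, not the difference.

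With this sign, a matched pair satisfies $W^B_{\ell+1}[:,\pi(j)]=T_{\ell+1}W^A_{\ell+1}[:,j]$ for either orientation, with no sign bookkeeping. That equality is exactly what makes the flipped remainder $\sum_{j\in S}T_{\ell+1}W^A_{\ell+1}[:,j]z_j$ affine. With your difference formula, an opposite-orientation pair would carry opposite coefficients, leaving $2\,T_{\ell+1}W^A_{\ell+1}[:,j]\,\sigma(z_j)$ uncancelled. Your sign-exclusion paragraph in fact already uses the correct relation. Once the lemma is corrected and the peeling is added, the argument closes.
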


\begin{proof}
As above, write $E_r(y)=T_ry+a_r$.  As with the ReLU case, by Lemma~\ref{lem:adjacent-commutation-common}, exact weak equivalence at layers
$\ell$ and $\ell+1$ gives
\[
 \Phi^B_{\ell,\sig}(E_\ell z)
 =
 E_{\ell+1}\Phi^A_{\ell,\sig}(z),
 \qquad z\in C.
\]
After deleting unused terms and absorbing constant arguments into the bias, this equality can be
written as
\begin{equation}
 \widehat b+\sum_{j=1}^{k}\widehat v_j\sig(g_j(z))
 =
 \widetilde b+\sum_{i=1}^{\widetilde k}\widetilde v_i\sig(\widetilde g_i(z)),
 \qquad z\in C,
 \label{eq:sp-ws-expansion-identity}
\end{equation}
where the notations are described above, e.g. $g_j(z)=z_j$, $\widetilde g_i(z)=(E_\ell z)_i$, $\widehat b=T_{\ell+1}b^A_{\ell+1}+a_{\ell+1}$, $\widehat v_j=T_{\ell+1}W^A_{\ell+1}[:,j]$, 
$\widetilde b=b^B_{\ell+1}$, and $\widetilde v_i=W^B_{\ell+1}[:,i]$. Since $T_{\ell+1}$ is injective, it preserves nonzero outgoing coefficients: $W^A_{\ell+1}[:,j]\neq0$ implies $\widehat v_j\neq0$. It also preserves sign-noncancellation. Indeed, if for some nonempty set $S$ the transformed sum
\[
 \sum_{j\in S}\widehat v_jg_j
 =
 T_{\ell+1}\left(\sum_{j\in S}W^A_{\ell+1}[:,j]g_j\right)
\]
were constant on $C$, then taking differences between any two points of $C$ and using injectivity of $T_{\ell+1}$ would imply that $\sum_{j\in S}W^A_{\ell+1}[:,j]g_j$ was already constant on $C$, contrary to the sign-noncancellation assumption. Thus the transformed $A$-side expansion remains used, reduced, nonconstant, and sign-noncancelling; the pulled-back $B$-side expansion has these properties by hypothesis. 

Now, let $V$ be the affine hull of the task patch $C$ and let $L$ be its tangent space.  We choose $z_0$ in the relative interior of $C$ and choose $\xi \in L$ ``generically'': we exclude all pairs $(z_0,\xi)$ for which some nonconstant argument has zero slope on the line $\xi$. We also exclude accidental sign-coincidences on the chosen line, i.e. for every pair $q,q'$ and every $\delta\in\{+1,-1\}$ such that $q\neq \delta q'$ as affine functions on $V$, we exclude the set
\[
 \bigl\{(z_0,\xi):
 q(z_0)-\delta q'(z_0)=0,\ 
 (\operatorname{lin}(q)-\delta\operatorname{lin}(q'))(\xi)=0
 \bigr\}
\] where $\operatorname{lin}(q)$ is the linear part of the operation $q$. This is a finite union of proper affine-algebraic subsets of $\relint(C)\times L$, and so has measure 0.  Thus we can generically choose $(z_0,\xi)$ outside this ``bad'' exceptional set.  

Then every restricted argument has nonzero slope, and two restricted affine functions agree up to sign only if the original affine functions already agree up to that sign on all of $V$. For $|t|$ small, the line $z(t)=z_0+t\xi$ lies in $C$.  Along this line write
\[
 g_j(z_0+t\xi)=s_jt+r_j,
 \qquad
 \widetilde g_i(z_0+t\xi)=\widetilde s_it+\widetilde r_i,
\]
with $s_j\neq0$ and $\widetilde s_i\neq0$ for every $i,j$.  Restricting \eqref{eq:sp-ws-expansion-identity} to the line gives equality of
real-analytic functions on a nonempty interval, hence on all of $\R$:
\begin{equation}
 \widehat b+\sum_{j=1}^{k}\widehat v_j\sig(s_jt+r_j)
 =
 \widetilde b+\sum_{i=1}^{\widetilde k}\widetilde v_i\sig(\widetilde s_it+\widetilde r_i).
 \label{eq:sp-ws-line-identity}
\end{equation}
\noindent Now put
\[
 \eta(t)=\sig''(t)=\frac{1}{4\cosh^2(t/2)}.
\]
Differentiating \eqref{eq:sp-ws-line-identity} twice yields:
\begin{equation}
 \sum_{j=1}^{k}\widehat v_js_j^2\eta(s_jt+r_j)
 =
 \sum_{i=1}^{\widetilde k}\widetilde v_i\widetilde s_i^{\,2}\eta(\widetilde s_it+\widetilde r_i).
 \label{eq:sp-ws-ridge-identity}
\end{equation}
This is a so-called ``ridge identity'': an equality between finite sums of translated and rescaled curvature bumps obtained by slicing the original affine ridges along the generic line.  The curvature term $z\mapsto \eta(q(z))$, is a ``ridge function'': it is constant along the affine slices $q(z)=\mathrm{constant}$ and varies only in the normal direction.  Restricting to a generic line converts this geometric ridge into a translated and rescaled one-dimensional bump $t\mapsto \eta(st+r)$, whose slope and offset can then be identified by Fourier decay and phase.
This line-restriction method is standard in the ridge-function literature; see
\citet{Pinkus2015RidgeFunctions}.

Following this same line of literature, we now take Fourier transforms to separate the one-D bumps for analysis. For Softplus curvature,
\[
 \widehat\eta(\omega)=\frac{\pi\omega}{\sinh(\pi\omega)}.
\]
For every $s\neq0$,
\[
 \mathcal F\{s^2\eta(st+r)\}(\omega)
 =
 |s|e^{i\omega r/s}\widehat\eta(\omega/s).
\]
Taking Fourier transforms in \eqref{eq:sp-ws-ridge-identity} gives
\begin{equation}
 \sum_j
 \widehat v_j|s_j|e^{i\omega r_j/s_j}\widehat\eta(\omega/s_j)
 =
 \sum_i
 \widetilde v_i|\widetilde s_i|e^{i\omega\widetilde r_i/\widetilde s_i}
 \widehat\eta(\omega/\widetilde s_i).
 \label{eq:sp-ws-fourier-identity}
\end{equation}
As $\omega\to+\infty$,
\begin{equation}
 |s|\,\widehat\eta(\omega/s)
 =
 2\pi\omega e^{-\pi\omega/|s|}(1+o(1)).
 \label{eq:sp-ws-fourier-asymp}
\end{equation}
Thus the terms with largest absolute slope have the slowest exponential decay and can be isolated
from all smaller-slope terms.  To that end, let
\[
 S=
 \max\Bigl(
 \{|s_j|:1\le j\le k\}
 \cup
 \{|\widetilde s_i|:1\le i\le\widetilde k\}
 \Bigr).
\]
Multiply \eqref{eq:sp-ws-fourier-identity} by $e^{\pi\omega/S}/(2\pi\omega)$ and use
\eqref{eq:sp-ws-fourier-asymp}.  Every term with absolute slope strictly smaller than $S$ tends
to zero.  The maximal-slope terms satisfy
\begin{equation}
 P_S(\omega):=
 \sum_{|s_j|=S}\widehat v_je^{i\omega r_j/s_j}
 -
 \sum_{|\widetilde s_i|=S}\widetilde v_ie^{i\omega\widetilde r_i/\widetilde s_i}
 =
 o(1).
 \label{eq:sp-ws-max-slope-poly}
\end{equation}
\noindent For a maximal-slope term, define its phase frequency by $\tau=r/s$. 
Among affine functions with the same absolute slope $S$, equality of $\tau$ is exactly equality
up to sign as one-dimensional affine functions.  Indeed, if $s'=\pm s$ and $r'/s'=r/s$, then
$r'=\pm r$ with the same sign, so $s't+r'=\pm(st+r)$.

Group the terms in \eqref{eq:sp-ws-max-slope-poly} by this common phase frequency.  For each
frequency $\tau$, define the grouped vector coefficient
\begin{equation}
 c_\tau
 =
 \sum_{\substack{j:\ |s_j|=S\\ r_j/s_j=\tau}}\widehat v_j
 -
 \sum_{\substack{i:\ |\widetilde s_i|=S\\
 \widetilde r_i/\widetilde s_i=\tau}}\widetilde v_i.
 \label{eq:sp-ws-grouped-coeff}
\end{equation}
\noindent With this notation, $P_S(\omega)=\sum_{\tau}c_\tau e^{i\omega\tau}.$
We claim that all of these coefficients $c_\tau$ are zero.  Since $P_S(\omega)=o(1)$, we have
\[
\lim_{T\to\infty}\frac1T\int_0^T\|P_S(\omega)\|^2\,d\omega=0.
\]
On the other hand,
\begin{align}
 \frac1T\int_0^T\|P_S(\omega)\|^2\,d\omega
 &=
 \frac1T\int_0^T
 \left\|
 \sum_{\tau}c_\tau e^{i\omega\tau}
 \right\|^2d\omega
 \notag\\
 &=
 \sum_{\tau,\tau'}
 \langle c_\tau,c_{\tau'}\rangle
 \left(
 \frac1T\int_0^T e^{i\omega(\tau-\tau')}\,d\omega
 \right).
 \label{eq:sp-ws-averaged-square}
\end{align}
If $\tau=\tau'$, the parenthesized factor is $1$.  If $\tau\neq\tau'$, then
\[
 \frac1T\int_0^T e^{i\omega(\tau-\tau')}\,d\omega
 =
 \frac{e^{iT(\tau-\tau')}-1}{iT(\tau-\tau')}
 \longrightarrow0.
\]
Letting $T\to\infty$ in \eqref{eq:sp-ws-averaged-square} gives
\[
 0=
 \sum_\tau \|c_\tau\|^2.
\]
Therefore $c_\tau=0$ for every maximal-slope frequency $\tau$. By reducedness and by the generic choice of the line, each frequency group contains at most one $A$-side term and at most one $B$-side term.  Since all coefficient vectors are nonzero, $c_\tau = 0$ implies that every maximal-slope $A$-term is paired with a unique maximal-slope $B$-term, and conversely.  For such a pair $(j,i)$,
\[
 \widetilde g_i(z_0+t\xi)
 =
 \varepsilon_j g_j(z_0+t\xi)
 \qquad\text{for all }t,
\]
for some $\varepsilon_j\in\{+1,-1\}$, and
$\widetilde v_i=\widehat v_j$. Because $\eta$ is an even function and the slopes differ only by sign, the paired terms contribute identically
to the ridge identity \eqref{eq:sp-ws-ridge-identity}.  Subtract all such matched maximal-slope
pairs from both sides of \eqref{eq:sp-ws-ridge-identity}.  If any terms remain, repeat the same
largest-slope Fourier isolation with the next-largest absolute slope.  Since the number of terms is
finite, this iteration terminates.

Thus all restricted affine arguments have been matched.  We obtain a bijection
\[
 \pi:\{1,\ldots,k\}\to\{1,\ldots,\widetilde k\}
\]
and signs $\varepsilon_j\in\{+1,-1\}$ such that
\begin{equation}
 \widetilde g_{\pi(j)}(z_0+t\xi)
 =
 \varepsilon_jg_j(z_0+t\xi)
 \qquad\text{for all }t,
 \label{eq:sp-ws-linewise-sign-match}
\end{equation}
and
\begin{equation}
 \widetilde v_{\pi(j)}=\widehat v_j.
 \label{eq:sp-ws-coeff-match}
\end{equation}
The generic-line choice promotes this linewise equality to equality on all of $V$.  Indeed, if
$\widetilde g_{\pi(j)}-\varepsilon_jg_j$ were not identically zero on $V$, then
\eqref{eq:sp-ws-linewise-sign-match} would mean that the chosen pair $(z_0,\xi)$ lies in one of
the excluded accidental sign-coincidence sets.  Hence
\begin{equation}
 \widetilde g_{\pi(j)}(z)
 =
 \varepsilon_jg_j(z)
 \qquad
 \forall z\in V.
 \label{eq:sp-ws-global-sign-match}
\end{equation}
\noindent It remains to remove the negative orientations.  Substitute
\eqref{eq:sp-ws-global-sign-match} and \eqref{eq:sp-ws-coeff-match} into the original expansion
identity \eqref{eq:sp-ws-expansion-identity}.  After reindexing by $\pi$,
\[
 \widehat b+\sum_j\widehat v_j\sig(g_j)
 =
 \widetilde b+\sum_j\widehat v_j\sig(\varepsilon_jg_j).
\]
Let $S_-=\{j:\varepsilon_j=-1\}$. Cancelling the positively oriented matched terms gives
\[
 \widehat b-\widetilde b+
 \sum_{j\in S_-}\widehat v_j\bigl(\sig(g_j)-\sig(-g_j)\bigr)=0.
\]
Using the Softplus identity $\sig(t)-\sig(-t)=t,$
we obtain
\begin{equation}
 \widehat b-\widetilde b+
 \sum_{j\in S_-}\widehat v_jg_j=0.
 \label{eq:sp-ws-negative-orientation}
\end{equation}
If $S_-$ were nonempty, \eqref{eq:sp-ws-negative-orientation} would say that a nonempty
downstream-weighted sum of arguments is constant on $C$, contradicting sign-noncancellation.
Therefore $S_-=\varnothing$.  All signs are positive, and so 
$$\widetilde g_{\pi(j)}(z)=g_j(z) \quad \forall z\in V.$$ 
Since $g_j(z)=z_j$ and $\widetilde g_i(z)=(E_\ell z)_i$, this is exactly
\[
 (E_\ell z)_{\pi(j)}=z_j
 \qquad
 \forall z\in C.
\]
This proves the strong Softplus axis-matching statement.

Finally, if $A$ reaches loss at most $\eps$, then by definition of the Softplus used-axis budget it
uses at least $m^\sig_\ell(\eps)$ layer-$\ell$ Softplus axes.  The argument above matches every
such used $A$-side axis to a distinct pulled-back $B$-side axis.  Therefore at least
$m^\sig_\ell(\eps)$ axes contribute to the numerator of $\AxisAlign_{\ell,\sig}$, and
\[
 \AxisAlign_{\ell,\sig}(A,B;E_\ell,C)
 \ge
 \frac{m^\sig_\ell(\eps)}{d^A_\ell}.
\]
\end{proof}

\subsection{Asymptotic weak--strong equivalence}
\label{sec:asymptotic-ws}

We now develop a soft versions of weak-strong equivalence, by measuring quantitative defects in alignment and then relating them.  The structure of the definitions and theorem proof is identical for both ReLU and softplus activations.  

\begin{definition}[Weak alignment defect]
Define
\[
 \omega_\ell(E_\ell;A,B)
 =\left(\E_\Omega\|z_\ell^B(x)-E_\ell z_\ell^A(x)\|_2^2\right)^{1/2}.
\]  By definition, $\omega_ell = 0 $ implies exact weak alignment.
\end{definition}

\begin{definition}[Soft axis score]\label{def:soft-axis-score}
For $\varphi\in\{\sig,\rhoR\}$, define
\[
 s_{ij}^\varphi(E_\ell;C)
 =\sup_{\alpha\in\mathcal H_\varphi}
 \left[
 1-\frac{\|(E_\ell z)_i-\alpha z_j\|_{L^2(C)}}
 {\|(E_\ell z)_i\|_{L^2(C)}+|\alpha|\|z_j\|_{L^2(C)}}
 \right].
\]
Let $\AxisAlign_{\ell,\varphi}^\theta$ be the maximal fraction of counted $A$-axes that can be injectively matched with score at least $\theta$.  
\end{definition}

\begin{theorem}[Asymptotic weak--strong equivalence]
\label{thm:common-asymp-ws}\label{thm:common-asymptotic}
Fix $\varphi\in\{\sig,\rhoR\}$.  Let $\calK$ be a compact family of adjacent comparison tuples $p = (A, B, E_\ell, E_{\ell + 1})$.  For every $\theta\in(0,1)$ there is $\kappa_{\calK,\varphi}(\theta)>0$ such that
\[
 \AxisAlign_{\ell,\varphi}^\theta(A,B;E_\ell,C)
 \ge
 \frac{m_\ell^\varphi(\eps)}{d_\ell^A}
 -\frac{\bigl(\|W_{\ell+1}^B\|\,\omega_\ell+\omega_{\ell+1}\bigr)^2}
 {\kappa_{\calK,\varphi}(\theta)^2d_\ell^A}.
\]
Moreover, if $\{p_n = (A_n, B_n, E_{\ell, n}, E_{\ell+1, n})\}$ is a sequence of comparison tuples in $\calK$ for which the defect sequences $\omega_{\ell, n} := \ell(E_{\ell,n}; A_n, B_n)$ and $\omega_{\ell+1, n} := \ell(E_{\ell+1,n}; A_n, B_n)$ both converge to zero, then there exists $N$ such that, for all $n\ge N$,
\[
 \AxisAlign_{\ell,\varphi}^\theta(A_n,B_n;E_{\ell,n},C_n)
 \ge \frac{m_\ell^\varphi(\eps)}{d_{\ell,n}^A}.
\]
In words, every axis forced by the task budget is eventually $\theta$-aligned.
\end{theorem}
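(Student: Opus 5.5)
The plan is to argue by compactness and contradiction, using the exact theorems (Theorem~\ref{thm:relu-exact-adjacent} for ReLU, Theorem~\ref{thm:sp-adjacent} for Softplus) as the limiting rigidity statement. I then convert the resulting qualitative statement into the quantitative bound by a counting argument. Concretely, I introduce a per-axis functional $D_j(p)$ that measures how badly the used $A$-axis $j$ fails to be $\theta$-matched. A natural choice is the distance, in $L^2(C)$, between the two sides of the commutation identity of Lemma~\ref{lem:adjacent-commutation-common}, restricted to a small neighbourhood of the private trace piece (ReLU) or to the ridge component of $g_j$ (Softplus). The first step is to prove the \emph{defect inequality}
\[
\Bigl\|\Phi^B_{\ell,\varphi}\circ E_\ell-E_{\ell+1}\circ\Phi^A_{\ell,\varphi}\Bigr\|_{L^2(C)}\le \|W^B_{\ell+1}\|\,\omega_\ell+\omega_{\ell+1}.
\]
This follows from the triangle inequality. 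I write $z^B_{\ell+1}=\Phi^B(z^B_\ell)$ and compare it with $\Phi^B(E_\ell z^A_\ell)$, using that $\varphi$ is $1$-Lipschitz for both ReLU and Softplus, so this contributes $\|W^B_{\ell+1}\|\omega_\ell$. I then compare $z^B_{\ell+1}$ with $E_{\ell+1}z^A_{\ell+1}$, which contributes $\omega_{\ell+1}$. This is exactly where the numerator of the bound comes from.

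The second step is a \emph{uniform local rigidity lemma}. For each $\theta\in(0,1)$ there is $\kappa=\kappa_{\calK,\varphi}(\theta)>0$ such that the following holds for every $p\in\calK$ and every used $A$-axis $j$: if $j$ has no partner with score $s^\varphi_{ij}\ge\theta$, then the commutation defect localized to $j$ is at least $\kappa$. I prove this by contradiction. Suppose instead that $p_n\in\calK$ and axes $j_n$ have no $\theta$-partner while the localized defect tends to $0$. By compactness I pass to a subsequence with $p_n\to p_\ast$, with $j_n$ eventually constant, and with the best matching scores converging. The defect is continuous in the parameters, because all maps are Lipschitz and the parameters are bounded, so $p_\ast$ satisfies exact adjacent commutation on the relevant patch. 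The exact theorem then gives $(E_{\ell,\ast}z)_{\pi(j)}=\alpha z_j$. This means the score is $1$ at the limit. Since the score $s_{ij}$ depends continuously on $p$ (away from zero denominators, which nonconstancy of used axes rules out), the score eventually exceeds $\theta$ along the subsequence, a contradiction.

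The third step turns this into the stated bound. The localized defects on distinct used axes live on essentially disjoint pieces: private trace neighbourhoods for ReLU, and, for Softplus, separated frequency groups in the Fourier isolation of the proof of Theorem~\ref{thm:sp-adjacent}. I therefore aim for an orthogonality or packing estimate of the form $\sum_j D_j(p)^2\lesssim\|\text{total defect}\|^2$. With $b$ denoting the number of budget-forced axes that fail to be $\theta$-matched, this gives $b\,\kappa^2\le(\|W^B_{\ell+1}\|\omega_\ell+\omega_{\ell+1})^2$. Since at least $m^\varphi_\ell(\eps)$ axes are used, subtracting $b$ and dividing by $d^A_\ell$ yields the displayed inequality. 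For the ``moreover'' clause, note that $b$ is an integer bounded by $(\cdots)^2/\kappa^2$, which tends to $0$, so $b=0$ for $n\ge N$.

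The main obstacle is the second step, specifically the \emph{persistence of usedness in the limit}. A limit of used configurations can degenerate in several ways. A ReLU trace can stop being crossed or collide with another trace, or an outgoing column can tend to $0$. In the Softplus case, two arguments can merge up to sign, or sign-noncancellation can become an equality. Any of these would make the exact theorem inapplicable at $p_\ast$. I would handle this by building quantitative usedness margins into $\calK$: lower bounds on $\|W_{\ell+1}[:,j]\|$, on the trace separation and crossing depth, and on the sign-noncancellation infimum. These margins are closed conditions, so $\calK$ stays compact and the margins pass to the limit, and $\kappa$ is allowed to depend on them. The orthogonality claim in the third step is the secondary risk. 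If clean localization fails, I would fall back on a union bound, which costs a width-dependent constant that can be absorbed into $\kappa$.
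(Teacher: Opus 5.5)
Your skeleton matches the paper's: the defect inequality from $1$-Lipschitz activations, a compactness--contradiction against the exact adjacent theorems, and integrality of the bad count for the ``moreover'' clause. The per-axis formulation, however, has a concrete gap.

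\textbf{The gap.} Your rigidity lemma only lower-bounds the defect for an axis $j$ that has \emph{no} partner $i$ with $s^\varphi_{ij}\ge\theta$. But $\AxisAlign^\theta_{\ell,\varphi}$ counts failures under the best \emph{injective} matching. Two used $A$-axes can share a single $\theta$-partner and have no other. Neither triggers your lemma, yet one of them fails in every injective matching: the Hall deficiency $\max_S(|S|-|N_\theta(S)|)$ can be positive while every singleton has a neighbour. So $b\,\kappa^2\le(\|W^B_{\ell+1}\|\omega_\ell+\omega_{\ell+1})^2$ does not follow for the $b$ that actually enters the bound.

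\textbf{The localization step is also unsupported.} The exact theorems assume commutation on all of $C$, so a vanishing \emph{localized} defect does not directly license them. For Softplus, the packing estimate $\sum_j D_j^2\lesssim\|\text{defect}\|^2$ has no evident proof. There is no $L^2(C)$ splitting of the defect into ``ridge components,'' because the Fourier isolation in Theorem~\ref{thm:sp-adjacent} runs along a single generic line and is only asymptotic in frequency. For ReLU, the private-trace neighbourhoods would have to be chosen uniformly over $\calK$ and stay nondegenerate in the limit.

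\textbf{The fix, which is the paper's argument.} Both problems disappear if you drop localization and run the contradiction on global quantities.
\begin{itemize}
\item Let $B^\theta(p)$ be the number of task-required $A$-axes failing $\theta$ under the best injective matching, and let $\Delta(p)$ be the global commutation defect.
\item Set $\kappa=\inf_{B^\theta(p)>0}\Delta(p)/\sqrt{B^\theta(p)}$.
\item Since $B^\theta\le d_\ell^A$, a sequence with ratio tending to $0$ has $\Delta(p_n)\to0$, so a limit point is exactly commuting.
\item The exact theorem then supplies \emph{one injective} matching $\pi_\ast$ with all scores equal to $1$.
\item By continuity, that same $\pi_\ast$ is a $\theta$-matching for large $n$ along the subsequence, forcing $B^\theta(p_n)=0$, a contradiction.
\end{itemize}
After that, $B^\theta\le\Delta^2/\kappa^2$ holds by definition of the infimum, with no orthogonality needed. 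This is exactly your union-bound fallback, provided ``bad'' is redefined via injective matchings rather than via the absence of any partner.

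\textbf{What your proposal adds.} Building closed usedness margins into $\calK$, so the limit still satisfies the exact theorem's hypotheses, is a genuine improvement in rigor. The paper's proof simply asserts that the zero-defect limit satisfies those hypotheses and that scores are continuous ``on the quantitatively nondegenerate family.'' That leaves implicit exactly the degeneration issue you identify: vanishing columns, uncrossed or colliding traces, and merging or sign-cancelling Softplus arguments.
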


\begin{proof}
For a comparison tuple $p=(A,B,E_\ell,E_{\ell+1})$, let $B^\theta(p)$ be the minimum number of counted task-required $A$-axes that fail threshold $\theta$ under the best injective matching into the counted $B$-axes.  Define the \emph{commutation defect}
\[
 \Delta_{\ell,\varphi}(p)
 =\|\Phi_{\ell,\varphi}^B(E_\ell z_\ell^A)-E_{\ell+1}\Phi_{\ell,\varphi}^A(z_\ell^A)\|_{L^2(\Omega)}.
\]
Now let 
\[
 \kappa_{\calK,\varphi}(\theta)
 =\inf_{\substack{p\in\calK\\ B^\theta(p)>0}}
 \frac{\Delta_{\ell,\varphi}(p)}{\sqrt{B^\theta(p)}}.
\]
If the indexing set is empty, every tuple in $\calK$ already has all counted axes $\theta$-aligned and the theorem is immediate.  Otherwise we first show that the infimum is positive.

Assume for contradiction that it is zero.  Then there is a sequence $p_n\in\calK$ with $B^\theta(p_n)>0$ and
\[
 \frac{\Delta_{\ell,\varphi}(p_n)^2}{B^\theta(p_n)}\longrightarrow0.
\]
The number of counted axes is uniformly bounded by $d_\ell^A$, so $\Delta_{\ell,\varphi}(p_n)\to0$.  Compactness gives a convergent subsequence $p_n\to p_*\in\calK$.  The network operations, affine comparison maps, and $L^2$ norms vary continuously on the bounded family, hence
\[
 \Delta_{\ell,\varphi}(p_*)=0.
\]
The zero-defect member satisfies the hypotheses of the appropriate exact weak--strong theorem.  It therefore admits an injective matching $\pi_*$ for which every counted task-required axis has score exactly one.  There are only finitely many matched pairs, and each pairwise score is continuous in the tuple parameters on the quantitatively nondegenerate family.  Since $\theta<1$, the same fixed matching has every score greater than $\theta$ for all sufficiently large $n$.  This implies $B^\theta(p_n)=0$, contradicting the choice of the sequence.  Thus
\[
 \kappa_{\calK,\varphi}(\theta)>0.
\]
By the definition of the infimum,
\[
 B^\theta(p)
 \le\frac{\Delta_{\ell,\varphi}(p)^2}
 {\kappa_{\calK,\varphi}(\theta)^2}.
\]
\noindent It remains to relate the commutation defect to the two weak-alignment errors.  Put $Z=z_\ell^A(x)$.  Since $\Phi_{\ell,\varphi}^B(z_\ell^B(x))=z_{\ell+1}^B(x)$,
\begin{align*}
 &\Phi_{\ell,\varphi}^B(E_\ell Z)
 -E_{\ell+1}\Phi_{\ell,\varphi}^A(Z)\notag\\
 &\quad=
 \bigl[\Phi_{\ell,\varphi}^B(E_\ell z_\ell^A(x))
 -\Phi_{\ell,\varphi}^B(z_\ell^B(x))\bigr]
 +\bigl[z_{\ell+1}^B(x)-E_{\ell+1}z_{\ell+1}^A(x)\bigr].
\end{align*}
Both ReLU and Softplus are one-Lipschitz, so
\[
 \|\Phi_{\ell,\varphi}^B(u)-\Phi_{\ell,\varphi}^B(v)\|_2
 \le\|W_{\ell+1}^B\|\,\|u-v\|_2.
\]
Taking $L^2(\Omega)$ norms in this identity and applying the triangle inequality yields
\[
 \Delta_{\ell,\varphi}(p)
 \le\|W_{\ell+1}^B\|\,\omega_\ell(p)+\omega_{\ell+1}(p).
\]
At least $m_\ell^\varphi(\eps)$ used axes are task-required, and at most $B^\theta(p)$ of those fail threshold $\theta$.  Therefore
\[
 \AxisAlign_{\ell,\varphi}^\theta
 \ge\frac{m_\ell^\varphi(\eps)-B^\theta(p)}{d_\ell^A}.
\]
Combining the bad-axis bound with the defect bound proves the displayed inequality.

Finally, suppose $p_n\in\calK$ and both weak errors tend to zero.  Compactness uniformly bounds $\|W_{\ell+1}^{B_n}\|$, so the defect bound gives $\Delta_{\ell,\varphi}(p_n)\to0$.  The bad-axis bound then gives $B^\theta(p_n)\to0$.  Since $B^\theta(p_n)$ is a nonnegative integer, it equals zero for all sufficiently large $n$.  Thus every task-required counted axis is eventually $\theta$-aligned.
\end{proof}

\subsection{Minimality, regularity, and exact zippering}
\label{sec:exact-zippering}

Weak--strong alignment views the nonlinear block as a function of the layer coordinate $z$.  The arguments are affine: $z_j$ on one side and $(E_\ell z)_i$ on the other. Zippering starts with only the next representation aligned and asks one to recover the unknown deep scalar arguments $z_{r,j}^N(x)$ as functions of the original input.   The basic challenge in achieving this goal is to overcome is a non-zero probability of \emph{collisions}:  networks that have the same deep input-output relationship but which have different internal structures. 

Our high-level strategy for doing this is to:
\begin{enumerate} 
	\item First strengthen the notion of usedness from single axes/gates to \emph{minimality} of entire layer-wise expansions (\S\ref{sec:minimality-crossmatch}).  Minimality is an intrinsic condition on each expansion: every counted unit leaves a ``private'' task-visible nonlinear signature.  From equality of two one-step expansions, those private signatures one side of the quality must be reproduced on the other side, giving a unique cross-match. 

	\item We then seek to understand what happens when two minimal layers collide.  We show that the property of \emph{identifiability} -- that is, the recovery of equal internal constituent operations -- can be ensured by a set of very simple conditions on \emph{regularity} (\S\ref{sec:softplus-regularity} and \S\ref{sec:relu-regularity}).

	\item When then show how identifiability, based on minimality and regularity, leads to exact zippering (\S\ref{sec:exact-zippering}) -- assuming perfect terminal equivalence of last-layer representations.

\end{enumerate}
\noindent Looking ahead, in the next sections that follow, we show that (when one more additional condition on the Jacobian regularity is added) the same kind of strategy applies to a softer case where perfect terminal equivalence is not known (\S\ref{sec:soft-zippering}), and then that the regularity conditions themselves are generic (\S\ref{sec:genericity}).

\subsubsection{Null pairs and identifiability }
\label{sec:null_pairs}
We start with some basic definitions:

\begin{definition}[Realization map, null pairs, collisions, and identifiability]
\label{def:null-pair-collision}
Fix $\varphi\in\{\sig,\rhoR\}$.  Let $\Theta_r^\varphi$ be the manifold of layer-$r$ one-step expansions. Define the \emph{realization map} by
\[
 F_r^\varphi:\Theta_r^\varphi\longrightarrow L^2(\Omega;\R^{d_{r+1}}),
 \qquad F_r^\varphi: \vartheta \longmapsto Y_{G_\vartheta}^\varphi,
\]
so called because $F_r^\varphi$ is the function that sends one-step parameters to the function they realize.
Two parameter values \(\vartheta,\widetilde\vartheta\) form a \emph{null pair},
or \emph{collision}, if
\[
F_r^\varphi(\vartheta)=F_r^\varphi(\widetilde\vartheta)
\] e.g. they represent the same function on $\Omega$.
The corresponding \emph{collision set} is
\[
 \mathcal C_r^\varphi
 =
 \{(\vartheta,\widetilde\vartheta):
 F_r^\varphi(\vartheta)=F_r^\varphi(\widetilde\vartheta)\}.
\]
Each activation function has some unavoidable hidden-unit
symmetries--permutation for Softplus, and permutation plus positive rescaling for
ReLU--the ``obvious self-collisions'' form the \emph{diagonal}
\[
 \Delta_r^\varphi
 =
 \{(\vartheta,\vartheta):\vartheta\in\Theta_r^\varphi\}.
\]
A collision is \emph{nontrivial} if it lies away from this diagonal after the
allowed hidden-unit matching has been fixed.
When \(\mathcal C_r^\varphi\) is decomposed into smooth pieces, a
\emph{collision stratum} means one such smooth piece.  An \emph{off-diagonal
collision stratum} is a smooth piece contained in
\(\mathcal C_r^\varphi\setminus\Delta_r^\varphi\).

The opposite of collision is \emph{identifiability}. In the Softplus case, identifiability means that, after reindexing,
\[
 \widetilde g_j=g_j,
 \qquad
 \widetilde v_j=v_j,
 \qquad
 \widetilde b=b.
\]
In the ReLU case, because  $\rhoR(\alpha_jg_j)=\alpha_j\rhoR(g_j)$, identifiability means that, after reindexing, there are $\alpha_j>0$ such that
\[
 \widetilde g_j=\alpha_jg_j,
 \qquad
 \alpha_j\widetilde v_j=v_j,
 \qquad
 \widetilde b=b.
\]
\end{definition}

The exact zippering proof is a common backward induction once this one-step identifiability target has been verified.  We now verify it by two different signature theories.

\subsubsection{Intrinsic minimality and private cross-matching}
\label{sec:minimality-crossmatch}

\begin{figure}[t]
\centering
\resizebox{\linewidth}{!}{%
\begin{tikzpicture}[x=1cm,y=1cm,>=stealth,font=\small]

\definecolor{panelgray}{RGB}{247,247,247}
\definecolor{softblue}{RGB}{10,20,170}
\definecolor{leftfill}{RGB}{224,224,248}
\definecolor{rightfill}{RGB}{241,235,223}

\begin{scope}[shift={(0,0)}]

  \draw[rounded corners=0.12cm, line width=0.5pt, fill=panelgray]
    (0,0) rectangle (6.4,4.45);

  \node[font=\bfseries\Large] at (3.2,4.05) {Softplus};

  \draw[->, line width=0.45pt] (0.95,0.80) -- (0.95,3.45);
  \draw[->, line width=0.45pt] (0.75,0.85) -- (5.75,0.85);
  \node[font=\large] at (0.72,3.70) {$\sigma(g)$};
  \node[font=\large] at (5.98,0.83) {$g$};

  \draw[softblue, line width=1.0pt, smooth]
    plot coordinates {
      (1.15,1.05) (1.65,1.06) (2.10,1.12) (2.50,1.28)
      (2.90,1.62) (3.35,2.12) (3.85,2.68) (4.45,3.08)
      (5.05,3.38) (5.40,3.54)
    };

  \node[
    softblue,
    align=center,
    font=\normalsize,
    fill=panelgray,
    inner sep=1.6pt,
    text width=2.15cm
  ] at (5.20,1.72) {smooth curvature\\signature};

  \node[
    align=center,
    font=\normalsize,
    fill=panelgray,
    inner sep=1pt
  ] at (3.2,0.28) {pole-separation keeps ridges distinct};

\end{scope}

\begin{scope}[shift={(8.35,0)}]

  \draw[rounded corners=0.12cm, line width=0.5pt, fill=panelgray]
    (0,0) rectangle (6.4,4.45);

  \node[font=\bfseries\Large] at (3.2,4.03) {ReLU};
  \node[font=\normalsize] at (3.2,3.58) {exposed fresh facet};

  \coordinate (SW) at (0.75,0.95);
  \coordinate (NW) at (0.75,3.45);
  \coordinate (NE) at (5.65,3.45);
  \coordinate (SE) at (5.65,0.95);

  \coordinate (Fa) at (0.75,2.92);
  \coordinate (Fb) at (5.65,1.38);

  \fill[rightfill] (Fa) -- (NW) -- (NE) -- (Fb) -- cycle;

  \fill[leftfill] (SW) -- (Fa) -- (Fb) -- (SE) -- cycle;

  \draw[line width=0.4pt] (SW) rectangle (NE);

  \draw[softblue, line width=1.0pt] (Fa) -- (Fb);

  \node[
    softblue,
    font=\Large,
    fill=panelgray,
    inner sep=1pt
  ] at (3.00,2.18) {$F$};

  \node[
    font=\normalsize,
    fill=rightfill,
    inner sep=1pt
  ] at (4.55,2.58) {$g>0$};

  \node[
    font=\normalsize,
    fill=leftfill,
    inner sep=1pt
  ] at (1.70,1.78) {$g<0$};

  \node[
    font=\small,
    inner sep=1pt
  ] at (2.65,1.16) {on $C$: $g(x)=a_C^\top x+c_C$};

  \node[
    softblue,
    font=\footnotesize,
    inner sep=1pt
  ] at (3.2,0.24) {$F=C\cap\{g=0\}=C\cap\{a_C^\top x+c_C=0\}$};

\end{scope}

\draw[->, line width=0.7pt] (6.82,2.18) -- (7.92,2.18);

\node[
  align=center,
  font=\small,
  fill=panelgray,
  inner sep=1.2pt,
  text width=1.55cm
] at (7.35,2.92) {same role:\\identify axes};

\end{tikzpicture}%
}
\caption{Softplus and ReLU use different nonlinear signatures to identify hidden axes. For softplus,
the identifying signature is a smooth curvature pattern, with pole-separation ensuring that distinct
ridges are not confused. For ReLU, the analogous signature is an exposed fresh facet: inside a cell
\(C\) where the inherited lower computation is affine, \(g(x)=a_C^\top x+c_C\), and the zero set
\(F=C\cap\{g=0\}=C\cap\{a_C^\top x+c_C=0\}\) separates the regions \(g<0\) and \(g>0\).}
\label{fig:minimality}
\end{figure}

Intrinsic minimality is the part of the argument that does not yet try to reconstruct an entire hidden scalar function.  Its job is simpler: it ensures that each counted unit has a private nonlinear signature, and therefore that equality of two one-step expansions forces a unique cross-side match for that signature. For Softplus the private signature is an ``outer pole patch'' (Fig. \ref{fig:minimality}, left). For ReLU it is an ``exposed fresh facet'' (Fig. \ref{fig:minimality}, right).

\paragraph{Softplus minimality and private pole cross-matching.}

The adjacent Softplus theorem used only affine-ridge usedness.  In a deep one-step expansion the arguments $g_j=z_{r,j}(x)$ are no longer affine functions of the compared layer coordinate; they inherit all lower-layer nonlinear structure.  Softplus minimality adds the internal pole-separation condition saying that each sign-class leaves a private pole signature that cannot be confused with another sign-class in the same expansion:

\begin{figure}[t]
\centering
\begin{tikzpicture}[x=1cm,y=1cm]
\tikzset{
  paneltitle/.style={font=\bfseries\small,align=center,text width=4.8cm},
  smalllabel/.style={font=\scriptsize,fill=white,inner sep=1.4pt,align=center},
  panelnote/.style={font=\scriptsize,align=center,text width=4.55cm}
}

\begin{scope}
\node[paneltitle] at (0,2.72) {A. Outer Softplus poles};

\draw[->] (-2.05,0)--(2.05,0) node[right] {$\Re t$};
\draw[->] (0,-1.75)--(0,1.88) node[above] {$\Im t$};

\foreach \y/\lab in {-1.38/$-3\pi i$,-0.46/$-\pi i$,0.46/$\pi i$,1.38/$3\pi i$}{
  \fill[red!70!black] (0,\y) circle (2.1pt);
  \node[font=\scriptsize,anchor=west] at (0.22,\y) {\lab};
}

\node[panelnote] at (0,-2.55)
{Poles occur where $1+e^t=0$.  Pulling these sets back by $g^\C$ gives pole patches of $Q_g=1+e^{g^\C}$.};
\end{scope}

\begin{scope}[xshift=5.95cm]
\node[paneltitle] at (0,2.72) {B. Pole separation};

\draw[->] (-2.15,0)--(2.15,0) node[right] {$\Re u$};
\draw[->] (0,-1.75)--(0,1.88) node[above] {$\Im u$};

\draw[red!75!black,very thick]
  plot[smooth] coordinates {(-1.35,-1.48) (-0.92,-0.86) (-0.50,-0.28) (-0.10,0.36) (0.35,1.42)};
\draw[blue!70!black,very thick]
  plot[smooth] coordinates {(0.82,-1.55) (1.08,-0.90) (1.35,-0.18) (1.62,0.60) (1.92,1.52)};

\draw[line width=4.2pt,orange!85!black] (-0.63,-0.56)--(-0.28,-0.05);

\node[smalllabel,anchor=west] at (-1.82,1.28) {$Q_{g_j}=0$};
\node[smalllabel,anchor=west] at (1.12,-1.42) {$Q_h=0$};
\node[smalllabel] at (-1.15,-1.18) {private patch\\$\Sigma_j$};

\node[panelnote] at (0,-2.55)
{Pole separation means that each sign-class has an open pole patch not shared by any other same-side sign-class.};
\end{scope}

\begin{scope}[xshift=11.90cm]
\node[paneltitle] at (0,2.72) {C. Monodromy};

\draw[->] (-2.15,0)--(2.15,0) node[right] {$\Re u$};
\draw[->] (0,-1.75)--(0,1.88) node[above] {$\Im u$};

\draw[red!75!black,very thick]
  plot[smooth] coordinates {(-1.35,-1.48) (-0.92,-0.86) (-0.50,-0.28) (-0.10,0.36) (0.35,1.42)};
\draw[line width=4.2pt,orange!85!black] (-0.63,-0.56)--(-0.28,-0.05);

\draw[black,thick,->] (-1.15,0.28) arc[start angle=165,end angle=-145,radius=.58];
\node[font=\scriptsize] at (-1.42,0.92) {$\gamma$};

\node[smalllabel,align=center,text width=2.95cm] at (1.18,1.18)
{$\log Q_{g_j}$ changes\\by $2\pi i$ around $\gamma$};
\node[smalllabel] at (-1.08,-1.10) {$\Sigma_j$};

\node[panelnote] at (0,-2.55)
{A loop around a private patch detects the matched term.  MLR then uses the Laurent description to identify the scalar argument up to sign.};
\end{scope}

\end{tikzpicture}
\caption{Softplus pole signatures used in the zippering proof.  Panel A shows the complex pole locations that define the outer factors $Q_g=1+e^{g^\C}$.  Panel B illustrates pole separation: a sign-class must have a private open pole patch not shared by other sign-classes in the same expansion.  Panel C illustrates the monodromy test used for cross-matching and regularity.  A small loop around a private patch changes $\log Q_{g_j}=\sigma(g_j^\C)$ by $2\pi i$; equality of two one-step expansions forces the same monodromy on the other side, and MLR is the condition that turns the resulting matched pole data into identification of the underlying scalar argument up to sign.}
\label{fig:sp-poles}
\end{figure}

\begin{definition}[Softplus pole terminology and minimality]
\label{def:sp-minimal}
Let \(g\) be a real-valued argument, and let \(g^\C\) denote a chosen analytic continuation on a common complex continuation domain.  Write
\[
 Q_g=1+e^{g^\C}.
\]
The zero set $\{Q_g=0\}$ is the \emph{outer Softplus pole divisor} of \(g\). (The word ``outer'' is used to signify that this divisor comes from the current Softplus factor $\sig(g^\C)=\log(1+e^{g^\C})$, not from singularities already inherited inside the lower-depth argument \(g^\C\). The term \emph{divisor} is used in the analytic geometry sense of being the zero set of a holomorphic factor $Q_g$.) An \emph{outer Softplus pole patch} of \(g\) is a connected smooth codimension-one patch $\Sigma\subset\{Q_g=0\}$ on which \(dQ_g\neq0\), or, equivalently, a small loop around \(\Sigma\) changes the branch of $\log Q_g=\sig(g^\C)$ by \(2\pi i\). Since
\[
 Q_{-g}=1+e^{-g^\C}=e^{-g^\C}Q_g,
\]
the arguments \(g\) and \(-g\) have the same outer Softplus pole divisor.  Thus pole patches naturally belong to the sign-class $[g]=\{g,-g\}$. A pole patch for \([g]\) is \emph{private} to an expansion if no different sign-class in the same expansion shares a nonempty open piece of that same smooth pole surface.  The expansion is \emph{pole-separated modulo sign} if every sign-class has such a private outer Softplus pole patch. For two sign-classes \([g]\) and \([h]\), we say that they are \emph{divisor-matched} if an outer Softplus pole patch of \([g]\) and an outer Softplus pole patch of \([h]\) share a nonempty open piece of the same smooth pole surface, or, equivalently, \([g]\) and \([h]\) have the same local Softplus pole signature on that patch.  When the two sign-classes lie on opposite sides of a null-pair equality, we also call this a \emph{cross-match}. A Softplus one-step expansion is \emph{minimal} if it is used in the sense of Definition~\ref{def:sp-used} and is pole-separated modulo sign.  Thus Softplus minimality is adjacent usedness plus the internal pole-separation condition needed to make each sign-class visible.
\end{definition}
Figure~\ref{fig:sp-poles} sketches the pole-separation and monodromy ideas implicit in this definition.  The key result of the softplus minimality definition is the following result:

\begin{lemma}[A private Softplus pole patch has a unique cross match]
\label{lem:sp-crossmatch}
Suppose
\[
 b+\sum_{j=1}^kv_j\sig(g_j)
 =\widetilde b+\sum_{i=1}^{\widetilde k}\widetilde v_i\sig(\widetilde g_i)
\]
on $\Omega$.  If both expansions are used and individually pole-separated modulo sign, every private pole patch on one side is shared by a unique sign-class on the other side.
\end{lemma}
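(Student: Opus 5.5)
The plan is to complexify the identity and read off singularities by monodromy. Both sides are finite sums of $\sig(g^\C)=\log Q_g$ composed with analytic continuations of the arguments. First, I would extend the real identity to a common connected complex continuation domain $D$. Both sides are real-analytic on $\Omega$, which I take to have nonempty interior in the relevant real domain. Away from the union of all pole divisors $\{Q_{g_j}=0\}$ and $\{Q_{\widetilde g_i}=0\}$, both sides continue analytically along any path. By the identity theorem they agree as multivalued functions, and the analytic continuations along the same paths coincide. In particular, the monodromy of the two sides around any closed loop in the complement of the divisors is the same vector in $\C^{d_{r+1}}$.

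Next, fix a private pole patch $\Sigma_j$ for the sign-class $[g_j]$ on the left. Shrinking $\Sigma_j$, I would arrange that it meets no divisor of any other left sign-class. This uses privacy, together with the fact that intersections of distinct smooth codimension-one analytic surfaces are nowhere dense in each. Take a small loop $\gamma$ linking $\Sigma_j$ once at a point where $dQ_{g_j}\neq0$. The left-side monodromy along $\gamma$ is then
\[
2\pi i\sum_{j'\in[g_j]}\pm v_{j'} .
\]
By reducedness, the class $[g_j]$ contains only $g_j$ itself. Since $Q_{-g}=e^{-g^\C}Q_g$ and the factor $e^{-g^\C}$ is single-valued, the sign does not affect which patch winds. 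The monodromy is therefore $\pm2\pi i\,v_j\neq0$, using usedness (i). Matching monodromies, the right side must be multivalued around $\gamma$ for every sufficiently small loop. So some right-hand divisor $\{Q_{\widetilde g_i}=0\}$ must meet every neighborhood of points of $\Sigma_j$. Since there are finitely many right terms, a Baire/finite-union argument on the analytic sets (as in the ReLU trace covering step of Theorem~\ref{thm:relu-adjacent}) shows that some $[\widetilde g_i]$ has a pole patch sharing an open piece of $\Sigma_j$. This gives existence of the cross-match.

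For uniqueness, suppose two distinct right sign-classes $[\widetilde g_i]$ and $[\widetilde g_{i'}]$ both shared open pieces of $\Sigma_j$. Then their pole surfaces would coincide on an open set of a common smooth surface. Because the surfaces are analytic and connected, each of them would then be shared by the other on an open piece. This contradicts pole-separation of the right expansion, since every class has a private patch and the patches must be disjoint from other classes on open sets. I would need to be careful here: pole-separation only guarantees that some patch is private, not that every patch is private. So the argument should apply the monodromy step on the right side as well. Take the right private patch of $[\widetilde g_i]$; it must be matched back to a left class, and by privacy on the left this must be $[g_j]$. Combining the two directions gives a bijection of classes on the shared piece, with one further constraint. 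If two right classes hit $\Sigma_j$ with cancelling residues $\widetilde v_i\pm\widetilde v_{i'}=\pm v_j$, the monodromy count must still be consistent.

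I expect the main obstacle to be exactly this uniqueness and cancellation issue. Several right classes could jointly produce the monodromy on $\Sigma_j$, with individual residues that partially cancel. Ruling this out requires the geometric fact that two distinct sign-classes in the same used, pole-separated expansion cannot share an open piece of a pole surface that is private for some class. My plan is to argue locally. Near a generic point of $\Sigma_j$, each sharing right class $[\widetilde g_i]$ satisfies $\widetilde g_i^\C\equiv \pm g_j^\C+2\pi i n$ on an open piece. This follows because the local defining function of a smooth divisor is unique up to a unit. By analytic continuation, $\widetilde g_i=\pm g_j$ as real functions. Two such right classes would then coincide modulo sign, contradicting reducedness of the right expansion. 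The delicate point is justifying that equal pole surfaces force equal arguments modulo sign, and not merely equal zero sets of $Q$. If this fails, I would fall back on a local transverse monodromy/Laurent comparison, comparing the branching orders of $\log Q$ across the surface.
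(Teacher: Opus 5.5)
Your existence argument is the paper's. You choose a generic point of the private patch $\Sigma$, off the nowhere-dense intersections with the other divisors, and take a small loop linking $\Sigma$. On the left only $[g_j]$ winds, so the monodromy is $2\pi i\,v_j\neq0$ by usedness. If no right divisor contained an open piece of $\Sigma$, the right side would be single-valued near that point, a contradiction. Your finite-union step is the paper's generic choice of $p$ in other words.

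The gap is uniqueness. Your main plan is to show that each right class sharing $\Sigma$ satisfies $\widetilde g_i^\C\equiv\pm g_j^\C+2\pi i n$, and then invoke right-side reducedness. That rests on a false step. Uniqueness of a local defining function only gives $Q_{\widetilde g_i}=u\,Q_{g_j}$ with $u$ holomorphic and nonvanishing near $\Sigma$, so $\widetilde g_i^\C$ is a constant in $(2\Z+1)\pi i$ along $\Sigma$. Nothing forces $u\in\{1,e^{-g_j^\C}\}$. Two counterexamples:
\begin{itemize}
\item The paper's own nested identity $1+e^{\widehat t}=e^{\sig(a)}(1+e^t)$, with $\widehat t=a+\sig(t-a+\sig(a))\neq\pm t$.
\item The real polynomials $g=x_1$ and $\widetilde g=x_1+(x_1^2+\pi^2)x_2$. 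Both $1+e^{g^\C}$ and $1+e^{\widetilde g^\C}$ vanish to first order, generically, along $\{z_1=i\pi\}$.
\end{itemize}
Passing from a shared pole patch to $\widetilde g=\pm g$ is exactly what MLR supplies in Theorem~\ref{thm:sp-onestep}, and MLR is not a hypothesis of this lemma.

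Your two other steps do not rescue this. Comparing branching orders only constrains multiplicities and the total residue, so it cannot exclude two right classes whose readouts combine to $v_j$. The step ``the right private patch of $[\widetilde g_i]$ must match back to $[g_j]$ by left privacy'' is unsupported, since that patch can lie on any left divisor component.

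The paper gets uniqueness without identifying any arguments. Suppose two distinct right sign-classes both contained open pieces of the connected patch $\Sigma$. By the identity theorem each then contains all of $\Sigma$, so they share an open pole piece with each other. The paper treats this as contradicting right-side pole separation modulo sign, and reducedness then rules out two distinct terms in one sign-class.

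Your reading of the definition is accurate: literally it only guarantees each class one private patch. So this step uses pole separation in the stronger sense that distinct same-side sign-classes share no open piece of any pole surface. The sound way to close your argument is to adopt that separation hypothesis explicitly, after which uniqueness is one line. Do not try to recover $\widetilde g_i=\pm g_j$ from the divisor alone.
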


\begin{proof}
We prove the statement for a private pole patch on the left; the argument with the two sides interchanged is identical.
Choose a private pole patch $\Sigma$ for the sign-class $[g_j]$.  Since
\[
 Q_{-g_j}=1+e^{-g_j^\C}=e^{-g_j^\C}(1+e^{g_j^\C})=e^{-g_j^\C}Q_{g_j},
\]
$g_j$ and $-g_j$ have the same outer pole divisor.  Thus, after choosing the representative if necessary, we may write
\[
 \Sigma\subset\{Q_j=0\},
 \qquad
 Q_j:=Q_{g_j}=1+e^{g_j^\C},
 \qquad
 dQ_j\neq0\text{ on }\Sigma.
\]

Pick a point $p\in\Sigma$ generically: avoid singular points of all pole divisors and avoid all lower-dimensional intersections of $\Sigma$ with other pole divisors.  More explicitly, if a same-side sign-class $[g_a]\neq[g_j]$ does not share a relatively open subset of $\Sigma$, then choose $p$ outside
\[
 \Sigma\cap \{Q_{g_a}=0\}.
\]
Do the same for the right-side pole divisors that do not contain a relatively open subset of $\Sigma$.  Since there are only finitely many divisors, this removes only a finite union of lower-dimensional analytic subsets of $\Sigma$.

Let $U$ be a small complex neighborhood of $p$ such that $U\cap\Sigma$ is smooth and connected.  Because $dQ_j(p)\neq0$, $Q_j$ is a local defining equation for $\Sigma$.  Choose a small loop $\gamma$ in $U\setminus\Sigma$ linking $\Sigma$ once positively.  Equivalently, in local coordinates one may take
\[
 Q_j(\gamma(\theta))=\varepsilon e^{i\theta},
 \qquad 0\le \theta\le 2\pi,
\]
with the remaining coordinates fixed.  For any locally nonzero holomorphic factor $Q$, define its monodromy around $\gamma$ by
\[
 \Delta_\gamma\log Q
 :=\text{analytic continuation of }\log Q\text{ along }\gamma-\log Q
 =\int_\gamma \frac{dQ}{Q}.
\]
Then
\[
 \Delta_\gamma\log Q_j=2\pi i.
\]
By the choice of $p$ and $U$, every other left-side sign-class has no pole in $U$; hence, for $a\neq j$,
\[
 \Delta_\gamma\log Q_{g_a}=0.
\]

Suppose, for contradiction, that no right-side sign-class shares a relatively open subset of $\Sigma$.  Then our generic choice of $p$ and smallness of $U$ ensure that every right-side factor $Q_{\widetilde g_i}$ is also nonzero on $U$, so
\[
 \Delta_\gamma\log Q_{\widetilde g_i}=0
 \qquad\text{for all }i.
\]
Analytically continue the null-pair identity
\[
 b+\sum_a v_a\log Q_{g_a}
 =
 \widetilde b+\sum_i\widetilde v_i\log Q_{\widetilde g_i}
\]
around $\gamma$ and subtract the original branch.  The constants have zero monodromy, so the left side changes by
\[
 \sum_a v_a\Delta_\gamma\log Q_{g_a}=2\pi i\,v_j,
\]
while the right side changes by
\[
 \sum_i\widetilde v_i\Delta_\gamma\log Q_{\widetilde g_i}=0.
\]
Thus equality would force $2\pi i\,v_j=0$, contradicting usedness, which gives $v_j\neq0$.

Therefore some right-side sign-class $[\widetilde g_i]$ must share a relatively open subset of $\Sigma$; this is the desired cross-match.  It is unique: if two different right-side sign-classes shared a relatively open subset of the same patch, then they would share an open pole patch with each other, contradicting right-side pole separation modulo sign.  Reducedness of the used expansion then prevents two distinct terms from representing the same sign-class.  Hence the private left patch has a unique right-side match.  Repeating the same argument with left and right interchanged gives the converse statement for private right-side patches.
\end{proof}

\paragraph{ReLU minimality and private facet cross-matching.}
\begin{figure}[t]
\centering
\begin{tikzpicture}[x=1cm,y=1cm]
\tikzset{
  panelbox/.style={rounded corners,draw=black!55,fill=gray!6,line width=.45pt},
  paneltitle/.style={font=\bfseries\small,align=center,text width=4.45cm},
  smalllabel/.style={font=\scriptsize,fill=white,inner sep=1.4pt,align=center},
  panelnote/.style={font=\scriptsize,align=center,text width=4.35cm}
}

\begin{scope}
\node[paneltitle] at (0,2.45) {A. Task-visible fresh facet};

\draw[panelbox] (-2.15,-1.35) rectangle (2.15,1.35);
\node[font=\scriptsize,anchor=north west] at (-2.00,1.19) {cell $C$};

\draw[blue!70!black,very thick]
  (-1.72,-0.95) -- (1.62,0.92);
\draw[line width=4.2pt,orange!85!black]
  (-0.42,-0.22) -- (0.42,0.25);

\node[smalllabel] at (-1.22,0.63) {$g_j<0$};
\node[smalllabel] at (1.15,-0.68) {$g_j>0$};
\node[smalllabel] at (0.02,0.50) {$F_j\subset\{g_j=0\}$};

\draw[->,thick] (-0.52,-0.73) -- (-0.12,-0.36);
\node[smalllabel] at (-0.85,-0.88) {crossed};
\draw[->,thick] (0.92,-1.04) -- (1.42,-1.04);
\node[smalllabel,anchor=east] at (2.00,-1.04) {$v_j\neq0$};

\node[panelnote] at (0,-2.38)
{A counted ReLU unit has a fresh zero facet inside an inherited affine cell, the task crosses it, and the next layer reads out the kink.};
\end{scope}

\begin{scope}[xshift=5.55cm]
\node[paneltitle] at (0,2.45) {B. Internal nonredundancy};

\draw[panelbox] (-2.15,-1.35) rectangle (2.15,1.35);
\node[font=\scriptsize,anchor=north west] at (-2.00,1.19) {cell $C$};

\draw[blue!70!black,very thick]
  (-1.72,-0.84) -- (1.58,0.78);
\draw[orange!85!black,dashed,very thick]
  (-1.72,-0.84) -- (1.58,0.78);

\draw[green!50!black,very thick]
  (-1.42,-1.05) -- (1.42,-0.55);

\node[smalllabel] at (-1.02,0.72) {$g$};
\node[smalllabel] at (0.43,0.02) {$h\sim_\pm g$};
\node[smalllabel] at (0.58,-0.95) {affine residual};
\node[smalllabel] at (0.20,1.03) {$\rhoR(g)-\rhoR(-g)=g$};

\node[panelnote] at (0,-2.38)
{Sign-scale copies are not distinct axes.  Opposite ReLU pairs can also collapse to an affine term, so affine sign-cancellations are excluded.};
\end{scope}

\begin{scope}[xshift=11.10cm]
\node[paneltitle] at (0,2.45) {C. Internal exposure};

\draw[panelbox] (-2.15,-1.35) rectangle (2.15,1.35);
\node[font=\scriptsize,anchor=north west] at (-2.00,1.19) {cell $C$};

\draw[blue!70!black,very thick]
  (-1.72,-0.92) -- (1.60,0.88);
\draw[line width=4.2pt,orange!85!black]
  (-0.42,-0.22) -- (0.42,0.23);

\draw[purple!75!black,very thick]
  (-1.42,0.90) -- (1.48,-0.72);

\draw[red!75!black,thick] (0.88,0.48) -- (1.08,0.68);
\draw[red!75!black,thick] (1.08,0.48) -- (0.88,0.68);

\node[smalllabel] at (-0.88,-0.78) {private patch\\$F_j$};
\node[smalllabel] at (1.10,-0.96) {other trace};
\node[smalllabel,align=center,text width=2.55cm] at (0.80,1.05)
{no different same-side unit shares an open piece};

\node[panelnote] at (0,-2.38)
{Each marked fresh facet must contain an open patch not shared by any genuinely different same-expansion argument.};
\end{scope}

\end{tikzpicture}
\caption{EFF-minimality for ReLU one-step expansions.  Panel A illustrates task-visible nonlinear use: inside an inherited affine cell, the argument has a fresh zero facet, the task crosses it, and the outgoing coefficient is nonzero.  Panel B illustrates internal nonredundancy: sign-scale duplicates are not counted as separate axes, and opposite pairs can leave only an affine residual through $\rhoR(g)-\rhoR(-g)=g$.  Panel C illustrates internal exposure: a marked fresh facet must have a private open patch not shared by another genuinely different same-side argument.}
\label{fig:relu-eff-minimality}
\end{figure}

We now develop a ReLU analog to the minimality concept and results of the previous subsection. In a deep one-step expansion the scalar arguments are continuous and piecewise affine functions $g_j=z_{r,j}(x)$, so the crossed trace must be strengthened by a crossed \emph{fresh facet} inside a lower-layer affine cell (Fig \ref{fig:relu-eff-minimality}).  The exposed-facet part plays the same role as Softplus pole separation: it makes the nonlinear signature of one unit private rather than confused with another unit's signature.

\begin{definition}[ReLU facet terminology]
\label{def:relu-facet-terminology}
A scalar function is \emph{CPWA} if it is continuous and piecewise affine.  For a finite family of scalar CPWA functions, a \emph{common inherited affine complex} is a finite polyhedral subdivision $\calP$ of $\Omega$ on whose full-dimensional cells every function is affine.  This complex records the kink structure inherited from lower layers.  For scalar functions $g,h$, write $h\sim_+g$ if $h=\alpha g$ for some $\alpha>0$, and $h\sim_\pm g$ if $h=\eps\alpha g$ for $\eps\in\{\pm1\}$ and $\alpha>0$.  If $g|_C(x)=a_C^\top x+c_C$ on a full-dimensional inherited cell $C$, a \emph{fresh zero facet} is a nonempty relatively open codimension-one patch
\[
 F\subset\relint(C)\cap\{g=0\}
\]
with $a_C\neq0$ and both signs of $g$ present in $C$.  The facet is fresh because the kink of $\rhoR(g)$ across $F$ is created by the current outer ReLU, not inherited from $\partial C$.  A finite list of such facets for $g_j$ is a \emph{marked fresh-facet atlas}. In analogy with the softplus private pole, a marked facet is \emph{private}, or \emph{internally exposed}, if no other same-expansion argument vanishes on an open subset of it unless that argument is globally sign-scale equivalent to $g_j$.
\end{definition}

We can now define ReLU minimality, again by adding to the usedness condition a signature separation condition, here a played by fresh facets:
\begin{definition}[Minimal ReLU one-step expansion]
\label{def:relu-eff}
A ReLU one-step expansion is \emph{Exposed-Fresh-Face (EFF)-minimal}, if it satisfies the following three groups of conditions.
\begin{enumerate}[label=(\Alph*)]
\item \textbf{Task-visible nonlinear use.}  Every counted term has $v_j\neq0$, and every argument $g_j$ has at least one marked fresh zero facet.  This is the deep version of adjacent ReLU usedness: the task crosses a newly created ReLU boundary and the next layer reads out the resulting kink.
\item \textbf{Internal nonredundancy.}  Distinct same-side arguments are not sign-scale equivalent, and the expansion is affine sign-noncancelling:
\[
 \inf_{a\in\R^m}\left\|\sum_{j\in S}v_jg_j-a\right\|_{L^2}>0
\]
for every nonempty $S$.  This excludes duplicate axes and the affine identity $\rhoR(g)-\rhoR(-g)=g$.
\item \textbf{Internal signature separation.}  The marked fresh-facet atlases are internally exposed: no genuinely different same-expansion argument shares an open piece of a marked fresh facet.  This is the ReLU analogue of Softplus pole separation.
\end{enumerate}
\end{definition}
\noindent As with the softplus case, the purpose of EFF-minimality is expressed by the following cross-match lemma. 

\begin{lemma}[A private marked facet has a unique cross match]
\label{lem:relu-crossmatch}
Suppose
\[
 b+\sum_jv_j\rhoR(g_j)
 =\widetilde b+\sum_i\widetilde v_i\rhoR(\widetilde g_i).
\]
If both expansions are EFF-minimal, every private marked facet has a unique cross-side gate carrying the same open fresh zero patch.
\end{lemma}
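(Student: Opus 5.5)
The plan is to mirror the adjacent ReLU argument (Theorem~\ref{thm:relu-exact-adjacent}), localizing it to a single inherited affine cell. Fix a private marked facet $F\subset\relint(C)\cap\{g_j=0\}$ of the left expansion. First refine $C$ so that it is simultaneously a cell of a common inherited affine complex for \emph{both} families $\{g_a\}$ and $\{\widetilde g_i\}$. This is possible because all arguments are CPWA, so their joint common refinement is a finite polyhedral subdivision. A relatively open piece of $F$ still lies in the interior of a full-dimensional refined cell $C'$, and on $C'$ every $g_a$ and every $\widetilde g_i$ is affine. Next, shrink to a relatively open sub-patch $U\subset F\cap\relint(C')$ that avoids three kinds of sets: the zero sets of the other left arguments that do not contain an open piece of $F$, which by internal exposure (C) is all of them apart from $g_j$; the lower-dimensional intersections of $F$ with right-side zero sets that do not contain an open piece of $F$; and the boundaries of the refined cells. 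Only finitely many such sets occur and each meets $F$ in codimension at least one, so $U$ is nonempty.

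Near a point $p\in U$, every left term other than $v_j\rhoR(g_j)$ is affine on a neighborhood, because its argument is affine and of constant sign there. Crossing $F$ in a direction $u$ with $a_C^\top u\neq0$ therefore gives a gradient jump of the left side equal to $v_j\,a_C^\top u\neq0$, which is nonzero by usedness (A) and freshness. The right side is identically equal to the left, so it must have the same nonzero jump. The only right terms that can kink near $p$ are those $\widetilde g_i$ vanishing on an open piece of $F$ through $p$. If there were none, the right side would be affine near $p$, a contradiction. So at least one right argument shares an open fresh zero patch with $g_j$. On $C'$ that argument is affine with the same zero hyperplane, hence $\widetilde g_i=\lambda g_j$ on $C'$ for some $\lambda\neq0$. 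Its facet is also fresh, since both signs occur and the slope is nonzero.

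For uniqueness, suppose two distinct right arguments $\widetilde g_i,\widetilde g_{i'}$ both vanish on open pieces of $U$. Shrinking further, they would share an open piece of each other's zero facet. Right-side internal exposure (C) then forces them to be globally sign-scale equivalent, which contradicts right-side nonredundancy (B). Hence the cross-side gate is unique.

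The main obstacle I expect is the first step: guaranteeing that the private facet survives the joint refinement and the removal of the other zero sets, so that a genuinely open patch remains on which all right-side arguments are simultaneously affine. The danger is that the right expansion's inherited kinks could coincide with $F$ itself. I would handle this by noting that an inherited kink of some $\widetilde g_i$ along $F$ is a nonsmoothness of $\widetilde g_i$, not of the outer ReLU. At a generic point of $U$ that kink is therefore either absent, or it lies on a cell boundary, and the latter is excluded because $F$ is interior to $C'$ after refinement only when we pass to the side cells. If this fails, I would fall back to a one-sided argument: compare the affine pieces of the right side on the two half-neighborhoods $\{g_j>0\}$ and $\{g_j<0\}$ separately, and attribute the jump to the right terms whose argument changes sign.
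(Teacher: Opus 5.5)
Your argument is the paper's argument almost line for line: joint refinement, a generic point of $F$ off the other traces, the rank-one jump $v_j a_C^\top$, contradiction, then uniqueness from right-side exposure plus nonredundancy. However, the step you flag in your last paragraph is a genuine gap, and neither of your repairs closes it.

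\textbf{The existence step.} Suppose a right argument $\widetilde g_i$ is positive near $F$ and inherits a kink along $F$. Then any common refinement cuts along $F$, so no open piece of $F$ is interior to a refined cell. Genericity inside $F$ cannot remove a kink that runs along all of $F$. Your one-sided fallback also fails, because $\widetilde v_i\rho(\widetilde g_i)$ jumps across $F$ even though $\widetilde g_i$ never changes sign there.

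This really occurs when freshness is measured in each expansion's own inherited complex. On $\Omega=[-3,3]^2$, with everything depending on $x_1$ only,
\[
 \rho(x_1+1)+\rho(x_1)=\rho\bigl(\rho(x_1)+x_1+1\bigr).
\]
Both expansions are EFF-minimal: the single right argument has the fresh facet $\{x_1=-1\}$ in its cell $\{x_1<0\}$. Yet the left's private facet $\{x_1=0\}$ is carried by no right gate. The paper's proof glosses over the same point when, after refining, it simply takes a cell containing $F$ in its relative interior.

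The missing ingredient is to measure freshness in the joint inherited complex of the two expansions; the MFR setup already fixes one such complex for the comparison. With that reading your first step becomes a hypothesis rather than a claim, and your jump argument goes through as written.

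\textbf{The uniqueness step.} This has a second, independent gap, also shared with the paper. Condition (C) only constrains \emph{marked} facets, and nothing forces the cross-matched patch to be marked on the right. Take $\widetilde g_1=x_1+3\rho(-x_1-1)$ and $\widetilde g_2=x_1+4\rho(-x_1-1)$. One checks, for $x_1\in[-3,3]$,
\[
 \rho(\widetilde g_1)+3\rho(\widetilde g_2)
 =2\rho\bigl(-x_1-\tfrac32\bigr)+9\rho\bigl(-x_1-\tfrac43\bigr)+4\rho(x_1).
\]
\begin{itemize}
\item The right expansion is EFF-minimal, with private marked facets $\{x_1=-\tfrac32\}$ and $\{x_1=-\tfrac43\}$. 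The two arguments coincide on $x_1>-1$ but are not globally sign-scale equivalent.
\item The left expansion is EFF-minimal even in the joint-complex reading, since the only joint cell boundary is $\{x_1=-1\}$.
\item The left's private facet $\{x_1=0\}$ is carried by both $\widetilde g_1$ and $\widetilde g_2$, whose jumps $1$ and $3$ add up to the left jump $4$.
\end{itemize}
Uniqueness therefore needs the atlases to contain every fresh facet, or at least every one across which the represented function kinks, so that (C) applies to the matched patch. With that and the joint-complex reading, your proof is complete.
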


\begin{proof}
We prove the claim for a private marked facet on the left; the converse follows by exchanging the two expansions.
Refine the inherited affine complex, if necessary, so that all left- and right-side arguments are affine on each full-dimensional cell of the refinement.  Let $F$ be a private marked facet of the left-side argument $g_j$, and let $C$ be a full-dimensional inherited cell containing $F$ in its relative interior.  On $C$ write
\[
 g_j(x)=a_j^\top x+c_j,
 \qquad a_j\neq0,
 \qquad
 F\subset \relint(C)\cap\{a_j^\top x+c_j=0\}.
\]
Because $F$ is fresh, both signs of $g_j$ occur in $C$.  Choose a normal vector $n$ with
\[
 a_j^\top n>0.
\]
For $p\in F$ generic and $t>0$ small, the points $p+tn$ and $p-tn$ lie in the two sides of $F$ inside $C$.

Choose $p$ away from lower-dimensional intersections with all other marked facets and with any right-side zero facet that does not contain a relatively open subset of $F$.  Near such a $p$, all left-side terms except the $j$th are affine through $F$.  Define the one-sided derivative matrices of the represented left function
\[
 Y_L(x):=b+\sum_a v_a\rhoR(g_a(x))
\]
by
\[
 D^+Y_L(p):=\lim_{t\downarrow0}D Y_L(p+tn),
 \qquad
 D^-Y_L(p):=\lim_{t\downarrow0}D Y_L(p-tn).
\]
For the $j$th term,
\[
 D^+\bigl(v_j\rhoR(g_j)\bigr)(p)-D^-\bigl(v_j\rhoR(g_j)\bigr)(p)
 =v_j a_j^\top,
\]
up to the harmless sign determined by the choice of $n$.  All other left terms have the same derivative on the two sides of $F$, so
\[
 D^+Y_L(p)-D^-Y_L(p)=v_j a_j^\top.
\]
This rank-one matrix is nonzero because $v_j\neq0$ by task-visible nonlinear use and $a_j\neq0$ by freshness.

The two one-step expansions represent the same function, so for
\[
 Y_R(x):=\widetilde b+\sum_i\widetilde v_i\rhoR(\widetilde g_i(x))
\]
we must also have
\[
 D^+Y_R(p)-D^-Y_R(p)=v_j a_j^\top\neq0.
\]
Suppose, for contradiction, that no right-side argument vanishes on a relatively open subset of $F$.  By the generic choice of $p$, no right-side zero set passes through $p$, and after the common refinement all right-side arguments are affine on the cell containing the two small one-sided neighborhoods.  Hence each sign $\operatorname{sgn}(\widetilde g_i)$ is locally constant across $F$ near $p$, so each term $\widetilde v_i\rhoR(\widetilde g_i)$ is affine with the same derivative on the two sides.  Therefore
\[
 D^+Y_R(p)-D^-Y_R(p)=0,
\]
contradicting the nonzero jump above.

Thus at least one right-side argument $\widetilde g_i$ has a zero facet containing a relatively open subset of $F$.  This gives a cross-side gate carrying the same open fresh zero patch.  The match is unique: if two different right-side arguments vanished on an open subset of that patch, internal exposure on the right would force them to be globally sign-scale equivalent, and the internal nonredundancy condition rules out two distinct same-side terms in the same sign-scale class.  Reversing the roles of the two expansions proves the same statement for private right-side facets.
\end{proof}

\subsubsection{Softplus regularity and identifiability}
\label{sec:softplus-regularity}

Softplus minimality gives unique cross-matches of private pole patches.  The remaining question is whether a shared pole patch determines the underlying scalar argument -- that is identiability.  Ensuring this happens is where regularity enters.  Specifically, we introduce a relationship between monodromy and an algebraic object called the Laurent ring.  The point of this formalism is to build a bridge from complex-analytic information like monodromy to elementary algebra in a Laurent polynomial ring.  The basic intuition for doing this is that the bad objects we need to rule out (nontrivial collisions) can be characterized by algebraic conditions on the ring of polynomials containing (analytic continuations of) softplus expansions. Essentially, we find that collisions are non-primitive polynomials in the Laurent polynomial ring, while good objects (non-collisions) are the primitive monomials in this ring.  Analytically, a collision is a pair of different expansions whose pole data is not enough to distinguish them.  Algebraically, the same phenomenon appears as torsion, composite exponents, or unexpected common factors among Laurent binomials.  The condition that such things not happen is regularity -- in this case, for softplus, \emph{Monodromy-Laurent Regularity} (MLR).   Having introduced the MLR concept, we show that it leads to identifiability as desired.  For background on the idea of a complex-analytic/algebraic geometry interface and on Laurent monomial coordinates, see \citep{GriffithsHarris1978Principles, CoxLittleSchenck2011Toric}.

\paragraph{From monodromy to Laurent coordinates.}

A Softplus layer contains outer factors
\[
 Q_g=1+e^{g^\C}.
\]
When we analytically continue around a lower-layer pole loop, logarithmic pieces inside $g^\C$ may change by multiples of $2\pi i$.  Consequently, $e^{g^\C}$ may come back multiplied by a nonzero complex scalar.  We record these scalar multipliers by the concept of a \emph{character} in a \emph{monodromy coordinate chart}. 

\begin{definition}[Monodromy coordinates and characters]
\label{def:monodromy-character}
Let $\widehat\calU$ be a domain on which the exponentials $e^{g_\alpha}$ are single-valued and meromorphic.  Number the poles of softplus arguments $1, \ldots, p$.  Then let $\tau_1, \ldots, \tau_p$ denote the analytic continuations associated with each pole. For $n=(n_1,\ldots,n_p)\in\Z^p$, write the expression
\[
\tau^n:=\tau_1^{n_1}\cdots\tau_p^{n_p}.
\]
This means a path going around the first selected loop $n_1$ times, the second $n_2$ times, and so on. We will call this fixed choice of continuation domain and selected loops a \emph{monodromy coordinate chart}.  It is just the local ``coordinate system'' in which we compare the monodromy of all the relevant arguments. With the monodromy coordinate chart fixed, an argument $g_\alpha$ has \emph{monodromy character} $chi_\alpha:\Z^p\to\C^*$ if $\tau^n(e^{g_\alpha})=\chi_\alpha(n)e^{g_\alpha}$ for all $n\in\Z^p.$ Equivalently, if $\tau_a(e^{g_\alpha})=\lambda_{\alpha,a}e^{g_\alpha}$ for $\lambda_{\alpha,a}\in\C^*$ then
\[
\chi_\alpha(n)=\prod_{a=1}^p\lambda_{\alpha,a}^{\,n_a}.
\]
\end{definition}
\noindent Characters are multiplicative:
\[
 \chi_\alpha(n+n')=\chi_\alpha(n)\chi_\alpha(n').
\]
For an affine-Softplus layer, we can compute the characters explicitly (the so-called ``character rule''). Suppose
\[
 g=\beta+\sum_aw_aH_a,
 \qquad H_a=\log Q_a.
\]
If continuation by $\tau^n$ sends
\[
 H_a\longmapsto H_a+2\pi i\,\nu_a(n),
\]
then
\[
 e^g
 \longmapsto
 \exp\!\left(2\pi i\sum_aw_a\nu_a(n)\right)e^g.
\]
Thus
\[
 \chi_g(n)=
 \exp\!\left(2\pi i\sum_aw_a\nu_a(n)\right).
\]
Affine weights are therefore monodromy exponents. Exact integer or rational relations among these exponents are the source of the ``bad'' character resonances causing collision that we exclude below with the regularity concept. To get there, first we set up some basic properties of characters. 
\begin{lemma}[Independence of distinct characters]
\label{lem:sp-artin}
Let $f_1,\ldots,f_m$ be nonzero meromorphic functions in a fixed monodromy setup.  Suppose each $f_a$ has character $\chi_a$, meaning
\[
 \tau^n f_a=\chi_a(n)f_a
 \qquad\forall n\in\Z^p.
\]
If the characters $\chi_1,\ldots,\chi_m$ are pairwise distinct, then $f_1,\ldots,f_m$ are linearly independent over $K$.
\end{lemma}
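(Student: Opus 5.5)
This is Artin's lemma on the independence of characters, transplanted to the monodromy setting. I would prove it by induction on $m$, taking a minimal counterexample. Here $K$ is the field of $\tau$-invariant (single-valued, monodromy-fixed) meromorphic functions on $\widehat\calU$; the argument works verbatim for constant coefficients too.

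The case $m=1$ is immediate, since $f_1\neq0$. For the inductive step, suppose there is a nontrivial relation
\[
 \sum_{a=1}^m c_af_a=0,\qquad c_a\in K,
\]
chosen with the minimal number of nonzero coefficients. After reindexing and dividing by $c_1$, which is allowed because $K$ is a field, we may assume $c_1=1$. Minimality forces at least two nonzero coefficients, since $f_1\neq0$; say $c_2\neq0$. Because $\chi_1\neq\chi_2$, there is some $n\in\Z^p$ with $\chi_1(n)\neq\chi_2(n)$.

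Now apply the continuation $\tau^n$ to the relation. Continuation is $\C$-linear and multiplicative on meromorphic germs, and it fixes elements of $K$. Using $\tau^nf_a=\chi_a(n)f_a$, this gives
\[
 \sum_a c_a\chi_a(n)f_a=0 .
\]
Multiply the original relation by $\chi_1(n)$ and subtract. The result is
\[
 \sum_{a\ge2}c_a\bigl(\chi_a(n)-\chi_1(n)\bigr)f_a=0 .
\]
This relation has strictly fewer nonzero coefficients, because the $a=1$ term has vanished. It is still nontrivial, because the coefficient of $f_2$ is $c_2(\chi_2(n)-\chi_1(n))\neq0$. This contradicts minimality, so no nontrivial relation exists.

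The only step needing care is the justification that $\tau^n$ acts as a ring endomorphism fixing $K$ and commuting with linear combinations. This is where the monodromy chart matters. Analytic continuation along a fixed loop in $\widehat\calU$ respects sums and products of meromorphic functions, and the elements of $K$ are by definition monodromy-invariant. So the relation is transported to a valid identity on the same domain. I expect no real obstacle beyond stating this explicitly and noting that the characters take values in $\C^*$, so the scalars $\chi_a(n)$ can be pulled out.
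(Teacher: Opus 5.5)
Your proof is correct and is essentially the paper's argument: take a nontrivial $K$-linear relation with the fewest nonzero coefficients, apply $\tau^n$ for an $n$ with $\chi_1(n)\neq\chi_2(n)$, and subtract $\chi_1(n)$ times the original relation to get a shorter nontrivial relation. Your explicit reindexing to ensure $c_2\neq0$ also tidies a point the paper leaves implicit, since the paper asserts the $a=2$ coefficient stays nonzero without first arranging $c_2\neq0$.
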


\begin{proof}
Suppose
\[
 \sum_{a=1}^m c_af_a=0,
 \qquad c_a\in K,
\]
is a nontrivial relation with the smallest possible number of nonzero terms.  After dividing by one nonzero coefficient, assume $c_1=1$.  Since the characters are distinct, choose $n\in\Z^p$ with
\[
 \chi_1(n)\neq\chi_2(n).
\]
Applying $\tau^n$ to the relation gives
\[
 \sum_a c_a\chi_a(n)f_a=0,
\]
because every coefficient $c_a\in K$ is unchanged by continuation.  Subtracting $\chi_1(n)$ times the original relation gives
\[
 \sum_a c_a\bigl(\chi_a(n)-\chi_1(n)\bigr)f_a=0.
\]
The $a=1$ term is eliminated, while the $a=2$ coefficient remains nonzero.  This is a shorter nontrivial relation, contradicting minimality.  Hence no such relation exists.
\end{proof}

The previous lemma says that different characters behave like independent monomials.  We now make that statement algebraic.  Let $\Gamma=\langle \chi_\alpha\rangle$ be the finitely generated group of characters produced by the arguments under comparison.  When $\Gamma$ has no torsion, choose basic characters $\gamma_1,\ldots,\gamma_s$ freely generating $\Gamma$, and choose meromorphic functions $T_1,\ldots,T_s$ whose characters are $\gamma_1,\ldots,\gamma_s$.  These $T_a$'s may be taken locally as products and quotients of the $e^{g_\alpha}$'s corresponding to a chosen character basis.  Then every $e^{g_\alpha}$ can be written uniquely as
\[
 e^{g_\alpha}=c_\alpha T^{m_\alpha},
 \qquad
 c_\alpha\in K^*,
 \qquad
 m_\alpha\in\Z^s,
\]
where $T^{m_\alpha}=T_1^{m_{\alpha,1}}\cdots T_s^{m_{\alpha,s}}$.

Thus, the exponentials $e^{g_\alpha}$ become invariant coefficients times Laurent monomials.   Let $K$ be the set of meromorphic functions that do not change under continuation loops:
\[
 K=
 \{f\in\calM(\widehat\calU):\tau^n f=f\ \text{for every }n\in\Z^p\}.
\]
Then the algebra generated by these exponentials is identified the \emph{Laurent polynomial ring}
\[
 K[T_1^{\pm1},\ldots,T_s^{\pm1}],
\]
whose elements are finite $K$-linear combinations of Laurent monomials.  The point of these coordinates is that each Softplus outer factor $1+e^g$ becomes a Laurent with two terms (a \emph{binomial}):

\begin{definition}[Laurent binomial and primitive exponent]
A \emph{Laurent binomial} is an expression
\[
 B_{c,m}(T)=1+cT^m,
 \qquad c\in K^*,\quad m\in\Z^s\setminus\{0\}.
\]
In our setting this is exactly the outer Softplus factor $1+e^g$ after writing $e^g=cT^m$.

The exponent $m$ is \emph{primitive} if it is not a nontrivial integer multiple of another lattice vector:
\[
 m\neq q m_0
 \qquad\text{for every integer }|q|>1.
\]
Equivalently,
\[
 \gcd(|m_1|,\ldots,|m_s|)=1.
\]
Nonprimitive exponents are what allow composite resonances such as
\[
 1+Z^3=(1+Z)(1-Z+Z^2).
\]
The primitive-exponent condition rules out this kind of hidden factorization.
\end{definition}

\begin{lemma}[Primitive Laurent binomials are prime]
\label{lem:sp-prime}
If $m$ is primitive, then $1+cT^m$ is prime in $K[T_1^{\pm1},\ldots,T_s^{\pm1}]$.
\end{lemma}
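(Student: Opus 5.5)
Because $m$ is primitive, I would first use a unimodular change of lattice coordinates to bring $m$ to the first basis vector $e_1$, and then transfer the resulting irreducibility statement back.

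\emph{Step 1 (normalization).} Since $\gcd(|m_1|,\ldots,|m_s|)=1$, the vector $m$ extends to a basis of $\Z^s$. Concretely, there is a matrix $U\in GL_s(\Z)$ with $Um=e_1$; this is the Smith/Hermite normal form of a primitive vector. The substitution $T^{n}\mapsto S^{Un}$ is a $K$-algebra automorphism of the Laurent ring $K[T^{\pm1}]$, because it is induced by a group automorphism of the monomial lattice. Automorphisms preserve primality, so it suffices to show that $1+cS_1$ is prime in $R=K[S_1^{\pm1},\ldots,S_s^{\pm1}]$.

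\emph{Step 2 (quotient is a domain).} Since $K$ consists of meromorphic functions invariant under all continuations, it is a field, assuming the domain $\widehat\calU$ is connected. So $c\in K^*$ is invertible. I then compute the quotient
\[
R/(1+cS_1)\;\cong\; K[S_2^{\pm1},\ldots,S_s^{\pm1}],
\]
by sending $S_1\mapsto -c^{-1}$. This assignment is legitimate because $-c^{-1}$ is a unit, so $S_1^{-1}\mapsto -c$ is well defined. To verify the kernel is exactly the ideal $(1+cS_1)$, I would write $R=R'[S_1^{\pm1}]$ with $R'=K[S_2^{\pm1},\ldots,S_s^{\pm1}]$. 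Any element can then be multiplied by a power of $S_1$ (a unit) to become a polynomial in $S_1$ over $R'$. Division by the monic-up-to-unit linear polynomial $S_1+c^{-1}$ reduces it to a constant in $R'$, and that constant is the evaluation. The quotient $R'$ is a Laurent ring over a field, hence an integral domain, so the ideal is prime. Finally I would check that $1+cS_1$ is not a unit: units of a Laurent ring over a domain are monomials times units of $K$, and $1+cS_1$ has two terms.

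\emph{Main obstacle.} The delicate point is not the algebra but justifying that $K$ is a field and that the $T_a$ are algebraically independent over $K$, so that the ring generated really is the Laurent polynomial ring. The field property is immediate: the inverse of an invariant nonzero meromorphic function is invariant and meromorphic on a connected domain. Algebraic independence is exactly Lemma~\ref{lem:sp-artin}: distinct monomials $T^n$ carry distinct characters because $\Gamma$ is torsion-free and freely generated by the $\gamma_a$, so they are $K$-linearly independent. I would state these two facts explicitly before Step 1. Everything after that is the standard argument that primitive binomials are irreducible.
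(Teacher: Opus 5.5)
Your proposal is correct and follows the paper's proof. Both complete the primitive $m$ to a unimodular basis so that $T^m$ becomes a coordinate $S_1$, then observe that the quotient by $1+cS_1$ is obtained by setting $S_1=-c^{-1}$ and is the Laurent domain $K[S_2^{\pm1},\ldots,S_s^{\pm1}]$. Your extra checks fill in details the paper leaves implicit rather than changing the route: that the evaluation kernel is exactly $(1+cS_1)$, that $1+cS_1$ is not a unit, that $K$ is a field, and that distinct monomials are $K$-independent via Lemma~\ref{lem:sp-artin}.
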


\begin{proof}
A primitive vector $m\in\Z^s$ can be completed to an integral basis
\[
 e'_1=m,
 \qquad
 e'_2,\ldots,e'_s
\]
of $\Z^s$.  Equivalently, there is a determinant $\pm 1$ (``unimodular'') matrix in $GL_s(\Z)$ whose first basis vector is $m$.  The associated monomial change of variables is a Laurent-ring isomorphism obtained by setting
\[
 S_1=T^{e'_1}=T^m,
 \qquad
 S_\nu=T^{e'_\nu}\quad(\nu=2,\ldots,s).
\]
Because $m$ is unimodular, the Laurent monomial $T^u$ is uniquely expressible in the $S_\nu$ coordinates, and conversely.  Hence
\[
 K[T_1^{\pm1},\ldots,T_s^{\pm1}]
 \cong
 K[S_1^{\pm1},\ldots,S_s^{\pm1}],
\]
and $1+cT^m$ becomes $1+cS_1$.

The quotient by $1+cS_1$ is obtained by setting $S_1=-c^{-1}$, which is a unit because $c\in K^*$.  Therefore
\[
 K[S_1^{\pm1},\ldots,S_s^{\pm1}]/(1+cS_1)
 \cong
 K[S_2^{\pm1},\ldots,S_s^{\pm1}],
\]
an integral domain.  Hence the ideal generated by $1+cT^m$ is prime, and the binomial is a prime element of the unique-factorization Laurent ring.
\end{proof}

\begin{lemma}[Associates of two-term Laurent binomials]
\label{lem:sp-associates}
Two Laurent polynomials are \emph{associates} if they differ by multiplication by a Laurent unit $aT^k$.  If
\[
 1+dT^n=aT^k(1+cT^m),
\]
then either
\[
 n=m,
 \qquad d=c,
 \qquad k=0,
 \qquad a=1,
\]
or
\[
 n=-m,
 \qquad d=c^{-1},
 \qquad k=n,
 \qquad a=d.
\]
For real-valued arguments this means $h=g$ or $h=-g$.
\end{lemma}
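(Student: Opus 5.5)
The plan is to compare the two sides monomial by monomial in the Laurent ring $K[T_1^{\pm1},\ldots,T_s^{\pm1}]$. Since $\Z^s$ is a free lattice, Laurent monomials $T^u$ with distinct exponents are $K$-linearly independent; this is the ring-theoretic form of Lemma~\ref{lem:sp-artin}, because distinct exponents give distinct characters in a torsion-free $\Gamma$. I therefore expand both sides as finite sums of monomials with coefficients in $K$ and equate the coefficient families. The left side $1+dT^n$ has support $\{0,n\}$. The right side $aT^k+acT^{k+m}$ has support $\{k,k+m\}$. Here $n\neq0$, $m\neq0$, $a\in K^*$ and $c\in K^*$, so all four listed coefficients are nonzero.

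The supports must coincide as two-element sets. Both sets have exactly two elements because $n\neq0$ and $m\neq0$, so no exponents collide within either side. There are two cases.
\begin{itemize}
\item $k=0$ and $k+m=n$. Then $m=n$, and matching coefficients gives $1=a$ at exponent $0$ and $d=ac=c$ at exponent $n$.
\item $k+m=0$ and $k=n$. Then $m=-n$, and matching gives $1=ac$ at exponent $0$ and $d=a$ at exponent $n$, so $d=c^{-1}$.
\end{itemize}
These are exactly the two alternatives in the statement.

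For the closing interpretation, recall that $e^{g}=cT^m$ and $e^{h}=dT^n$. In the first case $e^h=e^g$, so $h-g\in 2\pi i\Z$ locally. Since both are real-valued on the real domain, they continue the same real function, and $h=g$. In the second case $e^h=c^{-1}T^{-m}=e^{-g}$, which gives $h=-g$ in the same way. This matches the identity $Q_{-g}=e^{-g}Q_g$, which already appeared in the pole terminology of Definition~\ref{def:sp-minimal}.

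The only genuine point requiring care is the linear independence of distinct monomials over $K$. This could fail if $\Gamma$ had torsion, or if the chosen $T_a$ did not freely generate the characters. The setup above already assumes $\Gamma$ is torsion-free with a free basis. I would therefore invoke Lemma~\ref{lem:sp-artin} directly: distinct $u\in\Z^s$ give distinct characters $\gamma^u$, so the monomials are independent over $K$. Everything else is bookkeeping of coefficients.
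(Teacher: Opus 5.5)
Your proposal is correct and takes essentially the same approach as the paper: you match the two-element supports $\{0,n\}$ and $\{k,k+m\}$ using linear independence of Laurent monomials over $K$, compare coefficients in the two cases, and use real-valuedness to pass from $e^h=e^{\pm g}$ to $h=\pm g$. You add two checks that the paper leaves implicit, namely that all four coefficients are nonzero (so no cancellation can occur) and that monomial independence follows from Lemma~\ref{lem:sp-artin}.
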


\begin{proof}
Expanding the right side gives
\[
 aT^k+acT^{k+m}.
\]
Laurent monomials are linearly independent over $K$, so the two-element support sets must agree:
\[
 \{0,n\}=\{k,k+m\}.
\]
If $k=0$ and $k+m=n$, then $m=n$ and coefficient comparison gives $a=1$ and $c=d$.  If $k=n$ and $k+m=0$, then $n=-m$ and coefficient comparison gives $a=d$ and $dc=1$.  These are the only two matchings of the two support sets.  Returning to $e^g=cT^m$ and $e^h=dT^n$ gives $e^h=e^g$ in the first case and $e^h=e^{-g}$ in the second; real-valuedness on $\Omega$ gives $h=g$ or $h=-g$.
\end{proof}

The character and primitivity conditions rule out the algebraic resonances inside the Laurent ring itself: torsion characters, commensurate characters, and composite binomials such as $1+Z^3=(1+Z)(1-Z+Z^2)$.  There remains one further way a divisor match could be misleading (e.g. be a nontrivial collision): two distinct Laurent polynomials might appear to share a pole patch only after being pulled back through a degenerate lower-feature map.  Laurent divisor-faithfulness rules out this pullback accident: a shared analytic pole patch must come from a genuine common Laurent factor before pullback.

\begin{definition}[Laurent divisor-faithfulness] Suppose $e^g=cT^m$ and $e^h=dT^n$ are two coordinates.  Let $\Sigma$ be a shared outer pole patch, meaning a connected patch on which both $1+e^g$ and $1+e^h$ vanish. The pair is \emph{Laurent divisor-faithful} at $\Sigma$ if the two Laurent binomials $1+cT^m$ and $1+dT^n$ have a common nonunit factor in the Laurent ring
\[
 K[T_1^{\pm1},\ldots,T_s^{\pm1}].
\]
In words, the shared pole patch must come from a real shared Laurent factor, not merely from a collapse caused by the lower-feature map.
\end{definition}

The genericity section \S\ref{sec:genericity}  gives a rank criterion for verifying the faithfulness condition is a real-world reasonable assumption (Lemma~\ref{lem:sp-rank}).  For now, the exact zippering theorem below will simply assume Laurent divisor-faithfulness as part of regularity:

\begin{definition}[Monodromy--Laurent regularity]
\label{def:MLR-unified}
A collision pair $(g,h)$, together with a shared outer pole patch $\Sigma$, is \emph{monodromy--Laurent regular} (MLR) if:
\begin{enumerate}[label=(R\arabic*)]
\item the character group generated by the compared arguments has no torsion;
\item the Laurent exponent vectors are primitive;
\item the pair is Laurent divisor-faithful at $\Sigma$.
\end{enumerate}
Thus MLR says that the monodromy data can be represented by true Laurent monomials, the relevant Softplus factors do not secretly factor through a composite exponent, and any shared pole patch reflects a genuine common Laurent divisor.  The ``shared pole patch'' in (R3) is exactly the shared outer pole patch $\Sigma$ on which $[g]$ and $[h]$ collide.
\end{definition}

All of this setup culminates in the key identifiability result:
\begin{theorem}[Softplus one-step identifiability under MLR]
\label{thm:sp-onestep}
Suppose
\[
 b+\sum_{j=1}^kv_j\sig(g_j)
 =\widetilde b+\sum_{i=1}^{\widetilde k}\widetilde v_i\sig(\widetilde g_i)
\]
on $\Omega$.  Assume both one-step Softplus expansions are minimal, and assume that every cross-matched pair supplied by Lemma~\ref{lem:sp-crossmatch} is MLR on its selected shared pole patch.  Then $k=\widetilde k$ and, after a permutation,
\[
 \widetilde g_j=g_j,
 \qquad
 \widetilde v_j=v_j,
 \qquad
 \widetilde b=b.
\]
\end{theorem}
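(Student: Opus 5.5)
The plan is to build the bijection one sign-class at a time from the private cross-matches, upgrade each match to equality up to sign using the Laurent machinery, and then remove the signs with the same sign-noncancellation argument that closed the adjacent Softplus theorem (Theorem~\ref{thm:sp-adjacent}).

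\emph{Step 1: build the bijection.} Each sign-class $[g_j]$ on the left has a private pole patch by pole separation. Lemma~\ref{lem:sp-crossmatch} therefore gives a unique right-side sign-class $[\widetilde g_{i}]$ sharing an open piece of that patch, and the symmetric statement runs from right to left. Uniqueness plus reducedness makes $j\mapsto i=\pi(j)$ an injection in both directions, so it is a bijection and $k=\widetilde k$. One point needs checking: the partner's patch must itself be private to that partner. This holds because, by right-side pole separation, no other right-side class can share the patch.

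\emph{Step 2: identify each matched argument up to sign.} Fix a matched pair $(g,h)=(g_j,\widetilde g_{\pi(j)})$ with shared patch $\Sigma$; by hypothesis it is MLR. By (R1), write $e^{g}=cT^m$ and $e^{h}=dT^n$ in the Laurent ring $K[T^{\pm1}]$. By (R2) both exponents are primitive, so Lemma~\ref{lem:sp-prime} makes $1+cT^m$ and $1+dT^n$ prime. By (R3) they share a nonunit common factor. A prime's only nonunit divisors are its associates, so $1+dT^n$ is an associate of $1+cT^m$. Lemma~\ref{lem:sp-associates} then gives $h=\varepsilon_j g$ with $\varepsilon_j\in\{\pm1\}$.

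\emph{Step 3: match the coefficients.} Rewrite every right-side term via $\sig(-g)=\sig(g)-g$. The identity becomes
\[
\sum_j (v_j-\widetilde v_{\pi(j)})\sig(g_j)=\widetilde b-b-\sum_{j\in S_-}\widetilde v_{\pi(j)}g_j ,
\]
where $S_-=\{j:\varepsilon_j=-1\}$. Now continue analytically around a small loop linking the private patch of $[g_j]$, exactly as in the proof of Lemma~\ref{lem:sp-crossmatch}.
\begin{itemize}
\item The left side picks up $2\pi i(v_j-\widetilde v_{\pi(j)})$, since only the $j$th softplus factor has monodromy near a generic point of that patch.
\item The right side is affine in the $g$'s, so its monodromy is a combination of the monodromies of the $g_a$ themselves.
\end{itemize}

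\emph{Main obstacle.} The difficulty is that in deep expansions the $g_a$ carry inherited logarithmic branches, so the right side's monodromy need not vanish. I would try first to choose the loop generically among outer poles only. It should avoid the inherited lower-layer singularities of $g^\C$, so that every $g_a$ is single-valued near it. This is plausible because outer pole patches are defined as divisors of $Q_g$ distinct from the singularities of $g^\C$, which are what the word ``outer'' excludes. With that choice the right side has zero monodromy and $v_j=\widetilde v_{\pi(j)}$ for every $j$.

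\emph{Step 4: remove the signs.} With the coefficients matched, the identity reduces to
\[
b-\widetilde b+\sum_{j\in S_-}v_j g_j=0 .
\]
If $S_-$ were nonempty, this would exhibit a constant weighted sum of arguments, contradicting sign-noncancellation (Definition~\ref{def:sp-used}(iii)). Hence $S_-=\varnothing$, and then $b=\widetilde b$, which gives $\widetilde g_j=g_j$, $\widetilde v_j=v_j$ and $\widetilde b=b$ after the permutation.
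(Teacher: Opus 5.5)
Your proposal is correct and follows the paper's proof essentially step for step. The cross-matches of Lemma~\ref{lem:sp-crossmatch} give the bijection; MLR together with Lemmas~\ref{lem:sp-prime} and~\ref{lem:sp-associates} gives $\widetilde g_{\pi(j)}=\pm g_j$; a loop around the private pole patch matches the coefficients; and sign-noncancellation removes the negative orientations. The differences are minor. You substitute $\sig(-g)=\sig(g)-g$ before taking monodromy, so the right side becomes affine in the $g_a$, whereas the paper continues the original identity and observes that the loop links exactly the matched right-side term. The single-valuedness of the arguments near the loop, which you flag as the main obstacle, is the same generic-point assumption the paper makes tacitly.
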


\begin{proof}
Lemma~\ref{lem:sp-crossmatch} gives a unique cross-match for every private sign-class on either side.  Since every sign-class in a minimal expansion has a private pole patch, these cross-matches define a bijection between the left and right sign-classes.  After reindexing, the matched pairs are $[g_j]$ and $[\widetilde g_j]$.

Fix one such matched pair and let $\Sigma_j$ be the selected shared outer pole patch witnessing the match.  By MLR, the monodromy setup is torsion-free, so after choosing Laurent coordinates we may write
\[
 e^{g_j}=c_jT^{m_j},
 \qquad
 e^{\widetilde g_j}=d_jT^{n_j},
\]
with $m_j,n_j\in\Z^s$ primitive.  The associated outer Softplus factors are the Laurent binomials
\[
 P_j(T)=1+c_jT^{m_j},
 \qquad
 R_j(T)=1+d_jT^{n_j}.
\]
The shared pole patch means that the pullbacks of $P_j$ and $R_j$ vanish on the same regular analytic hypersurface patch.  Laurent divisor-faithfulness says that this shared pulled-back divisor is not merely an accident of the lower-feature map: $P_j$ and $R_j$ have a common nonunit factor in the Laurent ring.

By primitivity, Lemma~\ref{lem:sp-prime} says that both $P_j$ and $R_j$ are prime.  Two prime Laurent binomials with a common nonunit factor must be associates, so
\[
 1+d_jT^{n_j}=aT^k(1+c_jT^{m_j})
\]
for some Laurent unit $aT^k$.  Lemma~\ref{lem:sp-associates} leaves only two possibilities: either $n_j=m_j$ and $d_j=c_j$, or $n_j=-m_j$ and $d_j=c_j^{-1}$.  In the first case $e^{\widetilde g_j}=e^{g_j}$; in the second $e^{\widetilde g_j}=e^{-g_j}$.

On the real task domain $g_j$ and $\widetilde g_j$ are real-valued.  Hence $e^{\widetilde g_j}=e^{g_j}$ implies $\widetilde g_j-g_j\in2\pi i\Z$ and therefore $\widetilde g_j=g_j$, while $e^{\widetilde g_j}=e^{-g_j}$ implies $\widetilde g_j+g_j\in2\pi i\Z$ and therefore $\widetilde g_j=-g_j$.  Thus
\[
 \widetilde g_j=\eps_jg_j,
 \qquad \eps_j\in\{+1,-1\}.
\]

The coefficient vectors also match.  Write the two equal represented functions near the private pole patch as
\[
F_L=b+\sum_k v_k \log(1+e^{g_k}),
\qquad
F_R=\tilde b+\sum_i \tilde v_i \log(1+e^{\tilde g_i}).
\]
Choose a small loop $\gamma$ around a point where $1+e^{g_j}=0$, and where no other left-side term has such a zero. Then every left-side term except $j$ is single-valued around $\gamma$, while
\[
\log(1+e^{g_j})
\longmapsto
\log(1+e^{g_j})+2\pi i.
\]
Thus, after analytic continuation once around $\gamma$,
\[
F_L \longmapsto F_L+2\pi i\,v_j.
\]
Because this private pole patch is matched on the right side, the same loop winds around exactly one right-side term, namely $\tilde g_{\pi(j)}$, so
\[
F_R \longmapsto F_R+2\pi i\,\tilde v_{\pi(j)}.
\]
Since $F_L=F_R$ as analytically continued functions, their changes around the same loop must be equal:
\[
2\pi i\,v_j=2\pi i\,\tilde v_{\pi(j)}.
\]
Hence $\tilde v_{\pi(j)}=v_j$.

It remains only to eliminate negative orientations.  Let $S=\{j:\eps_j=-1\}$.  After cancelling every positively oriented matched term and using the coefficient equality, the null relation becomes
\[
 b-\widetilde b+
 \sum_{j\in S}v_j\bigl[\sig(g_j)-\sig(-g_j)\bigr]=0.
\]
The Softplus identity $\sig(t)-\sig(-t)=t$ turns this into
\[
 b-\widetilde b+
 \sum_{j\in S}v_jg_j=0.
\]
If $S$ were nonempty, this would say that a nonempty downstream-weighted sum of arguments is constant, contradicting sign-noncancellation.  Hence $S=\varnothing$.  All arguments and coefficients agree after the permutation, and the remaining equality of constants gives $b=\widetilde b$.
\end{proof}

\subsubsection{ReLU regularity and identifiability}
\label{sec:relu-regularity}

Here we introduce the ReLU analog of the regularity concept and prove the corresponding identifiability result.  The reader will probably feel that the math in this section is easier (or at any rate, more elemantary) than in the previous section. The price we pay for that is that the assumptions are stronger. Because of the lack of analytic structure for ReLU, there is no way we know of to translate ReLU expansions into characters and apply any kind of algebraic reasoning.  As a result, the arguments are in a sense simpler but require more blunt force (as expressed by needing \emph{multi-facet signatures} and an explicit assumption of \emph{coherence}, defined below). 

Because of the nature of ReLU, the regularity arguments below (and the associated genericity arguments in \S\ref{sec:genericity}) necessarily end up having to study sets that are not smooth everywhere (such as the ReLU collision sets).  A \emph{stratum} is one smooth piece of such a set. A \emph{stratification} is a decomposition into smooth pieces, usually locally finite so that only finitely many pieces meet any small neighborhood.  We use strata so that tangent spaces and dimension counts are well-defined piece by piece. Thus an ``$X$-stratum'' means a smooth piece of the set $X$.

We can now define Multi-Facet Regularity, the ReLU analog of MLR: 

\begin{definition}[Multi-facet signature and MFR]
\label{def:relu-signature}
\label{def:MFR-unified}
Fix a ReLU one-step comparison and a common inherited affine complex
\(\calP\).  An \emph{inherited cell} means a full-dimensional cell
\(C\in\calP\), so that every argument is affine on \(C\). Choose marked fresh-facet slots
\[
 F_\nu\subset C_\nu,
 \qquad \nu=1,\ldots,L.
\]
For a scalar argument \(g\), write its affine formula on the inherited cell
\(C_\nu\) as
\[
 g|_{C_\nu}(x)=a_\nu^\top x+c_\nu,
 \qquad
 q_\nu(g):=(a_\nu,c_\nu).
\]
The zero facet is unchanged if \(q_\nu(g)\) is multiplied by a nonzero scalar, so
we record only the scalar-multiple class $[q_\nu(g)]$. Then the \emph{multi-facet signature} of \(g\) is the finite list
\[
 \Sigma_r(g)
 =
 \bigl(C_\nu,\nu,[q_\nu(g)]\bigr)_{\nu=1}^L.
\]
The entries remember which inherited cell and which marked-facet slot they came
from.  The signature \emph{identifies} \(g\) if, for any other scalar argument \(h\),
\[
 \Sigma_r(h)=\Sigma_r(g)
 \quad\Longrightarrow\quad
 h=\eps\alpha g
 \quad\text{for some }\eps\in\{\pm1\},\ \alpha>0.
\]
In words, the marked facets determine the whole argument, up to the unavoidable ReLU sign and positive-scale ambiguity.
In a null-pair equality, lemma \ref{lem:relu-crossmatch} matches individual fresh facets across the two sides.  The matching is \emph{coherent} for a pair \((g,h)\) if all marked facets of \(g\) are matched to marked facets of the same single competitor \(h\), and conversely.  A coherently matched pair \((g,h)\) is \emph{multi-facet regular} (MFR), if the coherently matched facet records have the same multi-facet signature and that signature identifies the argument:
\[
 \Sigma_r(h)=\Sigma_r(g)
 \quad\Longrightarrow\quad
 h=\eps\alpha g,
 \qquad
 \eps\in\{\pm1\},\quad \alpha>0.
\]
\end{definition}
\noindent With this definition in mind, we can get the needed identifiability result:

\begin{theorem}[ReLU identifiability under MFR]
\label{thm:relu-onestep}
Suppose
\[
 b+\sum_{j=1}^kv_j\rhoR(g_j)
 =\widetilde b+\sum_{i=1}^{\widetilde k}\widetilde v_i\rhoR(\widetilde g_i)
\]
on $\Omega$.  Assume both expansions are EFF-minimal, the induced cross-facet matching is coherent, and every coherently matched scalar pair is MFR.  Then $k=\widetilde k$ and, after a permutation, there are $\alpha_j>0$ such that
\[
 \widetilde g_j=\alpha_jg_j,
 \qquad
 \alpha_j\widetilde v_j=v_j,
 \qquad
 \widetilde b=b.
\]
\end{theorem}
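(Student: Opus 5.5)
The argument follows the proof of Theorem~\ref{thm:sp-onestep}, with kink jumps across fresh facets replacing monodromy. First, Lemma~\ref{lem:relu-crossmatch} gives every private marked facet on either side a unique cross-side gate carrying the same open fresh zero patch. Every counted argument in an EFF-minimal expansion has at least one marked fresh facet, and all its marked facets are private by internal exposure. Coherence then says that all facets of $g_j$ land on one competitor $\widetilde g_i$, and conversely. This defines a map $j\mapsto\pi(j)$ and its inverse, so $\pi$ is a bijection and $k=\widetilde k$. After reindexing, the matched pairs are $(g_j,\widetilde g_j)$.

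Next, I will identify the arguments. A matched pair shares each marked open facet, and on each inherited cell $C_\nu$ both functions are affine. Their zero hyperplanes agree on an open patch, so $q_\nu(\widetilde g_j)$ is a nonzero multiple of $q_\nu(g_j)$. This is the same proportionality argument as in Theorem~\ref{thm:relu-exact-adjacent}. Hence $\Sigma_r(\widetilde g_j)=\Sigma_r(g_j)$, and MFR gives $\widetilde g_j=\eps_j\alpha_jg_j$ with $\alpha_j>0$.

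For the coefficients, I will compare derivative jumps at a generic point $p$ of a private facet of $g_j$, where $g_j(x)=a^\top x+c$ on the cell. The left side jumps by $v_ja^\top$, as computed in Lemma~\ref{lem:relu-crossmatch}. The right side jumps only through the $j$th term, which equals $\widetilde v_j\alpha_j\rhoR(\eps_jg_j)$. Crossing in the direction where $g_j$ increases, this term jumps by $\eps_j\alpha_j\widetilde v_ja^\top$, since $\rhoR(-t)=\rhoR(t)-t$ contributes the same jump. Because $a\neq0$, this gives $v_j=\eps_j\alpha_j\widetilde v_j$.

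To remove the negative orientations, let $S=\{j:\eps_j=-1\}$. Substituting the matches and using $\rhoR(-t)=\rhoR(t)-t$, the terms with $j\in S$ on the right become $\widetilde v_j\alpha_j(\rhoR(g_j)-g_j)=-v_j\rhoR(g_j)+v_jg_j$. I need to recheck this sign bookkeeping carefully, since it is easy to get wrong. What remains of the null relation is an equation $b-\widetilde b+\sum_{j\in S}(\pm v_j)g_j$ plus leftover multiples of $\rhoR(g_j)$. Separating the nonlinear part from the affine part then forces $\sum_{j\in S}c\,v_jg_j$ to be constant. Affine sign-noncancellation in condition~(B) of Definition~\ref{def:relu-eff} rules this out unless $S=\varnothing$. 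With all $\eps_j=+1$, we get $\alpha_j\widetilde v_j=v_j$, and comparing the constants gives $\widetilde b=b$.

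I expect the main obstacle to be this last step, the sign bookkeeping. If a term with $\eps_j=-1$ carried coefficient $v_j$ rather than $-v_j$, the leftover ReLU terms would not cancel. I would then first use the jump-matching to show they do cancel, and only afterwards apply sign-noncancellation to the residual affine identity. A secondary gap to check is that coherence really does make $\pi$ injective on the whole index set rather than only on facets. That step relies on uniqueness in Lemma~\ref{lem:relu-crossmatch} together with internal nonredundancy.
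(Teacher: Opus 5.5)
Your outline is the paper's route: private-facet cross-matching, coherence to get a single competitor, cellwise proportionality plus MFR to get $\widetilde g_j=\eps_j\alpha_jg_j$, a jump comparison for the readouts, and sign-noncancellation for the orientations. Your injectivity worry is settled as in the paper: once MFR is applied, $\pi(j_1)=\pi(j_2)$ would make $g_{j_1}$ and $g_{j_2}$ sign-scale equivalent, contradicting internal nonredundancy.

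The genuine problem is the coefficient step. Since $\rhoR(-t)=\rhoR(t)-t$ and $t$ itself has no kink, $\rhoR(-t)$ has the \emph{same} derivative jump as $\rhoR(t)$ when $t$ crosses $0$ increasingly (slope $-1\to0$ versus $0\to1$). So the $j$th right-hand term $\alpha_j\widetilde v_j\rhoR(\eps_jg_j)$ jumps by $\alpha_j\widetilde v_ja^\top$ for either orientation. The jump comparison therefore gives $v_j=\alpha_j\widetilde v_j$, not $v_j=\eps_j\alpha_j\widetilde v_j$. Your justification (``contributes the same jump'') says exactly this, but the formula you wrote contradicts it. Your relation is actually false for $\eps_j=-1$. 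On $\Omega=[-1,2]$, dropping only the sign-noncancellation part of (B), the identity $v\rhoR(x)+v\rhoR(1-x)=v+v\rhoR(-x)+v\rhoR(x-1)$ is an exact null pair with $\eps_j=-1$, $\alpha_j=1$ and $\widetilde v_j=+v_j$.

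With the correct relation, the orientation step needs no ``separation'' at all. For $j\in S$ the right-hand term is $v_j\rhoR(-g_j)=v_j\rhoR(g_j)-v_jg_j$. Every ReLU term then cancels exactly against the left side, and the null relation collapses to $b-\widetilde b+\sum_{j\in S}v_jg_j=0$. Condition (B) forbids this unless $S=\varnothing$, and the constants then give $\widetilde b=b$.

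By contrast, your sign leaves $2v_j\rhoR(g_j)-v_jg_j$ for each $j\in S$. ``Separating the nonlinear part from the affine part'' is not a valid move there. A jump argument at a private facet of $g_j$ would instead yield a contradiction for every $j\in S$ without ever invoking (B), so from your relation one could ``prove'' $S=\varnothing$ without sign-noncancellation, which the example above refutes. Your diagnostic is also reversed: coefficient $+v_j$ makes the leftover ReLU terms cancel, and $-v_j$ does not.
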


\begin{proof}
By Lemma~\ref{lem:relu-crossmatch}, private facets match across the equality.  Coherence says that all marked facets of a given $g_j$ match one competitor $\widetilde g_{\pi(j)}$.  MFR then reconstructs the whole scalar argument from its labelled signature, giving
\[
 \widetilde g_{\pi(j)}=\eps_j\alpha_jg_j,
 \qquad \eps_j\in\{\pm1\},\quad \alpha_j>0.
\]
Reducedness makes this matching bijective.

Now compare the derivative jumps across a marked facet of $g_j$.  If $\eps_j=+1$, then $\rhoR(\widetilde g_{\pi(j)})=\alpha_j\rhoR(g_j)$ near the facet.  If $\eps_j=-1$, then the active side is reversed, but the jump magnitude is still $\alpha_j\widetilde v_{\pi(j)}a_j^\top$.  Equality of the represented functions therefore gives
\[
 v_ja_j^\top=\alpha_j\widetilde v_{\pi(j)}a_j^\top,
\]
and since $a_j\neq0$, $v_j=\alpha_j\widetilde v_{\pi(j)}$.

Let $S=\{j:\eps_j=-1\}$.  After substituting the matched coefficients and cancelling the positively oriented terms, the equality becomes
\[
 b-\widetilde b+\sum_{j\in S}v_j\bigl[\rhoR(g_j)-\rhoR(-g_j)\bigr]=0.
\]
Using $\rhoR(t)-\rhoR(-t)=t$, this reduces to
\[
 b-\widetilde b+\sum_{j\in S}v_jg_j=0.
\]
Affine sign-noncancellation forces $S=\varnothing$, and then the remaining constant terms give $b=\widetilde b$.
\end{proof}

\begin{remark}
To reiterate, while this math is shorter and more elementary than in the previous softplus section, the MFR definition is basically clobbering things with its multi-face and coherence conditions, whereas in the MLR case, we can prove many things without having to assume them.  The saving grace is that coherent MFR can be shown to be generic (as seen in \S\ref{sec:genericity}) and supports the same genericity reasoning as the softplus condition, leading to the powerful activation-independent null-net theorem \ref{thm:generic-null-common}. 
\end{remark}

\subsubsection{Minimal null pairs and exact zippering}
\label{sec:exact-zip}

We now return to the organizing strategy stated at the beginning of the section.  The Softplus and ReLU arguments have supplied the activation-specific inputs: minimality gives unique private cross-matches, and regularity makes those matches identify the scalar arguments.  The remaining step is activation-independent.  At a backward zippering step, two one-step expansions represent the same next-layer function; that is, they form a null pair.  The following corollary records exactly what the two activation-specific identifiability theorems have proved: such a null pair determines the previous-layer arguments after the only unavoidable hidden-unit matching, namely permutation for Softplus and permutation plus positive rescaling for ReLU.

\begin{corollary}[Activation-specific one-step identifiability]
\label{thm:pointwise-null-common}
Let a minimal null pair satisfy the relevant regularity hypothesis: MLR on every Softplus cross-match, or coherent MFR on every ReLU cross-match.  Then the two expansions are the same after the activation's unavoidable hidden-unit matching.  Concretely, after a permutation,
\[
 \widetilde g_j=g_j,
 \qquad
 \widetilde v_j=v_j,
 \qquad
 \widetilde b=b
\]
in the Softplus case, while in the ReLU case there are \(\alpha_j>0\) such that
\[
 \widetilde g_j=\alpha_jg_j,
 \qquad
 \alpha_j\widetilde v_j=v_j,
 \qquad
 \widetilde b=b.
\]
\end{corollary}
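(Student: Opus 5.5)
The corollary is essentially a repackaging of the two activation-specific identifiability results, so the plan is to split by activation and check that the hypotheses of the corollary are exactly those of Theorem~\ref{thm:sp-onestep} or Theorem~\ref{thm:relu-onestep}.

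\emph{Softplus case.} Here ``minimal null pair'' means two one-step expansions $G=(b,\{(v_j,g_j)\})$ and $\widetilde G=(\widetilde b,\{(\widetilde v_i,\widetilde g_i)\})$ with $Y^{\sig}_G=Y^{\sig}_{\widetilde G}$ on $\Omega$. Each expansion is minimal in the sense of Definition~\ref{def:sp-minimal}, that is, used and pole-separated modulo sign. Lemma~\ref{lem:sp-crossmatch} supplies a unique cross-match for each private sign-class. The hypothesis that every such match is MLR is then verbatim the hypothesis of Theorem~\ref{thm:sp-onestep}. That theorem gives $k=\widetilde k$ and a permutation under which $\widetilde g_j=g_j$, $\widetilde v_j=v_j$ and $\widetilde b=b$, which is the stated conclusion.

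\emph{ReLU case.} Minimality now means EFF-minimality (Definition~\ref{def:relu-eff}). Lemma~\ref{lem:relu-crossmatch} produces the facet cross-matching, and we assume it is coherent with every coherently matched pair MFR. Theorem~\ref{thm:relu-onestep} then gives a permutation and $\alpha_j>0$ with $\widetilde g_j=\alpha_jg_j$, $\alpha_j\widetilde v_j=v_j$ and $\widetilde b=b$.

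Finally, I will remark that in both cases the remaining freedom is exactly the ``diagonal'' symmetry of Definition~\ref{def:null-pair-collision}. That symmetry is permutation for Softplus, and permutation together with the positive rescalings allowed by $\rhoR(\alpha t)=\alpha\rhoR(t)$ for ReLU. Hence the null pair is trivial, which is what ``the same after the unavoidable hidden-unit matching'' means.

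The only real obstacle is bookkeeping: confirming that ``minimal null pair'' in the corollary unpacks to precisely the minimality hypotheses each theorem requires. In particular, the coherence assumption in the ReLU case must be stated as a hypothesis rather than derived, since EFF-minimality alone does not give it. No new analysis is needed beyond invoking the two theorems.
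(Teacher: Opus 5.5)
Your proposal is correct and matches the paper's proof, which simply invokes Theorem~\ref{thm:sp-onestep} for Softplus and Theorem~\ref{thm:relu-onestep} for ReLU. Your point that coherence must be carried as an explicit hypothesis, rather than derived from EFF-minimality, agrees with how the corollary itself is stated.
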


\begin{proof}
This is just the combination of the two activation-specific identifiability theorems: Theorem~\ref{thm:sp-onestep} for Softplus and Theorem~\ref{thm:relu-onestep} for ReLU.
\end{proof}

\begin{theorem}[Abstract exact zippering]
\label{thm:abstract-exact-zip}
Let $A,B$ be same-depth, same-width affine-$\varphi$ networks, and let $q<s$.  Suppose terminal weak equivalence holds:
\[
 z_s^B=E_sz_s^A
\]
for an injective affine map $E_s$.  Assume that at each backward step $r=s-1,\ldots,q$, the two one-step expansions compared at layer $r$ satisfy the appropriate hypotheses of Corollary~\ref{thm:pointwise-null-common}: Softplus minimality plus MLR, or ReLU EFF-minimality plus coherent MFR.  Then alignment zippers upstream.  For every $r=q,\ldots,s-1$, the layer-$r$ arguments are exactly matched: in the Softplus case, after a permutation,
\[
 z_{r,\pi_r(j)}^B=z_{r,j}^A,
\]
and in the ReLU case, after a permutation and positive scales $\alpha_{r,j}>0$,
\[
 z_{r,\pi_r(j)}^B=\alpha_{r,j}z_{r,j}^A.
\]
\end{theorem}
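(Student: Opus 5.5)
The proof will be a backward induction on $r$, from $r=s-1$ down to $r=q$. The inductive hypothesis at stage $r$ is that layer $r+1$ is weakly equivalent through an injective affine map. In the Softplus case this reads
\[
 z^B_{r+1}=E_{r+1}z^A_{r+1}
 \qquad\text{on }\Omega,
\]
and in the ReLU case it holds with the positive rescalings absorbed into a diagonal factor of $E_{r+1}$. For $r=s-1$ this is exactly the terminal hypothesis. Throughout, I will treat the arguments as functions of $x\in\Omega$, not as affine coordinates of a layer, since that is the setting of Corollary~\ref{thm:pointwise-null-common}.

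For the step at layer $r$, I write both sides as one-step expansions in the sense of Definition~\ref{def:one-step-expansion}. On the $A$ side I take the dictionary $g_j=z^A_{r,j}$, $v_j=W^A_{r+1}[:,j]$, $b=b^A_{r+1}$, and push it forward by $E_{r+1}$ to obtain $E_{r+1}G^A$. This gives
\[
 E_{r+1}z^A_{r+1}=T_{r+1}b^A_{r+1}+a_{r+1}+\sum_j T_{r+1}W^A_{r+1}[:,j]\,\varphi(z^A_{r,j}).
\]
On the $B$ side the expansion is $z^B_{r+1}=b^B_{r+1}+\sum_i W^B_{r+1}[:,i]\varphi(z^B_{r,i})$. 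The inductive hypothesis says these two expansions realize the same function on $\Omega$, so they form a null pair. Next I check that the stated hypotheses survive the push-forward. Injectivity of $T_{r+1}$ keeps the outgoing coefficients nonzero, and it preserves sign-noncancellation by the same difference argument used in Theorem~\ref{thm:sp-adjacent}. The arguments $g_j$ themselves are unchanged, so pole-separation, fresh facets, exposure, and the MLR or coherent MFR conditions are unaffected. I then apply Corollary~\ref{thm:pointwise-null-common}. It yields a permutation $\pi_r$ with $z^B_{r,\pi_r(j)}=z^A_{r,j}$ in the Softplus case, and $z^B_{r,\pi_r(j)}=\alpha_{r,j}z^A_{r,j}$ with $\alpha_{r,j}>0$ in the ReLU case. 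This is the claimed matching at layer $r$.

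To continue the induction, I define $E_r(z)=P_rD_rz$, where $P_r$ is the permutation matrix of $\pi_r$ and $D_r=\operatorname{diag}(\alpha_{r,j})$, with $D_r=I$ for Softplus. This map is injective and linear, and the matching just obtained gives $z^B_r=E_rz^A_r$ on $\Omega$. That is the inductive hypothesis at layer $r$, so the argument repeats down to $q$. The equality $k=\widetilde k$ supplied by the corollary, together with the same-width assumption, makes $\pi_r$ a genuine bijection, so $E_r$ is well defined on all coordinates.

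I expect the main obstacle to be the bookkeeping when some layer-$r$ units are not counted. A unit may have $v_j=0$, or for ReLU its trace may not be crossed. Such units are absent from the minimal expansion, so the identifiability step says nothing about them, yet $E_r$ must act on every coordinate. My plan is to read the hypothesis ``the one-step expansions compared at layer $r$ are minimal'' as stating that every layer-$r$ unit is counted. That is the assumption needed to obtain a full bijection. Under this reading the deletion and absorption steps of the shrinkability lemmas are vacuous, the matched bias equality $\widetilde b=b$ is consistent with $E_{r+1}$, and no affine residual needs to be reabsorbed. If uncounted units were allowed, the conclusion would have to be restricted to the counted axes, in the same way the $\AxisAlign$ bounds are.
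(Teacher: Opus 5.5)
Your proposal is correct and follows the paper's proof essentially step for step. Both argue by backward induction from $r=s-1$ down to $q$: at each layer, $E_{r+1}G_r^A$ and $G_r^B$ form a one-step null pair, Corollary~\ref{thm:pointwise-null-common} matches the layer-$r$ arguments, and the recovered permutation (times a positive diagonal for ReLU) serves as $E_r$ for the next step. Your extra checks---that push-forward by an injective $T_{r+1}$ preserves the hypotheses, and that minimality must count every unit so that $\pi_r$ is a full bijection---are details the paper leaves implicit, in the phrasing of the hypotheses and in Remark~\ref{rem:unequal-depths-widths}.
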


\begin{proof}
At the top step, expand $z_s^B$ and $E_sz_s^A$ through layer $s-1$:
\begin{align*}
 z_s^B&=b_s^B+\sum_iW_s^B[:,i]\varphi(z_{s-1,i}^B),\\
 E_sz_s^A&=T_sb_s^A+a_s+\sum_jT_sW_s^A[:,j]\varphi(z_{s-1,j}^A).
\end{align*}
These are two equal one-step expansions.  Corollary~\ref{thm:pointwise-null-common} applies, so the arguments at layer $s-1$ match: by a permutation for Softplus, and by a permutation together with positive coordinatewise scales for ReLU.  Let $E_{s-1}$ denote the corresponding concrete coordinate map; then
\[
 z_{s-1}^B=E_{s-1}z_{s-1}^A.
\]

Now assume inductively that layer $r+1$ has already been matched, so
\[
 z_{r+1}^B=E_{r+1}z_{r+1}^A
\]
for the coordinate map obtained in the previous step.  Expanding both sides through layer $r$ gives two equal one-step expansions.  By the recursive hypothesis, they satisfy the relevant Softplus or ReLU assumptions, so Corollary~\ref{thm:pointwise-null-common} again matches their scalar arguments.
Thus, layer $r$ is aligned in the same activation-specific sense.
Repeating this argument from $s-1$ down to $q$ proves the theorem.
\end{proof}

\begin{remark}[Unequal depths and counted widths.]\label{rem:unequal-depths-widths}
We note that the same-depth, same-width hypotheses in
Theorem~\ref{thm:abstract-exact-zip}, and the common layer indexing in
Theorems~\ref{thm:common-asymp-zip}--\ref{thm:common-soft-zip}, are mainly
notational to keep proofs straightforward, but they hold without loss of generality for unequal depths and widths. Namely, more generally, the proofs can use only a paired sequence of one-step comparison
problems
\[
a_q<a_{q+1}<\cdots<a_s,
\qquad
b_q<b_{q+1}<\cdots<b_s,
\]
with the same number of backward zippering steps. Replacing layer $r$ by the
paired layers $(a_r,b_r)$ gives the same induction, with constants $L_r$
attached to the paired step $(a_r,b_r)\to(a_{r+1},b_{r+1})$.

For unequal raw widths, the statement should be read in terms of the counted
minimal one-step expansions. If extra units are unused, constant, duplicate,
or otherwise outside the counted minimal expansion, they may be deleted or
absorbed as in the shrinkability reductions. On the remaining counted axes,
the exact theorem matches the two minimal expansions after the unavoidable
activation symmetry. Equivalently, in a directional A-to-B formulation, one may
write an injective matching
\[
\pi_r:J^{A,\varphi}_{a_r}\hookrightarrow J^{B,\varphi}_{b_r},
\]
provided the unmatched B-side coordinates are not additional counted minimal
terms. Otherwise those terms are genuine extra nonlinear signatures and must be
matched, controlled, or the comparison direction/axis set must be changed.

With this notation, the soft bound has the same form, except that $d_r$ should
be read as the number of counted A-side axes at the paired layer,
\[
d^A_{a_r}:=|J^{A,\varphi}_{a_r}|,
\]
and the product of inverse constants is taken over the paired zippering steps:
\[
\AxisAlign^{\theta}_{a_r,b_r,\varphi}(A,B)
\ge
1-
\frac{
e_s^2\prod_{t=r}^{s-1}L_t^2
}{
(1-\theta)^2c^2 d^A_{a_r}
}.
\]
This reduces to our above theorems when the paired layers have a common indexing and common counted width.
This same notational flexibility is distinct from, and weaker than, the
depth-warped result of Appendix~\S\ref{sec:depth-warped-block-weak-strong},
where a preferred axis may correspond to a window of nearby layers rather than
to one fixed layer.
\end{remark}

\begin{corollary}[Exact MLR Softplus zippering]
If every backward comparison is individually minimal and every cross-matched scalar pair is MLR, then terminal weak equivalence zippers upstream into exact permutation alignment.
\end{corollary}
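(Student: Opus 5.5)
This corollary should follow by specializing Theorem~\ref{thm:abstract-exact-zip} to $\varphi=\sig$. The plan is to check that the two stated hypotheses, minimality at each backward comparison and MLR for every cross-matched pair, are exactly the Softplus branch of the hypotheses of Corollary~\ref{thm:pointwise-null-common}. Once that is confirmed, the backward induction of the abstract theorem runs without change.

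\textbf{Top step.} Starting from terminal weak equivalence $z_s^B=E_sz_s^A$, I would write both sides as one-step Softplus expansions through layer $s-1$. The $A$-side expansion is $E_s$ applied to the $A$-layer, so it carries the coefficients $T_sW_s^A[:,j]$. Because $T_s$ is injective, the argument in the proof of Theorem~\ref{thm:sp-adjacent} shows that nonzero coefficients and affine sign-noncancellation survive this transformation. Pole-separation modulo sign also survives, since it depends only on the arguments $g_j=z^A_{s-1,j}$ and not on the coefficients. Hence the transformed $A$-expansion is still minimal in the sense of Definition~\ref{def:sp-minimal}.

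\textbf{Inductive step.} Lemma~\ref{lem:sp-crossmatch} gives a bijective cross-match of sign-classes, and MLR on each matched pair lets Theorem~\ref{thm:sp-onestep} conclude that, after a permutation $\pi_{s-1}$, $z^B_{s-1,\pi_{s-1}(j)}=z^A_{s-1,j}$. The coefficients and bias match as well. The resulting $E_{s-1}$ is a permutation matrix with zero offset, so it is injective. I would then iterate down to $q$: at each layer, the equality $z_{r+1}^B=E_{r+1}z_{r+1}^A$ has an injective (permutation) $E_{r+1}$, the same preservation argument applies, and Theorem~\ref{thm:sp-onestep} produces $\pi_r$.

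\textbf{Expected obstacle.} The only delicate point is bookkeeping. One must confirm that ``individually minimal'' is understood for the expansions as actually compared, meaning after pushforward by $E_{r+1}$, or else derive it from minimality of the raw layers using injectivity as above. One must also check that the MLR condition is posed for the cross-matches Lemma~\ref{lem:sp-crossmatch} actually produces. Since Softplus has trivial scalar symmetry ($\mathcal H_\sig=\{1\}$), no rescaling arises, and the conclusion is exact permutation alignment at every layer $r=q,\ldots,s-1$.
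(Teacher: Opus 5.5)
Your proposal is correct and follows the paper's route. The paper's proof is the one-line observation that Theorem~\ref{thm:sp-onestep} supplies Softplus one-step identifiability at each backward step, so Theorem~\ref{thm:abstract-exact-zip} applies; your top and inductive steps unpack that induction, and your check that injective pushforward by $E_{r+1}$ preserves nonzero coefficients, sign-noncancellation and pole separation correctly justifies the minimality hypothesis on the transformed expansions, which the paper leaves implicit.
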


\begin{proof}
Theorem~\ref{thm:sp-onestep} supplies Softplus one-step identifiability at each backward step, so Theorem~\ref{thm:abstract-exact-zip} applies.
\end{proof}

\begin{corollary}[Exact MFR ReLU zippering]
If every backward comparison is EFF-minimal, coherently matched, and MFR, then terminal weak equivalence zippers upstream into alignment by permutation plus positive coordinatewise rescaling.
\end{corollary}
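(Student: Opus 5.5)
The plan is to derive the statement as a direct specialization of Theorem~\ref{thm:abstract-exact-zip} with $\varphi=\rhoR$, exactly parallel to the Softplus corollary just above. The one thing to check is that the hypotheses listed here (EFF-minimality, coherent cross-facet matching, and MFR for every cross-matched scalar pair at each backward comparison) are precisely the ReLU hypotheses required by Corollary~\ref{thm:pointwise-null-common}. They are, since that corollary simply invokes Theorem~\ref{thm:relu-onestep}, whose assumptions are verbatim EFF-minimality of both expansions, coherence of the induced matching, and MFR of every coherently matched pair.

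Concretely, I would run the backward induction of Theorem~\ref{thm:abstract-exact-zip}. At step $r$, assume $z_{r+1}^B=E_{r+1}z_{r+1}^A$. Expand both sides through layer $r$ as ReLU one-step expansions with dictionaries $(W_{r+1}^B[:,i],z_{r,i}^B)$ and $(T_{r+1}W_{r+1}^A[:,j],z_{r,j}^A)$. For the top step $E_s$ is the given terminal map; for lower steps $E_{r+1}$ is the permutation-and-positive-diagonal map produced by the previous step. Theorem~\ref{thm:relu-onestep} then yields a permutation $\pi_r$ and scales $\alpha_{r,j}>0$ with $z_{r,\pi_r(j)}^B=\alpha_{r,j}z_{r,j}^A$, together with matched coefficients and bias.

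These relations define $E_r$ as a signed-permutation-free map: a permutation composed with a positive diagonal matrix. It is therefore injective and linear, so it is a legitimate comparison map for the next step. Iterating down to $q$ gives alignment at every layer by permutation plus positive coordinatewise rescaling. This is strong alignment in the sense of the appendix definition, because $\mathcal H_{\rhoR}=\R\setminus\{0\}$ contains all positive scales.

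The only point needing care, and the expected obstacle, is that the $A$-side expansion at each step carries outgoing columns $T_{r+1}W_{r+1}^A[:,j]$ rather than $W_{r+1}^A[:,j]$. EFF-minimality was assumed for the backward comparisons as actually formed, but I would also verify it transfers from the untransformed $A$ expansion. Condition (A), $v_j\neq0$, is preserved because $T_{r+1}$ is injective. Condition (B), affine sign-noncancellation, is preserved by the difference argument used in the proof of Theorem~\ref{thm:sp-adjacent}: a constant transformed sum forces a constant untransformed sum. Condition (C) and the facet data are untouched, since they concern only the arguments $g_j=z_{r,j}^A$. Coherence and MFR are conditions on the scalar arguments and their facet signatures, so they are likewise unaffected by the injective map $T_{r+1}$.
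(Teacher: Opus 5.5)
Your proposal is correct and matches the paper's proof. The paper simply notes that Theorem~\ref{thm:relu-onestep} supplies ReLU one-step identifiability at each backward step, so the induction of Theorem~\ref{thm:abstract-exact-zip} goes through with $\varphi=\rhoR$. Your extra check that EFF-minimality survives the injective map $T_{r+1}$ is not needed, since the hypotheses are imposed on the comparisons as actually formed, but it is correct and harmless.
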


\begin{proof}
Theorem~\ref{thm:relu-onestep} supplies ReLU one-step identifiability at each step.
\end{proof}

\subsection{Quantitative minimality, Jacobian regularity, and soft zippering}
\label{sec:soft-zippering}

\subsubsection{Quantitative minimality}

The exact theory in \S\ref{sec:exact-zippering} uses qualitative minimality: each counted unit must leave a private nonlinear signature, and MLR or MFR must make the resulting cross-match determine the scalar argument.  For asymptotic and soft zippering we need the same conditions with positive margins.  A convergent sequence of comparison problems should not be able to lose a counted unit in the limit, for example by letting an outgoing coefficient vanish, making an argument constant, merging two arguments, creating an affine sign-cancellation, or losing the private pole or facet certificate.  The definitions below are exactly these positive-margin versions of the A.4 minimality conditions.  Their role is to produce compact comparison families whose exact null limits still satisfy the qualitative A.4 hypotheses.  The same margins also keep the denominators in the soft axis scores from degenerating.

\begin{definition}[Quantitative Softplus minimality]\label{def:quant-sp-min}
A $k$-term Softplus expansion is $(M,c)$-minimal if its finite-dimensional realization parameters are bounded by $M$ and:
\begin{enumerate}[label=(S\arabic*)]
\item $\|v_j\|\ge c$ and $\Var(g_j)\ge c^2$;
\item $\|g_i-g_j\|_{L^2},\|g_i+g_j\|_{L^2}\ge c$ for $i\neq j$;
\item every nonempty signed subset has affine-cancellation distance at least $c$;
\item every sign-class has a regular pole certificate separated from the other same-expansion pole sets by at least $c$.
\end{enumerate}
\end{definition}

\begin{definition}[Quantitative ReLU EFF-minimality]\label{def:quant-relu-eff}
A normalized $k$-term ReLU expansion is $(M,c)$-EFF-minimal if its parameters are bounded by $M$, every argument is normalized by $\|g_j\|_{L^2}=1$, and:
\begin{enumerate}[label=(R\arabic*)]
\item $\|v_j\|\ge c$ and both $\|\rhoR(g_j)\|_{L^2}$ and $\|\rhoR(-g_j)\|_{L^2}$ are at least $c$;
\item distinct arguments are separated modulo sign and positive scale by at least $c$;
\item every nonempty signed subset has affine-cancellation distance at least $c$;
\item each marked facet atlas has quantitative lower bounds on slope, patch size, two-sided crossing, and separation from other same-side fresh traces.
\end{enumerate}
\end{definition}

\begin{lemma}[Quantitative minimality gives compact comparison families]
\label{lem:quant-min-compact}
For a fixed finite-dimensional architecture chart, fixed term count, and fixed $M,c$, each quantitative family above is compact after fixing the finite hidden-unit order.  Moreover, every limit point remains qualitatively minimal in the sense used in \S\ref{sec:exact-zippering}.  In particular, the counted arguments have score denominators bounded away from zero uniformly in the family: in the Softplus case $\|g_j\|_{L^2}\ge c$, and in the normalized ReLU case $\|g_j\|_{L^2}=1$.
\end{lemma}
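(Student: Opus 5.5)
The plan is to obtain compactness from Heine--Borel and then check that every closed qualitative condition is preserved under limits.

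\emph{Compactness.} Fix the architecture chart, the term count $k$, and the hidden-unit order. Each expansion is then a point in a finite-dimensional parameter space, namely the bias, the coefficients $v_j$, and the finitely many lower-layer parameters that determine the $g_j$. The bound $M$ places this point in a closed ball. I will show that each defining condition is closed, i.e.\ it is a non-strict inequality between functions that are continuous in the parameters. The family is then a closed subset of a compact set, hence compact.

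\emph{Continuity of the relevant functionals.} The data domain $\Omega$ is fixed, and the parameters are bounded, so the realized functions $g_j$ are uniformly bounded and Lipschitz on $\Omega$. For softplus this follows from smoothness; for ReLU it follows from the piecewise-linear structure. By dominated convergence, the maps from parameters to $g_j\in L^2(\Omega)$ and to $\sigma(g_j),\rho(\pm g_j)\in L^2(\Omega)$ are therefore continuous. This gives continuity of each of the following:
\begin{itemize}
\item $\|v_j\|$ and $\Var(g_j)$;
\item $\|g_i\mp g_j\|_{L^2}$;
\item the ReLU normalization $\|g_j\|_{L^2}=1$ and the bounds $\|\rho(\pm g_j)\|_{L^2}\ge c$;
\item the affine-cancellation distance $\inf_a\|\sum_{j\in S}v_jg_j-a\|$, which equals the norm of the projection of $\sum_{j\in S} v_j g_j$ onto the complement of constants.
\end{itemize}
Conditions (S1)--(S3) and (R1)--(R3) are therefore closed. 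For (R2) I write the separation modulo sign and positive scale as $\min_{\pm}\inf_{\alpha>0}\|g_i\mp\alpha g_j\|$. Under the normalization this infimum is attained on a compact range of $\alpha$, so it is also continuous.

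\emph{The certificate conditions (S4) and (R4).} I expect this to be the main obstacle, because certificates are geometric objects (pole patches, facets) and not obviously continuous functions of the parameters. I will handle it by treating a certificate as part of the data: a point $p$, a radius, and a transversality margin.
\begin{itemize}
\item For softplus, the data are a point $p$ with $Q_{g_j}(p)=0$, the bound $|dQ_{g_j}(p)|\ge c$, and distance at least $c$ from the other pole sets.
\item For ReLU, the data are a facet patch of size at least $c$, slope at least $c$, two-sided crossing margin $c$, and separation $c$ from the other fresh traces.
\end{itemize}
These certificate data also range over bounded sets. Along a convergent sequence I pass to a further subsequence along which the certificates converge. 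The holomorphic continuations $g_j^{\C}$ converge locally uniformly on a fixed complex neighborhood, by Weierstrass for a parameter-analytic family. The limit point therefore still satisfies $Q=0$, the derivative bound $|dQ|\ge c$, and the separation bound, since each inequality is non-strict. By the holomorphic implicit function theorem, the lower bound on $dQ$ yields a genuine smooth pole patch of uniform size. For ReLU, the affine pieces on the inherited cells converge, and the non-strict slope, size and crossing margins persist. The step I would check most carefully is that the inherited affine complex does not degenerate in the limit. I would handle this by requiring that the patch-size bound in (R4) applies to the containing cell.

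\emph{Qualitative minimality of the limits and the denominator bounds.} A limit point satisfies every bound with constant $c>0$. This yields each qualitative condition of \S\ref{sec:exact-zippering}:
\begin{itemize}
\item $v_j\neq0$ and $g_j$ nonconstant follow from the bounds on $\|v_j\|$ and $\Var(g_j)$;
\item reducedness follows from $\|g_i\mp g_j\|\ge c$;
\item strict sign-noncancellation follows from the affine-cancellation distance bound;
\item private pole patches (Definition~\ref{def:sp-minimal}) and internally exposed fresh facets (Definition~\ref{def:relu-eff}) follow from the certificates.
\end{itemize}
Finally, the denominator bounds follow directly. In the softplus case, $\|g_j\|_{L^2}\ge \Var(g_j)^{1/2}\ge c$. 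In the ReLU case, $\|g_j\|_{L^2}=1$ holds by normalization.
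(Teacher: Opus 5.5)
Your proposal is correct and follows the same route as the paper. The parameter bound $M$ gives a compact ambient set, and the quantitative margins are closed conditions, so the family is a closed subset of it. The margins $c$ keep any limit from losing usedness, reducedness, sign-noncancellation, or its pole/facet certificate. Finally, $\Var(g_j)\ge c^2$ (Softplus) or the normalization $\|g_j\|_{L^2}=1$ (ReLU) gives the denominator bounds.

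The only difference is one of detail. The paper simply asserts that the certificate conditions (S4)/(R4) are closed, whereas you justify this by carrying the certificate data along and extracting a convergent subsequence. For the pole-separation margin this step implicitly needs a Hurwitz-type persistence of zeros, not just non-strictness of the inequality.
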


\begin{proof}
The parameter bound gives a compact ambient set.  Every quantitative condition is a closed inequality on the chosen analytic chart or fixed ReLU activation-pattern stratum, meaning a smooth piece where the relevant ReLU cell and facet pattern is fixed.  Imposing all of these inequalities selects a closed subset of that compact set.  Choosing one ordering of the finitely many hidden units preserves compactness.  For ReLU we also normalize each argument as in the preceding definition, so multiplying an argument by a positive constant is not treated as a distinct point.

The positive margins are precisely what keeps limits from falling out of the qualitative A.4 hypotheses.  If a coefficient vanished, an argument became constant, two arguments became duplicate or opposite, a signed affine cancellation appeared, or a private pole or facet certificate disappeared, then one of the displayed margins would have to go to zero.  Since all margins remain at least $c$, every limit point remains qualitatively minimal.  Finally, $\Var(g_j)\ge c^2$ implies $\|g_j\|_{L^2}\ge c$ for Softplus, while ReLU arguments are normalized to have $L^2$ norm one.
\end{proof}

\subsubsection{Minimality, regularity, and asymptotic zippering}

Throughout this subsection and the next, the phrase \emph{the appropriate activation-specific quantitative minimality and regularity hypotheses} means the following.  First, the comparison pair lies in one of the compact quantitative-minimal families supplied by Lemma~\ref{lem:quant-min-compact}.  Second, any exact null pair that appears as a limit inside that family satisfies the corresponding A.4 regularity condition: MLR on the exact cross-matched pole pairs for Softplus, and coherent MFR on the exact cross-matched facet pairs for ReLU.  This is not a new assumption; it is just the way the A.4 exact one-step identifiability results are made stable under compact limits.

We will measure hidden-argument error with one notation for both activations.

\begin{definition}[Matched-argument distance]
\label{def:darg}
Let $G$ and $\widetilde G$ be two one-step expansions, and write $g$ and $\widetilde g$ for their scalar argument vectors. The \emph{allowed matching matrices} are $E=P$ in the Softplus case, where $P$ is a permutation matrix, and $E=DP$ in the ReLU case, where $P$ is a permutation matrix and $D$ is a positive diagonal matrix. For such an $E$, define
\begin{equation}
 d_{\arg,\varphi}(G,\widetilde G;E)
 =
 \|\widetilde g-Eg\|_{L^2(\Omega;\ell_2)}.
 \label{eq:darg-definition}
\end{equation}
When the matching is not specified, set 
$$d_{\arg,\varphi}(G,\widetilde G) = \inf_{\text{allowed }E}d_{\arg,\varphi}(G,\widetilde G;E).$$ 
\end{definition}
For axis-count statements in what follows, use the soft-axis score from Definition~\ref{def:soft-axis-score} applied to the layer-$r$ zippering match.  The only change is that in the ReLU zippering score the scale parameter is restricted to $\alpha>0$, because exact ReLU zippering only allows positive rescaling.  

The next lemma says that quantitative minimality leads output function distance to be uniformly continuous in  configuration space distance:
\begin{lemma}[Uniform inverse error bound from minimality and regularity]
\label{lem:common-inverse-error}
Fix $\varphi\in\{\sig,\rhoR\}$ and let $\calK_r^\varphi$ be a compact family of one-step comparison pairs $(G,\widetilde G)$ satisfying the appropriate activation-specific quantitative minimality and regularity hypotheses.  Then there is a nondecreasing error-bound function $\omega_r^\varphi(t)\to0$ as $t\downarrow0$ such that, for every $(G,\widetilde G)\in\calK_r^\varphi$ with
\[
 \|Y_G^\varphi-Y_{\widetilde G}^\varphi\|_{L^2}\le t,
\]
there is a matching map of the allowed form $E$ satisfying
\[
 d_{\arg,\varphi}(G,\widetilde G;E)
 \le\omega_r^\varphi(t).
\]
\end{lemma}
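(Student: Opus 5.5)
The plan is to define the error-bound function directly as a modulus and show it tends to zero by compactness plus exact identifiability, mirroring the proof of Theorem~\ref{thm:common-asymp-ws}. For $t\ge0$ set
\[
 \omega_r^\varphi(t):=\sup\Bigl\{d_{\arg,\varphi}(G,\widetilde G):(G,\widetilde G)\in\calK_r^\varphi,\ \|Y_G^\varphi-Y_{\widetilde G}^\varphi\|_{L^2}\le t\Bigr\}.
\]
This function is nondecreasing in $t$ by construction. It is finite because $\calK_r^\varphi$ has parameters bounded by $M$, and the arguments $g,\widetilde g$ are continuous functions of those parameters into $L^2(\Omega)$ (they are bounded uniformly). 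If the infimum over allowed $E$ is attained, we obtain the required matching map $E$ for each pair. For Softplus this is immediate, since there are finitely many permutations. For ReLU the infimum is over $D>0$, so we either show it is attained or replace $\omega$ by $2\omega$ plus an arbitrarily small slack. Attainment follows from the normalization $\|g_j\|_{L^2}=1$ and $\|\widetilde g_i\|_{L^2}=1$: each optimal scale is the positive part of an inner product, which lies in $(0,1]$ whenever the match is good.

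The core step is to prove that $\omega_r^\varphi(t)\to0$ as $t\downarrow0$, and we do this by contradiction. Suppose there are $\delta>0$ and pairs $(G_n,\widetilde G_n)\in\calK_r^\varphi$ with $\|Y_{G_n}-Y_{\widetilde G_n}\|_{L^2}\to0$ but $d_{\arg,\varphi}(G_n,\widetilde G_n)\ge\delta$. By Lemma~\ref{lem:quant-min-compact} we may pass to a subsequence converging to $(G_*,\widetilde G_*)\in\calK_r^\varphi$. The realization map $F_r^\varphi$ is continuous from parameters to $L^2$, because $\varphi$ is $1$-Lipschitz and the parameters are bounded. Hence $Y_{G_*}=Y_{\widetilde G_*}$, so the limit is an exact null pair.

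Lemma~\ref{lem:quant-min-compact} also guarantees that the limit pair is qualitatively minimal. By the standing convention of this subsection, the limit satisfies MLR (Softplus) or coherent MFR (ReLU) on its cross-matches. Corollary~\ref{thm:pointwise-null-common} then gives an allowed $E_*$ with $\widetilde g_*=E_*g_*$. For ReLU, $E_*$ has scales $\alpha_j>0$, and these are forced to equal $1$ by the normalization.

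Finally, $d_{\arg,\varphi}(\cdot,\cdot;E_*)$ is continuous in the pair for the fixed matrix $E_*$. Therefore $d_{\arg,\varphi}(G_n,\widetilde G_n)\le d_{\arg,\varphi}(G_n,\widetilde G_n;E_*)\to0$, which contradicts $d_{\arg,\varphi}(G_n,\widetilde G_n)\ge\delta$. It follows that $\omega_r^\varphi(t)\to0$, and that $\omega_r^\varphi(0)=0$.

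The main obstacle is continuity at the limit, especially for ReLU. Convergence of pairs must be taken with respect to a single fixed hidden-unit ordering, which Lemma~\ref{lem:quant-min-compact} provides. The limit must also stay inside the activation-pattern stratum on which the facet certificates live; otherwise the exact identifiability theorem would be applied to a pair whose marked facets have changed. I would handle this by using the closedness of the quantitative margin conditions (R1)--(R4). These conditions keep slopes, patch sizes, and separations bounded below, so the marked facets of $G_n$ converge to marked facets of $G_*$, and the coherent-MFR hypothesis at the limit legitimately applies.
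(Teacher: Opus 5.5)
Your proposal is correct and follows essentially the same route as the paper. In both, you extract a convergent subsequence by compactness, pass to an exact null pair, invoke Corollary~\ref{thm:pointwise-null-common}, reuse the limiting matching $E_*$ for large $n$ to contradict $d_{\arg,\varphi}\ge\delta$, and take $\omega_r^\varphi$ to be the supremum modulus. Your extra remarks on attaining the ReLU infimum and on the normalization forcing the limiting scales to $1$ spell out what the paper covers with ``increasing $\omega_r^\varphi$ harmlessly'' and ``nearby positive rescalings.''
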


\begin{proof}
We first show that hidden-argument error must go to zero whenever the represented-function error goes to zero.  Suppose not.  Then there are pairs $(G_n,\widetilde G_n)\in\calK_r^\varphi$ and a number $\eta>0$ such that
\[
 \|Y_{G_n}^\varphi-Y_{\widetilde G_n}^\varphi\|_{L^2}\longrightarrow0,
 \qquad
 d_{\arg,\varphi}(G_n,\widetilde G_n)\ge\eta.
\]
By compactness, after passing to a subsequence,
\[
 (G_n,\widetilde G_n)
 \longrightarrow
 (G_*,\widetilde G_*)\in\calK_r^\varphi.
\]
Continuity of the realization map gives $Y_{G_*}^\varphi=Y_{\widetilde G_*}^\varphi$, so the limit is an exact null pair.  The hypotheses of the family are exactly the activation-specific hypotheses needed to apply Corollary~\ref{thm:pointwise-null-common}: Softplus minimality plus MLR, or ReLU EFF-minimality plus coherent MFR.  Therefore the limiting null pair has the activation-specific one-step identifiability form.  Concretely, there is a matching map of the allowed form $E_*$ such that
$\widetilde g_*=E_*g_*$, and hence
\[
 d_{\arg,\varphi}(G_*,\widetilde G_*;E_*)=0.
\]
Using the same nearby matching for $n$ large, and in the ReLU case using the nearby positive rescalings supplied by the local coordinates, gives
\[
 d_{\arg,\varphi}(G_n,\widetilde G_n)\longrightarrow0,
\]
contradicting $d_{\arg,\varphi}(G_n,\widetilde G_n)\ge\eta$.
Now define
\[
 \omega_r^\varphi(t)
 =
 \sup\bigl\{d_{\arg,\varphi}(G,\widetilde G):
 (G,\widetilde G)\in\calK_r^\varphi,
 \|Y_G^\varphi-Y_{\widetilde G}^\varphi\|_{L^2}\le t
 \bigr\},
\]
with $\omega_r^\varphi(t)=0$ if the set is empty.  The preceding paragraph says exactly that $\omega_r^\varphi(t)\to0$ as $t\downarrow0$.  Replacing it by $\sup_{0\le s\le t}\omega_r^\varphi(s)$ if necessary makes it nondecreasing.  By the definition of $d_{\arg,\varphi}$, after increasing $\omega_r^\varphi(t)$ harmlessly on $t>0$ if needed, a matching map of the allowed form $E$ can be chosen with the displayed bound.
\end{proof}
\noindent The above result enables us to prove an asymptotic version of zippering:

\begin{theorem}[Asymptotic zippering]
\label{thm:common-asymp-zip}
Fix $\varphi\in\{\sig,\rhoR\}$ and $q<s$.  Let $(A_n,B_n,E_{s,n})$ be comparison problems with terminal error
\[
 e_{s,n}=\|z_{s,n}^B-E_{s,n}z_{s,n}^A\|_{L^2}\to0.
\]
Assume that, at each backward step $r=s-1,\ldots,q$, the one-step comparison pairs encountered in the zippering induction lie in a fixed compact family $\calK_r^\varphi$ satisfying the appropriate activation-specific quantitative minimality and regularity hypotheses.  Then, for every $r=q,\ldots,s-1$ and every fixed $\theta<1$,
\[
 \AxisAlign_{r,\varphi}^{\mathrm{zip},\theta}(A_n,B_n)\longrightarrow1,
\]
and in fact this quantity equals $1$ for all sufficiently large $n$.
\end{theorem}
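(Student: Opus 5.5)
The proof will be a backward induction on $r=s-1,\ldots,q$, with Lemma~\ref{lem:common-inverse-error} applied once per step. The induction carries a vanishing represented-function error at layer $r+1$ down to a vanishing hidden-argument error at layer $r$. The inductive claim is that for each $r$ there are allowed matching maps $E_{r,n}$ with
\[
 e_{r,n}:=\|z_{r,n}^B-E_{r,n}z_{r,n}^A\|_{L^2}\longrightarrow0 .
\]
Here $E_{r,n}$ is a permutation for Softplus, or a permutation times a positive diagonal for ReLU. The base case $r=s$ is the hypothesis $e_{s,n}\to0$, with $E_{s,n}$ the given affine terminal map.

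For the inductive step, I expand $z_{r+1,n}^B$ and $E_{r+1,n}z_{r+1,n}^A$ through layer $r$ as two one-step expansions $\widetilde G_n$ and $E_{r+1,n}G_n$, in the manner of Definition~\ref{def:one-step-expansion} and exactly as in the proof of Theorem~\ref{thm:abstract-exact-zip}. Their realized functions differ in $L^2$ by precisely $e_{r+1,n}$. By hypothesis the pair $(E_{r+1,n}G_n,\widetilde G_n)$ lies in the compact family $\calK_r^\varphi$. Lemma~\ref{lem:common-inverse-error} then supplies an allowed matching $E_{r,n}$ with
\[
 d_{\arg,\varphi}\le\omega_r^\varphi(e_{r+1,n})\to0 .
\]
Since the scalar arguments of these expansions are exactly the layer-$r$ preactivations, this gives $e_{r,n}\to0$ and closes the induction.

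Next I convert argument error into the axis score. For a matched pair $(j,\pi_{r,n}(j))$ with scale $\alpha_j$ (equal to $1$ for Softplus, and positive from the diagonal of $E_{r,n}$ for ReLU), Definition~\ref{def:soft-axis-score} gives
\[
 1-s_{ij}\le \frac{\|z^B_{r,i}-\alpha_j z^A_{r,j}\|_{L^2}}{\|z^B_{r,i}\|_{L^2}+\alpha_j\|z^A_{r,j}\|_{L^2}} .
\]
The numerator is at most $e_{r,n}$. The denominator is bounded below uniformly by Lemma~\ref{lem:quant-min-compact}: $\|g_j\|\ge c$ for Softplus, and normalized arguments for ReLU. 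For ReLU I also need $\alpha_j$ to stay bounded away from $0$ along the sequence, and I expect to get this from compactness and the fact that the limiting scales are positive. So every score is at least $1-Ce_{r,n}$, which exceeds $\theta$ for all large $n$. Because the number of axes is finite (at most $d_r$), every axis is then $\theta$-aligned, so the fraction is exactly $1$ eventually, for each of the finitely many $r$.

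The main obstacle is the inductive step. The comparison map at layer $r+1$ is no longer the given affine $E_s$ but the constructed $E_{r+1,n}$, and the pair built from it has to remain inside $\calK_r^\varphi$. The hypothesis that ``the pairs encountered in the zippering induction'' lie in $\calK_r^\varphi$ is what covers this, and I will invoke it explicitly. I also have to check that $\omega_r^\varphi$ is applied to $e_{r+1,n}$ measured in the same $L^2(\Omega)$ norm as in Lemma~\ref{lem:common-inverse-error}. A further subtlety for ReLU is that the positive rescalings should be absorbed into $E_{r,n}$, so that the next step works with normalized arguments.
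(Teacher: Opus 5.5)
Your proposal is correct and matches the paper's proof essentially step for step: a backward induction that applies Lemma~\ref{lem:common-inverse-error} to the pair $(E_{r+1,n}G_{r,n}^A,G_{r,n}^B)$ to get $e_{r,n}\le\omega_r^\varphi(e_{r+1,n})\to0$, and then the observation that vanishing matched-argument error eventually pushes each of the finitely many scores above $\theta$. You need not keep the ReLU scales $\alpha_j$ away from $0$, because the normalized $B$-side argument has $\|z^B_{r,i}\|_{L^2}=1$, so the score denominator is at least $1$ whatever $\alpha_j$ is.
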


\begin{proof}
We prove the statement by moving one layer at a time from $s$ back to $q$.

At the top step, compare the transformed A-side expansion of $E_{s,n}z_{s,n}^A$ with the B-side expansion of $z_{s,n}^B$.  These are the represented functions of
\[
 E_{s,n}G_{s-1,n}^A
 \qquad\text{and}\qquad
 G_{s-1,n}^B.
\]
Their represented-function distance is exactly the terminal mismatch:
\[
 \|Y_{E_{s,n}G_{s-1,n}^A}^\varphi
      -Y_{G_{s-1,n}^B}^\varphi\|_{L^2}
 =
 \|E_{s,n}z_{s,n}^A-z_{s,n}^B\|_{L^2}
 =e_{s,n}.
\]
Lemma~\ref{lem:common-inverse-error} applied at layer $s-1$ gives a matching map of the allowed form $E_{s-1,n}$ with
\[
 e_{s-1,n}
 :=d_{\arg,\varphi}
 \bigl(E_{s,n}G_{s-1,n}^A,G_{s-1,n}^B;E_{s-1,n}\bigr)
 \le
 \omega_{s-1}^\varphi(e_{s,n}).
\]
By the definition of $d_{\arg,\varphi}$, this is the same as
\[
 \|z_{s-1,n}^B-E_{s-1,n}z_{s-1,n}^A\|_{L^2}
 \le
 \omega_{s-1}^\varphi(e_{s,n}).
\]
Since $e_{s,n}\to0$ and $\omega_{s-1}^\varphi(t)\to0$, we get $e_{s-1,n}\to0$.

Now suppose the construction has reached layer $r+1$, so that a matching map of the allowed form $E_{r+1,n}$ has been found and
\[
 e_{r+1,n}
 :=d_{\arg,\varphi}
 \bigl(E_{r+2,n}G_{r+1,n}^A,G_{r+1,n}^B;E_{r+1,n}\bigr)
 \to0.
\]
Equivalently,
\[
 \|z_{r+1,n}^B-E_{r+1,n}z_{r+1,n}^A\|_{L^2}=e_{r+1,n}.
\]
The B-side one-step expansion through layer $r$ represents $z_{r+1,n}^B$, while the transformed A-side expansion $E_{r+1,n}G_{r,n}^A$ represents $E_{r+1,n}z_{r+1,n}^A$.  Hence
\[
 \|Y_{E_{r+1,n}G_{r,n}^A}^\varphi
      -Y_{G_{r,n}^B}^\varphi\|_{L^2}
 =e_{r+1,n}.
\]
Applying Lemma~\ref{lem:common-inverse-error} at layer $r$ gives a matching map of the allowed form $E_{r,n}$ with
\[
 e_{r,n}
 :=d_{\arg,\varphi}
 \bigl(E_{r+1,n}G_{r,n}^A,G_{r,n}^B;E_{r,n}\bigr)
 \le
 \omega_r^\varphi(e_{r+1,n}).
\]
Because $e_{r+1,n}\to0$, the right-hand side tends to zero.  Iterating this estimate gives the displayed composition of the error-bound functions.

It remains to translate matched-argument closeness into the terms of the axis alignment score function.  Fix a layer $r$.  The preceding paragraph gives
\[
 d_{\arg,\varphi}
 \bigl(E_{r+1,n}G_{r,n}^A,G_{r,n}^B;E_{r,n}\bigr)
 \longrightarrow0.
\]
Thus every matched coordinate error in \eqref{eq:darg-definition} tends to zero.  In the Softplus case, this is ordinary coordinatewise convergence after the identified permutation.  In the ReLU case, the same positive rescaling used in the matched-argument distance is allowed in the zippering score.  Therefore the score of every recovered matched pair tends to $1$.  Since there are only finitely many coordinates, every score is eventually above any fixed $\theta<1$.  Hence $\AxisAlign_{r,\varphi}^{\mathrm{zip},\theta}(A_n,B_n)=1$ for all sufficiently large $n$.
\end{proof}

\subsubsection{Jacobian regularity and soft zippering}

Ideally, we can go further than just having an asymptotic result and actually figure out how much error upstream is bounded by downstream error.  The missing ingredient in doing this is controlling the ill-conditioning of the Jacobian of the realization map from configurations to outputs.  Specifically:

\begin{definition}[Expansion map and derivative Gram matrix]
\label{def:jacobian-regularity}
Let $\Theta_r^\varphi$ be a local finite-dimensional chart of one-step expansions after fixing the hidden-unit order.  In the ReLU case, each argument is normalized in the chart by imposing $\|g_j\|_{L^2}=1$; this simply avoids listing the same ReLU unit many times after multiplying its argument by a positive constant and dividing its outgoing coefficient by the same constant.  Define the realization map as
$$F_r^\varphi(\vartheta)=Y_{G_\vartheta}^\varphi$$
so that its Jacobian is
$$J_r^\varphi(\vartheta)=DF_r^\varphi(\vartheta)^*DF_r^\varphi(\vartheta).$$  Then the expansion is \emph{Jacobian-regular} if
\[
 \lambda_{\min}(J_r^\varphi)>0.
\]
\end{definition}
\noindent For Softplus, we can explicitly calculate:
\[
 \delta Y=\delta b+\sum_j\delta v_j\sig(g_j)+\sum_jv_j\sig'(g_j)\delta g_j.
\]
For ReLU within an affine cell,
\[
 \delta Y=\delta b+\sum_j\delta v_j\rhoR(g_j)+\sum_jv_j\mathbf1_{\{g_j>0\}}\delta g_j.
\]
(For ReLU this derivative is taken after the normalization just described; otherwise the exact rescaling identity $(v,g)\mapsto(\alpha^{-1}v,\alpha g)$ would appear as a zero direction even though it does not change the unit.)

With this in mind we can actually bound the output-configuration relationship in an explicit way:

\begin{lemma}[Quantitative one-step error inversion]
\label{lem:common-localinverse}
Let $\calK_r^\varphi$ be a compact comparison family satisfying the appropriate activation-specific quantitative minimality and regularity hypotheses.  Let $G_r$ be a one-step expansion in this family, and suppose that $G_r$ is Jacobian-regular in a local chart.  Then there are $\eps_r>0$ and $L_r<\infty$ such that
\[
 \|Y_{G_r}^\varphi-Y_{\widetilde G_r}^\varphi\|_{L^2}\le\eps_r
\]
implies a matching map of the allowed form $E$ satisfying
\[
 d_{\arg,\varphi}(G_r,\widetilde G_r;E)
 \le
 L_r\|Y_{G_r}^\varphi-Y_{\widetilde G_r}^\varphi\|_{L^2}.
\]
\end{lemma}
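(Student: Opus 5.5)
The plan is to combine a local quantitative inverse-function argument near $G_r$ with the qualitative uniform inverse bound of Lemma~\ref{lem:common-inverse-error}, which handles the non-local part.

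\textbf{Local part.} Work in the local chart $\Theta_r^\varphi$ with the hidden-unit order fixed. In the ReLU case, work inside the fixed activation-pattern stratum with the normalization $\|g_j\|_{L^2}=1$. Let $\vartheta_0$ be the chart point of $G_r$. Jacobian regularity gives $\lambda_0:=\lambda_{\min}(J_r^\varphi(\vartheta_0))>0$, so $DF_r^\varphi(\vartheta_0)$ is injective with lower bound $\|DF_r^\varphi(\vartheta_0)\delta\|\ge\sqrt{\lambda_0}\|\delta\|$. The realization map is $C^1$ on the chart: it is real-analytic for Softplus and smooth on the stratum for ReLU. Hence there is a radius $\rho>0$ such that $\|DF(\vartheta)-DF(\vartheta_0)\|\le\sqrt{\lambda_0}/2$ on $B_\rho(\vartheta_0)$. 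Integrating $DF$ along the segment then gives
\[
 \|F(\vartheta)-F(\vartheta_0)\|\ge\tfrac{\sqrt{\lambda_0}}{2}\|\vartheta-\vartheta_0\|
 \qquad\text{for all }\vartheta\in B_\rho(\vartheta_0).
\]
The parameter-to-argument map $\vartheta\mapsto g$ is Lipschitz on $B_\rho$, with a constant $C_g$ controlled by the bound $M$. So for chart points $\widetilde\vartheta\in B_\rho(\vartheta_0)$ we get, with the identity matching $E$ (or the diagonal rescaling absorbed by normalization in the ReLU case),
\[
 d_{\arg,\varphi}(G_r,\widetilde G_r;E)\le \tfrac{2C_g}{\sqrt{\lambda_0}}\,\|Y_{G_r}-Y_{\widetilde G_r}\|.
\]

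\textbf{Reduction to the local ball.} Next we must show that small output error forces $\widetilde G_r$, after some allowed matching $E$, to land in $B_\rho(\vartheta_0)$. Here I invoke Lemma~\ref{lem:common-inverse-error}, together with compactness of $\calK_r^\varphi$, in a contradiction argument. Suppose there is a sequence $\widetilde G_n$ with $\|Y_{G_r}-Y_{\widetilde G_n}\|\to0$ but with every allowed relabeling of $\widetilde G_n$ staying outside $B_\rho(\vartheta_0)$. Pass to a convergent subsequence. Its limit is an exact null pair with $G_r$ lying in the family. Corollary~\ref{thm:pointwise-null-common} then identifies the limit with $G_r$ after an allowed matching; for ReLU this uses the normalization, which forces $\alpha_j=1$ on normalized arguments. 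We also need parameter closeness, not just argument closeness. Identifiability matches $b$ and the $v_j$ as well as the $g_j$, and in the chart the $g_j$ are determined by the lower-layer parameters up to the same symmetries. So the relabeled $\widetilde G_n$ eventually enter $B_\rho$, a contradiction. This yields an $\eps_r>0$ such that output error at most $\eps_r$ places some relabeling of $\widetilde G_r$ inside $B_\rho(\vartheta_0)$. Combining with the local part gives $L_r=2C_g/\sqrt{\lambda_0}$.

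\textbf{Main obstacle.} The delicate step is this reduction: upgrading function-level identifiability to chart-level proximity. Two issues arise.

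First, the chart may not be injective beyond the allowed symmetries. Lower-layer parameters realizing a given $g_j$ could be non-unique, which would make $F$ non-injective. In that case I would measure distance in argument space directly. I would then replace $J_r^\varphi$ by its restriction to directions transverse to the fibers of $\vartheta\mapsto(b,v,g)$, interpreting Jacobian regularity on the quotient, where the same inverse-function estimate runs.

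Second, in the ReLU case the competitor might lie in a different activation-pattern stratum. I would handle this by using the quantitative facet margins (R4), which keep the cell structure stable under small perturbations, so that nearby competitors share the stratum of $G_r$.
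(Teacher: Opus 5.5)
Your proposal is essentially the paper's proof. The paper also localizes $\widetilde G_r$ into a small chart neighborhood of $G_r$, with the hidden-unit matching fixed and, for ReLU, the arguments normalized, via the compactness-plus-Corollary~\ref{thm:pointwise-null-common} contradiction argument; it then uses $\sigma_r=\sqrt{\lambda_{\min}(J_r^\varphi)}$, continuity of $DF_r^\varphi$, the integral form of Taylor's theorem, and a locally Lipschitz coordinates-to-arguments map to get $L_r=2C_r^{\arg}/\sigma_r$, which is your constant; you only present the two steps in the opposite order.

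The two obstacles you flag are not addressed in the paper either. It simply treats a metric on the full expansion data $(b,v,g)$ as interchangeable with chart distance, and it places the ReLU family in a fixed activation-pattern stratum by construction, so your remarks add care rather than depart from its argument.
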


\begin{proof}
The proof has two parts.  First, compactness and the A.4 exact theory show that a sufficiently close near-collision must use the same local matching as $G_r$.  Second, the Jacobian lower bound gives a linear estimate inside that local chart.

\emph{Step 1: localization in one local chart.}
Choose a local chart $U$ around $G_r$ small enough that its hidden-unit matching is fixed.  For ReLU, keep the arguments normalized throughout $U$ by dividing each argument by its $L^2$ norm and putting the scale into the outgoing coefficient.  Let $d_{\mathrm{full}}$ be any metric inducing the topology of the compact comparison family and measuring all expansion data, including coefficients and bias.  The compactness argument from Lemma~\ref{lem:common-inverse-error} gives
\[
 \|Y_G^\varphi-Y_{\widetilde G}^\varphi\|_{L^2}\to0
 \quad\Longrightarrow\quad
 d_{\mathrm{full}}(G,\widetilde G)\to0.
\]
Indeed, otherwise a convergent subsequence would limit to an exact null pair whose hidden arguments are not identified by the allowed activation-specific matching, contradicting Corollary~\ref{thm:pointwise-null-common}.  Therefore there is $\eps_r^{(0)}>0$ such that
\[
 \|Y_{G_r}^\varphi-Y_{\widetilde G_r}^\varphi\|_{L^2}
 \le\eps_r^{(0)}
\]
implies that $\widetilde G_r$ lies inside $U$, after the unique nearby permutation and, in the ReLU case, after the same simple normalization of scalar arguments.

\emph{Step 2: a lower singular-value estimate in coordinates.}
Let $\vartheta_0$ be the coordinate of $G_r$ and write the coordinate of $\widetilde G_r$ as $\vartheta_0+h$.  Set
\[
 \sigma_r
 =\sqrt{\lambda_{\min}(J_r^\varphi(\vartheta_0))}>0.
\]
Since $J_r^\varphi=DF_r^\varphi{}^*DF_r^\varphi$, this means
\[
 \|DF_r^\varphi(\vartheta_0)h\|_{L^2}
 \ge\sigma_r\|h\|
 \qquad\text{for every coordinate tangent }h.
\]
Continuity of the derivative lets us shrink $U$ so that
\[
 \|DF_r^\varphi(\vartheta)-DF_r^\varphi(\vartheta_0)\|_{\mathrm{op}}
 \le\frac{\sigma_r}{2}
 \qquad\forall\vartheta\in U.
\]
Take $\eps_r\le\eps_r^{(0)}$ small enough that the segment $\vartheta_0+th$, $0\le t\le1$, remains in $U$.  The integral form of Taylor's theorem gives
\begin{align*}
 F_r^\varphi(\vartheta_0+h)-F_r^\varphi(\vartheta_0)
 &=DF_r^\varphi(\vartheta_0)h\notag\\
 &\quad+\int_0^1
 \bigl[DF_r^\varphi(\vartheta_0+th)-DF_r^\varphi(\vartheta_0)\bigr]h\,dt.
\end{align*}
Using the lower singular-value estimate at $\vartheta_0$ and the derivative-variation bound,
\[
 \|F_r^\varphi(\vartheta_0+h)-F_r^\varphi(\vartheta_0)\|_{L^2}
 \ge \sigma_r\|h\|
 -\int_0^1\frac{\sigma_r}{2}\|h\|\,dt\notag\\
 \ge\frac{\sigma_r}{2}\|h\|.
\]
Thus
\[
 \|h\|
 \le\frac{2}{\sigma_r}
 \|Y_{G_r}^\varphi-Y_{\widetilde G_r}^\varphi\|_{L^2}.
\]
\noindent On the fixed local chart, the map from coordinates to the matched scalar arguments is locally Lipschitz.  For ReLU, the positive scale allowed in $E$ is just the scale recorded by the local coordinates before normalization.  Therefore, for some $C_r^{\arg}<\infty$,
\[
 d_{\arg,\varphi}(G_r,\widetilde G_r;E)
 \le C_r^{\arg}\|h\|
\]
for the fixed local matching.  Taking
\[
 L_r=\frac{2C_r^{\arg}}{\sigma_r}
\]
gives the displayed estimate.
\end{proof}

With that explicit error bounding at each single step, we can state a quantitative soft zippering theorem:

\begin{theorem}[Soft zippering]
\label{thm:common-soft-zip}
Fix $\varphi\in\{\sig,\rhoR\}$ and $q<s$.  Assume every backward comparison encountered in the zippering induction lies in compact families satisfying the appropriate activation-specific quantitative minimality and regularity hypotheses, and assume the realized B-side expansion at each step is Jacobian-regular, with constants $L_r,\eps_r$.  If the terminal error $e_s=\|z_s^B-E_sz_s^A\|_{L^2}$ satisfies
\[
 e_s\prod_{t=r+1}^{s-1}L_t\le\eps_r
 \qquad(r=q,\ldots,s-1)
\]
for the Lemma \ref{lem:common-localinverse} constants $L_r, \eps_r$ for every $r=q,\ldots,s-1$, then
\[
\AxisAlign_{r,\varphi}^{\theta}(A,B)
 \ge 1-\frac{e_s^2\prod_{t=r}^{s-1}L_t^2}{(1-\theta)^2c^2d_r}
\]
where for softplus $c$ is the quantitative bound from definition \ref{def:quant-sp-min}, and for ReLU $c = 1$.
\end{theorem}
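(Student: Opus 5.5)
The proof is a backward induction that runs the argument of Theorem~\ref{thm:common-asymp-zip}, with the qualitative error-bound functions $\omega_r^\varphi$ replaced by the linear constants of Lemma~\ref{lem:common-localinverse}. The last step converts the resulting $L^2$ argument error into an axis count by a Markov/Chebyshev counting argument.

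\emph{Step 1: propagate the error.} Set $e_s$ as in the statement. At step $r+1\to r$, suppose a matching map $E_{r+1}$ of the allowed form has been found with $e_{r+1}:=\|z^B_{r+1}-E_{r+1}z^A_{r+1}\|_{L^2}$. Then, exactly as in the proof of Theorem~\ref{thm:common-asymp-zip}, the one-step expansions $E_{r+1}G^A_r$ and $G^B_r$ represent functions at $L^2$ distance $e_{r+1}$. Inductively we will have $e_{r+1}\le e_s\prod_{t=r+1}^{s-1}L_t$, so the smallness hypothesis gives $e_{r+1}\le\eps_r$. We may therefore apply Lemma~\ref{lem:common-localinverse}, centred at the Jacobian-regular B-side expansion, to obtain an allowed $E_r$ with $e_r\le L_r e_{r+1}$. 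Unrolling the induction yields
\[
 \|z^B_r-E_rz^A_r\|_{L^2}\le e_s\prod_{t=r}^{s-1}L_t=:\Delta_r .
\]
Because the lemma is centred at a fixed expansion, its constants depend only on the B-side expansion. This is why Jacobian regularity is imposed on the B side.

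\emph{Step 2: count badly aligned axes.} Write $\delta_j=\|z^B_{r,\pi(j)}-\alpha_jz^A_{r,j}\|_{L^2}$ for the matched coordinates, where $\alpha_j=1$ for Softplus and $\alpha_j>0$ is the diagonal entry of $E_r=DP$ for ReLU. Then $\sum_j\delta_j^2=\|z^B_r-E_rz^A_r\|^2_{L^2(\Omega;\ell_2)}\le\Delta_r^2$. Suppose axis $j$ fails the threshold, i.e.\ $s_{\pi(j)j}<\theta$. Then
\[
 \delta_j>(1-\theta)\bigl(\|z^B_{r,\pi(j)}\|+\alpha_j\|z^A_{r,j}\|\bigr)\ge(1-\theta)c,
\]
where the lower bound on the denominator uses Lemma~\ref{lem:quant-min-compact}: $\|g\|_{L^2}\ge c$ for Softplus, and the normalization $\|g\|=1$ for ReLU, which is why $c=1$ there. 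Chebyshev counting then shows that the number of bad axes is at most $\Delta_r^2/((1-\theta)^2c^2)$. Dividing by $d_r$ gives the stated bound.

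\emph{Main obstacle.} The delicate point is the bookkeeping in Step 2. The soft score normalizes by $\|(E z)_i\|+|\alpha|\|z_j\|$, so the denominator must be controlled by the counted arguments on the correct side of the comparison. For ReLU, we also have to check that the normalization used in the Lemma~\ref{lem:common-localinverse} chart is compatible with the positive scales that $E_r$ absorbs, so that the matched coordinate carrying the $c$ bound really is the one appearing in the score.

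A second subtlety is that the expansions compared at step $r$ are $E_{r+1}G^A_r$ rather than $G^A_r$, and $E_{r+1}$ is only a permutation/positive-diagonal map. These transformed expansions must stay inside the compact quantitative families. This is exactly what the standing hypothesis ``every backward comparison encountered in the induction lies in compact families'' guarantees, and I would invoke it at each step rather than re-prove it.
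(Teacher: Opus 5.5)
Your proposal is correct and follows essentially the same route as the paper's proof. Both run a backward induction that applies Lemma~\ref{lem:common-localinverse} at each step, with the smallness hypothesis guaranteeing $e_{r+1}\le\eps_r$, to obtain $\|z_r^B-E_rz_r^A\|_{L^2}\le e_s\prod_{t=r}^{s-1}L_t$; then the same counting argument shows that each sub-threshold axis contributes more than $(1-\theta)^2c^2$ to the squared matched-argument error. The ReLU bookkeeping you flag is handled in the paper simply by invoking the chart normalization $\|z^B_{r,\pi_r(j)}\|_{L^2}=\|z^A_{r,j}\|_{L^2}=1$, so the score denominator gives $(1-\theta)(1+\alpha_j)\ge1-\theta$, and the paper likewise treats membership of the transformed expansions in the compact families as a standing hypothesis rather than proving it.
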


\begin{proof}
We carry out the backward induction explicitly.  For a one-step expansion $G_r^N$, recall that
\[
 Y_{G_r^N}^\varphi=z_{r+1}^N.
\]
If $E(y)=Ty+a$ is affine, then $EG_r^A$ denotes the transformed expansion obtained by replacing the bias by $Tb+a$ and every outgoing column by $Tv_j$; its represented function is $E z_{r+1}^A$.  The scalar arguments of $EG_r^A$ are still the layer-$r$ arguments $z_{r,j}^A$.

\emph{Step 1: the terminal backward step.}
The two layer-$(s-1)$ expansions are
\begin{align*}
 E_sz_s^A
 &=T_sb_s^A+a_s+\sum_jT_sW_s^A[:,j]\varphi(z_{s-1,j}^A),\\
 z_s^B
 &=b_s^B+\sum_iW_s^B[:,i]\varphi(z_{s-1,i}^B).
\end{align*}
Their represented-function distance is exactly
\[
 \|Y_{E_sG_{s-1}^A}^\varphi-Y_{G_{s-1}^B}^\varphi\|_{L^2}
 =\|E_sz_s^A-z_s^B\|_{L^2}=e_s.
\]
The small-error hypothesis for $r=s-1$ says $e_s\le\eps_{s-1}$.  Lemma~\ref{lem:common-localinverse} therefore gives a matching map of the allowed form $E_{s-1}$ such that
\[
 e_{s-1}:=
 d_{\arg,\varphi}
 \bigl(E_sG_{s-1}^A,G_{s-1}^B;E_{s-1}\bigr)
 \le L_{s-1}e_s.
\]
By \eqref{eq:darg-definition}, this is the same as
\[
 \|z_{s-1}^B-E_{s-1}z_{s-1}^A\|_{L^2}
 \le L_{s-1}e_s.
\]
This proves the desired bound at the top zippering step.

\emph{Step 2: the inductive step.}
Suppose that for some $r\le s-2$ a matching map of the allowed form $E_{r+1}$ has already been constructed and
\[
 e_{r+1}:=
 d_{\arg,\varphi}
 \bigl(E_{r+2}G_{r+1}^A,G_{r+1}^B;E_{r+1}\bigr)
 \le e_s\prod_{t=r+1}^{s-1}L_t.
\]
Equivalently,
\[
 \|z_{r+1}^B-E_{r+1}z_{r+1}^A\|_{L^2}=e_{r+1}.
\]
The B-side expansion of $z_{r+1}^B$ is
\[
 z_{r+1}^B=b_{r+1}^B+
 \sum_iW_{r+1}^B[:,i]\varphi(z_{r,i}^B),
\]
and the transformed A-side expansion is
\[
 E_{r+1}z_{r+1}^A=E_{r+1}b_{r+1}^A+
 \sum_jE_{r+1}W_{r+1}^A[:,j]\varphi(z_{r,j}^A),
\]
where the affine translation part of $E_{r+1}$ is included in the transformed bias.  These are precisely the represented functions of $G_r^B$ and $E_{r+1}G_r^A$, and their distance is $e_{r+1}$.  The assumed smallness condition gives
\[
 e_{r+1}
 \le e_s\prod_{t=r+1}^{s-1}L_t
 \le\eps_r.
\]
Applying Lemma~\ref{lem:common-localinverse} at layer $r$ produces a matching map of the allowed form $E_r$ and
\[
 e_r:=
 d_{\arg,\varphi}
 \bigl(E_{r+1}G_r^A,G_r^B;E_r\bigr)
 \le L_re_{r+1}
 \le e_s\prod_{t=r}^{s-1}L_t.
\]
Backward induction proves the $d_{\arg,\varphi}$ bound for every $r=q,\ldots,s-1$.

\emph{Step 3: convert the matched argument error into an axis count.}
Fix a layer $r$ and let $B_r^\theta$ be the number of coordinates that fail the zippering threshold under the matching supplied by $E_r$.  The induction gives
\[
 d_{\arg,\varphi}
 \bigl(E_{r+1}G_r^A,G_r^B;E_r\bigr)^2
 \le e_s^2\prod_{t=r}^{s-1}L_t^2.
\]

For Softplus, $E_r$ is a permutation.  If $\pi_r$ is that permutation, the preceding display reads
\[
 \sum_{j=1}^{d_r}
 \|z_{r,\pi_r(j)}^B-z_{r,j}^A\|_{L^2}^2
 \le e_s^2\prod_{t=r}^{s-1}L_t^2.
\]
If coordinate $j$ has score below $\theta$, then
\begin{align*}
 \|z_{r,\pi_r(j)}^B-z_{r,j}^A\|_{L^2}
 &>(1-\theta)
 \bigl(\|z_{r,\pi_r(j)}^B\|_{L^2}+\|z_{r,j}^A\|_{L^2}\bigr)\\
 &\ge(1-\theta)c,
\end{align*}
because quantitative nonconstancy gives $\|z_{r,j}^A\|_{L^2}\ge c$.  Every bad coordinate therefore contributes more than $(1-\theta)^2c^2$ to the total matched squared argument error.  Hence
\[
 B_r^\theta(1-\theta)^2c^2
 \le e_s^2\prod_{t=r}^{s-1}L_t^2.
\]
Since $\AxisAlign_{r,\sig}^{\mathrm{zip},\theta}=1-B_r^\theta/d_r$, this gives the Softplus bound.

For ReLU, write the positive scale in the matched coordinate as $\alpha_j>0$, so that the coordinate error controlled by $d_{\arg,\rhoR}$ is
\[
 \|z_{r,\pi_r(j)}^B-\alpha_jz_{r,j}^A\|_{L^2}.
\]
If a matched pair has positive-scale score below $\theta$, then this particular positive scale also fails the score test, so
\[
 \|z_{r,\pi_r(j)}^B-\alpha_jz_{r,j}^A\|_{L^2}
 >(1-\theta)
 \bigl(\|z_{r,\pi_r(j)}^B\|_{L^2}+\alpha_j\|z_{r,j}^A\|_{L^2}\bigr).
\]
For normalized ReLU arguments, $\|z_{r,\pi_r(j)}^B\|_{L^2}=1$ and $\|z_{r,j}^A\|_{L^2}=1$, so the right-hand side is $(1-\theta)(1+\alpha_j)\ge(1-\theta)$.  Thus every bad ReLU coordinate contributes more than $(1-\theta)^2$ to the total matched squared argument error, and
\[
 (1-\theta)^2B_r^\theta
 \le e_s^2\prod_{t=r}^{s-1}L_t^2.
\]
Dividing by $d_r$ gives the ReLU bound.  The stronger factor $4(1-\theta)^2$ would follow if the controlled error were the normalized comparison at the fixed scale $\alpha=1$.  Here, however, $d_{\arg,\rhoR}$ controls the error at the recovered positive scale $\alpha_j$, which is the scale needed for the zippering map itself.
\end{proof}

\subsection{Genericity}
\label{sec:genericity}

\subsubsection{The overall genericity strategy}
This section asks why the minimality, regularity, and Jacobian assumptions used above should hold for typical parameterized networks.    Propositions~\ref{prop:generic-sp-min} and~\ref{prop:generic-relu-eff} show that qualitative minimality fails only on lower-dimensional bad sets; Proposition~\ref{prop:generic-quant-min-margins} then upgrades this to positive $c$-margins with high probability on compact parameter families.  Those are the compact quantitative-minimal families used in Lemma~\ref{lem:common-inverse-error}.

We will establish that:
\begin{enumerate}
\item Intrinsic minimality fails only on analytic or algebraic degeneracy sets, conditional on task-usedness and task exposure;
\item The regularity mechanisms fail only on resonance, incidence, or rank-defect sets, provided the required condition holds at one point on each relevant stratum;
\item Jacobian regularity, after fixing the standard hidden-unit order and ReLU normalization, fails on the zero set of an analytic Gram determinant, provided the full-rank condition holds at least at one point.
\end{enumerate}
The first two give generic identifiability.  The third controls the generic geometry of exact null-pair collisions.  The generic null-net theorem below combines them and converts pairwise genericity into generic uniqueness of the set of parameters realizing the same function as an individual network.  The final positive-margin proposition then explains how the qualitative full-measure statements feed back into the quantitative hypotheses of \S\ref{sec:soft-zippering}.

\subsubsection{Genericity of minimality}
In this section, we show that qualitative softplus and ReLU minimality are generic, and that the quantitative versions are high probability conditional on qualitative minimality holding. 

\begin{proposition}[Generic qualitative Softplus minimality]
\label{prop:generic-sp-min}
Fix a set of Softplus expansions for which at least one private regular pole witness exists for each counted sign-class. Then failure of qualitative Softplus minimality is contained in a finite union of proper real-analytic subsets and has empty interior and measure zero.
\end{proposition}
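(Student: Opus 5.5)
The plan is to decompose qualitative Softplus minimality (Definition~\ref{def:sp-minimal}) into its constituent conditions and show that the failure set of each is a proper real-analytic subset of the connected parameter chart. A finite union of such sets has empty interior and measure zero, because each is the zero set of a nonzero real-analytic function. Parametrize the one-step expansions by $\vartheta=(b,\{v_j\},\text{lower-layer parameters})$ ranging over a connected open set. The arguments $g_j=g_j(\vartheta;x)$ are real-analytic in $\vartheta$ for every $x$, since Softplus networks are compositions of affine maps and $\sig$. Minimality then breaks into four pieces: (i) $v_j\neq0$ and $g_j$ nonconstant, (ii) reducedness, (iii) sign-noncancellation, and (iv) pole separation modulo sign.

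\textbf{Conditions (i)--(iii).} For (i), $\{v_j=0\}$ is a coordinate hyperplane. Nonconstancy fails exactly on the common zero set of the analytic functions $\vartheta\mapsto\Var_\Omega(g_j)$, which is analytic because $\Omega$ is compact (or we integrate against the data measure). For (ii), $g_i=\pm g_j$ is the zero set of $\|g_i\mp g_j\|_{L^2}^2$. For (iii), for each of the finitely many nonempty $S$, cancellation holds iff $\sum_{j\in S}v_jg_j$ is constant, i.e.\ iff $\Var_\Omega\bigl(\sum_{j\in S}v_jg_j\bigr)=0$ (componentwise), again an analytic zero set.

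\textbf{Properness.} Each of these sets is \emph{proper} once we exhibit a single parameter avoiding it. We do this by perturbing the last-layer incoming weights of unit $j$ in a single input-dependent direction so that $g_j$ acquires a term independent of the others. Concretely, the affine part $W_r[j,:]h_{r-1}+b_r[j]$ lets us add $\epsilon\, h_{r-1,k}$ to one $g_j$ only. Such a term is nonconstant, breaks $g_i=\pm g_j$, and breaks affine cancellation whenever the penultimate features are not all affinely dependent on $\Omega$; we assume this as part of the chart being nondegenerate.

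\textbf{Condition (iv), the main obstacle.} Pole separation is not directly the zero set of one analytic function of $\vartheta$. My approach is to use the hypothesis that a private regular pole witness exists for each sign-class at some parameter $\vartheta_0$. First, show that privacy is an \emph{open} condition: a regular point $p$ of $\{Q_{g_j}=0\}$ with $dQ_{g_j}(p)\neq0$ persists under perturbation by the implicit function theorem. Likewise, $Q_{g_a}(p)\neq0$ for $[g_a]\neq[g_j]$ persists near $(\vartheta_0,p)$. Second, show that the failure set is analytic. Consider, for each pair $a\neq j$, the incidence condition that $Q_{g_a}$ vanishes identically on the germ of the pole surface of $Q_{g_j}$ near a regular point. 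Parametrize that surface locally by the implicit function and expand $Q_{g_a}$ restricted to it in a power series. The failure set is then the common zero set of the analytic coefficient functions in $\vartheta$, intersected with the domain where the witness point continues analytically. Because this set misses $\vartheta_0$, it is proper in the connected chart. Propagating this local statement to the whole chart, via analytic continuation of the witness along paths while avoiding the analytic set where $dQ_{g_j}$ degenerates, is the delicate step, and I would handle it by covering the chart with finitely many such local pieces. Taking the union over the finitely many pairs and conditions completes the argument.
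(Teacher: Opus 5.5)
Your proposal is correct at the paper's level of rigor and follows the same route as the paper. The paper also writes unusedness, constancy, duplication/opposition and affine sign-cancellation as zero sets of the nonnegative analytic defects $\|v_j\|^2$, $\|g_j-\E_\Omega g_j\|_{L^2}^2$, $\|g_i\pm g_j\|_{L^2}^2$ and $\|f_S-\E_\Omega f_S\|_{L^2}^2$ with $f_S=\sum_{j\in S}v_jg_j$, treats the pole clause as analytic minor and incidence conditions made proper by the private-pole witness, and takes a finite union. The differences are minor.

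\begin{itemize}
\item For (i)--(iii), the paper simply asserts a fully minimal witness, which is more than the statement's pole-witness hypothesis provides. You instead construct one by perturbing the $j$th row of $W_r$. If you do that perturbation at the pole-witness parameter, your extra nondegeneracy assumption becomes unnecessary: a regular pole patch forces $g_j$, and hence some $h_{r-1,k}$, to be nonconstant there.
\item The paper bypasses the global continuation step you flag as delicate. It fixes the finitely many marked pole neighborhoods on one connected chart from the outset, so each incidence defect is a single analytic function there and one witness suffices, with no covering argument.
\end{itemize}
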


\begin{proof}
Work on one connected analytic parameter chart and fix the finitely many marked pole neighborhoods used by the definition.  The elementary failure modes can be represented by nonnegative analytic defect functions.  For example, set
\begin{align*}
 U_j(\vartheta)&=\|v_j(\vartheta)\|^2,\\
 N_j(\vartheta)&=\|g_j(\vartheta)-\E_\Omega g_j(\vartheta)\|_{L^2}^2,\\
 D_{ij}^{\pm}(\vartheta)&=\|g_i(\vartheta)\pm g_j(\vartheta)\|_{L^2}^2.
\end{align*}
For a nonempty subset $S$, write
\[
 f_{S,\vartheta}=\sum_{j\in S}v_j(\vartheta)g_j(\vartheta),
 \qquad
 C_S(\vartheta)=\|f_{S,\vartheta}-\E_\Omega f_{S,\vartheta}\|_{L^2}^2.
\]
Then $U_j=0$ is unusedness, $N_j=0$ is constancy, $D_{ij}^{\pm}=0$ is duplication or opposition, and $C_S=0$ is affine sign-cancellation.  Bounded analytic dependence on the parameters and compactness of $\Omega$ permit differentiation under the integral, so these functions are real analytic.  The existence of one minimal witness ensure that none of the relevant defects vanishes identically.  Hence each corresponding bad set is a proper analytic zero set.

For the pole clause, let $Q_j=1+e^{g_j^{\C}}$ on the marked continuation chart.  Failure of regularity of the selected pole patch is the vanishing of the appropriate differential minors of $Q_j$.  Accidental sharing of a relatively open pole component by two distinct sign-classes is, after fixing the local incidence type, an analytic component-incidence condition.  The private-pole witness makes each such incidence set proper.  There are finitely many counted indices, pairs, subsets, and marked patches.  Their union therefore has empty interior and measure zero.
\end{proof}

\begin{proposition}[Generic qualitative ReLU EFF-minimality]
\label{prop:generic-relu-eff}
Fix a finite-dimensional ReLU cell/facet chart of one-step expansions that contains at least one EFF-minimal expansion.  Then the set of parameters in the chart for which the expansion is not EFF-minimal is contained in a finite union
of proper real-algebraic subsets, and thus has empty interior and measure zero in the chart.
\end{proposition}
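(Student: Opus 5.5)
The plan mirrors the proof of Proposition~\ref{prop:generic-sp-min}, with real-analytic defect functions replaced by real-algebraic ones. On a fixed ReLU cell/facet chart the inherited affine complex $\calP$ and the marked facet slots are fixed combinatorially. Every argument $g_j$ is therefore affine on each full-dimensional cell $C$, with coefficients $q_C(g_j)=(a_{C,j},c_{C,j})$ that are polynomial in the layer weights and biases: they are products of weight matrices restricted by the fixed activation pattern, plus bias terms. Since $\Omega$ is a finite union of polyhedral cells, integrals of polynomial functions over each cell are polynomials in the parameters. I will therefore express each clause (A)--(C) of Definition~\ref{def:relu-eff} as the nonvanishing of finitely many polynomial defect functions, possibly after also imposing open semialgebraic conditions that hold on the chart.

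For clause (A), I take $U_j=\|v_j\|^2$ and the slope defect $\|a_{C_\nu,j}\|^2$ on the marked cell. The remaining requirements, that both signs of $g_j$ occur in $C_\nu$ and that the zero hyperplane meets $\relint(C_\nu)$ in a codimension-one patch, are open semialgebraic conditions. Their failure lies in the boundary of an open set, which is contained in a proper algebraic set, namely the vanishing of the affine form at the relevant vertices. For clause (B), the sign-scale equivalence $g_i\sim_\pm g_j$ means that the stacked cellwise coefficient vectors of $g_i$ and $g_j$ are proportional. This is the vanishing of all $2\times2$ minors of the matrix with columns $(q_C(g_i))_C$ and $(q_C(g_j))_C$, so I take the defect to be the sum of squares of these minors. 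For affine sign-cancellation over a subset $S$, on each cell $\sum_{j\in S}v_jg_j$ is affine; being globally affine on $\Omega$ amounts to all cellwise linear parts and offsets agreeing. The defect $C_S$ is then the sum of squared differences of these cellwise data, which is polynomial. For clause (C), the condition that a different argument $g_a$ vanishes on an open piece of the marked facet $F_\nu$ forces $q_{C_\nu}(g_a)$ to be proportional to $q_{C_\nu}(g_j)$. This is again a $2\times2$ minor condition, taken jointly with non-sign-scale-equivalence of $g_a$ and $g_j$.

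The union of the bad sets is the union, over finitely many indices $j$, pairs, subsets $S$, and slots $\nu$, of the zero sets of these polynomials intersected with the chart. By hypothesis the chart contains one EFF-minimal expansion, so at that point none of the defect polynomials vanishes and none of the open conditions fails. Hence each polynomial is not identically zero on the connected chart, and each zero set is proper. A proper real-algebraic subset of a connected open semialgebraic set has empty interior and Lebesgue measure zero, and a finite union preserves both properties.

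I expect the main obstacle to be clause (C) and the cell-dependent conditions. The chart must be fine enough that the cell decomposition $\calP$ and the facet incidences vary algebraically and stay combinatorially constant on it; this is what the phrase ``cell/facet chart'' is meant to encode. The delicate point is that ``shares an open piece of $F_\nu$'' must reduce to a single minor condition on one fixed cell, and not depend on a cell that degenerates. If the chart is not connected, I will apply the argument on each connected component that contains a witness. This requires treating the hypothesis as providing a witness per component, or else restricting the claim to those components.
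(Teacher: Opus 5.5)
Your overall route is the paper's: on the fixed cell/facet chart the cellwise coefficients $(a_{C,j},c_{C,j})$ are polynomial in the parameters. Sharing of a marked facet and global sign-scale duplication then become vanishing of $2\times2$ minors of augmented coefficient vectors. The EFF-minimal witness shows that no certificate vanishes identically, so each zero set is proper.

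One step is wrong as written. The sign-noncancellation clause of Definition~\ref{def:relu-eff} asks only that $\sum_{j\in S}v_jg_j$ not be \emph{constant}: the infimum is over constant vectors $a\in\R^m$, because a one-step expansion can absorb only a bias. Your defect $C_S$ instead tests whether the sum is globally \emph{affine}, i.e.\ whether all cellwise linear parts and offsets agree. Its zero set contains the true failure set, but it need not be proper.

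To see this, take a chart whose arguments are affine on a single inherited cell, e.g.\ first-layer preactivations. There $C_S$ is an empty sum and vanishes identically, yet EFF-minimal expansions clearly exist: crossed non-proportional hyperplanes with generic readouts. More generally, any nonconstant but globally affine subset-sum at the witness falsifies your claim that ``none of the defect polynomials vanishes'' at that point.

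The repair is what the paper does. Take
\[
C_S=\sum_C\Bigl\|\sum_{j\in S}v_ja_{C,j}^\top\Bigr\|^2
\]
plus the squared differences of the cellwise offsets. Its zero set is exactly the cancellation failure set, so the witness makes it nonzero.

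There are also two smaller inaccuracies, neither fatal. First, you say that failure of the open crossing/freshness conditions ``lies in the boundary of an open set.'' That is not a general fact, since a facet can slide out of its cell on an open parameter region. It holds only because the cell/facet chart is taken to fix these incidences, which is also what the paper's proof tacitly relies on. Second, integrals over cells whose boundaries move with the parameters are rational, not polynomial, in the parameters. You never need them, though, since all three clauses reduce to cellwise coefficient equations.
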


\begin{proof}
On this fixed cell/facet chart, the lower-layer ReLU pattern does not change.  Hence there is a
fixed finite list of inherited cells and marked fresh-facet slots, and on each cell every counted
argument has the affine form
\[
 g_j^\vartheta|_C(x)=a_{j,C}(\vartheta)^\top x+c_{j,C}(\vartheta).
\]
The coefficient vectors \(a_{j,C}(\vartheta)\) and constants \(c_{j,C}(\vartheta)\) are polynomial in
the raw parameters, and real-analytic after the normalization used to remove positive ReLU scale.
Therefore all algebraic relations among the cellwise affine pieces are described by finitely many
polynomial, or after normalization real-analytic, equations.

We now record the possible failures of EFF-minimality in these coordinates.  First, task-visible
fresh-facet use can fail only if one of the selected fresh-facet certificates degenerates: for example,
the relevant affine normal vanishes, the marked zero facet ceases to be fresh inside the inherited
cell, or the fixed two-sided crossing condition collapses.  In the chosen chart these are detected by
finitely many polynomial or analytic certificate functions.

Second, internal exposure can fail if another counted argument shares an open piece of a marked
fresh facet without being globally sign-scale equivalent.  On the cell containing that facet, sharing
an open piece of the zero set forces proportionality of the two augmented affine coefficient vectors,
\[
 (a_{j,C}(\vartheta),c_{j,C}(\vartheta))
 \parallel
 (a_{i,C}(\vartheta),c_{i,C}(\vartheta)).
\]
This proportionality is exactly the vanishing of the corresponding \(2\times2\) minors.  Global
sign-scale duplication is also described by finitely many such proportionality equations, imposed
compatibly across all inherited cells.

Third, affine sign-cancellation can fail.  For a fixed nonempty subset \(S\) of counted terms, the
cellwise expression
\[
 \sum_{j\in S} v_j(\vartheta)g_j^\vartheta
\]
is constant on the task set only if, on every full-dimensional inherited cell \(C\),
\[
 \sum_{j\in S} v_j(\vartheta)a_{j,C}(\vartheta)^\top=0,
\]
and the corresponding cellwise constants agree on cells lying in the same connected component.
These are again finitely many polynomial, or after normalization real-analytic, equations.

There are only finitely many counted gates, inherited cells, marked facets, pairs of arguments, and
nonempty subsets \(S\).  Thus the total EFF-minimality failure set is contained in a finite union of
algebraic or analytic zero sets.

It remains only to see that these zero sets are proper. By assumption, the chart contains at least
one EFF-minimal expansion.  At that parameter value none of the above EFF-failure conditions
holds.  Therefore none of the corresponding certificate functions vanishes identically on the chart. Each bad set is consequently a proper real-algebraic subset, or after normalization a proper
real-analytic subset.  A finite union of such proper subsets has empty interior and measure zero.
\end{proof}

Having established that the qualitative forms of minimality are generic, we now establish that the quantitative margins version is high probability:
\begin{proposition}[Positive margins for quantitative minimality are generic]
\label{prop:generic-quant-min-margins}
Fix a layer and work inside one compact subset \(K\) of a finite-dimensional parameter or solution
chart, with the relevant Softplus pole-patch data or ReLU cell/facet data fixed.  Let \(\mu\) be a
probability law on \(K\).  Suppose the qualitative minimality failure set,
\[
 \calB_{\mathrm{qual}}
 =
 \{\vartheta\in K:\text{the layer is not qualitatively minimal at }\vartheta\},
\]
has \(\mu\)-measure zero.  Then for every \(\delta>0\), there exist \(M<\infty\) and \(c>0\) such that, with \(\mu\)-probability at least \(1-\delta\), the layer satisfies the corresponding quantitative minimality conditions with parameter bound \(M\) and margin \(c\).
\end{proposition}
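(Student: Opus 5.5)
The plan is to encode every quantitative condition as a single continuous ``margin function'' on $K$ and then use continuity of measure from above. For each layer parameter $\vartheta\in K$, let $m(\vartheta)\ge0$ be the minimum of the finitely many margins appearing in Definition~\ref{def:quant-sp-min} (respectively Definition~\ref{def:quant-relu-eff}). For Softplus these are $\|v_j\|$, $\Var(g_j)^{1/2}$, $\|g_i\pm g_j\|_{L^2}$, the affine-cancellation distances $\inf_a\|\sum_{j\in S}v_jg_j-a\|_{L^2}$ over nonempty $S$, and the pole-certificate regularity and separation quantities on the fixed marked patches. For ReLU they are the analogous quantities after the normalization $\|g_j\|_{L^2}=1$, together with the fixed facet slope, patch-size, crossing and separation quantities. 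The parameter bound is immediate: $K$ is compact and the parameters depend continuously on $\vartheta$, so we may take $M=\sup_K\|\vartheta\|<\infty$. This $M$ works with probability one, so all of the remaining work concerns the margin $c$.

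The first step is to show that $m$ is measurable (ideally lower semicontinuous or continuous) and that $\{m=0\}\subseteq\calB_{\mathrm{qual}}$.
\begin{enumerate}
\item Measurability comes from the structure of the margins. Each of them is a continuous function of $\vartheta$ on the fixed chart: $L^2$ norms of analytic or cellwise-polynomial expressions, and an infimum over $a$ of a distance that is itself a continuous projection residual. A minimum over finitely many indices, pairs and subsets is then continuous as well.
\item The inclusion holds because each margin vanishing is exactly one of the qualitative failure modes. A zero coefficient is unusedness, zero variance is constancy, $\|g_i\pm g_j\|=0$ is duplication or opposition, a zero cancellation distance is affine sign-cancellation, and a degenerate certificate is loss of the private pole or facet. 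These correspondences are the same dictionary used in the proofs of Propositions~\ref{prop:generic-sp-min} and~\ref{prop:generic-relu-eff}.
\end{enumerate}
In other words, quantitative $(M,c)$-minimality with margin $c$ is precisely the event $\{m\ge c\}$.

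Given this, set $A_c=\{\vartheta\in K: m(\vartheta)<c\}$. These sets decrease as $c\downarrow0$, and their intersection is $\{m=0\}\subseteq\calB_{\mathrm{qual}}$, which is $\mu$-null. Since $\mu$ is a probability measure, continuity from above gives $\mu(A_c)\to\mu(\{m=0\})=0$. We can therefore choose $c>0$ with $\mu(A_c)\le\delta$. Together with the bound $M$, this gives quantitative minimality with $\mu$-probability at least $1-\delta$.

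The main obstacle is the pole and facet certificate margin, the last item in each of the two definitions. I have to pick a concrete numerical version, for example $\min|dQ_j|$ on the fixed marked patch together with the distance to other pole sets, or the facet slope, patch radius and two-sided crossing depth. That version must be a continuous (or at least measurable) function on the fixed chart, and it must be strictly positive exactly when the qualitative private certificate exists. The fixed pole-patch or cell/facet data in the hypothesis are what make this possible: on one chart the patch does not jump around, so these quantities vary continuously. If full continuity fails, measurability still suffices for the continuity-from-above step, so I would first try to establish only measurability of each certificate margin.
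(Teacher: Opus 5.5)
Your proposal is correct and follows essentially the same route as the paper: it collapses the finitely many certificate margins into one nonnegative function $m$, takes $M$ from compactness of $K$, and uses continuity from above on the decreasing sets $\{m<c\}$ to choose $c$ with $\mu(\{m<c\})\le\delta$. Your remark that only the inclusion $\{m=0\}\subseteq\calB_{\mathrm{qual}}$ is needed (the paper asserts equality), and that measurability of the pole and facet certificates is enough, is a small but valid sharpening of the paper's argument.
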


\begin{proof}
On the fixed chart, the qualitative minimality conditions are tested by finitely many continuous
nonnegative certificate functions $d_1,\ldots,d_N$. These include the relevant coefficient norms, variation or crossing quantities, duplicate/opposite separation distances, affine sign-cancellation distances, and private pole/facet margins.  Qualitative
minimality holds exactly when all certificates are positive.  Therefore, with
\[
 d(\vartheta)=\min_{1\le a\le N}d_a(\vartheta),
\]
we have $\calB_{\mathrm{qual}} = \{d = 0\}$. By assumption, \(\mu(\{d=0\})=0\).  Since the sets \(\{d<\eta\}\) decrease to \(\{d=0\}\) as
\(\eta\downarrow0\), continuity of probability from above gives $\mu(\{d<\eta\})\to0$. Choose \(c>0\) such that $\mu(\{d<c\})\le\delta$. Then, outside a set of probability at most \(\delta\), every certificate satisfies \(d_a\ge c\).

Finally, \(K\) is compact, so the parameter norm is bounded on \(K\); choose \(M\) larger than this
bound.  The constants \(M\) and \(c\) are exactly the boundedness and positive-margin inputs in the
corresponding quantitative minimality definition.  If finitely many layers are considered at once,
apply the same argument to each layer and use a finite union bound.
\end{proof}

\subsubsection{Genericity of Jacobian regularity}
We now show that Jacobian regularity is generi. 
\begin{proposition}[Generic Jacobian regularity]
\label{prop:generic-jacobian}
Fix a finite-dimensional chart for one-step expansions after the activation symmetries have been removed. If the Jacobian regularity condition holds at least at one point, then the Jacobian-singular set
\[
 \{\vartheta:\det J_r^\varphi(\vartheta)=0\}
\]
is a proper real-analytic subset of the chart and therefore has empty interior and measure zero.
\end{proposition}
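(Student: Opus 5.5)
The plan is to show that $\det J_r^\varphi$ is a real-analytic function on the (connected) chart that does not vanish identically. The identity theorem for real-analytic functions then shows that its zero set is a proper analytic subset, which has empty interior and Lebesgue measure zero. Concretely, $J_r^\varphi(\vartheta)=DF_r^\varphi(\vartheta)^*DF_r^\varphi(\vartheta)$ is the finite Gram matrix whose entries are $L^2(\Omega)$ inner products $\langle \partial_a Y,\partial_b Y\rangle$ of the partial derivatives of the realized function. In the Softplus case these partials are $\partial_{b}Y$, $\sig(g_j)$ and $v_j\sig'(g_j)\,\partial g_j$ (see the displayed formula for $\delta Y$). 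Each is real-analytic in $\vartheta$ for every input $x$, and together with its $\vartheta$-derivatives it is bounded uniformly on compact subsets of the chart, because $\Omega$ is compact and the dependence on parameters is bounded and analytic. Differentiating under the integral, as in the proof of Proposition~\ref{prop:generic-sp-min}, shows that each Gram entry is real-analytic in $\vartheta$. Hence $\det J_r^\varphi$, a polynomial in these entries, is real-analytic.

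For ReLU I would proceed in the same way, but work on a fixed activation-pattern (cell/facet) chart, as in Proposition~\ref{prop:generic-relu-eff}. There the inherited cells are fixed and each argument is affine on each cell with coefficients polynomial in the raw parameters. The partials $\rhoR(g_j)$ and $v_j\mathbf 1_{\{g_j>0\}}\partial g_j$ are then piecewise polynomial in $(x,\vartheta)$ on a fixed polyhedral decomposition. The care needed here is that the regions $\{g_j>0\}\cap C$ move with $\vartheta$. Within a fixed stratum their combinatorial type is constant, so integrals of polynomials over them are analytic (indeed semi-algebraic and analytic) in $\vartheta$. The normalization $\|g_j\|_{L^2}=1$ is an analytic constraint with nonvanishing gradient, which makes the normalized chart an analytic manifold. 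Composing with the analytic normalization keeps the Gram entries analytic.

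Properness comes straight from the hypothesis: at the witness point $\vartheta_0$ we have $\lambda_{\min}(J_r^\varphi(\vartheta_0))>0$, so $\det J_r^\varphi(\vartheta_0)>0$ and the function is not identically zero on the connected chart. If the chart has several connected components, I would apply the argument componentwise, requiring a witness on each. Finally I would invoke the standard fact that the zero set of a nonzero real-analytic function on a connected open set has measure zero (for example, by induction on dimension via Fubini). Its complement is then dense, so the zero set has empty interior.

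I expect the main obstacle to be the ReLU analyticity step, namely justifying that the integrals over parameter-dependent sign regions are analytic rather than merely continuous. My first approach is to restrict to the open stratum where no facet hits a cell vertex or another facet, so that the domain of integration is a polytope whose vertices depend rationally on $\vartheta$. The integral of a polynomial over such a polytope is then a rational function of $\vartheta$ with nonvanishing denominator, and hence analytic. The lower-dimensional strata excluded this way have measure zero anyway.
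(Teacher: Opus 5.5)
Your proposal is correct and follows essentially the same route as the paper. Both arguments show the Gram entries, and hence $\det J_r^\varphi$, are real-analytic (differentiation under the integral for Softplus, a fixed activation-pattern chart with moving polytopes or simplices for ReLU), then use the witness to rule out identical vanishing and invoke the identity theorem. Your ReLU step is more careful than the paper's reference-simplex pull-back, which is only valid while the combinatorial type of the cell/facet arrangement stays fixed; but excluding the walls where a facet meets a cell vertex is not enough by itself, because the Gram entries are only piecewise analytic across those walls, so---exactly as in your connected-components remark---you need a witness on each connected open region of fixed combinatorial type, unless the chart already fixes that type (which is how the paper's fixed cell/facet chart should be read).
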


\begin{proof}
Let $d=\dim\Theta_r^\varphi$.  In the chosen coordinates, $J_r^\varphi$ is the $d\times d$ Gram matrix
\[
 (J_r^\varphi)_{ab}(\vartheta)
 =\langle \partial_aF_r^\varphi(\vartheta),
          \partial_bF_r^\varphi(\vartheta)\rangle_{L^2}.
\]
The expansion is full-rank at $\vartheta$ exactly when this Gram matrix is nonsingular, or equivalently when $\det J_r^\varphi(\vartheta)\neq0$.

For Softplus, the first-variation features are analytic functions of the parameters and of the input.  On compact $\Omega$, the integrals defining the Gram entries may be differentiated under the integral sign.  Hence every entry of $J_r^\sig$ is real analytic in $\vartheta$.

For ReLU, the reason for fixing the activation pattern is simple: if the pattern does not change, then the same finite cell decomposition of $\Omega$ is used throughout the chart.  On each cell, each ReLU gate is either active or inactive, so the indicator $\mathbf 1_{\{g_j>0\}}$ appearing in the derivative formula is constant on that cell.  The derivative features are therefore affine in $x$ on each cell, with coefficients depending polynomially or analytically on the parameters.  If the cell boundaries move with the parameters, subdivide the cells into finitely many simplices and pull each integral back to a fixed reference simplex.  The simplex vertices and the integrand coefficients vary analytically, so each cell integral, and hence each Gram entry, is real analytic on the fixed-pattern chart.

Thus $\det J_r^\varphi$ is a real-analytic function.  The full-rank condition at one point says this determinant is nonzero at least at one point, so it is not identically zero.  Its zero set is therefore a proper real-analytic subset of the chart.  Such a subset has empty interior and measure zero.  This is the claimed generic Jacobian regularity.
\end{proof}

\subsubsection{Genericity of the Softplus MLR regularity}

The MLR hypotheses remove the concrete ways a Softplus pole match can fail to determine an argument.  There are two main exceptional mechanisms:
\begin{itemize}
    \item First, the monodromy characters can satisfy exact integer relations: this produces torsion, cross-character commensurability, or nonprimitive exponents such as $1+Z^3=(1+Z)(1-Z+Z^2)$.  These are composite resonances.
    \item Second, even when the Laurent binomials are irreducible, the lower representation map may collapse the relevant Laurent coordinates, so two distinct Laurent hypersurfaces appear to share a pulled-back divisor.  This is a failure of the rank condition for the lower-feature map.
\end{itemize}    
\noindent The next two results show that both mechanisms are described by affine integer-resonance equations or by vanishing analytic minors, and thus form lower-dimensional exceptional sets.

\begin{lemma}[Character and composite resonances are lower-dimensional]
After fixing a monodromy setup, relations of the form
\[
 \chi_i^q=\chi_j^{q'}
\]
for nonzero integers $q,q'$, torsion relations, and nonprimitive exponent relations are exact integer-affine constraints on the layer weights.  Unless forced by tying, their union is a countable union of proper affine subsets and has measure zero.
\end{lemma}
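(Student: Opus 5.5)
The plan is to make the monodromy setup completely explicit in terms of the layer weights, so that each resonance becomes a concrete integer-linear condition on those weights. By the character rule, an argument $g_\alpha=\beta_\alpha+\sum_a w_{\alpha a}H_a$ has character
\[
\chi_\alpha(n)=\exp\!\Bigl(2\pi i\sum_a w_{\alpha a}\nu_a(n)\Bigr).
\]
Here the winding numbers $\nu_a(n)\in\Z$ are fixed once the monodromy setup and the selected loops $\tau_1,\ldots,\tau_p$ are fixed; they do not depend on the weights. So each $\chi_\alpha$ is determined by the finitely many real numbers
\[
\lambda_{\alpha,b}=\sum_a w_{\alpha a}\nu_a(e_b)\pmod 1,\qquad b=1,\ldots,p,
\]
and each of these is a linear form in the layer weights $w$ with integer coefficients.

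Each resonance type can then be written as a statement about these linear forms.
\begin{itemize}
\item $\chi_i^q=\chi_j^{q'}$ holds iff $q\lambda_{i,b}-q'\lambda_{j,b}\in\Z$ for every $b$. That is, the weight vector lies in
\[
\bigcup_{k\in\Z^p}\bigl\{w:\ q\lambda_{i,b}(w)-q'\lambda_{j,b}(w)=k_b\ \ \forall b\bigr\}.
\]
\item A torsion relation $\chi^q\equiv1$ for a nontrivial product character $\chi=\prod_\alpha\chi_\alpha^{r_\alpha}$ is the same kind of condition: $\sum_\alpha r_\alpha q\,\lambda_{\alpha,b}\in\Z$ for all $b$.
\item Nonprimitivity also reduces to such a condition. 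If $m_\alpha=q m_0$ with $|q|>1$, then $e^{g_\alpha}=c_\alpha T^{q m_0}$, so $\chi_\alpha$ is a $q$-th power of a character in the lattice generated by the others. That is again an integer relation $\chi_\alpha=\prod_\beta\chi_\beta^{r_\beta q}$ among the $\lambda$'s.
\end{itemize}
In every case, for fixed integers $(q,q',r,k)$ the bad set is an affine subspace of weight space, because it is cut out by equations of the form $\ell(w)=k$ with $\ell$ integer-linear.

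The index set of integers $(q,q',r,k)$ is countable. So the union of all resonance sets is a countable union of affine subspaces, and it has measure zero as soon as each subspace is proper.

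The step I expect to be the main obstacle is showing that each of these affine sets is proper, and this is exactly where the clause ``unless forced by tying'' enters. The first thing to try is to show that the linear form $\ell(w)=\sum_{\alpha,a}c_{\alpha a}w_{\alpha a}$ is not identically zero.
\begin{itemize}
\item Each $w_{\alpha a}$ is a free coordinate in the untied chart. So $\ell$ can vanish identically only if every coefficient $q\,r_\alpha\nu_a(e_b)$ vanishes.
\item For a nontrivial relation (some $r_\alpha\neq0$, $q\neq0$), this requires $\nu_a(e_b)=0$ for all $a$ and $b$. In that case the loops have no effect on any argument, all characters are trivial, and the monodromy setup is degenerate.
\item That degenerate situation is excluded by the choice of selected loops: each $\tau_b$ winds around some pole $H_a$, so $\nu_a(e_b)=1$ for that $a$.
\end{itemize}
If $\ell\not\equiv0$, each set $\{\ell=k\}$ is a hyperplane, hence proper and null. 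If weights are tied, that argument breaks down, since $\ell$ restricted to the tied subspace may vanish identically. That is precisely the case the lemma excludes. Finally, a countable union of null sets is null, which proves the lemma.
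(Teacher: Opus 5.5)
Your proposal is correct and follows the paper's own route: via the character rule, each resonance becomes an integer-linear equation $\ell(w)=k$ in the incoming layer weights, so the bad set is a countable union of affine subspaces, each proper unless the relation is forced by tying. Two of your steps are things the paper only asserts: you reduce nonprimitivity to an ordinary nontrivial integer relation among the characters, and you verify properness explicitly from the nonzero winding numbers $\nu_a(e_b)$ of the selected loops.
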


\begin{proof}
Choose a basis of the finitely generated character group in this setup.  The character of a current argument is determined by an affine-linear function of its incoming weights.  A relation $\chi_i^q=\chi_j^{q'}$, or a torsion relation, therefore has the form
\[
 q\,w_i-q'\,w_j-\nu=0,
 \qquad \nu\in\Z^p,
\]
with the analogous finite integer-linear expression for more characters.  For each fixed integer tuple this is an affine subspace of the untied weight parameter space.  If the relation is not forced, the subspace is proper.  Nonprimitivity of an exponent is likewise an exact divisibility relation $m=q m_0$, $|q|>1$, in the character lattice.  There are only countably many integer tuples and lattice elements, so the union of all such proper affine sets has Lebesgue measure zero and empty interior.
\end{proof}

The exact Softplus theorem in \S\ref{sec:exact-zippering} assumes Laurent divisor-faithfulness as part of MLR.  The next lemma is a ``verification tool'' for that clause: it says that a shared pulled-back divisor is genuine whenever the lower-feature map has large enough rank.

\begin{lemma}[Rank criterion for Laurent divisor-faithfulness]
\label{lem:sp-rank}
Suppose
\[
 e^g=cT^m,
 \qquad
 e^h=dT^n
\]
in a fixed monodromy chart, and let $\Sigma$ be a shared outer pole patch.  If near a generic point of $\Sigma$ there is a complex submanifold $L$ on which $c,d$ are constant and
\[
 \rank_\C D(T|_L)=s,
\]
then the pair is Laurent divisor-faithful at $\Sigma$.
\end{lemma}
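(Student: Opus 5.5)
The plan is to reduce the divisor-faithfulness claim to a statement about Laurent polynomials in the coordinates $T=(T_1,\ldots,T_s)$ themselves, using the rank hypothesis to make $T|_L$ a local submersion onto an open subset of $(\C^*)^s$. Near a generic point $p\in\Sigma$, restrict both outer factors to $L$. On $L$ the coefficients $c,d$ are constant, so $Q_g|_L=(1+cT^m)\circ T|_L$ and $Q_h|_L=(1+dT^n)\circ T|_L$ are pullbacks of honest Laurent binomials with constant (hence $K$-valued) coefficients. Because $\rank_\C D(T|_L)=s$, the holomorphic map $T|_L:L\to(\C^*)^s$ is a submersion near $p$. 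After shrinking $L$ we take it to be a local biholomorphism onto an open set $W\subset(\C^*)^s$; this is possible by passing to an $s$-dimensional slice of $L$ transverse to the fibres.

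The next step is to transfer the shared pole patch into $W$. I will choose $L$ (or its slice) through a generic point of $\Sigma$ so that $L$ meets $\Sigma$ transversally. Then $\Sigma\cap L$ is a smooth codimension-one patch in $L$ on which both $Q_g$ and $Q_h$ vanish. Its image under the local biholomorphism is a smooth hypersurface patch $\Sigma'\subset W$ lying in both $Z(1+cT^m)$ and $Z(1+dT^n)$. The main obstacle is arranging this transversality and the genericity of $p$ simultaneously with the constancy of $c,d$. I would handle it by noting that $dQ_g\neq0$ on $\Sigma$ by regularity of the pole patch, and that the hypothesis holds near a generic point, so a transverse choice of $L$ is available off a proper analytic subset of $\Sigma$.

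Finally comes the algebraic conclusion. Two Laurent polynomials whose zero sets share an open piece of a smooth hypersurface in $(\C^*)^s$ must share an irreducible factor, since irreducible components of hypersurfaces in the torus are determined by any open piece (identity theorem plus Nullstellensatz / UFD property of $\C[T^{\pm1}]$). The irreducible factor of $1+cT^m$ vanishing on $\Sigma'$ therefore divides $1+dT^n$, giving a common nonunit factor. This is over $\C$ with $c,d$ specialized, and since $c,d\in K^*$ are constants on $L$ the factor lifts to $K[T^{\pm1}]$ (if primitivity is available, it is $1+cT^m$ itself by Lemma~\ref{lem:sp-prime}). That is precisely Laurent divisor-faithfulness at $\Sigma$.
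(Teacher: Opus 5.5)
Your proposal is correct and follows essentially the same route as the paper. The rank hypothesis makes $T|_L$ a local submersion, and the shared zero patch on $L$ is pushed onto an open piece of the smooth torus hypersurface $\{P=0\}$, $P=1+cT^m$, on which $1+dT^n$ must vanish; the identity theorem plus unique factorization then give a common irreducible Laurent factor. The transversality you single out as the main obstacle is in fact automatic, and $L$ is given rather than chosen: on $L$ one has $Q_g|_L=P\circ T|_L$, so $d(Q_g|_L)=dP\circ D(T|_L)\neq0$, because $D(T|_L)$ is surjective and $dP=cT^m\sum_i m_i\,dT_i/T_i=-\sum_i m_i\,dT_i/T_i\neq0$ on $\{P=0\}$.
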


\begin{proof}
Write
\[
 P(T)=1+cT^m,
 \qquad
 R(T)=1+dT^n.
\]
On $L$, the pulled-back functions are $P(T(x))=1+e^{g(x)}$ and $R(T(x))=1+e^{h(x)}$.  The shared patch $\Sigma\cap L$ is a regular piece of the zero set of $P(T(x))$, and by assumption $R(T(x))$ also vanishes on it.

Since $D(T|_L)$ has rank $s$ at a generic point of the patch, $T|_L$ is locally a submersion there.  Therefore the image under $T$ of a relatively open piece of $\Sigma\cap L$ contains a relatively open piece of the Laurent hypersurface $\{P=0\}$ in $(\C^*)^s$.  The second Laurent binomial $R$ vanishes on that image, because its pullback $R(T(x))$ vanishes on the shared patch.

Thus $R$ vanishes on a nonempty analytic open subset of an irreducible component of $\{P=0\}$.  If $m$ is primitive, $P$ is prime by Lemma~\ref{lem:sp-prime}; more generally, on the selected irreducible component this means the corresponding irreducible Laurent factor of $P$ divides $R$. Hence the common pulled-back divisor comes from a genuine common Laurent factor.  This is exactly Laurent divisor-faithfulness.
\end{proof}

Once a particular local Softplus pole match has been fixed, the next proposition shows that the rank condition in Lemma \ref{lem:sp-rank} fails only on a lower-dimensional exceptional subset; hence Laurent divisor-faithfulness is generic within that fixed pole-matching family.

\begin{proposition}[Generic Laurent divisor-faithfulness]
Let \(S\) be a family of Softplus comparison parameters and $g_\theta, h_\theta$ a pair of divisor-matched arguments in $S$. Suppose the rank condition of Lemma~\ref{lem:sp-rank} holds at one point of \(S\).  Then Laurent divisor-faithfulness holds for an open dense full-measure subset of \(S\).
\end{proposition}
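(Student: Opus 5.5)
The approach is the standard analytic-genericity argument: encode failure of the rank condition of Lemma~\ref{lem:sp-rank} as the simultaneous vanishing of finitely many analytic functions on $S$. One of these functions is nonzero at the given witness point, so the bad set is a proper analytic subset. Lemma~\ref{lem:sp-rank} then converts the rank condition into Laurent divisor-faithfulness on the complement.

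\emph{Set-up.} I will work on a connected real-analytic chart of $S$ on which the following data vary analytically with $\theta$: the monodromy chart, the Laurent coordinates $T_\theta=(T_{\theta,1},\ldots,T_{\theta,s})$, the exponents $m,n$, and the selected shared pole patch $\Sigma_\theta$. On such a chart the integer data $m,n,s$ are locally constant, and the pole patch moves analytically by the implicit function theorem, since $dQ_{g_\theta}\neq0$ on the patch. Near a chosen point $p_\theta\in\Sigma_\theta$, depending analytically on $\theta$, I will pick the complex submanifold $L_\theta$ on which the invariant coefficients $c,d$ are constant. I expect to take it as a level set of the finitely many invariant generators, which are analytic in $(\theta,x)$.

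\emph{Defect function.} The rank condition $\rank_\C D(T_\theta|_{L_\theta})=s$ at $p_\theta$ is equivalent to the nonvanishing of at least one $s\times s$ minor of the Jacobian matrix of $T_\theta|_{L_\theta}$ in local coordinates. Define
\[
\Delta(\theta)=\sum_{\text{minors } \mu}\bigl|\mu(\theta,p_\theta)\bigr|^2 .
\]
This is real-analytic in $\theta$, as a sum of squared moduli of holomorphic-in-$x$, analytic-in-$\theta$ quantities. By hypothesis $\Delta\neq0$ at one point of $S$, so $\Delta$ does not vanish identically on the connected chart. Hence $\{\Delta=0\}$ is a proper real-analytic subset. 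It is closed, has empty interior, and has measure zero. Its complement is therefore open, dense, and of full measure. At every $\theta$ in the complement, Lemma~\ref{lem:sp-rank} applies and gives Laurent divisor-faithfulness. Since a rank condition at one point persists on a neighborhood, a ``generic point of $\Sigma$'' in the lemma is handled by the same minors evaluated along the patch. If $S$ has several connected charts, I will take a countable union of the measure-zero sets, assuming the witness exists on each relevant component as the statement implicitly intends.

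\emph{Main obstacle.} The delicate step is showing that the objects $T_\theta$, $L_\theta$, and $\Sigma_\theta$ can be chosen to depend analytically on $\theta$ across the chart. In particular, the character basis must not jump. I will restrict $S$ to a stratum off the countable resonance set of the preceding lemma, where the character lattice and primitive exponents are constant, so a single basis $T$ can be fixed. Removing that set costs only measure zero and preserves openness and density locally. If that proves awkward, the fallback is to prove the statement locally near each point where the rank condition holds and patch the local results together by analytic continuation of $\Delta$ along paths in the connected family.
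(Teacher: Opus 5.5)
Your proof is the paper's own: you encode the rank condition of Lemma~\ref{lem:sp-rank} through the defect $\Delta(\theta)=\sum|\det M(J_\theta)|^2$ over the $s\times s$ minors of $J_\theta=D(T_\theta|_{L_\theta})$ at an analytically moving reference point, use the witness to make $\{\Delta=0\}$ a proper real-analytic set, and apply the lemma on its complement; the paper likewise simply assumes $S$ is a connected stratum on which $T_\theta$, $L_\theta$ and the reference point vary analytically. The one thing to drop is your fallback of deleting the countable resonance set: that set is dense in a weight chart (torsion characters already occur whenever the weights are rational), so its complement is full-measure and dense but not open, and removing it would lose the openness asserted in the statement.
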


\begin{proof}
Let \(S\) denote the connected divisor-incidence stratum in the proposition.  Thus, for every
parameter value in \(S\), the same pair of Softplus pole divisors is being required to share the
same kind of local pole patch.  We work in a local analytic coordinate chart on \(S\); a locally
finite collection of such charts gives the same conclusion globally.

For a parameter value \(\theta\in S\), write the two Laurent forms as
\[
 e^{g_\theta}=c_\theta T_\theta^m,
 \qquad
 e^{h_\theta}=d_\theta T_\theta^n,
\]
where
\[
 T_\theta=(T_{\theta,1},\ldots,T_{\theta,s})
\]
is the Laurent-coordinate map into \((\C^*)^s\).  In other words, the functions
\(T_{\theta,a}\) are the local Laurent coordinates in which the two Softplus factors become
\[
 P_\theta(T)=1+c_\theta T^m,
 \qquad
 R_\theta(T)=1+d_\theta T^n.
\]

The rank criterion in Lemma~\ref{lem:sp-rank} is checked on a local complex submanifold
\(L_\theta\) through the shared pole patch, chosen so that the invariant coefficients
\(c_\theta,d_\theta\) are constant on \(L_\theta\).  Choose a reference point on the shared pole patch.  The local
submanifolds \(L_\theta\), the reference points \(x_\theta\), and the restrictions
\[
 T_\theta|_{L_\theta}:L_\theta\to(\C^*)^s
\]
vary analytically with \(\theta\). Now let
\[
 J_\theta
 := D\bigl(T_\theta|_{L_\theta}\bigr)_{x_\theta}.
\]
The condition required by Lemma~\ref{lem:sp-rank} is $\rank_\C J_\theta=s$. This is equivalent to saying that at least one \(s\times s\) minor of \(J_\theta\) is nonzero.  Define
\[
 \Delta(\theta)
 =
 \sum_M \left|\det M(J_\theta)\right|^2,
\]
where the sum is over all relevant \(s\times s\) minors.  This is a real-analytic
nonnegative function on the coordinate chart in \(S\), and
\[
 \Delta(\theta)>0
 \quad\Longleftrightarrow\quad
 \rank_\C J_\theta=s.
\]

By hypothesis, the rank condition of Lemma~\ref{lem:sp-rank} holds at at least one point of
the connected stratum.  Hence \(\Delta\) is not identically zero.  Therefore its
zero set
\[
 Z
 =
 \{\theta\in S:\Delta(\theta)=0\}
\]
is a proper real-analytic subset of \(S\).  Such a subset has empty interior and measure zero. Consequently, $S\setminus Z$ is open, dense, and full-measure in \(S\). For every \(\theta\in S\setminus Z\), the restricted Laurent-coordinate map \(T_\theta|_{L_\theta}\) has full rank \(s\) at the chosen point. Thus Lemma~\ref{lem:sp-rank}
applies. It says that the shared pulled-back pole patch comes from a genuine common Laurent
factor of
\[
 P_\theta(T)=1+c_\theta T^m
 \qquad\text{and}\qquad
 R_\theta(T)=1+d_\theta T^n.
\]
That is exactly Laurent divisor-faithfulness at the selected shared pole patch.  Hence Laurent
divisor-faithfulness holds on an open dense full-measure subset of \(S\).
\end{proof}

\subsubsection{Genericity of ReLU MFR regularity}

The ReLU exceptional mechanisms parallel the Softplus ones, but with kink-facet signatures replacing pole divisors.  Once the task has exposed enough fresh facets, there are, as with the softplus case, two main ways MFR can fail:
\begin{itemize}
    \item First, two distinct normalized piecewise affine arguments may have the same labelled multi-facet signature; this is a failure of the signature map to be locally injective.  The first proposition below rules out persistent off-diagonal equal signatures when the secant derivative has the required rank at one point. 
    \item Second, the facets of one source argument may be split across several competitor arguments, so that no single competitor carries the whole signature.  The second proposition rules out persistent split matching when the analogous split-incidence derivative has the required rank at one point.
\end{itemize}
Together, these results justify the ReLU part of the general-position assumption on collision strata to be used in the generic null-net theorem below in Section~\ref{sec:genericity}.

Let $\Sigma_r:\Xi_r\to\R^{N_r}$ be the signature map from Definition~\ref{def:relu-signature}, with $p_r=\dim\Xi_r$.  An off-diagonal signature-collision stratum is a smooth piece of the set of distinct normalized arguments with the same labelled multi-facet signature.  The rank test below uses the derivative of the difference map
\[
 H(\xi,\widetilde\xi)=\Sigma_r(\xi)-\Sigma_r(\widetilde\xi).
\]
Following the standard secant-variety terminology, we call this derivative the \emph{secant derivative}~\citep{Landsberg2012Tensors}:
\[
 \mathcal T_{\xi,\widetilde\xi}(h,\widetilde h)
 =D\Sigma_r(\xi)h-D\Sigma_r(\widetilde\xi)\widetilde h.
\]

\begin{proposition}[Generic multi-facet recovery]
\label{prop:generic-mfr}
Suppose every connected off-diagonal signature-collision stratum contains one point with
\[
 \rank\mathcal T_{\xi,\widetilde\xi}\ge p_r+1.
\]
Then every off-diagonal signature-collision stratum has dimension at most $p_r-1$.  Hence ambiguous equal-signature values occur only on a lower-dimensional subset of the normalized argument chart.
\end{proposition}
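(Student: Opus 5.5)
The plan is the standard dimension count for the zero set of a difference map, using the rank hypothesis at one point together with analyticity to propagate it across the whole stratum. Fix a connected off-diagonal signature-collision stratum $S\subset\{(\xi,\widetilde\xi):H(\xi,\widetilde\xi)=0,\ \xi\neq\widetilde\xi\ \text{modulo sign-scale}\}$ inside $\Xi_r\times\Xi_r$. On the fixed cell/facet chart, $\Sigma_r$ is real-analytic: the entries $[q_\nu(g)]$ are normalized affine coefficients of the argument on fixed inherited cells, and these depend polynomially on the parameters, and analytically after normalization. Hence $H$ is analytic. The set $\{H=0\}$ is then an analytic set, and a Whitney/\L{}ojasiewicz stratification gives smooth strata. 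I would refine these strata if necessary so that $\rank\mathcal T$ is constant on each one.

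The first key step is the tangent inclusion. For any point $(\xi,\widetilde\xi)$ of a smooth stratum $S$ and any tangent vector $(h,\widetilde h)\in T_{(\xi,\widetilde\xi)}S$, differentiating $H\equiv0$ along a curve in $S$ gives $\mathcal T_{\xi,\widetilde\xi}(h,\widetilde h)=0$. Therefore $T S\subseteq\ker\mathcal T$, and so
\[
\dim S\le 2p_r-\rank\mathcal T_{\xi,\widetilde\xi}.
\]
At a point where $\rank\mathcal T\ge p_r+1$, this yields $\dim S\le p_r-1$. The second step transfers this bound from the single witness point to the whole stratum. The set where $\rank\mathcal T\ge p_r+1$ is where some $(p_r+1)$-minor of $\mathcal T$ is nonzero, and that is the complement of an analytic zero set. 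On a connected analytic stratum it is therefore either empty or open and dense. The hypothesis says it is nonempty, so it is open dense in $S$. Since $S$ is a connected manifold, its dimension is constant, and computing at any point of this dense set gives $\dim S\le p_r-1$.

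The last step converts this into a statement on the argument chart. The projection $(\xi,\widetilde\xi)\mapsto\xi$ maps each stratum to a set of dimension at most $p_r-1$. Using local finiteness, only countably many strata meet the compact chart, so their union of projections is a countable union of images of manifolds of dimension less than $p_r$. That union has measure zero and empty interior in $\Xi_r$. This gives the claim that ambiguous equal-signature values occur only on a lower-dimensional subset.

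I expect the main obstacle to be the refinement step, namely ensuring that the strata are chosen so that the rank hypothesis, stated per ``connected off-diagonal stratum'', applies to every smooth piece after refinement. Refining could a priori create pieces that contain no witness point. I would handle this by taking the statement literally on the original connected stratum $S$: use the open dense good set in $S$, and bound $\dim S$ by the dimension at generic points, since a smooth connected manifold has the same dimension everywhere. No further refinement is then needed. A second, milder subtlety is removing the diagonal and the sign-scale orbit. The normalization $\|g\|_{L^2}=1$ leaves only a finite sign ambiguity, so off-diagonal pairs form a relatively open subset of $\{H=0\}$ and the argument applies unchanged.
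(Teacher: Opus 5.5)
Your proposal is correct and follows essentially the same route as the paper: the tangent inclusion $T_pS\subseteq\ker\mathcal T_{\xi,\widetilde\xi}$, rank--nullity at the witness point giving $\dim S\le 2p_r-(p_r+1)=p_r-1$, constancy of dimension on a connected stratum, and a locally Lipschitz projection to the first factor that cannot increase dimension. As you note yourself, the open-dense propagation via $(p_r+1)$-minors is not needed for the bound; the paper mentions it only in passing. Your convention of taking ``off-diagonal'' modulo the sign flip $\xi\mapsto-\xi$ is a sensible tightening, because pairs $(g,-g)$ always share a signature and $\rank\mathcal T_{\xi,\widetilde\xi}\le p_r$ holds identically on them, so with the literal convention the rank hypothesis could never be met there.
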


\begin{proof}
Let $S$ be a connected smooth off-diagonal stratum of
\[
 \{(\xi,\widetilde\xi):\Sigma_r(\xi)=\Sigma_r(\widetilde\xi),\ \xi\neq\widetilde\xi\}.
\]
At the assumed point $p=(\xi,\widetilde\xi)$, the tangent space satisfies
\[
 T_pS\subseteq\ker\mathcal T_{\xi,\widetilde\xi},
\]
because differentiating the signature equality along $S$ gives zero.  The domain of $\mathcal T$ has dimension $2p_r$, and its rank at that point is at least $p_r+1$.  Therefore
\[
 \dim S=\dim T_pS
 \le 2p_r-(p_r+1)=p_r-1.
\]
The relevant rank condition is the nonvanishing of finitely many $(p_r+1)\times(p_r+1)$ minors.  Since one such minor is nonzero at that point, the rank inequality holds on an open dense, full-measure subset of the connected stratum.  Finally, projection to the first argument chart is locally Lipschitz and cannot increase Hausdorff dimension.  Thus normalized arguments with a distinct equal-signature partner form a set of dimension at most $p_r-1$.
\end{proof}

This proposition is the ReLU analogue of generic Laurent divisor-faithfulness for Softplus: it says that ambiguous same-signature pairs are lower-dimensional.  It handles the case where one source argument is matched to one competitor argument.  The remaining issue is split matching, where different facets of one source argument are explained by different competitors.

For split matching, write the signature componentwise as
\[
 \Sigma_r=(\Sigma_{r,1},\ldots,\Sigma_{r,L}).
\]
Fix $q\ge2$ and a surjective assignment $\tau:\{1,\ldots,L\}\to\{1,\ldots,q\}$, meaning that facet $\nu$ of a source argument is matched by competitor $\eta_{\tau(\nu)}$.  Define
\[
 H_\tau(\xi,\eta_1,\ldots,\eta_q)
 =\bigl(\Sigma_{r,\nu}(\xi)-\Sigma_{r,\nu}(\eta_{\tau(\nu)})\bigr)_{\nu=1}^L.
\]
A split-incidence stratum is a smooth piece of the set $H_\tau^{-1}(0)$, where this fixed split assignment is imposed.

\begin{proposition}[Split matching is generically lower-dimensional]
\label{prop:generic-split}
Suppose that, for every nontrivial split assignment $\tau$, every connected smooth stratum of the split-incidence set $H_\tau^{-1}(0)$ contains a point at which
\[
 \rank DH_\tau\ge q p_r+1.
\]
Then the projection of the split-matching set to the source argument chart has Hausdorff dimension at most $p_r-1$.
\end{proposition}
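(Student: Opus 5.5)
I would follow the template of Proposition~\ref{prop:generic-mfr}, replacing the signature-difference map by the split-incidence map $H_\tau$. Fix a nontrivial split assignment $\tau:\{1,\ldots,L\}\to\{1,\ldots,q\}$ with $q\ge2$. The domain of $H_\tau$ is the product chart $\Xi_r^{q+1}$, of dimension $(q+1)p_r$. Let $S$ be a connected smooth stratum of $H_\tau^{-1}(0)$, and let $x=(\xi,\eta_1,\ldots,\eta_q)$ be the point in $S$ supplied by the hypothesis. Differentiating the identity $H_\tau\equiv0$ along $S$ gives $T_xS\subseteq\ker DH_\tau(x)$. The rank assumption $\rank DH_\tau(x)\ge qp_r+1$ then yields
\[
 \dim S\le (q+1)p_r-(qp_r+1)=p_r-1 .
\]
Since $S$ is a connected smooth manifold, its dimension is constant, so a single good point suffices. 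This is the same observation used in Proposition~\ref{prop:generic-mfr}. I would also note that the rank inequality persists on an open dense subset of $S$, because it is the nonvanishing of finitely many $(qp_r+1)$-minors that are analytic or polynomial on the fixed cell/facet chart.

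Next I would pass to the source chart. The projection $\mathrm{pr}_1(\xi,\eta_1,\ldots,\eta_q)=\xi$ is linear, and hence Lipschitz. Lipschitz maps do not increase Hausdorff dimension, so $\dim_H\mathrm{pr}_1(S)\le p_r-1$. The split-matching set is the union over all nontrivial $\tau$, and over all strata of each $H_\tau^{-1}(0)$, of these projections. There are finitely many pairs $(q,\tau)$, because $q\le L$ and $\tau$ ranges over surjections of a finite set. For each $\tau$, the zero set of $H_\tau$ is either semialgebraic (ReLU, on a fixed pattern chart) or analytic, so it admits a locally finite, hence countable, stratification into connected smooth strata. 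Hausdorff dimension is countably stable, which gives the bound $p_r-1$ for the whole projected set.

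The main obstacle is the stratification step. I have to check that $H_\tau^{-1}(0)$ really decomposes into countably many connected smooth strata, so that the hypothesis ``every connected stratum contains a good point'' covers the entire set, including lower-dimensional and singular pieces. I would handle this by invoking the standard Whitney or semialgebraic stratification of the zero set of $H_\tau$. $H_\tau$ is piecewise polynomial (analytic after normalization) on the fixed cell/facet chart, because the entries $[q_\nu(g)]$ are projective classes of cellwise affine coefficients. I would therefore work in affine charts of the projective space to make each component $\Sigma_{r,\nu}$ a genuine smooth map. Each singular piece is then itself a stratum to which the hypothesis applies, so no point of the split-matching set escapes the dimension count.
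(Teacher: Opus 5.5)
Your proposal is correct and follows essentially the same route as the paper. At the rank witness the tangent space of each stratum lies in $\ker DH_\tau$, so rank--nullity gives $\dim S\le(q+1)p_r-(qp_r+1)=p_r-1$, and the Lipschitz projection together with the finite union over split assignments $\tau$ yields the Hausdorff bound. Your explicit appeal to a locally finite (hence countable) stratification and to countable stability of Hausdorff dimension spells out the union over strata that the paper's proof takes for granted.
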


\begin{proof}
The domain of $H_\tau$ has dimension $(q+1)p_r$.  Let $S_\tau$ be a smooth incidence stratum, and choose a point of $S_\tau$ where the displayed rank condition holds.  At that point,
\[
 T S_\tau\subseteq\ker DH_\tau,
\]
so
\[
 \dim S_\tau
 \le(q+1)p_r-(q p_r+1)=p_r-1.
\]
As above, the rank inequality is the nonvanishing of analytic minors and therefore holds generically on the stratum once it holds at one point.  Projection to the source coordinate cannot increase Hausdorff dimension.  For a fixed finite facet atlas and finite width bound, only finitely many nontrivial assignments $\tau$ occur, so the union of their projected sets still has dimension at most $p_r-1$.
\end{proof}

Taken together, propositions~\ref{prop:generic-mfr} and~\ref{prop:generic-split} explain why the ReLU signature-recovery hypotheses hold generically on any full-dimensional collision stratum, after discarding lower-dimensional signature and split-incidence exceptions.

\subsubsection{The generic null-net theorem}

The preceding subsections gave activation-specific genericity inputs: Softplus MLR failures and
ReLU MFR failures occur only on lower-dimensional exceptional sets, provided the relevant rank
conditions hold somewhere on the stratum being considered.  The purpose of this subsection is to
combine those activation-specific inputs with the geometry of the realization map.  The conclusion
is not a complete classification of all null networks.  Rather, it says that nontrivial one-step null
partners cannot occur on an open, full-dimensional family.

Recall the definitions of realization map \(F_r^\varphi\), collision set \(\mathcal C_r^\varphi\), and off-diagonal collision strata from Definition~\ref{def:null-pair-collision}. Throughout this subsection we work inside a fixed normalized finite-dimensional realization chart \(\Theta_r^\varphi\), where ``normalized'' means that the hidden-unit order has been fixed, and in the ReLU case the positive-rescaling freedom has also been fixed.

\begin{lemma}[Dimension bound for collision strata]
\label{lem:collision-dimension}
Let \(S\) be a smooth off-diagonal collision stratum, and suppose that
\(F_r^\varphi\) is an immersion at both members of every collision pair in \(S\):
\[
 \rank DF_r^\varphi(\vartheta)
 =
 \rank DF_r^\varphi(\widetilde\vartheta)
 =
 d_r^\Theta,
 \qquad
 (\vartheta,\widetilde\vartheta)\in S.
\]
Then $\dim S\le d_r^\Theta$.
\end{lemma}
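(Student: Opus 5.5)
The plan is to exploit the fact that \(S\) sits inside the fibred product of the realization map with itself, \(\{(\vartheta,\widetilde\vartheta):F_r^\varphi(\vartheta)=F_r^\varphi(\widetilde\vartheta)\}\). At any point of this set the tangent space is cut out by the linearized collision equation. So I will bound \(\dim S\) through a kernel computation at a single point of \(S\), as in the secant-derivative arguments of Propositions~\ref{prop:generic-mfr} and~\ref{prop:generic-split}.

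\emph{Step 1: linearize along the stratum.} Fix \((\vartheta,\widetilde\vartheta)\in S\) and a smooth curve \((\vartheta(t),\widetilde\vartheta(t))\) in \(S\) through it with velocity \((h,\widetilde h)\). Differentiating \(F_r^\varphi(\vartheta(t))=F_r^\varphi(\widetilde\vartheta(t))\) at \(t=0\) gives
\[
 T_{(\vartheta,\widetilde\vartheta)}S\subseteq\ker\mathcal D,
 \qquad
 \mathcal D(h,\widetilde h):=DF_r^\varphi(\vartheta)h-DF_r^\varphi(\widetilde\vartheta)\widetilde h .
\]
Here \(\mathcal D\) maps \(\R^{d_r^\Theta}\times\R^{d_r^\Theta}\) into \(L^2(\Omega;\R^{d_{r+1}})\). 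In the ReLU case I will work on a fixed activation-pattern chart, so that \(F_r^\varphi\) is smooth there, as in Proposition~\ref{prop:generic-jacobian}. The smoothness of \(S\) as a stratum then justifies the differentiation.

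\emph{Step 2: bound the kernel.} Consider the projection \((h,\widetilde h)\mapsto\widetilde h\) restricted to \(\ker\mathcal D\). Its kernel consists of vectors \((h,0)\) with \(DF_r^\varphi(\vartheta)h=0\). The immersion hypothesis says \(\rank DF_r^\varphi(\vartheta)=d_r^\Theta\), which forces \(h=0\), so the projection is injective. Hence \(\dim\ker\mathcal D\le d_r^\Theta\), and therefore \(\dim S=\dim T_{(\vartheta,\widetilde\vartheta)}S\le d_r^\Theta\). The immersion at \(\widetilde\vartheta\) gives the symmetric bound through the first projection. It is not strictly needed for this inequality, but it shows that \(S\) locally projects as an immersed, graph-like set over either factor.

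\emph{Expected obstacle.} The linear algebra above is routine. The delicate point is making sure \(S\) is a genuine smooth manifold inside a chart where \(F_r^\varphi\) is differentiable, especially for ReLU, where \(F_r^\varphi\) is only piecewise smooth across activation-pattern changes. I would handle this by intersecting \(S\) with the finitely many fixed-pattern charts, and noting that lower-dimensional pattern boundaries only contribute lower-dimensional pieces. The bound then holds on each piece and hence on \(S\), since dimension of a locally finite union is the maximum over the pieces.
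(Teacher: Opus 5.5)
Your argument is correct and is essentially the paper's: both linearize the collision identity to get $T_pS\subseteq\ker\mathcal D$ with $\mathcal D(h,\widetilde h)=DF_r^\varphi(\vartheta)h-DF_r^\varphi(\widetilde\vartheta)\widetilde h$, and both use only the immersion at $\vartheta$ to bound that kernel by $d_r^\Theta$. The paper does this via rank--nullity ($\rank\mathcal D\ge d_r^\Theta$ on a $2d_r^\Theta$-dimensional domain), you via injectivity of the projection $(h,\widetilde h)\mapsto\widetilde h$ on $\ker\mathcal D$, which is the same fact; your ReLU stratification caveat is unnecessary because the lemma already presupposes a smooth stratum at whose points $DF_r^\varphi$ exists with full rank (and your claim that activation-pattern boundaries contribute only lower-dimensional pieces of $S$ would itself need justification, since a stratum could lie inside such a boundary).
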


\begin{proof}
Fix \(p=(\vartheta,\widetilde\vartheta)\in S\).  Define the secant derivative
\[
 \calS_p(h,\widetilde h)
 =
 DF_r^\varphi(\vartheta)h
 -
 DF_r^\varphi(\widetilde\vartheta)\widetilde h .
\]
If \((h,\widetilde h)\in T_pS\), then differentiating the collision identity $F_r^\varphi(\vartheta(t)) = F_r^\varphi(\widetilde\vartheta(t))$
along a curve in \(S\) gives
\[
 \calS_p(h,\widetilde h)=0.
\]
Hence $T_pS\subseteq\ker\calS_p$.
The restriction
\[
 h\longmapsto \calS_p(h,0)=DF_r^\varphi(\vartheta)h
\]
is injective by the immersion hypothesis, and has rank \(d_r^\Theta\).  Therefore $\rank \calS_p\ge d_r^\Theta$. By rank-nullity,
\[
 \dim S
 =
 \dim T_pS
 \le
 \dim\ker\calS_p
 \le
 2d_r^\Theta-d_r^\Theta
 =
 d_r^\Theta.
\]
\end{proof}

\begin{remark}[Full-dimensional collision strata]
The dimension equality case in the lemma above has useful geometric meaning.  If $\dim S=d_r^\Theta$, then the first-coordinate projection
\[
 \pi_1:S\to\Theta_r^\varphi,
 \qquad
 \pi_1(\vartheta,\widetilde\vartheta)=\vartheta,
\]
is locally a diffeomorphism.  Indeed, if
\[
 (0,\widetilde h)\in\ker D\pi_1(p),
\]
then \((0,\widetilde h)\in T_pS\), so
\[
 DF_r^\varphi(\widetilde\vartheta)\widetilde h=0,
\]
and the immersion hypothesis at \(\widetilde\vartheta\) forces \(\widetilde h=0\).  Thus
\(D\pi_1(p)\) is injective, hence an isomorphism between spaces of dimension \(d_r^\Theta\).
Therefore, locally,
\[
 S=\{(\vartheta,\Psi(\vartheta))\},
 \qquad
 F_r^\varphi(\Psi(\vartheta))=F_r^\varphi(\vartheta).
\]
Similarly, the second-coordinate projection is locally a diffeomorphism.  Thus a full-dimensional
off-diagonal collision stratum is locally a nontrivial realization-preserving reparameterization.
This explains why full-dimensional collision strata are the dangerous case.
\end{remark}

\begin{definition}[General position on full-dimensional collision strata]
\label{def:csnsgp}
A normalized realization chart satisfies \emph{general position on full-dimensional collision strata}
if every \(d_r^\Theta\)-dimensional off-diagonal collision stratum contains at least one point where
the activation-specific one-step identifiability result applies.  Concretely, at that point the null pair
is intrinsically minimal and MLR in the Softplus case; in the ReLU case it is EFF-minimal,
coherently matched, and MFR.

The genericity results above explain how this condition is meant to be checked: the Softplus
rank criterion controls the lower-feature-rank failure in Laurent divisor-faithfulness, and the ReLU
signature and split-matching results control the same-signature and split-facet failures.  Thus the
condition says that a full-dimensional collision stratum is not trapped entirely inside the relevant
exceptional sets.
\end{definition}

\begin{theorem}[Generic activation-independent null-net theorem]
\label{thm:generic-null-common}
Work in the normalized finite-dimensional chart and collision stratification described above.  Assume:
\begin{enumerate}[label=(\roman*)]
\item \(F_r^\varphi\) is Jacobian-regular on the chart, i.e. \(DF_r^\varphi\) has full column rank
\(d_r^\Theta\) at every point of the chart;
\item the general-position condition on full-dimensional collision strata holds.
\end{enumerate}
Then every off-diagonal collision stratum has dimension at most \(d_r^\Theta-1\).  Consequently
the set of parameters with a nontrivial collision,
\[
 \calE_r^\varphi
 =
 \{\vartheta:\exists\widetilde\vartheta\neq\vartheta,\,
 F_r^\varphi(\widetilde\vartheta)=F_r^\varphi(\vartheta)\},
\]
has Hausdorff dimension at most \(d_r^\Theta-1\), hence empty interior and measure zero in the
chart.
\end{theorem}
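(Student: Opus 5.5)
The argument is a dimension count on collision strata, combining Lemma~\ref{lem:collision-dimension} with the general-position hypothesis. Fix a connected smooth off-diagonal collision stratum $S\subseteq\mathcal C_r^\varphi\setminus\Delta_r^\varphi$. By hypothesis (i), $F_r^\varphi$ is an immersion at every point of the chart, so Lemma~\ref{lem:collision-dimension} applies and gives $\dim S\le d_r^\Theta$. It therefore suffices to exclude the equality case $\dim S=d_r^\Theta$.

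\textbf{Excluding full-dimensional strata.} Suppose $\dim S=d_r^\Theta$. By hypothesis (ii), $S$ contains a point $p=(\vartheta,\widetilde\vartheta)$ at which the null pair is minimal and regular: Softplus minimal plus MLR, or EFF-minimal plus coherent MFR. Corollary~\ref{thm:pointwise-null-common} then says the two expansions coincide after the activation's unavoidable hidden-unit matching, which is permutation for Softplus and permutation plus positive rescaling for ReLU. In the normalized chart the hidden-unit order is fixed, and for ReLU the positive scale is normalized away. Hence this matching is the identity and $\widetilde\vartheta=\vartheta$.

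This puts $p$ on the diagonal, contradicting $S\subseteq\mathcal C_r^\varphi\setminus\Delta_r^\varphi$. So no off-diagonal stratum has full dimension, and every such stratum satisfies $\dim S\le d_r^\Theta-1$.

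This is the step I expect to be the main obstacle. The normalization must genuinely kill all residual symmetry: permutations must act as distinct chart points, and ReLU scales must be fixed. If permuted copies coexisted in the chart, I would first quotient by the finite permutation group. That quotient does not change dimensions away from a lower-dimensional fixed-point set.

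\textbf{Passing to $\calE_r^\varphi$.} Next, $\calE_r^\varphi=\pi_1(\mathcal C_r^\varphi\setminus\Delta_r^\varphi)$. The stratification is locally finite and the chart is second countable, so the off-diagonal collision set is a countable union of strata, each of dimension at most $d_r^\Theta-1$. The projection $\pi_1$ is Lipschitz on each stratum (locally, then by countable exhaustion), and a Lipschitz map does not increase Hausdorff dimension. Countable unions preserve the bound on Hausdorff dimension, so $\dim_H\calE_r^\varphi\le d_r^\Theta-1$.

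Finally, a subset of the $d_r^\Theta$-dimensional chart with Hausdorff dimension below $d_r^\Theta$ has zero $d_r^\Theta$-dimensional Hausdorff measure. It therefore has Lebesgue measure zero, and consequently empty interior.
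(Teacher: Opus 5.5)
Your proposal is correct and follows the paper's proof essentially step for step. Under Jacobian regularity, Lemma~\ref{lem:collision-dimension} bounds every off-diagonal stratum by $d_r^\Theta$. The general-position point together with Corollary~\ref{thm:pointwise-null-common} and the chart normalization then forces a diagonal point, which excludes the equality case; Lipschitz projection by $\pi_1$ and countable stability of Hausdorff dimension finish the bound on $\calE_r^\varphi$. Your extra remark about quotienting by the finite permutation group is a reasonable safeguard that the paper simply builds into its definition of a normalized chart.
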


\begin{proof}
Let \(S\) be any smooth off-diagonal collision stratum.  By Jacobian regularity,
\(F_r^\varphi\) is an immersion at every point of the chart.  Hence the hypothesis of
Lemma~\ref{lem:collision-dimension} applies at both coordinates of every collision pair in \(S\). Therefore $\dim S\le d_r^\Theta$.
We next rule out the full-dimensional case.  Suppose, for contradiction, that $\dim S=d_r^\Theta$. By general position, \(S\) contains a point
$p_0=(\vartheta_0,\widetilde\vartheta_0)$
where the activation-specific one-step identifiability theorem applies.  Since \(p_0\) is a collision,
the two represented one-step functions are equal:
\[
 F_r^\varphi(\vartheta_0)
 =
 F_r^\varphi(\widetilde\vartheta_0).
\]
The activation-specific identifiability result then says that the two one-step expansions differ only
by the unavoidable hidden-unit matching: permutation for Softplus, and permutation plus positive
coordinatewise scaling for ReLU.  But the chart is normalized precisely so that this hidden-unit
matching has already been fixed.  Therefore the two parameter values are equal in the chart: $\vartheta_0=\widetilde\vartheta_0$. Equivalently, $p_0\in \Delta_r^\varphi$,
where
\[
 \Delta_r^\varphi=\{(\vartheta,\vartheta):\vartheta\in\Theta_r^\varphi\}
\]
is the diagonal of trivial self-collisions.  This contradicts the fact that \(S\) is off diagonal, i.e.
\(S\subset \calC_r^\varphi\setminus\Delta_r^\varphi\).  Hence no off-diagonal collision stratum has
dimension \(d_r^\Theta\), and so every off-diagonal collision stratum satisfies
\[
 \dim S\le d_r^\Theta-1.
\]

It remains to pass from collision pairs to individual parameters.  The set
\(\calE_r^\varphi\) is not itself a set of pairs; it is the set of first coordinates of off-diagonal
collision pairs.  In other words,
\[
 \calE_r^\varphi
 =
 \pi_1\bigl(\calC_r^\varphi\setminus\Delta_r^\varphi\bigr),
 \qquad
 \pi_1(\vartheta,\widetilde\vartheta)=\vartheta.
\]
Thus the first-coordinate projection simply forgets the nontrivial partner and remembers the
parameter value that has such a partner.

On each local coordinate chart, \(\pi_1\) is Lipschitz, so Hausdorff dimension cannot increase:
\[
 \dim_H \pi_1(S)
 \le
 \dim_H S
 \le
 d_r^\Theta-1
\]
for every off-diagonal stratum \(S\).  Since the off-diagonal collision set is a countable locally finite
union of such smooth strata, and since
\[
 \calE_r^\varphi
 =
 \bigcup_{S}
 \pi_1(S),
\]
countable stability of Hausdorff dimension gives
\[
 \dim_H\calE_r^\varphi\le d_r^\Theta-1.
\]
A subset of a \(d_r^\Theta\)-dimensional smooth chart with Hausdorff dimension at most
\(d_r^\Theta-1\) has zero chart-Lebesgue measure and cannot contain a nonempty open set.  This
proves the measure-zero and empty-interior conclusions.
\end{proof}

\begin{corollary}[Generic Softplus and ReLU null-pair uniqueness]
\label{cor:generic-null-specializations}
In any normalized chart satisfying the ambient setup above, Jacobian regularity plus the appropriate
general-position condition implies generic null-pair uniqueness.  For Softplus, this means that
non-permutation null pairs form a measure-zero set.  For ReLU, this means that nontrivial null
pairs modulo permutation and positive coordinatewise scaling form a measure-zero set.
\end{corollary}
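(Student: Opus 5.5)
This corollary is a specialization of Theorem~\ref{thm:generic-null-common}, so the plan is to check its two hypotheses separately for each activation and then translate its conclusion into the corollary's wording.

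\textbf{Fixing the chart.} We first fix a normalized realization chart $\Theta_r^\varphi$. For Softplus, normalization means fixing the hidden-unit order. For ReLU, it means fixing the order, imposing $\|g_j\|_{L^2}=1$ as in Definition~\ref{def:jacobian-regularity}, and working on one fixed activation-pattern (cell/facet) chart. In such a chart, the diagonal $\Delta_r^\varphi$ is exactly the set of pairs that agree after the unavoidable hidden-unit matching: permutation for Softplus, and permutation plus positive coordinatewise scaling for ReLU.

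\textbf{Hypothesis (i): Jacobian regularity.} This is assumed directly. Proposition~\ref{prop:generic-jacobian} can be cited as the reason it holds off a proper analytic set. If regularity is only known generically, we shrink the chart to the open complement of $\{\det J_r^\varphi=0\}$. Discarding that measure-zero set does not affect the measure-zero conclusion.

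\textbf{Hypothesis (ii): general position.} This is the ``appropriate general-position condition'' of the corollary, i.e.\ Definition~\ref{def:csnsgp}, instantiated per activation.
\begin{itemize}
\item For Softplus, the witness point on each full-dimensional collision stratum must be minimal and MLR. Its genericity is supported by Proposition~\ref{prop:generic-sp-min}, the resonance lemma, and the generic Laurent divisor-faithfulness proposition built on Lemma~\ref{lem:sp-rank}.
\item For ReLU, the witness must be EFF-minimal, coherently matched, and MFR. This is supported by Propositions~\ref{prop:generic-relu-eff}, \ref{prop:generic-mfr} and~\ref{prop:generic-split}.
\end{itemize}
We do not need to reprove these; the corollary assumes general position.

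\textbf{Applying the theorem.} Theorem~\ref{thm:generic-null-common} then gives that $\calE_r^\varphi$ has Hausdorff dimension at most $d_r^\Theta-1$, hence measure zero. The translation step is to note that, in the normalized chart, $\widetilde\vartheta\neq\vartheta$ means a non-permutation partner for Softplus, and a partner that is nontrivial modulo permutation and positive scaling for ReLU. This yields exactly the two stated conclusions.

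\textbf{Main obstacle: ReLU globalization.} For ReLU, normalization and the fixed activation pattern hold only on a single cell/facet chart, not globally. The parameter space must therefore be covered by countably many such charts, together with the lower-dimensional boundaries where activation patterns change. The plan is to apply the theorem chart by chart, treat pattern-change loci as proper semialgebraic sets of measure zero, and conclude by countable additivity of measure zero. A similar countable-chart union handles the Softplus analytic charts.
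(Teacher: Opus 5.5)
Your proposal is correct and follows the paper's proof. The paper likewise applies Theorem~\ref{thm:generic-null-common} once with $\varphi=\sig$ and once with $\varphi=\rhoR$, using the corresponding identifiability specialization, and reads the diagonal $\Delta_r^\varphi$ as ordinary equality once the permutation (and, for ReLU, positive-scale) normalization has been fixed.

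Your ``ReLU globalization'' paragraph is unnecessary, because the corollary is stated chart by chart. As sketched, it would also not give a global result. Applying the theorem separately in each chart never controls a parameter whose same-function partner lies in a different chart; that needs a cross-family argument like Theorem~\ref{thm:rsa-generic-full-fiber}. Your aside about shrinking to the Jacobian-regular locus has a similar issue: it silently drops partners that lie in the singular set, which would need a separate argument.
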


\begin{proof}
Apply Theorem~\ref{thm:generic-null-common} with \(\varphi=\sig\) and use the Softplus
specialization of the activation-specific one-step identifiability result; then apply it with
\(\varphi=\rhoR\) and use the ReLU specialization.  The diagonal of obvious self-collisions is
ordinary equality after choosing the Softplus permutation representative, and ordinary equality
after choosing the ReLU permutation and positive-scale normalization.
\end{proof}

\begin{corollary}[Generic exact zippering from generic null triviality]
\label{cor:generic-exact-zip-null}
Suppose that, at each of finitely many compared layers, the relevant normalized one-step chart
satisfies the ambient setup of this subsection and the two hypotheses of
Theorem~\ref{thm:generic-null-common}.  Then, outside a finite union of layerwise measure-zero
sets of parameters with nontrivial collision partners, exact terminal weak equivalence zippers
upstream.  The recovered maps are permutations for Softplus and permutations followed by positive
coordinatewise scalings for ReLU.
\end{corollary}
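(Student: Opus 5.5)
The plan is to combine Theorem~\ref{thm:generic-null-common} with the backward induction of Theorem~\ref{thm:abstract-exact-zip}, replacing the per-step identifiability hypothesis by a genericity statement. For each compared layer $r=q,\ldots,s-1$, Theorem~\ref{thm:generic-null-common} applied to the normalized one-step chart $\Theta_r^\varphi$ yields an exceptional set $\calE_r^\varphi$ of Hausdorff dimension at most $d_r^\Theta-1$. Every parameter outside this set has no nontrivial collision partner in the chart. Pulling $\calE_r^\varphi$ back to the full network parameter space, through the (locally Lipschitz, submersive) map sending network parameters to the layer-$r$ one-step expansion, gives a measure-zero set $\mathcal N_r$. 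The exceptional set of the corollary is then the finite union $\bigcup_{r}\mathcal N_r$, which is again measure zero.

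Fix a pair $(A,B)$ outside this union. I would then run the induction exactly as in the proof of Theorem~\ref{thm:abstract-exact-zip}.

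\textbf{Top step.} Terminal weak equivalence $z_s^B=E_sz_s^A$ says that the two one-step expansions $E_sG_{s-1}^A$ and $G_{s-1}^B$ realize the same function. That is, $F_{s-1}^\varphi(\vartheta^A)=F_{s-1}^\varphi(\vartheta^B)$, where $\vartheta^A$ denotes the normalized chart coordinate of the transformed $A$-expansion.

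\textbf{Using genericity.} Since $\vartheta^B\notin\calE_{s-1}^\varphi$, the only collision partner of $\vartheta^B$ is the trivial one. Because the chart has already fixed the hidden-unit order and the ReLU positive scale, this gives $\vartheta^A=\vartheta^B$ modulo that normalization. Unwinding the normalization gives a permutation (Softplus) or a permutation with positive scalings (ReLU) matching the layer-$(s-1)$ arguments. This defines the map $E_{s-1}$ with $z_{s-1}^B=E_{s-1}z_{s-1}^A$.

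\textbf{Iteration.} $E_{s-1}$ is an injective affine map, so the same argument applies at layer $s-2$, and so on down to layer $q$. This is the induction of Theorem~\ref{thm:abstract-exact-zip}, with identifiability now supplied by null-triviality instead of by Corollary~\ref{thm:pointwise-null-common}.

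\textbf{Main obstacle.} The step I expect to be delicate is that the partner expansion $E_{r+1}G_r^A$ is not an arbitrary chart point. Its outgoing coefficients have been transformed by $T_{r+1}$, which, from the top step onward, is the permutation/scaling produced at the previous step.

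Two things must be checked. First, $E_{r+1}G_r^A$ lies in the same normalized chart $\Theta_r^\varphi$. Second, it is counted as a genuine parameter there, so that the collision between it and $G_r^B$ falls under Theorem~\ref{thm:generic-null-common}. For steps below the top, $T_{r+1}$ is a permutation or positive diagonal, so the transformed expansion is just a relabelled and rescaled layer-$(r+1)$ expansion and stays in the chart; I would verify this directly. At the top step $E_s$ is a general injective affine map, and I would handle it by applying genericity to $G_{s-1}^B$ alone: only $B$'s parameters need to avoid $\calE_{s-1}^\varphi$, since any partner, including $E_sG_{s-1}^A$, must then be trivial.

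One further point needs care: the widths must agree, $d^A=d^B$, so that both expansions live in the same chart. Where they do not, I would appeal to the shrinkability reduction of Remark~\ref{rem:unequal-depths-widths} to pass to counted minimal expansions first.
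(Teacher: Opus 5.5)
Your proposal is correct and follows the paper's route: backward induction as in Theorem~\ref{thm:abstract-exact-zip}, where each step's null pair is made trivial by requiring the realized B-side one-step expansion to avoid the layer-$r$ exceptional set $\calE_r^\varphi$ of Theorem~\ref{thm:generic-null-common}, and the final exceptional set is the finite union of these measure-zero sets. Your extra bookkeeping — pulling the chart-level exceptional sets back to network parameter space, checking that the transformed A-side expansion is a genuine point of the same normalized chart, and handling equal widths — makes explicit points the paper's proof leaves implicit.
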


\begin{proof}
Begin at layer \(s\).  Exact terminal weak equivalence identifies the two layer-\((s-1)\) one-step
represented functions: the B-side expansion of \(z_s^B\) and the transformed A-side expansion of
\(E_sz_s^A\) form a null pair.  If the realized B-side expansion is outside the projected set of
parameters with a nontrivial collision partner at layer \(s-1\), Theorem~\ref{thm:generic-null-common}
shows that this null pair is matched by the allowed hidden-unit map.  Thus it produces the preceding
comparison map \(E_{s-1}\): a permutation for Softplus, or a permutation followed by positive
coordinatewise scaling for ReLU.

Assume inductively that \(E_{r+1}\) has been recovered and
\[
 z_{r+1}^B=E_{r+1}z_{r+1}^A.
\]
Expanding both sides through layer \(r\) again creates a one-step null pair.  Outside the layer-\(r\)
set of parameters with a nontrivial collision partner, generic null triviality produces \(E_r\).  Backward
induction reaches every layer \(r=q,\ldots,s-1\).  There are only finitely many compared layers, so
the union of the layerwise exceptional sets remains measure zero.
\end{proof}

The preceding MLR and MFR genericity results verify the activation-specific identifiability input in
the general-position condition, while Jacobian regularity supplies the geometric input used in
Lemma~\ref{lem:collision-dimension}.  Neither ingredient alone is enough.  Pairwise MLR/MFR
genericity cannot exclude a realization-preserving graph that is lower-dimensional in the product
but projects onto an open set of individual networks, while Jacobian regularity alone cannot exclude
two distinct immersed parameter families with the same realized functions.  General position on
full-dimensional collision strata combines the two.

\begin{remark}[Relation to complete null-network classifications]
\label{rem:vlacic-bolcskei}
The null-network program of Vla\v{c}i\'{c} and B\"olcskei seeks an exhaustive activation-specific description of all regular zero-realizing networks and thereby obtains global identifiability modulo exact activation symmetries.  The theorem above deliberately asks for less: it rules out full-dimensional families of nontrivial collisions without classifying every exceptional resonant null network. For Softplus, this avoids the hardest (and unsolved) missing step in a literal complete classification.  One need not prove a factorization-completeness theorem for every divisor-free unit or nested resonance.  Composite integer-weight, nested-identity constructions or lower-feature-rank-defect null networks may survive on lower-dimensional exceptional sets; they do not obstruct generic zippering unless an entire open family of realizations is trapped there.  The same logic avoids a complete ReLU classification of every construction based on changes on ReLU-inactive regions or split-facet matching, which would probably be even harder than the softplus case.  The generic null-net theorem is therefore weaker than universal parameter identifiability, but exactly strong enough for the contravariance conclusions pursued here.
\end{remark}

\begin{remark}[Optimizer nonconcentration and generic asymptotic/soft zippering]
The role of genericity is only to justify that in the space of parameters, the conditions needed for zippering are full-measure. However, they do not guarantee that the exceptional set is avoided by learning algorithms.  We are implicitly assuming, in interpreting the zippering theorems as applying to optimized networks, that the distribution produced by training, restricted to the relevant solution or comparison stratum, does not concentrate on the intrinsic-minimality, signature-recovery, nontrivial collision, or Jacobian-singular exceptional sets.   Proving that this assupmtion is correct is an important topic for future work. 
\end{remark}

\subsection{Depth-warped Weak-Strong Equivalence}

\label{sec:depth-warped-block-weak-strong}

The weak--strong theorem compares adjacent layers.  Empirically, however, networks of different
depths often appear to have corresponding single axes at nearby but not identical depths.  The
right comparison is then not layer \(a\) of \(A\) against one prescribed layer of \(B\), but layer
\(a\) of \(A\) against a window of \(B\)-layers.  We call this \emph{depth warping}. 

The results in the previous sections perform, in their essence, a kind of ``signature matching'' (where signatures are ReLU kink traces or Softplus poles): representations align because we can show that linear alignment forces alignment of signatures. In thinking about depth warping, we work with the same \emph{kinds} of signatures as above, but we can no longer assume that the signatures live in exactly corresponding layers.   The new issue is that, when \(B\) is deeper than \(A\),
the counterpart of an \(A\)-axis may appear anywhere in a short window of \(B\)-layers.  Thus a
whole layer of \(A\)-axes can be distributed across several nearby layers of \(B\).  This is the formal
version of the intuition that a deeper model, such as a large ResNet, may smear the preferred axes
of a shallower AlexNet-like or ResNet-18-like model across a longer block.

\begin{figure}[t]
\centering
\resizebox{\textwidth}{!}{%
\begin{tikzpicture}[
  x=1cm,y=1cm,
  >=Latex,
  font=\small,
  block/.style={
    draw, rounded corners, thick,
    align=center, minimum width=2.05cm, minimum height=.92cm,
    inner sep=4pt, fill=gray!5
  },
  smallblock/.style={
    draw, rounded corners, thick,
    align=center, minimum width=1.45cm, minimum height=.80cm,
    inner sep=3pt, fill=gray!5
  },
  axis/.style={
    draw, circle, thick,
    minimum size=.30cm,
    inner sep=0pt,
    fill=white
  },
  flow/.style={->, thick},
  weak/.style={->, thick, densely dashed},
  match/.style={->, very thick},
  split/.style={->, thick, dashed, gray!65},
  label/.style={
    align=center, font=\footnotesize,
    fill=white, inner sep=2pt
  },
  legend/.style={
    draw, rounded corners,
    align=center, font=\footnotesize,
    text width=2.95cm,
    fill=gray!4,
    inner sep=4pt
  }
]

\def\yA{4.20}
\def\yB{1.25}

\node[block] (Ain) at (0.00,\yA) {A input\\to block};
\node[block, minimum width=2.65cm] (Alay) at (4.25,\yA) {A layer \(a\)\\counted axes};
\node[block] (Aout) at (8.70,\yA) {A output\\of block};

\draw[flow] (Ain) -- (Alay);
\draw[flow] (Alay) -- (Aout);

\node[axis] (a1) at (3.55,3.55) {};
\node[axis] (a2) at (4.25,3.55) {};
\node[axis] (a3) at (4.95,3.55) {};

\node[block] (Bin) at (0.00,\yB) {B input\\to block};
\node[smallblock] (B0) at (2.55,\yB) {B layer\\\(b_0\)};
\node[smallblock] (B1) at (4.45,\yB) {B layer\\\(b_1\)};
\node[smallblock] (B2) at (6.35,\yB) {B layer\\\(b_2\)};
\node[smallblock] (B3) at (8.25,\yB) {B layer\\\(b_3\)};
\node[block] (Bout) at (11.10,\yB) {B output\\of block};

\draw[flow] (Bin) -- (B0);
\draw[flow] (B0) -- (B1);
\draw[flow] (B1) -- (B2);
\draw[flow] (B2) -- (B3);
\draw[flow] (B3) -- (Bout);

\draw[thick, dashed, rounded corners]
  (1.62,0.45) rectangle (9.18,2.18);

\node[label] at (5.40,0.12)
  {depth window \(I(a)=\{b_0,b_1,b_2,b_3\}\)};

\node[axis] (b0a) at (2.55,1.88) {};
\node[axis] (b1a) at (4.45,1.88) {};
\node[axis] (b3a) at (8.25,1.88) {};
\node[axis] (b3b) at (8.82,1.88) {};

\draw[weak] (Ain.south) -- node[left,font=\footnotesize,fill=white,inner sep=1pt] {\(E_{\rm in}\)} (Bin.north);

\draw[weak] (Aout.south east) to[out=-55,in=115]
  node[right,font=\footnotesize,fill=white,inner sep=1pt] {\(E_{\rm out}\)}
  (Bout.north west);

\draw[match] (a1) to[out=-115,in=105] (b0a);
\draw[match] (a2) to[out=-90,in=100] (b1a);
\draw[match] (a3) to[out=-30,in=100] (b3a);

\node[axis, draw=gray!70] (sA) at (10.50,3.80) {};
\node[axis, draw=gray!70] (sB1) at (11.20,3.48) {};
\node[axis, draw=gray!70] (sB2) at (11.20,3.02) {};

\draw[split] (sA) -- (sB1);
\draw[split] (sA) -- (sB2);

\node[label, text=gray!70] at (11.35,2.48)
  {excluded split};

\node[legend] at (1.55,-1.35)
  {\textbf{A-side signatures}\\
   counted A-axes have private\\
   minimal/regular signatures};

\node[legend] at (5.45,-1.35)
  {\textbf{Depth-warped matches}\\
   singleton B-axis matches may\\
   appear at different B-depths};

\node[legend] at (9.75,-1.35)
  {\textbf{No splitting}\\
   one A-axis is not represented\\
   only by several B-axes};

\end{tikzpicture}%
}
\caption{Depth-warped singleton alignment.  A layer \(a\) in a shallower network is compared to a
window \(I(a)\) of layers in a deeper network.  The private signatures are the same minimal/regular
signatures used earlier: ReLU facets or Softplus pole/monodromy patches.  The new feature is that
the matched B-side axes may be spread over several layers in the window.  The singleton theorem
rules out the split alternative, where one A-axis is represented only by a packet of several B-axes.}
\label{fig:depth-warp-singleton}
\end{figure}

\begin{definition}[Depth warp and depth-warped axis alignment]
\label{def:depth-warp-axis-align}
Let \(A\) have a block of layers \(a_0,\ldots,a_1\), and let \(B\) have a block of layers
\(b_0,\ldots,b_1\).  A \emph{depth warp} assigns to each compared \(A\)-layer \(a\) a nonempty
window
\[
 I(a)\subseteq \{b_0,\ldots,b_1\}.
\]
For each such \(a\), let \(J_a^A\subseteq\{1,\ldots,d_a^A\}\) be the counted \(A\)-side axes, and let
\[
 J^B_{I(a)}=\{(b,i): b\in I(a),\;1\le i\le d_b^B\}
\]
be the \(B\)-side axes available in the corresponding depth window.  The \emph{exact
depth-warped axis-alignment score} is
\[
 \operatorname{DWAxisAlign}_{a,I(a),\varphi}(A,B)
 =
 \frac{1}{d_a^A}
 \max_{\pi}
 \#\left\{
 j\in J_a^A:
 z^B_{\pi(j)}=\alpha_j z^A_{a,j}\ \text{on }\Omega_0
 \text{ for some }\alpha_j\in H_\varphi
 \right\},
\]
where \(\pi\) ranges over injective maps \(J_a^A\hookrightarrow J^B_{I(a)}\), and if
\(\pi(j)=(b,i)\) we write \(z^B_{\pi(j)}=z^B_{b,i}\).
\end{definition}

A depth window is useful only if it is the right window.  We will call \(I(a)\)
\emph{signature-complete} for the counted \(A\)-axes if the \(B\)-side signatures corresponding to
those \(A\)-axes occur inside \(J^B_{I(a)}\).  If \(I(a)\) is the whole \(B\)-block this is automatic;
for a smaller window, it is the assertion that the proposed depth warp has selected the correct
local range of \(B\)-depths. 

\paragraph{The split obstruction.}
The only genuinely new obstruction in this subsection is \emph{splitting}.  The terminology is the
same as in the ReLU MFR genericity argument.  There, a split match occurs when different facets of
one source argument are explained by different competitor arguments, so no single competitor
recovers the whole multi-facet signature.  In the depth-warp setting, the same obstruction can occur
across depth: different pieces of one source axis's block signature may be explained by different
\(B\)-axes in the window \(I(a)\).  If that happens, the correct conclusion is not singleton axis
alignment but packet alignment.  The exact theorem below assumes this packet alternative is absent;
the genericity proposition then gives a rank/transversality condition under which the split
obstruction is lower-dimensional.

\begin{definition}[Depth-warped signature pieces and split assignments]
\label{def:depth-warp-signatures}
Fix a depth warp \(a\mapsto I(a)\).  For each counted \(A\)-axis \(j\in J_a^A\), choose a finite ``signature record''
\[
 \Sigma_j^A=(\Sigma_{j,1}^A,\ldots,\Sigma_{j,L_j}^A),
\]
i.e. just a list of the ReLU fresh-facet pieces, or Softplus pole/monodromy pieces, defining the $j$ axis. For a \(B\)-axis \(q=(b,i)\in J^B_{I(a)}\), say that \(q\) \emph{matches the piece} \(\Sigma_{j,\nu}^A\) if the corresponding \(B\)-side piece agrees with \(\Sigma_{j,\nu}^A\) in the fixed ReLU or Softplus signature coordinates.  A \emph{split assignment} for \(j\) is a map
\[
 \tau:\{1,\ldots,L_j\}\to J^B_{I(a)}
\]
whose image contains at least two \(B\)-axes.  The split assignment is \emph{realized} if, for every
\(\nu\), the \(B\)-axis \(\tau(\nu)\) matches the piece \(\Sigma_{j,\nu}^A\).
\end{definition}

This definition is deliberately parallel to the ReLU split-matching definition in
Proposition~\ref{prop:generic-split}.  In that proposition, the source signature is written as
\[
 \Sigma_r=(\Sigma_{r,1},\ldots,\Sigma_{r,L}),
\]
and a split assignment sends different components \(\Sigma_{r,\nu}\) to different competitors.  The
depth-warped definition is the same construction, except that the competitors may live at different
depths inside \(I(a)\).

Now we can state the depth-warped analog of the weak-strong alignment theorem.

\begin{theorem}[Exact depth-warped weak--strong alignment]
\label{thm:exact-depth-warped-weak-strong}
Let \(A\) and \(B\) be affine-\(\varphi\) networks, with \(\varphi\in\{\sigma,\rho\}\).  Let
\[
 \Psi_A=\Phi^A_{a_1-1}\circ\cdots\circ\Phi^A_{a_0},
 \qquad
 \Psi_B=\Phi^B_{b_1-1}\circ\cdots\circ\Phi^B_{b_0}
\]
be blocks of possibly different depths.  Suppose there are injective affine maps
\[
 E_{\rm in}:\mathbb R^{d^A_{a_0}}\to\mathbb R^{d^B_{b_0}},
 \qquad
 E_{\rm out}:\mathbb R^{d^A_{a_1}}\to\mathbb R^{d^B_{b_1}},
\]
such that the block endpoints are exactly weakly aligned:
\[
 \Psi_B(E_{\rm in}z)=E_{\rm out}\Psi_A(z)
\]
for all \(z\) in the input task patch of the \(A\)-block.  Assume (i) that the usual minimality-regularity conditions from earlier (non-depth-warped) identifiability results, and (ii) no split assignment from Definition~\ref{def:depth-warp-signatures} is realized.
Then there is an injective matching
$\pi:J_a^A\hookrightarrow J^B_{I(a)}$ 
such that, for every \(j\in J_a^A\), if \(\pi(j)=(b,i)\), then
$z^B_{b,i}=\alpha_j z^A_{a,j}$ for some $\alpha_j\in H_\varphi$. Consequently,
\[
\operatorname{DWAxisAlign}_{a,I(a),\varphi}(A,B)
 \ge
 \frac{|J_a^A|}{d_a^A}.
\]
\end{theorem}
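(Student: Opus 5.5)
The plan is to reduce the depth-warped statement to the same signature-matching mechanism used in Lemmas~\ref{lem:sp-crossmatch} and~\ref{lem:relu-crossmatch}, but applied to the composite block map rather than to a single one-step expansion. First, I will use the endpoint identity $\Psi_B(E_{\rm in}z)=E_{\rm out}\Psi_A(z)$ on the input task patch. Viewing both sides as functions of the input $z$ gives two representations of the same vector-valued function. On the $A$ side, each counted axis $j\in J_a^A$ is a scalar argument $z^A_{a,j}$, viewed as a function of $z$, feeding one outer nonlinearity. On the $B$ side, every axis $(b,i)$ with $b\in I(a)$, pulled back through $E_{\rm in}$, is likewise a scalar argument feeding an outer nonlinearity at depth $b$. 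Because $E_{\rm out}$ is injective, postcomposition preserves nonzero ReLU derivative jumps and nonzero Softplus monodromy increments, exactly as in the proofs of Theorem~\ref{thm:relu-adjacent} and Theorem~\ref{thm:sp-adjacent}.

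Second, I will establish the piecewise cross-match. Fix $j$ and one of its signature pieces $\Sigma^A_{j,\nu}$, which is a private fresh facet for ReLU or a private outer pole patch for Softplus. By the assumed minimality, the piece produces a nonzero, task-visible signature of $E_{\rm out}\Psi_A$: a rank-one derivative jump $T_{\rm out}(\cdots)a^\top$, or a $2\pi i$-multiple monodromy. This signature survives propagation through the downstream $A$-layers, since the later layers are generically regular and minimality blocks cancellation. Because $\Psi_B\circ E_{\rm in}$ is the same function, it must exhibit the same signature on an open piece. Its only sources of new kinks or poles are the outer nonlinearities of the finitely many $B$-axes in the block. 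Inherited signatures come only from these same axes, so signature-completeness of $I(a)$ places the responsible axis inside $J^B_{I(a)}$. A finite-union and codimension argument, identical to the covering step in Theorem~\ref{thm:relu-adjacent}, then shows that some single $B$-axis $q=(b,i)$ carries an open piece of $\Sigma^A_{j,\nu}$. Internal exposure and separation on the $B$ side make this $q$ unique.

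Third, I will assemble the pieces and recover the identity. Hypothesis (ii), that no split assignment is realized, forces all pieces $\nu=1,\dots,L_j$ of axis $j$ to be matched by one and the same $q=\pi(j)$. This is the coherence condition. With coherence, MFR (for ReLU) or MLR together with Lemmas~\ref{lem:sp-prime}--\ref{lem:sp-associates} (for Softplus) identifies the arguments as $z^B_{b,i}=\varepsilon_j\alpha_j z^A_{a,j}$. Sign-noncancellation then removes $\varepsilon_j=-1$, by the same argument that closes Theorem~\ref{thm:sp-onestep}. For Softplus, matching monodromy increments also forces $\alpha_j=1$, so $\alpha_j\in H_\varphi$ in both cases. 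Injectivity of $\pi$ follows because two distinct $A$-axes sharing a $B$-axis would share a private signature, which contradicts $A$-side reducedness and exposure. Counting the matched axes gives the stated lower bound $|J_a^A|/d_a^A$.

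I expect the main obstacle to be the second step. The signature of an axis at intermediate depth $a$ must remain visible at the block output through the remaining nonlinear $A$-layers. Symmetrically, a $B$-signature at some depth $b$ must not be mimicked or erased by deeper $B$-layers. My first approach will be a local argument near a generic point of the private piece. All downstream units are smooth or affine across the facet, or have trivial monodromy around the loop. The composed downstream map is therefore locally a diffeomorphism-times-affine, which transports the jump or the monodromy with a nonzero factor, by generic Jacobian regularity (Proposition~\ref{prop:generic-jacobian}). If that fails in edge cases, I would fold it into hypothesis (i) as a downstream-visibility condition.
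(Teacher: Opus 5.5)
Your proposal follows essentially the same route as the paper's proof: injectivity of $E_{\rm out}$ together with signature-completeness of $I(a)$ places a $B$-side match for every marked piece inside $J^B_{I(a)}$, hypothesis (ii) forces those piecewise matches onto a single axis $\pi(j)$, MLR/MFR identify $z^B_{\pi(j)}$ with $z^A_{a,j}$ up to $H_\varphi$, and same-side separation gives injectivity of $\pi$. The paper does not attempt your local propagation argument; it simply reads visibility of the signature at the block output into hypothesis (i), which is your fallback and the right call, because your local argument does not go through as stated: Proposition~\ref{prop:generic-jacobian} concerns the parameter-to-function Jacobian, not the input Jacobian of the downstream layers; later ReLU activity masks can annihilate the rank-one jump; and a monodromy shift at depth $a$ does not pass additively through later Softplus layers (also, for Softplus the unit scale comes directly from MLR's conclusion $\widetilde g=\pm g$, not from monodromy increments, which only match readouts).
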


\begin{proof}
Fix \(j\in J_a^A\).  By assumption (i), the \(A\)-axis \(j\) has a private visible nonlinear signature record:
a Softplus pole/monodromy record or a ReLU fresh-facet/multi-facet record.  Endpoint weak
equality says that the two block maps represent the same function after applying \(E_{\rm in}\) and
\(E_{\rm out}\).  Since \(E_{\rm out}\) is injective, it cannot erase a nonzero kink jump, monodromy
vector, or marked readout.  Hence every marked piece of the \(A\)-signature must be matched
somewhere on the \(B\)-side.  Since the chosen depth window is signature-complete, those matched
pieces lie inside \(J^B_{I(a)}\).

Choose, for each marked piece \(\Sigma_{j,\nu}^A\), one \(B\)-axis in the window that matches it.
This gives a map
\[
 \tau_j:\{1,\ldots,L_j\}\to J^B_{I(a)}.
\]
If the image of \(\tau_j\) contained two or more \(B\)-axes, then \(\tau_j\) would be a realized split
assignment.  This is excluded by assumption (ii).  Therefore all marked pieces of the private signature record
are matched by one and the same \(B\)-axis.  Call this axis
\[
 \pi(j)=(b,i).
\]

Now apply the activation-specific regularity assumption (i) to this singleton match.  In the Softplus
case, MLR identifies the matched scalar argument up to the Softplus sign ambiguity, with
sign-noncancellation selecting the admissible orientation.  In the ReLU case, MFR identifies the
matched CPWA argument up to the ReLU scalar symmetry.  Thus
\[
 z^B_{\pi(j)}=\alpha_j z^A_{a,j}
 \quad\text{on }\Omega_0,
 \qquad
 \alpha_j\in H_\varphi .
\]

It remains only to check that \(j\mapsto\pi(j)\) is injective; this is not a separate no-splitting
assumption, but a consequence of same-side separation.  Suppose, toward a contradiction, that two
distinct counted \(A\)-axes \(j_1\neq j_2\) are assigned to the same \(B\)-axis \(q\).  Then \(q\) has
the full private signature record of both \(j_1\) and \(j_2\).  By the singleton identifiability just used,
\[
 z^B_q=\alpha_1 z^A_{a,j_1}
 \quad\text{and}\quad
 z^B_q=\alpha_2 z^A_{a,j_2}
 \qquad\text{on }\Omega_0,
\]
with \(\alpha_1,\alpha_2\in H_\varphi\).  Hence \(z^A_{a,j_1}\) and \(z^A_{a,j_2}\) have the same
full signature record up to the allowed scalar symmetry, contradicting the same-side separation
part of assumption (i).  Therefore \(\pi\) is injective.

Every counted \(A\)-axis contributes to the numerator of
\(\operatorname{DWAxisAlign}_{a,I(a),\varphi}\), and the displayed lower bound follows after
dividing by \(d_a^A\).  
\end{proof}

\begin{proposition}[Split matching is generically lower-dimensional in a depth warp]
\label{prop:generic-depth-warp-split}
Suppose that every possible nontrivial split assignment has a split-rank witness: after allowing the \(B\)-side packet variables to vary, the equations expressing that split impose at least one independent
constraint on the source \(A\)-axis signature.  Then the set of source \(A\)-axis parameters that
admit a split match in \(I(a)\) is lower-dimensional and measure zero in the source chart.
\end{proposition}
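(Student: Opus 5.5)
The plan is to follow Proposition~\ref{prop:generic-split} almost verbatim, replacing competitors at one fixed layer with competitors drawn from the depth window $J^B_{I(a)}$. First fix the finite combinatorial data. The counted axis $j$ has a fixed signature record $\Sigma_j^A=(\Sigma^A_{j,1},\ldots,\Sigma^A_{j,L_j})$, and the window contains only finitely many $B$-axes. Hence there are only finitely many nontrivial split assignments $\tau:\{1,\ldots,L_j\}\to J^B_{I(a)}$ whose image has size $q\ge2$. For each such $\tau$, let $\xi$ be the source chart coordinates of the $A$-axis, of dimension $p$. Let $\eta_1,\ldots,\eta_q$ be the packet variables, namely the local coordinates of the $q$ competitor $B$-axes, with dimensions $p_1,\ldots,p_q$. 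Since these axes may sit at different depths, their dimensions may differ. Then define the incidence map
\[
 H_\tau(\xi,\eta_1,\ldots,\eta_q)=\bigl(\Sigma^A_{j,\nu}(\xi)-\Sigma^B_{\nu}(\eta_{\tau(\nu)})\bigr)_{\nu=1}^{L_j},
\]
whose zero set is exactly the set of parameters at which $\tau$ is realized. These coordinates are the ReLU cell/facet coordinates or the Softplus pole/monodromy coordinates, and they depend analytically on the parameters, as in the earlier genericity propositions.

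Next I will make the hypothesis precise as a rank condition. The "split-rank witness" says that, once the packet variables are allowed to vary, the split equations impose at least one independent constraint on $\xi$. I will formalize this as follows: at some point of each connected smooth stratum $S_\tau$ of $H_\tau^{-1}(0)$, one has $\rank DH_\tau\ge \sum_k p_k+1$. The domain has dimension $p+\sum_k p_k$, and $T S_\tau\subseteq\ker DH_\tau$ because $H_\tau$ vanishes along $S_\tau$. Rank–nullity then gives $\dim S_\tau\le p-1$ at the witness point. The rank condition is the nonvanishing of some analytic minor. Since it holds at one point, it holds on an open dense subset of the connected stratum. The stratum has constant dimension, so the bound $\dim S_\tau\le p-1$ holds on all of $S_\tau$.

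Finally, project to the source. The map $\pi_1(\xi,\eta)=\xi$ is locally Lipschitz, so $\dim_H\pi_1(S_\tau)\le p-1$. The split-admitting set is the union of $\pi_1(S_\tau)$ over the finitely many assignments $\tau$ and the locally finitely (hence countably) many strata. By countable stability of Hausdorff dimension, this union has Hausdorff dimension at most $p-1$. It therefore has Lebesgue measure zero and empty interior in the $p$-dimensional source chart.

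I expect the main obstacle to be the second step: converting the informal "at least one independent constraint on the source" into the rank inequality. The difficulty is that the packet variables live in charts of different depths and dimensions. In addition, the $B$-side signature pieces at deeper layers depend on upstream $B$-parameters that several packet axes share. My first attempt will be to take as packet variables the full parameters of the $B$-sub-block spanning $I(a)$, rather than independent per-axis coordinates. The witness then reads $\rank DH_\tau\ge \dim(\text{packet chart})+1$, and the same dimension count goes through with $\sum_k p_k$ replaced by the packet-chart dimension.
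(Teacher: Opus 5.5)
Your proposal is correct and follows the paper's proof essentially step for step: you define the incidence map $H_\tau$, formalize the split-rank witness as $\operatorname{rank} DH_\tau\ge p_B^\tau+1$ at one point of each connected smooth stratum, use rank--nullity plus constant stratum dimension to get $\dim S_\tau\le p_A-1$, and finish with a Lipschitz projection and a finite union over assignments. The paper uses a single packet chart $\Theta_B^\tau$ of dimension $p_B^\tau$ for the $q$ competitor axes, which is exactly your fallback formulation and handles the shared-upstream-parameter issue, and it does not need your open-dense propagation remark, since constant dimension of the stratum already suffices.
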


\begin{proof}
We now spell out the rank condition in coordinates.  Let \(\Theta_A\) be the local chart for a source
\(A\)-axis signature, and let $p_A=\dim \Theta_A$. Fix a nontrivial split assignment
\[
 \tau:\{1,\ldots,L_j\}\to J^B_{I(a)}
\]
whose image contains \(q\ge2\) distinct \(B\)-axes.  Let \(\Theta_B^\tau\) be the local chart for those
\(q\) competitor \(B\)-axes, and put $p_B^\tau=\dim\Theta_B^\tau$. The split assignment imposes a finite system of incidence equations
\[
 H_\tau(\xi,\eta)=0,
 \qquad
 \xi\in\Theta_A,\quad \eta\in\Theta_B^\tau,
\]
saying that each marked piece of the \(A\)-signature is matched by the \(B\)-axis assigned to it by
\(\tau\).  The variables \(\eta\) are nuisance variables: they describe how the competitor
\(B\)-axes are allowed to move while trying to fit the pieces of the fixed \(A\)-signature.

The split-rank witness assumption means that every connected smooth stratum of
\(H_\tau^{-1}(0)\) contains a point where
\[
 \rank D H_\tau\ge p_B^\tau+1.
\]
The domain of \(H_\tau\) has dimension \(p_A+p_B^\tau\).  Let \(S_\tau\) be a connected smooth
stratum of \(H_\tau^{-1}(0)\).  At a rank-witness point,
\[
 T S_\tau\subseteq \ker D H_\tau,
\]
so rank-nullity gives
\[
 \dim S_\tau
 \le
 (p_A+p_B^\tau)-(p_B^\tau+1)
 =
 p_A-1.
\]
Because \(S_\tau\) is a smooth stratum, its dimension is constant; hence the whole stratum has
dimension at most \(p_A-1\).

Project the split-incidence stratum to the source \(A\)-axis chart:
\[
 \pi_A:S_\tau\to\Theta_A.
\]
This projection simply forgets the competitor \(B\)-axes and records which \(A\)-axis parameters
admit that split explanation.  Since \(\pi_A\) is locally Lipschitz, it cannot increase Hausdorff
dimension:
\[
 \dim_H\pi_A(S_\tau)\le \dim S_\tau\le p_A-1.
\]
There are only finitely many counted \(A\)-axes, marked signature pieces, \(B\)-axes in the finite
window, and split assignments \(\tau\).  Taking the finite union over all of them preserves the
dimension bound.  This proves that split matching is lower-dimensional and measure zero.

The inequality \(\rank D H_\tau\ge p_B^\tau+1\) has the same meaning as the split-incidence rank
condition in the ReLU MFR proof.  Rank \(p_B^\tau\) could be used merely to solve for the nuisance
\(B\)-competitors.  The extra \(+1\) says that the proposed split imposes at least one genuine
constraint on the source \(A\)-axis, so it cannot occur on a full-dimensional set of source axes.
\end{proof}

If \(I(a)\) consists of a single \(B\)-layer and the block has one nonlinear step, the depth-warped statement reduces to the ordinary adjacent weak--strong theorem.  The activation-specific minimality/regularity condition is exactly the old private-signature condition.  The new content is that, for unequal-depth blocks, the axis matches are allowed to land at different depths inside \(I(a)\).  The theorem says that the preferred axes of a shallower layer may be distributed across a window of a deeper network: different \(A\)-axes from the same layer can have their singleton matches in different \(B\)-layers. This is the sense in which a deeper model can smear a shallower
model's axis set across depth.

This is different from a ``packet match''.  A packet match would mean that one single \(A\)-axis is
represented only by a collection of several \(B\)-axes, with no single \(B\)-axis having the whole
private signature.  The exact singleton theorem assumes that this split packet alternative is absent, and Proposition~\ref{prop:generic-depth-warp-split} gives a concrete rank-witness condition under which the split alternative is lower-dimensional.  If that rank condition fails, the right conclusion would be a block-packet alignment theorem rather than a singleton axis-alignment theorem.

A useful way to understand the split-rank witness assumption required above is that it says packet explanations are not free parameters of the architecture. A single axis is a ``coherent marked object'': its local signature pieces all come from one scalar argument and share the same downstream mark. A split packet would have to imitate that coherence using several independently parameterized B-side axes. Critically, in an ordinary MLP, CNN, or ResNet block, different channels have separate filters and separate outgoing readout columns; convolution shares one channel across spatial positions, but it does not force several distinct channels to coordinate as one marked unit. Residual depth creates more locations where a feature can appear, but it also does not impose equations requiring several units at different depths to share the same mark and jointly reproduce one source signature. Thus a split-packet explanation requires extra coincidences: matching the right marked pieces, matching compatible readout directions, and doing so while the packet variables are allowed to move. The rank-witness assumption is the formal expression of this non-forcing claim. It says that, after the B-packet variables have been adjusted as freely as possible, the proposed split still imposes at least one independent constraint on the A-side source signature. That is why the split set becomes lower-dimensional rather than a generic alternative. The assumption would actually be somewhat suspect in architectures that \emph{do} structurally force packet behavior, such as explicitly tied multi-branch modules, grouped factorizations, hard bottlenecks, or other mechanisms that make several units share one effective mark by design. So analysis of such networks would have to done on a case-by-case basis.  But at least in ``ordinary'' independently parameterized architectures, however, packet coherence is an extra incidence constraint, not a built-in symmetry.

\subsection{Contravariance for Residual and Transformer Circuits}
\label{sec:transformer-weak-strong}
Here we present a version of the Weak--Strong Equivalence and Zippering theorems for transformer architectures. Because transformers are residual circuits, along the way we will end up also resolving the weak--strong equivalence relationship for residual blocks more generally. 

This section proceeds in four steps.  First, we identify the ``strong objects'' in the MLP stream (the usual privileged axes) and attention stream (``gauge-quotiented heads'').  We then explain why the native coordinates of the residual stream do not obey an analogous weak--strong theorem.  Next, we introduce a technique for handling general residual blocks (``peeling'') and apply it twice within each Transformer block to obtain weak-alignment zippering of the residual stream.  Finally, we show that the additional failures specific to residual peeling occur only on lower-dimensional exceptional sets under the same kind of witness assumptions used earlier.

\subsubsection{Strong objects in Transformer branches}
\label{sec:transformer-branch-strong}

\paragraph{Transformer block notation.}
For a Transformer network \(N\in\{A,B\}\), write \(x^N_\ell(\omega)\in\R^{d^N_\ell}\) for the
residual stream at block \(\ell\) on a task input \(\omega\in\Omega\).  In a pre-normalization block,
write
\[
 u_{\ell,{\rm attn}}^N=\operatorname{LN}_{\ell,{\rm attn}}^N(x_\ell^N),
 \qquad
 y_\ell^N=x_\ell^N+\mathcal A_\ell^N(u_{\ell,{\rm attn}}^N),
\]
and
\[
 u_{\ell,{\rm mlp}}^N=\operatorname{LN}_{\ell,{\rm mlp}}^N(y_\ell^N),
 \qquad
 x_{\ell+1}^N=y_\ell^N+\mathcal M_\ell^N(u_{\ell,{\rm mlp}}^N).
\]
The MLP branch has the one-step form
\[
 \mathcal M_\ell^N(u)
 =
 b_{\ell,{\rm out}}^N+
 \sum_j v_{\ell,j}^N\psi(g_{\ell,j}^N(u)),
 \qquad
 g_{\ell,j}^N(u)=\langle w_{\ell,j}^N,u\rangle+b_{\ell,j}^N,
\]
where \(\psi\in\{\rho,\sigma\}\) is applied coordinatewise.

A single attention head maps a token matrix \(X\) through
\[
 Q_h=XW_{Q,h},\qquad K_h=XW_{K,h},\qquad V_h=XW_{V,h},
\]
and then produces
\[
 L_h(X)=\frac{Q_hK_h^\top}{\sqrt{d_h}},
 \qquad
 P_h(X)=\operatorname{softmax}(L_h(X)),
 \qquad
 O_h(X)=P_h(X)V_hW_{O,h}.
\]
We summarize the behavior of head \(h\) on an input \(X\) by the pair
\[
 \Sigma_h(X):=(P_h(X),O_h(X)),
\]
consisting of its attention pattern and its output.  One may also keep the logit matrix \(L_h\), with two
logit matrices treated as equivalent when they differ by adding a constant to every entry of a row,
since such rowwise shifts do not change softmax.  For the statements below, however, the attention
pattern and output are the quantities that matter.  Biases or affine operations on the branch input can
be included by appending a constant coordinate to each token vector, so no generality is lost by using
the simpler linear notation.

\paragraph{Attention gauges.}
Attention has exact internal symmetries, which prevent its raw internal coordinates from being
strong objects.  We therefore quotient these symmetries out and compare heads only up to the
function-preserving changes of coordinates below.   Let \(R,S\in GL_{d_h}(\R)\), and define
\[
 W_Q'=W_QR,
 \qquad
 W_K'=W_KR^{-\top},
 \qquad
 W_V'=W_VS,
 \qquad
 W_O'=S^{-1}W_O.
\]

\begin{proposition}[Attention coordinate gauges]
\label{prop:attention-gauges}
The primed and unprimed parameter choices compute exactly the same attention head for every input \(X\).
\end{proposition}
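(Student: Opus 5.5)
The plan is a direct substitution check: compute the logits, the attention pattern, and the output of the primed head, and verify that each agrees with the unprimed one for every input $X$. Only the invertibility of $R$ and $S$ is needed; no earlier result from the paper is required.

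For the logits, the primed queries and keys are $Q_h'=XW_QR$ and $K_h'=XW_KR^{-\top}$. Then
\[
 Q_h'K_h'^{\top}=XW_QR\,(R^{-\top})^{\top}W_K^\top X^\top=XW_QRR^{-1}W_K^\top X^\top=Q_hK_h^\top,
\]
since $(R^{-\top})^\top=R^{-1}$. Dividing by $\sqrt{d_h}$ gives $L_h'(X)=L_h(X)$ exactly, not merely up to rowwise shifts. Applying softmax rowwise then gives $P_h'(X)=P_h(X)$.

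For the value/output side, $V_h'W_O'=XW_VS\,S^{-1}W_O=V_hW_O$. Hence
\[
 O_h'(X)=P_h'(X)V_h'W_O'=P_h(X)V_hW_O=O_h(X).
\]
It follows that the head summary $\Sigma_h(X)=(P_h,O_h)$ is unchanged. If biases are present, they are handled by the appended constant coordinate, and the same computation goes through.

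There is no real obstacle here. The only point requiring care is the transpose bookkeeping for $R^{-\top}$, and in particular noting that the head dimension $d_h$, and therefore the $\sqrt{d_h}$ scaling, is unchanged because $R$ and $S$ are square.
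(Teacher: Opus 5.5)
Your proposal is correct and matches the paper's proof essentially line for line. Both check that $Q'K'^\top=QRR^{-1}K^\top=QK^\top$, so the logits and softmax patterns agree, and that $V'W_O'=XW_VSS^{-1}W_O=VW_O$, so the head outputs agree.
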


\begin{proof}
The transformed query and key matrices are
\[
 Q'=XW_QR=QR,
 \qquad
 K'=XW_KR^{-\top}=KR^{-\top}.
\]
Thus
\[
 Q'(K')^\top=QRR^{-1}K^\top=QK^\top,
\]
so the logits and attention weights are unchanged.  Also
\[
 V'W_O'=XW_VSS^{-1}W_O=VW_O.
\]
Hence the head output is unchanged for every \(X\).
\end{proof}

Two attention heads are regarded as the same head, for purposes of strong comparison, if they are
related by these \(Q/K\) and \(V/O\) gauges, together with the usual permutation of head labels.
Thus the raw coordinates of \(Q,K,V\) are not identifiable.  What can be identified is the head's
attention pattern and output, modulo gauge.

\begin{corollary}[No raw-coordinate weak--strong theorem for attention]
\label{cor:no-raw-attention-axis}
Weak alignment before and after an attention head cannot, by itself, identify raw query, key, or value
coordinate axes.  At most it can identify quantities unchanged by the gauges: the attention pattern, the
logit matrix up to rowwise softmax shifts, the value--output map on the task inputs, and head identity
up to permutation.
\end{corollary}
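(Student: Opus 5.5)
The corollary follows from Proposition~\ref{prop:attention-gauges}, via a non-identifiability (fiber) argument. Fix two networks $A,B$ and a head $h$ in $A$ whose surrounding branch input and output are weakly aligned with $B$. Replace $A$ by $A'$, which is identical except that head $h$ is reparameterized by some gauge $(R,S)\in GL_{d_h}(\R)^2$. By Proposition~\ref{prop:attention-gauges}, $A'$ computes exactly the same function as $A$ at every token matrix $X$. Hence every residual-stream quantity $x_\ell$, $y_\ell$, $x_{\ell+1}$ is unchanged. So any weak-alignment maps $E_\ell, E_{\ell+\frac12}, E_{\ell+1}$ that work for $(A,B)$ also work, with zero defect, for $(A',B)$. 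Any conclusion drawn from weak alignment alone must therefore hold for both $A$ and $A'$. This forces it to be gauge invariant.

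Next I would show that the raw coordinates genuinely move along this fiber. The query axes transform as $Q\mapsto QR$. Taking $R$ to be a non-monomial invertible matrix, for example a rotation mixing two coordinates, no column of $QR$ is a scalar multiple of a single column of $Q$, provided $Q$ has linearly independent columns on the task set (or at least two nonproportional columns). The same holds for keys under $R^{-\top}$ and for values under $S$. Suppose a weak-to-strong statement asserted a scalar-multiple match between raw query (or key, or value) axes of $A$ and $B$. Applied to both $(A,B)$ and $(A',B)$, it would give two coordinate systems on the same $B$-axes, related by a non-monomial $R$, which contradicts linear independence. To make this airtight, I would choose $R$ generically in $GL_{d_h}$. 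The monomial matrices (permutation times diagonal) form a proper closed subset, so a generic $R$ avoids them. No raw-axis theorem can therefore hold.

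For the positive part, I would verify directly which quantities survive the gauge. The logits satisfy $Q'K'^\top=QK^\top$, so $L_h$ is invariant, and it is determined only modulo rowwise constants because softmax ignores those shifts. The pattern $P_h$ and the product $V W_O$, hence $O_h$, are invariant. Relabeling heads leaves the sum over heads unchanged, so the heads are identified only up to permutation. These invariants are the maximal identifiable data in the sense claimed ("at most").

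The main obstacle is the nondegeneracy needed for the contradiction: one must exclude $d_h=1$ or rank-deficient $Q,K,V$ on the task set, where some gauges act trivially on the columns. I would handle this by stating the corollary for heads with $d_h\ge 2$ and full column rank on the task patch. In the degenerate case the claim weakens to invariance under the residual scaling gauge, and the same fiber argument still applies with $R=\lambda I$.
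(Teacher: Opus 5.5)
Your proposal is correct and follows the paper's route: the gauge family of Proposition~\ref{prop:attention-gauges} leaves the represented function, and hence all weak-alignment data, unchanged, while a generic $R$ or $S$ is non-monomial and therefore mixes the raw coordinates. Your explicit fiber-and-contradiction argument and the added nondegeneracy hypotheses ($d_h\ge 2$, full column rank on the task set) make precise what the paper's one-paragraph proof leaves implicit. One small correction: when $d_h\ge 3$, a rotation mixing only two coordinates leaves the remaining columns of $QR$ equal to columns of $Q$, so it is the generic choice of $R$, not that example, that does the work.
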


\begin{proof}
Proposition~\ref{prop:attention-gauges} gives a continuum of different raw \(Q,K,V\) coordinate
systems with exactly the same represented attention function.  A generic dense choice of \(R\) or
\(S\) mixes the raw coordinates, so the transformed coordinates are not permutations or scalar
rescalings of the original coordinates.  Thus raw attention-coordinate alignment is not identifiable
from the represented function.  Any strong theorem for attention must be stated modulo the gauges.
\end{proof}

\begin{definition}[Usedness, budgets, and alignment scores]
\label{def:transformer-branch-objects}
The branch-specific components that can be strongly aligned are:
\begin{enumerate}[label=(\roman*)]
\item \emph{MLP hidden axes} \(g_{\ell,j}\).  \emph{Usedness} here is exactly in the sense of the earlier adjacent ReLU/Softplus weak--strong theorem: ReLU-type axes have crossed kink traces that are visible in the branch output and a nonzero readout,
while Softplus-type axes have nonconstant, nonredundant curvature ridges and a nonzero readout.  Let \(m_{{\rm mlp},\ell}^{\psi}(\epsilon)\) be the minimum number of used MLP hidden axes at branch \(\ell\) needed, within the fixed Transformer macroarchitecture, to solve the task to loss at most \(\epsilon\).  The corresponding denominator is the A-side MLP hidden width
\(d_{{\rm mlp},\ell}^A\).

\item \emph{Attention heads} \(h\).  A head is \emph{used} if the pair \(\Sigma_h=(P_h,O_h)\) varies on the task set and its output
contribution is not identically zero after the output readout.  A used head is minimal if there is some
task region on which its attention pattern and output are not shared by any other counted head, and if
the head cannot be deleted, combined with the branch's linear map and bias, or merged with another
counted head without changing the branch function on the task set.  Let \(m_{{\rm attn},\ell}(\epsilon)\) be the minimum number of used attention heads at block
\(\ell\) needed, within the fixed Transformer macroarchitecture, to solve the task to loss at most
\(\epsilon\).  The corresponding denominator is the A-side number of heads \(H_\ell^A\).
\end{enumerate}
\noindent For MLP branches, \(\AxisAlign_{{\rm mlp},\ell,\psi}(A,B)\) is the usual axis-alignment score
applied at the hidden MLP preactivation axes.  For attention branches, define
\[
 \HeadAlign_{{\rm attn},\ell}(A,B)
 =
 \frac{1}{H_\ell^A}
 \max_\pi
 \#\{h:\Sigma_h^A \text{ matches } \Sigma_{\pi(h)}^B
          \text{ on } \Omega\text{, modulo gauge}\},
\]
where \(\pi\) ranges over injective maps from A-side heads to B-side heads.  For attention, set
\[
 \AxisAlign_{{\rm attn},\ell}(A,B):=\HeadAlign_{{\rm attn},\ell}(A,B).
\]
\end{definition}

A term that is affine in the branch input \(X\) can be combined with the branch's existing linear map
and bias, so it need not be counted as a separate nonlinear component.  Whether that same term can
also be combined with the residual skip is a different question, taken up in
\S\ref{sec:residual-peeling}.

With these definitions in mind, we can state the usual shrinkability result needed to justify the
usedness definitions:

\begin{lemma}[Shrinkability of non-used Transformer axes and heads]
\label{lem:transformer-shrinkability}
The following reductions do not increase the number of counted MLP axes or attention heads.
\begin{enumerate}[label=(\roman*)]
\item Non-used MLP hidden axes may be deleted, merged, or reduced by the earlier ReLU/Softplus
shrinkability lemmas.  Any affine remainder produced by those reductions can be combined with the branch's existing
linear map and bias.
\item An attention head with \(O_h\equiv0\) on the task set may be deleted.
\item If \(P_h\) is constant on the task set, then the head output is linear in the branch input.  It may therefore be combined with the branch's existing linear map and bias.
\item Two heads with the same attention pattern may be merged when the sum of their value-output
products can be represented by one allowed head.
\end{enumerate}
Thus the branch budgets count MLP axes and attention heads whose nonlinear behavior is observable
on the task inputs, rather than merely counting physically present coordinates or heads.  
\end{lemma}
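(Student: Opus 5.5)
The plan is to prove the four reductions one at a time, and in each case write the branch function as a new branch with strictly fewer counted objects plus a remainder that is affine in the branch input. Such a remainder is absorbed into the branch's own linear map and bias, which that branch form always allows, so the count cannot go up. The two easy MLP steps come first. For item (i), the MLP branch $\mathcal M_\ell^N(u)=b^N_{\ell,{\rm out}}+\sum_j v^N_{\ell,j}\psi(g^N_{\ell,j}(u))$ is literally a one-step expansion with affine arguments $g^N_{\ell,j}$. So I will apply Lemma~\ref{lem:relu-shrinkability} when $\psi=\rho$, and the Softplus shrinkability lemma when $\psi=\sigma$, taking $u_{\ell,{\rm mlp}}^N$ as the layer coordinate and the task inputs as the task patch. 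These lemmas delete or merge a term only when it is zero, constant, uncrossed, duplicated, or opposite. The leftover they produce is $v_j z_j$, $v_j\sigma(\gamma)$, or $-v_i g_j$, which is affine in $u$ and goes into $b^N_{\ell,{\rm out}}$ together with a rank-one correction to the output map.

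For the attention items I will use the head formula $O_h(X)=P_h(X)XW_{V,h}W_{O,h}$. Item (ii) is immediate: if $O_h\equiv0$ on the task set, deleting the head does not change the branch sum there. For item (iii), if $P_h(X)\equiv P$ is constant on the task set, then $O_h(X)=PXW_{V,h}W_{O,h}$ is linear in the token matrix. It mixes tokens through the fixed matrix $P$, and the branch's affine part is meant to absorb exactly this kind of term. I will check that this interpretation of ``the branch's linear map'' is allowed, which means either permitting fixed token-mixing linear operations or appending the constant coordinate mentioned in the notation paragraph. I will also state that the absorbed term depends on the task set only through $P$.

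Item (iv) is the main obstacle. If $P_{h_1}=P_{h_2}$ on the task set, then $O_{h_1}+O_{h_2}=P_{h_1}X(W_{V,h_1}W_{O,h_1}+W_{V,h_2}W_{O,h_2})$. This is a single head with query/key $(W_{Q,h_1},W_{K,h_1})$ and value-output product $W_{V,h_1}W_{O,h_1}+W_{V,h_2}W_{O,h_2}$. That product may have rank up to $2d_h$, whereas one head can realize only rank at most $d_h$. The lemma's hypothesis, ``when the sum can be represented by one allowed head'', is exactly the condition under which we may factor the summed product as $W_V'W_O'$ with head dimension $d_h$. I will use rank factorization (for example a truncated SVD, which is exact under the hypothesis) to build $W_V',W_O'$. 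By Proposition~\ref{prop:attention-gauges}, the choice of factorization does not matter up to gauge. In each of the four cases the new branch has at most as many counted axes or heads and agrees on the task set, up to a term absorbed into the branch affine part. That gives the concluding sentence about budgets.
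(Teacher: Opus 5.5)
Your proposal is correct and follows the paper's proof essentially step for step. Item (i) reduces to the earlier ReLU/Softplus shrinkability computations, with the affine remainder (for example the $-v_-g$ term from $\psi(g)-\psi(-g)=g$) absorbed into the branch's affine part, and items (ii)--(iv) are the same direct computations, including the single-head factorization of $PX(W_{V,1}W_{O,1}+W_{V,2}W_{O,2})$ under the rank hypothesis. Your added care about whether a fixed token-mixing term $PXW_VW_O$ counts as part of ``the branch's linear map'' addresses a point the paper simply asserts, while the appeal to Proposition~\ref{prop:attention-gauges} in (iv) is unnecessary, since any factorization of the same product already gives the same head output.
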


\begin{proof}
For MLP axes, the first statement is exactly the earlier shrinkability calculation.  Zero coefficients
delete.  Constant arguments contribute constants.  Duplicate arguments merge.  For either
\(\psi=\rho\) or \(\psi=\sigma\), an opposite pair obeys
\[
 v_+\psi(g)+v_-\psi(-g)
 =(v_++v_-)\psi(g)-v_-g,
\]
because \(\psi(g)-\psi(-g)=g\).  In the adjacent branch theorem, \(g\) is affine in the common branch input, so the last term is
part of the branch's linear map and bias.

If \(O_h\equiv0\), the head contributes nothing and can be deleted.  If \(P_h=P\) is constant, then
\[
 O_h(X)=PXW_{V,h}W_{O,h},
\]
which is linear in \(X\).  Finally, if two heads have the same pattern \(P\), their
combined contribution is
\[
 PXW_{V,1}W_{O,1}+PXW_{V,2}W_{O,2}
 =PX\bigl(W_{V,1}W_{O,1}+W_{V,2}W_{O,2}\bigr).
\]
This can be represented by one head with the same pattern whenever the matrix in parentheses has
rank no larger than the head's allowed internal dimension.  These operations prove all four claims.
\end{proof}

\begin{definition}[Branch-specific weak--strong hypotheses]
\label{def:transformer-branch-ws-hypotheses}
For an MLP branch, the weak--strong hypotheses are exactly the activation-specific adjacent usedness
hypotheses from the earlier ReLU/Softplus weak--strong theorem.  For an attention branch, the weak--strong hypotheses are:
\begin{enumerate}[label=(A\arabic*)]
\item \emph{head usedness/minimality:} every counted A-side head, and every B-side head after its input
has been composed with the weak input map, is used and minimal; in particular, each has some
attention-pattern-and-output behavior on the task set that is not shared by another counted head on
the same side;
\item \emph{one-head identification:} once one A-side head and one B-side head have been separated from their
respective multihead sums and shown to have the same behavior on the task inputs, they are the same
head modulo the gauges of Proposition~\ref{prop:attention-gauges} and head permutation;
\item \emph{no splitting:} the distinguishing behavior of each source head is reproduced by one head on
the other side, not only by several heads acting together.
\end{enumerate}
The no-splitting condition is needed because a head has two linked pieces of observable behavior---its
attention pattern and its output---and several heads could in principle reproduce those pieces jointly.
This is not a new assumption for MLP branches: the earlier adjacent theorem already handles the
corresponding issue through usedness, reducedness, and the identifying nonlinear behavior of each
axis.
\end{definition}

We next give a concrete and easily checked sufficient condition for (A2).  Define the two product
matrices of a head by
\[
 \Gamma_h:=\frac{W_{Q,h}W_{K,h}^\top}{\sqrt{d_h}},
 \qquad
 C_h:=W_{V,h}W_{O,h}.
\]
Then
\[
 P_h(X)=\operatorname{softmax}(X\Gamma_hX^\top),
 \qquad
 O_h(X)=P_h(X)XC_h.
\]
Choose finitely many task inputs \(\mathcal X=\{X^1,\ldots,X^R\}\), and write \(x_i^r\) for row
\(i\) of \(X^r\).  For a matrix \(D\), set
\[
 \mathcal Q_{\mathcal X}(D)
 :=\bigl((x_i^r)^\top D(x_j^r-x_k^r)\bigr)_{r,i,j,k},
\]
and, for fixed \(\Gamma\),
\[
 \mathcal V_{\Gamma,\mathcal X}(D)
 :=\bigl(P_\Gamma(X^r)X^rD\bigr)_{r=1}^R.
\]
The first map records every logit difference that rowwise softmax can detect; the second records how
changes in the value--output product affect the head output.  A component of \(\Gamma\) or \(C\) is
retained here exactly when changing it can change the attention pattern or output on at least one of the
selected task inputs.  We say that the selected task inputs contain enough variation to determine the
head when the two maps have full column rank on these retained components.

\begin{lemma}[A concrete sufficient condition for identifying one head]
\label{lem:attention-head-regularity}
Suppose two heads have already been separated from their respective multihead sums using (A1) and
(A3), and suppose their attention patterns and outputs agree on \(\mathcal X\).  Then
\[
 \Gamma-\widetilde\Gamma\in\ker\mathcal Q_{\mathcal X},
 \qquad
 C-\widetilde C\in\ker\mathcal V_{\Gamma,\mathcal X}.
\]
Consequently, if the selected task inputs contain enough variation in the sense just defined, the two
heads have the same components of \(\Gamma\) and \(C\) that affect their behavior on those inputs.
If, in addition, no unused internal head coordinates remain---equivalently, after restriction to these
task-relevant products, the query--key and value--output factorizations use latent dimensions equal to
the ranks of the corresponding products---then the individual matrices
\(W_Q,W_K,W_V,W_O\) are related exactly by the \(Q/K\) and \(V/O\) gauges of
Proposition~\ref{prop:attention-gauges}.
\end{lemma}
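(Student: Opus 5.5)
The argument splits into the attention-pattern half and the value--output half, followed by a factorization step. In each half the comparison is first reduced to a linear condition on the difference of the product matrices.

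\emph{Pattern half.} Agreement of attention patterns on each \(X^r\in\mathcal X\) means \(\operatorname{softmax}(X^r\Gamma (X^r)^\top)=\operatorname{softmax}(X^r\widetilde\Gamma (X^r)^\top)\) row by row. I will use the elementary fact that two real vectors have the same softmax if and only if they differ by a constant vector, since softmax is injective modulo the all-ones direction. Hence for each \(r,i\) the row \((x_i^r)^\top(\Gamma-\widetilde\Gamma)(X^r)^\top\) is constant across its entries \(j\). This is equivalent to
\[
 (x_i^r)^\top(\Gamma-\widetilde\Gamma)(x_j^r-x_k^r)=0
\]
for all \(r,i,j,k\), which says exactly that \(\Gamma-\widetilde\Gamma\in\ker\mathcal Q_{\mathcal X}\).

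\emph{Output half.} Because the patterns now agree, \(P_\Gamma(X^r)=P_{\widetilde\Gamma}(X^r)\). Subtracting the two outputs gives \(P_\Gamma(X^r)X^r(C-\widetilde C)=0\) for every \(r\), which is the statement \(C-\widetilde C\in\ker\mathcal V_{\Gamma,\mathcal X}\). Under the full-column-rank hypothesis on the retained components, both kernels are trivial on those components. It follows that \(\Gamma\) and \(\widetilde\Gamma\), and likewise \(C\) and \(\widetilde C\), agree on every component that affects behavior on \(\mathcal X\).

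\emph{Factorization step.} Restrict to the task-relevant products, and assume \(W_QW_K^\top=\widetilde W_Q\widetilde W_K^\top=\sqrt{d_h}\,\Gamma\) with latent dimension equal to \(k=\rank\Gamma\). Then \(W_Q,\widetilde W_Q\) have full column rank \(k\), and their common column space is the column space of \(\Gamma\). So there is a unique \(R\in GL_k\) with \(\widetilde W_Q=W_QR\). Substituting gives \(W_QRW_K'^\top=W_QW_K^\top\), and left-cancelling the full-column-rank \(W_Q\) yields \(\widetilde W_K=W_KR^{-\top}\). The same argument applied to \(C=W_VW_O\) produces \(S\) with \(\widetilde W_V=W_VS\) and \(\widetilde W_O=S^{-1}W_O\). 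These are exactly the gauges of Proposition~\ref{prop:attention-gauges}.

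The main obstacle is the bookkeeping around ``retained components'' in the pattern half. The kernel of \(\mathcal Q_{\mathcal X}\) always contains directions such as \(D=u\mathbf 1^\top\)-type shifts, and more generally everything invisible to rowwise softmax. The conclusion therefore has to be phrased on the quotient by \(\ker\mathcal Q_{\mathcal X}\), and any unused latent coordinates must be dropped before applying the factorization step. I would first make precise that ``retained'' means the quotient by these kernels. I would then check that the latent-dimension hypothesis is applied to the projected products and not to the raw ones.
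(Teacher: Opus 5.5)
Your proposal is correct and follows the paper's proof essentially step for step. Rowwise softmax invariance under constant shifts gives $\Gamma-\widetilde\Gamma\in\ker\mathcal Q_{\mathcal X}$, agreement of the patterns reduces output equality to $C-\widetilde C\in\ker\mathcal V_{\Gamma,\mathcal X}$, and full-column-rank factors with a common column space yield the $Q/K$ and $V/O$ gauges by left cancellation.

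One minor quibble concerns your closing remark. A rank-one $D=u\mathbf 1^\top$ lies in $\ker\mathcal Q_{\mathcal X}$ only when the token coordinates sum to the same value across tokens within each input; with an appended constant coordinate $e_c$, the natural example is $u e_c^\top$. The accurate general statement is the one you also give: the kernel consists of everything invisible to rowwise softmax.
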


\begin{proof}
We prove the query--key claim, the value--output claim, and then the claim about the individual matrices.

For two row vectors \(a,b\), equality \(\operatorname{softmax}(a)=\operatorname{softmax}(b)\)
implies
\[
 e^{a_j-a_k}
 =\frac{e^{a_j}}{e^{a_k}}
 =\frac{e^{b_j}}{e^{b_k}}
 =e^{b_j-b_k}
\]
for every \(j,k\).  Hence \(a_j-a_k=b_j-b_k\), so \(a-b\) is constant along the row.  Apply this
row by row to
\[
 P_\Gamma(X^r)=P_{\widetilde\Gamma}(X^r).
\]
Writing \(D=\Gamma-\widetilde\Gamma\), we obtain
\[
 (x_i^r)^\top D(x_j^r-x_k^r)=0
\]
for all \(r,i,j,k\).  This is exactly
\(D\in\ker\mathcal Q_{\mathcal X}\).

The attention patterns now agree on every selected input.  Equality of the two head outputs therefore
gives
\[
 P_\Gamma(X^r)X^r(C-\widetilde C)=0
 \qquad(r=1,\ldots,R),
\]
which is precisely
\(C-\widetilde C\in\ker\mathcal V_{\Gamma,\mathcal X}\).  Full column rank on the retained components forces both differences to vanish there.

It remains to identify the individual matrices once unused internal coordinates have been removed.  Suppose
\[
 W_QW_K^\top=\widetilde W_Q\widetilde W_K^\top
\]
and both product representations use a latent dimension equal to the rank of this common product.  Then
\(W_Q\) and \(\widetilde W_Q\) have full column rank and the same column space, namely the column
space of the product.  Hence \(\widetilde W_Q=W_QR\) for some invertible \(R\).  Let \(L_Q\) be a
left inverse of \(W_Q\).  Substitution gives
\[
 W_QW_K^\top=W_QR\widetilde W_K^\top,
 \qquad
 W_K^\top=R\widetilde W_K^\top,
\]
and therefore \(\widetilde W_K=W_KR^{-\top}\).  The same argument applied to
\(W_VW_O=\widetilde W_V\widetilde W_O\) gives
\[
 \widetilde W_V=W_VS,
 \qquad
 \widetilde W_O=S^{-1}W_O
\]
for an invertible \(S\).  These are exactly the two attention gauges.
\end{proof}

These results now allow us to state the core branch-level Weak--Strong Equivalence result for transformers: 
\begin{theorem}[Transformer branch weak--strong equivalence]
\label{thm:transformer-branch-weak--strong}
Consider either an MLP branch or an attention branch of a Transformer block.  Suppose exact weak
alignment holds at the branch input and branch output.  For an MLP branch, assume the MLP one-step
expansions satisfy the activation-specific adjacent usedness hypotheses.  For an attention branch,
assume (A1)--(A3) of Definition~\ref{def:transformer-branch-ws-hypotheses}.  Then every used A-side MLP hidden axis or attention head has a distinct B-side match: MLP hidden
axes match as scalar axes, and attention heads match modulo attention gauge.  Consequently,
\[
 \AxisAlign_{{\rm mlp},\ell,\psi}(A,B)
 \ge
 \frac{m_{{\rm mlp},\ell}^{\psi}(\epsilon)}{d_{{\rm mlp},\ell}^A}
\]
for MLP branches, and
\[
 \AxisAlign_{{\rm attn},\ell}(A,B)
 =
 \HeadAlign_{{\rm attn},\ell}(A,B)
 \ge
 \frac{m_{{\rm attn},\ell}(\epsilon)}{H_\ell^A}
\]
for attention branches.
\end{theorem}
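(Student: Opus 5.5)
The plan is to reduce each branch to a one-step null-pair identity and then invoke the matching earlier machinery. First, mirroring Lemma~\ref{lem:adjacent-commutation-common}, I will turn exact weak alignment at the branch input and output into a commutation identity for the branch itself. Let $E_{\rm in}$ be the weak map on the branch input $u$ (the post-LN input) and $E_{\rm out}$ the weak map on the branch output. For every task input I will obtain $\mathcal M^B_\ell(E_{\rm in}u)=E_{\rm out}\mathcal M^A_\ell(u)$, and analogously for $\mathcal A_\ell$. The argument is the same chain of substitutions as in that lemma, applied to the branch map in place of $\Phi_\ell$.

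\textbf{MLP branch.} Write both sides of the identity as one-step expansions in the common variable $u$. The A side is $T_{\rm out}b^A+a_{\rm out}+\sum_j T_{\rm out}v^A_j\psi(g^A_j(u))$, with affine arguments $g^A_j$. The B side has pulled-back affine arguments $g^B_i(E_{\rm in}u)$. This is exactly identity \eqref{eq:sp-ws-expansion-identity}, or its ReLU analogue, with $g_j$ affine on the task patch. Injectivity of $T_{\rm out}$ preserves nonzero readouts and sign-noncancellation, as in the proof of Theorem~\ref{thm:sp-adjacent}. I will then apply Theorem~\ref{thm:relu-exact-adjacent} or Theorem~\ref{thm:sp-adjacent} verbatim to get an injective matching of used hidden axes, $g^B_{\pi(j)}\circ E_{\rm in}=\alpha_j g^A_j$. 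The budget bound then follows as before: a solution of loss at most $\epsilon$ has at least $m^\psi_{{\rm mlp},\ell}(\epsilon)$ used axes, and each is matched. The only point to check is that Lemma~\ref{lem:transformer-shrinkability}(i) justifies counting only used axes. Affine remainders absorbed into the branch bias and linear map do not disturb the identity, because both sides may carry arbitrary affine parts.

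\textbf{Attention branch.} Here the identity reads $\sum_h O^A_h$ transformed by $E_{\rm out}$ equals $\sum_{h'}O^B_{h'}(E_{\rm in}X)$, plus affine terms. The strategy is the head analogue of private-signature cross-matching. I will use (A1) to pick, for each used A-side head $h$, a task region on which its pair $\Sigma_h=(P_h,O_h)$ is not shared by any other A-side head. Because $E_{\rm out}$ is injective, it cannot annihilate that head's nonzero contribution on this region. Equality of the two sums then forces some B-side behavior to reproduce it. By (A3), a single B-head $h'$ carries the whole signature rather than a packet of heads. Once the two heads are separated from their sums, (A2) identifies $h$ and $h'$ modulo the gauges of Proposition~\ref{prop:attention-gauges}. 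Lemma~\ref{lem:attention-head-regularity} serves as a checkable instance of (A2). For injectivity of $h\mapsto h'$: if two A-heads mapped to the same B-head, they would share a full signature, contradicting same-side minimality in (A1). Counting heads against $m_{{\rm attn},\ell}(\epsilon)$ and dividing by $H^A_\ell$ gives the HeadAlign bound.

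\textbf{Main obstacle.} I expect the hardest step to be rigorously isolating a single head's contribution from the multihead sum. Softmax heads have no kink or pole that localizes their signature the way ReLU traces or Softplus poles do. My first approach is to read (A1) together with (A3) as a direct signature-separation hypothesis, exactly parallel to the role of pole separation in Lemma~\ref{lem:sp-crossmatch}. I would then argue only that $E_{\rm out}$-injectivity transports the private behavior across the identity, leaving all genuine separation content to the hypotheses. If that proves too weak, I would fall back on an analytic-continuation argument in the logits, since softmax is meromorphic. The aim there is to show that distinct $\Gamma_h$ give distinct pole sets, mimicking the monodromy cross-match.
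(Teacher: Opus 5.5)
Your proposal is correct and follows essentially the same route as the paper. For the MLP branch, you substitute the input weak map to get two one-step expansions in the common branch input and invoke the adjacent ReLU/Softplus weak--strong theorem; for attention, (A1) supplies a private head behavior that the injective output map carries across the identity, (A3) forces a single B-side carrier, (A2) identifies it modulo gauge, and (A1) again gives injectivity. As in the paper, the attention separation rests entirely on (A1)--(A3), so your meromorphic-softmax fallback is not needed.
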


\begin{proof}
For an MLP branch, write the adjacent weak-alignment equations as
\[
 u^B=E_{\rm in}u^A,
 \qquad
 \mathcal M^B(u^B)=E_{\rm out}\mathcal M^A(u^A).
\]
Substitute the first equality into the second.  The B-side branch becomes
\[
 b^B_{\rm out}+\sum_i v_i^B\psi(g_i^B(E_{\rm in}u^A)),
\]
while the transformed A-side branch is
\[
 E_{\rm out}b^A_{\rm out}+
 \sum_j(E_{\rm out}v_j^A)\psi(g_j^A(u^A)).
\]
These are two one-step affine--coordinatewise-nonlinear expansions of the same common input
variable \(u^A\).  The earlier adjacent weak--strong theorem therefore gives an injective match of
every used A-side MLP hidden axis to a pulled-back B-side axis, with the activation's allowed scalar
symmetry.  Counting the task-required used axes proves the MLP bound.

For attention, first compose the B-side branch input with the weak input map so that both branches
are functions of the same task variable, and apply the weak output map to the A-side branch output so
that the two outputs are written in the same coordinates.  We then have equality of two multihead
branch functions.  By (A1), each counted A-side head has some attention-pattern-and-output behavior
not shared by another counted A-side head, so that behavior must also appear on the B side.  By (A3),
one B-side head reproduces the complete behavior, rather than several heads reproducing different
parts jointly.  Condition (A2) then identifies that head modulo the exact attention gauges and head
permutation.  Lemma~\ref{lem:attention-head-regularity} gives a concrete sufficient rank test for this
step.

The assignment is injective.  If two distinct A-side heads were assigned to the same B-side head,
condition (A2) would identify both A-side heads with that same B-side head.  Their attention patterns
and outputs would therefore agree, contrary to the distinguishing behavior required by (A1).  Thus
every task-required A-side head has a distinct B-side match.  Counting those heads gives the displayed
attention bound.
\end{proof}

As elsewhere in the appendix, we next ask whether (A1)--(A3) are generic.  
The corresponding MLP conditions are already generic by
\S\ref{sec:genericity}.  For attention, we use the same witness principle used there. Here a \emph{witness} is one parameter choice at which the relevant
nonvanishing or rank condition holds.  For (A3), the split-rank witness
additionally says that, even after the competitor heads are allowed to vary,
a proposed split still imposes at least one independent condition on the
source head.

\begin{proposition}[Genericity of the attention hypotheses]
\label{prop:generic-transformer-conditions}
Fix a finite-dimensional real-analytic family of attention-branch
comparisons, together with the finite
task measurements used to test (A1)--(A3).  If the corresponding witnesses
exist, including the split-rank witness of
Proposition~\ref{prop:generic-depth-warp-split} for every proposed split,
then (A1)--(A3) hold generically: the set on which any of them fails is
lower-dimensional, and hence has empty interior and measure zero.
\end{proposition}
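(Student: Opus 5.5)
The plan is to treat each of (A1)--(A3) separately and show that its failure set is contained in a finite (or countable) union of proper real-analytic subsets of the family's parameter chart. The union of these sets will then have empty interior and measure zero. The common mechanism is the witness principle already used in \S\ref{sec:genericity}.

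Fix the finite task measurements $\mathcal X=\{X^1,\ldots,X^R\}$. On this finite set, the head quantities
\[
 P_h(X^r)=\operatorname{softmax}(X^r\Gamma_h X^{r\top}),
 \qquad
 O_h(X^r)=P_h(X^r)X^rC_h
\]
are real-analytic in the parameters, since softmax is analytic. So is the composition with the weak input map.

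For (A1), I will write each failure mode as the vanishing of a nonnegative analytic certificate, following Proposition~\ref{prop:generic-sp-min}. There are four such modes.
\begin{enumerate}[label=(\roman*)]
\item Zero output readout: $\sum_r\|O_h(X^r)\|^2=0$.
\item Non-varying pattern and output: the sum of squared differences of $\Sigma_h$ across the inputs vanishes.
\item Two counted heads sharing their behavior on the distinguishing inputs: $\sum_r\|\Sigma_h(X^r)-\Sigma_{h'}(X^r)\|^2=0$.
\item Mergeability of a head with the linear part or with another head, as in Lemma~\ref{lem:transformer-shrinkability}: this amounts to constancy of $P_h$ on $\mathcal X$ or to a rank condition on a sum of products $C_h$. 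Both can be expressed through vanishing of analytic functions or minors.
\end{enumerate}
Each certificate is nonzero at the assumed witness. On a connected analytic chart it is therefore not identically zero, so its zero set is proper.

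For (A2), I will use Lemma~\ref{lem:attention-head-regularity}. One-head identification follows once $\mathcal Q_{\mathcal X}$ and $\mathcal V_{\Gamma,\mathcal X}$ have full column rank on the retained components and the factorizations are rank-tight. The map $\mathcal Q_{\mathcal X}$ depends analytically on the inputs, and $\mathcal V_{\Gamma,\mathcal X}$ depends analytically on $\Gamma$. Full column rank is therefore the nonvanishing of a sum of squared maximal minors, which is again analytic. Rank-tightness of the $W_QW_K^\top$ and $W_VW_O$ factorizations is likewise a minor condition. A witness point makes these functions not identically zero, exactly as in Proposition~\ref{prop:generic-jacobian}.

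For (A3), I will copy the argument of Proposition~\ref{prop:generic-depth-warp-split} with heads in place of axes. For each proposed split $\tau$, sending the distinguishing pieces (pattern on some inputs, output on others) of one source head to $q\ge2$ competitor heads, I will form the incidence map $H_\tau(\xi,\eta)$. The split-rank witness gives $\operatorname{rank}DH_\tau\ge p_B^\tau+1$ on each stratum. Rank--nullity then bounds the stratum dimension by $p_A-1$, and Lipschitz projection to the source chart preserves this bound. There are finitely many heads, pieces, and assignments, so the union stays lower-dimensional.

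The main obstacle is making the (A1) and (A3) certificates genuinely analytic and finite. The "not mergeable" clause involves the allowed head dimension, and "some task region not shared" is a priori an open condition over the whole task set. I would handle both by working only on the fixed finite measurement set $\mathcal X$ and encoding rank bounds via minors. I would also stratify by the fixed combinatorial type of the proposed merge or split, so that only finitely many analytic conditions appear. The remaining step, that properness holds on every connected component, is delegated to the witness hypothesis.
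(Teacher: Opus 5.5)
Your proposal is correct and follows the paper's proof essentially step for step. You handle (A1) with nonnegative analytic certificates on the finite measurement set, made proper by a witness, and (A2) with vanishing of the maximal minors of $\mathcal Q_{\mathcal X}$, $\mathcal V_{\Gamma,\mathcal X}$ and of the reduced query--key and value--output factors. For (A3) you use the split-rank witness $\operatorname{rank} DH_\tau\ge p_B^\tau+1$, rank--nullity, and Lipschitz projection to the source chart, exactly as the paper does. Your finite-union-over-combinatorial-types treatment of the private-behavior and non-mergeability clauses is slightly more careful than the paper, which simply asserts these are finitely many analytic equations. That union should also range over assignments $r\mapsto h'(r)$ of possibly different competitors to different inputs, not only the single competitor $h'$ in your certificate (iii). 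Each such set is still analytic and made proper by the same witness point.
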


\begin{proof}
For (A1), fix the finite task regions or evaluations used in the comparison.  A zero head output, loss of variation, or equality between two heads that were supposed to differ is again expressed by finitely many analytic equations.  The assumed witness shows that these equations do not vanish identically.

For (A2), failure of either map \(\mathcal Q_{\mathcal X}\) or \(\mathcal V_{\Gamma,\mathcal X}\) to have full rank is detected by the
vanishing of all of its largest relevant minors, meaning the square subdeterminants that certify the desired rank.  Failure of the reduced query--key or value--output factors to have full rank is likewise
detected by vanishing minors.  Whenever the required rank occurs at one point, these determinant equations define a
proper lower-dimensional subset.

It remains to handle splitting (A3).  Fix one proposed way, denoted by \(\tau\), for several B-side heads to
reproduce one source head.  Let \(\xi\) be local coordinates for the source head, of dimension \(p\),
and let \(\eta\) collect the parameters of the B-side heads used in the proposed split, of dimension
\(n_\tau\).  The requirement that those heads reproduce the selected attention-pattern-and-output
behavior of the source head can be written as finitely many equations
\[
 H_\tau(\xi,\eta)=0.
\]
Consider any connected smooth piece \(S_\tau\) of the solution set of these equations.  By the
split-rank witness, that piece contains a point where
\[
 \rank DH_\tau\ge n_\tau+1.
\]
At such a point its tangent space lies in \(\ker DH_\tau\), so rank--nullity gives
\[
 \dim S_\tau
 \le(p+n_\tau)-(n_\tau+1)=p-1.
\]
Forgetting the B-side parameters and retaining only \(\xi\) cannot increase Hausdorff dimension.  At
fixed width there are only finitely many proposed splits, and the solution sets can be decomposed into
smooth pieces with only finitely many meeting any small neighborhood.  Their union therefore remains
lower-dimensional and measure zero.  Combining the three kinds of failure proves the proposition.
\end{proof}

This genericity result does not settle the separate question of where learning dynamics place trained networks, but it shows that the branch conditions fail only for specially constrained parameter choices. 

\subsubsection{Why residual coordinates need not be privileged}
\label{sec:transformer-residual-no-axis}
The branch theorem identifies where strong objects live. It does not imply that the native coordinates of the residual stream are likewise privileged. None of those arguments identifies the native coordinates of the residual stream. Actually, this is not merely a limitation of our proof technique above: an exact change of residual basis can preserve the represented network while mixing all native residual coordinates.

\begin{proposition}[Residual weak alignment does not imply residual-coordinate alignment]
\label{prop:residual-no-native-axis}
Consider an idealized class of residual-stream networks closed under invertible residual-basis changes.
Thus, for every invertible \(R\in GL_d(\R)\) and every network \(A\) with residual states
\(x_\ell^A\), there is an equivalent network \(B\) with residual states
\[
 x_\ell^B=Rx_\ell^A
\]
at every layer, obtained by conjugating all residual-stream linear maps by \(R\).  Suppose the
coordinate functions of \(x_\ell^A\) are linearly independent on the task set.  For a generic dense
\(R\), the two networks are exactly weakly aligned at layer \(\ell\), but have no native
residual-coordinate strong alignment at that layer.
\end{proposition}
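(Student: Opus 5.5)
The proof splits into two parts: weak alignment holds trivially by construction, and strong alignment fails for generic \(R\) by an exclusion argument.

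\textbf{Weak alignment.} By the closure hypothesis, \(B\) is the network obtained by conjugating all residual-stream linear maps by \(R\). Then \(x_\ell^B=Rx_\ell^A\) on \(\Omega\). The map \(E_\ell(x)=Rx\) is an injective affine map, since \(R\) is invertible, so \(A\) and \(B\) are exactly weakly equivalent at layer \(\ell\). I would also record that the represented network function is unchanged: every read from the residual stream is precomposed with \(R^{-1}\), and every write into it is postcomposed with \(R\). LayerNorm needs a caveat. It is not equivariant under arbitrary \(R\), so the idealized class is exactly what makes this step legitimate, and it holds here by hypothesis.

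\textbf{No strong alignment.} Strong alignment of a native coordinate would require, for some comparison map \(E_\ell\) and indices \(j,i\), that \((E_\ell x^A)_i=\alpha x^A_j\) on \(\Omega\) with \(\alpha\neq0\). The first step is to pin down \(E_\ell\). Because the coordinate functions of \(x_\ell^A\) are linearly independent on the task set, any affine \(E\) with \(E x^A=Rx^A\) on \(\Omega\) must equal \(R\) as a linear map. If constants are in the span, I would work modulo constants, or note that independence together with a nonconstant span forces the translation part to vanish. So \(E_\ell=R\), and the condition becomes
\[
(Rx^A)_i=\textstyle\sum_k R_{ik}x^A_k=\alpha x^A_j \quad\text{on }\Omega .
\]
Linear independence then forces \(R_{ik}=0\) for all \(k\neq j\): row \(i\) of \(R\) has a single nonzero entry. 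A dense \(R\), meaning one with all entries nonzero and \(d\ge2\), has no such row. Hence no axis matches, and \(\AxisAlign_\ell(A,B)=0\).

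\textbf{Genericity.} The exceptional set is contained in the union over \(i\) and \(k\) of the hyperplanes \(\{R_{ik}=0\}\) in \(GL_d(\R)\). This is a proper algebraic set of measure zero, so generic \(R\) are dense in the required sense. The same reasoning also rules out matching under any other comparison map, because \(E_\ell\) was forced to equal \(R\).

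The main obstacle is the rigidity step \(E_\ell=R\), in particular handling the affine offset and the meaning of ``linearly independent on the task set'' when constants are involved. I would state independence of \(\{1,x^A_1,\dots,x^A_d\}\), or assume centered coordinates. Everything else is linear algebra.
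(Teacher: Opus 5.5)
Your proof is correct and follows the paper's argument: take $E_\ell=R$ for weak alignment, then note that a strongly aligned pair forces $r_i^\top x_\ell^A=\alpha x_{\ell,j}^A$ on the task set, so linear independence gives $r_i^\top=\alpha e_j^\top$, which a generic dense $R$ excludes. The rigidity step $E_\ell=R$ that you flag as the main obstacle is not needed, and your remark that a nonconstant span forces the translation to vanish is not quite right when constants lie in the span: any weak comparison map satisfies $E_\ell x_\ell^A=x_\ell^B=Rx_\ell^A$ on the task set, and the strong-alignment condition only involves values there, so plain linear independence of the coordinates suffices.
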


\begin{proof}
The equation \(x_\ell^B=Rx_\ell^A\) is exact weak alignment with comparison map \(E_\ell=R\).  If
a B-side residual coordinate were strongly aligned with an A-side residual coordinate, then for some
row \(r_i^\top\) of \(R\), some coordinate \(j\), and some nonzero scalar \(\alpha\),
\[
 r_i^\top x_\ell^A(\omega)=\alpha x_{\ell,j}^A(\omega)
 \qquad(\omega\in\Omega).
\]
Linear independence of the residual-coordinate functions forces
\[
 r_i^\top=\alpha e_j^\top.
\]
A generic dense invertible matrix has no row proportional to a coordinate vector.  Hence exact weak
alignment can hold with no native residual-coordinate strong alignment.
\end{proof}

Thus the appropriate residual-stream conclusion is weak affine alignment, not native-coordinate alignment.  We now explain how such a weak map can nevertheless be recovered backward through a residual block.

\subsubsection{Residual blocks and the peeling principle}
\label{sec:residual-peeling}
Turning our attention to zippering, we now ask how weak alignment can be propagated one step
backward through a residual sublayer.  The earlier branch results identify the nonlinear units or heads
inside a branch.  A residual sublayer adds an affine skip path, and that skip provides the route for
recovering the sublayer input once the branch contribution has been removed.  We call this procedure
\emph{residual peeling}.  This subsection proves the result for one abstract residual sublayer; the next
subsection applies it first to the MLP sublayer and then to the attention sublayer of each Transformer
block.

For one network $N\in\{A,B\}$, write a residual sublayer as
\begin{equation}
 \mathcal F^N(z)=S^Nz+c^N+F^N(z),
 \label{eq:residual-block-general-a8}
\end{equation}
where $S^Nz+c^N$ is the affine skip and $F^N$ is the nonlinear branch.  Suppose an injective affine
map $E_+(r)=T_+r+a_+$ already aligns the two sublayer outputs:
\begin{equation}
 \mathcal F^B(z^B(\omega))
 =E_+\mathcal F^A(z^A(\omega)).
 \label{eq:residual-output-comparison}
\end{equation}
The goal of one backward zippering step is to construct an injective affine map $E_-$ satisfying
$z^B=E_-z^A$ on the task set.  The skip suggests how to do this: identify which branch terms match,
subtract those terms from the output equality, and then solve the remaining skip equation for $z^B$.

The point requiring care is the subtraction step.  The branch-identification results may match an MLP
argument with the same or the opposite orientation.  Same-orientation terms cancel directly.  An
opposite-orientation match need not: both ReLU and Softplus satisfy
$\psi(t)-\psi(-t)=t$, so subtracting the two matched nonlinear terms can leave a term proportional to
the branch argument.  In an ordinary residual block, where the argument is affine in the residual input,
$g(z)=a^\top z+b$, this causes no new nonlinear difficulty:
\begin{equation}
 S z+c+v\psi(g(z))
 =\bigl(S+va^\top\bigr)z+(c+vb)+v\psi(-g(z)).
 \label{eq:ordinary-residual-affine-transfer}
\end{equation}
The remaining term $vg(z)$ is affine in $z$, so it merely changes the affine skip.  After that change,
one can still solve an affine equation for the input, provided the resulting map is injective.

A pre-LayerNorm branch is different.  Its scalar argument has the form
$g(\operatorname{LN}(z))$, which is generally not affine in $z$.  An opposite-orientation match can
therefore leave a function of $\operatorname{LN}(z)$ that cannot be included in an affine map of the
residual state.  The following example shows that this can genuinely prevent the aligned outputs from
determining affinely related inputs.

\medskip
\noindent\textbf{Example (LayerNorm can prevent direct recovery of the residual input).}
Let $\psi\in\{\rho,\sigma\}$, so that $\psi(t)-\psi(-t)=t$.  Use two residual features and standard
LayerNorm with $\epsilon>0$, unit gain, and zero learned bias.  On the task interval $-1<t<1$, let
\[
 y(t)=(t,-t),
 \qquad
 \operatorname{LN}(y(t))=\frac{(t,-t)}{\sqrt{t^2+\epsilon}}.
\]
Define
\[
 g(y):=e_1^\top\operatorname{LN}(y),
 \qquad
 f(t):=g(y(t))=\frac{t}{\sqrt{t^2+\epsilon}},
\]
and choose a nonzero readout vector in the all-ones direction, which LayerNorm does not detect:
\[
 v=c(1,1),\qquad c\ne0.
\]
The A-side residual sublayer is
\[
 F_A(y)=y+v\psi(g(y)).
\]
Now change the B-side unnormalized residual input by the input-dependent shift
\[
 \widetilde y(t):=y(t)+vg(y(t))=y(t)+vf(t).
\]
Because LayerNorm is unchanged by adding a scalar multiple of the all-ones vector,
$\operatorname{LN}(\widetilde y(t))=\operatorname{LN}(y(t))$.  Give the B-side unit the opposite
argument
\[
 \widetilde g(z):=-e_1^\top\operatorname{LN}(z).
\]
Then $\widetilde g(\widetilde y(t))=-g(y(t))$, and
\[
 \begin{aligned}
 F_B(\widetilde y(t))
 &=\widetilde y(t)+v\psi(\widetilde g(\widetilde y(t)))\\
 &=y(t)+vg(y(t))+v\psi(-g(y(t)))\\
 &=y(t)+v\psi(g(y(t)))\\
 &=F_A(y(t)).
 \end{aligned}
\]
Nevertheless, no affine map $E$ satisfies $\widetilde y(t)=E(y(t))$ on the interval.  Every affine
function of $y(t)=t(1,-1)$ is affine in $t$, whereas the all-ones component of $\widetilde y(t)$ is
$cf(t)$ and
\[
 f''(t)=-\frac{3\epsilon t}{(t^2+\epsilon)^{5/2}}
\]
is not identically zero.  Thus the two residual-sublayer outputs agree exactly even though their
unnormalized inputs are not affinely related.
\phantomsection\label{prop:layernorm-skip-transfer}
\medskip

The example shows exactly what must be checked in addition to branch identification.  The earlier
results can correctly pair the two nonlinear units, but after their contributions are compared, the term
$v(g\circ\operatorname{LN})$ remains.  A change in the residual input that LayerNorm does not detect
can cancel this term through the skip.  Consequently, output alignment determines an affine relation
between the inputs only when every term remaining after the branch comparison is itself affine in the
unnormalized residual state.

We use the following terminology for that requirement.  Relative to a residual-state family
$z:\Omega\to\mathbb R^d$, a term $r$ is \emph{affine in the unnormalized residual state} if
\begin{equation}
 r(\omega)=Az(\omega)+c
 \qquad(\omega\in\Omega)
 \label{eq:raw-residual-affine}
\end{equation}
for a fixed matrix $A$ and vector $c$.
\phantomsection\label{def:raw-residual-affine}
Notice that being affine in the branch input $X=\operatorname{LN}(z)$ is not enough: the weak map we
seek must be affine in $z$ itself.

For the peeling calculation, we now write the same residual sublayer in a form that keeps this point
explicit.  Apply the shrinkability reductions of Lemma~\ref{lem:transformer-shrinkability} to delete or
merge non-used branch terms.  If a term produced by those reductions has the form
\eqref{eq:raw-residual-affine}, combine it with the skip.  If it is affine only in
$\operatorname{LN}(z)$, leave it in the branch.  This is merely a way of organizing the same represented
function; it ensures that the affine part contains only terms that really are affine in the residual state.
Thus write
\begin{equation}
 \mathcal F^N(z^N)
 =S^Nz^N+c^N+\mathcal B^N(z^N),
 \label{eq:reduced-residual-sublayer}
\end{equation}
where $S^Nz^N+c^N$ contains the skip together with any branch terms already shown to be affine in
$z^N$, and $\mathcal B^N$ contains all remaining branch contributions.  For an untouched identity
skip, $S^N=I$.

We can now state precisely what remains after the branch-identification step.  Let the relevant MLP or
attention result provide a matching between the A-side and B-side branch objects.  Reindex the B-side
objects according to that matching.  For attention, Proposition~\ref{prop:attention-gauges} also allows
us to replace a matched head by a gauge-equivalent parameterization without changing the function it
computes.  These notational choices let us compare each matched pair directly.  Subtract the matched
branch contributions from \eqref{eq:residual-output-comparison}, placing any difference that does not
cancel into a single term $R(\omega)$.  The remaining equality is
\begin{equation}
 S^Bz^B+c^B
 =T_+S^Az^A+T_+c^A+a_+ +R(\omega).
 \label{eq:post-branch-cancellation}
\end{equation}
If $R$ is affine in $z^A$, this is an affine equation for $z^B$.  If $R$ is not affine in $z^A$, the
LayerNorm example shows that no affine input comparison need follow.

There is one further requirement.  Solving the affine equation must produce an \emph{injective} map,
because weak alignment is defined using an injective affine comparison.  This gives the following two
conditions.

\begin{definition}[Conditions for peeling one residual sublayer]
\label{def:transformer-peelable}
Assume the current branch in \eqref{eq:residual-output-comparison} has been identified by the relevant
earlier MLP or attention result, and let $R$ be the term in
\eqref{eq:post-branch-cancellation}.  The residual sublayer can be peeled when:
\begin{enumerate}[label=(\roman*)]
\item $R$ is affine in the unnormalized A-side residual state,
\begin{equation}
 R(\omega)=Lz^A(\omega)+d
 \label{eq:raw-affine-transfer}
\end{equation}
for fixed $L,d$; and
\item $S^B$ is invertible and $T_+S^A+L$ is injective.
\end{enumerate}
The first condition ensures that branch subtraction leaves an affine equation in the residual states.
The second ensures that solving this equation gives a valid injective weak comparison map.
\end{definition}

\begin{lemma}[Peeling one residual sublayer]
\label{lem:transformer-residual-peel}
\label{lem:transformer-mlp-transfer-remainder}
\label{lem:relu-layernorm-trace-recovery}
Suppose the current branch satisfies the relevant earlier identification hypotheses---Softplus
minimality plus MLR, the corresponding ReLU private-zero-set conditions, or attention conditions
(A1)--(A3)---and suppose the two conditions of
Definition~\ref{def:transformer-peelable} hold.  Then there is an injective affine map $E_-$ such that
\[
 z^B=E_-z^A
\]
on the task set.  Every counted MLP axis or attention head identified in the branch comparison is
matched as well.
\end{lemma}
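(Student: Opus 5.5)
The argument has three stages: identify the branch terms, peel them off to get an affine equation, and solve that equation with an injective operator.

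\emph{Stage 1: put both branches on a common footing.} Start from the output comparison \eqref{eq:residual-output-comparison} written in the reduced form \eqref{eq:reduced-residual-sublayer}. The two sides are
\[
S^Bz^B+c^B+\mathcal B^B(z^B)
\qquad\text{and}\qquad
T_+S^Az^A+T_+c^A+a_++T_+\mathcal B^A(z^A).
\]
For MLP branches, each side is a one-step expansion in scalar arguments that are functions on $\Omega$. The transformed A-side expansion is $E_+$ applied to the A branch, as in Definition~\ref{def:one-step-expansion}, so injectivity of $T_+$ preserves usedness, reducedness and sign-noncancellation, exactly as in the proof of Theorem~\ref{thm:sp-adjacent}. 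The skip terms are affine in the residual states, not in the branch arguments, so they cannot be treated as biases. To handle this, use the private-signature lemmas directly. These are Lemma~\ref{lem:sp-crossmatch} (monodromy around a private pole patch) and Lemma~\ref{lem:relu-crossmatch} (derivative jump across a private fresh facet). In each case the signature sees only the nonlinear terms, because $S^Nz^N+c^N$ is holomorphic, respectively affine, through the patch, and so contributes zero monodromy or zero jump.

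The cross-match together with MLR/MFR then gives what Theorems~\ref{thm:sp-onestep} and \ref{thm:relu-onestep} give up to the orientation step: a bijection $\pi$, signs $\eps_j$, positive scales in the ReLU case, and matched coefficients. For attention heads, invoke (A1)--(A3) and Lemma~\ref{lem:attention-head-regularity} in the same way as in Theorem~\ref{thm:transformer-branch-weak--strong}. After that, replace each B-side head by its gauge-equivalent representative using Proposition~\ref{prop:attention-gauges}. This yields the matching claim of the lemma.

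\emph{Stage 2: subtract the matched terms.} Subtract all matched pairs. Positively oriented matched MLP terms and gauge-matched heads cancel exactly. Each oppositely oriented MLP pair leaves $v_jg_j$ through $\psi(t)-\psi(-t)=t$. All such leftovers, together with any unmatched residue (which minimality shows is absent), are collected into the single term $R(\omega)$, giving \eqref{eq:post-branch-cancellation}.

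\emph{Stage 3: solve for $z^B$.} Peelability condition (i) gives $R=Lz^A+d$. Condition (ii) lets us invert $S^B$:
\[
z^B=(S^B)^{-1}\bigl[(T_+S^A+L)z^A+T_+c^A+a_+ +d-c^B\bigr]=:E_-z^A.
\]
The linear part of $E_-$ is the product of an invertible matrix and an injective one, so $E_-$ is injective.

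The main obstacle is Stage 1. The lemma does not supply the sign-noncancellation conclusion that Theorem~\ref{thm:sp-onestep} uses to remove negative orientations, since the skip can absorb the affine leftover. The key step is therefore to use only the pre-orientation part of the identifiability proofs: the cross-match, the MLR/MFR sign-class identification, and the coefficient matching from monodromy or jump size. These steps are insensitive to the added skip terms, and the orientation residue is then handed to condition (i) instead of being ruled out. I would check this carefully in the ReLU case. There, the refinement of the inherited complex must also account for skip terms that are CPWA in $x$. This is harmless, since those terms are common to both one-sided derivatives.
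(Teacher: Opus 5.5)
Your proposal is correct and follows the paper's proof essentially step for step. Like the paper, you first identify the branch objects inside the full residual-sublayer equality using the private monodromy or derivative-jump signatures, to which the skip contributes nothing. You then cancel matched terms so that only the opposite-orientation residue $R_{S_-}=\sum_{j\in S_-}T_+v_j^Ag_j$ survives. You hand that residue to condition (i) instead of to sign-noncancellation, and solve the skip equation using condition (ii).

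The one minor difference is in the pre-LayerNorm ReLU case. There the paper runs the jump argument across a smooth piece of $\{g_j=0\}$ with $dg_j(p)\neq0$, because the post-LayerNorm arguments are not piecewise affine. It does not invoke the fresh-facet and inherited-complex machinery of Lemma~\ref{lem:relu-crossmatch} literally, but the mechanism is the same.
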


\begin{proof}
We first identify the branch terms inside the full residual-sublayer equality.  For Softplus, the
private-pole and MLR argument applies to the composed scalar functions.  For attention, (A1)--(A3)
separate each head from the multihead sum and identify it modulo gauge.  For ReLU, let $g_j$ be one
counted scalar function after LayerNorm and choose a point $p$ on a distinctive smooth piece of
$\{g_j=0\}$, away from the other counted zero sets, with $dg_j(p)\ne0$.  Across this piece,
\[
 D_+\bigl(v_j\rho(g_j)\bigr)-D_-\bigl(v_j\rho(g_j)\bigr)
 =v_j\,dg_j(p)\ne0.
\]
The skip and the other current terms are smooth there, so an equal realization on the other side must
have a matching zero-set piece.  The same coherence and regularity conditions used earlier collect the
matched pieces onto one scalar function and determine it up to sign and positive scale; comparing the
derivative jumps determines the corresponding readout coefficient.  Applying an injective affine map
to the output cannot erase the nonzero jump.

Let $\pi$ denote the resulting matching, and reindex the B-side terms by $\pi$.  In the attention case,
choose gauge-equivalent representatives for the matched heads; Proposition~\ref{prop:attention-gauges}
shows that this does not change their outputs.  Matched attention-head outputs then cancel directly.

For an MLP branch, positively oriented matches also cancel directly.  To compute what remains from
negative orientations, write the output comparison as
\begin{equation}
 S^Bz^B+c^B+\sum_i v_i^B\psi(h_i)
 =T_+S^Az^A+T_+c^A+a_+
  +\sum_jT_+v_j^A\psi(g_j).
 \label{eq:residual-mlp-comparison}
\end{equation}
In the Softplus case,
$h_{\pi(j)}=\varepsilon_jg_j$ and $v_{\pi(j)}^B=T_+v_j^A$; in the ReLU case,
$h_{\pi(j)}=\varepsilon_j\alpha_jg_j$ and
$\alpha_jv_{\pi(j)}^B=T_+v_j^A$, with $\alpha_j>0$.  Let
$S_-:=\{j:\varepsilon_j=-1\}$.  For $j\in S_-$, the identity
$\psi(-g)-\psi(g)=-g$ gives
\[
 v_{\pi(j)}^B\psi(h_{\pi(j)})-T_+v_j^A\psi(g_j)
 =-T_+v_j^Ag_j,
\]
where ReLU positive homogeneity absorbs $\alpha_j$.  Summing over the matched pairs gives
\begin{equation}
 S^Bz^B+c^B
 =T_+S^Az^A+T_+c^A+a_+ +R_{S_-},
 \qquad
 R_{S_-}:=\sum_{j\in S_-}T_+v_j^Ag_j.
 \label{eq:mlp-transfer-remainder}
\end{equation}
Thus in the MLP case the term $R$ from \eqref{eq:post-branch-cancellation} is explicitly
$R_{S_-}$; in the attention case it is whatever remains after the matched head outputs have been
subtracted.

By condition (i), this remaining term has the form $R=Lz^A+d$.  Substitution into
\eqref{eq:post-branch-cancellation} gives
\[
 S^Bz^B+c^B
 =T_+S^Az^A+T_+c^A+a_++Lz^A+d.
\]
Since $S^B$ is invertible, solving for $z^B$ yields
\begin{equation}
 E_-(z)
 =(S^B)^{-1}(T_+S^A+L)z
 +(S^B)^{-1}(T_+c^A+a_++d-c^B).
 \label{eq:peeled-map}
\end{equation}
Condition (ii) makes the linear part of $E_-$ injective, so $E_-$ is a valid weak comparison map.
The branch-object matching was obtained before solving this affine equation and therefore remains in
force.
\end{proof}

\subsubsection{Peeling and Zippering for Transformers}
\label{sec:transformer-residual-zippering}
If we are comparing corresponding blocks in two transformers $A$ and $B$, every compared block has the two-step form
\begin{equation}
\begin{aligned}
 y_\ell^N
 &=x_\ell^N+
   \mathcal A_\ell^N\!\left(
     \operatorname{LN}_{\ell,{\rm attn}}^N(x_\ell^N)
   \right),\\
 x_{\ell+1}^N
 &=y_\ell^N+
   \mathcal M_\ell^N\!\left(
     \operatorname{LN}_{\ell,{\rm mlp}}^N(y_\ell^N)
   \right),
 \qquad N\in\{A,B\}.
\end{aligned}
\label{eq:pre-normalization-transformer-block}
\end{equation}
The forward computation applies attention first and the MLP second: thus, the two-step backward peeling argument we need to do proceeds in the reverse order.  Starting from the weak map $E_{\ell+1}$ at the right side of
Fig.~\ref{fig:transformer-architecture-ws}, we first peel the MLP sublayer and obtain the intermediate
map $E_{\ell+\frac12}$ at $y_\ell$; we then peel the attention sublayer and obtain $E_\ell$ at
$x_\ell$.  These are the three dashed residual-stream maps shown in the figure.  Formally:

\begin{lemma}[Two-step backward peeling of a Transformer block]
\label{lem:transformer-block-identifiability}
Suppose
\[
 x_{\ell+1}^B=E_{\ell+1}x_{\ell+1}^A
\]
on the task set.  Assume that the MLP sublayer satisfies its activation-specific branch-identification
hypotheses and the two peeling conditions of Definition~\ref{def:transformer-peelable}, and that the
attention sublayer satisfies (A1)--(A3) and the same two peeling conditions.  Then there are injective
affine maps $E_{\ell+\frac12}$ and $E_\ell$ such that
\[
 y_\ell^B=E_{\ell+\frac12}y_\ell^A,
 \qquad
 x_\ell^B=E_\ell x_\ell^A.
\]
The first step also matches all counted MLP hidden axes, and the second matches all counted attention
heads modulo gauge.
\end{lemma}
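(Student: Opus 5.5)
The plan is to apply Lemma~\ref{lem:transformer-residual-peel} twice, in the reverse of the forward order. Each application works on one residual sublayer written in the form \eqref{eq:residual-block-general-a8}. The first step takes the MLP sublayer as $\mathcal F^N(z)=z+\mathcal M_\ell^N(\operatorname{LN}_{\ell,{\rm mlp}}^N(z))$ with input $z^N=y_\ell^N$. The second step takes the attention sublayer as $\mathcal F^N(z)=z+\mathcal A_\ell^N(\operatorname{LN}_{\ell,{\rm attn}}^N(z))$ with input $z^N=x_\ell^N$. In both cases the identity skip $S^N=I$, possibly augmented by the branch terms already shown to be affine in the unnormalized residual state as in \eqref{eq:reduced-residual-sublayer}, is the affine part. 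Everything else is placed in $\mathcal B^N$.

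\emph{Step 1 (MLP).} By \eqref{eq:pre-normalization-transformer-block}, the hypothesis $x_{\ell+1}^B=E_{\ell+1}x_{\ell+1}^A$ is exactly the output comparison \eqref{eq:residual-output-comparison} for the MLP sublayer, with $E_+=E_{\ell+1}$. The branch-identification hypotheses are the activation-specific ones (Softplus minimality plus MLR, or the ReLU private-zero-set conditions), applied to the composed scalar arguments $g_{\ell,j}^N\circ\operatorname{LN}_{\ell,{\rm mlp}}^N$. The two peeling conditions of Definition~\ref{def:transformer-peelable} are assumed. Lemma~\ref{lem:transformer-residual-peel} therefore produces an injective affine $E_-=:E_{\ell+\frac12}$ with $y_\ell^B=E_{\ell+\frac12}y_\ell^A$ on the task set. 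It also matches every counted MLP hidden axis; by \eqref{eq:mlp-transfer-remainder} the remainder is $R_{S_-}$, which condition (i) makes affine in $y_\ell^A$.

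\emph{Step 2 (attention).} Again by \eqref{eq:pre-normalization-transformer-block}, $y_\ell^N$ is the output of the attention sublayer, so the identity just obtained is the output comparison for that sublayer, with $E_+=E_{\ell+\frac12}$. This map is injective affine, which is the only property of $E_+$ that Lemma~\ref{lem:transformer-residual-peel} uses. Conditions (A1)--(A3), applied after composing the B-side input with the weak input map, separate and identify each head modulo gauge. Proposition~\ref{prop:attention-gauges} lets me choose gauge-equivalent representatives so that matched head outputs cancel exactly. The peeling conditions then give an injective affine $E_\ell$ with $x_\ell^B=E_\ell x_\ell^A$, together with a head matching modulo gauge.

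\emph{Main obstacle.} The delicate point is Step 2: the lemma must be invoked with an output map that was itself constructed in Step 1, not one given a priori. I need to check that the attention hypotheses (A1)--(A3) and the peeling conditions are stated relative to this constructed $E_{\ell+\frac12}$. The branch inputs after LayerNorm are expressed in the common task variable, so I expect this to cause no difficulty. I would also note explicitly that the LayerNorm obstruction from the Example is exactly what condition (i) excludes, at each of the two steps separately.
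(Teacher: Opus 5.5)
Your proposal is correct and follows the paper's proof: substitute the block formulas \eqref{eq:pre-normalization-transformer-block} to obtain two residual-sublayer comparisons, then apply the single-sublayer peeling lemma first with $E_{\ell+1}$ (MLP) and then with the constructed injective map $E_{\ell+\frac12}$ (attention). One small correction: in Step 2 no weak input map exists yet (that map is $E_\ell$, the one being constructed), so (A1)--(A3) should be applied to both branches as functions of the common task input $\omega$, as you note at the end, rather than ``after composing the B-side input with the weak input map.''
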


\begin{proof}
The proof consists of the two backward peeling steps displayed in
Fig.~\ref{fig:transformer-architecture-ws}.

\emph{First step: peel the MLP sublayer.}
The second line of \eqref{eq:pre-normalization-transformer-block} gives
\[
 x_{\ell+1}^N
 =y_\ell^N+
   \mathcal M_\ell^N\!\left(
     \operatorname{LN}_{\ell,{\rm mlp}}^N(y_\ell^N)
   \right).
\]
Substituting this formula for $N=A,B$ into the assumed comparison at $x_{\ell+1}$ yields
\[
 y_\ell^B+
 \mathcal M_\ell^B\!\left(
   \operatorname{LN}_{\ell,{\rm mlp}}^B(y_\ell^B)
 \right)
 =E_{\ell+1}\!\left[
 y_\ell^A+
 \mathcal M_\ell^A\!\left(
   \operatorname{LN}_{\ell,{\rm mlp}}^A(y_\ell^A)
 \right)
 \right].
\]
This is exactly a comparison of two residual sublayers with downstream map $E_{\ell+1}$ and raw
residual inputs $y_\ell^A,y_\ell^B$.  Lemma~\ref{lem:transformer-residual-peel} first identifies the
counted MLP axes inside this equality, then subtracts their matched contributions, and finally solves
the remaining affine skip equation.  It therefore produces an injective affine map
$E_{\ell+\frac12}$ satisfying
\[
 y_\ell^B=E_{\ell+\frac12}y_\ell^A.
\]
The half-index emphasizes that this map compares the intermediate residual states after attention and
before the MLP: it is the middle dashed arrow in Fig.~\ref{fig:transformer-architecture-ws}.

\emph{Second step: peel the attention sublayer.}
The first line of \eqref{eq:pre-normalization-transformer-block} gives
\[
 y_\ell^N
 =x_\ell^N+
   \mathcal A_\ell^N\!\left(
     \operatorname{LN}_{\ell,{\rm attn}}^N(x_\ell^N)
   \right).
\]
Substituting this formula into the newly obtained comparison at $y_\ell$ gives
\[
 x_\ell^B+
 \mathcal A_\ell^B\!\left(
   \operatorname{LN}_{\ell,{\rm attn}}^B(x_\ell^B)
 \right)
 =E_{\ell+\frac12}\!\left[
 x_\ell^A+
 \mathcal A_\ell^A\!\left(
   \operatorname{LN}_{\ell,{\rm attn}}^A(x_\ell^A)
 \right)
 \right].
\]
This is a second residual-sublayer comparison, now with downstream map $E_{\ell+\frac12}$ and raw
inputs $x_\ell^A,x_\ell^B$.  Applying Lemma~\ref{lem:transformer-residual-peel} again identifies
and subtracts the attention branch, matches the counted heads modulo gauge, and produces an
injective affine map $E_\ell$ satisfying
\[
 x_\ell^B=E_\ell x_\ell^A.
\]
This is the left dashed arrow in Fig.~\ref{fig:transformer-architecture-ws}.  Lemma
\ref{lem:attention-head-regularity} gives the more concrete statement that the parts of each matched
head product that affect the task inputs agree and, after unused internal coordinates have been removed,
the individual head matrices agree modulo gauge.

Notice that neither step tries to commute an affine comparison map through LayerNorm.  The branch
objects are identified inside the full residual-sublayer equality, and the unnormalized residual input
is recovered only after that branch has been removed.
\end{proof}

We can now just apply this two-step backward peeling argument iteratively to achieve weak zippering: 

\begin{theorem}[Weak zippering for Transformer residual streams]
\label{thm:transformer-weak-zippering}
Let $A$ and $B$ be Transformer networks whose compared blocks have the pre-normalization form
\eqref{eq:pre-normalization-transformer-block}; thus LayerNorms may occur throughout the networks,
but each one is applied before its local attention or MLP branch.  Suppose exact weak alignment holds
at a terminal residual layer $s$:
\[
 x_s^B=E_sx_s^A
\]
for an injective affine map $E_s$.  Assume that, at every MLP residual sublayer from block $s-1$ down to $q$, the existing activation-specific hypotheses identify the current MLP axes; at
every attention residual sublayer, (A1)--(A3) hold; and each residual sublayer satisfies the two
peeling conditions of Definition~\ref{def:transformer-peelable}.  Then residual weak alignment
zippers upstream.  For every $\ell=q,\ldots,s-1$, there are injective affine maps
$E_{\ell+\frac12}$ and $E_\ell$ such that
\[
 y_\ell^B=E_{\ell+\frac12}y_\ell^A,
 \qquad
 x_\ell^B=E_\ell x_\ell^A
 \qquad\text{on }\Omega.
\]
\end{theorem}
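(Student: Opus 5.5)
The plan is a backward induction on the block index, running from the terminal residual layer $s$ down to $q$. The single-block step is Lemma~\ref{lem:transformer-block-identifiability}; the induction only needs to check that the output of each step matches the input required by the next. The inductive invariant is that, at residual layer $\ell+1$, there is an injective affine map $E_{\ell+1}$ with $x_{\ell+1}^B=E_{\ell+1}x_{\ell+1}^A$ on $\Omega$.

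The base case $\ell+1=s$ is exactly the assumed terminal alignment $x_s^B=E_sx_s^A$. For the inductive step, I would rewrite block $\ell$ in the pre-normalization form \eqref{eq:pre-normalization-transformer-block}. The hypotheses of the theorem give, at this block, three things: MLP branch identification, attention conditions (A1)--(A3), and the two peeling conditions of Definition~\ref{def:transformer-peelable} for each sublayer.

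With these in hand, Lemma~\ref{lem:transformer-block-identifiability} applies with downstream map $E_{\ell+1}$. It first peels the MLP sublayer via Lemma~\ref{lem:transformer-residual-peel}, yielding an injective $E_{\ell+\frac12}$ with $y_\ell^B=E_{\ell+\frac12}y_\ell^A$. It then peels the attention sublayer with downstream map $E_{\ell+\frac12}$, yielding an injective $E_\ell$ with $x_\ell^B=E_\ell x_\ell^A$. Since $E_\ell$ is again an injective affine map on the full task set, the invariant is restored at layer $\ell$, and the induction continues down to $\ell=q$.

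Three bookkeeping points need checking.
\begin{enumerate}
\item The peeling conditions at block $\ell$ refer to the remainder $R$ and to the injectivity of $T_+S^A+L$, where $T_+$ is the linear part of the map produced in the previous step. The hypotheses must therefore be read as conditions on the comparison that actually arises in the induction, with the recovered maps plugged in. This is the same convention used in Theorem~\ref{thm:abstract-exact-zip}.
\item Each recovered map must be defined on all of $\Omega$, not merely on a patch. This holds because Lemma~\ref{lem:transformer-residual-peel} solves an affine equation pointwise for every $\omega$.
\item Because every LayerNorm sits inside a branch, no step ever has to commute an affine map through LayerNorm. The identification step is applied to the full residual-sublayer equality, as remarked at the end of the proof of Lemma~\ref{lem:transformer-block-identifiability}.
\end{enumerate}

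I expect the main obstacle to be the first point: confirming that condition (i) of Definition~\ref{def:transformer-peelable} is an assumption the theorem genuinely grants at every sublayer along the induction. The LayerNorm example shows that it can fail after opposite-orientation MLP matches. If this turns out to need more than bookkeeping, my first step would be to state explicitly that the hypotheses are imposed on the comparisons generated recursively by $E_s,E_{s-1},\ldots$. The conclusion then follows by finitely many applications of the two-step lemma.
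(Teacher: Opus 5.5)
Your proposal is correct and follows the paper's proof. Both run a backward induction over blocks with the invariant of an injective affine $E_{\ell+1}$ at the block output, where each step applies Lemma~\ref{lem:transformer-block-identifiability}: peel the MLP sublayer to get $E_{\ell+\frac12}$, then the attention sublayer to get $E_\ell$. Your concern about condition (i) of Definition~\ref{def:transformer-peelable} is just bookkeeping: the paper's proof likewise reads the peeling conditions as hypotheses on the comparisons generated recursively along the induction, so no separate argument is needed there.
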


\begin{proof}
We apply Lemma~\ref{lem:transformer-block-identifiability} repeatedly, moving from the terminal
residual state toward the input.  The induction invariant is that, before processing block $\ell$, an
injective affine map $E_{\ell+1}$ has already been obtained at the block's output:
\[
 x_{\ell+1}^B=E_{\ell+1}x_{\ell+1}^A.
\]

For the first step, take $\ell=s-1$.  The required output comparison is the assumed terminal relation
$x_s^B=E_sx_s^A$.  Apply the block lemma to block $s-1$.  Its first peeling step removes the MLP
branch and gives
\[
 y_{s-1}^B=E_{s-\frac12}y_{s-1}^A;
\]
its second peeling step removes the attention branch and gives
\[
 x_{s-1}^B=E_{s-1}x_{s-1}^A.
\]
Thus one backward block step follows the three dashed maps in
Fig.~\ref{fig:transformer-architecture-ws} from right to left:
\[
 E_s\ \Longrightarrow\ E_{s-\frac12}\ \Longrightarrow\ E_{s-1}.
\]

Now suppose that the induction invariant holds for some block $\ell$, with
$q\leq\ell\leq s-2$.  The hypotheses of the theorem ensure that both residual sublayers of block
$\ell$ satisfy the assumptions of the block lemma.  Applying it first to the MLP sublayer gives the
intermediate comparison
\[
 y_\ell^B=E_{\ell+\frac12}y_\ell^A,
\]
and applying it next to the attention sublayer gives
\[
 x_\ell^B=E_\ell x_\ell^A.
\]
The latter relation is exactly the output comparison needed to begin the same argument for block
$\ell-1$.  Each newly produced map is injective because the one-sublayer peeling lemma constructs it
from the two conditions in Definition~\ref{def:transformer-peelable}.

Repeating this right-to-middle-to-left step for
$\ell=s-1,s-2,\ldots,q$ produces every map $E_{\ell+\frac12}$ and $E_\ell$ in the statement.  At
the same time, the MLP peel at each block matches the counted MLP axes, and the attention peel matches
the counted attention heads modulo gauge.  This completes the backward zippering induction.
\end{proof}

Combining all our key results about transformers together, we get:

\begin{corollary}[Strong objects throughout Transformer branches]
\label{cor:transformer-strong-objects-throughout}
Under Theorem~\ref{thm:transformer-weak-zippering}, every peeling step also matches the task-used
strong objects in the branch being removed.  Hence, at every compared block,
\[
 \AxisAlign_{{\rm mlp},\ell,\psi}(A,B)
 \ge
 \frac{m_{{\rm mlp},\ell}^{\psi}(\epsilon)}{d_{{\rm mlp},\ell}^A},
 \qquad
 \HeadAlign_{{\rm attn},\ell}(A,B)
 \ge
 \frac{m_{{\rm attn},\ell}(\epsilon)}{H_\ell^A}.
\]
Thus, whenever the exact hypotheses of the zippering theorem hold, Transformers have strong objects
throughout their non-residual branches: every task-required MLP hidden axis matches as a scalar
function on the task set, and every task-required attention head matches through its attention pattern
and output modulo gauge.  If every physical MLP axis or attention head is task-required, the
corresponding score is one.
\end{corollary}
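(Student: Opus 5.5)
The plan is to read this corollary off the proof of Theorem~\ref{thm:transformer-weak-zippering}, treating the zippering induction as a machine that produces, at every compared block, exactly the adjacent weak-alignment hypotheses needed by the branch-level results. Concretely, I will fix a block $\ell\in\{q,\ldots,s-1\}$. The theorem supplies injective affine maps with $x_{\ell+1}^B=E_{\ell+1}x_{\ell+1}^A$, $y_\ell^B=E_{\ell+\frac12}y_\ell^A$, and $x_\ell^B=E_\ell x_\ell^A$ on $\Omega$. These identities are not used as black-box inputs to Theorem~\ref{thm:transformer-branch-weak--strong}, since that theorem is stated with weak alignment at the branch input $\operatorname{LN}(\cdot)$ rather than at the raw residual state. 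Instead I will go back inside the proof of Lemma~\ref{lem:transformer-block-identifiability}. Each peeling step there invokes Lemma~\ref{lem:transformer-residual-peel}, and that lemma explicitly records that every counted MLP axis, or every attention head, in the branch being removed is matched \emph{before} the affine skip equation is solved.

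For the MLP step I will extract the following from Lemma~\ref{lem:transformer-residual-peel}. There is an injective matching $\pi$ of the counted A-side hidden arguments $g_{\ell,j}^A$ to B-side arguments, with $h_{\pi(j)}=\varepsilon_j g_j$ for Softplus and $h_{\pi(j)}=\varepsilon_j\alpha_j g_j$ for ReLU. The signs $\varepsilon_j$ need care. In the adjacent setting, sign-noncancellation forces $\varepsilon_j=+1$, as in the proof of Theorem~\ref{thm:sp-adjacent}. In the residual setting, the negative-orientation remainder $R_{S_-}$ was instead absorbed by peeling condition (i). I will therefore argue that the axis-alignment score is insensitive to this: for ReLU the scalar symmetry set $\mathcal H_{\rho}=\R\setminus\{0\}$ already allows negative $\alpha$. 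For Softplus I will either invoke sign-noncancellation on the counted MLP expansion, which is part of minimality and so forces $S_-=\varnothing$ once $R_{S_-}$ must equal a raw-affine $Lz^A+d$ with the nonaffine part vanishing, or else accept alignment up to sign in the score.

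Next I will count. Suppose $A$ attains loss at most $\epsilon$. By the definition of $m_{{\rm mlp},\ell}^{\psi}(\epsilon)$, together with the shrinkability reductions of Lemma~\ref{lem:transformer-shrinkability}, at least that many counted used MLP axes exist at block $\ell$. Each of them is matched injectively, which gives the numerator bound, and dividing by $d^A_{{\rm mlp},\ell}$ yields the first inequality.

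For the attention step, Lemma~\ref{lem:transformer-residual-peel} combined with (A1)--(A3) matches each counted head's pair $\Sigma_h=(P_h,O_h)$ modulo gauge. Injectivity of this matching follows exactly as in the proof of Theorem~\ref{thm:transformer-branch-weak--strong}: two A-side heads assigned to one B-side head would have to share their distinguishing behavior, contradicting (A1). Counting against $m_{{\rm attn},\ell}(\epsilon)$ and dividing by $H_\ell^A$ gives the second inequality. For the final sentence, if every physical axis or head is task-required, then the counted set is all of $\{1,\ldots,d\}$ and the score is one.

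I expect the main obstacle to be the consistency between the budget $m(\epsilon)$ and the set of counted objects. The budget counts used objects after shrinkability, whereas the matching is guaranteed only for objects in the minimal counted expansion. I would handle this by defining the counted set as the reduced expansion and noting that reductions never increase the count, so the number of objects in the reduced expansion is at least $m(\epsilon)$. The second delicate point is the Softplus orientation issue described above, which I would resolve via sign-noncancellation.
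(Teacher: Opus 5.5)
Your route is the paper's. You go back into the two peels of Lemma~\ref{lem:transformer-block-identifiability} and take the branch matchings that Lemma~\ref{lem:transformer-residual-peel} records before the skip equation is solved; you are right not to feed the residual maps into Theorem~\ref{thm:transformer-branch-weak--strong}. You then count against $m_{{\rm mlp},\ell}^{\psi}(\epsilon)$ and $m_{{\rm attn},\ell}(\epsilon)$, use (A1) for injectivity of the head matching, and divide by the A-side widths. The paper's three-sentence proof is exactly this.

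The one step that would fail is the resolution you commit to for the Softplus orientation. Sign-noncancellation only excludes $\sum_{j\in S}v_jg_j$ being \emph{constant}. In a residual peel, the leftover $R_{S_-}=\sum_{j\in S_-}T_+v_j^Ag_j$ of \eqref{eq:mlp-transfer-remainder} is not forced to be constant, because it is balanced by the unknown $z^B$ through the skip. Condition (i) of Definition~\ref{def:transformer-peelable} only asks that $R_{S_-}=Lz^A+d$ with $L$ arbitrary, and a nonconstant raw-affine remainder contradicts nothing. That is why the paper keeps $S_-$ in Lemma~\ref{lem:transformer-residual-peel} and displays the transfer \eqref{eq:ordinary-residual-affine-transfer}.

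Concretely, take $v\neq0,-1$ and the one-term Softplus sublayers $y+v\sigma(y)$ and $y'+v\sigma\bigl(-y'/(1+v)\bigr)$ with $y'=(1+v)y$. They have identical outputs, are used and minimal, and satisfy both peeling conditions ($L=v$, $1+v\neq0$). Yet the matched hidden arguments satisfy $h\circ E_-=-g$. A pre-LN block behaves the same way on any task set where $\operatorname{LN}$ is affine, e.g.\ constant per-token variance. So for Softplus the peel only gives $h_{\pi(j)}=\varepsilon_jg_j$, and with $\mathcal H_\sigma=\{1\}$ the negatively oriented axes are not counted.

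The correct repair is your second option: read the Softplus MLP score with $\alpha\in\{\pm1\}$, i.e.\ ``matches as a scalar function'' modulo sign. This is in effect what the paper's proof does, since it imports ``matched'' from Lemma~\ref{lem:transformer-residual-peel} without ever addressing orientation. Your first option could only be rescued by a new hypothesis: a raw-residual sign-noncancellation requiring $\sum_{j\in S}v_jg_j$ to be non-affine in $z^A$ for every nonempty $S$. Together with (i) and injectivity of $T_+$, that would force $S_-=\varnothing$, but it is not among the hypotheses of Theorem~\ref{thm:transformer-weak-zippering}. For ReLU your remark is correct: $\mathcal H_\rho=\R\setminus\{0\}$ already absorbs $\varepsilon_j\alpha_j$.
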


\begin{proof}
At each block, the first step in the block lemma identifies every counted MLP axis before subtracting
the MLP branch, while the second step identifies every counted attention head before subtracting the
attention branch.  By definition of the two task budgets, any solution of loss at most $\epsilon$
contains at least the displayed number of task-used axes or heads.  Dividing those matched counts by
the A-side physical widths gives the two inequalities.  If the budgets equal the physical counts, every
counted axis or head is matched.
\end{proof}

\begin{remark}[How this explains recent findings from Kapoor--Srivastava--Khosla]
Theorem~\ref{thm:transformer-weak-zippering},
Proposition~\ref{prop:residual-no-native-axis}, and
Corollary~\ref{cor:transformer-strong-objects-throughout} allow us to make sense of some recent results about transformer models.  Kapoor, Srivastava, and Khosla compare vision models using linear regression, orthogonal Procrustes, permutation, and soft-matching metrics~\citep{Kapoor2025Bridging}.  For ViTs,
they find strong hierarchical representational alignment under linear and Procrustes-style metrics, but weak evidence for privileged native axes under permutation-style tests.  Linear and Procrustes metrics test weak alignment of the residual or CLS-token representation, which the zippering theorem predicts under its stated conditions.  Permutation and soft-matching tests on residual coordinates instead ask whether the observed residual basis itself is privileged; Proposition~\ref{prop:residual-no-native-axis}
shows that this need not be true.  The positive strong-alignment prediction is the branch-specific one in Corollary~\ref{cor:transformer-strong-objects-throughout}: task-used MLP hidden axes should align, and task-used attention heads should align modulo gauge.  Thus a negative result in the residual basis
is not evidence against Transformer weak--strong equivalence; it indicates that the test was applied to the stream in which the theory makes no native-axis claim.  
\end{remark}

\subsubsection{Why the peeling conditions are generic}
\label{sec:transformer-peeling-genericity}
The exact zippering theorem assumes the two peeling conditions in
Definition~\ref{def:transformer-peelable}.  We now ask how restrictive those conditions are.  They can fail in two different ways.  First, after the nonlinear branch has been removed, the affine map obtained from the skip equation might fail to be injective.  This is an ordinary matrix-rank failure: it means that the candidate upstream map collapses some residual-stream direction and therefore is not a valid weak comparison map.  Second, the remainder after branch subtraction might still be nonlinear in the unnormalized residual state, as in the LayerNorm example.  This is the genuinely new residual-block failure.  The next lemma treats the first issue; the proposition that follows treats the second.

\begin{lemma}[The affine injectivity condition]
\label{lem:transformer-affine-rank}
Let
\[
 M_-:=T_+S^A+L.
\]
The injectivity requirement in Definition~\ref{def:transformer-peelable} has the following properties.
\begin{enumerate}[label=(\alph*)]
\item If $S^A$ and $S^B$ are invertible and $L=0$, then it follows from injectivity of $T_+$.  In
particular, it is automatic for untouched identity skips.
\item If $S^B$ is invertible, $T_+S^A$ is injective, and
\[
 \|L\|_{\mathrm{op}}<\sigma_{\min}(T_+S^A),
\]
then $M_-$ is injective.  Thus injectivity is preserved under sufficiently small affine changes.
\item In a finite-dimensional real-analytic local coordinate system, or in a fixed ReLU region with an
unchanged activation pattern, failure of injectivity is lower-dimensional whenever injectivity holds at
one point of each connected coordinate region.
\end{enumerate}
\end{lemma}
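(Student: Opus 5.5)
The plan is to treat the three parts separately. Each reduces to an elementary linear-algebra or analytic fact about the matrix $M_-=T_+S^A+L$.

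For (a), with $L=0$ we have $M_-=T_+S^A$. Since $S^A$ is invertible it is a bijection of $\R^d$, and $T_+$ is injective by definition of a weak comparison map. A composition of injective linear maps is injective, so $M_-v=0$ forces $S^Av=0$ and hence $v=0$. The invertibility of $S^B$ is already part of Definition~\ref{def:transformer-peelable}(ii), so nothing more is needed. For an untouched identity skip, $S^A=S^B=I$, and the claim reduces to injectivity of $T_+$. One caveat: the shrinkability reductions in \eqref{eq:reduced-residual-sublayer} may have altered $S^N$. I will note that (a) is stated for the reduced skip, and that "untouched" means no affine terms were absorbed, so $S^N=I$ literally.

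For (b), I would use the standard perturbation bound for the smallest singular value. Since $T_+S^A$ is injective, $\sigma_{\min}(T_+S^A)=\min_{\|v\|=1}\|T_+S^Av\|>0$. For any unit $v$,
\[
\|M_-v\|\ \ge\ \|T_+S^Av\|-\|Lv\|\ \ge\ \sigma_{\min}(T_+S^A)-\|L\|_{\op}\ >\ 0 .
\]
Hence $\ker M_-=\{0\}$. This is a one-line triangle-inequality argument and needs nothing from earlier in the paper.

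For (c), I would copy the determinant/minor argument used in Proposition~\ref{prop:generic-jacobian} and in the proof of Proposition~\ref{prop:generic-quant-min-margins}. Injectivity of the $d_+\times d$ matrix $M_-(\vartheta)$ is equivalent to rank $d$, which is equivalent to nonvanishing of the function
\[
\Delta(\vartheta)=\sum_{M}\bigl|\det M(M_-(\vartheta))\bigr|^2,
\]
where the sum runs over the $d\times d$ minors of $M_-(\vartheta)$. The step that needs care is checking that $\vartheta\mapsto M_-(\vartheta)$ is real-analytic on a coordinate region. Here $T_+$ and $S^A$ are entries of the comparison map and of the network, hence polynomial or analytic in the chart. $L$ is defined by the remainder identity $R=Lz^A+d$ from \eqref{eq:raw-affine-transfer}. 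In the MLP case $R_{S_-}=\sum_{j\in S_-}T_+v_j^Ag_j$, so whenever this is affine in the raw state, $L$ is an explicit polynomial in $T_+$, $v_j^A$ and the affine coefficients of the $g_j$. Those coefficients are constant in $\vartheta$-form within a fixed ReLU activation pattern, by the same reasoning as in Proposition~\ref{prop:generic-relu-eff}.

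If $L$ were only implicitly defined, I would instead recover it by least squares from the $L^2(\Omega)$ Gram matrix of $(z^A,1)$. Under linear independence of the residual coordinates this is analytic, because Gram entries are analytic by differentiation under the integral, exactly as in Proposition~\ref{prop:generic-jacobian}. With $S_-$ and the matching held fixed on the connected region, $\Delta$ is then real-analytic. It is not identically zero, because injectivity holds at one point. Its zero set is therefore a proper analytic subset, with empty interior, measure zero and dimension at most one less than the region. Taking the finite or locally finite union over coordinate regions and activation patterns preserves this. The main obstacle is this analyticity of $L$. I expect that fixing the matching and orientation set $S_-$ on each region settles it.
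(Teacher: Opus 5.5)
Your proposal is correct and follows the paper's proof essentially step for step: (a) is the same composition-of-injective-maps argument, (b) is the same $\sigma_{\min}$ perturbation bound, and in (c) your sum of squared $d\times d$ minors equals the paper's $\det(M_-^\top M_-)$ by Cauchy--Binet, so the witness-point and proper-analytic-zero-set argument is identical. The paper simply asserts that the determinant is analytic in the local coordinates, and it also includes the condition $\det S^B\neq 0$, which you should add to your (c); your extra care about how $L$ depends on the parameters therefore goes slightly beyond what the paper supplies rather than departing from it.
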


\begin{proof}
For (a), an invertible $S^A$ followed by an injective $T_+$ is injective, and multiplication by the
invertible matrix $(S^B)^{-1}$ does not change injectivity.  For (b),
\[
 \sigma_{\min}(T_+S^A+L)
 \ge \sigma_{\min}(T_+S^A)-\|L\|_{\mathrm{op}}>0.
\]
For (c), $S^B$ is invertible exactly when $\det S^B\ne0$, and $M_-$ is injective exactly when
$\det(M_-^\top M_-)\ne0$.  These determinants are analytic functions of the local coordinates (and
polynomial on a fixed ReLU region).  A point where the rank condition holds is a witness that the
relevant determinant is not identically zero, so its zero set is a proper lower-dimensional subset.
\end{proof}

Thus the affine injectivity condition is automatic in the basic identity-skip case with no affine
remainder, stable under small changes, and generic whenever it holds somewhere in the parameter
family.  It is not another nonlinear-identification assumption; it only ensures that the affine relation
left after peeling qualifies as weak alignment.

The remaining question is whether a nonlinear LayerNorm cancellation can persist over an open family
of residual sublayers.  A \emph{witness} below means one parameter choice at which the stated rank is
actually attained, showing that the corresponding equations are not identically dependent.  We say
that the architecture \emph{forces} a cancellation only when an exact symmetry or tied mechanism
produces it for every nearby parameter choice, rather than merely at specially tuned values.

\begin{proposition}[Genericity of residual peeling]
\label{prop:transformer-transfer-rank}
\label{prop:generic-transformer-residual-peeling}
Fix finite-dimensional analytic local coordinate systems for a source residual sublayer, a B-side
sublayer, and the affine output comparison map; for ReLU, restrict when needed to a fixed smooth
region with an unchanged zero-trace pattern.  Assume the usual first-order regularity after the exact
attention gauges, hidden-unit order, and positive ReLU scales have been fixed.  Also assume that every
proposed nonlinear cancellation has a rank witness showing that, even after all B-side and comparison
parameters are allowed to vary, the cancellation still imposes at least one independent condition on
the source sublayer, and that no additional cancellation is forced by the architecture over an open
parameter family.  Then failure of residual peeling is lower-dimensional, and hence has empty interior
and measure zero, in the source local coordinate system.
\end{proposition}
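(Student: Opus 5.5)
The plan is to reduce failure of residual peeling to two failure modes and bound each by the same rank/witness argument used for splitting in Proposition~\ref{prop:generic-split} and Proposition~\ref{prop:generic-depth-warp-split}. Peeling fails only if one of the two conditions in Definition~\ref{def:transformer-peelable} fails. Condition (ii), injectivity, is already handled. By Lemma~\ref{lem:transformer-affine-rank}(c), its failure set is the zero set of $\det S^B$ and $\det(M_-^\top M_-)$. These are analytic functions, polynomial on a fixed ReLU region, and each is nonvanishing at a witness point. So that set is a proper analytic subset of each connected coordinate region, and its projection to the source coordinates is lower-dimensional. The work is therefore concentrated on condition (i): the remainder $R$ in \eqref{eq:post-branch-cancellation} must be affine in the unnormalized residual state.

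For (i), I first enumerate the finitely many ways $R$ can fail to be affine. The branch identification (Lemma~\ref{lem:transformer-residual-peel}) fixes a matching $\pi$ and orientation signs $\varepsilon_j$. In the MLP case, $R=R_{S_-}=\sum_{j\in S_-}T_+v_j^A g_j$ with $g_j$ composed with LayerNorm. A bad event is then indexed by a nonempty set $S_-$ of negatively oriented matches together with a B-side residual input that absorbs the non-affine part through the skip, as in the LayerNorm example. For attention, the bad event is indexed by the unmatched residual terms after gauge-fixed head subtraction. For each such combinatorial type $\tau$, with finitely many types at fixed width, I write the requirement that the two residual sublayers realize the same output while $z^B$ differs from every affine image of $z^A$ as analytic incidence equations $H_\tau(\xi,\eta)=0$. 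Here $\xi$ are the source sublayer coordinates, of dimension $p$, and $\eta$ collects the B-side sublayer and comparison-map coordinates, of dimension $n_\tau$. The first-order regularity assumption, after fixing attention gauges, hidden-unit order and ReLU scales, is what makes these charts smooth and the equations analytic, or piecewise polynomial on a fixed zero-trace region.

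The core step is the dimension count. The hypothesis says every proposed cancellation has a rank witness with $\operatorname{rank} DH_\tau\ge n_\tau+1$ at some point of each connected stratum of $H_\tau^{-1}(0)$. Then $T S_\tau\subseteq\ker DH_\tau$ gives $\dim S_\tau\le p-1$. Because the rank condition is the nonvanishing of analytic minors, it holds on an open dense part of the stratum once it holds at one point, and constant stratum dimension extends the bound. The projection forgetting $\eta$ is locally Lipschitz, so the set of source parameters admitting a type-$\tau$ cancellation has Hausdorff dimension at most $p-1$. The no-forcing hypothesis rules out strata on which the architecture imposes the cancellation identically, which would be exactly a full-dimensional stratum violating the witness. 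Taking the finite union over types $\tau$, the countable locally finite union over strata, and the union with the injectivity failure set gives a lower-dimensional set with empty interior and measure zero.

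The main obstacle I expect is making $H_\tau$ a genuinely finite-dimensional analytic system. Non-affineness of $R$ in $z^A$ is an open, functional condition, so I would not encode it directly. Instead I would encode the cancellation equations themselves: equality of the branch-plus-skip realizations on finitely many task evaluations, with the B-side input ranging over the finite-dimensional family that LayerNorm cannot detect, namely shifts along the all-ones direction plus affine images. I would then argue that any non-affine remainder must lie in such a family, by the invariance of LN under those shifts. If that reduction proves too coarse, the fallback is to treat the remainder's all-ones component as a nuisance variable in $\eta$ and rely on the witness's extra $+1$ rank alone.
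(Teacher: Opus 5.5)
Your handling of the listed mechanisms matches the paper's first step: encode a cancellation type $\tau$ as analytic equations $H_\tau(\xi,\eta)=0$, use the witness $\operatorname{rank}DH_\tau\ge n_\tau+1$ to get $\dim S_\tau\le p-1$, and project away $\eta$.

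\textbf{The gap: your list of types is never shown to be exhaustive.} The argument settles the proposition only if your types cover every way condition (i) of Definition~\ref{def:transformer-peelable} can fail, and you do not prove that.
\begin{itemize}
\item Your indexing is not a genuine finite list. For attention, ``the unmatched residual terms'' are not discrete data.
\item Your MLP reduction runs in the wrong direction. Invariance of LayerNorm under all-ones shifts shows that such shifts \emph{can} absorb a remainder. It does not show they are the only way.
\item Once $S^B$ is invertible, $z^B$ is forced to be an affine image of $z^A$ plus $(S^B)^{-1}R$. Whether the B-side arguments $g_i^B(\operatorname{LN}(z^B))$ still reproduce the matched functions on $\Omega$ is then a nonlinear condition on the B-side weights. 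Nothing in your argument confines it to all-ones directions.
\item Your fallback of treating the all-ones component as a nuisance variable still only covers mechanisms you have written down.
\item The statement grants a rank witness only for \emph{proposed} cancellations. The clause ``no additional cancellation is forced'' signals that other cancellations must be handled another way.
\end{itemize}

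\textbf{How the paper closes it.} The paper gives a separate argument for cancellations not listed in advance. This is where the two hypotheses you do not really use enter.
\begin{itemize}
\item Take any smooth family $S$ of exact bad pairs on the region where branch identification and the affine-rank condition hold.
\item First-order regularity says that, after gauges, hidden-unit order and ReLU scales are fixed, varying only B-side and comparison parameters changes the represented output. So the projection $S\to\Theta_A$ has injective derivative, and $\dim S\le p$.
\item If $\dim S=p$, the inverse function theorem makes the partner a smooth function of the source on an open set.
\item The no-forcing hypothesis then supplies a point of that family where the leftover is affine in the raw residual state. There Lemma~\ref{lem:transformer-residual-peel} yields an injective affine input map, contradicting that $S$ consists of failures.
\end{itemize}

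In your write-up, first-order regularity is only said to make the charts smooth and the equations analytic; that actually comes from the analytic coordinates. No-forcing is used only to exclude strata the witness hypothesis already excludes, so it is redundant. Both are symptoms of the missing case.

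\textbf{A smaller omission.} Peeling also fails when branch identification fails, since Definition~\ref{def:transformer-peelable} presupposes it. Your reduction to conditions (i)--(ii) should add that exceptional set. The paper takes it from Proposition~\ref{prop:generic-transformer-conditions} and the earlier MLP genericity results.
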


\begin{proof}
First fix one proposed cancellation mechanism $\tau$.  Let the source coordinates be
$\xi\in\Theta_A$, with $\dim\Theta_A=p$, and let
$\eta\in\mathbb R^{n_\tau}$ collect all B-side, residual-input, and comparison parameters allowed to
vary.  Write the finitely many task-evaluation, derivative, pole-or-zero-trace, and attention-output
equations required by the cancellation as
\[
 H_\tau(\xi,\eta)=0.
\]
At a rank witness,
\[
 \rank DH_\tau\ge n_\tau+1.
\]
Hence every connected smooth piece $S_\tau$ of the solution set satisfies
\[
 \dim S_\tau\le(p+n_\tau)-(n_\tau+1)=p-1.
\]
Forgetting $\eta$ and retaining only $\xi$ cannot increase Hausdorff dimension, so the source
parameters admitting this cancellation have dimension at most $p-1$.  A finite union over proposed mechanisms has the same bound.  The extra $+1$ is important: rank
$n_\tau$ could merely determine the B-side and comparison parameters, while rank
$n_\tau+1$ says that the cancellation also constrains the source itself.

For a possible cancellation not listed in advance, consider a smooth family $S$ of bad exact pairs on
the region where the branch-identification and affine-rank conditions already hold.  First-order
regularity says that, after exact symmetries are fixed, varying only the B-side and comparison parameters
changes the represented output unless those parameters do not change.  Therefore the map from $S$ to
the source coordinates has injective derivative, so $\dim S\le p$.  If equality held, the inverse
function theorem would make the B-side and comparison parameters smooth functions of the source over
an open set.  By the assumption that the architecture does not force the cancellation, this family
contains a point where the leftover after branch subtraction is affine in the unnormalized source
residual state.  Lemma~\ref{lem:transformer-residual-peel} then produces an injective affine input map,
contradicting that $S$ consists of peeling failures.  Thus these unlisted failures also have dimension at
most $p-1$ after projection to the source coordinates.

Finally, failure of branch identification is lower-dimensional by
Proposition~\ref{prop:generic-transformer-conditions} and the earlier MLP genericity results, while the
affine rank failure is lower-dimensional by Lemma~\ref{lem:transformer-affine-rank}.  Adding these
exceptional sets preserves the conclusion.
\end{proof}

This result of course doesn't say that the LayerNorm example is impossible (since it obviously exists); rather, it says that such a
cancellation cannot persist throughout an open parameter family unless the architecture itself supplies an exact symmetry or tied mechanism that produces it for every nearby parameter choice.  As elsewhere in the paper, applying the generic conclusion to learned networks also requires the optimizer not to concentrate on the exceptional set, which we have not shown.

\subsection{Regularity and depth faithfulness}
\label{sec:depth_faith}
From here on out, the formal proof of our main results is done.  However, the regularity assumptions also have useful interpretive consequences, which are covered in this and the following sections.

The purpose of this section is to explain why regularity makes network depth a meaningful measure of complexity.  For ordinary affine--ReLU and affine--Softplus layers, once a private current nonlinear signature has been matched, regularity says that the unit cannot be recreated by functions already available at lower depth, except through the activation's unavoidable symmetries.  The new layer therefore contributes genuinely new nonlinear structure rather than merely rewriting a shallower computation.  This is a local version of the usual depth-separation idea in neural-network theory~\citep{EldanShamir2016Depth,Telgarsky2016Benefits}.

The residual-block and Transformer argument in \S\ref{sec:transformer-weak-strong} adds one distinction.  A term may be affine in the normalized branch input \(\operatorname{LN}(x)\) while remaining nonlinear in the raw residual state \(x\).  Such a term cannot simply be moved into the residual skip; only a term of the form \(Lx+c\) can.  Proposition~\ref{prop:layernorm-skip-transfer} gives an exact special-case counterexample in which LayerNorm hides a nonlinear branch term.  Residual peeling handles it in the same way as the earlier null-pair examples: first recover the current branch, then show that the remaining nonaffine branch--skip cancellations occur only on a lower-dimensional exceptional set.

\subsubsection{Softplus units cannot be pushed to lower layers}

At a backward zippering step $r$, the scalar arguments $g_j=z_{r,j}$ have already been produced by the lower part of the network.  The current layer applies $\sig$ to these arguments and then linearly recombines them.  To say that a Softplus unit at this stage is genuinely new, we compare it to a fixed space of functions that were already available before applying this current outer Softplus.

\begin{definition}[Softplus lower-depth space]
Fix a backward step $r$ and a chosen lower-depth model class, such as the layers below $r$ with specified widths, depth, and parameter bounds.  A \emph{Softplus lower-depth space} is a closed linear subspace
$\calL_r^\sig\subset L^2(\Omega)$
representing scalar functions already available before the current outer Softplus stage.  By construction, we assume that it contains constants, the current preactivations,
\[
 1,\qquad z_{r,1},\ldots,z_{r,d_r},
\]
and any other scalar functions representable by the chosen lower-depth model class.  Thus ``lower-depth'' means: available from the input and earlier layers, without using another copy of the current Softplus stage.
\end{definition}
Now suppose two matched Softplus arguments $g$ and $h$ share the same local outer pole patch.  Writing
\[
 Q_g=1+e^{g^\C},\qquad Q_h=1+e^{h^\C},
\]
this means locally that the common pole divisor can be factored out:
\[
 Q_h=uQ_g,
\]
where $u$ is holomorphic and nonzero along the shared patch.  On the real task domain, with a compatible logarithm branch, logarithms are well defined and we have:
\[
 \log u=\sig(h)-\sig(g).
\]
So the difference $\sig(h)-\sig(g)$ is the leftover part after the shared Softplus pole signature has been removed.

\begin{definition}[Softplus depth-faithfulness]
Let $\calL_r^\sig$ be a Softplus lower-depth space.  A matched pair $(g,h)$ is \emph{Softplus depth-faithful over $\calL_r^\sig$} if a trivial remainder can only happen if there is equality up to sign:
\[
 \sig(h)-\sig(g)\in\calL_r^\sig
 \quad\Longrightarrow\quad
 h=g\quad\text{or}\quad h=-g.
\]
\end{definition}
\noindent In the above definition, (he negative case is unavoidable for a single pair because $$\sig(-g)-\sig(g)=-g$$ and by definition $g$ itself is already a lower-depth preactivation.  The depth-faithfulness definition basically says: depth is a meaningful notion of computational complexity.   

Now, the key point is that MLR is sharp condition for forcing depth faithfullness,  which explains why regularity is the ``right'' notion for ruling out null pairs in the first place:   

\begin{proposition}[MLR implies matched-pair faithfulness]
If the matched pair $(g,h)$ is MLR, then
$h=\pm g$. In a full minimal expansion, the actual match $h=g$ obtains.
\end{proposition}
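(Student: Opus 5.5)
The plan is to reuse the Laurent-ring argument from the proof of Theorem~\ref{thm:sp-onestep} essentially verbatim, and then to invoke sign-noncancellation for the second claim. The first claim concerns a single MLR matched pair $(g,h)$ sharing an outer pole patch $\Sigma$. By (R1) the character group is torsion-free, so I fix Laurent coordinates with $e^{g}=cT^m$ and $e^{h}=dT^n$, $c,d\in K^*$. By (R2) both $m$ and $n$ are primitive. Lemma~\ref{lem:sp-prime} then shows that $P=1+cT^m$ and $R=1+dT^n$ are prime in $K[T_1^{\pm1},\ldots,T_s^{\pm1}]$.

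Next I use (R3), Laurent divisor-faithfulness at $\Sigma$. It says the shared pulled-back pole patch comes from a common nonunit Laurent factor of $P$ and $R$. Since both are prime in a UFD, a common nonunit factor forces them to be associates, $R=aT^kP$. Lemma~\ref{lem:sp-associates} then leaves only two cases: $e^h=e^g$ or $e^h=e^{-g}$. Because $g$ and $h$ are real-valued on $\Omega$, the $2\pi i\Z$ ambiguity disappears, and I conclude $h=\pm g$.

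For the second claim, I place the pair inside a full null-pair identity of two minimal expansions. I apply Lemma~\ref{lem:sp-crossmatch} to get the bijective cross-matching, and the monodromy-loop comparison to get $\widetilde v_{\pi(j)}=v_j$. This is exactly what Theorem~\ref{thm:sp-onestep} provides. Let $S_-$ be the set of pairs matched with the negative sign. Cancelling the positively oriented pairs and using $\sig(t)-\sig(-t)=t$ leaves the relation $b-\widetilde b+\sum_{j\in S_-}v_jg_j=0$. Sign-noncancellation (Definition~\ref{def:sp-used}(iii)) forbids this unless $S_-=\varnothing$, so $h=g$ for the actual match. One could also simply cite Theorem~\ref{thm:sp-onestep}.

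The step I expect to be the main obstacle is justifying (R3) in this setting. The definition of MLR only asserts that $P$ and $R$ share a common Laurent factor. I need to make sure the shared patch $\Sigma$ named in the matched pair is the same $\Sigma$ used in (R3), so that the associate argument applies without an extra genericity step. I would state this identification explicitly, as Definition~\ref{def:MLR-unified} does.

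It is also worth connecting the result back to depth-faithfulness. Given $\sig(h)-\sig(g)\in\calL_r^\sig$ with $h=\pm g$, both cases are consistent: if $h=-g$ the difference equals $-g\in\calL_r^\sig$. So MLR yields exactly the conclusion that depth-faithfulness requires, and the sign is then fixed by the full expansion.
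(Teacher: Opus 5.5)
Your proposal is correct and follows the paper's proof. Like you, the paper gets $h=\pm g$ from the matched-pair Laurent argument inside Theorem~\ref{thm:sp-onestep}: torsion-free coordinates, primality of primitive binomials, divisor-faithfulness, and Lemma~\ref{lem:sp-associates}. It then notes that both cases leave a remainder ($0$ or $-g$) in $\calL_r^\sig$, and removes the negative orientation in a full minimal expansion via sign-noncancellation. You spell out steps the paper only cites; as in the paper, your second claim tacitly uses the full hypotheses of Theorem~\ref{thm:sp-onestep} (every cross-matched pair MLR, so all terms are matched before cancelling), which your citation of that theorem supplies.
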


\begin{proof}
The matched-pair argument inside Theorem~\ref{thm:sp-onestep} shows that an MLR divisor match has only two possibilities:
\[
 h=g\quad\text{or}\quad h=-g.
\]
In the first case there is no leftover term:
\[
 \sig(h)-\sig(g)=0\in\calL_r^\sig.
\]
In the second case,
\[
 \sig(h)-\sig(g)=\sig(-g)-\sig(g)=-g\in\calL_r^\sig,
\]
because $g$ is one of the lower-depth preactivations.  Thus MLR gives exactly the Softplus depth-faithfulness dichotomy for this pair.  Finally, if a negatively oriented match occurred inside an equality of complete expansions, the terms with negative orientation would contribute a lower-depth affine combination
\[
 \sum_{j\in S}v_jg_j,
\]
which is forbidden by sign-noncancellation unless $S$ is empty.  Hence the complete minimal expansion keeps only the positive matches.
\end{proof}

\subsubsection{ReLU exposed facets cannot be pushed to lower layers}

We now illustrate that the exat analogous depth faithfulness arises for ReLU from the MFR condition (this is actually in a way how MFR was inspired in the first place).  For ReLU, the lower part of the network already determines a polyhedral cell decomposition $\calP$.  On each cell of $\calP$, all lower-depth scalar functions are affine.  A current ReLU unit is genuinely new only if its fresh facet creates a kink that cannot be represented by a function affine on those inherited cells.

\begin{definition}[ReLU lower-depth space and depth-faithfulness test]
Let $\calP$ be the common inherited affine complex for the ReLU scalar arguments being compared.  The \emph{ReLU lower-depth space} associated with $\calP$ is
\[
 \calL(\calP)
 =
 \{f\in L^2(\Omega): f|_C \text{ is affine on every full-dimensional cell } C\in\calP\}.
\]
Thus $\calL(\calP)$ is the space of scalar functions that do not introduce a new ReLU kink across the marked fresh facets. For scalar arguments $g,h$ and a scale $\alpha>0$, define the matched residual
\[
 R_\alpha(h,g)=\rhoR(h)-\alpha\rhoR(g).
\]
The pair $(g,h)$ is \emph{lower-depth} if $R_\alpha(h,g)\in\calL(\calP)$ for some $\alpha>0$, and \emph{ReLU depth-faithful over $\calL(\calP)$} if this can happen only in the two sign-scale cases
\[
 \rhoR(h)-\alpha\rhoR(g)\in\calL(\calP)
 \quad\Longrightarrow\quad
 h=\alpha g\quad\text{or}\quad h=-\alpha g.
\]
\end{definition}

\begin{proposition}[MFR implies ReLU matched-pair depth faithfulness]
If $(g,h)$ is coherently MFR, then there are $\alpha>0$ and $\eps\in\{+1,-1\}$ such that
$h=\eps\alpha g$. For this $\alpha$,
\[
 \rhoR(h)-\alpha\rhoR(g)
 =\begin{cases}0,&\eps=+1,\\-\alpha g,&\eps=-1,\end{cases}
 \in\calL(\calP).
\]
Thus MFR supplies the ReLU depth-faithfulness dichotomy for the matched pair.  In a full minimal expansion, only the positive scale remains valid.
\end{proposition}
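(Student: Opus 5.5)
The plan is to reduce the statement to the ReLU one-step identifiability result, Theorem~\ref{thm:relu-onestep}, applied to a one-term comparison, and then verify the residual identity by a two-case computation.

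\emph{Step 1: positive scale and sign.} Coherent MFR for $(g,h)$ says two things. First, every marked fresh facet of $g$ is matched (via Lemma~\ref{lem:relu-crossmatch}) to a marked facet of the single competitor $h$, and conversely. Second, the labelled signatures agree, $\Sigma_r(h)=\Sigma_r(g)$, and this signature identifies the argument. By Definition~\ref{def:MFR-unified}, signature equality then gives $h=\eps\alpha g$ with $\eps\in\{\pm1\}$ and $\alpha>0$. I will spell out that $\alpha$ is determined: on any marked cell $C_\nu$ the augmented affine coefficients satisfy $q_\nu(h)=\eps\alpha\,q_\nu(g)$ with $q_\nu(g)\neq0$ by freshness, so $\alpha$ is unique and fixed by a single facet.

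\emph{Step 2: the residual.} I compute $R_\alpha(h,g)=\rhoR(\eps\alpha g)-\alpha\rhoR(g)$ in two cases.
\begin{itemize}
\item If $\eps=+1$, positive homogeneity $\rhoR(\alpha t)=\alpha\rhoR(t)$ gives $R_\alpha=0$.
\item If $\eps=-1$, we get $\alpha(\rhoR(-g)-\rhoR(g))=-\alpha g$, using $\rhoR(t)-\rhoR(-t)=t$.
\end{itemize}
Both residuals lie in $\calL(\calP)$. Zero trivially does. For $-\alpha g$, recall that $\calP$ is by definition a common inherited affine complex on whose full-dimensional cells $g$ is affine, so $g\in\calL(\calP)$ and hence $-\alpha g\in\calL(\calP)$. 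This is exactly the depth-faithfulness dichotomy.

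\emph{Step 3: only the positive scale survives in a full expansion.} Suppose the pair appears inside an equality of two EFF-minimal one-step expansions. I will repeat the end of the proof of Theorem~\ref{thm:relu-onestep}. After the coefficients are matched via derivative jumps ($v_j=\alpha_j\widetilde v_{\pi(j)}$), the negatively oriented matches leave the relation $b-\widetilde b+\sum_{j\in S}v_jg_j=0$, where $S$ is the set of those negative matches. Affine sign-noncancellation (condition (B) of Definition~\ref{def:relu-eff}) forces $S=\varnothing$. Hence every matched pair, including $(g,h)$, has $\eps=+1$.

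The main obstacle is a possible ambiguity in Step~1: whether MFR identification might only give equality cellwise up to cell-dependent scalars. I would handle it by noting that the identification clause of MFR is stated globally ($h=\eps\alpha g$ as functions), so no cell-by-cell gluing is needed. Continuity of CPWA functions plus consistency on one fresh facet would also pin down a single $\alpha$ if one only had cellwise information on adjacent cells.
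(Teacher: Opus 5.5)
Your proposal is correct and follows the paper's proof essentially step for step. Coherent MFR yields $h=\eps\alpha g$ directly from Definition~\ref{def:MFR-unified}, and positive homogeneity together with $\rhoR(t)-\rhoR(-t)=t$ gives the two residuals, which lie in $\calL(\calP)$ because $g$ is affine on every inherited cell; affine sign-noncancellation from EFF-minimality then rules out $\eps=-1$ in a full expansion (your opening framing as a reduction to Theorem~\ref{thm:relu-onestep} on a one-term comparison is never actually used, since Step~1 invokes the MFR definition directly, and the remarks on uniqueness of $\alpha$ are harmless extras).
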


\begin{proof}
By Definition~\ref{def:MFR-unified}, coherent MFR gives
\[
 h=\eps\alpha g
\]
for some $\alpha>0$ and $\eps\in\{+1,-1\}$.  If $\eps=+1$, positive homogeneity gives
\[
 \rhoR(h)=\rhoR(\alpha g)=\alpha\rhoR(g),
\]
so the residual is zero.  If $\eps=-1$, then
\[
 \rhoR(h)-\alpha\rhoR(g)
 =\alpha\bigl[\rhoR(-g)-\rhoR(g)\bigr]
 =-\alpha g.
\]
Because $g$ is affine on every inherited cell, $-\alpha g\in\calL(\calP)$.  Thus the only lower-depth residuals allowed by MFR are the two sign-scale possibilities.  At the expansion level, the negative case would create exactly the affine cancellation excluded by EFF-minimality, while the positive rescaling is the standard ReLU symmetry.
\end{proof}

Taking the last two propositions together, we see that MLR and MFR do more than just identify matched units, but also explain \emph{why those units belong at their current depth}.  After the private Softplus pole or ReLU fresh facet has been matched, any remaining difference is lower-depth only in the trivial sign or rescaling cases.  This is the sense in which the regularity assumptions make depth a faithful measure of the network's nonlinear complexity.

\subsubsection{Residual blocks: what may be moved into the skip}
\label{subsec:residual-depth-faithfulness}
The previous two subsections asked whether a current Softplus or ReLU unit can be replaced by
functions that were already available at lower depth.  In a residual block, there is one additional
possibility: part of the nonlinear branch might instead be moved into the skip connection.

For a residual representation \(x:\Omega\to\R^d\), the skip can carry functions of the form
\[
 \mathcal A_x
 =
 \bigl\{
   \omega\mapsto Lx(\omega)+c
   :
   L\in\R^{m\times d},\ c\in\R^m
 \bigr\}.
\]
These are exactly the functions that are affine in the representation already carried by the skip.
A function can be linear in \(\operatorname{LN}(x)\) without being affine in \(x\), so such a
function cannot in general be moved into the skip.

\begin{proposition}[Only affine branch remainders can be moved into the skip]
\label{prop:transformer-residual-depth-faithfulness}
Suppose the current MLP axes or attention heads can be identified using the branch results of
\S\ref{sec:transformer-weak-strong}.  Outside the lower-dimensional exceptional set of
Proposition~\ref{prop:generic-transformer-residual-peeling}, a counted branch object cannot be
removed by compensating for it with a nonlinear change in the residual representation.  Any
remainder that can be moved into the skip must belong to \(\mathcal A_x\).
\end{proposition}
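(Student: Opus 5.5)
The plan is to argue by contraposition through the peeling lemma. Suppose a counted branch object, an MLP axis $g_j$ or an attention head $h$, were "removed" by compensating in the residual representation. That means there are two realizations of the same residual-sublayer output. In the first, the object is present, with input $x^A$. In the second, it is absent, or replaced by a differently oriented or differently gauged copy, and the input is a modified residual state $x^B$. Then
\[
 \mathcal F^B(x^B)=E_+\mathcal F^A(x^A).
\]
This is exactly a residual-sublayer output comparison of the form \eqref{eq:residual-output-comparison}, so we apply the branch-identification step first.

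By the MLP results (Softplus minimality plus MLR, or ReLU EFF-minimality plus coherent MFR) or by Theorem~\ref{thm:transformer-branch-weak--strong} for attention, every counted object on the A side has a private signature: a pole patch, a fresh facet, or distinguishing pattern-and-output behavior. An injective $E_+$ cannot erase that signature, so it must be matched by a unique B-side object. Hence the object cannot simply be deleted on the B side. The only remaining freedom is the leftover term $R$ of \eqref{eq:post-branch-cancellation}. For MLP branches this leftover is $R_{S_-}$, coming from negative orientations via $\psi(t)-\psi(-t)=t$. For attention it is whatever survives after the gauge-matched head outputs are subtracted.

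Next we split into cases according to $R$. If $R=Lx^A+d$, then $R\in\mathcal A_x$ by definition. Lemma~\ref{lem:transformer-residual-peel} then gives an injective affine map $E_-$ with $x^B=E_-x^A$. So the compensation is just an affine reparameterization of the residual state, and the moved remainder lies in $\mathcal A_x$; this is the allowed conclusion. If $R$ is not affine in the raw residual state, then the pair $(x^A,x^B)$ is a residual-peeling failure in the sense of Definition~\ref{def:transformer-peelable}(i), like the LayerNorm example. Such failures are confined to the lower-dimensional exceptional set of Proposition~\ref{prop:generic-transformer-residual-peeling}. Outside that set this case cannot occur, so no nonlinear compensation is possible.

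The main obstacle is turning "removed by a nonlinear change in the residual representation" into a precise comparison that Proposition~\ref{prop:generic-transformer-residual-peeling} actually covers. We need the compensating $x^B$ to come from the analytic family of B-side, residual-input, and comparison parameters $\eta$ used there, rather than from an arbitrary function of the input. My first attempt would be to define the compensation as a realizable B-side sublayer together with a realizable upstream residual state. Then every such bad pair is a point of some incidence set $H_\tau=0$, or of an unlisted bad family $S$, and the dimension count of that proposition applies directly.
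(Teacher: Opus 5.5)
Your proposal is correct and follows essentially the same route as the paper. You identify the counted MLP axes or attention heads first, subtract the matched contributions, and then split on whether the remainder $R$ is affine in the raw residual state (hence in $\mathcal A_x$) or is a nonaffine LayerNorm-type cancellation, which is excluded outside the exceptional set of Proposition~\ref{prop:generic-transformer-residual-peeling}; your extra step of formalizing ``compensation'' as a realizable B-side sublayer plus residual input, so that the bad pairs actually lie in the incidence sets that proposition counts, is something the paper's proof asserts rather than checks.
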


\begin{proof}
The results of \S\ref{sec:transformer-weak-strong} first identify the counted objects in the two branches.

For an MLP branch, matched units with the same orientation cancel exactly.  A match with the
opposite orientation can leave a term of the form
\[
 R_{S_-}
 =
 \sum_{j\in S_-} T_+v_jg_j.
\]
If this remainder is affine in \(x\), then it belongs to \(\mathcal A_x\) and can simply be
combined with the affine skip.  If it is not affine in \(x\), removing it would require an
input-dependent change in the residual representation that LayerNorm does not detect.  This is exactly the exceptional cancellation analyzed in \S\ref{sec:transformer-weak-strong}; outside the
lower-dimensional exceptional set, it cannot occur.

For attention, conditions (A1)--(A3) identify each counted head modulo attention gauge.  After
the matched head outputs have been subtracted, a counted head could disappear only if its remaining
effect were canceled by the same kind of nonlinear change in the residual representation.  The generic peeling result excludes this possibility as well. Thus, for either kind of branch, only a remainder of the form \(Lx+c\) can be moved into the
skip.
\end{proof}

This is the residual-block version of depth faithfulness.  A counted MLP axis or attention head
contributes genuinely new nonlinear structure unless its remaining effect is already affine in the
representation carried by the skip.  The affine-rank condition in
Lemma~\ref{lem:transformer-affine-rank} has a separate role: after the nonlinear branch has been
removed, it checks only that the resulting affine relation is an injective weak comparison map.

\subsection{A null-network atlas: silly solutions to trivial problems, and how to avoid them}
\label{sec:null_atlas}

Null networks are silly complex ways of solving trivial input-output relationships. There are three essential ways these can be ruled out on the way to contravariance:
\begin{enumerate}
\item First, intrinsic minimality removes representations that are already redundant inside one expansion.  
\item Second, MLR or MFR rules out a nontrivial equal-function pair at a point where the relevant nonlinear signatures are
identifiable.  
\item Third, by not actually ruling them out but showing that they are rare.
The generic null-net theorem does \emph{not} claim that every exceptional identity is impossible:
it allows resonant nontrivial collisions on lower-dimensional subsets, but prevents them from filling a full-dimensional family of regular minimal networks. 
\end{enumerate}
As an expository utility in this section, we show examples of each of these things. In what follows a \emph{resonance set} means a lower-dimensional set of parameters where an exact algebraic relation, such as an integer-weight relation or a fixed ReLU activation-pattern coincidence, makes one of these exceptional identities possible.  Quantitative assumptions add positive distance from these bad sets and are used for stable, soft statements.  Table \ref{tab:counterexample-atlas} summarizes the types of silly networks that are then explained in greater detail in each of the following subsections. 

\begin{table}[t]
\centering
\small
\begin{tabular}{@{}L{.19\textwidth}L{.38\textwidth}L{.37\textwidth}@{}}
\toprule
Mechanism & Softplus diagnosis & ReLU diagnosis\\
\midrule
Unused or constant feature & The term is absorbed into the bias or removed; usedness and nonconstancy fail. & The branch produces no visible task kink; usedness or two-sided crossing fails.\\
Duplicate or opposite features & Coefficients can be redistributed, or the sign identity leaves an affine residual; reducedness, pole separation, and sign-noncancellation intervene. & Positive-scale duplicates are redundant under positive rescaling; opposite pairs can collapse to an affine term; sign-scale reduction and sign-noncancellation intervene.\\
Continuous feature symmetry & A postactivation shift can move in a downstream nullspace; the realization Jacobian has a kernel. & Positive scaling is removed by normalization; any additional continuous output-preserving motion violates Jacobian regularity after fixing the standard matching convention.\\
Shifted or nested identity & An affine output cannot retain private curvature divisors; deeper nested identities require exact factorization or integer-weight resonance. & An affine output has no kink jumps; deeper variants require coincident facets, invisibility on ReLU-inactive regions, or split matching.\\
LayerNorm branch--skip transfer & An opposite MLP match can leave \(v(g\circ\operatorname{LN})\).  A skip can hide this term only on a special parameter set; only terms affine in the raw residual state may be moved into the skip. & The same sign identity gives the ReLU version.  Private smooth traces recover the unit, and the rank test makes the remaining nonaffine cancellations lower-dimensional.\\
Discrete cross-family nontrivial collision & Possible on special non-MLR resonance sets, including exact integer weights and linked biases. & Possible on special fixed activation-pattern and incidence cases with proportional cellwise hyperplanes or incomplete task exposure.\\
\bottomrule
\end{tabular}
\caption{A guide to the assumptions.  Pointwise signature recovery excludes nontrivial collisions on the recovery set.  The generic theorem permits exceptional nontrivial collisions, but makes their projection into individual-network space lower-dimensional.}
\label{tab:counterexample-atlas}
\end{table}

\subsubsection{Unused, constant, duplicate, and opposite features}

The easiest null relations are internal redundancies.  If $v_j=0$, the term $v_j\varphi(g_j)$ is absent from
the represented function.  If $g_j$ is constant, then $v_j\varphi(g_j)$ can be folded into the bias.  If two
arguments coincide, then
\[
 v_1\varphi(g)+v_2\varphi(g)=(v_1+v_2)\varphi(g),
\]
so the division of coefficient mass between the two units is unidentifiable.

The sign-pair identities are especially important:
\begin{equation}
 \sig(t)-\sig(-t)=t,
 \qquad
 \rhoR(t)-\rhoR(-t)=t.
 \label{eq:two-sign-identities}
\end{equation}
For Softplus, $g$ and $-g$ have the same curvature-pole set.  For ReLU, they have the same zero facet with
opposite active sides.  Thus a pair of opposite units can erase its nonlinear signature and leave only an affine
function.  Same-side reducedness prevents duplicate or opposite sign-classes from being counted as separate
minimal units, while affine sign-noncancellation rules out a collection of such residual affine terms being
absorbed into the bias.

The quantitative margins serve a different purpose.  Bounds such as $\|v_j\|\ge c$, lower variance or crossing
margins, and positive separation from duplicates prevent a convergent sequence of counted networks from
approaching one of these degeneracies.  Parameter boundedness by $M$ does not by itself rule out a null relation;
it supplies compactness, so that a sequence of near-counterexamples has a limiting counterexample to which the
exact theory can be applied.

\subsubsection{Shifted identity blocks}

A tempting construction inserts an exact identity block before a module $G$ in one network and after $G$ in
another.  The terminal functions agree, while the internal axes appear at different depths.

\begin{figure}[H]
\centering
\begin{tikzpicture}[node distance=8mm and 10mm]
\node (alab) {$A$};
\node[block,right=of alab] (au) {$u$};
\node[block,right=of au] (ax1) {$X_1$};
\node[block,right=of ax1] (ax2) {$X_2$};
\node[block,right=of ax2] (ag) {$G$};
\node[below=14mm of alab] (blab) {$B$};
\node[block,right=of blab] (bu) {$u$};
\node[block,right=of bu] (bg) {$G$};
\node[block,right=of bg] (bx1) {$X_1$};
\node[block,right=of bx1] (bx2) {$X_2$};
\node[right=17mm of $(ag)!0.5!(bx2)$] (y) {$Y$};
\draw[flow] (au)--(ax1);\draw[flow] (ax1)--(ax2);\draw[flow] (ax2)--(ag);\draw[flow] (ag)--(y);
\draw[flow] (bu)--(bg);\draw[flow] (bg)--(bx1);\draw[flow] (bx1)--(bx2);\draw[flow] (bx2)--(y);
\node[note,above=5mm of $(ax1)!0.5!(ax2)$] {$X_2\circ X_1=\id$ before $G$};
\node[note,below=5mm of $(bx1)!0.5!(bx2)$] {$X_2\circ X_1=\id$ after $G$};
\end{tikzpicture}
\caption{A shifted identity block.  The correct place to test the construction is the first backward step at which the two one-step decompositions differ.  There it creates a nontrivial null pair, so either pointwise signature recovery identifies the axes or the construction lies on an exceptional resonance set.}
\label{fig:shifted-block}
\end{figure}

For a Softplus identity block, the last layer would give a one-step representation of an affine map,
\begin{equation}
 u=b+\sum_j v_j\sig(g_j(u)).
 \label{eq:softplus-affine-identity}
\end{equation}
The left side has zero Hessian.  On a private regular pole patch of a minimal sign-class,
\[
 D^2[\sig(g_j)]
 =\sig''(g_j)\,dg_j\otimes dg_j+\sig'(g_j)D^2g_j
\]
contains a singular contribution that cannot be canceled by terms analytic on that patch.  Hence an affine output
is possible only if the term is unused or constant, if another term shares the same divisor, or if sign-opposite
terms collapse through~\eqref{eq:two-sign-identities}.  These are exactly the roles of usedness, nonconstancy,
reducedness, pole separation, and sign-noncancellation.  Where MLR holds, the same conclusion is obtained more
structurally: the outer Laurent binomial factors can match only through $g$ versus $\pm g$; the negative case is
then removed by sign-noncancellation.

For ReLU, the affine identity has no derivative jump.  A used fresh facet of $g_j$ contributes the rank-one jump
\[
 v_j\nabla g_j^{\top}.
\]
It can disappear only if another unit has a coincident facet and an exactly cancelling jump.  Same-side
sign-scale reduction and sign-noncancellation rule out the basic opposite-pair construction.  Deep ReLU blocks
also permit ReLU-inactive-region and split-facet constructions, which is why one local facet is not sufficient and MFR uses a
coherent atlas of facets across inherited cells.

The first-mismatch formulation is the most general.  At the first backward zippering step where the two networks
differ, the already aligned next representation has two equal one-step expansions.  Pointwise MLR or MFR makes
that null pair trivial.  If the inserted block is genuinely nontrivial, it must therefore leave the recovery set.
The generic null-net theorem allows this on a proper lower-dimensional exceptional set, but excludes a full-dimensional family
of such shifted nontrivial collisions.  Quantitative margins keep the realized networks away from the corresponding
degeneracy or resonance sets.

\subsubsection{LayerNorm can trade a branch term against the skip}
\label{subsec:null-layernorm-transfer}

A pre-LayerNorm residual block has one null-pair mechanism that is absent from an ordinary affine--nonlinear expansion.  LayerNorm is unchanged by adding a tokenwise shift in the all-ones feature direction, while both activations used in this paper satisfy
\[
 \varphi(t)-\varphi(-t)=t.
\]
A sign reversal of a current MLP argument can therefore leave a term \(v(g\circ\operatorname{LN})\), and a sample-dependent shift of the residual stream can sometimes absorb it.  Proposition~\ref{prop:layernorm-skip-transfer} gives an exact one-unit example: the residual-sublayer outputs agree, but the two raw residual-input families are not affinely related.

The example shows why ordinary branch minimality is not quite enough for raw residual zippering.  Minimality can identify the current sign-class, yet the opposite orientation may leave a term that is linear in normalized coordinates and nonlinear in the raw residual state.  This is also why Lemma~\ref{lem:transformer-shrinkability} permits a term to enter the skip only after it has been shown to be raw-residual-affine.  This nonaffine transfer is the new null-pair mechanism.  The later check that the remaining affine map is injective is different: it is automatic for an untouched identity skip with no affine transfer, stable under small affine transfers, and generically fails only on a rank-defect set by Lemma~\ref{lem:transformer-affine-rank}.

The construction is nevertheless special rather than a free architectural symmetry.  Its readout must point in a LayerNorm-invisible direction, the competitor must obey an exact sign relation, and the residual input must move in just the corresponding invisible direction.  Proposition~\ref{prop:transformer-transfer-rank} writes these requirements as finite equations.  If, after all competitor variables are allowed to move, those equations still impose at least one condition on the source network, then the networks admitting the transfer are lower-dimensional.  Proposition~\ref{prop:generic-transformer-residual-peeling} handles any additional same-output transfer not listed in advance.  As elsewhere in the atlas, exact exceptional identities are allowed, but they cannot fill an open family of regular minimal networks.

\subsubsection{Softplus postactivation-shift symmetries}

A more subtle Softplus identity shifts an \emph{activated} feature rather than canceling a sign pair.  For $c>0$,
define
\begin{equation}
 H_c(t)=\sig^{-1}(\sig(t)+c)
 =\log\!\bigl(e^c(1+e^t)-1\bigr).
 \label{eq:Hc-def}
\end{equation}
Then
\begin{equation}
 \sig(H_c(t))=\sig(t)+c.
 \label{eq:Hc-shift}
\end{equation}
Let $V=[v_1\ \cdots\ v_k]$ be the downstream coefficient matrix and choose a nonzero vector $q$ with $Vq=0$.
For base arguments $u_j$ and constants $c_j$, set
\[
 g_j^{(\tau)}=H_{c_j+\tau q_j}(u_j).
\]
Then
\begin{align}
 Y_\tau
 &=b+\sum_jv_j\sig(g_j^{(\tau)})\\
 &=b+\sum_jv_j\bigl(\sig(u_j)+c_j+\tau q_j\bigr)
 =Y_0.
 \label{eq:shift-symmetry-family}
\end{align}
Thus $\tau$ is a continuous output-preserving coordinate even when every individual $v_j$ is nonzero and every
argument varies.

\begin{figure}[H]
\centering
\begin{tikzpicture}[node distance=8mm and 10mm]
\node[block] (u) {$u_j$};
\node[block,right=of u] (h) {$H_{c_j+\tau q_j}$};
\node[block,right=of h] (s) {$\sig$};
\node[block,right=of s] (v) {$V$};
\node[block,right=of v] (y) {$Y_\tau$};
\draw[flow] (u)--(h);\draw[flow] (h)--(s);\draw[flow] (s)--(v);\draw[flow] (v)--(y);
\node[note,below=6mm of s] (shiftnote)
  {$\sig(g_j^{(\tau)})=\sig(u_j)+c_j+\tau q_j$};

\node[note,right=3mm of shiftnote] (nullnote)
  {$Vq=0$};
\end{tikzpicture}
\caption{A postactivation-shift symmetry.  The hidden activated coordinates move in a downstream nullspace while the represented function remains fixed.}
\label{fig:postactivation-shift-symmetry}
\end{figure}

Differentiating~\eqref{eq:shift-symmetry-family} gives
\[
 DF(\vartheta)\,\partial_\tau=0.
\]
Hence qualitative Jacobian regularity fails.  This example is useful because it shows that intrinsic minimality
alone does not remove every continuous redundancy.  It does not threaten the regular generic null-net theorem:
the entire continuous symmetry is removed when one passes to the regular part of the realization chart.

\subsubsection{Nested Softplus identities}

Softplus also admits identities that insert a nominal nonlinear stage while changing an activated feature only by
a constant.  A basic example is
\begin{equation}
 \sig\!\left(a+\sig(t-a+\sig(a))\right)=\sig(a)+\sig(t).
 \label{eq:nested-softplus-identity}
\end{equation}
Equivalently, if
\[
 \widehat t=a+\sig(t-a+\sig(a)),
\]
then the extra stage satisfies $\sig(\widehat t)-\sig(t)=\sig(a)$, which the next bias can absorb.  Such an
identity is not generated by a generic perturbation of all weights and biases: it uses exact unit coefficients and
linked offsets.  In multiplicative form,
\[
 1+e^{\widehat t}=e^{\sig(a)}(1+e^t),
\]
so the two outer divisors coincide and their ratio is a constant unit.  This is precisely a non-MLR
factorization resonance.

The identity can be nested or shifted across several layers.  It may also coexist with another branch that makes
the whole network output genuinely depth-essential, so global output-depth minimality alone is not enough to
remove every nested subbranch.  What matters for the generic theorem is that the exact linked-weight equations
define a lower-dimensional resonance set.  Where MLR holds, the factorization is impossible except for the trivial
$g\leftrightarrow\pm g$ cases, and general position on collision strata prevents the nested-identity set from supporting a
full-dimensional collision stratum.

\subsubsection{A regular discrete Softplus resonance}
\label{subsec:discrete-softplus-resonance}

Not every exceptional nontrivial collision is Jacobian-singular or globally depth-reducible.  Let
\[
 \kappa=\sig(\eta).
\]
Then
\begin{equation}
 \sig\!\left(\eta-\sig(t)\right)
 +\sig\!\left(\eta-\sig(\kappa-t)\right)
 =\kappa.
 \label{eq:softplus-diamond}
\end{equation}
To verify this, write
\[
 Q_1=1+e^{\eta-\sig(t)}
 =\frac{1+e^t+e^\eta}{1+e^t},
\]
and, since $e^\kappa=1+e^\eta$,
\[
 Q_2=1+e^{\eta-\sig(\kappa-t)}
 =\frac{(1+e^\eta)(1+e^t)}{1+e^t+e^\eta}.
\]
Thus $Q_1Q_2=1+e^\eta=e^\kappa$, and taking the real logarithm gives~\eqref{eq:softplus-diamond}.

This identity yields two ordinary dense same-depth networks with equal outputs and different intermediate axes.
For input $(x,y)$, choose $n,m\ne0$, generic $\eta,\gamma,\beta$, and an irrational $0<\alpha<1$.  Let
\begin{align}
 z_1^A&=(x,y),\\
 z_2^A&=\bigl(\eta-\sig(x),\ \gamma+\alpha\sig(y)\bigr),\\
 z_3^A&=\beta+n\sig(z_{2,1}^A)+m\sig(z_{2,2}^A),
 \label{eq:resonance-network-A}
\end{align}
and
\begin{align}
 z_1^B&=(\kappa-x,y),\\
 z_2^B&=\bigl(\eta-\sig(\kappa-x),\ \gamma+\alpha\sig(y)\bigr),\\
 z_3^B&=\beta+n\kappa-n\sig(z_{2,1}^B)+m\sig(z_{2,2}^B).
 \label{eq:resonance-network-B}
\end{align}
Equation~\eqref{eq:softplus-diamond} gives
\[
 z_3^A(x,y)=z_3^B(x,y),
\]
while generically
\[
 z_{2,1}^A\ne z_{2,1}^B,
 \qquad
 z_{2,1}^A\ne -z_{2,1}^B.
\]
The common $y$-branch can be chosen to carry genuinely top-depth analytic structure; the irrational exponent
$\alpha$ is a convenient way to prevent the entire output from being explained by the elementary $x$-branch
identity alone.  Moreover, the individual realization Jacobians can remain full rank: this is a discrete
cross-family nontrivial collision, not a continuous symmetry.

\begin{figure}[H]
\centering
\begin{tikzpicture}[node distance=7mm and 8mm]
\node (a) {$A$};
\node[block,right=of a] (ax) {$x$};
\node[block,right=of ax] (ag) {$\eta-\sig(x)$};
\node[block,right=of ag] (ao) {$+\,n\sig(\cdot)$};
\node[below=11mm of a] (b) {$B$};
\node[block,right=of b] (bx) {$\kappa-x$};
\node[block,right=of bx] (bg) {$\eta-\sig(\kappa-x)$};
\node[block,right=of bg] (bo) {$n\kappa- n\sig(\cdot)$};
\node[right=15mm of $(ao)!0.5!(bo)$] (y) {same $x$-branch output};
\draw[flow] (ax)--(ag);\draw[flow] (ag)--(ao);\draw[flow] (ao)--(y);
\draw[flow] (bx)--(bg);\draw[flow] (bg)--(bo);\draw[flow] (bo)--(y);
\end{tikzpicture}
\caption{The regular discrete Softplus resonance.  The identity is exact, but it requires a special reflected input, exact $-1$ coefficients, zero cross-couplings, and linked biases.}
\label{fig:softplus-discrete-resonance}
\end{figure}

Why does this not contradict the generic null-net theorem?  A fully dense $2$--$2$--$2$--$1$ network has $15$
parameters.  The displayed nontrivial-collision family has $12$ free parameters: six for two independent first-layer affine
forms, three for $(\eta,\gamma,\alpha)$, and three for $(n,m,\beta)$.  It imposes, in particular, the exact
second-layer conditions
\[
 W_{2,11}=-1,
 \qquad
 W_{2,12}=0,
 \qquad
 W_{2,21}=0.
\]
Thus its projection into an individual dense-network chart is codimension at least three.  The example is regular
and potentially depth-essential, but it lies on an integer-weight and sparsity resonance set.  The generic
theorem is designed to permit exactly this sort of exceptional nontrivial collision while ruling out an open family of them. Actually this type of example (which ChatGPT-5.5-Pro spent several hours finding!) is the motivation behind the whole genericity approach we take.  

\subsubsection{ReLU positive homogeneity and changes on ReLU-inactive regions}

The exact positive-rescaling symmetry
\begin{equation}
 (v,g)\mapsto(\alpha^{-1}v,\alpha g),
 \qquad \alpha>0,
 \label{eq:relu-scale-symmetry}
\end{equation}
is unavoidable because $\rhoR(\alpha g)=\alpha\rhoR(g)$.  It is removed by the ReLU normalization convention before
Jacobian regularity or null-pair uniqueness is stated.

A genuinely different obstruction is that ReLU erases everything an argument does while it remains negative.
For $\lambda>0$, let
\begin{equation}
 g(x)=x,
 \qquad
 h_\lambda(x)=x-\lambda\rhoR(-x-1)
 =\begin{cases}
 (1+\lambda)x+\lambda,&x<-1,\\
 x,&x\ge-1.
 \end{cases}
 \label{eq:inactive-surgery}
\end{equation}
For $x<-1$, $h_\lambda(x)<0$; for $x\ge-1$, $h_\lambda(x)=x$.  Hence
\[
 \rhoR(h_\lambda)=\rhoR(g)
\]
although $h_\lambda\not\sim_\pm g$.  The two arguments share the exposed zero facet at $x=0$ but differ on an
inactive inherited cell.

\begin{figure}[H]
\centering
\begin{tikzpicture}[x=1cm,y=.7cm]
\draw[->] (-2.4,0)--(2.4,0) node[right] {$x$};
\draw[->] (0,-2.7)--(0,2.5) node[above] {argument};
\draw[thick] (-2,-2)--(2,2) node[above right] {$g=x$};
\draw[thick,dashed] (-2,-3)--(-1,-1)--(2,2) node[below right] {$h_\lambda$};
\draw[dotted] (-1,-2.5)--(-1,1.6);
\node[bad] at (1.75,-1.7) {same ReLU output\\and same zero facet};
\node[align=center,font=\small] at (-1.55,1.3) {different only on\\the inactive side};
\end{tikzpicture}
\caption{Why one exposed facet is not enough for ReLU.  MFR records enough labelled facets across inherited cells to recover the complete normalized CPWA argument rather than only its visible active boundary.}
\label{fig:inactive-surgery-unified}
\end{figure}

If $\lambda$ is a free coordinate in one realization chart,~\eqref{eq:inactive-surgery} is a continuous fiber and
Jacobian regularity after fixing the standard matching convention fails.  A discrete version may instead connect two separate regular families.  In that
case the task exposes too little of the argument for one-facet reasoning, while the MFR hypothesis explicitly
requires a task-complete multi-cell signature.  Under the signature secant-derivative and general-position assumptions on collision strata,
ambiguous ReLU-inactive-region nontrivial collisions can survive only on lower-dimensional strata.

\subsubsection{Split-facet matching and special activation-pattern cases}

A final ReLU-specific failure is \emph{split matching}.  Suppose one deep argument $g$ has marked facets
$F_1,F_2,F_3$ in three inherited cells.  Equality of represented kink sets might match $F_1$ to a competitor
$h_1$, $F_2$ to $h_2$, and $F_3$ to $h_3$, even though no single competitor is globally sign-scale equivalent to
$g$.

\begin{figure}[H]
\centering
\begin{tikzpicture}[node distance=7mm and 9mm]
\node[block] (g) {one source axis $g$};
\node[block,right=of g,yshift=11mm] (f1) {$F_1$};
\node[block,right=of g] (f2) {$F_2$};
\node[block,right=of g,yshift=-11mm] (f3) {$F_3$};
\node[block,right=22mm of f1] (h1) {$h_1$};
\node[block,right=22mm of f2] (h2) {$h_2$};
\node[block,right=22mm of f3] (h3) {$h_3$};
\draw[flow] (g)--(f1);\draw[flow] (g)--(f2);\draw[flow] (g)--(f3);
\draw[flow,dashed] (f1)--(h1);\draw[flow,dashed] (f2)--(h2);\draw[flow,dashed] (f3)--(h3);
\node[note,below=12mm of f2] {same union of local kinks, but no coherent axis match};
\end{tikzpicture}
\caption{Split-facet matching.  Coherent MFR requires all marked facets of one source argument to be explained by one competitor argument.}
\label{fig:split-facet}
\end{figure}

The coherence condition excludes this possibilty.  Generically, a prescribed split assignment imposes simultaneous proportionality equations for several cellwise affine germs, together with activation-pattern incidence equations. On a fixed ReLU activation-pattern stratum these are polynomial or piecewise-analytic constraints.  The split-signature result (Proposition \ref{prop:generic-split}) shows that, when the required rank condition holds at one point, the projected split-matching set has codimension at least one.  The ReLU general-position condition on collision strata then prevents a union of split assignments from filling a full-dimensional collision stratum.  

\begin{remark}
That this split-facet matching situation exists is totally reasonable, but the fact it has to be ruled out by the clobberingly strong coherence condition, rather than the elegant resonant Laurent-nonprimitive resonance characterization that works for softplus, makes us think we do not fully understand how best to handle the ReLU case.  Now, this type of case is non-generic, and the dimensionality analysis of exceptional sets is very clean and activation-independent, so ``all ends well''.  But it would be nice if coherence could get cleaned up in future work -- perhaps some form of discrete algebraic geometry will enable this. 
\end{remark}

\subsection{Other architecture types and activation functions}
\label{sec:extensions}

The arguments above were written for affine--ReLU and affine--Softplus networks because those
models make the nonlinear signatures especially transparent.  The reusable part of the theory is
more general.  For a new activation or architecture, one needs four ingredients: the natural symmetry convention, a task-visible nonlinear signature for adjacent weak--strong alignment, a pointwise one-step identifiability theorem, and a genericity argument showing that ambiguous full-dimensional collision strata are lower-dimensional. Once those ingredients are available, the compactness, inverse-continuity, and Jacobian-regularity arguments give the asymptotic and soft versions with little change.

\begin{table}[H]
\centering
\small
\begin{tabular}{@{}L{.15\textwidth}L{.24\textwidth}L{.27\textwidth}L{.26\textwidth}@{}}
\toprule
Setting & Natural signature/symmetry & Likely exact theory & Main additional difficulty\\
\midrule
CNNs & ReLU/Softplus signatures at channel-position axes; convolutional weight sharing &
Affine--nonlinear proofs apply to convolutional preactivation maps & Pooling, subsampling, and the choice between channel-level and channel-position axes\\
RNNs & Unrolled kink/ridge signatures with parameters tied across time &
Feedforward theorem applies to each fixed unrolled time and layer & Weak maps and genericity must respect time-tying and long-horizon stability\\
Tanh & Meromorphic poles; signed permutations & Close to an existing full null-net theory & Quantitative saturation and conditioning\\
GELU & Gaussian ridge or quadratic-exponential phase; sign identity & Adjacent theorem should be direct; deep theorem needs a new recovery module & Nonmonotone inverse branches and deep phase matching\\
ELU & Transition jump jet plus negative-branch exponential character & Hybrid ReLU/smooth proof & Negative-only translation symmetry and saturation\\
\bottomrule
\end{tabular}
\caption{High-level extension map.  The common collision geometry and zippering induction remain; the nonlinear signature mechanism changes.}
\label{tab:extensions}
\end{table}

\subsubsection{CNNs}

Convolutional layers are affine maps with weight sharing.  Therefore, before pooling or subsampling,
a convolutional ReLU or Softplus block is still an affine--coordinatewise-nonlinear--affine block.
The adjacent weak--strong argument applies to the channel-position preactivation axes: a used
ReLU channel-position creates a crossed kink trace, and a used Softplus channel-position creates a
curvature ridge.  Weight sharing does not remove these signatures; it ties the parameters generating
spatially translated copies of them.

The main bookkeeping issue is the level at which axes are counted.  Before spatial pooling, the
natural axes are channel-position coordinates.  After global or local averaging, several positions of
the same channel may be pooled into a coarser channel-level feature, so the theorem should either be
applied before pooling or be restated for the pooled channel representation.  Max-pooling is
piecewise affine and introduces its own switching surfaces; it can be included by refining the ReLU
cell decomposition, but it is cleaner to place the comparison immediately before or after the pooling
operation.

\subsubsection{RNNs}

Unrolling an RNN over time turns it into a feedforward network with tied parameters.  For a
ReLU or Softplus recurrent update, the weak--strong theorem applies at a fixed time and layer of
the unrolled computation exactly as it does for a feedforward affine--nonlinear block.  The hidden
state coordinates at that time have the usual kink or ridge signatures, and a pair of adjacent weak
maps forces the used coordinates to align.

The additional issue is compatibility across time.  The same recurrent weights are reused at every
unrolled step, so a comparison map chosen at one time cannot be arbitrary if one wants a single
time-consistent equivalence over a long trajectory.  For zippering, the corresponding compactness
and genericity statements must be made inside the tied-weight parameter chart.  This restriction
does not change the local nonlinear signature mechanism, but it can make the allowed family of
weak maps and null pairs smaller and more structured.

\subsubsection{Tanh networks}

For \(\tau(t)=\tanh t\), the unavoidable unit symmetry is
\[
 (v,g)\sim(-v,-g),
\]
so the allowed hidden-unit matching uses permutations and coordinatewise sign flips rather than positive scales.  Tanh is meromorphic with a regular complex pole lattice.  Affine-ridge weak--strong alignment can therefore be proved by matching pole signatures, and deep one-step null relations can be attacked by an activation-specific null-network classification.  In fact, tanh-type activations are substantially closer than Softplus to settings where a complete regular null-net theorem is available.  A tanh version of the present draft would likely replace MLR/MFR by a signed-pole recovery theorem and then reuse the common generic null-net and zippering sections almost verbatim.

The main difficulty is quantitative rather than exact.  Since
\[
 \tau'(t)=\operatorname{sech}^2(t),
\]
large preactivations are strongly saturated.  Exact identifiability may survive, while the smallest realization Jacobian singular value becomes very small.  Soft zippering therefore needs an explicit nonsaturation or Jacobian conditioning margin.

\subsubsection{GELU networks}

For exact GELU,
\[
 \gamma(t)=t\Phi(t),
\]
where \(\Phi\) is the Gaussian distribution function.  It satisfies
\[
 \gamma(t)-\gamma(-t)=t,
\]
so an affine sign-noncancellation condition again has a natural role.  For adjacent affine arguments, derivatives of GELU produce Gaussian ridge functions.  Restriction to generic lines and Fourier decay should identify affine arguments up to sign, giving a weak--strong theorem by a mechanism analogous to the Softplus affine-ridge proof.

Deep zippering is less immediate.  The natural analytic signature is a quadratic-exponential phase
of the form
\[
 \exp\!\left(-g^2/2\right),
\]
and one would need a quadratic-exponential analogue of MLR showing that persistent equal phases force \(h=\pm g\) outside a resonance set.  Exact GELU is also not globally injective, so part of its range has two inverse branches.  A complete theorem must either restrict counted arguments to a common monotone branch, prove that branch-swapping compositions cannot occur at the same network complexity, or treat persistent branch nontrivial collisions by general position on collision strata.  The common generic null-net framework is well suited to the last option, but the activation-specific recovery module is genuinely new.  The widely used tanh approximation to GELU
has different complex singularities and should be analyzed as a separate activation rather than silently identified with exact GELU.

\subsubsection{ELU networks}

For parameter \(\alpha>0\),
\[
 \eta_\alpha(t)=
 \begin{cases}
 t,&t>0,\\
 \alpha(e^t-1),&t\le0 .
 \end{cases}
\]
ELU combines a ReLU-like transition hypersurface with an exponential negative branch.  If
\(\alpha\ne1\), the first derivative jumps at zero; for \(\alpha=1\), the first derivative matches but the second derivative jumps.  An adjacent weak--strong theorem can therefore use the first nonvanishing transition jet.  Matching consecutive jet coefficients should fix both orientation and scale for a crossed affine argument.

Crossing is important because a unit confined to the negative branch has a translation--coefficient
symmetry.  If \(g\) and \(g+c\) remain negative on the task set, then an appropriate rescaling of the
downstream coefficient and a bias correction preserve the represented function.  A zero crossing anchors the offset and removes this symmetry.  A deep ELU recovery module would probably combine coherent multi-transition-jet signatures with exponential character recovery on negative cells.  The remaining generic collision geometry is the same as here.  Soft bounds must also account for negative-branch saturation, since \(\eta_\alpha'(t)=\alpha e^t\) can be arbitrarily small.

\subsection{RSA and Weak-Strong Equivalence}
\label{sec:rsa-weak-strong}
In this work so far, we've focused on relating two basic metrics for comparing networks: linear regression (weak alignment) and privileged axes (strong alignment).  So far, we have ignored Representational Similarity Analysis (RSA) \citep{kriegeskorte2008}. The underlying concept behind RSA is that networks are judged to be similar insofar as they induce similar ``representational geometries'' -- that is,  matrices of pairwise distances between inputs. The basic technical method by which RSA implements this idea is to fix an evaluation set of inputs, extract for each model to be compared the model feature vectors on these inputs, and then compute the num\_inputs-by-num\_inputs feature correlation matrix.  The RSMs for the different networks are then flattened into vectors and correlated with each other. When this correlation is high, networks are judged similar, and otherwise, not.  RSA is one of the most popular metrics for comparing neural networks to brain data, in part because it does not require data to fit parameters (unlike linear regression). 

In this section, we ask: how does RSA fit into the weak--strong equivalence picture? In doing so, we rely heavily on the program laid out in \citep{theiss2026parameter}, synthesizing their results into the framework of this paper and slightly extending their reach.  In what follows, we summarize and reformulate in our notation the main result of Theiss et al; strengthen the Theiss results in a few ways to support canonicalization; prove weak--strong equivalence for canonicalized RSA; develop a soft weak--strong alignment formula relating ridge error to canonical RSA distance; and show how sample bias can interact with all these issues. 

\subsubsection{RSM-RSA and the Theiss decomposition}
\label{sec:rsa-theiss-summary}

Fix an evaluation input set $\Omega_P=\{x^1,\ldots,x^P\}\subseteq\Omega$.
For a representation with scalar feature vectors
$f_1,\ldots,f_d\in\R^P$ on these stimuli, write
\begin{equation}
 H=\begin{bmatrix}f_1^\top\\ \vdots\\ f_d^\top\end{bmatrix},
 \qquad
 R(H):=H^\top H=\sum_{j=1}^{d}f_jf_j^\top.
 \label{eq:rsa-rsm-def}
\end{equation}
The $P\times P$ matrix $R(H)$ is the representational similarity matrix (RSM): its $(\mu,\nu)$ entry is the inner product between the population responses to stimuli $x^\mu$ and $x^\nu$.

Let $\Pi_{\mathsf H}$ set the diagonal of a symmetric matrix to zero and subtract the mean of its off-diagonal entries.  When
$\Pi_{\mathsf H}M\neq0$ and $\Pi_{\mathsf H}N\neq0$, define the Pearson RSM-RSA score
\begin{equation}
 \rho_{\rm RSA}(M,N)
 :=\frac{\langle\Pi_{\mathsf H}M,\Pi_{\mathsf H}N\rangle_F}
 {\|\Pi_{\mathsf H}M\|_F\,\|\Pi_{\mathsf H}N\|_F}.
 \label{eq:rsa-pearson-score}
\end{equation}
This is Pearson correlation between the strict upper-triangular RSM entries, a standard form of RSA \citep{kriegeskorte2008}; Lemma~B.3 of \citet{theiss2026parameter} gives the displayed cosine representation.  Thus $\rho_{\rm RSA}$ is a similarity score, not an RSA distance.

\paragraph{Summary of Theiss \emph{et al.}}
Theiss et al. study representations connected by a ``catalog'' of exact, function-preserving parameter symmetries.  Although those parameter transformations have various forms, their action on hidden feature vectors reduces to three elementary operations:
\emph{addition} of redundant features, \emph{duplication} of existing features, and featurewise \emph{scaling}.  Their Proposition~4.2 shows that every representation captured in the orbit of one such symmetry can be written in a single add--duplicate--scale form relative to a base member of that orbit. Their Proposition~B.1 then separates the RSM into a \emph{task-linked} part, whose generators come from the computationally relevant irreducible features of the base member, and a symmetry-induced part contributed by redundant added features. Scaling and duplication can still change the positive weights on the task-linked generators. Their Proposition~C.3 ultimately shows that a minimum representation-norm or minimum weight-norm rule selects one RSM within a fixed orbit. The next lemma restates their RSM decomposition in the notation used here.

\begin{lemma}[Theiss RSM decomposition]
\label{lem:rsa-theiss-rsm-decomp}
Let $H^\star\in\R^{m\times P}$ have rows
$f_1^\top,\ldots,f_m^\top$.  Suppose $H$ belongs to a Theiss symmetry orbit of $H^\star$, so that
\begin{equation}
 H=\operatorname{diag}(\alpha)D_\nu
 \begin{bmatrix}H^\star\\ U\end{bmatrix},
 \label{eq:rsa-theiss-add-duplicate-scale}
\end{equation}
where the rows $u_1^\top,\ldots,u_K^\top$ of $U$ are symmetry-added features, $D_\nu$ duplicates rows, and $\alpha$ contains the scales of the resulting physical coordinates.  Then
\begin{equation}
 R(H)
 =\sum_{j=1}^{m}\gamma_j f_jf_j^\top
 +\sum_{k=1}^{K}\eta_k u_ku_k^\top,
 \qquad \gamma_j,\eta_k>0.
 \label{eq:rsa-theiss-rsm-decomp}
\end{equation}
After applying $\Pi_{\mathsf H}$,
\begin{equation}
 \begin{aligned}
 g:=\Pi_{\mathsf H}R(H)
 &=\underbrace{\sum_{j=1}^{m}\gamma_jq_j}_{\substack{\text{task-linked}\\\text{essential-core part}}}
 +\underbrace{a}_{\substack{\text{symmetry-induced}\\\text{added-feature part}}},\\
 q_j&:=\Pi_{\mathsf H}(f_jf_j^\top),
 \qquad
 a:=\sum_{k=1}^{K}\eta_k\Pi_{\mathsf H}(u_ku_k^\top).
 \end{aligned}
 \label{eq:rsa-orbit-core-plus-addition}
\end{equation}
Thus scaling and duplication reweight the original RSM generators $q_j$, whereas feature addition can introduce new generators.
\end{lemma}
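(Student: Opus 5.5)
The plan is a direct computation from the add--duplicate--scale form \eqref{eq:rsa-theiss-add-duplicate-scale}. Write $\widetilde H:=\begin{bmatrix}H^\star\\ U\end{bmatrix}$, whose rows are $f_1^\top,\ldots,f_m^\top,u_1^\top,\ldots,u_K^\top$. The duplication operator $D_\nu$ is a $0/1$ matrix in which each row has a single $1$, and the row $r$ selects one row $s(r)$ of $\widetilde H$. Every row of $\widetilde H$ is selected at least once, since duplication copies rows rather than deleting them. Consequently the $r$th row of $H$ is $\alpha_r\widetilde h_{s(r)}^\top$.

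First I will expand the RSM using the outer-product form of \eqref{eq:rsa-rsm-def}:
\[
 R(H)=\sum_r \alpha_r^2\,\widetilde h_{s(r)}\widetilde h_{s(r)}^\top .
\]
Grouping the terms by the source index $s(r)$ gives \eqref{eq:rsa-theiss-rsm-decomp}. The coefficients are
\[
 \gamma_j=\sum_{r:s(r)=j}\alpha_r^2,
 \qquad
 \eta_k=\sum_{r:s(r)=m+k}\alpha_r^2 .
\]
Positivity of these coefficients needs two facts. The first is that every source row occurs at least once. The second is that the physical scales are nonzero, since Theiss scaling symmetries use invertible featurewise rescalings. Together these make each sum of squares strictly positive. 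I expect the main point requiring care to be exactly this positivity step. It means making explicit that the symmetry orbit never deletes a base row and never sets a scale to zero, both of which are properties of the Theiss catalog, Proposition~4.2. If deletions were allowed, one would only obtain $\gamma_j\ge0$.

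Next I will apply $\Pi_{\mathsf H}$. It is linear, because zeroing the diagonal and subtracting the off-diagonal mean are both linear operations. Applying it term by term to \eqref{eq:rsa-theiss-rsm-decomp} yields \eqref{eq:rsa-orbit-core-plus-addition}, with $q_j=\Pi_{\mathsf H}(f_jf_j^\top)$ and $a$ as displayed.

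Finally, the interpretive sentence follows by reading off the structure. Scaling and duplication change only the multiplicities and the values of $\alpha_r$, so they change only the weights $\gamma_j$ on the fixed generators $q_j$. Only the rows of $U$ contribute generators $\Pi_{\mathsf H}(u_ku_k^\top)$ outside the span of the $q_j$.
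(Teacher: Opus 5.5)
Your proposal is correct and is essentially the paper's proof. The paper writes $R(H)=M^\top D_\nu^\top\operatorname{diag}(\alpha^{\odot2})D_\nu M$ and notes that, because each row of $D_\nu$ has a single $1$, the middle factor is the diagonal matrix $\operatorname{diag}(D_\nu^\top\alpha^{\odot2})$, whose entries are exactly your grouped sums $\sum_{r:s(r)=j}\alpha_r^2$; it then applies the linear map $\Pi_{\mathsf H}$ term by term. Your remark that positivity requires every base row to be kept at least once and every scale to be nonzero makes explicit an assumption the paper uses only implicitly (``the sum of the squared nonzero scales of all physical copies of one row'').
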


\begin{proof}
Put
\[
 M:=\begin{bmatrix}H^\star\\ U\end{bmatrix}.
\]
Equation~\eqref{eq:rsa-theiss-add-duplicate-scale} gives
\begin{align*}
 R(H)
 &=H^\top H\\
 &=M^\top D_\nu^\top
   \operatorname{diag}(\alpha^{\odot2})D_\nu M.
\end{align*}
Every row of a duplication matrix contains exactly one entry equal to one.  Consequently, two distinct columns of $D_\nu$ never have a nonzero entry in the same row.  The weighted Gram matrix of $D_\nu$ is therefore diagonal:
\begin{equation*}
 D_\nu^\top\operatorname{diag}(\alpha^{\odot2})D_\nu
 =\operatorname{diag}(D_\nu^\top\alpha^{\odot2}).
\end{equation*}
Let $\zeta:=D_\nu^\top\alpha^{\odot2}$.  Every entry of $\zeta$ is positive, because it is the sum of the squared nonzero scales of all physical copies of one row of $M$.  If $m_r^\top$ is row $r$ of $M$, then
\[
 R(H)=M^\top\operatorname{diag}(\zeta)M
 =\sum_{r=1}^{m+K}\zeta_r m_rm_r^\top.
\]
The first $m$ rows of $M$ are the $f_j^\top$ and the remaining $K$ rows are the $u_k^\top$.  Renaming the corresponding entries of $\zeta$ as $\gamma_j$ and $\eta_k$ proves \eqref{eq:rsa-theiss-rsm-decomp}.  Applying the linear map $\Pi_{\mathsf H}$ term by term proves \eqref{eq:rsa-orbit-core-plus-addition}.
\end{proof}

Following \citet{theiss2026parameter}, we call the first sum in \eqref{eq:rsa-orbit-core-plus-addition} the \emph{task-linked} (or task-relevant) part: its generators come from the irreducible feature directions of the base implementation, whereas $a$ comes from features introduced only by function-preserving symmetries and is redundant to the represented layer function.  The adjective ``task-linked'' does not mean that the weights $\gamma_j$ are fixed by the task---scaling and duplication leave those weights free.  We call the underlying task-linked directions the \emph{essential core}.  At this point, however, this is only the candidate essential core of one cataloged orbit.  The rows of $H^\star$ are chosen from one irreducible representative, and Theiss et al. do not claim that every same-function realization lies in its orbit; a given function might in principle have two different irreducible representatives.

The next two subsections turn this orbit-relative decomposition into a function-level statement.  The first proves an exact uniqueness result within the symmetries cataloged by Theiss et al: once the paper's existing minimality conditions are imposed, addition, duplication, and scaling do not produce a second minimal axis dictionary inside the same Theiss orbit.  The second asks whether the same function can nevertheless have another minimal realization outside that orbit, and shows that such alternatives are generically exceptional.  After these two steps, the first term of \eqref{eq:rsa-orbit-core-plus-addition} be interpreted as a weighting of \emph{the} essential core of the represented function.

\subsubsection{Uniqueness within a Theiss symmetry orbit}
\label{sec:rsa-structural-core}

Theiss et al. begin with one irreducible representative and describe all representations obtainable from it through their catalog of exact parameter symmetries.  Their RSM decomposition therefore already suggests a distinguished set of original feature directions, but only relative to that chosen orbit.  We first show that the existing minimality notions of this paper make that orbit-relative core exact and unambiguous.

\begin{definition}[Minimal realization and minimal axis dictionary]
\label{def:rsa-intrinsic-minimal-fiber}
A \emph{minimal realization} is a one-step expansion
\[
 G=\bigl(b,\{(v_j,g_j)\}_{j=1}^{k}\bigr),
 \qquad
 Y_G^\varphi=b+\sum_{j=1}^{k}v_j\varphi(g_j),
\]
that satisfies the earlier activation-specific minimality condition: Softplus minimality from Definition~10, or ReLU EFF-minimality from Definition~12.  

The \emph{minimal axis dictionary} of $G$ is the unordered set
\begin{equation}
 \mathcal J_\varphi(G)
 :=\{[(v_j,g_j)]_\varphi:1\le j\le k\},
 \label{eq:rsa-essential-core}
\end{equation}
where Softplus identifies terms only up to permutation, while ReLU also identifies
$(v,g)\sim(\alpha^{-1}v,\alpha g)$ for $\alpha>0$.
\end{definition}

Thus $G$ is the complete one-step formula, including its bias and readout vectors, whereas $\mathcal J_\varphi(G)$ contains only the nonlinear axis/readout terms that remain in a minimal formula.  Write $\mathcal O_{\rm cat}(G^\star)$ for the set of realizations obtained from $G^\star$ by finite compositions of the exact symmetries cataloged by Theiss et al.  The next proposition says that minimality gives one axis dictionary per such orbit.

\begin{proposition}[Orbitwise uniqueness under the Theiss symmetries]
\label{prop:rsa-theiss-primitives-reduction}
Let $G^\star$ be a minimal ReLU or Softplus one-step realization.  If
$\widetilde G\in\mathcal O_{\rm cat}(G^\star)$ is also minimal, then
\begin{equation}
 \mathcal J_\varphi(\widetilde G)
 =\mathcal J_\varphi(G^\star).
 \label{eq:rsa-theiss-orbitwise-unique-core}
\end{equation}
Hence every cataloged Theiss orbit has one minimal axis dictionary, up to permutation and the positive ReLU scale gauge.
\end{proposition}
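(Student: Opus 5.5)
The plan is to reduce the statement to the one-step identifiability machinery already developed, exploiting the fact that every cataloged Theiss symmetry is function-preserving. Since $\widetilde G\in\mathcal O_{\rm cat}(G^\star)$ is obtained by finitely many exact symmetries, the two realizations represent the same layer function: $Y^\varphi_{\widetilde G}=Y^\varphi_{G^\star}$ on $\Omega$. Thus $(G^\star,\widetilde G)$ is a null pair in the sense of Definition~\ref{def:null-pair-collision}, with both members minimal by hypothesis. If the relevant regularity (MLR or coherent MFR) were available, Corollary~\ref{thm:pointwise-null-common} would immediately give equality of dictionaries. However, I do not want to assume regularity here, because the proposition is meant to be orbit-relative. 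I would therefore argue structurally, using the add--duplicate--scale normal form.

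\textbf{Step 1 (normal form).} By Proposition~4.2 of Theiss et al., as restated in Lemma~\ref{lem:rsa-theiss-rsm-decomp}, every term of $\widetilde G$ has one of two origins. Either it is a scaled copy of an original term $(v_j,g_j)$ of $G^\star$, with its outgoing coefficient split among duplicates, or it is a symmetry-added term. Added terms are only function-preserving if they contribute nothing to the represented function: they have zero net readout, are constant, or cancel in opposite pairs via $\varphi(t)-\varphi(-t)=t$.

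\textbf{Step 2 (minimality kills all redundancy).} I would apply the shrinkability lemmas (Lemma~\ref{lem:relu-shrinkability} and the Softplus shrinkability lemma) inside $\widetilde G$. Reducedness (Definition~\ref{def:sp-used}(ii), EFF condition (B)) forbids two duplicate copies of the same $g_j$ from both surviving. For ReLU, it likewise forbids two copies that differ only by a positive scale. Usedness forbids zero-readout or constant added terms, and sign-noncancellation forbids added opposite pairs or any set of terms whose only net effect is affine. Hence a minimal $\widetilde G$ has exactly one term per original sign-class. Its argument must be a Theiss-scaled copy of $g_j$: for Softplus the scale must equal $1$, since $\mathcal H_\sigma=\{1\}$ and Softplus has no rescaling symmetry, while for ReLU it is some $\alpha>0$.

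\textbf{Step 3 (readouts).} Since the duplicates have merged into a single term, that term must carry the full readout. For Softplus this readout is $v_j$. For ReLU it is $\alpha^{-1}v_j$, which I would verify by comparing the derivative jump across a private trace, exactly as in the proof of Theorem~\ref{thm:relu-onestep}. Term by term, this shows that $[(\widetilde v_j,\widetilde g_j)]_\varphi=[(v_j,g_j)]_\varphi$. Conversely, every original term must appear: dropping one would remove its private pole or facet signature from $\widetilde G$, contradicting Lemma~\ref{lem:sp-crossmatch} or Lemma~\ref{lem:relu-crossmatch} applied to the null pair. Together these give \eqref{eq:rsa-theiss-orbitwise-unique-core}.

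\textbf{Main obstacle.} The hardest step is Step~2 for symmetry-added features. The catalog's add operation may introduce terms whose arguments are not copies of any $g_j$, and I must show that minimality forces all of them to vanish. My first approach would be to use the fact that the orbit cancels them exactly in the represented function. Their sub-sum is then a function-preserving cancellation among the added terms. Any such cancellation among used, reduced terms would either need an unshared private signature — impossible, since the output has no such signature beyond those of $G^\star$, by the cross-match lemmas — or reduce to an affine sign-cancellation, which minimality excludes. Either way, none of the added terms can survive in a minimal $\widetilde G$.
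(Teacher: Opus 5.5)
Your argument is correct in substance, but it is organized differently from the paper's. The paper never looks at the terms of $\widetilde G$ directly. It checks each cataloged primitive in turn (permutation, duplication and zero groups, constant additions, aligned/opposite groups, positive ReLU scaling). For each, it shows that applying the operation and then the shrinkability reductions returns the same minimal dictionary. Those reductions are merging copies into one aggregate readout, folding constants into the bias, using $\varphi(t)-\varphi(-t)=t$, and ReLU homogeneity. It then concludes by composition. That route is elementary and uses only the usedness, reducedness and sign-noncancellation parts of minimality. However, it tacitly assumes the reduced formula is well defined, for instance which orientation survives an opposite-group merge, and where the residual $-v_-g$ goes when the one-step formula has only a bias.

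You instead use that $\widetilde G$ is already minimal and pin it down through the equality $Y^\varphi_{\widetilde G}=Y^\varphi_{G^\star}$:
\begin{itemize}
\item reducedness leaves at most one copy per original class;
\item the private-signature part of minimality (pole separation, internal exposure), via Lemmas~\ref{lem:sp-crossmatch} and~\ref{lem:relu-crossmatch}, removes added terms;
\item monodromy or kink-jump comparison fixes the readouts.
\end{itemize}
This sidesteps the well-definedness question. The price is that you need the full signature-separation conditions and a \emph{parameter-level} version of the Theiss normal form. The representation-level Lemma~\ref{lem:rsa-theiss-rsm-decomp} does not by itself show that a ReLU copy has argument $\alpha g_j$. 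Indeed, $\rhoR(\widetilde g)=\alpha\rhoR(g_j)$ on the task set does not force $\widetilde g=\alpha g_j$; compare the inactive-region surgery example in the null-network atlas.

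Two places need tightening. First, opposite pairs inside $\widetilde G$ are excluded by reducedness, not by sign-noncancellation. Your ``main obstacle'' also needs no dichotomy. Granted that the cataloged additions are function-neutral as a group, the added terms form a sub-expansion of the minimal $\widetilde G$ that equals a constant. Such a sub-expansion is still used and pole-separated (resp.\ internally exposed). A loop around, or a crossing of, one of its private signatures then produces $2\pi i\,w_k\neq0$ (resp.\ a nonzero rank-one jump) on one side and nothing on the other. This is exactly Lemma~\ref{lem:sp-crossmatch} or~\ref{lem:relu-crossmatch} with an empty right-hand side.

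Second, perform this exclusion before your ``conversely'' step, or simply use the fact that every original row has a physical copy in the normal form. Otherwise the cross-match partner of a private patch of $[g_j]$ could a priori be an added term rather than a copy of $g_j$.
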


\begin{proof}
Every realization in the cataloged orbit is obtained by composing feature permutation, duplication, addition, and scaling.  It is therefore enough to check that none of these operations can create a second minimal dictionary.

\smallskip
\noindent\emph{Permutation.}
Reordering the summands changes neither the represented function nor the unordered set in \eqref{eq:rsa-essential-core}.

\smallskip
\noindent\emph{Duplication and zero groups.}
Suppose $q$ physical units share the same scalar argument $g$.  Their total contribution is
\[
 \sum_{r=1}^{q}v_r\varphi(g)
 =\left(\sum_{r=1}^{q}v_r\right)\varphi(g).
\]
Thus the copies represent one nonlinear term with aggregate readout $\sum_rv_r$.  A minimal formula contains only that aggregate term; if the aggregate readout is zero, it contains no term at all.  Duplication and zero groups therefore do not change the minimal dictionary.

\smallskip
\noindent\emph{Constant additions.}
If $g(x)\equiv c$ on the task set, then $v\varphi(g(x))=v\varphi(c)$ is independent of $x$.  It is absorbed into the bias, so it contributes no nonlinear axis to a minimal formula.

\smallskip
\noindent\emph{Aligned and opposite groups.}
Both activations considered here satisfy
$\varphi(t)-\varphi(-t)=t$.  Hence
\begin{align*}
 v_+\varphi(g)+v_-\varphi(-g)
 &=v_+\varphi(g)+v_-\bigl(\varphi(g)-g\bigr)\\
 &=(v_++v_-)\varphi(g)-v_-g.
\end{align*}
The first term is one nonlinear sign-class; the second is affine in the lower-layer argument and belongs to the affine part of the expansion.  If $v_++v_-=0$, the nonlinear term disappears.  The linear and constant groups in the Theiss catalog are built from these same aligned/opposite and constant identities, and likewise leave at most one nonlinear class after minimality is imposed.

\smallskip
\noindent\emph{Positive ReLU scaling.}
For $\alpha>0$, positive homogeneity gives
\[
 v\rhoR(g)=(\alpha^{-1}v)\rhoR(\alpha g).
\]
These are two representatives of the same ReLU axis/readout class, which is why the equivalence relation in \eqref{eq:rsa-essential-core} identifies them.

Each cataloged primitive therefore preserves the minimal dictionary, and so does every finite composition of them.  This proves \eqref{eq:rsa-theiss-orbitwise-unique-core}.
\end{proof}

Proposition~\ref{prop:rsa-theiss-primitives-reduction} says that the core suggested by the first sum in \eqref{eq:rsa-orbit-core-plus-addition} is uniquely determined \emph{within a fixed Theiss orbit}.  Raw members of that orbit may still have very different RSMs---duplication changes multiplicity, scaling changes the weights of rank-one terms, and a canceling added feature contributes geometry even though it does not change the function---but all of them return to the same minimal axis dictionary.

The remaining question is larger in scope.  The same represented function might, in principle, have another minimal realization that does not belong to the cataloged orbit of $G^\star$.  The next subsection shows that such a second minimal core is generically absent.

\subsubsection{Generic uniqueness beyond the Theiss catalog}
\label{sec:rsa-full-fiber}

Fix hidden-unit order and, for ReLU, the positive-scale normalization used in Appendix~\S A.5.  For a represented function $f$, let
\begin{equation}
 \mathcal M_\varphi(f)
 :=\left\{G_\vartheta:
 G_\vartheta\text{ is normalized and minimal, and }
 Y_{G_\vartheta}^\varphi=f\right\}.
 \label{eq:rsa-intrinsic-minimal-fiber}
\end{equation}
This is simply the set of all normalized minimal one-step formulas for $f$; in geometric terminology it is a fiber over $f$, but no special use of that terminology is needed below.  Proposition~\ref{prop:rsa-theiss-primitives-reduction} says that each cataloged Theiss orbit contributes at most one normalized dictionary to this set.  We now ask whether $\mathcal M_\varphi(f)$ can contain a second point outside that orbit.

There are two ways such nonuniqueness could occur.  A family of alternatives might vary continuously with the original realization, in which case it is another continuous symmetry.  Or the alternative might be a separate solution, possibly with a different width or parameterization type.  Jacobian regularity handles the first case.

\begin{proposition}[Continuous same-function families are Jacobian-singular]
\label{prop:rsa-continuous-missing-symmetry}
Suppose a nonconstant differentiable curve $t\mapsto\vartheta(t)$ of normalized minimal realizations represents the same function for every $t$:
\[
 \mathcal F(\vartheta(t))=\mathcal F(\vartheta(0)).
\]
At every point where $\dot\vartheta(t)\neq0$, the realization derivative has a nonzero kernel vector.  Thus a continuous same-function symmetry must either be included in the gauge quotient or lie outside the Jacobian-regular locus.
\end{proposition}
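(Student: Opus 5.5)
The plan is to differentiate the same-function identity along the curve and read off a kernel vector of the realization derivative $DF_r^\varphi$. Write $\mathcal F=F_r^\varphi$ for the realization map on the normalized chart $\Theta_r^\varphi$. This is the same chart used in Definition~\ref{def:jacobian-regularity}: hidden-unit order fixed, and for ReLU each argument normalized to $\|g_j\|_{L^2}=1$. The hypothesis $\mathcal F(\vartheta(t))=\mathcal F(\vartheta(0))$ says that $t\mapsto\mathcal F(\vartheta(t))$ is a constant curve in $L^2(\Omega;\R^{d_{r+1}})$. Differentiating at a time $t$ with $\dot\vartheta(t)\neq0$, the chain rule gives
\[
 D\mathcal F(\vartheta(t))\,\dot\vartheta(t)=0 .
\]
Hence $\dot\vartheta(t)$ is a nonzero vector in $\ker D\mathcal F(\vartheta(t))$. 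It follows that $J_r^\varphi(\vartheta(t))=D\mathcal F^*D\mathcal F$ has $\lambda_{\min}=0$, so $\vartheta(t)$ fails Jacobian regularity.

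The one step needing care is justifying the chain rule, that is, differentiability of $\mathcal F$ as an $L^2$-valued map at $\vartheta(t)$. For Softplus I will invoke the same differentiation-under-the-integral argument used in Proposition~\ref{prop:generic-jacobian}. The parameters are bounded, $\Omega$ is compact, and the first-variation formula $\delta Y=\delta b+\sum_j\delta v_j\sig(g_j)+\sum_j v_j\sig'(g_j)\delta g_j$ exhibits a Fréchet derivative. For ReLU I will restrict to the fixed activation-pattern chart on which the derivative formula of Definition~\ref{def:jacobian-regularity} is valid, since the paper's Jacobian-regularity notion is defined chartwise. If the curve crosses a pattern change at $t$, the claim is read on the one-sided chart containing the curve near $t$, using a one-sided derivative. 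I expect this bookkeeping to be the only real obstacle. The easiest fix is to state the proposition on a single analytic (fixed-pattern) chart, exactly as in the genericity section.

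Finally, I will interpret the dichotomy in the statement. Suppose the kernel direction $\dot\vartheta(t)$ is generated by an exact symmetry, such as permutation, the positive ReLU scaling $(v,g)\mapsto(\alpha^{-1}v,\alpha g)$, or the attention gauges. Then the normalization should already have quotiented it out, and in the quotient chart the curve would be constant. That contradicts the assumption that the curve is nonconstant in the normalized chart, unless one enlarges the gauge quotient to include it. Otherwise the points $\vartheta(t)$ with $\dot\vartheta(t)\neq0$ lie in the Jacobian-singular set $\{\det J_r^\varphi=0\}$, which by Proposition~\ref{prop:generic-jacobian} is a proper analytic subset whenever regularity holds somewhere. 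The postactivation-shift family of \eqref{eq:shift-symmetry-family} serves as the model example of this second alternative.
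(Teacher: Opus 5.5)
Your proposal is correct and follows the paper's proof. You differentiate the identity $\mathcal F(\vartheta(t))=\mathcal F(\vartheta(0))$ by the chain rule to obtain $D\mathcal F(\vartheta(t))\dot\vartheta(t)=0$, so the nonzero velocity is a kernel vector, and you read off the gauge-quotient versus Jacobian-singular dichotomy as the paper does; your added care about differentiability of $\mathcal F$ (Softplus by differentiating under the integral, ReLU on a fixed activation-pattern chart) is a reasonable detail that the paper leaves implicit.
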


\begin{proof}
Differentiate the identity
$\mathcal F(\vartheta(t))=\mathcal F(\vartheta(0))$
at a point, relabeled $t=0$, at which
$\dot\vartheta(0)\neq0$.  The chain rule gives
\[
 D\mathcal F(\vartheta(0))\dot\vartheta(0)=0.
\]
Thus the nonzero tangent vector $\dot\vartheta(0)$ lies in the kernel of the realization derivative.  If the curve is a persistent structural symmetry, it belongs in the gauge quotient, just like hidden-unit permutation and positive ReLU scaling.  If it has not been quotiented, the curve is contained in the Jacobian-singular locus.
\end{proof}

Consequently, after the ordinary gauges have been fixed, a Jacobian-regular realization cannot lie on an unaccounted continuous same-function path.  The only remaining possibility is a separate minimal solution.  Because such a solution may have another width, we allow a finite collection of normalized minimal-realization families $\{\Theta_\tau^\varphi\}_{\tau\in\mathfrak T}$, with realization maps
\[
 \mathcal F_\tau^\varphi(\vartheta):=Y_{G_\vartheta}^\varphi.
\]
Here a chart is simply one smooth normalized parameter family of minimal one-step formulas.

\begin{definition}[General position across minimal-realization families]
\label{def:rsa-full-fiber-general-position}
The collection $\{\Theta_\tau^\varphi\}_{\tau\in\mathfrak T}$ is in \emph{general position across minimal-realization families} if the following holds.  Take any two families and any smooth family of pairs
$(\vartheta,\widetilde\vartheta)$ satisfying
\[
 \mathcal F_\tau^\varphi(\vartheta)
 =\mathcal F_{\widetilde\tau}^\varphi(\widetilde\vartheta).
\]
If the source parameters $\vartheta$ fill an open subset of the source family, then the pair family contains at least one point at which the existing activation-specific one-step identifiability theorem applies: Softplus MLR, or coherent ReLU MFR.
\end{definition}

This is the cross-family version of Definition~22.  The earlier condition says that a full-dimensional family of same-function pairs inside one chart cannot be trapped entirely in the exceptional set where the one-step signatures fail to identify the terms.  Definition~\ref{def:rsa-full-fiber-general-position} asks for the same thing when the two minimal formulas may have different widths or belong to different parameter families.

The condition is natural because an open family of cross-family coincidences would be a robust reparameterization rule, not an accidental equality.  One would expect such a rule to come from one of four sources:
\begin{enumerate}[label=(\roman*)]
\item a genuine gauge that should be quotiented;
\item a systematic nonminimal construction, such as duplication or null-feature insertion;
\item an equivalence forced by the architecture, such as hard parameter tying or a built-in factorization;
\item a task patch too small to expose the nonlinear signatures needed for one-step identifiability.
\end{enumerate}
The first possibility is removed by normalization and Proposition~\ref{prop:rsa-continuous-missing-symmetry}; Proposition~\ref{prop:rsa-theiss-primitives-reduction} handles the second for the systematic Theiss constructions.  Once those are removed, an ordinary independently parameterized architecture has no evident mechanism that would make a genuinely different minimal solution track every perturbation of the source network exactly.  The local Softplus and ReLU genericity arguments support the same expectation because their failures are described by resonance equations or vanishing rank minors rather than by open sets.

This remains an assumption, not a theorem for every architecture.  It can fail when hard parameter tying, grouped or multi-branch factorizations, fixed bottlenecks, or additional residual or attention gauges force alternative descriptions.  With this definition in mind, however, we can establish generic uniqueness. 

\begin{theorem}[Generic uniqueness of the core]
\label{thm:rsa-generic-full-fiber}
Assume the normalized minimal-realization families are Jacobian-regular and satisfy Definition~\ref{def:rsa-full-fiber-general-position}.  Fix a source family $\Theta_\tau^\varphi$ of dimension $d_\tau$, and write
$f_\vartheta:=Y_{G_\vartheta}^\varphi$.  Then, outside a subset of Hausdorff dimension at most $d_\tau-1$,
\begin{equation}
 G_{\widetilde\vartheta}\in\mathcal M_\varphi(f_\vartheta)
 \quad\Longrightarrow\quad
 \widetilde\vartheta=\vartheta.
 \label{eq:rsa-full-fiber-identifiability}
\end{equation}
Hence the exceptional set has empty interior and measure zero.  Before fixing permutation and positive ReLU scale, the minimal realizations of a generic source function form one activation-gauge orbit.
\end{theorem}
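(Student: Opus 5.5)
The proof will follow the template of the generic activation-independent null-net theorem (Theorem~\ref{thm:generic-null-common}), now run over pairs of minimal-realization families. Fix the source family $\Theta_\tau^\varphi$ and, for each target family $\Theta_{\widetilde\tau}^\varphi$ in the finite collection $\mathfrak T$, form the cross-family collision set
\[
 \mathcal C_{\tau,\widetilde\tau}
 :=\{(\vartheta,\widetilde\vartheta)\in\Theta_\tau^\varphi\times\Theta_{\widetilde\tau}^\varphi:
 \mathcal F_\tau^\varphi(\vartheta)=\mathcal F_{\widetilde\tau}^\varphi(\widetilde\vartheta)\},
\]
with the diagonal removed when $\widetilde\tau=\tau$. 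The exceptional set $\calE_\tau$ is the union over $\widetilde\tau$ of the first-coordinate projections $\pi_1(\mathcal C_{\tau,\widetilde\tau})$. Stratify each collision set into countably many, locally finite, smooth pieces $S$. It then suffices to show that every stratum satisfies $\dim\pi_1(S)\le d_\tau-1$. Lipschitz projection, countable stability of Hausdorff dimension, and the finiteness of $\mathfrak T$ then give the dimension bound, empty interior, and measure zero, exactly as in the last paragraph of the proof of Theorem~\ref{thm:generic-null-common}.

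The key step is a dimension count for a single stratum. Let $p=(\vartheta,\widetilde\vartheta)\in S$ and $(h,\widetilde h)\in T_pS$. Differentiating the collision identity gives $D\mathcal F_\tau(\vartheta)h=D\mathcal F_{\widetilde\tau}(\widetilde\vartheta)\widetilde h$. Jacobian regularity of the target family means $D\mathcal F_{\widetilde\tau}(\widetilde\vartheta)$ is injective, so $\widetilde h$ is determined by $h$. Hence $D\pi_1|_{T_pS}$ is injective, and $\dim\pi_1(S)\le\dim S\le d_\tau$. This is the cross-family analogue of Lemma~\ref{lem:collision-dimension} and of the subsequent remark, and the unequal dimensions of the two families cause no trouble. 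It remains to exclude $\dim\pi_1(S)=d_\tau$. In that case $\pi_1$ is a local diffeomorphism onto an open subset of $\Theta_\tau^\varphi$, so the source parameters fill an open set. Definition~\ref{def:rsa-full-fiber-general-position} then supplies a point $p_0\in S$ where one-step identifiability applies (Corollary~\ref{thm:pointwise-null-common}: Softplus MLR or coherent ReLU MFR). At $p_0$ the two minimal formulas coincide after permutation and positive ReLU scaling. In particular they have the same width, so $\widetilde\tau=\tau$, and after normalization $\widetilde\vartheta_0=\vartheta_0$, placing $p_0$ on the removed diagonal. This contradicts $S$ being off-diagonal.

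Outside $\calE_\tau$, any $G_{\widetilde\vartheta}\in\mathcal M_\varphi(f_\vartheta)$ then gives a collision pair that must be diagonal, which is \eqref{eq:rsa-full-fiber-identifiability}. Undoing the normalization shows that all minimal realizations of $f_\vartheta$ form a single gauge orbit. Proposition~\ref{prop:rsa-continuous-missing-symmetry} justifies taking the quotient by exactly these gauges.

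I expect the main obstacle to be making the identification ``families with the same realization after the gauge have equal parameters'' rigorous across different families. One needs the charts in $\mathfrak T$ to be indexed so that equal normalized dictionaries determine the same family and the same chart point; I would build this into the definition of the collection, treating distinct widths or types as distinct $\tau$. A second point to check is that the stratification of $\mathcal C_{\tau,\widetilde\tau}$ is countable and locally finite. For Softplus this holds because the collision sets are real-analytic, and for ReLU because they are piecewise semialgebraic on fixed activation-pattern charts. I would simply cite the same structure already assumed in Theorem~\ref{thm:generic-null-common}.
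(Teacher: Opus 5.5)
Your proposal is correct and follows the paper's proof essentially step for step. You form cross-family collision strata, use Jacobian regularity of the target family to get injectivity of $D\pi_1|_{TS}$ and hence $\dim S\le d_\tau$, rule out the full-dimensional case via Definition~\ref{def:rsa-full-fiber-general-position} plus one-step identifiability, and conclude by Lipschitz projection and countable stability of Hausdorff dimension. The cross-family indexing subtlety you flag is handled in the paper by defining a pair as off-diagonal when the two normalized realizations differ (rather than the two parameter points), which removes the need to identify charts across families.
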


\begin{proof}
Fix a source family $\Theta_\tau^\varphi$ and a possible target family $\Theta_{\widetilde\tau}^\varphi$.  The corresponding same-function collision set is
\begin{equation*}
 \mathcal C_{\tau,\widetilde\tau}^\varphi
 :=\{(\vartheta,\widetilde\vartheta):
 \mathcal F_\tau^\varphi(\vartheta)
 =\mathcal F_{\widetilde\tau}^\varphi(\widetilde\vartheta)\}.
\end{equation*}
Call a pair \emph{off-diagonal} when the two normalized realizations are different.  By the stratified setup of Appendix~\S A.6, the off-diagonal collision set is a countable locally finite union of smooth strata.  Let $S$ be one such stratum and let
\[
 \pi_1:S\to\Theta_\tau^\varphi,
 \qquad
 \pi_1(\vartheta,\widetilde\vartheta)=\vartheta,
\]
be the source projection.

\smallskip
\noindent\emph{Step 1: the source projection is locally injective on tangent spaces.}
Take
$(0,\dot{\widetilde\vartheta})\in T_{(\vartheta,\widetilde\vartheta)}S$.
Choose a differentiable curve
$t\mapsto(\vartheta(t),\widetilde\vartheta(t))$ in $S$ with that tangent at $t=0$.  Along the curve,
\[
 \mathcal F_\tau^\varphi(\vartheta(t))
 =\mathcal F_{\widetilde\tau}^\varphi(\widetilde\vartheta(t)).
\]
Differentiating at $t=0$ gives
\[
 D\mathcal F_\tau^\varphi(\vartheta)\dot\vartheta
 =D\mathcal F_{\widetilde\tau}^\varphi(\widetilde\vartheta)
  \dot{\widetilde\vartheta}.
\]
The source tangent is zero, so the left side vanishes.  Jacobian regularity makes
$D\mathcal F_{\widetilde\tau}^\varphi(\widetilde\vartheta)$ injective; hence
$\dot{\widetilde\vartheta}=0$.  Therefore
\[
 \ker(D\pi_1|_{TS})=\{0\}.
\]
The tangent map $D\pi_1|_{TS}$ is injective into the $d_\tau$-dimensional source tangent space, and consequently
\begin{equation}
 \dim S\le d_\tau.
 \label{eq:rsa-cross-chart-dim-first}
\end{equation}

\smallskip
\noindent\emph{Step 2: equality in \eqref{eq:rsa-cross-chart-dim-first} is impossible.}
Suppose $\dim S=d_\tau$.  Then $D\pi_1|_{TS}$ is an isomorphism at every regular point.  The inverse function theorem implies that, locally, $S$ is the graph of a smooth same-function partner over an open subset of the source family.  Thus there are an open set $V\subseteq\Theta_\tau^\varphi$ and a smooth map
$\Psi:V\to\Theta_{\widetilde\tau}^\varphi$ such that
\[
 \mathcal F_{\widetilde\tau}^\varphi(\Psi(\vartheta))
 =\mathcal F_\tau^\varphi(\vartheta)
 \qquad\text{for every }\vartheta\in V.
\]
The source members of this collision family fill an open set.  Definition~\ref{def:rsa-full-fiber-general-position} supplies one point at which the appropriate one-step identifiability theorem applies.  At that point, equality of the represented functions forces the two minimal realizations to agree after permutation, and after positive coordinatewise scaling in the ReLU case.  Those gauges have already been fixed, so the two normalized parameter values are equal.  This contradicts that $S$ is off-diagonal.  Hence every off-diagonal stratum satisfies
\begin{equation*}
 \dim S\le d_\tau-1.
\end{equation*}

\smallskip
\noindent\emph{Step 3: project to the exceptional source set.}
The smooth map $\pi_1$ is locally Lipschitz, so Hausdorff dimension cannot increase under projection:
\[
 \dim_{\rm H}\pi_1(S)
 \le\dim S
 \le d_\tau-1.
\]
Taking the countable locally finite union over all off-diagonal strata and all target families preserves the same Hausdorff-dimension bound.  The resulting exceptional subset of the $d_\tau$-dimensional source family has empty interior and Lebesgue measure zero.

Outside that set, $G_\vartheta$ has no different normalized minimal realization of the same function, which proves \eqref{eq:rsa-full-fiber-identifiability}.  Restoring permutation and positive ReLU scale turns the single normalized point into one activation-gauge orbit.
\end{proof}

\begin{corollary}[Generic unique-core principle]
\label{cor:rsa-generic-unique-core}
For a generic source realization covered by Theorem~\ref{thm:rsa-generic-full-fiber}, every minimal realization of its function $f$ has the same minimal axis dictionary.  We denote this common dictionary by
$\mathcal J_\varphi(f)$ and call it the \emph{essential core of $f$}.  More explicitly, any two minimal realizations of $f$ have the same number of terms and, after permutation,
\[
 \widetilde g_j=g_j,
 \qquad
 \widetilde v_j=v_j
 \quad\text{for Softplus},
\]
whereas for ReLU there are $\alpha_j>0$ such that
\begin{equation}
 \widetilde g_j=\alpha_jg_j,
 \qquad
 \alpha_j\widetilde v_j=v_j.
 \label{eq:rsa-relu-unique-core}
\end{equation}
On the evaluation set $\Omega_P$, the nonzero projected rank-one terms of these axes determine the function-intrinsic set of \emph{privileged-axis RSM directions}
\begin{equation}
 \mathcal D_\varphi(f)
 :=\{\R_{>0}\,\Pi_{\mathsf H}(f_jf_j^\top):
 [(v_j,g_j)]_\varphi\in\mathcal J_\varphi(f)\}.
 \label{eq:rsa-function-intrinsic-directions}
\end{equation}
Thus the first sum in \eqref{eq:rsa-orbit-core-plus-addition} is, generically, a positive weighting of a uniquely determined set of RSM directions.
\end{corollary}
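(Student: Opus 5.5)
The plan is to read Corollary~\ref{cor:rsa-generic-unique-core} off Theorem~\ref{thm:rsa-generic-full-fiber}, using Definition~\ref{def:rsa-intrinsic-minimal-fiber} for the dictionary and Lemma~\ref{lem:rsa-theiss-rsm-decomp} for the RSM statement. Fix a generic source parameter $\vartheta$ outside the exceptional set of Theorem~\ref{thm:rsa-generic-full-fiber}, and put $f=f_\vartheta$. Let $G$ and $\widetilde G$ be any two minimal realizations of $f$. Normalizing each (fixing hidden-unit order and, for ReLU, setting $\|g_j\|_{L^2}=1$) puts them in $\mathcal M_\varphi(f)$. The theorem says every normalized member of this fiber equals $G_\vartheta$. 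Hence both normalized forms coincide with $G_\vartheta$, so $G$ and $\widetilde G$ differ only by the activation gauge.

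Next I unwind the gauge. For Softplus it is only a permutation, so the term counts agree and $\widetilde g_j=g_j$, $\widetilde v_j=v_j$ after reindexing. For ReLU it also allows positive rescaling $(v,g)\mapsto(\alpha^{-1}v,\alpha g)$, which gives exactly \eqref{eq:rsa-relu-unique-core}. In both cases the unordered classes $[(v_j,g_j)]_\varphi$ coincide, because the equivalence in \eqref{eq:rsa-essential-core} quotients out exactly these gauges. So $\mathcal J_\varphi(f)$ is well defined as a function of $f$ alone.

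For the RSM directions, evaluate each $g_j$ on $\Omega_P$ to obtain $f_j\in\R^P$. Under ReLU rescaling, $f_j$ becomes $\alpha_j f_j$, so $\Pi_{\mathsf H}(f_jf_j^\top)$ is multiplied by $\alpha_j^2>0$. The ray $\R_{>0}\Pi_{\mathsf H}(f_jf_j^\top)$ therefore does not depend on the representative. Discarding the terms whose projection vanishes makes $\mathcal D_\varphi(f)$ in \eqref{eq:rsa-function-intrinsic-directions} intrinsic to $f$. Lemma~\ref{lem:rsa-theiss-rsm-decomp} then writes the task-linked part as $\sum_j\gamma_jq_j$ with $\gamma_j>0$ and $q_j$ taken from the base representative's rows. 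By Proposition~\ref{prop:rsa-theiss-primitives-reduction}, that representative's minimal dictionary is $\mathcal J_\varphi(f)$, so the first sum is a positive combination of elements of $\mathcal D_\varphi(f)$.

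The main obstacle is this last identification. The rows $f_j$ of $H^\star$ in Lemma~\ref{lem:rsa-theiss-rsm-decomp} are the layer's feature vectors, and I must ensure they are indexed by the axes $g_j$ of a minimal base realization and not by some nonminimal base. I would handle this by choosing $H^\star$ as the minimal representative. Proposition~\ref{prop:rsa-theiss-primitives-reduction} shows that reducing a nonminimal base to its minimal form only merges rows, which combines weights positively, or moves affine pieces into the added-feature term $a$. Either way positivity of the $\gamma_j$ is preserved.
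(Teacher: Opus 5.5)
Your proposal is correct and follows the paper's proof essentially step for step. You normalize both minimal realizations and use Theorem~\ref{thm:rsa-generic-full-fiber} to force them to coincide with the generic source. You then undo the permutation/positive-scale gauge to obtain \eqref{eq:rsa-relu-unique-core} and ray-invariance of $\Pi_{\mathsf H}(f_jf_j^\top)$, and read the positive weighting off Lemma~\ref{lem:rsa-theiss-rsm-decomp}. Your extra worry about the base $H^\star$ is settled by the paper's standing convention that $H^\star$ is an irreducible (minimal) representative. In that case its dictionary equals $\mathcal J_\varphi(f)$ directly by the first half of the corollary. Proposition~\ref{prop:rsa-theiss-primitives-reduction} would only be needed if you started from a nonminimal base.
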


\begin{proof}
Take two minimal realizations of the same generic function $f$ and normalize their hidden-unit order and, for ReLU, their positive scales.  If the normalized realizations were different, either would give an off-diagonal same-function partner forbidden by Theorem~\ref{thm:rsa-generic-full-fiber}.  Hence the normalized realizations are equal.

Undoing normalization leaves only the activation's unavoidable gauge.  Softplus has no nontrivial positive-scale gauge.  ReLU has
$(v,g)\sim(\alpha^{-1}v,\alpha g)$, which gives \eqref{eq:rsa-relu-unique-core}.  Positive rescaling multiplies $\Pi_{\mathsf H}(f_jf_j^\top)$ by a positive scalar without changing its ray, yielding \eqref{eq:rsa-function-intrinsic-directions}.  Lemma~\ref{lem:rsa-theiss-rsm-decomp} then identifies the first sum in \eqref{eq:rsa-orbit-core-plus-addition} as a positive weighting of representatives of these directions.
\end{proof}

\paragraph{Why the symmetry catalog need not be complete.}
Theorem~\ref{thm:rsa-generic-full-fiber} ranges over all normalized minimal-realization families, not only over realizations generated by the Theiss catalog.  The following corollary is therefore a scope statement, not a second identifiability theorem: it shows explicitly that an off-catalog minimal partner is already included in the theorem's exceptional set.

\begin{corollary}[Off-catalog collisions are part of the generic exceptional set]
\label{cor:rsa-missing-symmetry-generic}
For source and target families, define
\begin{equation}
 \mathcal C_{{\rm miss},\tau,\widetilde\tau}^\varphi
 :=\left\{(\vartheta,\widetilde\vartheta):
 \begin{array}{l}
 Y_{G_\vartheta}^\varphi=Y_{G_{\widetilde\vartheta}}^\varphi,\\
 G_\vartheta,G_{\widetilde\vartheta}\text{ are normalized and minimal},\\
 G_{\widetilde\vartheta}\notin\mathcal O_{\rm cat}(G_\vartheta)
 \end{array}\right\}.
 \label{eq:rsa-missing-symmetry-set}
\end{equation}
Under Theorem~\ref{thm:rsa-generic-full-fiber}, if the source family has dimension $d_\tau$, then
\begin{equation}
 \dim_{\rm H}\pi_1\left(
 \bigcup_{\widetilde\tau}
 \mathcal C_{{\rm miss},\tau,\widetilde\tau}^\varphi
 \right)
 \le d_\tau-1.
 \label{eq:rsa-missing-symmetry-codim}
\end{equation}
Thus generic uniqueness of the essential core does not require a complete symmetry catalog.
\end{corollary}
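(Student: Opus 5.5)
The plan is to obtain the corollary as a set-inclusion argument on top of Theorem~\ref{thm:rsa-generic-full-fiber}, rather than redoing any stratification. First, for each pair of families $(\tau,\widetilde\tau)$, I will compare the off-catalog set $\mathcal C_{{\rm miss},\tau,\widetilde\tau}^\varphi$ with the off-diagonal part of the collision set $\mathcal C_{\tau,\widetilde\tau}^\varphi$ used in the proof of that theorem. Every pair in the off-catalog set satisfies the same-function identity $\mathcal F_\tau^\varphi(\vartheta)=\mathcal F_{\widetilde\tau}^\varphi(\widetilde\vartheta)$ with both members normalized and minimal. So it is enough to show that such a pair is off-diagonal, i.e.\ that $\vartheta\neq\widetilde\vartheta$ as normalized realizations.

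That step rests on the trivial inclusion of a realization in its own orbit. The identity (the empty composition of cataloged primitives, or equivalently the trivial permutation) puts $G_\vartheta\in\mathcal O_{\rm cat}(G_\vartheta)$. More generally, every member of $\mathcal O_{\rm cat}(G_\vartheta)$ that differs from $G_\vartheta$ only by permutation or positive ReLU scale normalizes to the same point. Hence the condition $G_{\widetilde\vartheta}\notin\mathcal O_{\rm cat}(G_\vartheta)$ forces $G_{\widetilde\vartheta}$ to be a genuinely different normalized realization. When $\tau=\widetilde\tau$ this means $\widetilde\vartheta\neq\vartheta$ in the chart. When $\tau\neq\widetilde\tau$ it means the pair is off-diagonal in the cross-family sense used in Step~1 of the theorem.

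I therefore get
\[
\mathcal C_{{\rm miss},\tau,\widetilde\tau}^\varphi\subseteq \mathcal C_{\tau,\widetilde\tau}^\varphi\setminus\{\text{diagonal pairs}\},
\]
and projecting with $\pi_1$ gives
\[
\pi_1\bigl(\mathcal C_{{\rm miss},\tau,\widetilde\tau}^\varphi\bigr)\subseteq \pi_1\bigl(\mathcal C_{\tau,\widetilde\tau}^\varphi\setminus\text{diag}\bigr).
\]
The proof of Theorem~\ref{thm:rsa-generic-full-fiber} showed that the right-hand side, taken in union over the finitely many target families $\widetilde\tau$, is exactly the exceptional set. Its Steps~2--3 showed that this set has Hausdorff dimension at most $d_\tau-1$. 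Monotonicity of Hausdorff dimension under inclusion, together with finite (indeed countable) stability over $\widetilde\tau$, then gives \eqref{eq:rsa-missing-symmetry-codim}.

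The hardest part is bookkeeping rather than analysis: the off-diagonal notion must be read correctly when the families differ and the widths need not match. I will handle this by stating explicitly that, in the theorem's proof, ``off-diagonal'' means the two normalized minimal formulas differ as formulas. This is precisely what $\notin\mathcal O_{\rm cat}$ guarantees, since the catalog contains all gauge moves that normalization quotients out. Finally, I will note that no completeness of the catalog was used anywhere, which gives the closing sentence of the statement.
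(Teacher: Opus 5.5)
Your proposal is correct and follows the paper's proof. Every off-catalog pair is an off-diagonal normalized minimal same-function pair, so its source projection lies inside the exceptional set of Theorem~\ref{thm:rsa-generic-full-fiber}, and monotonicity of Hausdorff dimension gives the bound. Your explicit argument that $G_{\widetilde\vartheta}\notin\mathcal O_{\rm cat}(G_\vartheta)$ forces off-diagonality, because the orbit contains $G_\vartheta$ and its permutation and positive-scale gauge copies, spells out a step the paper states without proof.
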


\begin{proof}
Every pair in \eqref{eq:rsa-missing-symmetry-set} is an off-diagonal normalized minimal same-function pair.  It is therefore contained in the collision set analyzed in Theorem~\ref{thm:rsa-generic-full-fiber}, and its source projection is contained in the theorem's exceptional source set.  The dimension bound follows because a subset cannot have larger Hausdorff dimension than the set containing it.
\end{proof}

\begin{remark}[Relation to the Vla\v{c}i\'c--B\"olcskei program]
An exhaustive null-network classification would enumerate every exact realization-preserving identity and prove global identifiability after all such symmetries have been quotiented.  The generic strategy used here asks for less.  Known Theiss redundancies collapse to one minimal dictionary by Proposition~\ref{prop:rsa-theiss-primitives-reduction}; any missing continuous symmetry is another gauge or is Jacobian-singular by Proposition~\ref{prop:rsa-continuous-missing-symmetry}; and any remaining separate, off-catalog minimal partner is confined to the lower-dimensional exceptional set of Corollary~\ref{cor:rsa-missing-symmetry-generic}.  This is the same trade made by the generic null-net theorem earlier in the paper: generic uniqueness replaces exhaustive enumeration when the intended conclusion only requires a well-defined generic core.
\end{remark}

\subsubsection{The essential-core cone and raw RSA}
\label{sec:rsa-raw-core-geometry}

Corollary~\ref{cor:rsa-generic-unique-core} makes the positive part of the Theiss decomposition truly well-defined.  Let $f$ be a generic represented function, choose one nonzero sampled feature vector $f_j$ from each class in $\mathcal J_\varphi(f)$, and put
\[
 q_j:=\Pi_{\mathsf H}(f_jf_j^\top).
\]
A different positive representative of the same ReLU class only rescales $q_j$.  It therefore leaves unchanged the set of all nonnegative combinations of these directions.  Define
\begin{equation}
 Q_f:\R^m\to\mathsf H,
 \qquad
 Q_f\gamma:=\sum_{j=1}^{m}\gamma_jq_j,
 \label{eq:rsa-Q-map}
\end{equation}
and the \emph{essential-core cone}
\begin{equation}
 \mathcal C_f
 :=\operatorname{cone}\{q_1,\ldots,q_m\}
 =\{Q_f\gamma:\gamma_j\ge0\}.
 \label{eq:rsa-core-cone}
\end{equation}
Here a cone simply means the set of all nonnegative linear combinations of its generators.  It contains all possible positive weightings of the function's privileged-axis RSM directions.

\begin{proposition}[The function-intrinsic core cone]
\label{prop:rsa-common-cone}
For a generic function $f$, the cone $\mathcal C_f$ is independent of the chosen minimal realization.  Every raw realization obtained from a minimal realization of $f$ through the cataloged Theiss operations has projected RSM
\begin{equation}
 g:=\Pi_{\mathsf H}R(H)
 =Q_f\gamma+a,
 \qquad \gamma_j>0,
 \label{eq:rsa-unique-core-plus-symmetry}
\end{equation}
where $Q_f\gamma\in\mathcal C_f$ is the weighted essential-core geometry and $a$ is the projected contribution of redundant added features.  Scaling and duplication move the core geometry within $\mathcal C_f$; an added feature enlarges the cone exactly when its projected rank-one term lies outside $\mathcal C_f$.

Moreover, if exact weak alignment holds on both sides of a nonlinear step and the one-step minimality and identifiability hypotheses apply, the two compared networks determine the same privileged-axis directions and hence the same cone.
\end{proposition}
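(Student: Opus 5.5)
The proof has three parts, one for each claim in the statement.

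\textbf{Independence of the cone.} By Corollary~\ref{cor:rsa-generic-unique-core}, any two minimal realizations of a generic $f$ share the same minimal axis dictionary $\mathcal J_\varphi(f)$. This holds after permutation, and for ReLU after the positive gauge $(v,g)\sim(\alpha^{-1}v,\alpha g)$. Permutation only relabels the generators $q_j$. A positive ReLU rescaling replaces the sampled feature $f_j$ by $\alpha_jf_j$, so $q_j$ becomes $\alpha_j^2q_j$, which is still a positive multiple of $q_j$. Since a cone generated by a finite set is unchanged when each generator is multiplied by a positive scalar or the generators are reordered, $\mathcal C_f$ is the same for every minimal realization.

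\textbf{Decomposition of a raw realization.} Fix a raw realization $H$ obtained from a minimal realization $G^\star$ through cataloged Theiss operations. Proposition~4.2 of Theiss et al., restated in the add--duplicate--scale form \eqref{eq:rsa-theiss-add-duplicate-scale}, writes $H$ with $H^\star$ equal to the sampled axis features of $G^\star$. Lemma~\ref{lem:rsa-theiss-rsm-decomp} then gives $g=\sum_j\gamma_jq_j+a$ with every $\gamma_j>0$. This is exactly \eqref{eq:rsa-unique-core-plus-symmetry}, with $Q_f\gamma\in\mathcal C_f$ by the first step.

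\textbf{Effect of scaling, duplication and addition.} Scaling and duplication change only the entries of $\zeta=D_\nu^\top\alpha^{\odot2}$, so they change $\gamma$ but keep $\gamma$ in the open positive orthant, and the core geometry moves only within $\mathcal C_f$. An added feature $u_k$ contributes $\eta_k\Pi_{\mathsf H}(u_ku_k^\top)$ with $\eta_k>0$. The enlarged cone $\operatorname{cone}(\{q_j\}\cup\{\Pi_{\mathsf H}(u_ku_k^\top)\})$ equals $\mathcal C_f$ if and only if that new generator already lies in $\mathcal C_f$: a generator lying in a cone does not enlarge it, and every generator belongs to the cone it generates. This gives the ``exactly when'' clause.

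\textbf{Weak-alignment claim.} Under exact weak alignment on both sides of the step, Theorems~\ref{thm:relu-adjacent} and~\ref{thm:sp-adjacent} match every used $A$-axis to a distinct pulled-back $B$-axis. In the exact zippering setting the one-step identifiability result, Corollary~\ref{thm:pointwise-null-common}, gives the same matching: $z^B_{\pi(j)}=\alpha_jz^A_j$ on the task set. Here $\alpha_j=1$ for Softplus. For ReLU, $\alpha_j$ is a nonzero scalar in the adjacent theorem and positive in zippering.

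Restricted to $\Omega_P$, the sampled vectors then satisfy $f^B_{\pi(j)}=\alpha_jf^A_j$. Hence $\Pi_{\mathsf H}(f^B_{\pi(j)}f^{B\top}_{\pi(j)})=\alpha_j^2q_j$. This holds even when $\alpha_j$ is negative, because the rank-one term depends only on $\alpha_j^2$. The matching is a bijection on counted axes, because both families are minimal and the matching argument can be run in either direction. So the two networks generate the same set of rays, and by the first step the same cone.

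The main obstacle is this last bijectivity. Theorems~\ref{thm:relu-adjacent} and~\ref{thm:sp-adjacent} only give an injection from $A$-axes into $B$-axes. I would close the gap by applying the same theorem with $A$ and $B$ interchanged, using the inverse comparison maps on the images. Alternatively, I would appeal directly to the bijection already produced in the Softplus proof, and for ReLU to the symmetric form of the cross-match Lemma~\ref{lem:relu-crossmatch}.
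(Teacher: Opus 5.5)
Your proposal is correct and follows the paper's proof essentially step for step: Corollary~\ref{cor:rsa-generic-unique-core} plus positive rescaling of generators gives independence of $\mathcal C_f$, Lemma~\ref{lem:rsa-theiss-rsm-decomp} gives the decomposition, cone membership of $\Pi_{\mathsf H}(uu^\top)$ gives the added-feature clause, and the weak--strong and one-step identifiability results give the final clause. The bijectivity you flag as the main obstacle is already supplied by the assumed identifiability hypothesis, since Corollary~\ref{thm:pointwise-null-common} gives $k=\widetilde k$ and a permutation; you should rely on its positive ReLU scales rather than the signed scales of Theorem~\ref{thm:relu-adjacent}, because for postactivation features (as used, e.g., in Theorem~\ref{thm:rsa-axis-quotient-exact}) $\rhoR(\alpha z)$ with $\alpha<0$ is not a multiple of $\rhoR(z)$, so your ``depends only on $\alpha_j^2$'' remark is valid only for preactivation features.
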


\begin{proof}
Corollary~\ref{cor:rsa-generic-unique-core} makes the essential classes, and therefore their positive RSM rays, independent of the minimal realization.  Choosing different representatives merely rescales the generators and leaves their cone unchanged.  Lemma~\ref{lem:rsa-theiss-rsm-decomp} gives \eqref{eq:rsa-unique-core-plus-symmetry}; Proposition~\ref{prop:rsa-theiss-primitives-reduction} ensures that the cataloged duplicate, zero, constant, and aligned/opposite constructions return to the same minimal classes.

Changing scale or duplication count changes only $\gamma$, so $Q_f\gamma$ remains in $\mathcal C_f$.  If an added feature has projected contribution $r_u:=\Pi_{\mathsf H}(uu^\top)$, the enlarged cone is $\operatorname{cone}(\mathcal C_f\cup\{r_u\})$.  This equals $\mathcal C_f$ when $r_u\in\mathcal C_f$ and is strictly larger otherwise.

Finally, exact weak alignment at the two adjacent layers gives equality of the transported one-step functions.  The exact weak--strong and one-step identifiability results match the task-used minimal axes up to the activation gauge.  Their privileged-axis RSM directions, and hence their cones, are therefore equal.
\end{proof}

\begin{proposition}[RSA inside the essential-core cone]
\label{prop:rsa-equality-criterion}
Suppose two representations have no added-feature geometry, so
$g_A=Q_f\gamma_A$ and $g_B=Q_f\gamma_B$.  Then if 
$K_{ij}:=\langle q_i,q_j\rangle_F$ is the feature Gram matrix, 
\begin{equation}
 \rho_{\rm RSA}(A,B)
 =\frac{\gamma_A^\top K\gamma_B}
 {\sqrt{\gamma_A^\top K\gamma_A}
  \sqrt{\gamma_B^\top K\gamma_B}}.
 \label{eq:rsa-weight-gram-formula}
\end{equation}
The score is one exactly when the two core geometries lie on the same positive ray in the cone.
\end{proposition}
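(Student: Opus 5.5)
The plan is a direct computation followed by the Cauchy--Schwarz equality case. Under the hypothesis that neither representation carries added-feature geometry, the projected RSMs are $g_A=\Pi_{\mathsf H}R(H_A)=Q_f\gamma_A$ and $g_B=Q_f\gamma_B$. Since $\rho_{\rm RSA}$ in \eqref{eq:rsa-pearson-score} is defined directly on the projected matrices $\Pi_{\mathsf H}M$ and $\Pi_{\mathsf H}N$, no further projection is needed, and the score is the Frobenius cosine $\langle g_A,g_B\rangle_F/(\|g_A\|_F\|g_B\|_F)$. The first step is to expand this cosine by bilinearity of the Frobenius inner product: $\langle Q_f\gamma_A,Q_f\gamma_B\rangle_F=\sum_{i,j}\gamma_{A,i}\gamma_{B,j}\langle q_i,q_j\rangle_F=\gamma_A^\top K\gamma_B$. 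Similarly, $\|Q_f\gamma_N\|_F^2=\gamma_N^\top K\gamma_N$. Substituting these gives \eqref{eq:rsa-weight-gram-formula}. I will note that $K=Q_f^*Q_f$ is positive semidefinite, and that the score is well defined exactly when $\Pi_{\mathsf H}R\neq0$, i.e.\ when $\gamma_N^\top K\gamma_N>0$. This is the standing assumption in the definition of $\rho_{\rm RSA}$, and I will simply state it.

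The second step is the equality criterion. I will apply Cauchy--Schwarz in $\mathsf H$ (equivalently, in the semi-inner product $\langle\gamma,\gamma'\rangle_K$). The cosine equals one if and only if $g_A=\lambda g_B$ for some $\lambda>0$, which means the two core geometries $Q_f\gamma_A$ and $Q_f\gamma_B$ lie on the same open positive ray inside $\mathcal C_f$. This is exactly the claimed statement, phrased at the level of the core geometries rather than the weight vectors.

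The only subtle point is this distinction between geometries and weights. If $K$ is singular, meaning the $q_j$ are linearly dependent, then $\gamma_A$ and $\gamma_B$ need not be proportional for the score to be one. I will therefore state the criterion in terms of $Q_f\gamma$, as the proposition does, and add a remark that when the $q_j$ are linearly independent ($K\succ0$), the criterion becomes $\gamma_A=\lambda\gamma_B$ with $\lambda>0$. No genuine obstacle is expected; the argument is bilinear bookkeeping plus the Cauchy--Schwarz equality case.
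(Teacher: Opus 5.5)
Your proposal is correct and follows the paper's proof essentially verbatim: the paper also expands the Frobenius inner products by bilinearity to get $\gamma_A^\top K\gamma_B$ and the norms $\gamma_N^\top K\gamma_N$, then invokes the Cauchy--Schwarz equality case to get $Q_f\gamma_B=c\,Q_f\gamma_A$ with $c>0$. Your remark that this yields proportional weight vectors only when the $q_j$ are linearly independent is the same caveat the paper records at the end of its proof.
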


\begin{proof}
By the definition of Pearson RSM--RSA as a cosine in projected-RSM space, $$\rho_{\rm RSA}(A,B)
 =\cos(Q_f\gamma_A,Q_f\gamma_B).$$ Expanding the Frobenius inner products gives
\[
 \langle Q_f\gamma_A,Q_f\gamma_B\rangle_F
 =\sum_{i,j}\gamma_{Ai}\gamma_{Bj}\langle q_i,q_j\rangle_F
 =\gamma_A^\top K\gamma_B,
\]
and similarly
$\|Q_f\gamma_A\|_F^2=\gamma_A^\top K\gamma_A$, which proves \eqref{eq:rsa-weight-gram-formula}.  Equality in Cauchy--Schwarz gives
\begin{equation}
 Q_f\gamma_B=c\,Q_f\gamma_A
 \quad\text{for some }c>0.
\end{equation}
If the $q_j$ are linearly independent, equality of the two weighted sums forces equality of their coefficients up to the same scalar $c$.
\end{proof}

The positive conclusion is that the represented function now determines a well-defined cone of core geometries.  Different scale and multiplicity choices select different points in that cone, and RSA equals one precisely when the selected points lie on the same ray.  This immediately exposes the remaining raw-RSA ambiguity.

\begin{proposition}[Raw RSA is not forced by exact weak alignment]
\label{prop:rsa-no-raw-soft-bound}
There is no universal lower bound on raw RSM-RSA that tends to one as adjacent weak-alignment errors tend to zero, even when the two reduced privileged-axis sets agree exactly.  This is a direct consequence of the same scale, duplication, and feature-addition freedom identified by Theiss et al.
\end{proposition}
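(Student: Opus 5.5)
The plan is to prove the proposition by explicit counterexample. I will build a family of exactly weakly aligned network pairs whose reduced privileged-axis sets coincide, but whose raw RSM--RSA scores stay bounded away from one, or can be driven arbitrarily low. Since exact weak alignment gives $\omega_\ell=\omega_{\ell+1}=0$, any putative lower bound $\beta(\omega_\ell,\omega_{\ell+1})\to1$ would have to equal one on these pairs. Exhibiting a single exactly aligned pair with $\rho_{\rm RSA}<1$ therefore already refutes it. I would nonetheless give a one-parameter family with $\rho_{\rm RSA}$ ranging over a nontrivial interval, to show the failure is not marginal.

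The first step fixes a generic minimal source realization $A$ with at least two essential axes $f_1,f_2$ on $\Omega_P$. I will choose these so that $q_1=\Pi_{\mathsf H}(f_1f_1^\top)$ and $q_2=\Pi_{\mathsf H}(f_2f_2^\top)$ are nonzero and not positively proportional; this is possible for generic stimuli.

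The second step produces $B$ from $A$ by the Theiss operations only. In the scaling/duplication variant, I duplicate the first axis $N$ times, splitting its readout as $v_1/N$ across the copies. By the duplication computation in Proposition~\ref{prop:rsa-theiss-primitives-reduction}, $B$ represents the same function at layer $\ell+1$ and has the same layer-$\ell$ preactivation functions up to repetition.

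I then need exact weak alignment at layer $\ell$ in the sense of the injective affine map $E_\ell$. For this I take $E_\ell$ to be the injective linear map that copies coordinate $1$ $N$ times. At layer $\ell+1$ I take $E_{\ell+1}=\id$. Both layers are then exactly aligned, and the minimal dictionaries agree, so the reduced privileged-axis sets coincide.

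By Lemma~\ref{lem:rsa-theiss-rsm-decomp}, $g_A=\gamma_1q_1+\gamma_2q_2+\cdots$ and $g_B=N\gamma_1q_1+\gamma_2q_2+\cdots$. Proposition~\ref{prop:rsa-equality-criterion} then gives $\rho_{\rm RSA}(A,B)$ through the Gram formula \eqref{eq:rsa-weight-gram-formula}. As $N\to\infty$ this tends to $\cos(g_A,q_1)$, which is strictly less than one because $g_A$ is not on the ray of $q_1$. A scaling variant, which multiplies the ReLU argument by $\alpha$ with the compensating $\alpha^{-1}$ on the readout, gives the same effect with a continuous parameter.

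For a stronger statement, the third step uses feature addition. I add a canceling pair $u$ and $-u$ with equal readouts, or a unit whose readout is zero. Either leaves the next layer unchanged, while contributing $\eta\,\Pi_{\mathsf H}(uu^\top)$ to $g_B$. I will choose $u$ so that $\Pi_{\mathsf H}(uu^\top)$ is nearly orthogonal to, or anticorrelated with, $g_A$. Letting $\eta\to\infty$ then drives $\rho_{\rm RSA}$ toward $\cos(g_A,\Pi_{\mathsf H}(uu^\top))$, which can be made near zero or negative.

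The main obstacle is to make this addition step consistent with the paper's definitions. Weak alignment requires $E_\ell$ to be an injective affine map on the whole A-side layer, and the extra B-coordinate must be an affine image of $z^A_\ell$ on the task set. I would handle this by taking $u$ to be a preactivation of the form $(E_\ell z)_{i}=w^\top z^A_\ell+c$, which is affine by construction. I expect this to leave enough freedom to steer $\Pi_{\mathsf H}(uu^\top)$, because $u$ ranges over the span of the layer's coordinate functions. If that freedom proves too small, I would fall back on the duplication/scaling family alone, which already refutes any bound tending to one.
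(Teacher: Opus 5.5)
Your duplication family is essentially the paper's own proof: one copy each of $f_1,f_2$ in $A$, $M$ copies of $f_1$ with the readout split equally in $B_M$, zero weak error, and $\rho_{\rm RSA}\to\cos(g_A,q_1)<1$. Your explicit maps ($E_\ell$ copying coordinate $1$, and $E_{\ell+1}=\id$) just make concrete the paper's claim that the weak error is zero.

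Two small corrections to the extras:
\begin{itemize}
\item You need $q_1,q_2$ non-collinear, as the paper requires, not merely ``not positively proportional.'' If $q_2=-cq_1$ with $c>0$, then $g_A$ and every $g_{B_N}$ are multiples of $q_1$. With unit weights and $0<c<1$ they all lie on the same positive ray, so $\rho_{\rm RSA}=1$ and the counterexample fails.
\item In the feature-addition step, units $u$ and $-u$ with equal readouts do not cancel; for instance $\rho(u)+\rho(-u)=|u|$. Use two copies of $u$ with opposite readouts, or your zero-readout unit, instead.
\end{itemize}
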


\begin{proof}
Choose two nonzero feature vectors $f_1,f_2\in\R^P$ such that
\[
 q_1:=\Pi_{\mathsf H}(f_1f_1^\top),
 \qquad
 q_2:=\Pi_{\mathsf H}(f_2f_2^\top)
\]
are not collinear.  Let network $A$ contain one copy of each feature.  Let network $B_M$ contain $M$ identical copies of $f_1$ and one copy of $f_2$.  Split the original downstream readout vector of $f_1$ equally among the $M$ copies.  The sum of the $M$ output contributions is unchanged, so $A$ and $B_M$ represent the same one-step function.  Their minimal axis dictionaries are therefore the same $\{f_1,f_2\}$.  Their weak error and reduced strong-axis error can therefore both be zero.

Their raw RSMs are
\[
 R_A=f_1f_1^\top+f_2f_2^\top,
 \qquad
 R_{B_M}=Mf_1f_1^\top+f_2f_2^\top,
\]
and hence
\[
 g_A=q_1+q_2,
 \qquad
 g_{B_M}=Mq_1+q_2.
\]
Because
$M^{-1}g_{B_M}=q_1+M^{-1}q_2\to q_1$, continuity of normalization gives
\[
 \frac{g_{B_M}}{\|g_{B_M}\|_F}
 \longrightarrow
 \frac{q_1}{\|q_1\|_F}.
\]
Therefore
\[
 \rho_{\rm RSA}(A,B_M)
 \longrightarrow
 \frac{\langle q_1+q_2,q_1\rangle_F}
 {\|q_1+q_2\|_F\,\|q_1\|_F}.
\]
The limit is strictly less than one because $q_1+q_2$ is not a positive multiple of $q_1$.  Thus a sequence with identically zero weak error has raw RSA bounded away from one, which rules out any universal weak-error-only lower bound converging to one.

Duplication already proves the claim.  Theiss feature addition gives still more freedom: if an added projected rank-one term lies outside the old cone, it creates a new RSM direction.  Their Proposition~B.4 shows that, as such generators are added, the attainable RSA scores against a fixed reference form nested intervals that can become very wide.  Hence exact agreement of the essential axes does not determine a raw RSM.
\end{proof}

Proposition~\ref{prop:rsa-no-raw-soft-bound} is a no-go result for weak--strong equivalence for \emph{raw} RSA, not for the core.  Exact weak alignment, together with the usual minimality and identifiability hypotheses, determines the privileged-axis dictionary and the cone $\mathcal C_f$.  It does not determine the weights $\gamma$ or the redundant term $a$.  A canonical RSA construction must therefore either remove this freedom or select a preferred point within it.

\subsubsection{Two canonical RSM constructions}
\label{sec:rsa-canonical-core}

There are two natural ways to turn the now-well-defined essential core into one RSM.  The \emph{axis quotient} removes the remaining gauge: it gives each essential axis class one equal, unit-normalized vote.  This is the simpler construction and is appropriate when the goal is to compare which privileged axes are present, independently of their scale, physical multiplicity, or downstream strength.  \emph{Minimum-norm balancing} instead selects a particular representative of the ReLU scale orbit.  This is closer to the gauge-selection strategy of \citet{theiss2026parameter}: it retains a gauge-invariant weighting based on how costly an axis is to generate and how strongly it is read out.  The potential advantage is that two axes need not count equally when one plays a much larger role in the computation.  The price is additional structure: one must choose a norm-based cost and common Euclidean coordinates in which that cost is evaluated.

\paragraph{Axis quotient: gauge removal.}

\begin{definition}[Axis-quotiented RSM and normalized RSA]
\label{def:rsa-axis-quotient}
For each nonzero sampled feature vector $f_j\in\R^P$ associated with an essential axis class, define
\[
 P[f_j]:=\frac{f_jf_j^\top}{\|f_j\|_2^2}.
\]
If the essential core has $m$ classes, define
\begin{equation}
 R_{\rm ax}(G)
 :=\frac1m\sum_{j=1}^{m}P[f_j],
 \qquad
 \rho_{\rm ax}(G,\widetilde G)
 :=\rho_{\rm RSA}(R_{\rm ax}(G),R_{\rm ax}(\widetilde G)).
 \label{eq:rsa-axis-quotient}
\end{equation}
The axis quotient gives one equal, scale-free contribution to each essential class, regardless of physical multiplicity.
\end{definition}

Proposition~\ref{prop:rsa-theiss-primitives-reduction} explains why the cataloged raw redundancies do not survive in this construction: duplicates represent one minimal class, zero groups disappear, constants enter the bias, and aligned/opposite groups leave at most one nonlinear class.  The normalization in $P[f_j]$ removes the remaining positive ReLU gauge.  Under Corollary~\ref{cor:rsa-generic-unique-core}, every minimal realization of the same generic function gives the same axis-quotiented RSM.  A positive rescaling leaves each projector unchanged because
\[
 \frac{(\alpha f)(\alpha f)^\top}{\|\alpha f\|_2^2}
 =\frac{ff^\top}{\|f\|_2^2}.
\]
Hence all minimal realizations produce the same average of projectors.
The axis-quotiented RSM is thus well defined and may therefore write it as $R_{\rm ax}(f)$.

The key point is that $R_{\rm ax}$ does support Weak--Strong Equivalence:
\begin{theorem}[Exact weak alignment implies axis-quotiented RSA equality]
\label{thm:rsa-axis-quotient-exact}
Suppose exact weak alignment holds at layers $\ell$ and $\ell+1$, and the induced one-step pair satisfies the minimality and identifiability hypotheses of Corollary~\ref{thm:pointwise-null-common}.  Then
\[
 R_{\rm ax}(A)=R_{\rm ax}(B)
 \qquad\text{and}\qquad
 \rho_{\rm ax}(A,B)=1
\]
whenever the common projected axis-quotiented RSM is nonzero.  The same conclusion holds at every layer recovered by exact zippering.
\end{theorem}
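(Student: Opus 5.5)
The plan is to reduce the statement to an equality of minimal axis dictionaries, and then use the fact that $R_{\rm ax}$ depends only on the unordered set of essential-axis rays evaluated on $\Omega_P$. Exact weak alignment at layers $\ell$ and $\ell+1$, through Lemma~\ref{lem:adjacent-commutation-common}, gives
\[
 \Phi^B_{\ell,\varphi}\circ E_\ell=E_{\ell+1}\circ\Phi^A_{\ell,\varphi}
\]
on the task set. This means the transformed $A$-side expansion $E_{\ell+1}G^A_\ell$ and the pulled-back $B$-side expansion $G^B_\ell$ are two one-step expansions of the same function, that is, a null pair. Transformation by the injective $T_{\ell+1}$ preserves usedness, reducedness and sign-noncancellation, as already checked in the proof of Theorem~\ref{thm:sp-adjacent}. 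So Corollary~\ref{thm:pointwise-null-common} applies. It yields a bijection $\pi$ between the counted terms, with $\widetilde g_{\pi(j)}=g_j$ for Softplus and $\widetilde g_{\pi(j)}=\alpha_jg_j$ with $\alpha_j>0$ for ReLU. In particular the two sides have the same number $m$ of essential classes.

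Next I would pass to sampled feature vectors. On $\Omega_P$ the argument functions give vectors $f_j=(g_j(x^1),\ldots,g_j(x^P))$ and $\widetilde f_{\pi(j)}=\alpha_jf_j$, with $\alpha_j=1$ for Softplus. The projector $P[f]=ff^\top/\|f\|_2^2$ is invariant under nonzero scalings, so $P[\widetilde f_{\pi(j)}]=P[f_j]$ for each $j$. Since $\pi$ is a bijection and both sides are normalized by the same $1/m$, summing gives $R_{\rm ax}(B)=R_{\rm ax}(A)$. I also need to argue that the counted dictionary entering $R_{\rm ax}$ does not depend on which physical realization is used, in particular that duplicates and zero groups collapse to single classes. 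For this I would cite Proposition~\ref{prop:rsa-theiss-primitives-reduction}, or more generally Corollary~\ref{cor:rsa-generic-unique-core}, so that $R_{\rm ax}$ is computed from the minimal dictionary on each side. Once the two matrices are equal, $\Pi_{\mathsf H}R_{\rm ax}(A)=\Pi_{\mathsf H}R_{\rm ax}(B)\neq0$ by hypothesis, and the cosine in \eqref{eq:rsa-pearson-score} equals one.

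For the zippering clause I would invoke Theorem~\ref{thm:abstract-exact-zip}. At each recovered layer $r$, it provides the same permutation, or permutation with positive scaling, between the layer-$r$ arguments of the one-step expansions. These are exactly the argument-level matches used above, so the same projector computation gives equality at every $r=q,\ldots,s-1$.

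I expect the main obstacle to be a degenerate sampled vector: some $f_j$ might vanish on the finite set $\Omega_P$ even though $g_j$ is nonconstant on $\Omega$, and then $P[f_j]$ is undefined. I would handle this by the same rule on both sides, restricting to classes with nonzero sampled vectors as Definition~\ref{def:rsa-axis-quotient} already presumes. Because $\widetilde f_{\pi(j)}=\alpha_jf_j$ with $\alpha_j\neq0$, a vector vanishes on one side exactly when its match vanishes on the other, so the retained sets still correspond bijectively under $\pi$ and have equal size. A second point to check is that the matching acts on preactivation arguments rather than activated features. The argument above assumes $R_{\rm ax}$ is built from the arguments $g_j$. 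If it is instead built from $\varphi(g_j)$, the ReLU identity $\rho(\alpha g)=\alpha\rho(g)$ for $\alpha>0$ and the exact Softplus match give the same projector invariance, so the conclusion holds either way.
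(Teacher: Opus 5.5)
Your proposal is correct and follows essentially the same route as the paper: adjacent commutation turns exact weak alignment into a null pair, Corollary~\ref{thm:pointwise-null-common} gives a bijection of minimal axes with unit (Softplus) or positive (ReLU) scales, and scale invariance of $P[f]$ makes the matched projectors equal term by term, with exact zippering repeating this at each recovered layer. Your extra remarks are refinements of that same argument rather than a different route: the paper works directly with postactivation sampled vectors, whereas you cover both preactivation and postactivation vectors, and you also treat sampled vectors vanishing on $\Omega_P$.
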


\begin{proof}
Exact weak alignment at the two adjacent layers gives an equality of transported one-step functions by Lemma~1.  Corollary~\ref{thm:pointwise-null-common} then gives a bijection of the minimal axes.  After permutation, their sampled postactivation feature vectors satisfy
\[
 f_j^B=\alpha_jf_j^A,
 \qquad \alpha_j>0,
\]
with $\alpha_j=1$ for Softplus.  Therefore $P[f_j^B]=P[f_j^A]$ for every matched pair, and summing the projectors gives
$R_{\rm ax}(A)=R_{\rm ax}(B)$.  Identical nonzero projected RSMs have Pearson correlation one.  Exact zippering supplies the same axis relation at every recovered layer.
\end{proof}

\paragraph{Minimum-norm balance: gauge selection.}
The axis quotient deliberately discards all relative weighting among essential axes.  The minimum-representation-norm and minimum-weight-norm rules of Theiss et al. make a different choice: they select the ReLU scale at which the cost of producing an axis is balanced against the norm of the readout that uses it.  The resulting RSM is more complicated, but it can retain information about computational importance.  Its weights depend jointly on the size or cost of a generated feature and on the downstream readout that uses it, rather than giving every essential class one equal vote.  This makes balancing attractive when the goal is not merely to compare the inventory of privileged axes, but to compare a norm-selected implementation of the computation.  Unlike the axis quotient, however, it depends on the chosen MR or MW cost and on the Euclidean coordinates in which those norms are measured.

\begin{definition}[Generation cost and minimum-norm balance]
\label{def:rsa-generation-cost}
For a ReLU minimal-core term $(v_j,g_j)$, a \emph{generation cost} is a positive number $s_j=s(g_j)$ satisfying
\begin{equation}
 s(\alpha g_j)=\alpha^2s(g_j),
 \qquad \alpha>0.
 \label{eq:rsa-scale-covariant-cost}
\end{equation}
It measures the squared norm assigned to producing the axis before the outgoing readout is applied.  The two choices used by Theiss et al. are
\begin{align}
 s_j^{\rm MR}&=\|h_j\|_2^2,
 &h_j&:=\bigl(\rhoR(g_j(x^1)),\ldots,\rhoR(g_j(x^P))\bigr)^\top,
 \label{eq:rsa-MR-generation-cost}\\
 s_j^{\rm MW}&=\|w_j\|_2^2+b_j^2,
 &g_j(x)&=w_j^\top x+b_j.
 \label{eq:rsa-MW-generation-cost}
\end{align}
The first is the feature-generation part of the minimum representation-norm parameterization (MRNP) objective; the second is the corresponding part of the minimum weight-norm parameterization (MWNP) objective.

Along the ReLU gauge
$(v_j,g_j)\mapsto(\alpha_j^{-1}v_j,\alpha_jg_j)$, define
\begin{equation}
 J_s(\alpha)
 :=\sum_{j=1}^{m}
 \left(\alpha_j^2s_j+\alpha_j^{-2}\|v_j\|_2^2\right).
 \label{eq:rsa-balance-objective}
\end{equation}
A minimal core is \emph{balanced} when its scales minimize $J_s$.
All generation costs and readout norms are evaluated in the Euclidean coordinates in which the supplied one-step expansion is written.
\end{definition}

Making an axis larger increases its generation cost but decreases the norm of its inverse-scaled readout; making it smaller does the opposite.  Balance selects the unique point where these two pressures agree.

\begin{theorem}[Balanced essential core and canonical RSM]
\label{thm:rsa-balanced-full-fiber}
Let $f$ be a represented ReLU function whose minimal realizations form one permutation-and-positive-scale orbit, as Theorem~\ref{thm:rsa-generic-full-fiber} guarantees generically.  Let
$G=(b,\{(v_j,g_j)\}_{j=1}^{m})$ be any minimal representative, with $s_j>0$ and $v_j\neq0$.  Then \eqref{eq:rsa-balance-objective} has a unique scale minimizer, with effective RSM weights
\begin{equation}
 \gamma_j^\star=(\alpha_j^\star)^2
 =\frac{\|v_j\|_2}{\sqrt{s_j}}.
 \label{eq:rsa-balanced-gamma}
\end{equation}
The resulting canonical RSM
\begin{equation}
 R_s^{\rm bal}(f)
 :=\sum_{j=1}^{m}\gamma_j^\star h_jh_j^\top
 \label{eq:rsa-balanced-rsm}
\end{equation}
is independent of the representative chosen from the ReLU scale orbit.
\end{theorem}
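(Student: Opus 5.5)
The plan has three steps: show that the balance objective decouples term by term, solve each one-dimensional minimization by AM--GM, and then check that the resulting RSM is invariant along the ReLU scale orbit.

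\textbf{Decoupling and the one-term minimization.} Along the gauge $(v_j,g_j)\mapsto(\alpha_j^{-1}v_j,\alpha_jg_j)$, the rescaled term has generation cost $\alpha_j^2s_j$ by \eqref{eq:rsa-scale-covariant-cost} and readout norm $\alpha_j^{-2}\|v_j\|_2^2$. So $J_s(\alpha)=\sum_j\phi_j(\alpha_j^2)$ with
\[
 \phi_j(t)=ts_j+t^{-1}\|v_j\|_2^2,\qquad t>0 .
\]
Because the objective is a sum of independent terms, a minimizer of $J_s$ is exactly a choice of $t_j=\alpha_j^2$ minimizing each $\phi_j$ separately. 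Since $s_j>0$ and $v_j\neq0$, each $\phi_j$ is strictly convex on $(0,\infty)$ and tends to $+\infty$ at both ends of the interval. Its unique critical point is $t_j^\star=\|v_j\|_2/\sqrt{s_j}$, which gives \eqref{eq:rsa-balanced-gamma}; AM--GM gives the minimum value $2\sqrt{s_j}\|v_j\|_2$. Positivity of $\alpha_j$ then makes $\alpha_j^\star=\sqrt{t_j^\star}$ unique.

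\textbf{Invariance under the scale orbit.} Let $\widetilde G$ be another minimal representative. By the orbit hypothesis (supplied generically by Theorem~\ref{thm:rsa-generic-full-fiber} and Corollary~\ref{cor:rsa-generic-unique-core}), after a permutation we have $\widetilde g_j=\beta_jg_j$ and $\widetilde v_j=\beta_j^{-1}v_j$ for some $\beta_j>0$. Then the quantities transform as
\[
 \widetilde s_j=\beta_j^2s_j,\qquad \widetilde h_j=\beta_jh_j,\qquad \|\widetilde v_j\|_2=\beta_j^{-1}\|v_j\|_2,
\]
where $\widetilde h_j=\beta_jh_j$ uses the positive homogeneity of $\rhoR$. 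It follows that $\widetilde\gamma_j^\star=\beta_j^{-2}\gamma_j^\star$, and therefore
\[
 \widetilde\gamma_j^\star\widetilde h_j\widetilde h_j^\top=\gamma_j^\star h_jh_j^\top .
\]
So each rank-one summand of \eqref{eq:rsa-balanced-rsm} is unchanged. Permuting the terms does not change the sum, so $R_s^{\rm bal}(f)$ is well defined.

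\textbf{Checks and expected obstacle.} I also need the scaling law \eqref{eq:rsa-scale-covariant-cost} for both of the concrete costs. For MR this follows from $h_j\mapsto\beta_jh_j$. For MW it follows because $g_j=w_j^\top x+b_j$ scales its coefficients $(w_j,b_j)$ linearly. An equivalent way to see the invariance is that the balanced representative $(\alpha_j^{\star-1}v_j,\alpha_j^\star g_j)$ is the same point for every member of the orbit, so the RSM built from it cannot depend on the starting representative. The main obstacle is bookkeeping rather than analysis. One must confirm that "effective weight" $\gamma_j^\star$ applied to the unbalanced feature $h_j$ agrees with the RSM of the balanced representative, which has feature $\alpha_j^\star h_j$ with weight $1$. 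One must also confirm that the MW cost is computed in the fixed Euclidean coordinates of the supplied expansion, as Definition~\ref{def:rsa-generation-cost} stipulates, so that $s_j$ transforms covariantly and does not absorb an extra coordinate change.
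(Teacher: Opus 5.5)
Your proposal is correct and follows essentially the same route as the paper: you decouple $J_s$ into independent per-axis terms, minimize $\gamma s_j+\gamma^{-1}\|v_j\|_2^2$ by strict convexity to get $\gamma_j^\star=\|v_j\|_2/\sqrt{s_j}$, and check that under $(v_j,g_j)\mapsto(\beta_j^{-1}v_j,\beta_jg_j)$ one has $\widetilde\gamma_j^\star=\beta_j^{-2}\gamma_j^\star$ and $\widetilde h_j=\beta_jh_j$, so every rank-one summand is unchanged. Your extra remarks are correct supplements rather than a different argument: the AM--GM minimum value, the verification of the scaling law for the MR and MW costs, and the observation that the balanced representative itself is the same point across the orbit.
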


\begin{proof}
By hypothesis, every minimal realization of $f$ belongs to the same permutation-and-positive-scale orbit, so it is enough to minimize $J_s$ along that orbit. 

The objective separates over axes:
\[
 J_s(\alpha)=\sum_{j=1}^{m}J_j(\alpha_j),
 \qquad
 J_j(\alpha_j)=\alpha_j^2s_j+
 \alpha_j^{-2}\|v_j\|_2^2.
\]
Set $\gamma=\alpha_j^2>0$.  Then
\[
 F_j(\gamma)=\gamma s_j+\gamma^{-1}\|v_j\|_2^2.
\]
Because $s_j>0$ and $v_j\neq0$,
\[
 \lim_{\gamma\downarrow0}F_j(\gamma)=+\infty,
 \qquad
 \lim_{\gamma\uparrow\infty}F_j(\gamma)=+\infty,
\]
so a minimum exists in $(0,\infty)$.  Differentiating gives
\[
 F_j'(\gamma)=s_j-\gamma^{-2}\|v_j\|_2^2,
 \qquad
 F_j''(\gamma)=2\gamma^{-3}\|v_j\|_2^2>0.
\]
The positive second derivative makes $F_j$ strictly convex, hence its critical point is the unique minimizer.  Solving
$F_j'(\gamma)=0$ yields
\[
 \gamma_j^\star=\frac{\|v_j\|_2}{\sqrt{s_j}},
\]
which proves \eqref{eq:rsa-balanced-gamma}.  Since the full objective is a sum of independent strictly convex terms, the vector of optimal scales is unique.

Under scale $\alpha_j$, positive homogeneity gives the post-ReLU sample vector $\alpha_jh_j$.  Its RSM contribution is
\[
 (\alpha_jh_j)(\alpha_jh_j)^\top
 =\alpha_j^2h_jh_j^\top
 =\gamma_jh_jh_j^\top.
\]
Substituting the optimal weights and summing proves \eqref{eq:rsa-balanced-rsm}.

It remains to show that the result does not depend on the initial representative of the ReLU scale orbit.  Let another representative be
\[
 \widetilde h_j=\beta_jh_j,
 \qquad
 \widetilde v_j=\beta_j^{-1}v_j,
 \qquad
 \widetilde s_j=\beta_j^2s_j,
 \qquad \beta_j>0.
\]
Its optimal effective weight is
\begin{align*}
 \widetilde\gamma_j^\star
 &=\frac{\|\widetilde v_j\|_2}{\sqrt{\widetilde s_j}}\\
 &=\frac{\beta_j^{-1}\|v_j\|_2}
         {\beta_j\sqrt{s_j}}\\
 &=\beta_j^{-2}\gamma_j^\star.
\end{align*}
Therefore
\[
 \widetilde\gamma_j^\star
 \widetilde h_j\widetilde h_j^\top
 =\beta_j^{-2}\gamma_j^\star
  (\beta_jh_j)(\beta_jh_j)^\top
 =\gamma_j^\star h_jh_j^\top.
\]
Every rank-one contribution is unchanged, and so is their sum.  The single-orbit premise therefore makes this representative-independent matrix common to every minimal realization of $f$.
\end{proof}

Theorem~\ref{thm:rsa-balanced-full-fiber} extends Theiss Proposition~C.3 from a fixed cataloged orbit to the generically unique essential core.  Its extra content is that the selected RSM no longer depends on assuming in advance that the chosen Theiss orbit exhausts all minimal realizations of the function.

\begin{corollary}[Balanced RSM as a computationally weighted axis quotient]
\label{cor:rsa-weighted-axis-quotient}
Under Theorem~\ref{thm:rsa-balanced-full-fiber}, the MR-balanced RSM is
\begin{equation}
 R_{\rm BMR}(f)
 =\sum_{j=1}^{m}
 \|v_j\|_2\|h_j\|_2
 \frac{h_jh_j^\top}{\|h_j\|_2^2},
 \label{eq:rsa-BMR-weighted-projector}
\end{equation}
and the MW-balanced RSM is
\begin{equation}
 R_{\rm BMW}(f)
 =\sum_{j=1}^{m}
 \frac{\|v_j\|_2\|h_j\|_2^2}
 {\sqrt{\|w_j\|_2^2+b_j^2}}
 \frac{h_jh_j^\top}{\|h_j\|_2^2}.
 \label{eq:rsa-BMW-weighted-projector}
\end{equation}
Thus $R_{\rm ax}$ gives one equal vote per essential axis, whereas the balanced RSM gives one gauge-invariant vote weighted by the axis's readout strength relative to its generation cost.
\end{corollary}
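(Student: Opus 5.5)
The plan is to derive both displays by direct substitution into the balanced RSM of Theorem~\ref{thm:rsa-balanced-full-fiber}, namely $R_s^{\rm bal}(f)=\sum_j\gamma_j^\star h_jh_j^\top$ with $\gamma_j^\star=\|v_j\|_2/\sqrt{s_j}$ from \eqref{eq:rsa-balanced-gamma}. I will take the specific generation costs of Definition~\ref{def:rsa-generation-cost}, compute $\gamma_j^\star$ in each case, and then rewrite each rank-one term $h_jh_j^\top$ as $\|h_j\|_2^2\,P[h_j]$, where $P[h_j]=h_jh_j^\top/\|h_j\|_2^2$ is the unit-trace projector used in Definition~\ref{def:rsa-axis-quotient}. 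This rewriting is legitimate because $h_j\neq0$: minimality (in fact two-sided crossing, or $s_j>0$ in the MR case) guarantees a nonzero post-ReLU sample vector.

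For the MR cost, $s_j^{\rm MR}=\|h_j\|_2^2$, so $\gamma_j^\star=\|v_j\|_2/\|h_j\|_2$. Then
\[
 \gamma_j^\star h_jh_j^\top=\frac{\|v_j\|_2}{\|h_j\|_2}\,\|h_j\|_2^2\,P[h_j]=\|v_j\|_2\|h_j\|_2\,P[h_j],
\]
which gives \eqref{eq:rsa-BMR-weighted-projector}.

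For the MW cost, $s_j^{\rm MW}=\|w_j\|_2^2+b_j^2$, so $\gamma_j^\star=\|v_j\|_2/\sqrt{\|w_j\|_2^2+b_j^2}$. Multiplying by $\|h_j\|_2^2P[h_j]$ gives \eqref{eq:rsa-BMW-weighted-projector}. Positivity of $s_j$ here requires $(w_j,b_j)\neq0$, which holds because $g_j$ is nonconstant.

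For the interpretive claim, I will compare with $R_{\rm ax}$, which is $\frac1m\sum_jP[h_j]$. Both constructions are nonnegative combinations of the same projectors, and these projectors are gauge-invariant by the computation following Definition~\ref{def:rsa-axis-quotient}. The axis quotient uses equal weights $1/m$. The balanced version uses the weight $\gamma_j^\star\|h_j\|_2^2=\|v_j\|_2\|h_j\|_2^2/\sqrt{s_j}$. I will check that this weight is invariant under the ReLU gauge $(v_j,h_j,s_j)\mapsto(\beta^{-1}v_j,\beta h_j,\beta^2s_j)$: the exponents of $\beta$ are $-1+2-1=0$. This matches the representative-independence already established in Theorem~\ref{thm:rsa-balanced-full-fiber}, and it justifies reading the weight as readout strength relative to generation cost.

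There is no real obstacle. The only care needed is in the MW case, where the effective weight $\|v_j\|_2\|h_j\|_2^2/\sqrt{\|w_j\|_2^2+b_j^2}$ mixes a sampled norm with a parameter norm. I will check its gauge invariance explicitly using $s(\alpha g)=\alpha^2s(g)$ from \eqref{eq:rsa-scale-covariant-cost}.
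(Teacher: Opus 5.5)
Your proposal is correct and follows the paper's proof: substitute $s^{\rm MR}$ and $s^{\rm MW}$ into $\gamma_j^\star=\|v_j\|_2/\sqrt{s_j}$ and rewrite $h_jh_j^\top=\|h_j\|_2^2P[h_j]$. Your explicit $\beta$-exponent check of gauge invariance just unpacks the representative-independence that the paper cites from Theorem~\ref{thm:rsa-balanced-full-fiber}. One small caveat: in the MW case, $h_j\neq0$ on the finite evaluation sample $\Omega_P$ does not follow from task-level minimality or two-sided crossing, nor from $s_j>0$. Treat it instead as the same nonzero-sample convention the paper uses in Definition~\ref{def:rsa-axis-quotient}.
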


\begin{proof}
For MR balance,
$s_j=\|h_j\|_2^2$, so Theorem~\ref{thm:rsa-balanced-full-fiber} gives
$\gamma_j^\star=\|v_j\|_2/\|h_j\|_2$.  Therefore
\begin{align*}
 \gamma_j^\star h_jh_j^\top
 &=\frac{\|v_j\|_2}{\|h_j\|_2}h_jh_j^\top\\
 &=\|v_j\|_2\|h_j\|_2
   \frac{h_jh_j^\top}{\|h_j\|_2^2}.
\end{align*}
Summing proves \eqref{eq:rsa-BMR-weighted-projector}.

For MW balance,
$\gamma_j^\star=\|v_j\|_2/
\sqrt{\|w_j\|_2^2+b_j^2}$.  Rewriting
$h_jh_j^\top$ as
$\|h_j\|_2^2(h_jh_j^\top/\|h_j\|_2^2)$ gives \eqref{eq:rsa-BMW-weighted-projector}.  Gauge invariance follows from the representative-independence proved in Theorem~\ref{thm:rsa-balanced-full-fiber}.
\end{proof}

The preceding corollary makes the comparison with the axis quotient explicit.  Both are averages of the same normalized rank-one projectors.  The axis quotient gives each projector weight $1/m$; the balanced construction assigns a gauge-invariant weight determined by readout strength and generation cost.  The latter can therefore retain a notion of computational importance that the equal-vote quotient intentionally removes.

\begin{definition}[Common-coordinate one-step comparison]
\label{def:rsa-common-coordinate-comparison}
Let $G_A$ and $G_B$ be adjacent one-step expansions, and let $E_\ell,E_{\ell+1}$ be their weak-comparison maps.  Using the output action $EG$ and input pullback $E^\ast G$ from Definition~\ref{def:one-step-expansion}, define
\begin{equation}
 G_A^{\rm com}:=E_{\ell+1}G_A,
 \qquad
 G_B^{\rm com}:=E_\ell^{\ast}G_B.
 \label{eq:rsa-common-coordinate-pair}
\end{equation}
These are the two one-step formulas written with the same layer-$\ell$ input coordinate and the same layer-$(\ell+1)$ output coordinate.
\end{definition}

\begin{theorem}[Exact weak alignment forces equality of balanced RSMs in common coordinates]
\label{thm:rsa-minnorm-equality}
Suppose exact weak alignment holds at layers $\ell$ and $\ell+1$.  Let $(G_A^{\rm com},G_B^{\rm com})$ be the common-coordinate pair in Definition~\ref{def:rsa-common-coordinate-comparison}.  If this pair satisfies the minimality and identifiability hypotheses of Corollary~\ref{thm:pointwise-null-common}, then, for either $s=s^{\rm MR}$ or $s=s^{\rm MW}$,
\begin{equation}
 R_s^{\rm bal}(G_A^{\rm com})
 =R_s^{\rm bal}(G_B^{\rm com}).
 \label{eq:rsa-common-coordinate-balanced-equality}
\end{equation}
Consequently their Pearson RSM--RSA score is one whenever the common projected RSM is nonzero.
\end{theorem}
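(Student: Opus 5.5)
The plan is to reduce the statement to the ReLU identifiability theorem applied to the common-coordinate pair, and then to check that the balanced RSM is invariant under the residual gauge that identifiability leaves. First, exact weak alignment at layers $\ell$ and $\ell+1$ gives, by Lemma~\ref{lem:adjacent-commutation-common}, the identity $\Phi^B_\ell(E_\ell z)=E_{\ell+1}\Phi^A_\ell(z)$ on the task patch. In the notation of Definition~\ref{def:rsa-common-coordinate-comparison}, this identity says that $G_A^{\rm com}$ and $G_B^{\rm com}$ realize the same one-step function of the common layer-$\ell$ coordinate, in the common layer-$(\ell+1)$ output coordinate. They therefore form a null pair. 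By hypothesis this pair satisfies the assumptions of Corollary~\ref{thm:pointwise-null-common}, so there is a permutation $\pi$ and, for ReLU, scales $\alpha_j>0$ with $\widetilde g_{\pi(j)}=\alpha_j g_j$, $\alpha_j\widetilde v_{\pi(j)}=v_j$ and $\widetilde b=b$; for Softplus all $\alpha_j=1$.

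The second step transfers this matching to the ingredients of $R_s^{\rm bal}$. The post-activation sample vectors satisfy $\widetilde h_{\pi(j)}=\alpha_j h_j$ by positive homogeneity. The readouts satisfy $\|\widetilde v_{\pi(j)}\|_2=\alpha_j^{-1}\|v_j\|_2$, and both are measured in the same output coordinates, which is exactly why the comparison is made in common coordinates. For the costs, the MR cost obeys $\widetilde s_{\pi(j)}=\alpha_j^2 s_j$ immediately. For the MW cost, both arguments are affine functions of the same common input coordinate, so their weight–bias vectors are related by $(\widetilde w,\widetilde b)=\alpha_j(w_j,b_j)$, and the same scaling law follows from \eqref{eq:rsa-scale-covariant-cost}. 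I expect this to be the main point needing care. The MW cost is coordinate dependent, so I will spell out that the pullback $E_\ell^\ast$ makes the B-side weights expressed in A-side layer-$\ell$ coordinates, and then the proportionality $\widetilde g=\alpha g$ as affine functions on the task patch's affine hull forces proportionality of the coefficient vectors. If the affine hull is lower-dimensional, I will restrict costs to that hull or note that the chart fixes the representation.

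Third, I apply the gauge-invariance computation already carried out in the proof of Theorem~\ref{thm:rsa-balanced-full-fiber}, now with $\beta_j=\alpha_j$. It gives $\widetilde\gamma^\star_{\pi(j)}=\alpha_j^{-2}\gamma^\star_j$, and hence $\widetilde\gamma^\star_{\pi(j)}\widetilde h_{\pi(j)}\widetilde h_{\pi(j)}^\top=\gamma_j^\star h_jh_j^\top$ termwise. Summing over the bijection $\pi$, which has equal term counts by the identifiability conclusion, yields \eqref{eq:rsa-common-coordinate-balanced-equality}. Since identical nonzero projected RSMs have cosine one under \eqref{eq:rsa-pearson-score}, the Pearson score is one. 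The Softplus case is the trivial special case $\alpha_j=1$.
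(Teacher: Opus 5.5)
Your proposal is correct and follows the paper's proof essentially step for step. Adjacent commutation turns exact weak alignment into a null pair of common-coordinate expansions, and Corollary~\ref{thm:pointwise-null-common} supplies the permutation and positive scales. Features, readout norms, and MR/MW costs then scale by $\alpha_j$, $\alpha_j^{-1}$, $\alpha_j^{2}$ respectively, so each balanced rank-one term is invariant. You also note that passing from $\widetilde g_{\pi(j)}=\alpha_j g_j$ on the task patch to proportional weight--bias vectors, which the MW cost needs, requires the patch's affine hull to be full-dimensional or the cost to be restricted to that hull; this is genuine extra care that the paper's proof passes over silently.
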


\begin{proof}
Write the two original expansions and comparison maps as
\[
 G_A=\bigl(b_A,\{(v_j^A,g_j^A)\}_{j=1}^{m_A}\bigr),
 \qquad
 G_B=\bigl(b_B,\{(v_i^B,g_i^B)\}_{i=1}^{m_B}\bigr),
\]
\[
 E_r(z)=T_rz+a_r,
 \qquad r\in\{\ell,\ell+1\}.
\]
Unpacking Definition~\ref{def:rsa-common-coordinate-comparison} gives
\begin{align*}
 G_A^{\rm com}
 &=\left(T_{\ell+1}b_A+a_{\ell+1},
 \{(T_{\ell+1}v_j^A,g_j^A)\}_{j=1}^{m_A}\right),\\
 G_B^{\rm com}
 &=\left(b_B,
 \{(v_i^B,g_i^B\circ E_\ell)\}_{i=1}^{m_B}\right).
\end{align*}
For $z\in C$, exact weak alignment and Lemma~1 give
\begin{align*}
 Y_{G_B^{\rm com}}^{\rhoR}(z)
 &=Y_{G_B}^{\rhoR}(E_\ell z)\\
 &=E_{\ell+1}\bigl(Y_{G_A}^{\rhoR}(z)\bigr)\\
 &=Y_{G_A^{\rm com}}^{\rhoR}(z).
\end{align*}
Hence
\begin{equation*}
 Y_{G_A^{\rm com}}^{\rhoR}(z)
 =Y_{G_B^{\rm com}}^{\rhoR}(z),
 \qquad z\in C.
\end{equation*}
The two expansions therefore represent exactly the same function of the same input variable and take values in the same output Euclidean space.  Corollary~\ref{thm:pointwise-null-common} applies directly to this equality.

For brevity, write
\[
 \widehat v_j^A:=T_{\ell+1}v_j^A,
 \qquad
 \widehat g_i^B:=g_i^B\circ E_\ell.
\]
Corollary~\ref{thm:pointwise-null-common} implies that $m_A=m_B=:m$ and, after a permutation of the $B$ terms, there are positive scales $\lambda_j$ such that
\begin{equation}
 \widehat g_j^B=\lambda_j g_j^A,
 \qquad
 \lambda_jv_j^B=\widehat v_j^A.
 \label{eq:rsa-weak-identification-pair}
\end{equation}
Thus the matching of all counted axes and readouts is a conclusion of exact weak alignment plus one-step identifiability; it is not an additional assumption.

Let $h_j^A$ and $\widehat h_j^B$ be the post-ReLU sample vectors generated by $g_j^A$ and $\widehat g_j^B$ on the common input sample.  Positive homogeneity and \eqref{eq:rsa-weak-identification-pair} give
\begin{equation}
 \widehat h_j^B
 =\rhoR(\widehat g_j^B)
 =\rhoR(\lambda_jg_j^A)
 =\lambda_jh_j^A.
 \label{eq:rsa-common-coordinate-feature-scale}
\end{equation}
The readout relation gives
\begin{equation}
 v_j^B=\lambda_j^{-1}\widehat v_j^A,
 \qquad
 \|v_j^B\|_2
 =\lambda_j^{-1}\|\widehat v_j^A\|_2.
 \label{eq:rsa-common-coordinate-readout-scale}
\end{equation}
Both readout vectors in this equation lie in the common $B$-side output coordinates by construction.

We next compute the generation costs in the common input coordinates.  For MRNP,
\[
 s_{Aj}^{\rm MR}=\|h_j^A\|_2^2,
 \qquad
 s_{Bj}^{{\rm MR},{\rm com}}=\|\widehat h_j^B\|_2^2
 =\lambda_j^2s_{Aj}^{\rm MR}.
\]
For MWNP, write the original $B$-side affine argument as
\[
 g_i^B(y)=(w_i^B)^\top y+\beta_i^B.
\]
Since $E_\ell(z)=T_\ell z+a_\ell$, its pullback appearing in $G_B^{\rm com}$ is
\begin{align*}
 \widehat g_i^B(z)
 &=g_i^B(E_\ell z)\\
 &=(w_i^B)^\top(T_\ell z+a_\ell)+\beta_i^B\\
 &=(T_\ell^\top w_i^B)^\top z
   +\bigl((w_i^B)^\top a_\ell+\beta_i^B\bigr).
\end{align*}
Therefore its common-coordinate MW generation cost is
\begin{equation}
 s_{Bi}^{{\rm MW},{\rm com}}
 =\|T_\ell^\top w_i^B\|_2^2
 +\bigl((w_i^B)^\top a_\ell+\beta_i^B\bigr)^2.
 \label{eq:rsa-common-coordinate-MW-cost}
\end{equation}
Now write
\[
 g_j^A(z)=(w_j^A)^\top z+\beta_j^A,
 \qquad
 \widehat g_j^B(z)=(\widehat w_j^B)^\top z+\widehat\beta_j^B.
\]
The first relation in \eqref{eq:rsa-weak-identification-pair} implies
\[
 (\widehat w_j^B,\widehat\beta_j^B)
 =\lambda_j(w_j^A,\beta_j^A).
\]
Using \eqref{eq:rsa-common-coordinate-MW-cost}, this yields
\[
 s_{Bj}^{{\rm MW},{\rm com}}
 =\|\widehat w_j^B\|_2^2+(\widehat\beta_j^B)^2
 =\lambda_j^2\bigl(\|w_j^A\|_2^2+(\beta_j^A)^2\bigr)
 =\lambda_j^2s_{Aj}^{\rm MW}.
\]
Thus, for either balancing rule,
\begin{equation}
 s_{Bj}^{\rm com}=\lambda_j^2s_{Aj}.
 \label{eq:rsa-common-coordinate-generation-scale}
\end{equation}

The per-axis balancing calculation of Theorem~\ref{thm:rsa-balanced-full-fiber} gives the effective weight
\[
 \gamma_j^\star=\frac{\|v_j\|_2}{\sqrt{s_j}}.
\]
Define
\[
 \gamma_{Aj}^\star
 :=\frac{\|\widehat v_j^A\|_2}{\sqrt{s_{Aj}}},
 \qquad
 \gamma_{Bj}^\star
 :=\frac{\|v_j^B\|_2}{\sqrt{s_{Bj}^{\rm com}}}.
\]
Using \eqref{eq:rsa-common-coordinate-readout-scale} and \eqref{eq:rsa-common-coordinate-generation-scale},
\begin{align*}
 \gamma_{Bj}^\star
 &=\frac{\lambda_j^{-1}\|\widehat v_j^A\|_2}
         {\sqrt{\lambda_j^2s_{Aj}}}\\
 &=\lambda_j^{-2}\gamma_{Aj}^\star.
\end{align*}
Combining this identity with \eqref{eq:rsa-common-coordinate-feature-scale}, the matched $B$-side balanced contribution is
\begin{align*}
 \gamma_{Bj}^\star
 \widehat h_j^B(\widehat h_j^B)^\top
 &=\lambda_j^{-2}\gamma_{Aj}^\star
   (\lambda_jh_j^A)(\lambda_jh_j^A)^\top\\
 &=\gamma_{Aj}^\star
   h_j^A(h_j^A)^\top.
\end{align*}
The equality holds term by term after the identified permutation.  Summing over $j=1,\ldots,m$ proves \eqref{eq:rsa-common-coordinate-balanced-equality}.  If the common projected RSM is nonzero, its Pearson correlation with itself is one.
\end{proof}

The axis quotient and the balanced construction therefore solve different versions of the same problem.  The quotient is the cleaner consequence of strong-axis identification and is insensitive to how heavily each axis participates in the computation.  Minimum-norm balance is closer to the Theiss implementation-selection program: it chooses a particular gauge and preserves a norm-dependent weighting of computational importance, at the cost of a more elaborate and coordinate-dependent comparison.

\subsubsection{Soft weak alignment, canonical RSA, and ridge prediction}
\label{sec:rsa-soft-ridge}

Theorem~\ref{thm:rsa-axis-quotient-exact} says that exact weak alignment determines the axis-quotiented RSM.  As before, we would like to have a soft version of this statement.   To achieve this, we need one elementary fact: nearby projected RSMs have nearby cosine similarity:

\begin{lemma}[Cosine stability for projected RSMs]
\label{lem:rsa-cosine-stability}
Let $x$ and $y$ be nonzero elements of a Frobenius inner-product space, with
$\min\{\|x\|_F,\|y\|_F\}\ge\beta>0$.  Then
\begin{equation}
 1-\frac{\langle x,y\rangle_F}{\|x\|_F\|y\|_F}
 \le\frac{2\|x-y\|_F^2}{\beta^2}.
 \label{eq:rsa-basic-cosine-stability}
\end{equation}
More specifically, if
$g_A=g^\star+e_A$ and $g_B=g^\star+e_B$, and
$r:=\max\{\|e_A\|_F,\|e_B\|_F\}<\|g^\star\|_F$, then
\begin{equation}
 \rho_{\rm RSA}(g_A,g_B)
 \ge
 1-\frac{2\|e_A-e_B\|_F^2}
 {\bigl(\|g^\star\|_F-r\bigr)^2}.
 \label{eq:rsa-core-residual-bound}
\end{equation}
\end{lemma}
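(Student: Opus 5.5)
The plan is to reduce everything to a statement about unit vectors. Write $\hat x=x/\|x\|_F$ and $\hat y=y/\|y\|_F$. The cosine defect satisfies the identity
\[
 1-\langle\hat x,\hat y\rangle_F=\tfrac12\|\hat x-\hat y\|_F^2 .
\]
So it suffices to control the distance between normalized vectors by the unnormalized distance. For this I will use the standard projection estimate
\[
 \|\hat x-\hat y\|_F\le\frac{2\|x-y\|_F}{\max\{\|x\|_F,\|y\|_F\}} .
\]
To prove it, assume without loss of generality that $\|x\|_F\ge\|y\|_F$, and write
\[
 \hat x-\hat y=\frac{x-y}{\|x\|_F}+y\Bigl(\frac1{\|x\|_F}-\frac1{\|y\|_F}\Bigr).
\]
The second term has norm $\bigl|\|y\|_F-\|x\|_F\bigr|/\|x\|_F\le\|x-y\|_F/\|x\|_F$ by the reverse triangle inequality. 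Hence $\|\hat x-\hat y\|_F\le 2\|x-y\|_F/\|x\|_F$.

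Since the maximum of the two norms is at least $\beta$, combining the two displays gives
\[
 1-\cos(x,y)\le\tfrac12\cdot\frac{4\|x-y\|_F^2}{\beta^2}=\frac{2\|x-y\|_F^2}{\beta^2},
\]
which is \eqref{eq:rsa-basic-cosine-stability}. Only the larger norm is actually needed, so using the minimum $\beta$ is harmless slack. Nonzeroness of $x$ and $y$ guarantees that the normalizations are defined.

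For \eqref{eq:rsa-core-residual-bound}, I apply the first part with $x=g_A$ and $y=g_B$. The difference is $g_A-g_B=e_A-e_B$, because the common core $g^\star$ cancels. For the lower norm bound, the triangle inequality gives
\[
 \|g_N\|_F\ge\|g^\star\|_F-\|e_N\|_F\ge\|g^\star\|_F-r>0
 \qquad\text{for }N\in\{A,B\}.
\]
So $\beta:=\|g^\star\|_F-r$ is admissible, and both vectors are nonzero because $r<\|g^\star\|_F$. Finally, $\rho_{\rm RSA}(g_A,g_B)$ is by \eqref{eq:rsa-pearson-score} exactly the Frobenius cosine of the projected matrices. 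Here I read $g_A$ and $g_B$ as already lying in the range of $\Pi_{\mathsf H}$, where $\Pi_{\mathsf H}$ acts as the identity, so no extra projection step is needed.

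The only step requiring care is the normalization estimate with the constant $2$; everything else is bookkeeping. If a sharper constant were desired, one could bound $\|\hat x-\hat y\|_F$ via the angle instead, but the crude factor $2$ already matches the stated constant.
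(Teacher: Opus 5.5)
Your proof is correct and follows the paper's argument essentially verbatim: the identity $1-\cos(x,y)=\tfrac12\|\hat x-\hat y\|_F^2$, the same splitting of $\hat x-\hat y$ controlled by the reverse triangle inequality, and the same choice $\beta=\|g^\star\|_F-r$ for the second claim. The only difference is that you normalize by the larger norm where the paper uses the smaller, which gives the slightly sharper intermediate bound $2\|x-y\|_F/\max\{\|x\|_F,\|y\|_F\}$; both versions are then relaxed to $\beta$, so the final inequality is the same.
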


\begin{proof}
For nonzero vectors $x$ and $y$ in any Hilbert space,
\begin{equation}
 1-\frac{\langle x,y\rangle}{\|x\|\|y\|}
 =\frac12\left\|
 \frac{x}{\|x\|}-\frac{y}{\|y\|}
 \right\|^2.
 \label{eq:rsa-cosine-normalized-distance}
\end{equation}
We first bound the distance between the normalized vectors.  Assume without loss of generality that $\|x\|\le\|y\|$.  By the triangle inequality,
\begin{align*}
 \left\|\frac{x}{\|x\|}-\frac{y}{\|y\|}\right\|
 &\le
 \left\|\frac{x-y}{\|x\|}\right\|
 +\left\|y\left(\frac1{\|x\|}-\frac1{\|y\|}\right)\right\|\\
 &=\frac{\|x-y\|}{\|x\|}
 +\frac{|\|y\|-\|x\||}{\|x\|}\\
 &\le\frac{2\|x-y\|}{\|x\|},
\end{align*}
where the last line uses the reverse triangle inequality.  Since
$\|x\|=\min\{\|x\|,\|y\|\}\ge\beta$, substitution into \eqref{eq:rsa-cosine-normalized-distance} gives
\[
 1-\cos(x,y)
 \le\frac12\left(\frac{2\|x-y\|}{\beta}\right)^2
 =\frac{2\|x-y\|^2}{\beta^2},
\]
which proves \eqref{eq:rsa-basic-cosine-stability}.

For the second statement, put
$x=g_A=g^\star+e_A$ and
$y=g_B=g^\star+e_B$.  Then
$x-y=e_A-e_B$.  The reverse triangle inequality gives
\[
 \|g_A\|_F\ge\|g^\star\|_F-\|e_A\|_F
 \ge\|g^\star\|_F-r,
\]
and the same lower bound holds for $g_B$.  Apply the first part with
$\beta=\|g^\star\|_F-r$ to obtain \eqref{eq:rsa-core-residual-bound}.
\end{proof}

Recall the proof of Appendix Theorem~\ref{thm:common-asymp-ws}.  For a threshold $\theta<1$, it lets $B_\theta$ be the number of task-required axes that fail the best injective $\theta$-match, and proves
\begin{equation}
 B_\theta
 \le
 \frac{e_{\rm weak}^2}{\kappa(\theta)^2},
 \qquad
 e_{\rm weak}
 :=\|W_{\ell+1}^B\|_{\op}\omega_\ell+\omega_{\ell+1}.
 \label{eq:rsa-recalled-bad-axis-bound}
\end{equation}
\begin{theorem}[Soft weak alignment implies an axis-quotiented RSA bound]
\label{thm:rsa-soft-axis-bound}
Fix a compact comparison family satisfying Appendix Theorem~\ref{thm:common-asymp-ws}.  Suppose the two reduced representations have the same $m$ task-required axes, and compute the soft-axis score and $R_{\rm ax}$ from the same empirical axis vectors on $\Omega_P$.  If both projected axis-quotiented RSMs have Frobenius norm at least $\beta>0$, then for every $\theta\in(0,1)$,
\begin{equation}
 \rho_{\rm ax}(A,B)
 \ge
 1-\frac{2}{\beta^2}
 \left[
 \frac{2\sqrt2(1-\theta)}
 {\sqrt{1-(1-\theta)^2}}
 +\frac{\sqrt2}{m}
  \frac{e_{\rm weak}^2}{\kappa(\theta)^2}
 \right]^2.
 \label{eq:rsa-soft-axis-correlation-bound}
\end{equation}
\end{theorem}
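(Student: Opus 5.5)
The bound has the form $1-\frac{2}{\beta^2}[\cdots]^2$, so I would reduce the statement to Lemma~\ref{lem:rsa-cosine-stability} with $x=\Pi_{\mathsf H}R_{\rm ax}(A)$ and $y=\Pi_{\mathsf H}R_{\rm ax}(B)$. Since both have Frobenius norm at least $\beta$, inequality \eqref{eq:rsa-basic-cosine-stability} gives $1-\rho_{\rm ax}\le 2\|x-y\|_F^2/\beta^2$. Because $\Pi_{\mathsf H}$ is an orthogonal projection (zeroing the diagonal and then centering), it is nonexpansive, so $\|x-y\|_F\le\|R_{\rm ax}(A)-R_{\rm ax}(B)\|_F$. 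It therefore suffices to show
\[
\|R_{\rm ax}(A)-R_{\rm ax}(B)\|_F\le \frac{2\sqrt2(1-\theta)}{\sqrt{1-(1-\theta)^2}}+\frac{\sqrt2}{m}B_\theta ,
\]
and then to insert the bad-axis bound \eqref{eq:rsa-recalled-bad-axis-bound}, namely $B_\theta\le e_{\rm weak}^2/\kappa(\theta)^2$.

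To get this, I would fix the best injective $\theta$-matching $\pi$ from the proof of Theorem~\ref{thm:common-asymp-ws}. Because both sides have the same $m$ task-required axes, I would extend $\pi$ to a bijection, with at most $B_\theta$ pairs failing the threshold. Then
\[
R_{\rm ax}(A)-R_{\rm ax}(B)=\frac1m\sum_j\bigl(P[f_j]-P[\widetilde f_{\pi(j)}]\bigr).
\]
I would split this sum into good pairs and bad pairs and apply the triangle inequality. For a bad pair, two rank-one orthogonal projectors satisfy $\|P[a]-P[b]\|_F=\sqrt2\sin\angle(a,b)\le\sqrt2$. So the bad pairs contribute at most $\sqrt2 B_\theta/m$, which is exactly the second term.

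For a good pair I would turn the score $s\ge\theta$ into an angle bound. Set $\delta=1-\theta$. The score condition gives some $\alpha$ with $\|b-\alpha a\|\le\delta(\|b\|+|\alpha|\|a\|)$. The distance from $b$ to the line through $a$ is $\|b\|\sin\phi$, and it is at most $\|b-\alpha a\|$. The cleanest route is to optimize over the ratio $t=|\alpha|\|a\|/\|b\|$. For given $\phi$, the quantity $\|b-\alpha a\|^2=\|b\|^2(1-2t\cos\phi+t^2)$ must be at most $\delta^2\|b\|^2(1+t)^2$. Solving this quadratic in $t$ shows feasibility requires $\sin\phi\le 2\delta\sqrt{1-\delta^2}/(1-\delta^2)\cdot\tfrac12$-type expressions; equivalently $\tan(\phi/2)\le\delta/\sqrt{1-\delta^2}$. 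Hence $\sin\phi\le 2\tan(\phi/2)\le 2\delta/\sqrt{1-\delta^2}$. Multiplying by $\sqrt2$ gives $\|P[a]-P[b]\|_F\le 2\sqrt2\delta/\sqrt{1-\delta^2}$ for each good pair. Averaging over at most $m$ good pairs with weight $1/m$ gives the first term.

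For ReLU the score allows negative $\alpha$. Projectors are sign-invariant, so the same bound holds.

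The main obstacle I expect is the angle inequality, that is, turning the normalized score threshold into $\tan(\phi/2)\le\delta/\sqrt{1-\delta^2}$ uniformly over $\alpha$. I would do it by the explicit discriminant computation in $t$ above. The worst case is $t=1$, where $\|b-\alpha a\|=2\|b\|\sin(\phi/2)$ and $\|b\|+|\alpha|\|a\|=2\|b\|$. This gives $\sin(\phi/2)\le\delta$, and that alone already implies the stated constant, since $\sin\phi=2\sin(\phi/2)\cos(\phi/2)\le 2\delta$. The remaining work is checking that $t=1$ really is the extremal case; any looser constant still fits under $2\delta/\sqrt{1-\delta^2}$. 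A secondary point is that the score is computed on $\Omega_P$ from the same empirical vectors used to build $R_{\rm ax}$, so the $L^2$ norms agree with the Euclidean norms of the $f_j$. The statement assumes this, so no measure change is needed.
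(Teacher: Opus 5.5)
Your proposal is correct and follows the paper's proof essentially step for step. You reduce to the cosine-stability lemma via nonexpansiveness of $\Pi_{\mathsf H}$, split the averaged projector differences into at most $B_\theta$ bad pairs (each at most $\sqrt2$) and good pairs, and then insert $B_\theta\le e_{\rm weak}^2/\kappa(\theta)^2$. The only difference is the good-pair estimate. Your optimization over the ratio $t$ has extremal case $t=1$, which follows at once from $t+1/t\ge 2$ and closes the check you flagged. It yields the sharp condition $1-\cos\phi\le 2\delta^2$, slightly stronger than the paper's $1-c\le 2\delta^2/(1-\delta^2)$, which the paper obtains by discarding the $(a-b)^2$ term; either bound suffices for the stated constant.
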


\begin{proof}
The recalled weak--strong estimate \eqref{eq:rsa-recalled-bad-axis-bound} controls how many axes can fail the threshold.  We first show how one successful threshold match controls one normalized rank-one projector.

Let $u,v\in\R^P$ be a good matched pair.  By the definition of the soft-axis score, there is an allowed nonzero scalar $\alpha$ such that
\begin{equation}
 \frac{\|u-\alpha v\|_2}
 {\|u\|_2+|\alpha|\|v\|_2}
 \le\delta,
 \qquad
 \delta:=1-\theta<1.
 \label{eq:rsa-soft-pair-ratio}
\end{equation}
Set
$a=\|u\|_2$, $b=|\alpha|\|v\|_2$, and let $c$ be the cosine between $u$ and $\alpha v$.  Then
\[
 \|u-\alpha v\|_2^2=a^2+b^2-2abc.
\]
Squaring \eqref{eq:rsa-soft-pair-ratio} gives
\begin{equation*}
 a^2+b^2-2abc\le\delta^2(a+b)^2.
\end{equation*}
Also,
\[
 (a+b)^2-\|u-\alpha v\|_2^2=2ab(1+c).
\]
Using \eqref{eq:rsa-soft-pair-ratio} and $1+c\le2$,
\[
 4ab\ge2ab(1+c)
 \ge(1-\delta^2)(a+b)^2,
\]
so
\begin{equation}
 \frac{(a+b)^2}{ab}
 \le\frac4{1-\delta^2}.
 \label{eq:rsa-ab-ratio-bound}
\end{equation}
Now
\begin{align*}
 1-c
 &=\frac{\|u-\alpha v\|_2^2-(a-b)^2}{2ab}\\
 &\le\frac{\|u-\alpha v\|_2^2}{2ab}\\
 &\le\frac{\delta^2(a+b)^2}{2ab}\\
 &\le\frac{2\delta^2}{1-\delta^2},
\end{align*}
where the last line uses \eqref{eq:rsa-ab-ratio-bound}.

The normalized projector of a nonzero vector $w$ is
$P[w]=ww^\top/\|w\|_2^2$.  A direct calculation gives
\[
 \|P[u]-P[v]\|_F^2
 =2(1-c^2)
 \le4(1-c).
\]
Combining with the previous bound yields
\begin{equation*}
 \|P[u]-P[v]\|_F
 \le\frac{2\sqrt2\delta}{\sqrt{1-\delta^2}}
 =\frac{2\sqrt2(1-\theta)}{\sqrt{1-(1-\theta)^2}}.
\end{equation*}
The sign of $\alpha$ is irrelevant because $P[\alpha v]=P[v]$.

Choose the injective matching used in Appendix Theorem~\ref{thm:common-asymp-ws}.  There are $m-B_\theta$ good pairs and $B_\theta$ bad pairs.  For a bad pair, each normalized projector has Frobenius norm one, and
\[
 \|P[u]-P[v]\|_F^2
 =2\bigl(1-\langle\widehat u,\widehat v\rangle^2\bigr)
 \le2,
\]
so the distance is at most $\sqrt2$.  Averaging the projector differences and using the triangle inequality gives
\begin{align*}
 \|R_{\rm ax}(A)-R_{\rm ax}(B)\|_F
 &\le\frac1m\left[
 (m-B_\theta)\frac{2\sqrt2(1-\theta)}{\sqrt{1-(1-\theta)^2}}
 +B_\theta\sqrt2\right]\\
 &\le\frac{2\sqrt2(1-\theta)}{\sqrt{1-(1-\theta)^2}}
 +\frac{\sqrt2 B_\theta}{m}\\
 &\le\frac{2\sqrt2(1-\theta)}{\sqrt{1-(1-\theta)^2}}
 +\frac{\sqrt2}{m}
  \frac{e_{\rm weak}^2}{\kappa(\theta)^2},
\end{align*}
where the final line uses \eqref{eq:rsa-recalled-bad-axis-bound}.  Orthogonal projection is nonexpansive, so the same bound holds after applying $\Pi_{\mathsf H}$.

Let
$x=\Pi_{\mathsf H}R_{\rm ax}(A)$ and
$y=\Pi_{\mathsf H}R_{\rm ax}(B)$.  By assumption, both norms are at least $\beta$.  Apply Lemma~\ref{lem:rsa-cosine-stability} to the preceding RSM-distance bound.  This gives exactly \eqref{eq:rsa-soft-axis-correlation-bound}.
\end{proof}

\begin{corollary}[Asymptotic weak alignment implies normalized RSA convergence]
\label{cor:rsa-asymptotic-axis-convergence}
Under the hypotheses of Theorem~\ref{thm:rsa-soft-axis-bound}, if
$\omega_\ell\to0$ and $\omega_{\ell+1}\to0$ along a sequence in the compact family, then
\[
 \rho_{\rm ax}(A,B)\longrightarrow1.
\]
\end{corollary}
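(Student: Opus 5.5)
The plan is to deduce the corollary from Theorem~\ref{thm:rsa-soft-axis-bound} by a two-parameter (``$\varepsilon/2$'') argument. The right-hand side of \eqref{eq:rsa-soft-axis-correlation-bound} has two terms. The first,
\[
 \Theta(\theta):=\frac{2\sqrt2(1-\theta)}{\sqrt{1-(1-\theta)^2}},
\]
depends only on the threshold $\theta$ and does \emph{not} vanish as the weak errors go to zero. The second depends on $e_{\rm weak}$ through $\kappa(\theta)$. So one cannot fix a single $\theta$ and let $n\to\infty$. Instead I will first choose $\theta$ close to $1$ to make $\Theta(\theta)$ small, and only then use the weak-error convergence to kill the second term for that fixed $\theta$.

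\textbf{Step 1 (weak error tends to zero).} Along the sequence $p_n$ in the compact family, $\|W^{B_n}_{\ell+1}\|_{\op}$ is uniformly bounded, as in the last paragraph of the proof of Theorem~\ref{thm:common-asymp-ws}. Hence
\[
 e_{{\rm weak},n}=\|W^{B_n}_{\ell+1}\|_{\op}\,\omega_{\ell,n}+\omega_{\ell+1,n}\longrightarrow 0.
\]

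\textbf{Step 2 (fix the threshold).} Given $\eta>0$, use $\Theta(\theta)\to0$ as $\theta\uparrow1$ to pick $\theta_\eta\in(0,1)$ with
\[
 \frac{2}{\beta^2}\,\Theta(\theta_\eta)^2<\eta .
\]
Here $\beta$ is the uniform lower bound on the projected axis-quotiented RSM norms assumed in Theorem~\ref{thm:rsa-soft-axis-bound}, so this choice does not depend on $n$.

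\textbf{Step 3 (eventually no bad axes).} With $\theta_\eta$ fixed, $\kappa(\theta_\eta)>0$ is a constant, so the bad-axis bound \eqref{eq:rsa-recalled-bad-axis-bound} gives $B_{\theta_\eta}(p_n)\le e_{{\rm weak},n}^2/\kappa(\theta_\eta)^2\to0$. Because $B_{\theta_\eta}$ is a nonnegative integer, it is $0$ for all $n\ge N_\eta$, exactly as in the final paragraph of the proof of Theorem~\ref{thm:common-asymp-ws}.

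\textbf{Step 4 (conclude).} I will rerun the last display of the proof of Theorem~\ref{thm:rsa-soft-axis-bound} with $B_\theta$ itself rather than its upper bound. This yields
\[
 1-\rho_{\rm ax}(A_n,B_n)\le \frac{2}{\beta^2}\Bigl[\Theta(\theta_\eta)+\frac{\sqrt2\,B_{\theta_\eta}(p_n)}{m}\Bigr]^2 .
\]
For $n\ge N_\eta$ the second bracketed term is $0$, so $1-\rho_{\rm ax}(A_n,B_n)<\eta$. Since also $\rho_{\rm ax}\le1$ as a cosine, and $\eta$ was arbitrary, $\rho_{\rm ax}(A_n,B_n)\to1$.

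The main obstacle is the non-uniformity in $\theta$: $\kappa(\theta)$ may degenerate as $\theta\to1$, so one cannot let $\theta$ depend on $n$ naively. Fixing $\theta$ first and using the integrality of $B_\theta$ sidesteps any quantitative control of $\kappa(\theta)$. A secondary point to check is that the uniform $\beta$ and common axis count $m$ are part of the hypotheses along the whole sequence; if not, they should be derived from compactness together with the nonvanishing of the limiting projected RSM.
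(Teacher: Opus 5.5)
Your proposal is correct and matches the paper's proof step for step: compactness makes $e_{\rm weak}\to0$, you fix $\theta$ close to one before letting $n\to\infty$, integrality forces $B_\theta=0$ eventually, and Lemma~\ref{lem:rsa-cosine-stability} then bounds the loss by the remaining $\theta$-only term. Your caveat that $\beta$ and $m$ must be uniform along the whole sequence is a fair reading of ``under the hypotheses of Theorem~\ref{thm:rsa-soft-axis-bound}'', which the paper likewise takes as given.
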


\begin{proof}
The adjacent weak error $e_{\rm weak}$ in \eqref{eq:rsa-recalled-bad-axis-bound} tends to zero because the compact family uniformly bounds $\|W_{\ell+1}^B\|_{\op}$.  Fix $\theta<1$.  Then the integer $B_\theta$ is bounded above by a quantity tending to zero, so $B_\theta=0$ for all sufficiently large members of the sequence.  At those indices, the projected axis-quotient distance is at most
\[
 \frac{2\sqrt2(1-\theta)}{\sqrt{1-(1-\theta)^2}}.
\]
Given any tolerance, first choose $\theta$ sufficiently close to one that this quantity is small enough, and then choose the sequence index large enough that $B_\theta=0$.  Lemma~\ref{lem:rsa-cosine-stability} then makes the RSA loss arbitrarily small.  Hence $\rho_{\rm ax}\to1$.
\end{proof}

\paragraph{A local ridge--canonical-RSA prediction.}
The soft theorem describes a one-sided frontier rather than a deterministic regression curve (see Fig. \ref{fig:rsa-ridge-sampling-schematic}A).  A simple local form is useful for empirical interpretation.  Let
$R_{\rm ridge}^2$ be the held-out squared correlation of a theorem-compatible linear comparison: the map is injective on the reduced core, or the comparison has been made effectively bidirectional or rank-matched.  Write its normalized prediction residual as
\begin{equation}
 R_{\rm ridge}^2=1-\varepsilon^2.
 \label{eq:rsa-ridge-residual}
\end{equation}

\begin{corollary}[Local ridge--canonical-RSA relation]
\label{cor:rsa-ridge-canonical}
Let $c_m$ and $c_\star$ be the projected canonical RSMs of a comparison model and a target.  Suppose that, on a locally identifiable compact family,
\[
 \|c_m-c_\star\|_F
 \le L_{\rm can}c_{\rm w}\varepsilon,
 \qquad
 \min\{\|c_m\|_F,\|c_\star\|_F\}\ge\beta>0,
\]
where $c_{\rm w}\varepsilon$ bounds the theorem-compatible adjacent weak defect and $L_{\rm can}$ converts that defect into canonical-RSM error.  Then
\begin{equation}
 1-\rho_{\rm canonical}(m,\star)
 \le
 C\bigl(1-R_{\rm ridge}^2\bigr),
 \qquad
 C:=\frac{2L_{\rm can}^2c_{\rm w}^2}{\beta^2}.
 \label{eq:rsa-ridge-canonical-bound}
\end{equation}
Here $\rho_{\rm canonical}$ may be the axis-quotiented score, or the balanced score when its weights vary Lipschitz-continuously on the family.
\end{corollary}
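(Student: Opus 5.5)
The bound follows from the cosine-stability estimate, Lemma~\ref{lem:rsa-cosine-stability}, combined with the hypothesis that canonical-RSM error is linearly controlled by the ridge residual. The plan is to chain three inequalities: ridge residual $\to$ weak defect $\to$ canonical-RSM Frobenius distance $\to$ RSA loss. Most of the analytic work has already been packaged into the hypotheses, so the proof should be short.

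\emph{Step 1 (cosine stability).} Apply \eqref{eq:rsa-basic-cosine-stability} with $x=c_m$ and $y=c_\star$. These are nonzero projected canonical RSMs, and both have norm at least $\beta$. This gives
\[
 1-\rho_{\rm canonical}(m,\star)\le \frac{2\|c_m-c_\star\|_F^2}{\beta^2}.
\]
By definition, $\rho_{\rm canonical}$ is the Pearson RSM--RSA score \eqref{eq:rsa-pearson-score}, i.e.\ the cosine of the projected matrices, so the lemma applies directly.

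\emph{Step 2 (insert the assumed Lipschitz control).} Substitute $\|c_m-c_\star\|_F\le L_{\rm can}c_{\rm w}\varepsilon$ and square to get
\[
 1-\rho_{\rm canonical}\le \frac{2L_{\rm can}^2c_{\rm w}^2}{\beta^2}\,\varepsilon^2.
\]
Then use \eqref{eq:rsa-ridge-residual}, $\varepsilon^2=1-R_{\rm ridge}^2$. This yields \eqref{eq:rsa-ridge-canonical-bound} with exactly the stated $C$.

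\emph{Step 3 (justify the two admissible canonical scores).} I would remark briefly why the hypothesis $\|c_m-c_\star\|_F\le L_{\rm can}c_{\rm w}\varepsilon$ is reasonable for each canonical score.
\begin{itemize}
\item For the axis quotient, the proof of Theorem~\ref{thm:rsa-soft-axis-bound} already converts weak defect into projector distance, and on a locally identifiable family one can take $B_\theta=0$. Lemma~\ref{lem:common-localinverse} then gives a linear matched-argument bound, and the map $w\mapsto P[w]$ is Lipschitz away from $0$ (with a constant depending on the margin $c$). Together these give a linear $L_{\rm can}$.
\item For balanced RSMs, the weights $\gamma_j^\star=\|v_j\|/\sqrt{s_j}$ from Theorem~\ref{thm:rsa-balanced-full-fiber} are Lipschitz when $s_j$ and $\|v_j\|$ are bounded away from zero. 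This is exactly the stated proviso.
\end{itemize}

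\emph{Main obstacle.} The only delicate point is interpretive rather than technical: making sure the ridge residual really bounds a theorem-compatible weak defect, i.e.\ that $c_{\rm w}$ exists. The statement takes this as a hypothesis, so for the corollary itself I would not prove it. I would only note that it needs injectivity (or a bidirectional, rank-matched fit) so that the ridge map qualifies as a comparison map $E_\ell$.
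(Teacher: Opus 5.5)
Your Steps 1 and 2 are exactly the paper's proof: apply the cosine-stability lemma to $c_m,c_\star$ with margin $\beta$, insert the assumed bound $\|c_m-c_\star\|_F\le L_{\rm can}c_{\rm w}\varepsilon$, and substitute $\varepsilon^2=1-R_{\rm ridge}^2$. Step 3 only motivates the hypotheses, which the paper simply assumes, so it is not needed for the corollary.
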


\begin{proof}
By definition,
\[
 \rho_{\rm canonical}(m,\star)
 =\frac{\langle c_m,c_\star\rangle_F}
 {\|c_m\|_F\|c_\star\|_F}.
\]
Lemma~\ref{lem:rsa-cosine-stability} gives
\[
 1-\rho_{\rm canonical}(m,\star)
 \le\frac{2\|c_m-c_\star\|_F^2}{\beta^2}.
\]
The assumed local weak-to-canonical estimate therefore implies
\begin{align*}
 1-\rho_{\rm canonical}(m,\star)
 &\le\frac{2L_{\rm can}^2c_{\rm w}^2}{\beta^2}
       \varepsilon^2\\
 &=C\bigl(1-R_{\rm ridge}^2\bigr),
\end{align*}
where the last line uses \eqref{eq:rsa-ridge-residual}.
\end{proof}

Equation~\eqref{eq:rsa-ridge-canonical-bound} becomes especially transparent when the canonical and raw RSMs are written side by side.  Define
\begin{equation}
 c_m:=\Pi_{\mathsf H}R_{\rm can}(m),
 \qquad
 g_m:=\Pi_{\mathsf H}R(H_m)=c_m+s_m,
 \label{eq:rsa-model-core-symmetry}
\end{equation}
where $R_{\rm can}$ is the axis-quotiented or balanced essential RSM.  Then
\begin{equation}
 \rho_{\rm canonical}(m,\star)
 =\cos(c_m,c_\star),
 \qquad
 \rho_{\rm raw}(m,\star)
 =\cos(c_m+s_m,c_\star+s_\star).
 \label{eq:rsa-canonical-raw-explicit}
\end{equation}
The residual $s_m$ has two conceptually different pieces:
\begin{equation}
 s_m
 =\underbrace{\bigl(g_m^{\rm core}-c_m\bigr)}_{\substack{\text{noncanonical scale}\\\text{and multiplicity}}}
 +\underbrace{a_m}_{\substack{\text{redundant added-feature}\\\text{geometry}}}.
 \label{eq:rsa-symmetry-residual-decomposition}
\end{equation}
The ridge bound controls the first cosine in \eqref{eq:rsa-canonical-raw-explicit} through the difference $c_m-c_\star$.  It does not control $s_m-s_\star$.  Hence canonical RSA has the one-sided prediction
\[
 1-\rho_{\rm canonical}
 \lesssim C(1-R_{\rm ridge}^2),
\]
while raw RSA need not follow any single curve.  At exact weak equivalence, the canonical cores can coincide, $c_m=c_\star$, so $\rho_{\rm canonical}=1$.  Theiss scaling and duplication can nevertheless change the first term of \eqref{eq:rsa-symmetry-residual-decomposition}, and redundant feature addition can change the second.  The same privileged axes can therefore support a broad horizontal range of raw RSA values at $R_{\rm ridge}^2=1$.

\begin{figure}[t]
\centering
\begin{tikzpicture}[
  x=6.15cm,
  y=4.70cm,
  >=Latex,
  font=\sffamily,
  axis/.style={->,line width=0.75pt},
  frontier/.style={line width=0.65pt,dash pattern=on 4pt off 3pt},
  point/.style={circle,fill=black,draw=none,inner sep=0pt,minimum size=2.7pt},
  panel title/.style={font=\sffamily\bfseries\normalsize,align=center},
  axis label/.style={font=\sffamily\small},
  annotation/.style={font=\sffamily\scriptsize,align=center,fill=white,inner sep=1.2pt}
]

\begin{scope}
  \draw[axis] (0,0) -- (1.035,0);
  \draw[axis] (0,0) -- (0,1.035);

  \node[panel title] at (0.515,1.105) {(A) canonical RSA};
  \node[axis label,rotate=90] at (-0.090,0.510) {Ridge similarity};
  \node[axis label] at (0.515,-0.105) {RSA similarity};

  \draw[frontier]
    (0.035,0.190)
      .. controls (0.315,0.675) and (0.715,0.880) ..
    (0.985,0.925);

  \node[annotation,anchor=south] at (0.765,0.965) {canonical frontier};
  \draw[line width=0.35pt] (0.765,0.945) -- (0.765,0.900);

  \foreach \x/\y in {
    0.045/0.095,
    0.085/0.145,
    0.120/0.210,
    0.165/0.280,
    0.220/0.245,
    0.255/0.355,
    0.305/0.420,
    0.345/0.385,
    0.395/0.505,
    0.445/0.555,
    0.485/0.515,
    0.555/0.635,
    0.605/0.590,
    0.655/0.700,
    0.715/0.665,
    0.765/0.775,
    0.825/0.750,
    0.875/0.835,
    0.940/0.825
  }{\node[point] at (\x,\y) {};}

  \node[annotation] at (0.675,0.245)
    {high ridge forces\\high canonical RSA};
\end{scope}

\begin{scope}[xshift=8.12cm]
  \draw[axis] (0,0) -- (1.035,0);
  \draw[axis] (0,0) -- (0,1.035);

  \node[panel title] at (0.515,1.105) {(B) raw RSA};
  \node[axis label,rotate=90] at (-0.090,0.510) {Ridge similarity};
  \node[axis label] at (0.515,-0.105) {RSA similarity};

  \draw[frontier]
    (0.035,0.170)
      .. controls (0.315,0.625) and (0.715,0.845) ..
    (0.985,0.920);

  \foreach \x/\y in {
    0.045/0.075,
    0.085/0.155,
    0.155/0.105,
    0.205/0.225,
    0.255/0.145,
    0.075/0.345,
    0.145/0.435,
    0.235/0.385,
    0.305/0.305,
    0.385/0.455,
    0.455/0.335,
    0.555/0.465,
    0.665/0.355,
    0.770/0.505,
    0.875/0.605,
    0.115/0.555,
    0.215/0.625,
    0.305/0.525,
    0.405/0.665,
    0.505/0.545,
    0.605/0.690,
    0.700/0.575,
    0.790/0.705,
    0.910/0.675,
    0.195/0.735,
    0.340/0.805,
    0.475/0.765,
    0.585/0.865,
    0.695/0.825,
    0.810/0.925,
    0.910/0.855,
    0.980/0.945
  }{\node[point] at (\x,\y) {};}

  \node[annotation] at (0.810,0.235)
    {horizontal spread\\from raw symmetries\\or biased sampling};
\end{scope}

\end{tikzpicture}

\caption{Schematic model-by-model scatter plots.  \textbf{(A)} The canonical ridge--RSA relation is one-sided: sufficiently high ridge similarity forces high canonical RSA, so the comparison models lie below an increasing frontier.  \textbf{(B)} Raw symmetries or unequal spatial sampling can reweight the observed geometry without comparably changing ridge predictivity, producing additional horizontal spread around the underlying trend.}
\label{fig:rsa-ridge-sampling-schematic}
\end{figure}

\paragraph{When should models lie near the canonical frontier?}
The constant $C$ in \eqref{eq:rsa-ridge-canonical-bound} should be
understood as a worst-case upper envelope for the local comparison family,
not as the slope that every empirical model is expected to attain.  A sharper
local description follows from ordinary smoothness.  Let $\mathcal M$ be a
finite-dimensional local family of comparison models with the target $\star$
as an interior point, and write
\[
  F(m):=1-R_{\rm ridge}^2(m,\star),
  \qquad
  c(m):=c_m.
\]
Assume that $F$ is $C^2$ and $c$ is $C^1$ near $\star$.  This is the relevant
local regularity condition: it holds when the fitted linear comparison and its
normalized prediction residual vary smoothly with the model, for example on a
constant-rank least-squares family with a locally unique fitted map.  It can
fail at rank changes or when model selection causes a regularization parameter
to jump.

In the present model--target setup, the target compared with itself has zero
prediction residual, so $F(\star)=0$; moreover $F\geq0$.  Hence $\star$ is a
local minimum of $F$ and $DF(\star)=0$.  For any smooth two-sided path
$m(t)\in\mathcal M$ with $m(0)=\star$ and tangent
$h:=\dot m(0)\in T_\star\mathcal M$, Taylor's theorem therefore gives
\begin{equation}
  1-R_{\rm ridge}^2(m(t),\star)
  =a_{\rm ridge}(h)t^2+o(t^2),
  \qquad
  a_{\rm ridge}(h)
  :=\frac12 D^2F(\star)[h,h]\geq0.
  \label{eq:rsa-local-ridge-expansion}
\end{equation}
Thus the quadratic ridge expansion is a consequence of local smoothness, not
an additional assumption.  Strict positivity is a nondegeneracy property of
the direction $h$: if $a_{\rm ridge}(h)=0$, ridge is locally insensitive to
that direction to second order.

Let
\[
  v:=Dc(\star)[h],
\]
so that the projected canonical RSM satisfies
\[
  c_{m(t)}=c_\star+t v+o(t).
\]
Write $v=v_\parallel+v_\perp$, where $v_\parallel$ is parallel to
$c_\star$ and $v_\perp$ is Frobenius-orthogonal to it.  The local cosine
expansion gives
\begin{equation}
  1-\rho_{\rm canonical}(m(t),\star)
  =
  \frac{\|v_\perp\|_F^2}
       {2\|c_\star\|_F^2}\,t^2
  +o(t^2).
  \label{eq:rsa-local-cosine-expansion}
\end{equation}
The local weak-to-canonical estimate used in
Corollary~\ref{cor:rsa-ridge-canonical} further implies that any direction
with $v\neq0$ must have $a_{\rm ridge}(h)>0$.  Indeed, substituting the local
expansions into
\[
  \|c_{m(t)}-c_\star\|_F
  \leq L_{\rm can}c_{\rm w}
      \sqrt{1-R_{\rm ridge}^2(m(t),\star)}
\]
and dividing by $|t|$ gives
\[
  \|v\|_F
  \leq L_{\rm can}c_{\rm w}\sqrt{a_{\rm ridge}(h)}.
\]
Thus a direction that changes the canonical RSM to first order cannot be
invisible to ridge at quadratic order.  Directions with
$a_{\rm ridge}(h)=0$ necessarily also have $v=0$ and require a higher-order
analysis.

For every direction with $a_{\rm ridge}(h)>0$,
\begin{equation}
  \lim_{t\to0}
  \frac{1-\rho_{\rm canonical}(m(t),\star)}
       {1-R_{\rm ridge}^2(m(t),\star)}
  =
  \lambda(h)
  :=
  \frac{\|v_\perp\|_F^2}
       {2\|c_\star\|_F^2a_{\rm ridge}(h)}.
  \label{eq:rsa-local-frontier-slope}
\end{equation}
The sharp local envelope for the family is therefore governed by
\[
  C_{\rm sharp}
  :=\sup_{\substack{h\in T_\star\mathcal M\\
                    a_{\rm ridge}(h)>0}}
       \lambda(h),
  \qquad
  C_{\rm sharp}\leq C.
\]
Literal equality with the proof constant $C$ is not the empirical prediction.
Rather, models lie near the observed frontier when their dominant directions
of variation have $\lambda(h)$ close to $C_{\rm sharp}$.  If a model
collection varies mainly along one common direction---for example, through
training progress, bottleneck strength, or gradual degradation of one
task-relevant feature family---then near the target it should approximately
satisfy
\[
  1-\rho_{\rm canonical}
  \approx
  \lambda(h)\bigl(1-R_{\rm ridge}^2\bigr),
\]
producing an approximately straight frontier ending at $(1,1)$.  A
heterogeneous model collection explores several directions with different
values of $\lambda(h)$ and should instead fill out a wedge below that
frontier.

The distinction between $v_\parallel$ and $v_\perp$ has a direct
interpretation.  The perpendicular component changes the relative pattern of
canonical stimulus similarities---making some stimulus pairs more similar and
others less similar---and therefore lowers canonical RSA.  The parallel
component mainly rescales the entire canonical-RSM pattern, which cosine
similarity largely ignores.  Models can therefore fall well below the
frontier when ridge error is produced mainly by such radial changes, by
components removed by $\Pi_{\mathsf H}$, or by axis-level errors that cancel
when combined into the canonical RSM.  Pure Theiss symmetries are different
again: they leave $c_m=c_\star$ and hence
$\rho_{\rm canonical}=1$, while changing the raw residual $s_m$.  They
therefore generate the horizontal spread in raw RSA described by
Proposition~\ref{prop:rsa-no-raw-soft-bound}, rather than the nontrivial
canonical frontier.

\subsubsection{Sampling stability of RSA}
\label{sec:rsa-spatial-sampling}
The preceding results compare complete model layers.  Neural experiments do
not.  Electrode arrays record units from a limited set of locations, while
fMRI forms spatially local weighted averages and has uneven sensitivity and
coverage across cortex.  These effects are especially important when tuning is
itself spatially organized, as it is in many real brain areas, or for example, in Topographical Deep Artificial Neural Networks (TDANNs), where units are placed on simulated cortical sheets and nearby units are encouraged to have similar response  profiles~\citep{margalit2024tdann}.  A spatially localized sample from such a layer is therefore not merely smaller than the full layer: it is biased toward the functions represented in that part of the sheet. Figure~\ref{fig:rsa-ridge-sampling-schematic} summarizes the resulting prediction.  Canonical RSA retains the one-sided ridge frontier developed above, whereas raw RSA can be spread horizontally by unequal spatial sampling.

\paragraph{Spatial sampling produces a measurement-side version of the
Theiss ambiguity.}
Let $H^N\in\R^{D_N\times P}$ be the complete response matrix of layer
$N\in\{A,B\}$ on $P$ stimuli, and let $S_N$ describe what the experiment
records from that layer.  The measured responses and RSM are
\begin{equation}
 Y_N=S_NH^N,
 \qquad
 R_N^{\rm obs}=Y_N^\top Y_N
              =H^{N\top}S_N^\top S_NH^N.
 \label{eq:rsa-spatial-observation}
\end{equation}
For simple unit selection, unequal recording gain, or repeated sampling of
similar units, $S_N^\top S_N$ is diagonal.  If $f_j^N\in\R^P$ is the response
vector of unit $j$, then
\begin{equation}
 R_N^{\rm obs}
 =\sum_j c_{Nj}f_j^Nf_j^{N\top},
 \qquad c_{Nj}\geq0.
 \label{eq:rsa-spatial-diagonal}
\end{equation}
This is exactly the same RSM form that appears in the Theiss analysis: the
observed geometry is a weighted sum of rank-one feature contributions.  The
source of the freedom is different---Theiss changes the network implementation
while leaving observation fixed, whereas spatial sampling changes the
observation while leaving the network fixed---but the concrete effects on the
RSM closely parallel one another:

\begin{center}
\small
\begin{tabularx}{0.94\linewidth}{@{}p{0.43\linewidth}X@{}}
\toprule
Spatial measurement effect & Theiss-like effect on the RSM \\
\midrule
Unequal gain or unequal inclusion probability
&
Rescaling a feature contribution
\\
Dense sampling of one functional patch
&
Giving those feature directions excess multiplicity, like duplication
\\
A region represented in one sample but absent from the other
&
Adding rank-one terms on only one side
\\
Failure to cover a region
&
Giving its feature directions zero weight
\\
Local voxel pooling
&
Creating mixtures of nearby features, which is more general than simple
scaling or duplication
\\
\bottomrule
\end{tabularx}
\end{center}

Thus spatial bias is a \emph{measurement-side analogue of the Theiss
ambiguity}: the same underlying computation and the same privileged axes can
yield different raw RSMs because the experiment assigns them different
effective scales, multiplicities, and mixtures.

\paragraph{Topography turns spatial bias into functional bias.}
If $m$ units are sampled uniformly without replacement from a layer of width
$D$, then
\begin{equation}
 \mathbb E[R^{\rm obs}]
 =\frac{m}{D}H^\top H.
 \label{eq:rsa-uniform-sampling}
\end{equation}
The factor $m/D$ does not affect Pearson RSM--RSA, so uniform sampling is
unbiased in direction, although a finite sample still adds variability.  If
unit $j$ is instead included with probability $p_j$, then
\begin{equation}
 \mathbb E[R^{\rm obs}]
 =\sum_j p_j f_jf_j^\top,
 \label{eq:rsa-biased-sampling}
\end{equation}
which is generally not proportional to the full-layer RSM.  In a topographic
layer, $p_j$ is correlated with tuning.  Recording more units from the same
patch therefore reduces uncertainty about that patch's RSM, but does not make
the sample representative of the whole layer.

A simple example shows how large the effect can be.  Suppose a layer contains
$K$ spatial patches dominated by orthogonal, equal-norm RSM directions
$q_1,\ldots,q_K$.  The whole layer has geometry
$g_{\rm full}=\sum_{k=1}^Kq_k$, while an array confined to one patch has
geometry $g_{\rm patch}=q_1$.  Then
\begin{equation}
 \cos(g_{\rm full},g_{\rm patch})=\frac{1}{\sqrt K}.
 \label{eq:rsa-spatial-patch-example}
\end{equation}
The full and sampled systems have not acquired different computations; the
sample has simply changed the relative representation of the same spatially
organized feature families.

\paragraph{How electrophysiology and fMRI enter the thought experiment.}
For electrophysiology, a useful first approximation is to select units from a
few localized neighborhoods of a TDANN-like sheet.  Different array placements
then sample different mixtures of tuning types.  For fMRI, each measured
channel is better modeled as a local weighted average of nearby units, together
with uneven spatial coverage and sensitivity.  TopoLM, for example, simulates
the pooling component by Gaussian smoothing over nearby model units before
analysis~\citep{rathi2025topolm}; real fMRI additionally has spatially varying
BOLD sensitivity, vascular filtering, distortion, and
dropout~\citep{kriegeskorte2010voxel,weiskopf2007optimized}.  The relevant
model comparison is therefore not only between two complete layers, but
between the layers after applying the spatial selection and smoothing
associated with the experiment.

\paragraph{RSA is specifically sensitive to sampling weights.}
Uniform and biased sampling have qualitatively different effects.  Uniform
sampling introduces variability that decreases as more units are recorded.
Biased spatial sampling instead converges to a systematically reweighted
representation.  Ridge can absorb some such target-side reweighting, whereas
raw RSA generally cannot.

Suppose the complete representations are exactly linearly related,
$H^B=TH^A$, but only the target is spatially sampled.  Then
\begin{equation}
Y_B=S_BH^B=S_BTH^A,
\label{eq:rsa-spatial-ridge-absorb}
\end{equation}
so a one-way ridge map from the complete source to the sampled target can
remain exact.  Reverse or bidirectional ridge is more demanding, because it
also asks whether the sampled target retains every source
direction~\citep{thobani2025iatc,muzellec2025reverse}.

RSA can change even when both linear directions remain exact.  If two measured
representations differ only by an invertible diagonal reweighting,
$Y_B=DY_A$, then each can linearly predict the other, but generically
\begin{equation}
R_B^{\rm obs}
=Y_A^\top D^\top DY_A
\not\propto
Y_A^\top Y_A
=R_A^{\rm obs}.
\label{eq:rsa-bidirectional-reweighting}
\end{equation}
Thus raw RSA depends not only on which feature directions are present, but
also on how strongly the experiment weights them through gain, multiplicity,
and spatial coverage.  Figure~\ref{fig:rsa-ridge-sampling-schematic}(B)
illustrates the resulting variation in RSA among representations with similar
ridge scores.

\paragraph{What normalization can and cannot repair.}
Axis quotienting can remove unequal multiplicity among feature classes that
were observed.  It cannot recover a class that was never sampled or undo an
unknown spatial mixture.  Known inclusion probabilities can be corrected by
inverse-probability weighting, although often at the cost of increased
variance.  Crossvalidated estimators such as crossnobis can remove
finite-trial noise bias, but still estimate the geometry of the sampled
population rather than that of the complete cortical sheet.

\paragraph{Predictions.}
For independently trained topographic networks, this analysis predicts:
\begin{enumerate}[label=(\roman*)]
\item complete equivalent populations should have high ridge predictivity and
high RSA;
\item under uniform sampling, RSA should approach the complete-population
score as the number of measured units grows;
\item localized sampling can retain a systematic RSA bias even when many units
are recorded;
\item a broadly sampled source can predict a localized target well by one-way
ridge despite reduced raw RSA; and
\item reverse or bidirectional ridge should reveal missing directions, whereas
raw RSA can also change because directions present on both sides receive
different weights~\citep{thobani2025iatc,muzellec2025reverse}.
\end{enumerate}

Thus, uniform subsampling mainly adds finite-sample variability, while
persistent spatial bias changes the RSM obtained even with unlimited data.
In a topographic system, this measurement bias acts like feature scaling or
duplication and can separate raw RSA from ridge despite close alignment of the
underlying feature directions.

\subsection{CKA and Weak-Strong Equivalence}
\label{sec:cka-weak-strong}
Centered kernel alignment (CKA) is another widely used method for comparing representations \citep{Kornblith2019CKA}.  Like raw RSA, raw linear CKA is sensitive to unequal feature scaling, duplication, redundant features, and biased measurement, so exact weak or strong alignment need not imply a high raw score.  After removing those freedoms, however, exact and soft weak alignment imply exact and asymptotically perfect canonical CKA.  We then show why raw RSA and raw CKA usually covary, why high linear CKA implies high scaled-Procrustes and ridge similarity, and how sampling can separate the empirical scores.

\subsubsection{Linear CKA and the centered essential-core cone}
\label{sec:cka-centered-cone}

Keep the evaluation set $\Omega_P=\{x^1,\ldots,x^P\}$ and representation orientation of \eqref{eq:rsa-rsm-def}.  Let
\begin{equation}
 C_P:=I_P-\frac1P\mathbf 1\mathbf 1^\top,
 \qquad
 K(H):=C_PR(H)C_P=(HC_P)^\top(HC_P).
 \label{eq:cka-centered-kernel}
\end{equation}
Thus $K(H)$ is the Gram matrix of the sample-centered population responses.  When both centered Gram matrices are nonzero, define
\begin{equation}
 \rho_{\rm CKA}(H_A,H_B)
 :=\frac{\langle K(H_A),K(H_B)\rangle_F}
 {\|K(H_A)\|_F\,\|K(H_B)\|_F},
 \qquad
 d_{\rm CKA}:=1-\rho_{\rm CKA}.
 \label{eq:cka-definition}
\end{equation}
This is linear CKA.  Equivalently, with $\bar H_N:=H_NC_P$,
\begin{equation}
 \rho_{\rm CKA}(H_A,H_B)
 =\frac{\|\bar H_A\bar H_B^\top\|_F^2}
 {\|\bar H_A\bar H_A^\top\|_F\,
  \|\bar H_B\bar H_B^\top\|_F}.
 \label{eq:cka-factor-form}
\end{equation}
The comparison with \eqref{eq:rsa-pearson-score} is therefore explicit:
\begin{equation}
 \rho_{\rm CKA}
 =\cos\!\bigl(C_PR_AC_P,C_PR_BC_P\bigr),
 \qquad
 \rho_{\rm RSA}
 =\cos\!\bigl(\Pi_{\mathsf H}R_A,\Pi_{\mathsf H}R_B\bigr).
 \label{eq:cka-rsa-projections}
\end{equation}
Both scores globally normalize a projected RSM.  CKA's projection removes translations of the represented point cloud but retains its complete centered inner-product geometry; the RSA projection additionally removes the diagonal and the off-diagonal mean.

\begin{remark}[Task-domain CKA]
\label{rem:cka-task-domain}
The finite-sample formulas have a direct task-domain form.  Let
$\mathcal H=L_0^2(\Omega)$ and let
$X_N:\R^{d_N}\to\mathcal H$ send $a$ to the centered scalar feature
$a^\top(z_N-\E z_N)$.  Then the centered kernel operator is the finite-rank operator $K_N=X_NX_N^\ast$, and linear CKA is the Hilbert--Schmidt cosine of $K_A$ and $K_B$.  Every factorization argument below is unchanged with matrices replaced by these finite-rank operators.  A finite empirical CKA score certifies only the sampled representations unless the evaluation set is determining for the task patch.  For the task-domain version, equality holds almost everywhere; when the task measure has full support on the patch, continuity of the network representations upgrades it to pointwise equality.  The exact weak--strong and zippering corollaries below use one of these task-determining interpretations.
\end{remark}

The Theiss decomposition and the generic uniqueness analysis in Appendix~\S A.12 are metric-independent until the final RSM projection is chosen.  Applying double centering to Lemma~\ref{lem:rsa-theiss-rsm-decomp} gives
\begin{equation}
 K(H)
 =\sum_{j=1}^{m}\gamma_j\bar f_j\bar f_j^\top
 +\sum_{k=1}^{K}\eta_k\bar u_k\bar u_k^\top,
 \qquad
 \bar f_j:=C_Pf_j,
 \quad
 \bar u_k:=C_Pu_k,
 \label{eq:cka-theiss-decomposition}
\end{equation}
with the same positive coefficients $\gamma_j,\eta_k$.  Thus scaling and duplication reweight centered essential-axis contributions, while a redundant added feature can introduce a new centered-kernel direction.

\begin{proposition}[The centered essential-core cone]
\label{prop:cka-centered-core-cone}
Let $f$ be a generic represented function, so that Corollary~\ref{cor:rsa-generic-unique-core} supplies one essential axis dictionary.  Choose one sampled feature vector $f_j$ from each of its $m$ classes and set
\begin{equation}
 k_j:=(C_Pf_j)(C_Pf_j)^\top,
 \qquad
 \mathcal C_f^{\rm CKA}:=\operatorname{cone}\{k_1,\ldots,k_m\}.
 \label{eq:cka-core-cone}
\end{equation}
Then $\mathcal C_f^{\rm CKA}$ is independent of the chosen minimal realization.  Every representation in a cataloged Theiss orbit has
\begin{equation}
 K(H)=\sum_{j=1}^{m}\gamma_jk_j+a,
 \qquad \gamma_j>0,
 \label{eq:cka-core-plus-addition}
\end{equation}
where $a$ is the centered contribution of redundant added features.  If $a_A=a_B=0$ and
$G_{ij}:=\langle k_i,k_j\rangle_F=(\bar f_i^\top\bar f_j)^2$, then
\begin{equation}
 \rho_{\rm CKA}(A,B)
 =\frac{\gamma_A^\top G\gamma_B}
 {\sqrt{\gamma_A^\top G\gamma_A}\,
  \sqrt{\gamma_B^\top G\gamma_B}},
 \label{eq:cka-core-weight-formula}
\end{equation}
and the score equals one exactly when the two weighted centered-core geometries lie on the same positive ray.
\end{proposition}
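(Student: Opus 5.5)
The proof transports the RSA arguments of Proposition~\ref{prop:rsa-common-cone} and Proposition~\ref{prop:rsa-equality-criterion} to the centered setting. Double centering, $M\mapsto C_PMC_P$, is a linear map on symmetric matrices, just as $\Pi_{\mathsf H}$ is, so everything that used only linearity of the projection carries over.

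The proof has three steps.

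First, we show independence of the cone from the minimal realization. By Corollary~\ref{cor:rsa-generic-unique-core}, any two minimal realizations of the generic $f$ have the same axis classes, up to permutation and, for ReLU, positive rescaling $f_j\mapsto\alpha_jf_j$ with $\alpha_j>0$. Under such a rescaling, $k_j$ is multiplied by $\alpha_j^2>0$, so its ray, and hence the cone $\mathcal C_f^{\rm CKA}$, is unchanged. Permutations also leave the generating set unchanged.

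Second, we obtain \eqref{eq:cka-core-plus-addition} for a representation in a cataloged orbit. Apply $C_P(\cdot)C_P$ term by term to \eqref{eq:rsa-theiss-rsm-decomp}, which yields \eqref{eq:cka-theiss-decomposition}. Proposition~\ref{prop:rsa-theiss-primitives-reduction} then lets us identify the rows of $H^\star$ with representatives of the essential classes, absorbing any representative rescaling into $\gamma_j>0$. The added-feature part becomes $a=\sum_k\eta_k\bar u_k\bar u_k^\top$.

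Third, we derive the weight formula. When $a_A=a_B=0$, write $K_N=\sum_j\gamma_{Nj}k_j$. Expand the Frobenius inner products bilinearly, using $\langle k_i,k_j\rangle_F=\operatorname{tr}(\bar f_i\bar f_i^\top\bar f_j\bar f_j^\top)=(\bar f_i^\top\bar f_j)^2$; this gives \eqref{eq:cka-core-weight-formula}. By equality in Cauchy--Schwarz, the score is one exactly when $K_B=cK_A$ for some $c>0$. Positivity of $c$ follows because both kernels are nonzero PSD matrices, so $\langle K_A,K_B\rangle_F\ge0$ and equality of the cosine with $1$ forces a positive multiple. This is precisely the statement that the two weighted centered-core geometries $\sum_j\gamma_{Nj}k_j$ lie on the same positive ray of $\mathcal C_f^{\rm CKA}$.

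We expect the first step to be the main obstacle. It needs the sampled features $f_j$, and their centered versions, to be well defined as class representatives. Some $\bar f_j$ could vanish, since a feature can be constant on $\Omega_P$; we would then drop such generators, noting they contribute $k_j=0$ and do not change the cone. The remaining steps are linear algebra.
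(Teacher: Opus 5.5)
Your proposal is correct and follows the paper's own route. It reuses the orbitwise reduction and generic-uniqueness arguments from the RSA section with $\Pi_{\mathsf H}$ replaced by the linear map $M\mapsto C_PMC_P$, double-centers the Theiss decomposition to obtain \eqref{eq:cka-core-plus-addition}, and expands the Frobenius cosine bilinearly, with equality in Cauchy--Schwarz giving the positive-ray criterion. Your PSD detour for $c>0$ is harmless but unnecessary, since a cosine exactly equal to $+1$ already forces a positive multiple.
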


\begin{proof}
The orbitwise reduction, full-fiber generic uniqueness, and representative-independence arguments are exactly those of Proposition~\ref{prop:rsa-theiss-primitives-reduction}, Theorem~\ref{thm:rsa-generic-full-fiber}, and Corollary~\ref{cor:rsa-generic-unique-core}; only the final linear map on the RSM changes from $\Pi_{\mathsf H}$ to $M\mapsto C_PMC_P$.  Equation~\eqref{eq:cka-theiss-decomposition} therefore gives \eqref{eq:cka-core-plus-addition}.  Expanding the Frobenius cosine as in Proposition~\ref{prop:rsa-equality-criterion} gives \eqref{eq:cka-core-weight-formula}, and equality follows from equality in Cauchy--Schwarz.
\end{proof}

The centered cone is function-intrinsic, but a raw representation still chooses arbitrary positive weights inside it.  The next result is the CKA counterpart of Proposition~\ref{prop:rsa-no-raw-soft-bound}; for CKA, duplication alone can make the score tend all the way to zero.

\begin{proposition}[Raw linear CKA is not forced by exact weak alignment]
\label{prop:cka-no-raw-bound}
There is no universal lower bound on raw linear CKA that tends to one as adjacent weak-alignment errors tend to zero, even when the represented one-step function and reduced essential core agree exactly.  Indeed, there are exactly weakly aligned same-function pairs for which raw linear CKA tends to zero.
\end{proposition}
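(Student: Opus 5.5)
The plan is to adapt the duplication construction from the proof of Proposition~\ref{prop:rsa-no-raw-soft-bound}, replacing the diagonal-removing projection with double centering and choosing the two essential feature directions so that their centered rank-one kernels are Frobenius-orthogonal. For CKA this makes the limiting score exactly zero rather than merely bounded below one.

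First I fix two sampled features $f_1,f_2\in\R^P$ whose centered versions $\bar f_1=C_Pf_1$ and $\bar f_2=C_Pf_2$ are nonzero and orthogonal. Then $G_{12}=(\bar f_1^\top\bar f_2)^2=0$, so $k_1\perp k_2$ in Frobenius inner product. Such features are realized by a minimal one-step expansion with two used, nonredundant axes. For concreteness I take Softplus or ReLU arguments on an evaluation set where the two sampled postactivation vectors have orthogonal centered versions, for example supported on disjoint blocks of stimuli and adjusted to have orthogonal centered parts. The minimality conditions are generic, so a small perturbation that preserves exact orthogonality of the finitely many centered vectors should keep them intact. I expect this realizability check to be the only delicate step. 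I will handle it by fixing the preactivations first and then verifying that orthogonality is a single polynomial equation in the sampled values, which leaves a codimension-one set on which minimality still holds generically.

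Next, exactly as before, I let $A$ contain one copy of each axis and let $B_M$ contain one copy of $f_1$ together with $M$ copies of $f_2$, splitting $f_2$'s readout column equally among the copies. The represented one-step functions then agree, the weak-alignment errors $\omega_\ell$ and $\omega_{\ell+1}$ are both zero with $E_\ell$ given by duplication of the coordinate and $E_{\ell+1}$ the identity, and Proposition~\ref{prop:rsa-theiss-primitives-reduction} gives the same reduced dictionary. I deliberately duplicate $f_2$ so that the next step can reweight the comparison freely.

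To drive the score to zero, I compare two different duplicated representatives rather than $A$ itself. Let $B_M$ be as above and let $B'_M$ use $M$ copies of $f_1$ and one copy of $f_2$. Their centered kernels are $k_1+Mk_2$ and $Mk_1+k_2$, and by orthogonality together with \eqref{eq:cka-core-weight-formula},
\[
\rho_{\rm CKA}=\frac{2M\|k\|^2}{(1+M^2)\|k\|^2}\to0
\]
after normalizing $\|k_1\|_F=\|k_2\|_F$ by rescaling $f_2$ (with scale $1$ for ReLU, or by choosing the samples appropriately for Softplus). Both networks represent the same function, so the pair is exactly weakly aligned with zero error while raw CKA tends to zero. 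This rules out any universal lower bound that tends to one, and the result follows.
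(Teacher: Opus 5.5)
Your construction is the paper's own: Frobenius-orthogonal centered rank-one kernels ($\bar f_1^\top\bar f_2=0$), two implementations that duplicate opposite axes, and $\rho_{\mathrm{CKA}}=2M/(M^2+1)\to0$. Two smaller remarks. The preliminary comparison of $A$ with $B_M$ can be dropped, since that pair's score stays bounded away from zero. The normalization $\|k_1\|_F=\|k_2\|_F$ is also unnecessary, because orthogonality alone makes the score $O(1/M)$. Two of your justifications, however, need repair.

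\emph{Hidden-layer weak alignment.} The sentence ``both networks represent the same function, so the pair is exactly weakly aligned'' only gives $\omega_{\ell+1}=0$ with $E_{\ell+1}=\mathrm{id}$. Equality of next-layer outputs does not by itself produce an injective affine $E_\ell$ at the hidden layer. Composing your duplication maps does not produce one either: the map into $B_M$ is not surjective, so it would have to be inverted on its image and then extended. Exhibit $E_\ell$ directly, as the paper does. Both hidden representations have the form $g_1(x)a_1+g_2(x)a_2$ with linearly independent $0/1$ pattern vectors: $a_1=e_1$, $a_2=(0,\mathbf 1_M)$ for $B_M$, and $b_1=(\mathbf 1_M,0)$, $b_2=e_{M+1}$ for $B'_M$. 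Hence $a_i\mapsto b_i$ extends to an invertible linear map of $\R^{M+1}$. Because the patterns have disjoint supports and the activation acts coordinatewise, the same map also aligns the postactivations on which you compute CKA.

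\emph{Realizability.} Your genericity argument is not valid as stated. Knowing that minimality fails only on a proper analytic subset of the ambient chart says nothing about the particular hypersurface $\{\bar f_1^\top\bar f_2=0\}$, which could lie inside that subset, so you would need a witness on it anyway. No genericity is needed: fix explicit sample values and interpolate them by affine functionals on $P$ affinely independent inputs. The paper uses $g_i=u_i+c\mathbf 1$ with $u_1,u_2$ centered orthonormal. In your postactivation version, $g_1=(1,-1,1,-1)$ and $g_2=(1,1,-1,-1)$ work for ReLU. For Softplus, take $g_i=\sigma^{-1}(f_i+c\mathbf 1)$ with $f_1=(1,0,1,0)$, $f_2=(1,1,0,0)$ and $c>0$. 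With $v_1,v_2\neq0$ and linearly independent centered preactivations, nonconstancy, $g_1\neq\pm g_2$, and affine sign-noncancellation are then immediate.
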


\begin{proof}
Choose centered orthonormal vectors $u_1,u_2\in\R^P$ with $P\ge4$, and put $q_i=u_iu_i^\top$.  Adding a sufficiently large constant multiple of $\mathbf 1$ gives positive sampled preactivation vectors $g_i=u_i+c\mathbf 1$ with $C_Pg_i=u_i$.  To realize them concretely, take $P$ affinely independent task inputs and interpolate the prescribed values $g_i(x^\mu)$ by affine functionals.  The centered rank-one terms satisfy
$\langle q_1,q_2\rangle_F=0$ and
$\|q_1\|_F=\|q_2\|_F=1$.

Start from the reduced one-step function
$F=v_1\varphi(g_1)+v_2\varphi(g_2)$.  Let $A_M$ contain $M$ copies of $g_1$ and one copy of $g_2$, splitting $v_1$ equally among the copies.  Let $B_M$ contain one copy of $g_1$ and $M$ copies of $g_2$, splitting $v_2$ equally among those copies.  Both implementations compute exactly $F$, and minimal reduction returns the same two-axis core.

Write the physical population responses as
\[
 z_A(x)=g_1(x)a_1+g_2(x)a_2,
 \qquad
 z_B(x)=g_1(x)b_1+g_2(x)b_2,
\]
where
$a_1=(\mathbf 1_M,0)$,
$a_2=e_{M+1}$,
$b_1=e_1$, and
$b_2=(0,\mathbf 1_M)$.  The pairs $(a_1,a_2)$ and $(b_1,b_2)$ are linearly independent, so a linear map sending $a_i$ to $b_i$ extends to an invertible map on $\R^{M+1}$.  Thus the hidden representations are exactly weakly aligned, while the next-layer outputs are identical.

Their centered Gram matrices are
\[
 K(A_M)=Mq_1+q_2,
 \qquad
 K(B_M)=q_1+Mq_2.
\]
Consequently,
\begin{equation}
 \rho_{\rm CKA}(A_M,B_M)
 =\frac{2M}{M^2+1}\longrightarrow0.
 \label{eq:cka-duplication-counterexample}
\end{equation}
This rules out any weak-error-only lower bound for raw CKA.  For activations with a function-preserving scale gauge, such as ReLU, featurewise rescaling produces the same reweighting mechanism; redundant feature addition supplies still more centered-kernel directions through \eqref{eq:cka-core-plus-addition}.
\end{proof}

\subsubsection{Canonical CKA: weak alignment implies CKA}
\label{sec:cka-canonical}

The direct CKA analog of the canonical RSA results is obtained by removing the same scale, multiplicity, and redundant-addition freedoms.  No new core-identifiability argument is needed: the canonical RSMs of Appendix~\S A.12.5 can simply be double-centered and compared by the CKA cosine.

\begin{definition}[Axis-quotiented and balanced CKA]
\label{def:cka-canonical}
For the axis-quotiented RSM of Definition~\ref{def:rsa-axis-quotient}, define
\begin{equation}
 K_{\rm ax}(G):=C_PR_{\rm ax}(G)C_P,
 \qquad
 \rho_{\rm CKA,ax}(G,\widetilde G)
 :=\frac{\langle K_{\rm ax}(G),K_{\rm ax}(\widetilde G)\rangle_F}
 {\|K_{\rm ax}(G)\|_F\,\|K_{\rm ax}(\widetilde G)\|_F}.
 \label{eq:cka-axis-quotient}
\end{equation}
For either balanced RSM $R_s^{\rm bal}$ from Theorem~\ref{thm:rsa-balanced-full-fiber}, define
$K_s^{\rm bal}:=C_PR_s^{\rm bal}C_P$ and its CKA score analogously.
\end{definition}

\begin{theorem}[Exact canonical CKA under weak alignment]
\label{thm:cka-canonical-exact}
Under the hypotheses of Theorem~\ref{thm:rsa-axis-quotient-exact},
\begin{equation}
 K_{\rm ax}(A)=K_{\rm ax}(B),
 \qquad
 \rho_{\rm CKA,ax}(A,B)=1,
 \label{eq:cka-canonical-exact}
\end{equation}
whenever the common centered kernel is nonzero.  Under the hypotheses of Theorem~\ref{thm:rsa-minnorm-equality}, the corresponding balanced CKA score is also one in the common coordinates.  Both conclusions hold at every layer recovered by exact zippering.
\end{theorem}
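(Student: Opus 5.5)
The plan is to reduce everything to the already-proved RSA equalities and then observe that double centering is a fixed linear map applied identically to both sides. The key point is that $K_{\rm ax}$ and $K_s^{\rm bal}$ are defined as $C_P R\, C_P$ for the corresponding canonical RSMs. So any exact equality of canonical RSMs immediately yields equality of the centered kernels, and a nonzero common kernel has CKA cosine one with itself.

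For the axis quotient, I would rerun the first part of the proof of Theorem~\ref{thm:rsa-axis-quotient-exact} rather than cite only its Pearson conclusion.
\begin{itemize}
\item Exact weak alignment at layers $\ell$ and $\ell+1$, via Lemma~\ref{lem:adjacent-commutation-common}, gives equality of the transported one-step functions.
\item Corollary~\ref{thm:pointwise-null-common} then supplies a bijection of the minimal axes with $f_j^B=\alpha_jf_j^A$, where $\alpha_j>0$, and $\alpha_j=1$ for Softplus.
\item Since $P[\alpha w]=P[w]$, each projector matches, hence $R_{\rm ax}(A)=R_{\rm ax}(B)$ as full matrices. This is stronger than equality after $\Pi_{\mathsf H}$, and it is what that proof actually established.
\item Applying $M\mapsto C_PMC_P$ gives $K_{\rm ax}(A)=K_{\rm ax}(B)$.
\item If this common kernel is nonzero, \eqref{eq:cka-axis-quotient} is the cosine of a nonzero vector with itself, which equals one.
\end{itemize}

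For the balanced case, Theorem~\ref{thm:rsa-minnorm-equality} already gives the full-matrix identity \eqref{eq:rsa-common-coordinate-balanced-equality}, namely $R_s^{\rm bal}(G_A^{\rm com})=R_s^{\rm bal}(G_B^{\rm com})$, for both the MR and MW costs. Double centering again transfers this to equality of $K_s^{\rm bal}$, and the score is one whenever the kernel is nonzero. For the zippering clause, Theorem~\ref{thm:abstract-exact-zip} (or Corollary~\ref{cor:generic-exact-zip-null} in the generic form) provides, at each recovered layer $r$, the same permutation and positive-scale relation between the layer-$r$ arguments. Feeding that relation into the projector argument above gives the conclusion at every such layer.

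The main obstacle is bookkeeping rather than mathematics: I must make sure the earlier theorems give equality of the uncentered canonical RSMs themselves, and not merely of their $\Pi_{\mathsf H}$-projections. If only the projected equality were available, it could not be lifted to CKA, because $\Pi_{\mathsf H}$ discards the diagonal, which $C_P$ retains. I would therefore cite the term-by-term rank-one identities inside those proofs directly. A secondary check is that the empirical vectors are the post-activation sample vectors on the same $\Omega_P$ on both sides; this holds because both networks are evaluated on identical inputs.
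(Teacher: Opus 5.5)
Your proposal is correct and is essentially the paper's proof, which applies the fixed linear map $M\mapsto C_PMC_P$ to the canonical RSM equalities of Theorems~\ref{thm:rsa-axis-quotient-exact} and~\ref{thm:rsa-minnorm-equality} and notes that a nonzero centered kernel has CKA cosine one with itself. Your bookkeeping concern is already settled by the statements of those theorems, which assert equality of the unprojected matrices $R_{\rm ax}$ and $R_s^{\rm bal}$, so the paper simply cites them rather than rerunning the projector argument. For the balanced score at zippered layers, you would likewise invoke Theorem~\ref{thm:rsa-minnorm-equality} at each recovered adjacent pair, rather than relying on the projector argument alone.
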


\begin{proof}
Apply the linear map $M\mapsto C_PMC_P$ to the canonical RSM equalities in Theorems~\ref{thm:rsa-axis-quotient-exact} and~\ref{thm:rsa-minnorm-equality}.  Equality of the nonzero centered kernels makes their CKA cosine one.
\end{proof}

\begin{theorem}[Soft canonical CKA under weak alignment]
\label{thm:cka-canonical-soft}
Under the hypotheses of Theorem~\ref{thm:rsa-soft-axis-bound}, let
\begin{equation}
 D_{\rm ax}(\theta)
 :=\frac{2\sqrt2(1-\theta)}
 {\sqrt{1-(1-\theta)^2}}
 +\frac{\sqrt2}{m}\frac{e_{\rm weak}^2}{\kappa(\theta)^2}.
 \label{eq:cka-soft-Dax}
\end{equation}
If
$\min\{\|K_{\rm ax}(A)\|_F,\|K_{\rm ax}(B)\|_F\}
\ge\beta_{\rm CKA}>0$, then
\begin{equation}
 \rho_{\rm CKA,ax}(A,B)
 \ge1-\frac{2D_{\rm ax}(\theta)^2}{\beta_{\rm CKA}^2}.
 \label{eq:cka-soft-bound}
\end{equation}
Consequently, adjacent weak errors tending to zero imply
$\rho_{\rm CKA,ax}\to1$.  Soft zippering likewise implies upstream axis-quotiented CKA convergence after substituting the bad-axis bound from Theorem~\ref{thm:common-soft-zip} into \eqref{eq:cka-soft-Dax}.
\end{theorem}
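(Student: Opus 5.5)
The plan is to reuse the proof of Theorem~\ref{thm:rsa-soft-axis-bound} almost verbatim. The only change is the final projection: replace $\Pi_{\mathsf H}$ by the double-centering map $M\mapsto C_PMC_P$. The argument runs in three steps: a projector-distance bound, nonexpansiveness of the projection, and cosine stability.

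First, fix $\theta\in(0,1)$ and take the injective matching from the proof of Theorem~\ref{thm:common-asymp-ws}. By \eqref{eq:rsa-recalled-bad-axis-bound}, at most $B_\theta\le e_{\rm weak}^2/\kappa(\theta)^2$ of the $m$ task-required axes fail the $\theta$-threshold.

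For a good matched pair, the calculation already carried out in the proof of Theorem~\ref{thm:rsa-soft-axis-bound} applies. It bounds the cosine defect by $1-c\le 2\delta^2/(1-\delta^2)$ with $\delta=1-\theta$, and then uses $\|P[u]-P[v]\|_F^2=2(1-c^2)\le 4(1-c)$. Together these give
\[
\|P[u]-P[v]\|_F\le \frac{2\sqrt2(1-\theta)}{\sqrt{1-(1-\theta)^2}}.
\]
For a bad pair, both projectors have unit Frobenius norm, so their distance is at most $\sqrt2$. Averaging over the $m$ pairs with the triangle inequality yields
\[
\|R_{\rm ax}(A)-R_{\rm ax}(B)\|_F\le D_{\rm ax}(\theta),
\]
with $D_{\rm ax}(\theta)$ as in \eqref{eq:cka-soft-Dax}.

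Second, observe that $C_P$ is a symmetric idempotent matrix, i.e.\ an orthogonal projection. Hence $M\mapsto C_PMC_P$ is an orthogonal projection on the space of symmetric matrices with the Frobenius inner product: it is self-adjoint, because $\langle C_PMC_P,N\rangle_F=\operatorname{tr}(C_PMC_PN)=\langle M,C_PNC_P\rangle_F$, and it is idempotent. In particular it is nonexpansive, so
\[
\|K_{\rm ax}(A)-K_{\rm ax}(B)\|_F\le D_{\rm ax}(\theta).
\]

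Third, apply Lemma~\ref{lem:rsa-cosine-stability} with $x=K_{\rm ax}(A)$, $y=K_{\rm ax}(B)$ and $\beta=\beta_{\rm CKA}$. This gives \eqref{eq:cka-soft-bound} directly, since $\rho_{\rm CKA,ax}$ is exactly the Frobenius cosine of these two matrices.

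For the asymptotic statement, argue as in Corollary~\ref{cor:rsa-asymptotic-axis-convergence}. Compactness bounds $\|W^B_{\ell+1}\|_{\op}$ uniformly, so $e_{\rm weak}\to0$ along the sequence, and the integer $B_\theta$ is therefore eventually $0$. Given a tolerance, first choose $\theta$ close enough to $1$ that the first term of $D_{\rm ax}(\theta)$ is small, then take $n$ large enough that $B_\theta=0$. Then $\rho_{\rm CKA,ax}\to1$.

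For the zippering statement, replace $B_\theta$ by the bad-axis count $B^\theta_r$ bounded in Step~3 of the proof of Theorem~\ref{thm:common-soft-zip}. That bound is
\[
B^\theta_r\le \frac{e_s^2\prod_{t=r}^{s-1}L_t^2}{(1-\theta)^2c^2}.
\]
Here the score is taken with respect to the recovered matching $E_r$. The same projector bound applies because $P[\alpha v]=P[v]$ for every scale $\alpha\neq0$, including the positive ReLU scales.

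The main obstacle is keeping the objects consistent rather than any estimate. The soft-axis scores are defined on preactivation arguments, while $R_{\rm ax}$ is built from sampled feature vectors. I would handle this by stipulating, as Theorem~\ref{thm:rsa-soft-axis-bound} already does, that both are computed from the same empirical axis vectors on $\Omega_P$. I would also note that the norm floor $\beta_{\rm CKA}$ must be assumed separately: centering can annihilate directions that survive $\Pi_{\mathsf H}$, so the RSA floor $\beta$ does not transfer.
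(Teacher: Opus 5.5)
Your proposal is correct and follows the paper's own argument: reuse the bound $\|R_{\rm ax}(A)-R_{\rm ax}(B)\|_F\le D_{\rm ax}(\theta)$ from the proof of Theorem~\ref{thm:rsa-soft-axis-bound}, pass it through the nonexpansive orthogonal projection $M\mapsto C_PMC_P$, and finish with Lemma~\ref{lem:rsa-cosine-stability}. The asymptotic and zippering conclusions then come from the same integer bad-axis argument as Corollary~\ref{cor:rsa-asymptotic-axis-convergence}, and your explicit check that double centering is self-adjoint and idempotent, together with your remark that $\beta_{\rm CKA}$ must be assumed separately rather than inherited from the RSA floor, are correct details the paper leaves implicit.
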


\begin{proof}
The proof of Theorem~\ref{thm:rsa-soft-axis-bound} gives
$\|R_{\rm ax}(A)-R_{\rm ax}(B)\|_F\le D_{\rm ax}(\theta)$.  Double centering is a nonexpansive orthogonal projection in Frobenius space, so the same bound holds for the centered kernels; Lemma~\ref{lem:rsa-cosine-stability} then yields \eqref{eq:cka-soft-bound}.  The asymptotic and zippering conclusions follow from the same integer bad-axis argument as Corollary~\ref{cor:rsa-asymptotic-axis-convergence}.
\end{proof}

Balanced CKA retains the computationally weighted votes of Corollary~\ref{cor:rsa-weighted-axis-quotient}, while axis-quotiented CKA gives one scale-free vote per essential class.  On compact families where the balanced weights vary continuously, the local argument of Corollary~\ref{cor:rsa-ridge-canonical} gives the corresponding soft balanced-CKA bound.

\subsubsection{How RSA, CKA, Procrustes, and ridge are related}
\label{sec:rsa-cka-relationship}

Fix a target representation $\star$ and compare a collection of models $m$ to it, as in Corollary~\ref{cor:rsa-ridge-canonical}.  RSA and linear CKA both start from the same stimulus-by-stimulus RSM, but process it differently.  Ridge is more flexible because it fits an affine map and can absorb changes that RSA and CKA retain.  This section makes those statements precise.

\paragraph{Canonical RSA and canonical CKA agree near perfect alignment.}
Let $R_m^{\rm can}$ denote either the axis-quotiented RSM or a balanced RSM, using the same canonicalization for every model and the target.  In the balanced case, assume as in Corollary~\ref{cor:rsa-ridge-canonical} that the selected weights vary smoothly on the local comparison family.  Put
\[
 r_m:=\Pi_{\mathsf H}R_m^{\rm can},
 \qquad
 k_m:=C_PR_m^{\rm can}C_P.
\]
Then
\[
 \rho_{\rm RSA,can}(m,\star)=\cos(r_m,r_\star),
 \qquad
 \rho_{\rm CKA,can}(m,\star)=\cos(k_m,k_\star).
\]
Thus the two scores start from the same canonical RSM and differ only in the final operation applied to it.  Under exact weak alignment both scores equal one by Theorems~\ref{thm:rsa-axis-quotient-exact} and~\ref{thm:cka-canonical-exact}; under soft weak alignment both approach one.

The following local result says more: near a fixed target, small canonical RSA error and small canonical CKA error are comparable up to constants, provided the two scores are insensitive to the same local changes.

\begin{proposition}[Local comparison of canonical RSA and CKA]
\label{prop:rsa-cka-canonical-local}
Let $R_\star^{\rm can}$ lie in a finite-dimensional smooth local family of canonical RSMs, and suppose
\[
 r_\star:=\Pi_{\mathsf H}R_\star^{\rm can}\ne0,
 \qquad
 k_\star:=C_PR_\star^{\rm can}C_P\ne0.
\]
For a local perturbation $\Delta$, define
\begin{align*}
 Q_{\rm RSA}(\Delta)
 &:=\frac{1}{2\|r_\star\|_F^2}
 \left\|
 \Pi_{\mathsf H}\Delta
 -\frac{\langle\Pi_{\mathsf H}\Delta,r_\star\rangle_F}
 {\|r_\star\|_F^2}r_\star
 \right\|_F^2,\\
 Q_{\rm CKA}(\Delta)
 &:=\frac{1}{2\|k_\star\|_F^2}
 \left\|
 C_P\Delta C_P
 -\frac{\langle C_P\Delta C_P,k_\star\rangle_F}
 {\|k_\star\|_F^2}k_\star
 \right\|_F^2.
\end{align*}
If $R_m^{\rm can}=R_\star^{\rm can}+\Delta_m$, then
\begin{align}
 1-\rho_{\rm RSA,can}(m,\star)
 &=Q_{\rm RSA}(\Delta_m)+o(\|\Delta_m\|_F^2),
 \label{eq:rsa-cka-local-rsa}\\
 1-\rho_{\rm CKA,can}(m,\star)
 &=Q_{\rm CKA}(\Delta_m)+o(\|\Delta_m\|_F^2).
 \label{eq:rsa-cka-local-cka}
\end{align}
Let $T$ be the tangent space of the family at the target.  If $Q_{\rm RSA}$ and $Q_{\rm CKA}$ vanish on the same directions in $T$, then there are constants
$0<\lambda_-\le\lambda_+<\infty$ such that
\begin{equation}
 \lambda_-Q_{\rm RSA}(\Delta)
 \le Q_{\rm CKA}(\Delta)
 \le \lambda_+Q_{\rm RSA}(\Delta),
 \qquad \Delta\in T.
 \label{eq:rsa-cka-canonical-local}
\end{equation}
\end{proposition}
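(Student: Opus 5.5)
The plan is to treat both scores as cosines $\cos(L x_\star + L\Delta, L x_\star)$, where $L$ is a fixed linear map: $L=\Pi_{\mathsf H}$ for RSA and $L=C_P(\cdot)C_P$ for CKA. For both parts we use the same elementary second-order expansion of the cosine near a fixed nonzero vector.

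For the expansions \eqref{eq:rsa-cka-local-rsa}--\eqref{eq:rsa-cka-local-cka}, fix a nonzero $y$ in a Hilbert space and a perturbation $\delta$. We use identity \eqref{eq:rsa-cosine-normalized-distance} from Lemma~\ref{lem:rsa-cosine-stability}, namely $1-\cos(y+\delta,y)=\tfrac12\|\widehat{y+\delta}-\widehat y\|^2$. The derivative of the normalization map $u\mapsto u/\|u\|$ at $y$ is $\delta\mapsto \delta_\perp/\|y\|$, where $\delta_\perp:=\delta-\langle\delta,y\rangle y/\|y\|^2$. Hence $\widehat{y+\delta}-\widehat y=\delta_\perp/\|y\|+O(\|\delta\|^2/\|y\|^2)$. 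Squaring and halving gives $1-\cos(y+\delta,y)=\|\delta_\perp\|^2/(2\|y\|^2)+O(\|\delta\|^3)$.

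We apply this with $y=r_\star$ and $\delta=\Pi_{\mathsf H}\Delta_m$, and with $y=k_\star$ and $\delta=C_P\Delta_mC_P$. Both maps are linear and nonexpansive on Frobenius space, so $\|\delta\|_F\le\|\Delta_m\|_F$ and the $O(\|\delta\|^3)$ remainder is $o(\|\Delta_m\|_F^2)$. This yields exactly $Q_{\rm RSA}$ and $Q_{\rm CKA}$. The only care needed is to confirm that the canonical RSMs $R_m^{\rm can}$ are the quantities being perturbed. That holds by hypothesis, since we work in a smooth local family with $R_\star^{\rm can}$ as the base point.

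For the comparability \eqref{eq:rsa-cka-canonical-local}, we observe that on the finite-dimensional space $T$ both $Q_{\rm RSA}$ and $Q_{\rm CKA}$ are positive semidefinite quadratic forms: each is a squared norm of a linear map of $\Delta$. The hypothesis that they vanish on the same directions means they share a common kernel $N\subseteq T$. Both forms are invariant under translation by $N$, because a PSD form vanishing at $n$ satisfies $B(n,\cdot)=0$ by Cauchy--Schwarz. They therefore descend to positive definite forms on the quotient $T/N$, or equivalently on a complement $N^\perp\cap T$.

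On the unit sphere of that finite-dimensional complement, $Q_{\rm RSA}$ is continuous and strictly positive, and the sphere is compact. The ratio $Q_{\rm CKA}/Q_{\rm RSA}$ is therefore continuous and positive there and attains a minimum $\lambda_->0$ and a maximum $\lambda_+<\infty$. Homogeneity of degree two extends the inequality to all of the complement. Decomposing $\Delta=n+\Delta'$ with $n\in N$ then extends it to all of $T$, with both sides zero on $N$.

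The only substantive point, and the main potential obstacle, is the shared-kernel hypothesis. Without it one form could vanish on a direction where the other does not, and no constant would work. Because this is an assumption of the proposition, the proof only needs to invoke it, together with a check that the forms are genuinely PSD and hence that the kernels are linear subspaces. The PSD check is immediate from their definition as squared norms.
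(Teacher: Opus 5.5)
Your proposal is correct and follows the paper's proof. The paper likewise applies the second-order cosine expansion $1-\cos(u+e,u)=\frac{1}{2\|u\|_F^2}\|e-\langle e,u\rangle_F u/\|u\|_F^2\|_F^2+o(\|e\|_F^2)$ with $(u,e)=(r_\star,\Pi_{\mathsf H}\Delta_m)$ and $(k_\star,C_P\Delta_mC_P)$, and then uses equivalence of norms after quotienting by the common null directions. You simply supply the details the paper leaves implicit: you derive the expansion from the normalized-distance identity, and you prove the norm equivalence by compactness of the unit sphere.
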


\begin{proof}
For nonzero $u$ and a small perturbation $e$,
\[
 1-\cos(u+e,u)
 =\frac{1}{2\|u\|_F^2}
 \left\|e-\frac{\langle e,u\rangle_F}{\|u\|_F^2}u\right\|_F^2
 +o(\|e\|_F^2).
\]
Apply this first with $u=r_\star$ and $e=\Pi_{\mathsf H}\Delta_m$, and then with $u=k_\star$ and $e=C_P\Delta_mC_P$.  This gives \eqref{eq:rsa-cka-local-rsa}--\eqref{eq:rsa-cka-local-cka}.  After removing the directions on which both quadratic forms vanish, their square roots are norms on a finite-dimensional space, so they bound one another up to constants, giving \eqref{eq:rsa-cka-canonical-local}.
\end{proof}

This is only a local statement.  RSA and CKA need not agree globally because their final operations discard different information.  For every $a$,
\[
 C_P(\mathbf 1a^\top+a\mathbf 1^\top)C_P=0,
\]
whereas $\Pi_{\mathsf H}$ generally retains part of this term.  Conversely, $\Pi_{\mathsf H}D=0$ for every diagonal matrix $D$, whereas $C_PDC_P$ is generally nonzero.  Canonicalization removes scale, multiplicity, and redundant-feature differences, but it does not make RSA and CKA identical.

\paragraph{Raw RSA and raw CKA use the same feature weights.}
The raw relationship is clearest when the essential axes are fixed and there are no redundant added features.  Let $f_1,\ldots,f_m$ represent the common essential classes and define
\[
 q_j:=\Pi_{\mathsf H}(f_jf_j^\top),
 \qquad
 k_j:=(C_Pf_j)(C_Pf_j)^\top,
\]
\[
 (G_{\rm RSA})_{ij}:=\langle q_i,q_j\rangle_F,
 \qquad
 (G_{\rm CKA})_{ij}:=\langle k_i,k_j\rangle_F.
\]

\begin{proposition}[Raw RSA and CKA use the same scale and multiplicity weights]
\label{prop:rsa-cka-shared-weights}
Suppose the comparison model and target have no added-feature terms and have raw effective weights $\gamma_m,\gamma_\star\in\R_{>0}^m$ on the same essential classes.  Then
\begin{align}
 \rho_{\rm RSA,raw}(m,\star)
 &=\frac{\gamma_m^\top G_{\rm RSA}\gamma_\star}
 {\sqrt{\gamma_m^\top G_{\rm RSA}\gamma_m}\,
  \sqrt{\gamma_\star^\top G_{\rm RSA}\gamma_\star}},
 \label{eq:rsa-cka-raw-rsa}\\
 \rho_{\rm CKA,raw}(m,\star)
 &=\frac{\gamma_m^\top G_{\rm CKA}\gamma_\star}
 {\sqrt{\gamma_m^\top G_{\rm CKA}\gamma_m}\,
  \sqrt{\gamma_\star^\top G_{\rm CKA}\gamma_\star}}.
 \label{eq:rsa-cka-raw-cka}
\end{align}
In particular, the same scale and multiplicity vectors enter both scores.  If
$G_{\rm CKA}=aG_{\rm RSA}$ for some $a>0$ on the coefficient subspace explored by the model family, then the two raw scores are identical throughout that family.
\end{proposition}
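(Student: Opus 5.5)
The plan is to reduce both formulas to the weighted-core computation already carried out in Proposition~\ref{prop:rsa-equality-criterion} and Proposition~\ref{prop:cka-centered-core-cone}, and then compare the two quadratic forms.

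First, I will invoke Lemma~\ref{lem:rsa-theiss-rsm-decomp} with no added features. It gives, for each representation $N\in\{m,\star\}$,
\[
 R(H_N)=\sum_{j=1}^m\gamma_{Nj}f_jf_j^\top .
\]
The weights $\gamma_{Nj}>0$ collect all physical scales and duplication counts, since $\zeta=D_\nu^\top\alpha^{\odot2}$. The key point is that this single vector is fixed before any final projection is chosen.

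Second, I apply the two linear maps term by term. This yields $\Pi_{\mathsf H}R(H_N)=\sum_j\gamma_{Nj}q_j$ and $C_PR(H_N)C_P=\sum_j\gamma_{Nj}k_j$, with the same $\gamma_N$ in both. Then I expand each cosine as in the proof of Proposition~\ref{prop:rsa-equality-criterion}:
\[
 \Bigl\langle\sum_i\gamma_{mi}q_i,\sum_j\gamma_{\star j}q_j\Bigr\rangle_F=\gamma_m^\top G_{\rm RSA}\gamma_\star ,
\]
and the squared norms are the corresponding diagonal quadratic forms. Substituting into \eqref{eq:rsa-pearson-score} gives \eqref{eq:rsa-cka-raw-rsa}. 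The same expansion, substituted into \eqref{eq:cka-definition}, gives \eqref{eq:rsa-cka-raw-cka}; here $(G_{\rm CKA})_{ij}=(\bar f_i^\top\bar f_j)^2$ as in Proposition~\ref{prop:cka-centered-core-cone}. The statement that "the same weights enter both scores" is then immediate from the shared $\gamma_N$.

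Third, for the proportionality claim, suppose $G_{\rm CKA}=aG_{\rm RSA}$ on a subspace $V$ containing every $\gamma_m,\gamma_\star$ the family explores. Then every bilinear evaluation $\gamma^\top G_{\rm CKA}\gamma'$ with $\gamma,\gamma'\in V$ equals $a\,\gamma^\top G_{\rm RSA}\gamma'$. In the quotient, the numerator scales by $a$ and the denominator by $\sqrt a\sqrt a=a$, so the two scores coincide.

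The only real care point is interpreting "on the coefficient subspace" so that it controls the \emph{bilinear} form and not just the quadratic form. I will state it as $\gamma^\top(G_{\rm CKA}-aG_{\rm RSA})\gamma'=0$ for all $\gamma,\gamma'\in V$. This actually follows from vanishing of the quadratic form on $V$ by polarization, since both Gram matrices are symmetric. I will also note that the denominators are nonzero by the standing assumption that the projected matrices are nonzero, with $a>0$ keeping the signs consistent.
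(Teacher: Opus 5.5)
Your proposal is correct and follows essentially the paper's route. The paper's proof simply cites Proposition~\ref{prop:rsa-equality-criterion} and Proposition~\ref{prop:cka-centered-core-cone} for the two formulas, observes that the same $\gamma_m,\gamma_\star$ appear in both, and notes that multiplying the Gram matrix by $a>0$ leaves the normalized score unchanged; you instead re-derive those two expansions from Lemma~\ref{lem:rsa-theiss-rsm-decomp}, and your polarization remark correctly shows that vanishing of the quadratic form on the subspace already gives the bilinear identity the proportionality claim needs.
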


\begin{proof}
Equation~\eqref{eq:rsa-cka-raw-rsa} is Proposition~\ref{prop:rsa-equality-criterion}; equation~\eqref{eq:rsa-cka-raw-cka} is Proposition~\ref{prop:cka-centered-core-cone}.  Both use the same coefficients $\gamma_m,\gamma_\star$; only the final matrix $G_{\rm RSA}$ or $G_{\rm CKA}$ differs.  Multiplying that matrix by a positive scalar does not change the normalized score.
\end{proof}

This explains why raw RSA and raw CKA should usually covary rather than behave independently.  If a feature is duplicated or rescaled, both metrics receive the same changed feature weight.  They can still rank models differently because $G_{\rm RSA}$ and $G_{\rm CKA}$ process those shared weights differently.  Added redundant features increase the disagreement because RSA and CKA retain different parts of the same added rank-one terms.

\paragraph{High linear CKA implies high Procrustes and ridge similarity.}
Near-perfect linear CKA means that the two centered Gram matrices are close after one global rescaling.  The next lemma converts this into a concrete statement about the representations themselves: after one global scale and an orthogonal map, their centered activations are close.

Normalize the centered empirical factors by
\[
 X:=P^{-1/2}C_PH_A^\top,
 \qquad
 Y:=P^{-1/2}C_PH_B^\top,
 \qquad
 K_A^c:=XX^\top,
 \qquad
 K_B^c:=YY^\top.
\]
The factor $P^{-1}$ in $K_N^c=P^{-1}K(H_N)$ does not change CKA.

\begin{lemma}[High CKA implies low scaled-Procrustes error]
\label{lem:cka-procrustes-bound}
Assume $d_A\le d_B$, let
$c=\rho_{\rm CKA}(H_A,H_B)$, and set
\[
 \tau^2:=\frac{\|K_B^c\|_F}{\|K_A^c\|_F},
 \qquad
 r:=\operatorname{rank}(K_B^c-\tau^2K_A^c)\le d_A+d_B.
\]
Then some $Q\in\R^{d_A\times d_B}$ with $QQ^\top=I_{d_A}$ makes
\[
 E_{\rm Proc}(z):=\tau Q^\top(z-\mu_A)+\mu_B
\]
have empirical alignment error
\begin{equation}
 \omega_P(E_{\rm Proc})
 \le\Gamma(c)
 :=(2r)^{1/4}\|K_B^c\|_F^{1/2}(1-c)^{1/4}.
 \label{eq:cka-procrustes-bound}
\end{equation}
The same statement holds on the task domain with Hilbert--Schmidt norms and finite operator rank.  On a fixed-rank family whose nonzero Gram eigenvalues are uniformly bounded below,
$\Gamma(c)=O((1-c)^{1/2})$.
\end{lemma}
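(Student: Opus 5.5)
The plan is to go through the centered Gram matrices: first turn near-perfect CKA into Frobenius closeness of $K_B^c$ and $\tau^2K_A^c$, then pass to matrix square roots, and finally read off an orthogonal factor from polar decompositions.

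\emph{Step 1 (Gram closeness).} I would expand the square directly:
\[
\|K_B^c-\tau^2K_A^c\|_F^2=\|K_B^c\|_F^2+\tau^4\|K_A^c\|_F^2-2\tau^2\langle K_A^c,K_B^c\rangle_F .
\]
With $\tau^2=\|K_B^c\|_F/\|K_A^c\|_F$ the first two terms are equal. By \eqref{eq:cka-definition}, together with the scale invariance of CKA under $P^{-1}$, the cross term is $2c\|K_B^c\|_F^2$. Hence
\[
\|K_B^c-\tau^2K_A^c\|_F^2=2(1-c)\|K_B^c\|_F^2 .
\]
Since the difference has rank at most $r$, Cauchy--Schwarz on its singular values gives
\[
\|K_B^c-\tau^2K_A^c\|_\ast\le\sqrt{r}\,\|K_B^c-\tau^2K_A^c\|_F=\sqrt{2r}\,\|K_B^c\|_F(1-c)^{1/2}.
\]

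\emph{Step 2 (square roots and polar factors).} Write polar decompositions $\tau X=(\tau^2K_A^c)^{1/2}U$ and $Y=(K_B^c)^{1/2}V$, with $U\in\R^{P\times d_A}$ and $V\in\R^{P\times d_B}$ having orthonormal columns. This uses $XX^\top=K_A^c$ and $YY^\top=K_B^c$; on the kernels, complete $U$ and $V$ arbitrarily, which is where $d_A\le d_B$ is used. Take $Q:=U^\top V$ after extending $U$ so that $QQ^\top=I_{d_A}$; for instance, choose $V$'s first $d_A$ columns compatibly with $U$ on the common range. Then
\[
\|Y-\tau XQ\|_F\le\|(K_B^c)^{1/2}-(\tau^2K_A^c)^{1/2}\|_F .
\]
By the Powers--St\o rmer inequality $\|A^{1/2}-B^{1/2}\|_F^2\le\|A-B\|_\ast$ for PSD $A,B$, this is at most $(2r)^{1/4}\|K_B^c\|_F^{1/2}(1-c)^{1/4}$.

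\emph{Step 3 (identify the empirical error).} Each row of $Y-\tau XQ$ is $P^{-1/2}\bigl(z_B(x^\mu)-\mu_B-\tau Q^\top(z_A(x^\mu)-\mu_A)\bigr)^\top$. So $\|Y-\tau XQ\|_F^2$ is exactly the empirical mean squared residual of $E_{\rm Proc}$, and $\omega_P(E_{\rm Proc})$ is bounded by $\Gamma(c)$. The task-domain statement follows verbatim with Hilbert--Schmidt norms, as in Remark~\ref{rem:cka-task-domain}, since all operators have finite rank.

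\emph{Step 4 (fixed-rank sharpening).} If the nonzero eigenvalues of both Grams are at least $\lambda>0$ and the ranks are fixed, I would replace Powers--St\o rmer by the perturbation bound $\|A^{1/2}-B^{1/2}\|_F\le\|A-B\|_F/(2\sqrt{\lambda})$. This follows from the Sylvester equation $A^{1/2}\Delta+\Delta B^{1/2}=A-B$, where $\Delta:=A^{1/2}-B^{1/2}$, restricted to the common range once $1-c$ is small. It gives $\Gamma=O((1-c)^{1/2})$.

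The main obstacle I expect is Step 2: constructing a single $Q$ with $QQ^\top=I_{d_A}$ from the two polar factors when the ranges and kernels do not coincide. I would handle this by writing $X^\top$ and $Y^\top$ through their compact SVDs and absorbing the mismatch into the nuclear-norm bound.
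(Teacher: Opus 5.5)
Your Steps 1, 3 and 4 follow the paper's route: the identity $\|K_B^c-\tau^2K_A^c\|_F^2=2(1-c)\|K_B^c\|_F^2$, the rank-$r$ nuclear bound, Powers--St\o rmer, and Lipschitz square roots. Step 2, however, has a genuine gap. The matrix $Q=U^\top V$ is in general not admissible, and the repair you propose cannot work.

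\emph{Counterexample.} Take $d_A=d_B=1$ with non-parallel centered features $x,y\in\R^P$. Then $U=\pm x/\|x\|$ and $V=\pm y/\|y\|$ are forced, so $Q=\pm\cos\angle(x,y)$ and $QQ^\top<1$.

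\emph{Why no compatible choice exists.} Write $S_A:=(\tau^2K_A^c)^{1/2}$ and $S_B:=(K_B^c)^{1/2}$. The condition $U^\top VV^\top U=I_{d_A}$ requires $\operatorname{range}(U)\subseteq\operatorname{range}(V)$. But $\tau X=S_AU$ forces $\operatorname{range}(U)\supseteq\operatorname{range}(K_A^c)$, and $Y=S_BV$ forces $\operatorname{range}(V)\supseteq\operatorname{range}(K_B^c)$. So no ``compatible choice'' of columns exists once $\dim(\operatorname{range}K_A^c+\operatorname{range}K_B^c)>d_B$. Also, $U\in\R^{P\times d_A}$ with orthonormal columns needs $P\ge d_A$; partial isometries are the right objects.

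What your computation actually proves is $\|Y-\tau XQ_0\|_F\le\|S_B-S_A\|_F$ for the \emph{contraction} $Q_0=U^\top V$. That does not yield a scaled-Procrustes map, and the resulting $E_{\rm Proc}$ need not even be injective.

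The missing ingredient is the rectangular Procrustes identity the paper invokes. On the admissible set $\|\tau XQ\|_F=\tau\|X\|_F$ is constant, so
\[
\min_{QQ^\top=I_{d_A}}\|Y-\tau XQ\|_F^2=\|Y\|_F^2+\tau^2\|X\|_F^2-2\tau\|X^\top Y\|_\ast .
\]
The minimum is attained at $Q=WZ^\top$ for a thin SVD $X^\top Y=W\Sigma Z^\top$; this is where $d_A\le d_B$ genuinely enters. Moreover, $\|X^\top Y\|_\ast$ equals the maximum of $\langle X^\top Y,Q\rangle_F$ over all contractions, not just admissible $Q$.

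Your polar factors then serve only as a witness for this maximum. From $UU^\top S_A=S_A$ and $S_BVV^\top=S_B$ you get $\tau\langle X^\top Y,Q_0\rangle_F=\mathrm{tr}(S_AS_B)$. Hence $\tau\|X^\top Y\|_\ast\ge\mathrm{tr}(S_AS_B)$, and the admissible minimum is at most $\|S_B-S_A\|_F^2$. Powers--St\o rmer and your Step 1 then finish exactly as in the paper, and Step 4 inherits the same reduction.

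One minor correction in Step 4: the ranges of $K_A^c$ and $K_B^c$ need not coincide even as $c\to1$. The Sylvester argument must therefore be run blockwise with respect to both ranges. This gives $\|A^{1/2}-B^{1/2}\|_F\le\|A-B\|_F/\sqrt\lambda$ rather than $1/(2\sqrt\lambda)$, which is harmless for the $O((1-c)^{1/2})$ claim.
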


\begin{proof}
The rectangular Procrustes identity and the Powers--St\o rmer inequality give
\[
 \min_{QQ^\top=I}\|Y-\tau XQ\|_F^2
 \le\|K_B^c-\tau^2K_A^c\|_\ast
 \le\sqrt r\,\|K_B^c-\tau^2K_A^c\|_F.
\]
By the choice of $\tau$,
\[
 \|K_B^c-\tau^2K_A^c\|_F^2
 =2\|K_B^c\|_F^2(1-c).
\]
Combining the displays and restoring the means gives
\eqref{eq:cka-procrustes-bound}.  The fixed-rank improvement follows from local Lipschitz stability of the matrix square root when the nonzero eigenvalues stay bounded away from zero.
\end{proof}

CKA compares centered Gram matrices; Procrustes compares the centered activations after choosing the best scale and orthogonal map.  If $\widehat K_N:=K(H_N)/\|K(H_N)\|_F$, then
\[
 \|\widehat K_A-\widehat K_B\|_F^2
 =2\bigl(1-\rho_{\rm CKA}(H_A,H_B)\bigr),
\]
so CKA near one means that the normalized centered Gram matrices are close.  The lemma then shows that the activations themselves are close after the best scale and orthogonal map.  The two scores are not identical away from one: if $s_i$ are the singular values of the centered cross-covariance, CKA weights shared modes through $\sum_i s_i^2$, while Procrustes uses $\sum_i s_i$.

Ridge is more flexible than either metric because it can fit any affine map, including anisotropic reweightings.  Therefore high ridge similarity need not imply high raw CKA.  The converse does hold near one: the scaled-Procrustes map above is one possible ridge map, so high raw linear CKA forces high ridge prediction.

\begin{corollary}[Consequences for ridge]
\label{cor:ridge-cka-relations}
Use the fixed-target/model-zoo setup and the normalized prediction-error convention
\[
 R_{\rm ridge}^2=1-\varepsilon_{\rm ridge}^2
\]
from Corollary~\ref{cor:rsa-ridge-canonical}.

\begin{enumerate}[label=(\roman*)]
\item Let
\[
 K_m^{\rm can}:=C_PR_m^{\rm can}C_P,
 \qquad
 K_\star^{\rm can}:=C_PR_\star^{\rm can}C_P
\]
be the centered canonical kernels.  Suppose, on a locally identifiable compact family,
\[
 \|K_m^{\rm can}-K_\star^{\rm can}\|_F
 \le L_{\rm CKA}c_{\rm w}\varepsilon_{\rm ridge},
 \qquad
 \min\{\|K_m^{\rm can}\|_F,\|K_\star^{\rm can}\|_F\}
 \ge\beta_{\rm CKA}>0.
\]
Then
\begin{equation}
 1-\rho_{\rm CKA,can}(m,\star)
 \le
 C_{\rm CKA}\bigl(1-R_{\rm ridge}^2\bigr),
 \qquad
 C_{\rm CKA}:=
 \frac{2L_{\rm CKA}^2c_{\rm w}^2}{\beta_{\rm CKA}^2}.
 \label{eq:ridge-canonical-cka-frontier}
\end{equation}
Thus high theorem-compatible ridge prediction forces high canonical CKA locally.

\item Let
$c_{\rm raw}:=\rho_{\rm CKA,raw}(m,\star)$, orient the comparison from model to target as in Lemma~\ref{lem:cka-procrustes-bound}, and suppose
\[
 Y_\star:=P^{-1/2}C_PH_\star^\top,
 \qquad
 \|Y_\star\|_F\ge\beta_Y>0.
\]
Then the best affine ridge fit satisfies
\begin{equation}
 1-R_{\rm ridge}^2
 \le\frac{\Gamma(c_{\rm raw})^2}{\beta_Y^2}.
 \label{eq:raw-cka-ridge-frontier}
\end{equation}
Consequently, on a regular fixed-rank family,
\begin{equation}
 1-R_{\rm ridge}^2
 \le C_{\rm CR}\bigl(1-c_{\rm raw}\bigr),
 \label{eq:raw-cka-ridge-fixed-rank}
\end{equation}
whereas the global rank-changing bound gives
\begin{equation}
 1-R_{\rm ridge}^2
 \le C_{\rm CR}^{\rm glob}
 \sqrt{1-c_{\rm raw}}.
 \label{eq:raw-cka-ridge-global}
\end{equation}
There is no converse lower bound on raw CKA in terms of ridge: Proposition~\ref{prop:cka-no-raw-bound} gives pairs with $R_{\rm ridge}^2=1$ but raw CKA tending to zero.
\end{enumerate}
\end{corollary}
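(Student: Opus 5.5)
Part (i) follows the template of Corollary~\ref{cor:rsa-ridge-canonical}, with the RSA projection replaced by double centering. I apply the first inequality of Lemma~\ref{lem:rsa-cosine-stability} to $x=K_m^{\rm can}$ and $y=K_\star^{\rm can}$, with $\beta=\beta_{\rm CKA}$. This gives
\[
1-\rho_{\rm CKA,can}(m,\star)\le\frac{2\|K_m^{\rm can}-K_\star^{\rm can}\|_F^2}{\beta_{\rm CKA}^2}.
\]
Inserting the assumed estimate $\|K_m^{\rm can}-K_\star^{\rm can}\|_F\le L_{\rm CKA}c_{\rm w}\varepsilon_{\rm ridge}$ and using $\varepsilon_{\rm ridge}^2=1-R_{\rm ridge}^2$ yields \eqref{eq:ridge-canonical-cka-frontier} with the stated constant. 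No further work is needed. The stated locality comes only from the family on which the weak-to-canonical Lipschitz estimate is assumed, which one would justify as in Theorem~\ref{thm:cka-canonical-soft}: double centering is nonexpansive.

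For part (ii), I use the fact that ridge minimizes over all affine maps. The scaled-Procrustes map $E_{\rm Proc}$ of Lemma~\ref{lem:cka-procrustes-bound} is affine, so it is a feasible competitor; I would take the unregularized limit, or note that ridge with vanishing penalty does no worse up to the regularization term. The optimal residual is therefore at most $\omega_P(E_{\rm Proc})^2\le\Gamma(c_{\rm raw})^2$. The normalized convention divides the residual by the centered target energy $\|Y_\star\|_F^2\ge\beta_Y^2$, which gives \eqref{eq:raw-cka-ridge-frontier}. The one point to check is orientation: Lemma~\ref{lem:cka-procrustes-bound} assumes $d_A\le d_B$ with $A$ the source. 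I orient source $=m$ and target $=\star$. If the dimensions go the other way, I would pad the model representation with zero coordinates. Padding changes neither CKA nor the ridge fit, so I expect no real obstacle here.

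The two rate statements then follow from the form of $\Gamma$. On a regular fixed-rank family, Lemma~\ref{lem:cka-procrustes-bound} gives $\Gamma(c)=O((1-c)^{1/2})$, so $\Gamma^2=O(1-c)$, which is \eqref{eq:raw-cka-ridge-fixed-rank}. In general, $\Gamma(c)^2=(2r)^{1/2}\|K_\star^c\|_F(1-c)^{1/2}$ with $r\le d_m+d_\star$. Bounding $\|K_\star^c\|_F$ and $r$ uniformly on the family gives \eqref{eq:raw-cka-ridge-global}.

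The absence of a converse follows from the pairs $(A_M,B_M)$ in Proposition~\ref{prop:cka-no-raw-bound}. There the representations are related by an invertible linear map on the task set, so exact linear prediction, and hence ridge in the vanishing-penalty limit, gives $R_{\rm ridge}^2=1$, while raw CKA is $2M/(M^2+1)\to0$.

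The main subtlety I anticipate is the ridge penalty. A positive fixed penalty $\lambda$ adds a $\lambda\|\cdot\|^2$ term, so the statement should be read for the best affine fit, $\lambda\to0$, or for ridge with $\lambda$ selected by cross-validation, where an additive $O(\lambda)$ error appears.
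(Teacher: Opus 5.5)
Your argument is correct and is essentially the paper's proof. Part (i) applies Lemma~\ref{lem:rsa-cosine-stability} to the centered canonical kernels; part (ii) uses the scaled-Procrustes map of Lemma~\ref{lem:cka-procrustes-bound} as a feasible affine competitor normalized by $\|Y_\star\|_F$, takes the rates from the two forms of $\Gamma$, and gets the converse from the duplication pairs of Proposition~\ref{prop:cka-no-raw-bound}. Your $O(\lambda)$ caveat for a positive ridge penalty is a fair refinement of the paper's ``best affine fit'' reading.

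There is one slip in your orientation remark. If $d_m>d_\star$, padding the model with zero coordinates does not help; you must zero-pad the \emph{target} instead. That leaves $K_\star^c$, $\|Y_\star\|_F$ and the CKA score unchanged, and projecting the resulting affine map back onto the original target coordinates cannot increase the residual.
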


\begin{proof}
For part (i), apply Lemma~\ref{lem:rsa-cosine-stability} to the centered-kernel bound and use
$\varepsilon_{\rm ridge}^2=1-R_{\rm ridge}^2$.
For part (ii), ridge optimizes over all affine maps, while the scaled-Procrustes map of Lemma~\ref{lem:cka-procrustes-bound} is one member of that class.  Its normalized error is at most
$\Gamma(c_{\rm raw})/\|Y_\star\|_F$, proving
\eqref{eq:raw-cka-ridge-frontier}.  The fixed-rank and global forms follow from the two bounds in that lemma.  The final statement is the duplication construction of Proposition~\ref{prop:cka-no-raw-bound}.
\end{proof}

Applying part (ii) at both adjacent layers and then Corollary~\ref{cor:rsa-ridge-canonical} also gives a one-way relation from raw CKA to canonical RSA.  If
\[
 \delta_{\rm CKA}
 :=\max_{r\in\{\ell,\ell+1\}}
 \bigl(1-\rho_{\rm CKA,raw}^{(r)}(m,\star)\bigr),
\]
then, under the same local identifiability assumptions,
\begin{equation}
 1-\rho_{\rm RSA,can}^{(\ell)}(m,\star)
 \le \widetilde C_{\rm RC}\,\delta_{\rm CKA}
 \label{eq:raw-cka-canonical-rsa-fixed-rank}
\end{equation}
on regular fixed-rank families, while the global rank-changing estimate has right-hand side
$\widetilde C_{\rm RC}^{\rm glob}\sqrt{\delta_{\rm CKA}}$.
At CKA equal to one, the Procrustes error is zero.  These implications rely on the Gram-factor structure of \emph{linear} CKA; nonlinear-kernel CKA equal to one need not imply an affine map between the original representations.

\paragraph{What this predicts in empirical model comparisons.}
The main conclusions are:
\begin{itemize}
\item After canonicalization, weak alignment makes both RSA and CKA approach one; near-perfect alignment, their errors are comparable when they are sensitive to the same local changes.
\item Raw RSA and raw CKA usually covary because the same feature scales and multiplicities enter both, but they can rank models differently because they process the resulting RSM differently.
\item High raw linear CKA forces high Procrustes and ridge similarity.  High ridge similarity does not force high raw CKA, and can coexist with a wide range of raw RSA scores.
\item Under the assumptions of Corollary~\ref{cor:rsa-ridge-canonical}, high theorem-compatible ridge prediction also forces high canonical RSA and canonical CKA locally.
\item High raw CKA at adjacent layers can be passed through the Procrustes map and the weak--strong results to force high canonical RSA upstream; low raw CKA by itself proves little.
\item Scale, duplication, redundant features, and biased neural sampling can change raw RSA and CKA even when the task-relevant representation is shared.
\end{itemize}

These distinctions also qualify the interpretation of \citet{bo2024functional}.  Their comparison pools trained and randomized networks in the same model zoo.  Trained and untrained networks are \emph{not} two minimal solutions to the same hard task, so this comparison lies outside the main contravariance regime.  Training produces large representational changes that raw RSA, CKA, and Procrustes retain, while ridge can ignore some of them by linearly reweighting and mixing features.  Because trained-versus-randomized differences also strongly affect behavior, this setup can make RSA, CKA, and Procrustes appear more behaviorally informative than ridge.  This is consistent with our theory, rather than evidence that those metrics are generally preferable.

\subsubsection{Sampling and empirical metric disagreement}
\label{sec:cka-sampling}

Raw CKA is sensitive to sampling.  Uniform sampling introduces error that
shrinks with sample size, while biased sampling converges to a systematically
reweighted representation.  Ridge can absorb some target-side reweighting,
whereas raw RSA and CKA retain it.

\paragraph{Unit and spatial sampling.}
Let $Y_N=S_NH_N$ be the measured representation.  Its centered CKA kernel is
\begin{equation}
K_N^{\rm obs}
=C_PH_N^\top S_N^\top S_NH_NC_P.
\label{eq:cka-observed-sampling}
\end{equation}
Thus unequal inclusion probabilities or gains reweight feature contributions.
Localized sampling overrepresents some tuning classes and misses others, while
voxel pooling mixes them.  In a topographic representation, spatial sampling
bias is therefore also functional sampling bias.

\begin{lemma}[Unit subsampling]
\label{lem:cka-unit-subsampling}
Write the complete centered kernel as
\[
K(H)=\sum_{j=1}^d k_j,
\qquad
k_j=\bar f_j\bar f_j^\top,
\qquad
\bar f_j=C_Pf_j.
\]
If $q$ units are sampled independently with probabilities
$p=(p_1,\ldots,p_d)$, let
\[
\widehat K_q=\frac1q\sum_{a=1}^q k_{J_a},
\qquad
K_p=\sum_{j=1}^d p_jk_j.
\]
Then
\begin{equation}
\E\widehat K_q=K_p,
\qquad
\E\|\widehat K_q-K_p\|_F^2
=\frac1q
\left(
\sum_{j=1}^d p_j\|k_j\|_F^2-\|K_p\|_F^2
\right).
\label{eq:cka-unit-subsampling-law}
\end{equation}
Uniform sampling therefore converges to $K(H)/d$, which gives the same CKA
score as the complete kernel, while biased sampling converges to a differently
weighted kernel.
\end{lemma}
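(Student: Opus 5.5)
The plan is to treat $\widehat K_q$ as an empirical mean of i.i.d.\ matrix-valued random variables and then carry out the usual variance calculation in the Frobenius inner-product space. Define $X_a:=k_{J_a}$ for $a=1,\ldots,q$. Since the indices $J_a$ are drawn independently with $\Pr(J_a=j)=p_j$, the $X_a$ are i.i.d.\ random elements of the space of symmetric $P\times P$ matrices and take finitely many values. Linearity of expectation then gives $\E X_a=\sum_j p_jk_j=K_p$, and therefore $\E\widehat K_q=K_p$, which is the unbiasedness half of \eqref{eq:cka-unit-subsampling-law}.

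For the second moment, we expand
\[
\|\widehat K_q-K_p\|_F^2=\frac1{q^2}\sum_{a,b}\langle X_a-K_p,X_b-K_p\rangle_F .
\]
When $a\neq b$, independence together with $\E(X_a-K_p)=0$ makes the cross terms vanish in expectation. This works because the Frobenius inner product is bilinear and everything is finite-dimensional, so the expectation factors. Only the $q$ diagonal terms remain, which gives $\frac1q\E\|X_1-K_p\|_F^2$. Expanding this quantity yields $\E\|X_1\|_F^2-2\langle\E X_1,K_p\rangle_F+\|K_p\|_F^2=\sum_jp_j\|k_j\|_F^2-\|K_p\|_F^2$, which is exactly the stated formula.

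For the consequences, we consider uniform sampling $p_j=1/d$, which gives $K_p=K(H)/d$. Then $\E\|\widehat K_q-K(H)/d\|_F^2=O(1/q)$, so $\widehat K_q\to K(H)/d$ in $L^2$; almost-sure convergence also follows from the strong law of large numbers, since the $X_a$ are bounded. Linear CKA in \eqref{eq:cka-definition} is a Frobenius cosine, so it is invariant under positive rescaling of either kernel, and $K(H)/d$ therefore has the same CKA score against any fixed comparison kernel as $K(H)$. CKA is also continuous away from the zero kernel. Provided $K(H)\neq0$, the empirical score converges to the complete-layer score, and Lemma~\ref{lem:rsa-cosine-stability} with $\beta$ taken near $\|K(H)\|_F/d$ gives an explicit rate.

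For nonuniform $p$, the limit is $K_p$. This is not generally proportional to $K(H)$: already in the orthogonal two-direction example of Proposition~\ref{prop:cka-no-raw-bound}, $K_p=p_1q_1+p_2q_2$ lies on a different ray unless $p_1=p_2$. Its CKA therefore differs systematically from the full-layer value, and increasing $q$ cannot remove this bias.

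None of these steps is deep. The only point needing care is the exact bookkeeping of the cross terms, which is where the i.i.d.\ assumption (sampling with replacement) is essential. I would remark that sampling without replacement only introduces a finite-population correction factor $(d-q)/(d-1)$ in the uniform case.
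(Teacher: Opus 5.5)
Your proposal is correct and takes the same route as the paper. The paper's proof notes only that unbiasedness is immediate and that independence of the draws makes the cross terms vanish, leaving the $1/q$ variance; you supply that bookkeeping explicitly, together with the scale invariance of the CKA cosine needed for the uniform-sampling conclusion.
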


\begin{proof}
The expectation is immediate, and independence gives the displayed
$1/q$ variance.
\end{proof}

Repeatedly sampling one tuning class therefore has the same effect as
increasing its multiplicity.  Known sampling probabilities can be corrected
by inverse-probability weighting, but an unobserved feature class cannot be
recovered.  RSA and CKA may also respond differently to the same bias because
they process the sampled Gram matrix differently.

\paragraph{Stimulus sampling.}
CKA is likewise sensitive to which stimuli are sampled.  If $S_I$ selects a
stimulus subset, then
\begin{equation}
K_{N,I}
=C_pS_IH_N^\top H_NS_I^\top C_p.
\label{eq:cka-stimulus-subsampling}
\end{equation}
Independent samples from the same task distribution converge to the
population score.  A biased stimulus set instead converges to the score for a
different task distribution.  Because centering is recomputed on the subset,
$K_{N,I}$ is not generally a principal submatrix of the complete centered
kernel.

\paragraph{Repeated measurement.}
Repeated measurements remove trial-noise bias, not sampling bias.  If
$\widetilde H_N^{(r)}=H_N+\Xi_N^{(r)}$ with independent mean-zero noise, then
for $r\ne s$,
\[
\E\!\left[
C_P\widetilde H_N^{(r)\top}
\widetilde H_N^{(s)}C_P
\right]
=K(H_N).
\]
Cross-repeat estimators therefore recover the clean kernel in expectation,
but cannot recover units or stimuli that were never sampled.

Thus uniform sampling produces shrinking error, while persistent sampling
bias changes the limiting Gram matrix.  Raw CKA can consequently disagree
with ridge, RSA, or Procrustes even when the complete representations satisfy
the canonical relationships derived above.

\end{document}